\newtheorem{theorem}{Theorem}[section]
\newtheorem*{theorem*}{Theorem}
\newtheorem{lemma}[theorem]{Lemma}
\newtheorem{definition}[theorem]{Definition}
\newtheorem{corollary}[theorem]{Corollary}
\newtheorem{assumption}[theorem]{Assumption}
\newtheorem{remark}[theorem]{Remark}
\newcommand{\eps}{\varepsilon}
\newcommand{\N}{\mathcal N}
\newcommand{\E}{\mathbb E}
\newcommand{\EE}{\mathcal E}
\newcommand{\R}{\mathbb R}
\renewcommand{\P}{\mathbb P}
\newcommand{\PP}{\mathcal P}
\newcommand{\frob}{\mathrm{F}}
\newcommand{\orth}{\mathsf{orth}}
\newcommand{\grad}{\nabla}
\newcommand{\norm}[1]{\left \| #1 \right \|}
\newcommand{\inprod}[2]{\left\langle #1, #2 \right\rangle}
\newcommand{\var}{\mathsf{Var}}
\newcommand{\sphere}{\mathbb{S}}
\newcommand{\ot}{\otimes}
\DeclareMathOperator*{\argmin}{arg\,min}
\renewcommand{\(}{\left(}
\renewcommand{\)}{\right)}
\renewcommand{\]}{\right ]}
\newcommand{\diag}{\mathsf{diag}}
\renewcommand{\[}{\left [}
\newcommand{\m}{\left( m \right)}
\renewcommand{\vec}[1]{\underline{#1}}
\newcommand{\lammax}{\lambda_{\max}}
\newcommand{\lammin}{\lambda_{\min}}
\renewcommand{\vector}{\mathsf{vec} }
\newcommand{\seq}{\mathsf{seq}}
\newcommand{\clean}{\mathsf{clean} }
\newcommand{\offdiag}{\mathsf{off}\text{-}\mathsf{diag}}
\newcommand{\loo}{\mathsf{loo}}
\newcommand{\op}{\mathsf{op}}
\newcommand{\proj}{\mathsf{proj}}
\newcommand{\local}{\mathsf{local}}
\newcommand{\mode}{\mathsf{mode}}
\DeclareMathOperator{\ind}{\mathds{1}}  
\definecolor{yxc}{RGB}{255,0,0}
\definecolor{cxc}{RGB}{0, 160, 0}
\setlist[itemize]{leftmargin=1.5em}
\setlist[enumerate]{leftmargin=1.5em}
\title{Nonconvex Low-Rank Tensor Completion \\ from Noisy Data\footnotetext{Corresponding
author: Yuxin Chen. A short version of this work has appeared in NeurIPS 2019 \cite{cai2019nonconvex}.}}
\author{Changxiao Cai\thanks{Department of Electrical Engineering, Princeton University, Princeton,
NJ 08544, USA; Email: \texttt{\{ccai,poor,yuxin.chen\}@princeton.edu}.}
	\and Gen Li\thanks{Department of Electronic Engineering, Tsinghua University, Beijing,
10084, China; Email: \texttt{g-li16@mails.tsinghua.edu.cn}.}
	\and H.~Vincent Poor\footnotemark[1]
	\and Yuxin Chen\footnotemark[1] }
\begin{document}

\maketitle

\begin{abstract}
We study a noisy tensor completion problem of broad practical interest, namely, the reconstruction of a low-rank tensor from highly incomplete and randomly corrupted observations of its entries.  While a variety of prior work has been dedicated to this problem,   prior algorithms either are computationally too expensive for large-scale applications, or come with sub-optimal statistical guarantees.  Focusing on  ``incoherent'' and well-conditioned tensors of a constant CP rank, we propose a two-stage nonconvex algorithm --- (vanilla) gradient descent following a rough initialization --- that  achieves the best of both worlds. Specifically, the proposed nonconvex algorithm faithfully completes the tensor and retrieves all individual tensor factors within nearly linear time, while at the same time enjoying near-optimal statistical guarantees (i.e.~minimal sample complexity and optimal estimation accuracy).  The estimation errors are evenly spread out across all entries, thus achieving  optimal $\ell_{\infty}$ statistical accuracy. We have also discussed how to extend our approach to accommodate asymmetric tensors.
The insight conveyed through our analysis of nonconvex optimization might have implications for other tensor estimation problems.

\end{abstract}

\noindent \textbf{Keywords:} tensor completion, nonconvex optimization, gradient descent, spectral methods, entrywise statistical guarantees, minimaxity 

\tableofcontents

\section{Introduction and motivation}


\subsection{Tensor completion from noisy entries}

Estimation of low-complexity models from highly incomplete observations is a fundamental task that spans a diverse array of science and engineering applications. Arguably one of the most extensively studied problems of this kind is matrix completion,  where one wishes to recover a low-rank matrix given only partial entries \cite{davenport2016overview,chen2018harnessing}.  
Moving beyond matrix-type data, a natural higher-order generalization  is  {\em low-rank tensor completion}, which aims to reconstruct a low-rank tensor when the vast majority of its entries are unseen.  
There is certainly no shortage of applications  that motivate the investigation of tensor completion (e.g.~personalized medicine \cite{soroushmehr2016transforming,pawlowski2019machine}, medical imaging \cite{gandy2011tensor,semerci2014tensor,cheng2017comprehensive}, seismic data analysis \cite{kreimer2013tensor,ely20135d}, multi-dimensional harmonic retrieval \cite{chen2014robust,ying2017hankel}). One concrete example in operations research arises when  
learning the preference of individual customers for a collection of products on the basis of historical transactions \cite{farias2019learning,mivsic2020data}. Given the limited availability of transaction data (e.g.~each customer might only have purchased very few products before), it is crucial to exploit multi-way customer-product interactions (e.g.~users' browsing and searching histories) in order to better predict the likelihood of a customer purchasing a new product. Clearly, the presence of missing data and the need of exploiting multi-way structure result in the task of tensor completion. Additionally, tensor completion finds important applications in visual data in-painting \cite{liu2013tensor,li2017low}, where one wishes to reconstruct video data (or a sequence of images) from incomplete measurements. The video data consist of at least two spatial variables and one temporal variable, whose intrinsic connections are often modeled via certain low-complexity tensors.

For the sake of clarity, we phrase the problem formally  before we proceed, focusing on a simple model that already captures the intrinsic difficulty of tensor completion in many aspects.\footnote{We focus on symmetric order-3 tensors primarily for simplicity of presentation. Many of our  findings naturally extend to the more general case with asymmetric tensors of possibly higher order. Detailed discussions are deferred to Appendix~\ref{sec:asymmetry} due to the space limits. 
}  Imagine we are asked to estimate a  symmetric order-three tensor\footnote{Here, a tensor $\bm{T}\in \mathbb{R}^{d\times d\times d}$ is said to be symmetric if $T_{j, k, l}=T_{k, j, l}=T_{k, l, j}=T_{l, k, j}=T_{j, l, k}=T_{l, j, k}$ for all $1\leq j,k,l\leq d$.} $\bm{T}^{\star} \in \mathbb{R}^{d\times d\times d}$  from a small number of noisy entries 
%
\begin{align}
	\label{eq:observation-model}
	T_{j,k,l} = T_{j,k,l}^{\star} + E_{j,k,l}, \qquad \forall (j,k,l)\in \Omega,
\end{align}
where $T_{j,k,l}$ is the observed noisy entry at location $(j,k,l)$,  $E_{j,k,l}$ stands for the associated noise, and $\Omega \subseteq \{1,\cdots,d\}^3$ is a symmetric index subset to sample from. For notational simplicity, we  set $\bm{T}= [T_{j,k,l}]_{1\leq j,k,l\leq d}$ and $\bm{E}= [E_{j,k,l}]_{1\leq j,k,l\leq d}$, with $T_{j,k,l}=E_{j,k,l}=0$ for any $(j,k,l)\notin \Omega$. 
We adopt a  {\em random sampling} model such that  each index $(j,k,l)$ ($j\leq k\leq l$) is included in $\Omega$ independently with probability  $p$. 
%
%
In addition,  we know {\em a priori} that the unknown tensor $\bm{T}^{\star} \in  \mathbb{R}^{d\times d\times d}$ is  a superposition of $r$ rank-one tensors (often termed canonical polyadic (CP) decomposition if $r$ is minimal)
\begin{align}
	\label{eq:true-tensor}
	\bm{T}^{\star} = \sum_{i=1}^r \bm{u}_i^{\star} \ot \bm{u}_i^{\star} \ot \bm{u}_i^{\star}, \qquad \text{or more concisely,} \qquad \bm{T}^{\star} = \sum_{i=1}^r \bm{u}_i^{\star\, \ot 3},    
\end{align}
where each $\bm{u}_i^{\star} \in \mathbb{R}^d$ represents one of the $r$ low-rank tensor components\,/\,factors. 
Here and throughout, for any vectors $\bm{a},\bm{b},\bm{c}\in \mathbb{R}^d$, the tensor $\bm{a}\ot \bm{b} \ot \bm{c}$ is a $d\times d\times d$ array whose $(j,k,l)$-th entry is given by $a_j b_k c_l$.   
The primary question is this: can we hope to faithfully estimate $\bm{T}^{\star}$, as well as the individual tensor factors $\{\bm{u}_i^{\star} \}_{1\leq i\leq r}$, from the partially revealed entries \eqref{eq:observation-model}, assuming that  
 $r$  is reasonably small?

\subsection{Computational and statistical challenges}

Even though tensor completion conceptually resembles matrix completion in various ways, it is considerably more challenging than the matrix counterpart. 
This is perhaps not surprising,  given that a plethora of natural tensor problems (e.g.~computing the spectral norm, finding the best low-rank approximation) are all notoriously hard \cite{hillar2013most}.  
As a notable example, while matrix completion is often  efficiently solvable under nearly minimal sample complexity \cite{ExactMC09,Gross2011recovering},  all polynomial-time algorithms developed so far for tensor completion --- even in the noise-free case --- require a sample size at least exceeding the order of $rd^{3/2}$, which is substantially larger than the degrees of freedom (i.e.~$rd$) underlying the model \eqref{eq:true-tensor}. In fact, it is widely conjectured that there exists a large computational barrier away from the information-theoretic sampling limits \cite{barak2016noisy}.

With this fundamental gap in mind, the current paper  focuses on the regime (in terms of the sample size) that enables reliable tensor completion in polynomial time. A variety of algorithms have been proposed that enjoy some sort of theoretical guarantees in (at least part of) this regime, including but not limited to spectral methods \cite{montanari2018spectral,cai2019subspace}, sum-of-squares hierarchy \cite{barak2016noisy,potechin2017exact}, nonconvex algorithms \cite{jain2014provable,xia2017polynomial},  
and also convex relaxation (based on proper unfolding) \cite{gandy2011tensor,huang2015provable,romera2013new,goldfarb2014robust}. While these are all polynomial-time algorithms, most of the computational complexities supported by prior theory  remain prohibitively high  when dealing with large-scale tensor data --- a point that we shall elaborate on later. The only exception is the unfolding-based spectral method, which, however, fails to achieve exact recovery as the noise vanishes. This leads to a critical question:  
{
\setlist{rightmargin=\leftmargin}
\begin{itemize}
	\item[] {\em {\bf Q1:} Is there any linear-time algorithm that is guaranteed to work for low-rank tensor completion? }
\end{itemize}
}

Going beyond such computational concerns,  one might naturally wonder whether it is also possible for a fast algorithm to achieve a nearly un-improvable statistical accuracy  in the presence of noise. Towards this end, intriguing stability guarantees have been established for sum-of-squares hierarchy  in the noisy settings \cite{barak2016noisy}, although this paradigm is computationally expensive for large-scale data. The recent work  \cite{xia2017statistically} came up with a two-stage algorithm (i.e.~a spectral method followed by tensor power iterations) for noisy tensor completion. Its estimation accuracy, however, falls short of achieving exact recovery in the absence of noise.  This gives rise to another  question of fundamental importance: 
{
\setlist{rightmargin=\leftmargin}
\begin{itemize}
	\item[] {\em {\bf Q2:} Can we achieve near-optimal statistical accuracy without compromising computational efficiency? }
\end{itemize}
}
\noindent In this paper, we aim to address the above two questions by developing a nonconvex algorithm that achieves optimal computational efficiency and statistical accuracy all at once.

\section{Algorithm and main results}

\subsection{A two-stage nonconvex algorithm}

To address the above-mentioned challenges, a first impulse  is to resort to the following least squares problem:
\begin{align}
	\underset{\bm{u}_{1,}\cdots,\bm{u}_{r}\in\mathbb{R}^{d}}{\text{minimize}}\quad\sum_{j,k,l\in\Omega}\Big(\left[\sum\nolimits _{i=1}^{r} \bm{u}_{i}^{\otimes3}\right]_{j,k,l}-T_{j,k,l} \Big)^{2},
\end{align}
or more concisely (up to proper re-scaling), 
\begin{align}
	\label{eq:loss-function}
	\underset{\bm{U}\in \mathbb{R}^{d\times r}}{\text{minimize}}\quad  f ( \bm{U} ) := \frac{1}{6p} \, \Big\| \PP_{\Omega} \Big( \sum\nolimits_{i=1}^r \bm{u}_i^{\ot 3} - \bm{T} \Big) \Big\|^2_{\frob}
\end{align}
if we take $\bm{U} := \[\bm{u}_1, \dots, \bm{u}_r \] \in \R^{d \times r}$. Here,  we denote by $\PP_{\Omega}(\bm{T})$  the orthogonal projection of any tensor $\bm{T}$ onto the subspace  of tensors which vanish outside of the index set $\Omega$. 
This optimization problem, however, is  highly nonconvex (which involves minimizing a degree-6 polynomial), thus resulting in computational intractability in general.

Fortunately, not all nonconvex problems are as daunting to solve as they may seem. For example, recent years have seen a flurry of activity in low-rank matrix factorization via nonconvex optimization,  which provably achieves optimal statistical accuracy and computational efficiency at once; see \cite{chi2018nonconvex} for an overview of recent advances. Motivated by this strand of work, we propose to solve  \eqref{eq:loss-function} via a two-stage nonconvex paradigm, presented below in reverse order. The whole procedure is summarized in Algorithms~\ref{alg:gd}-\ref{alg:localization}.

\paragraph{Gradient descent (GD).} 
Arguably one of the simplest optimization algorithms is gradient descent, which adopts a gradient update rule 
\begin{align}
	\bm{U}^{t+1} = \bm{U}^t-\eta_t \grad f(\bm{U}^t), \qquad t=0,1,\cdots
\end{align}
where $\eta_t$ is the learning rate or the stepsize, and $\bm{U}^t\in \mathbb{R}^{d\times r}$ is the estimate in the $t$-th iteration. The main computational burden in each iteration lies in  gradient evaluation, which, in this case, can be performed in time proportional to that taken to read the data. 

Despite the simplicity of this algorithm, two critical issues stand out and might significantly affect its  efficiency, which we shall bear in mind throughout the algorithmic and theoretical development.  
%
		
\bigskip
		\noindent {\em (i) Local stationary points and initialization.}  As is well known, GD is guaranteed to find an approximate local stationary point, provided that the learning rates do not exceed  the inverse Lipschitz constant of the gradient \cite{bubeck2015convex}. There exist, however, local stationary points (e.g.~saddle points or spurious local minima) that might fall short of the desired statistical properties.  This requires us to properly avoid  such undesired points, while retaining computational efficiency.  To address this issue, one strategy is to first identify a rough initial guess within a local region surrounding the global solution (which often helps rule out bad local minima), in order to guarantee proper convergence of subsequent optimization procedures \cite{li2017convex,jain2014provable}.   As a side remark, while careful initialization might not be crucial for several matrix recovery cases \cite{chen2018gradient,gilboa2018efficient,tan2019online}, it does seem to be critical in various tensor problems \cite{richard2014statistical}. We shall elucidate this point in Section \ref{sec:alternatives}.   
		
		
\bigskip
\noindent {\em (ii) Learning rates and regularization.}  Learning rates play a pivotal role in determining the convergence properties of  GD.  The challenge, however, is that the loss function \eqref{eq:loss-function} is overall not sufficiently smooth (i.e.~its gradient often has an exceedingly large Lipschitz constant), and hence generic optimization theory recommends a pessimistically slow update rule (i.e.~an extremely small learning rate) so as to  guard against over-shooting. This, however,  slows down the algorithm significantly, thus destroying the main computational advantage of GD (i.e.~low per-iteration cost). With this issue in mind, prior literature suggests carefully designed regularization steps (e.g.~proper projection, regularized loss functions) in order to improve the geometry of the optimization landscape \cite{xia2017polynomial}. In contrast, we  argue that one is allowed to take a constant learning rate --- which is as aggressive as it can possibly be --- even without enforcing any  regularization procedures.

\begin{algorithm}[t]
\caption{Gradient descent for nonconvex tensor completion}
\label{alg:gd}
\begin{algorithmic}[1]
	\State Generate an initial estimate $\bm{U}^0 \in \mathbb{R}^{d\times r}$ via Algorithm \ref{alg:init}. 
  \For{$t = 0, 1, \dots, t_0-1$}
      \State $\bm{U}^{t+1} = \bm{U}^t-\eta_t \grad f(\bm{U}^t)
	= \bm{U}^t- \frac{\eta_t}{p} \mathcal{P}_{\Omega}\big( \sum\nolimits _{i=1}^{r} \big( \bm{u}_{i}^{t} \big)  ^{\otimes3} -\bm{T}\big) \times_1^\seq \bm{U}^t \times_2^\seq \bm{U}^t$,
      where $\times_1^\seq$ and $\times_2^\seq$ are defined in Section~\ref{sec:Notation-1}. 
   \EndFor
\end{algorithmic}
\end{algorithm}

\paragraph{Initialization.} 

Motivated by the above-mentioned issue (i), we develop a procedure that guarantees a reasonable initial estimate. In a nutshell, the proposed procedure consists of two steps: 
\begin{itemize}
	\item[(a)] Estimate the subspace spanned by the $r$ low-rank tensor factors $\{\bm{u}_i^{\star}\}_{1\leq i\leq r}$ via a spectral method;
	\item[(b)] Disentangle  individual  low-rank tensor factors from this subspace estimate.   
\end{itemize}
		As we shall see momentarily, the total computational complexity of the proposed initialization is $O(pd^3 )$ when $r=O(1)$, $\kappa = O(1)$ and $p \geq 1/d^2$ (where $\kappa$ is a sort of ``condition number'' defined later), which is a linear-time algorithm. 
Note, however, that these two steps in the initialization procedure are relatively more complicated to describe. To improve the flow of the current paper, we postpone the details  to Section \ref{sec:algorithm-main}. The readers can catch a glimpse of these  procedures in Algorithms~\ref{alg:init}-\ref{alg:localization}.

\begin{algorithm}[t]
\caption{Spectral initialization for nonconvex tensor completion}
\label{alg:init}
\begin{algorithmic}[1]
  \State Let $\bm{U} \bm{\Lambda} \bm{U}^\top$ be the rank-$r$ eigen-decomposition of
  \begin{align}
  \label{B_alg}
	  \bm{B} := \PP_{\mathsf{off}\text{-}\mathsf{diag}} (\bm{A} \bm{A}^\top) ,
  \end{align} 
  where $\bm{A}=\mathsf{unfold} \big( p^{-1} \bm{T} \big)$ is the mode-1 matricization of $p^{-1} \bm{T}$, and $\PP_{\mathsf{off}\text{-}\mathsf{diag}}(\bm{Z})$ extracts out the off-diagonal entries of  $\bm{Z}$. 
  \State {\bf Output:} an initial estimate $\bm{U}^0 \in \mathbb{R}^{d\times r}$ on the basis of $\bm{U}\in \mathbb{R}^{d\times r}$ using Algorithm~\ref{alg:localization}. 
\end{algorithmic}
\end{algorithm}


\begin{algorithm}[t]
\caption{Retrieval of low-rank tensor factors from a given subspace estimate.}
\label{alg:localization}
\begin{algorithmic}[1]
\State {\bf Input:} number of restarts $L$, pruning threshold $\epsilon_{\mathsf{th}}$, subspace estimate $\bm{U}\in \mathbb{R}^{d\times r}$ given by Algorithm~\ref{alg:init}.
  \For{$\tau = 1, \dots, L$}
\State Generate an independent Gaussian vector $\bm{g}^\tau \sim \N(0, \bm{I}_d)$.
  \State 
	$\big(\bm{\nu}^\tau, \lambda_\tau, \mathsf{spec}\text{-}\mathsf{gap}_{\tau} \big) \gets \Call{Retrieve-one-tensor-factor}{\bm{T}, p, \bm{U}, \bm{g}^\tau}$.
   
    \EndFor
	\State Generate tensor factor estimates $\big\{ (\bm{w}^1, \lambda_1), \dots, (\bm{w}^r,\lambda_r)  \big\} \gets \Call{Prune}{\big\{ \big( \bm{\nu}^{\tau}, \lambda_\tau, \mathsf{spec}\text{-}\mathsf{gap}_{\tau}  \big) \big\}_{\tau = 1}^L, \epsilon_{\mathsf{th}}}$.
    %
\State {\bf Output:} initial estimate $\bm{U}^0 = \big[ \lambda_1^{1/3} \bm{w}^1, \dots, \lambda_r^{1/3} \bm{w}^r \big].$
\end{algorithmic}
\end{algorithm}

\begin{algorithm}[htb]
\label{alg:one-factor}
\begin{algorithmic}[1]
\Function{Retrieve-one-tensor-factor}{$\bm{T}, p, \bm{U}, \bm{g}$}
\State Compute 
	\begin{subequations}
	\begin{align}
		\bm{\theta} &= \bm{U} \bm{U}^{\top} \bm{g}  =:\mathcal{P}_{\bm{U}} (\bm{g} ), \\
		\bm{M} &= p^{-1} \bm{T} \times_3 \bm{\theta}, \label{eq:defn-Mtau}
	\end{align} 
	\end{subequations} where 
	$\times_3$ is defined in Section~\ref{sec:Notation-1}.
	\State Let $\bm{\nu}$ be the leading singular vector of $\bm{M}$ obeying $\langle \bm{T},  \bm{\nu} ^{\ot 3} \rangle  \geq 0$, and set $\lambda = \langle p^{-1} \bm{T}, \bm{\nu}^{\ot 3} \rangle$.
\State \Return $\big(\bm{\nu}, \lambda, \sigma_1 (\bm{M}) - \sigma_2 (\bm{M}) \big)$.
\EndFunction
\end{algorithmic}
\end{algorithm}

\begin{algorithm}[htb]
\label{alg:prune}
\begin{algorithmic}[1]
\Function{Prune}{$\big\{ \big( \bm{\nu}^{\tau}, \lambda_\tau, \mathsf{spec}\text{-}\mathsf{gap}_{\tau} \big) \big\}_{\tau = 1}^L, \epsilon_{\mathsf{th}}$}
	\State Set $\Theta = \big\{ \big( \bm{\nu}^{\tau}, \lambda_\tau, \mathsf{spec}\text{-}\mathsf{gap}_{\tau} \big) \big\}_{\tau = 1}^L.$
\For{$i = 1, \dots, r$}
	\State Choose $(\bm{\nu}^{\tau}, \lambda_{\tau}, \mathsf{spec}\text{-}\mathsf{gap}_{\tau} )$ from $\Theta$ with the largest $\mathsf{spec}\text{-}\mathsf{gap}_{\tau}$; set $\bm{w}^i = \bm{\nu}^{\tau}$ and $\lambda_i = \lambda_\tau$.
    \State Update $\Theta \gets \Theta \setminus \left\{ \big( \bm{\nu}^{\tau}, \lambda_\tau, \mathsf{spec}\text{-}\mathsf{gap}_{\tau} \big) \in \Theta : | \langle \bm{\nu}^\tau , \bm{w}^i \rangle | > 1- \epsilon_{\mathsf{th}} \right\}$.
 \EndFor
	\State \Return $\big\{ (\bm{w}^1, \lambda_1),  \dots,  (\bm{w}^r, \lambda_r) \big\}.$
  \EndFunction
\end{algorithmic}
\end{algorithm}

\subsection{Main results}

Encouragingly, the proposed nonconvex algorithm  provably achieves the best of both worlds --- in terms of statistical accuracy and computational efficiency --- for a class of low-rank, well-conditioned, and ``incoherent'' problem instances. This subsection summarizes our main findings.  

Before continuing, we note that one cannot hope to recover an arbitrary tensor from highly sub-sampled and arbitrarily corrupted entries.  In order to enable provably valid recovery, the present paper focuses on a tractable model by imposing  the following assumptions.

%
\begin{definition}[Incohrence and well-conditionedness]
\label{asmp_U}
	Define the incoherence parameters and the condition number of $\bm{T}^{\star}$ as follows 
\begin{subequations}
\begin{align}
\mu_{0} & :=\frac{d^3 \, \|\bm{T}^{\star}\|_{\infty}^{2}}{\|\bm{T}^{\star}\|_{\mathrm{F}}^{2}},\label{asmp_max}\\
\mu_{1} & := \frac{d\|\bm{u}_{i}^{\star}\|_{\infty}^{2}}{ \, \|\bm{u}_{i}^{\star}\|_{2}^{2}},\label{asmp_incoh}\\
\mu_{2} & := \frac{d\langle\bm{u}_{i}^{\star},\bm{u}_{j}^{\star}\rangle^{2}}{ \, \|\bm{u}_{i}^{\star}\|_{2}^{2}\|\bm{u}_{j}^{\star}\|_{2}^{2}} ,\label{asmp_corr}\\
\kappa & :=\frac{\max_{i} \|\bm{u}_{i}^{\star}\|_{2} }{\min_{i} \|\bm{u}_{i}^{\star}\|_{2} }.\label{asmp_coef}
\end{align}
\end{subequations}
\end{definition}
%
\begin{remark} Here, $\mu_0$, $\mu_1$ and $\mu_2$ are termed the incoherence parameters.
%
%
	Definitions~\eqref{asmp_max}-\eqref{asmp_corr} can be viewed as some sort of incoherence conditions for the tensor. For instance, when $\mu_0, \mu_1$ and $\mu_2$ are small, these conditions say that  (1) the energy of tensor $\bm{T}^\star$ is (nearly) evenly spread across all entries; (2) each factor $\bm{u}_i^{\star}$ is de-localized; (3) the factors $\{\bm{u}_i^{\star}\}$ are nearly orthogonal to each other.   Definition~\eqref{asmp_coef} is concerned with the ``well-conditionedness'' of the tensor,  meaning that each rank-1 component is of roughly the same size. In particular, we note that an assumption on pairwise correlation (i.e.~a constraint on $\mu_2$) is often assumed in the literature of tensor decomposition\,/\,factorization (e.g.~\cite{anandkumar2014guaranteed,sun2017provable,hao2018sparse}).

\end{remark}
For notational simplicity, we shall set
\begin{align}
\mu := \max \left\{ \mu_0, \, \mu_1, \, \mu_2 \right\}. \label{def:mu}
\end{align}
Note that our theory allows $\mu$ to grow with the problem dimension $d$ (in fact, $\mu$ can be as large as $d/\mathrm{poly}\log(d)$).  

%
\begin{assumption}[Random noise] 
\label{asmp:random-noise}
	Suppose that  $\bm{E}$ is a symmetric random tensor, where $\{E_{j,k,l}\}_{1\leq j\leq k\leq l \leq d}$ (cf.~\eqref{eq:observation-model}) are independently generated sub-Gaussian random variables with mean zero and variance $\mathsf{Var}(E_{j,k,l}) \leq \sigma^2$. 
\end{assumption}

In addition, recognizing that there is a global permutational ambiguity
issue (namely, one cannot distinguish $\bm{u}_1^\star,\cdots,\bm{u}_r^\star$ from an arbitrary permutation of them), we introduce the following loss metrics to account for this ambiguity:
\begin{subequations}
\begin{align}
\mathsf{dist}_{\mathrm{F}}(\bm{U},\bm{U}^{\star}) & := \min_{\bm{\Pi}\in\mathsf{perm}_r}\|\bm{U}\bm{\Pi}-\bm{U}^{\star}\|_{\mathrm{F}},\\
\mathsf{dist}_{\infty}(\bm{U},\bm{U}^{\star}) & := \min_{\bm{\Pi}\in\mathsf{perm}_r}\|\bm{U}\bm{\Pi}-\bm{U}^{\star}\|_{\infty},\\
\mathsf{dist}_{2,\infty}(\bm{U},\bm{U}^{\star}) & := \min_{\bm{\Pi}\in\mathsf{perm}_r}\|\bm{U}\bm{\Pi}-\bm{U}^{\star}\|_{2,\infty},
\end{align}
\end{subequations}
where $\mathsf{perm}_r$ stands for the set of $r\times r$ permutation matrices. For notational simplicity, we also take
\begin{align}
	\label{eq:defn-lambda-min-lambda-max}
	\lambda_{\min}^{\star} := \min_{1\leq i\leq r} \left\|\bm{u}_i^{\star} \right\|_2^3 \qquad \text{and} \qquad	
	\lambda_{\max}^{\star} := \max_{1\leq i\leq r} \left\|\bm{u}_i^{\star} \right\|_2^3. 
\end{align}

With these notations in place, we are ready to present our main results. For simplicity of presentation, we shall start with the setting where $r,\mu,\kappa \asymp 1$.  
\begin{theorem}
\label{thm:main-rank1}
	Fix an arbitrary small constant $\delta > 0$. Suppose that $r, \kappa, \mu =O(1)$, 
	\begin{align*}
		& p \geq c_0 \frac{\log^4 d}{d^{3/2}}, \qquad \frac{\sigma}{\lambda_{\min}^\star} \leq c_1 \frac{\sqrt{p}}{ d^{3/4} \log^{2} d }, \\
		& L = c_2 \quad \text{and} \quad \epsilon_{\mathsf{th}} = c_3 \bigg( \frac{ \log d}{d\sqrt{p}}+\frac{\sigma}{\lambda_{\min}^\star}\sqrt{\frac{d\log^{2}d}{p}}+\sqrt{\frac{ \log d}{d}} \bigg)
	\end{align*}
	for some sufficiently large constants $c_0, c_2 >0$ and some sufficiently small constants $c_1, c_3>0$. 
The learning rate $\eta_t \equiv \eta$ is taken to be a constant obeying $0 < \eta \leq \lammin^{\star 4/3}  / \big (32 \lammax^{\star 8/3} \big)$.  Then with probability at least $1- \delta$,
\begin{subequations}
\begin{align}
\label{claim:thm:U_frob_loss} 
	 \mathsf{dist}_{\mathrm{F}}(\bm{U}^t,\bm{U}^{\star}) &\leq  \Bigg( C_1 \rho^t  + C_2 \frac{\sigma}{\lammin^{\star}} \sqrt{\frac{ d \log d}{p}}   \Bigg) \norm{\bm{U}^\star}_{\frob}, \\
\label{claim:thm:U_2inf_loss} 
	 \mathsf{dist}_{\infty}(\bm{U}^t,\bm{U}^{\star}) &\leq \mathsf{dist}_{2,\infty}(\bm{U}^t,\bm{U}^{\star})  
	  \leq \Bigg( C_3 \rho^t + C_4 \frac{\sigma}{\lammin^{\star}} \sqrt{\frac{ d \log d}{p}}   \Bigg) \norm{\bm{U}^\star}_{2, \infty}
\end{align}
\end{subequations}
hold simultaneously for all $0 \leq t \leq t_0 = d^5$. Here, $0<C_1, C_3, \rho<1$ and $C_2, C_4>0$ are some absolute constants. 
\end{theorem}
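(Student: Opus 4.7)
The proof naturally decomposes along the two-stage structure: first, one must show that the initialization procedure (Algorithms~\ref{alg:init}--\ref{alg:localization}) lands in a basin of attraction; second, one must establish linear convergence of vanilla GD in that basin, with entrywise control maintained all the way down to the statistical floor. I would therefore prove a self-contained initialization lemma of the form $\mathsf{dist}_{\mathrm{F}}(\bm{U}^0,\bm{U}^\star) \lesssim \big( \sigma\sqrt{d\log d/p}/\lammin^\star + o(1) \big) \|\bm{U}^\star\|_{\mathrm{F}}$ together with the matching $\ell_{2,\infty}$ bound, and then feed this as the base of an induction on $t$.

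For the initialization, the key observation is that the off-diagonal projection in \eqref{B_alg} removes the leading-order diagonal bias of $\bm{A}\bm{A}^\top$, so $\bm{B}$ concentrates around a matrix whose leading $r$-dimensional eigenspace coincides with $\mathrm{span}\{\bm{u}_i^\star\}$. I would bound $\|\bm{B}-\E\bm{B}\|$ by writing it as a sum of independent matrix-valued contributions (indexed over $\Omega$ and the noise) and invoking matrix Bernstein, then apply Davis--Kahan to get the subspace estimate. The retrieval step then contracts $\bm{T}$ against $\bm{\theta} = \mathcal{P}_{\bm{U}}\bm{g}$: because $\bm{\theta}$ is essentially Gaussian inside the signal subspace, with constant probability it correlates strictly more with one component $\bm{u}_i^\star$ than with the others, so $\bm{M} = p^{-1}\bm{T}\times_3 \bm{\theta}$ has a distinguished top singular vector close to that $\bm{u}_i^\star$ with a quantifiable spectral gap. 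Running $L=O(1)$ random restarts and pruning by this gap recovers all $r$ factors.

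For the iterative stage, I would run a joint induction on three quantities: the Frobenius error, the $\ell_{2,\infty}$ error, and a ``leave-one-out'' proximity. Specifically, for each index $m$ I would introduce an auxiliary sequence $\bm{U}^{(m),t}$ produced by the same GD updates on a loss where the $m$-th slice of $\bm{T}$ is replaced by its noiseless, expectation-based counterpart; $\bm{U}^{(m),t}$ is statistically independent of the random data touching row $m$, yet remains close to $\bm{U}^t$. Decomposing the gradient into (a) the population gradient, whose contraction factor is strictly less than one thanks to the choice of $\eta$ and the local strong convexity of the population objective on incoherent iterates, (b) the sampling-fluctuation term $(p^{-1}\mathcal{P}_\Omega - \mathcal{I})$ applied to signal tensors, and (c) the noise term $p^{-1}\mathcal{P}_\Omega(\bm{E})$, one controls each perturbation using the incoherence of $\bm{U}^t$ preserved by the induction. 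The leave-one-out trick is what makes entrywise/row-wise control possible: a naive operator-norm argument would bleed a $\sqrt{d}$ factor that is exactly what one needs to shave off to reach the stated $\ell_{2,\infty}$ rate.

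The central technical obstacle is that the loss $f$ is \emph{not} globally well-conditioned: its gradient has an enormous Lipschitz constant owing to the degree-six polynomial structure, so a standard smoothness-based analysis would force a vanishingly small stepsize, contradicting the constant learning rate claimed. The remedy is to never invoke smoothness off the trajectory --- all Hessian/gradient bounds must be evaluated on the specific sequence of incoherent, near-$\bm{U}^\star$ iterates that the induction maintains. Concretely, the hardest single piece is showing that
$\big\|\, p^{-1}\mathcal{P}_\Omega(\bm{E}) \times_1^{\seq} \bm{U}^t \times_2^{\seq} \bm{U}^t \,\big\|_{2,\infty}$
is of the right order uniformly in $t\leq d^5$; here the statistical dependence between $\bm{U}^t$ and $\bm{E}$ must be broken via the leave-one-out iterates, after which a careful decoupling delivers the $\sigma\sqrt{d\log d/p}$ floor appearing in \eqref{claim:thm:U_frob_loss}--\eqref{claim:thm:U_2inf_loss}. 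Analogous, but strictly easier, bounds on the sampling fluctuation and on cross terms involving $\mu_2$ then close the induction and yield linear contraction down to the optimal statistical accuracy.
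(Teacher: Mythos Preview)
Your proposal is correct and follows essentially the same approach as the paper: a leave-one-out induction over the four quantities in \eqref{hyp:t_step} (you list three, but the paper also carries the auxiliary row-wise bound $\|(\bm{U}^{t,(m)}-\bm{U}^\star)_{m,:}\|_2$, which is what actually feeds the $\ell_{2,\infty}$ contraction), together with the two-stage spectral-plus-retrieval initialization analyzed via its own leave-one-out surrogates. One small point worth aligning with the paper: rather than decomposing the gradient into ``population + sampling fluctuation + noise'' and bounding the middle term as a perturbation, the paper establishes restricted strong convexity and smoothness of the \emph{sampled} noiseless loss $f_{\mathsf{clean}}$ directly (Lemma~\ref{lemma:RIC}), so that the contraction already absorbs the randomness of $\Omega$ and only the noise term $p^{-1}\mathcal{P}_\Omega(\bm{E})$ remains to be controlled via leave-one-out --- this is cleaner because $(p^{-1}\mathcal{P}_\Omega-\mathcal{I})$ applied to signal tensors is not uniformly small in operator norm at the target sample complexity.
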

\begin{remark}
	The theorem holds unchanged if $d^5$ is replaced by $d^{c}$ for an arbitrarily large constant $c>0$. 
\end{remark}
%
\begin{remark}
	The upper bound $t_0$ on the iteration count arises from the leave-one-out analysis when handling noisy observations. In short, the leave-one-out argument can only provide high-probability bounds for each iteration, thus requiring an upper bound on the iteration count if we desire a uniform bound across iterations.  Note that in the noiseless case, our results and analysis hold for an arbitrarily large number of iterations.
\end{remark}
As an immediate consequence of Theorem \ref{thm:main-rank1},  we obtain appealing $\ell_{\infty}$ statistical guarantees for estimating tensor entries, which are previously rarely available (see Table \ref{tab:comparison}). Specifically, let our tensor estimate in the $t$-th iteration be
\begin{align}
	\bm{T}^{t}:=\sum_{i=1}^{r}\bm{u}_{i}^{t}\otimes\bm{u}_{i}^{t}\otimes\bm{u}_{i}^{t},\qquad\text{where }\bm{U}^{t}=[\bm{u}_{1}^{t},\cdots,\bm{u}_{r}^{t}]\in\mathbb{R}^{d\times r}.
	\label{defn:T-t}
\end{align}
Then our result is this:
\begin{corollary}
	\label{corollary:tensor-entries-rank1}
	Fix an arbitrarily small constant $\delta>0$. Instate the assumptions of Theorem \ref{thm:main-rank1}. Then with probability at least $1-\delta$, 
	\begin{subequations}
	\begin{align}
		\norm{\bm{T}^t-\bm{T}^\star}_\frob 	&\lesssim  \( C_1 \rho^t  + C_2 \frac{\sigma}{\lammin^{\star}} \sqrt{\frac{ d \log d}{p}}   \) \norm{\bm{T}^\star}_{\frob},   \\
		\norm{\bm{T}^t-\bm{T}^\star}_{\infty} 	&\lesssim \( C_3 \rho^t + C_4 \frac{\sigma}{\lammin^{\star}} \sqrt{\frac{ d \log d}{p}}   \) \norm{\bm{T}^\star}_{\infty}
	\end{align}
	\end{subequations}
hold simultaneously for all $0 \leq t \leq t_0 = d^5$. Here, $0<C_1, C_3, \rho<1$ and $C_2, C_4>0$ are some absolute constants. 
\end{corollary}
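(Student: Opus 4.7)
The key observation is that $\bm{T}^t = \sum_i \bm{u}_i^{t\ot 3}$ is invariant under permutation of its factors, so one may choose the optimal permutation separately for the Frobenius and $\ell_\infty$ bounds; after relabeling, assume $\bm{U}^t$ is already aligned with $\bm{U}^\star$. Writing $\bm{\Delta}_i := \bm{u}_i^t - \bm{u}_i^\star$, I would expand each rank-one difference through the multilinear telescoping identity
\begin{align*}
\bm{u}_i^{t\ot 3} - \bm{u}_i^{\star \ot 3}
	= \bm{\Delta}_i \ot \bm{u}_i^t \ot \bm{u}_i^t
	+ \bm{u}_i^\star \ot \bm{\Delta}_i \ot \bm{u}_i^t
	+ \bm{u}_i^\star \ot \bm{u}_i^\star \ot \bm{\Delta}_i,
\end{align*}
so that $\bm{T}^t - \bm{T}^\star$ decomposes into $3r$ rank-one tensors that are read off entrywise.

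\textbf{Frobenius bound.} Using $\|\bm{a}\ot\bm{b}\ot\bm{c}\|_\frob = \|\bm{a}\|_2\|\bm{b}\|_2\|\bm{c}\|_2$ and the triangle inequality, I would reduce the problem to bounding $\sum_i \|\bm{\Delta}_i\|_2 \cdot \max(\|\bm{u}_i^t\|_2,\|\bm{u}_i^\star\|_2)^2$. The Frobenius guarantee in Theorem~\ref{thm:main-rank1} combined with $r,\kappa = O(1)$ yields $\|\bm{u}_i^t\|_2 = (1+o(1))\|\bm{u}_i^\star\|_2 \asymp \lammax^{\star 1/3}$, collapsing the sum to $\lesssim \lammax^{\star 2/3}\sqrt{r}\,\mathsf{dist}_\frob(\bm{U}^t,\bm{U}^\star)$ via Cauchy-Schwarz. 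Since the near-orthogonality in $\mu_2$ implies $\|\bm{T}^\star\|_\frob^2 \asymp \sum_i \|\bm{u}_i^\star\|_2^6$ (the cross terms $\sum_{i\neq j}\langle \bm{u}_i^\star,\bm{u}_j^\star\rangle^3$ are lower-order), one has $\|\bm{T}^\star\|_\frob \asymp \lammax^{\star 2/3}\|\bm{U}^\star\|_\frob$. Plugging in the Frobenius estimate of Theorem~\ref{thm:main-rank1} yields the desired ratio.

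\textbf{Entrywise bound.} Since $\|\bm{a}\ot\bm{b}\ot\bm{c}\|_\infty = \|\bm{a}\|_\infty\|\bm{b}\|_\infty\|\bm{c}\|_\infty$, the same expansion gives $\|\bm{T}^t - \bm{T}^\star\|_\infty \lesssim r\,\mathsf{dist}_\infty(\bm{U}^t,\bm{U}^\star)\cdot \max_i \|\bm{u}_i^\star\|_\infty^2$, after invoking $\|\bm{u}_i^t\|_\infty \leq \|\bm{u}_i^\star\|_\infty + \|\bm{\Delta}_i\|_\infty = O(\|\bm{u}_i^\star\|_\infty)$ from the entrywise estimate of the theorem. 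Incoherence $\mu_1$ gives $\|\bm{u}_i^\star\|_\infty^2 \lesssim \mu_1 \lammax^{\star 2/3}/d$ and $\|\bm{U}^\star\|_{2,\infty}^2 \lesssim r\mu_1\lammax^{\star 2/3}/d$. Combining with the $\ell_{2,\infty}$ bound of Theorem~\ref{thm:main-rank1} produces $\|\bm{T}^t - \bm{T}^\star\|_\infty \lesssim (\cdot)\,\mu_1^{3/2} d^{-3/2}\lammax^\star$. To convert this to a multiple of $\|\bm{T}^\star\|_\infty$, I would invoke the tautological inequality $\|\bm{T}^\star\|_\infty \geq \|\bm{T}^\star\|_\frob/d^{3/2}$ (equivalently $\mu_0 \geq 1$), which together with $\|\bm{T}^\star\|_\frob \asymp \lammax^\star$ gives $\|\bm{T}^\star\|_\infty \gtrsim \lammax^\star/d^{3/2}$, delivering the claim under $r,\mu = O(1)$.

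\textbf{Main obstacle.} The principal delicacy lies in the mixed term $\bm{u}_i^\star \ot \bm{\Delta}_i \ot \bm{u}_i^t$, whose entrywise norm demands uniform control on $\|\bm{u}_i^t\|_\infty$; this is available only through the $\ell_{2,\infty}$ guarantee of Theorem~\ref{thm:main-rank1} rather than any standalone computation on $\bm{U}^t$. The other subtlety, obtaining a lower bound on $\|\bm{T}^\star\|_\infty$, is handled cleanly by the free inequality $\mu_0\geq 1$ and does not require any nontrivial incoherence beyond what is already assumed.
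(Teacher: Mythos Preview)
Your proposal is correct and follows essentially the same route as the paper's argument (Appendix~\ref{sec:corollary}): expand $\bm{T}^t-\bm{T}^\star$ through the differences $\bm{\Delta}_i=\bm{u}_i^t-\bm{u}_i^\star$, bound the resulting rank-one pieces using the Frobenius and $\ell_{2,\infty}$ guarantees of Theorem~\ref{thm:main-rank1}, and lower-bound $\|\bm{T}^\star\|_\infty$ via $\|\bm{T}^\star\|_\infty\ge d^{-3/2}\|\bm{T}^\star\|_\frob$. The only tactical difference is that the paper writes the linear-in-$\bm{\Delta}$ terms as matrix products (e.g.\ $\|\bm{U}^\star\bm{W}_1^\top\|_\frob\le\|\bm{U}^\star\|\,\|\bm{W}_1\|_\frob$) and applies entrywise Cauchy--Schwarz across the sum over $s$, which sharpens the $r$-dependence in the general Corollary~\ref{corollary:tensor-entries}; under the present assumption $r,\mu=O(1)$ your direct triangle-inequality bound is equally good.
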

%

Several important implications are provided as follows.  
The discussion below assumes $\lambda_{\max}^\star \asymp \lambda_{\min}^\star \asymp 1$ for notational simplicity.  

\begin{enumerate}
	\item {\em Linear convergence.} In the absence of noise, the proposed algorithm converges linearly, namely, it provably attains $\varepsilon$ accuracy within $O(\log (1/\varepsilon))$ iterations. Given  the inexpensiveness of each gradient iteration, this algorithm can be viewed as a linear-time algorithm, which can almost be implemented as long as we can read the data. In the noisy setting, the algorithm  reaches an appealing statistical accuracy within a logarithmic number of iterations. 
		
	\item {\em Near-optimal sample complexity.} The fast convergence is guaranteed as soon as the sample size exceeds the order of $d^{3/2} \mathrm{poly}\log d$. This matches the minimal sample complexity --- modulo some logarithmic factor --- known so far for any polynomial-time algorithm.    

	\item {\em Near-optimal statistical accuracy.} The proposed algorithm converges geometrically fast to a point with Euclidean error 
		$O\big( \sigma \sqrt{ ( d \log d ) /p}  \big)$. This matches the lower bound established in  \cite[Theorem 5]{xia2017statistically} up to some logarithmic factor, thus justifying the statistical optimality of the proposed nonconvex algorithm.

	\item {\em Entrywise estimation accuracy.} In addition to the Euclidean statistical guarantees, we have also established an entrywise error bound, which, to the best of our knowledge, has not been established in any of the prior work. When $t$ is sufficiently large, the iterates reach an entrywise error bound   $O\big( \sigma \sqrt{ (\log d) /p}   \big) $.  This entrywise error bound is about an order of $\sqrt{d}$ times smaller than the above $\ell_2$ error bound, thereby implying that the estimation errors are evenly spread out across all entries.

	\item {\em Noise size.} The above theory operates in the regime where $\sigma \lesssim \sqrt{ \frac{p}{d^{3/2}} }$ (modulo some log factor). Given that we have $\|\bm{T}^{\star}\|_{\infty} \asymp  d^{-3/2}$ in this case, our noise size constraint can be equivalently written as (up to some log factor)
	\begin{align}
		\frac{\sigma}{\|\bm{T}^{\star}\|_{\infty}}\lesssim\sqrt{pd^{3/2}}.
	\end{align}
		Since the sampling rate needs to satisfy $p\gg d^{-3/2}$, this condition essentially allows the typical  size of each noise component to be considerably larger than the size of the corresponding entry of the truth, which covers a broad range of practical scenarios.   

	\item {\em Implicit regularization.} One appealing feature of our finding is the simplicity of the iterative refinement stage of the algorithm.  All of the above statistical and computational benefits hold for  vanilla gradient descent (when properly initialized).  This should be contrasted with prior work (e.g.~\cite{xia2017polynomial}) that relies on extra regularization terms to stabilize the optimization landscape. In principle, vanilla gradient descent implicitly constrains itself within a region of well-conditioned landscape,  thus enabling fast convergence without explicit regularization.

	\item {\em No need of sample splitting.} The theory  developed herein does not require fresh samples in each iteration. We note that sample splitting has been frequently adopted in other context primarily to simplify mathematical analysis. Nevertheless,  it typically does not exploit the data in an efficient manner (i.e.~each data sample is used only once),  thus resulting in the need of a much larger sample size in practice.  
\end{enumerate}

\begin{table}
\centering
    {\small
    \begin{tabular}{ c | c | c | c | c | c | c }
    \hline
	    & algorithm & \makecell[c]{sample \\ complexity} & \makecell[c]{computational \\ complexity} & \begin{tabular}{@{}c@{}} $\ell_2$ error   \\ (noisy) \end{tabular} & \begin{tabular}{@{}c@{}} $\ell_{\infty}$ error  \\ (noisy) \end{tabular} & \begin{tabular}{@{}c@{}} recovery type \\ (noiseless) \end{tabular} \\ \hline
		    \begin{tabular}{@{}c@{}} our \\ theory \end{tabular} & \makecell[c]{spectral method \\ + (vanilla) GD} & $ d^{1.5} $ & $ pd^3 $  & $ \sigma \sqrt{ \frac{d}{ p}} $ & $\sigma \sqrt{ \frac{1}{ p}}$  & exact \\ \hline
		  \cite{xia2017statistically} & \makecell[c]{spectral initialization \\ + tensor power method} & $d^{1.5}$ & $p d^3 $   &  $( \| \bm{T}^\star\|_\infty + \sigma ) \sqrt{ \frac{d}{ p}}$ & n/a  & approximate \\ \hline
	    \cite{xia2017polynomial} & \makecell[c]{spectral method \\ + GD on manifold} & $ d^{1.5}  $  & $\mathrm{poly}(d)$  & n/a & n/a & exact \\ \hline
	    \cite{montanari2018spectral}   & {spectral method} & $  d^{1.5} $  & $ d^{3}  $ & n/a & n/a & approximate  \\ \hline
	\cite{barak2016noisy}   & sum-of-squares & $  d^{1.5} $  & $ d^{15}  $ & $  \frac{\| \bm{T}^\star\|_{\mathrm{F}}}{\sqrt{p d^{1.5}}} +  \sigma d^{1.5} $ & n/a &  approximate  \\ \hline
	\cite{potechin2017exact}  & sum-of-squares & $d^{1.5}$  & $d^{10}$   &  n/a & n/a &  exact \\ \hline
	    \begin{tabular}{@{}c@{}} \cite{yuan2016tensor} \\ \cite{yuan2017incoherent} \end{tabular} & \makecell[c]{tensor nuclear norm\\ minimization} & $ d  $  & NP-hard & n/a   & n/a & exact \\ \hline
  \end{tabular}
     }
	\caption{Comparison with prior theory for existing methods when $r,\mu,\kappa \asymp 1$ (neglecting logarithmic factors). \label{tab:comparison}}
\end{table}

We shall take a moment to discuss the merits of our approach in comparison to prior work. One of the best-known polynomial-time algorithms is the degree-6 level of the sum-of-squares (SoS) hierarchy, which seems to match the computationally feasible limit in terms of the sample complexity \cite{barak2016noisy}.  However, this approach has a well-documented limitation in that it involves solving a semidefinite program of dimensions $d^3 \times d^3$, which  requires enormous storage and computation power. The work \cite{montanari2018spectral} alleviates  this computational burden by resorting to a clever unfolding-based spectral algorithm; it is a nearly linear-time procedure that enables near-minimal sample complexity (among polynomial-time algorithms), although it does not achieve exact recovery even in the absence of noise. The two-stage algorithm developed by  \cite{xia2017statistically} --- which is based on spectral initialization followed by tensor power methods --- shares similar advantages and drawbacks as \cite{montanari2018spectral}. Further, the recent work \cite{xia2017polynomial} proposes a polynomial-time nonconvex algorithm based on gradient descent over Grassmann manifold (with a properly regularized objective function), which is an extension of the nonconvex matrix completion algorithm proposed by \cite{keshavan2010matrix,Se2010Noisy} to tensor data. The theory provided in \cite{xia2017polynomial}, however, does not provide explicit computational complexities.  The recent work \cite{shah2019iterative} attempts tensor estimation via an interesting algorithm adapted from collaborative filtering and investigates both $\ell_2$ and $\ell_{\infty}$ estimation accuracy. This approach, however, does not guarantee exact recovery in the absence of noise. 
We summarize and compare several prior results in Table \ref{tab:comparison} (omitting logarithmic factors). 


Thus far, we have concentrated on the low-rank, well-conditioned, and incoherent case. Our main theory can be extended to cover a broader class of scenarios, as stated below. %
\begin{theorem}
\label{thm:main}
Fix an arbitrary small constant $\delta > 0$. Suppose that $\kappa \asymp 1$, 
	\begin{align*}
	 & p \geq c_0 \frac{\mu^4 r^4 \log^4 d}{d^{3/2}}, \qquad \frac{\sigma}{\lambda_{\min}^\star} \leq c_1 \frac{\sqrt{p}}{\mu r^{3/2} d^{3/4} \log^{2} d}, \quad r \leq c_2 \( \frac{d}{\mu^6 \log^6 d} \)^{1/6}, \\
		& L = c_3 r^{2\kappa^2} \log^{3/2} r \qquad \text{and} \qquad \epsilon_{\mathsf{th}} = c_4 \bigg( \frac{\mu r \log d}{d\sqrt{p}}+\frac{\sigma}{\lambda_{\min}^\star}\sqrt{\frac{rd\log^{2}d}{p}}+\sqrt{\frac{\mu r  \log d}{d}} \bigg)
	\end{align*}
	for some sufficiently large constants $c_0, c_3 >0$ and some sufficiently small constants $c_1,c_2, c_4>0$. 
The learning rate $\eta_t \equiv \eta$ is taken to be a constant obeying $0 < \eta \leq \lammin^{\star 4/3}  / \big (32 \lammax^{\star 8/3} \big)$.  Then with probability at least $1- \delta$,
\begin{subequations}
\begin{align}
\label{claim:thm:U_frob_loss} 
	 \mathsf{dist}_{\mathrm{F}}(\bm{U}^t,\bm{U}^{\star}) &\leq  \( C_1 \rho^t  + C_2 \frac{\sigma}{\lammin^{\star}} \sqrt{\frac{ d \log d}{p}}   \) \norm{\bm{U}^\star}_{\frob} \\
\label{claim:thm:U_2inf_loss} 
	 \mathsf{dist}_{\infty}(\bm{U}^t,\bm{U}^{\star}) &\leq \mathsf{dist}_{2,\infty}(\bm{U}^t,\bm{U}^{\star})  
	  \leq \( C_3 \rho^t + C_4 \frac{\sigma}{\lammin^{\star}} \sqrt{\frac{ d \log d}{p}}   \) \norm{\bm{U}^\star}_{2, \infty}
\end{align}
\end{subequations}
hold simultaneously for all $0 \leq t \leq t_0 = d^5$. 
Here, $0<C_1, C_3, \rho<1$ and $C_2, C_4>0$ are some absolute constants. 
\end{theorem}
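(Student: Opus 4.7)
The plan is to decompose the argument into two stages matching the two-stage algorithm: (i) show that the spectral initialization lands inside a small, well-conditioned basin around the ground truth, measured simultaneously in $\mathsf{dist}_{\mathrm{F}}$ and in $\mathsf{dist}_{2,\infty}$; (ii) show that, once inside this basin, vanilla gradient descent contracts geometrically up to a noise floor in both metrics. Throughout, I would condition on a single high-probability ``good event'' that packages the usual concentration facts for the random sampling operator $\mathcal{P}_\Omega$ (e.g.\ a tensor-variant Bernstein bound for $p^{-1}\mathcal{P}_\Omega - \mathcal{I}$ acting on incoherent low-rank tensors) together with sub-Gaussian tail bounds on the noise tensor $\bm{E}$ (operator-norm control of its unfolding and of $\bm{E}\times_3 \bm{\theta}$ for unit vectors $\bm{\theta}$).

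For the initialization, I would analyse $\bm{B}=\PP_{\mathsf{off}\text{-}\mathsf{diag}}(\bm{A}\bm{A}^\top)$ in Algorithm~\ref{alg:init} as a perturbation of its expectation, which, after the off-diagonal masking, is a rank-$r$ matrix with column space equal to $\mathrm{span}(\bm{u}_1^\star,\dots,\bm{u}_r^\star)$; a Davis--Kahan argument combined with the concentration bounds above yields a subspace estimate $\bm{U}$ whose principal-angle error is of order $(\mu r \log d)/(d\sqrt{p})+(\sigma/\lambda_{\min}^\star)\sqrt{rd\log^2 d/p}+\sqrt{\mu r\log d/d}$, exactly matching $\epsilon_{\mathsf{th}}$. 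Next, for each random Gaussian probe $\bm{g}^\tau$, I would show that $\bm{M}=p^{-1}\bm{T}\times_3 \bm{\theta}^\tau$ is, up to controllable perturbation, $\sum_i \langle \bm{u}_i^\star,\bm{\theta}^\tau\rangle \bm{u}_i^\star \bm{u}_i^{\star\top}$; the Gaussian gap-lemma of Anandkumar-style tensor decomposition (together with the restart count $L=c_3 r^{2\kappa^2}\log^{3/2}r$) then guarantees that, with high probability, for every $i\in[r]$ at least one restart produces a leading singular vector aligning with $\bm{u}_i^\star$, with a margin on $\sigma_1(\bm{M})-\sigma_2(\bm{M})$ that distinguishes ``good'' restarts. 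The $\Call{Prune}{}$ step is what turns this into $r$ distinct components, and an entrywise perturbation analysis of the top singular vector of $\bm{M}$ (via the leave-one-out trick applied to the singular vector) produces the $\ell_{2,\infty}$ control on $\bm{U}^0$.

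For the gradient-descent phase, I would work with the population Hessian at $\bm{U}^\star$ (restricted to the tangent space modulo the $\mathsf{perm}_r$ symmetry), showing it satisfies a regularity/restricted-strong-convexity condition of order $\lambda_{\min}^{\star 4/3}$ with a smoothness bound of order $\lambda_{\max}^{\star 8/3}$; the learning-rate condition $\eta\le \lambda_{\min}^{\star 4/3}/(32\lambda_{\max}^{\star 8/3})$ is precisely what makes the contraction factor $\rho=1-\Theta(\eta\lambda_{\min}^{\star 4/3})$ strictly less than one. The empirical-to-population gap is controlled via the good event, producing $\mathsf{dist}_{\mathrm{F}}(\bm{U}^{t+1},\bm{U}^\star)\le \rho\,\mathsf{dist}_{\mathrm{F}}(\bm{U}^t,\bm{U}^\star)+C\sigma\sqrt{d\log d/p}\,\|\bm{U}^\star\|_{\mathrm{F}}$, and iterating gives the claimed Frobenius bound. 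The incoherence of iterates (needed to keep the sample operator well behaved) is not enforced explicitly but will be shown \emph{implicitly} as a by-product of the entrywise analysis below.

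The main obstacle, and the heart of the proof, is the entrywise / $\ell_{2,\infty}$ bound. I would deploy a leave-one-out construction: for each index $m\in[d]$, define a ``phantom'' trajectory $\bm{U}^{t,(m)}$ obtained by running GD on a loss in which the $m$-th slice $\{(m,k,l)\}$ of both the sampling pattern and the noise tensor is replaced by an independent copy (in fact, it is cleaner to zero out that slice and add a correction living in the tangent space of $\bm{U}^\star$). Two inductive claims must be carried along every iteration $t\le t_0=d^5$: (a) $\|\bm{U}^t-\bm{U}^{t,(m)}\bm{\Pi}\|_{\mathrm{F}}$ is $O(\sigma\sqrt{\log d/p}+\text{initialization-level terms})$ in some permutation-adjusted sense, and (b) the $m$-th row of $\bm{U}^{t,(m)}-\bm{U}^\star$ is small by virtue of being independent of the $m$-th slice of data. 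Combining (a) and (b) via the triangle inequality on the $m$-th row, then taking a union bound over $m\in[d]$, yields $\mathsf{dist}_{2,\infty}(\bm{U}^t,\bm{U}^\star)\lesssim (\rho^t +\sigma\sqrt{d\log d/p}/\lambda_{\min}^\star)\,\|\bm{U}^\star\|_{2,\infty}$, and the $\mathsf{dist}_\infty$ bound follows since $\mathsf{dist}_\infty\le \mathsf{dist}_{2,\infty}$. The difficulty is threefold: the degree-$6$ nonlinearity means the leave-one-out perturbation enters through three different modes of the tensor and must be tracked simultaneously; both the sampling randomness and the sub-Gaussian noise must be decoupled along the same slice; and the $t\le d^5$ cap arises because the union bound over $(m,t)$ pairs can only afford a polynomial number of iterations --- this is the place where sharp per-iteration probability bounds are essential. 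Once the $\ell_{2,\infty}$ bound is in hand, the implicit incoherence of the iterates closes the loop with the Frobenius-norm contraction argument of the previous paragraph, and the theorem follows.
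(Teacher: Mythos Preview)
Your proposal is essentially the paper's own proof strategy: a two-stage analysis in which local strong convexity/smoothness of the noiseless loss (the paper's Lemma~\ref{lemma:RIC}) drives the Frobenius contraction, while a leave-one-out construction carrying four coupled inductive hypotheses (Frobenius error, $\ell_{2,\infty}$ error, proximity $\|\bm{U}^t-\bm{U}^{t,(m)}\|_{\mathrm{F}}$, and the phantom's $m$-th row error) yields the entrywise bound, with the initialization analysed by Davis--Kahan/Wedin plus a Gaussian spectral-gap lemma and its own leave-one-out version. The only refinements worth noting are that the paper's leave-one-out loss replaces the $m$-th slice by the noiseless fully-observed truth rather than an independent copy, and that the phantom initializations $\bm{U}^{0,(m)}$ are themselves produced by parallel leave-one-out versions of the spectral and retrieval procedures (using the \emph{same} Gaussian probes $\bm{g}^\tau$) so as to seed the phantom GD trajectories with the correct independence structure.
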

\begin{corollary}
	\label{corollary:tensor-entries}
	Fix an arbitrarily small constant $\delta>0$. Instate the assumptions of Theorem \ref{thm:main}. Then with probability at least $1-\delta$, 
	\begin{subequations}
	\begin{align}
		\norm{\bm{T}^t-\bm{T}^\star}_\frob 	&\lesssim  \( C_1 \rho^t  + C_2 \frac{\sigma}{\lammin^{\star}} \sqrt{\frac{ d \log d}{p}}   \) \norm{\bm{T}^\star}_{\frob},   \\
		\norm{\bm{T}^t-\bm{T}^\star}_{\infty} 	&\lesssim \( C_3 \rho^t + C_4 \frac{\sigma}{\lammin^{\star}} \sqrt{\frac{\mu^3 r d \log d}{p}}   \) \norm{\bm{T}^\star}_{\infty}
	\end{align}
	\end{subequations}
hold simultaneously for all $0 \leq t \leq t_0 = d^5$. Here, $0<C_1, C_3, \rho<1$ and $C_2, C_4>0$ are some absolute constants. 
\end{corollary}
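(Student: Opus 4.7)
The plan is to derive Corollary~\ref{corollary:tensor-entries} directly from Theorem~\ref{thm:main} via a purely algebraic argument: translate the factor-level bounds on $\mathsf{dist}_{\mathrm{F}}, \mathsf{dist}_{\infty}, \mathsf{dist}_{2,\infty}$ into tensor-level bounds using the degree-three telescoping identity
\begin{equation*}
\bm{a}^{\otimes 3} - \bm{b}^{\otimes 3} \;=\; (\bm{a}-\bm{b})\otimes \bm{a}\otimes \bm{a} \;+\; \bm{b}\otimes (\bm{a}-\bm{b})\otimes \bm{a} \;+\; \bm{b}\otimes \bm{b}\otimes (\bm{a}-\bm{b}).
\end{equation*}
Since $\bm{T}^{t}=\sum_i(\bm{u}_i^t)^{\otimes 3}$ is invariant under any permutation of the columns of $\bm{U}^t$, I may assume without loss of generality that the minimizing permutation in the $\mathsf{dist}$ metrics is the identity. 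Throughout I will exploit Theorem~\ref{thm:main} to ensure that $\|\bm{u}_i^t\|_2\asymp\|\bm{u}_i^\star\|_2$, $\|\bm{U}^t\|_{2,\infty}\asymp\|\bm{U}^\star\|_{2,\infty}$, and $\|\bm{U}^t\|_{\infty}\asymp\|\bm{U}^\star\|_{\infty}$, which follow from the error bounds being smaller than $1$ in the relevant norms.

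\textbf{Frobenius bound.} Write $\bm{T}^t-\bm{T}^\star=\sum_{i=1}^r\bm{\Delta}_i$ with $\bm{\Delta}_i=(\bm{u}_i^t)^{\otimes 3}-(\bm{u}_i^\star)^{\otimes 3}$. Applying the telescoping identity and $\|\bm{a}\otimes\bm{b}\otimes\bm{c}\|_{\mathrm{F}}=\|\bm{a}\|_2\|\bm{b}\|_2\|\bm{c}\|_2$ yields $\|\bm{\Delta}_i\|_{\mathrm{F}}\lesssim (\lambda_{\max}^{\star})^{2/3}\|\bm{u}_i^t-\bm{u}_i^\star\|_2$. To avoid an unwanted $\sqrt{r}$ factor, I cannot simply sum and triangle-inequality; instead I expand
\begin{equation*}
\|\bm{T}^t-\bm{T}^\star\|_{\mathrm{F}}^2 \;=\; \sum_i \|\bm{\Delta}_i\|_{\mathrm{F}}^2 \;+\; \sum_{i\neq j}\langle \bm{\Delta}_i, \bm{\Delta}_j\rangle
\end{equation*}
and control the cross terms using the near-orthogonality $|\langle\bm{u}_i^\star,\bm{u}_j^\star\rangle|\lesssim\sqrt{\mu/d}\,\|\bm{u}_i^\star\|_2\|\bm{u}_j^\star\|_2$ (and the corresponding statement for iterates, which follows from the closeness of $\bm{u}_i^t$ to $\bm{u}_i^\star$). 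This gives $\|\bm{T}^t-\bm{T}^\star\|_{\mathrm{F}}^2\lesssim(\lambda_{\max}^\star)^{4/3}\mathsf{dist}_{\mathrm{F}}^2$. Invoking Theorem~\ref{thm:main} and the analogous orthogonality expansion $\|\bm{T}^\star\|_{\mathrm{F}}^2\asymp\sum_i\|\bm{u}_i^\star\|_2^6\asymp r(\lambda_{\max}^\star)^2$, and using $\|\bm{U}^\star\|_{\mathrm{F}}\asymp\sqrt{r}(\lambda_{\max}^\star)^{1/3}$, the ratio $\|\bm{T}^t-\bm{T}^\star\|_{\mathrm{F}}/\|\bm{T}^\star\|_{\mathrm{F}}$ is precisely bounded by the error factor appearing in Theorem~\ref{thm:main}.

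\textbf{Entrywise bound.} Fix $(j,k,l)$ and expand $(\bm{T}^t-\bm{T}^\star)_{j,k,l}=\sum_i \bm{\Delta}_{i,j,k,l}$ coordinate-wise via the same telescoping identity, producing three types of sums such as $\sum_i(u_{i,j}^t-u_{i,j}^\star)u_{i,k}^t u_{i,l}^t$. I bound these by pulling out $\mathsf{dist}_{\infty}$ or by applying Cauchy--Schwarz along the $i$-axis together with the row-wise norms $\|\bm{U}^t_{k,:}\|_2,\|\bm{U}^t_{l,:}\|_2\leq\|\bm{U}^t\|_{2,\infty}$. After triangle inequality this yields a bound of schematic form
\begin{equation*}
\|\bm{T}^t-\bm{T}^\star\|_{\infty} \;\lesssim\; \mathsf{dist}_{\infty}\cdot\|\bm{U}^\star\|_{2,\infty}^2 \;+\; \|\bm{U}^\star\|_{\infty}\cdot \|\bm{U}^\star\|_{2,\infty}\cdot\mathsf{dist}_{2,\infty}.
\end{equation*}
I then convert to a bound in units of $\|\bm{T}^\star\|_{\infty}$ using the incoherence bounds $\|\bm{U}^\star\|_{\infty}\lesssim\sqrt{\mu/d}\,(\lambda_{\max}^\star)^{1/3}$, $\|\bm{U}^\star\|_{2,\infty}\lesssim\sqrt{\mu r/d}\,(\lambda_{\max}^\star)^{1/3}$, together with the lower bound $\|\bm{T}^\star\|_{\infty}\geq d^{-3/2}\|\bm{T}^\star\|_{\mathrm{F}}\asymp d^{-3/2}\sqrt{r}\,\lambda_{\max}^\star$ (valid because $\mu_0\geq 1$). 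The arithmetic produces the extra factor $\sqrt{\mu^3 r}$ in front of the noise term; the $\rho^t$ term absorbs this growth into the absolute constant $C_3$.

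\textbf{Main obstacle.} The tricky part is the Frobenius step: crudely bounding $\|\bm{T}^t-\bm{T}^\star\|_{\mathrm{F}}\leq\sum_i\|\bm{\Delta}_i\|_{\mathrm{F}}$ loses a $\sqrt{r}$ factor, so the argument must genuinely exploit near-orthogonality of the $\bm{\Delta}_i$'s. This requires showing that pairwise inner products between tensor differences built from different indices $i,j$ are controlled by $\mu_2$, not only for $\bm{u}_i^\star,\bm{u}_j^\star$ but also for the iterates $\bm{u}_i^t,\bm{u}_j^t$ --- which in turn uses that the perturbation $\bm{u}_i^t-\bm{u}_i^\star$ is small in Euclidean norm by Theorem~\ref{thm:main}. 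A secondary accounting challenge is to collect the correct powers of $\mu$ and $r$ in the $\ell_\infty$ step so as to arrive at $\sqrt{\mu^3 r}$ rather than a larger factor.
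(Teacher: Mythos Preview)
Your overall strategy is correct and close to the paper's, but there is one genuine gap in the $\ell_\infty$ step, and one place where the paper's argument is notably cleaner.

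\textbf{The gap.} You claim that in the entrywise bound ``the $\rho^t$ term absorbs this growth into the absolute constant $C_3$.'' This does not work: $\sqrt{\mu^3 r}$ is not an absolute constant under the hypotheses of Theorem~\ref{thm:main}, so $C_3\sqrt{\mu^3 r}\,\rho^t$ cannot be rewritten as $C_3'\rho^t$ with $C_3'$ absolute. If you use Theorem~\ref{thm:main} as a black box you will be stuck with an extra $\sqrt{\mu^3 r}$ on the linear-convergence term. The paper circumvents this by invoking the \emph{stronger internal bound} established in the inductive analysis (hypotheses~\eqref{hyp:t_step}), namely
\[
\|\bm{U}^t-\bm{U}^\star\|_{2,\infty}\;\lesssim\;\Big(C_3\rho^t\,\mathcal{E}_{\mathsf{local}}+C_4\frac{\sigma}{\lambda_{\min}^\star}\sqrt{\tfrac{d\log d}{p}}\Big)\|\bm{U}^\star\|_{2,\infty},
\qquad \mathcal{E}_{\mathsf{local}}\lesssim \frac{1}{\mu^{3/2}r}.
\]
Multiplying by the $\sqrt{\mu^3 r}$ incurred in passing from factor to tensor, the $\rho^t$ term becomes $C_3\rho^t\sqrt{\mu^3 r}\,\mathcal{E}_{\mathsf{local}}\lesssim C_3\rho^t/\sqrt{r}\leq C_3\rho^t$, which is exactly what the corollary states. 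So you must appeal to the analysis behind Theorem~\ref{thm:main}, not just its statement.

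\textbf{The cleaner Frobenius argument.} Your plan to expand $\|\bm{T}^t-\bm{T}^\star\|_{\mathrm{F}}^2$ and control the $O(r^2)$ cross terms $\langle\bm{\Delta}_i,\bm{\Delta}_j\rangle$ individually does work, but each $\langle\bm{\Delta}_i,\bm{\Delta}_j\rangle$ expands into nine triple products, and the worst of them only carry a \emph{single} factor of $\sqrt{\mu/d}$ (not two), so you need $r\sqrt{\mu/d}\ll 1$ rather than $r\mu/d\ll 1$. The paper avoids this bookkeeping entirely: it rewrites the dominant term as a matrix product,
\[
\sum_{s}\bm{u}_s^{\star\otimes 2}\otimes\bm{\Delta}_s \;=\; \bm{U}^\star\bm{W}_1^\top,
\qquad \bm{W}_1:=[\bm{u}_s^\star\otimes\bm{\Delta}_s]_{1\le s\le r}\in\mathbb{R}^{d^2\times r},
\]
and then uses $\|\bm{U}^\star\bm{W}_1^\top\|_{\mathrm{F}}\le\|\bm{U}^\star\|\,\|\bm{W}_1\|_{\mathrm{F}}$. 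The near-orthogonality enters once, through $\|\bm{U}^\star\|\asymp\lambda_{\max}^{\star1/3}$ (Lemma~\ref{lemma:incoh}), and $\|\bm{W}_1\|_{\mathrm{F}}^2=\sum_s\|\bm{u}_s^\star\|_2^2\|\bm{\Delta}_s\|_2^2\le\lambda_{\max}^{\star2/3}\,\mathsf{dist}_{\mathrm{F}}^2$ follows immediately. This packages the ``avoid $\sqrt{r}$'' step into a single operator-norm bound rather than an $r^2$-fold case analysis.
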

\begin{remark} Clearly, Theorem~\ref{thm:main} and Corollary~\ref{corollary:tensor-entries} subsume Theorem~\ref{thm:main-rank1} and Corollary~\ref{corollary:tensor-entries-rank1} as a special case respectively. \end{remark}
	\begin{remark} Our theorems require the rank $r$ to not exceed $o(d^{1/6})$, which, we believe, is an artifact of the current nonconvex analysis (particularly for the initialization stage). For instance, our local convergence analysis is built upon strong convexity and smoothness, which holds only within a sufficiently small neighborhood  surrounding the truth; given that the diameter of this neighborhood is no more than $o(1/r)$, our analysis requires an initial guess with higher accuracy than expected, thus leading to our rank constraint.  It might be possible to improve the rank dependency via more refined analysis, and we leave it to future investigation. 
\end{remark}
In a nutshell, this theorem reveals intriguing theoretical support (including both $\ell_{\mathrm{F}}$ and $\ell_{2,\infty}$ bounds) for more general settings.  Assuming that the condition number $\kappa \asymp 1$, the nonconvex algorithm we propose is guaranteed to succeed in polynomial time.   
Note, however, that our theoretical dependency (including both sample and computational complexities) on the rank $r$ and the incoherence parameter $\mu$ are likely loose and sub-optimal. 
In addition, if $\kappa$ is allowed to grow with $d$, then the current theory requires a large number of restart attempts during the initialization stage, resulting in a very high computational burden. 
Improving these aspects, however, calls for a much more refined analysis framework, which we leave for future investigation.

\subsection{Numerical experiments} 


We carry out a series of numerical experiments to corroborate our theoretical findings.  Before proceeding, recall that Theorem \ref{thm:main} only guarantees successful recovery with probability $1-\delta$ for some small constant $\delta$; this means that we shall not anticipate a very high success rate (e.g.~$1-O(d^{-5})$) as in the matrix recovery case. As we shall make clear shortly, this happens mainly because the initialization stage  works only with probability $1-\delta$, where the uncertainty largely depends on the random vectors $\{\bm{g}^{\tau}\}_{1\leq \tau \leq L}$.  With this observation in mind, we recommend the following modification to improve the empirical success rate:   
\begin{itemize}
	\item Run  Algorithm \ref{alg:init} independently for $t_{\mathsf{init}}= 5$ times to obtain multiple initial estimates (denoted by $\bm{U}^{0}_{[1]},\cdots,\bm{U}^{0}_{[t_{\mathsf{init}}]}$);  select the one achieving the smallest empirical loss, namely
	\begin{align}
		\bm{U}^{0}_{\mathsf{best}} = \underset{{\bm{U}\in \{\bm{U}^{0}_{[i]} \}}_{1\leq i\leq t_{\mathsf{init}}}}{\arg\min} f(\bm{U}) .
	\end{align}
	\item Run Algorithm \ref{alg:gd} with the initial point $\bm{U}^0$ set to be $\bm{U}^{0}_{\mathsf{best}}$. 
\end{itemize}
The final estimates for the low-rank factor and the whole tensor are denoted respectively by
\begin{align}
	\widehat{\bm{U}} = \bm{U}^{t_0}\qquad   \text{and} \qquad \widehat{\bm{T}} = \sum\nolimits_{i=1}^r \bm{u}^{t_0}_i \otimes \bm{u}^{t_0}_i \otimes \bm{u}^{t_0}_i, 
\end{align}
where $\bm{U}^{t_0} = [\bm{u}^{t_0}_1, \cdots, \bm{u}^{t_0}_r] \in \mathbb{R}^{d\times r}$ is the iterate returned by Algorithm \ref{alg:gd}, with 
$t_0$ the total number of gradient iterations. In the sequel, we generate the true  tensor $\bm{T}^{\star} = \sum_{1\leq i\leq r} \bm{u}_i^{\star\,\ot 3}$  randomly in such a way that  $\bm{u}_i^{\star} \overset{\mathrm{i.i.d.}}{\sim} \mathcal{N}(\bm{0}, \bm{I}_d)$. 
The learning rates are taken to be $\eta_t\equiv 0.2$ unless otherwise noted.

We start with numerical convergence rates of our algorithm in the absence of noise. Set $d = 100$, $r = 4$, $p = 0.1$, $L = 16$ and $\epsilon_{\mathsf{th}} = 0.4$. Figure \ref{fig:simultation-TC}(a) the numerical estimation errors vs.~iteration count $t$ in a typical Monte Carlo trial. Here, four kinds of estimation errors are reported: (1) the relative Frobenius norm error $\frac{\mathsf{dist}_{\mathrm{F}}(\bm{U}^{t},\bm{U}^{\star})}{\|\bm{U}^{\star}\|_{\mathrm{F}}}$; (2) the relative $\|\cdot\|_{2,\infty}$  error $\frac{\mathsf{dist}_{2,\infty}( \bm{U}^t, \bm{U}^{\star})}{\|\bm{U}^{\star}\|_{2,\infty}}$; (3) the relative Frobenius norm error $\frac{\|\bm{T}^{t}-\bm{T}^{\star}\|_{\mathrm{F}}}{\|\bm{T}^{\star}\|_{\mathrm{F}}}$; (4) the relative $\ell_{\infty}$  error $\frac{\|\bm{T}^{t}-\bm{T}^{\star}\|_{\infty}}{\|\bm{T}^{\star}\|_{\infty}}$. Here, $\bm{T}^{t}=\sum_{i=1}^{r}\bm{u}_{i}^{t}\otimes\bm{u}_{i}^{t}\otimes\bm{u}_{i}^{t}$ with $\bm{U}^t = [\bm{u}^t_1,\cdots, \bm{u}^t_r]$.  For all  these metrics, the numerical estimation errors decay geometrically fast.

Next, we study the phase transition (in terms of the success rates for exact recovery) in the noise-free settings. Set $d = 100$, $r = 4$, $L = 16$ and $\epsilon_{\mathsf{th}} = 0.4$. For the sake of comparisons, we also report the numerical
performance of the tensor power method (TPM) followed by gradient descent. When running the tensor power method, we set both the number of iterations and the restart number to be $16$.
Each trial is claimed to succeed if the relative $\ell_2$ error obeys $\frac{\mathsf{dist}_{\mathrm{F}}(\widehat{\bm{U}},\bm{U}^{\star})}{\|\bm{U}^{\star}\|_{\mathrm{F}}}\leq 0.01$. Figure \ref{fig:simultation-TC}(b) plots the empirical success rates over 100 independent Monte Carlo trials. 
As can be seen, our initialization algorithm outperforms the tensor power method.

The third series of experiments is concerned with the dependence of the success rate on the rank $r$. Let us set $p = r d^{-3/2} \log^2 d$, $L = r^2$ and $\epsilon_{\mathsf{th}} = 0.4$, and the success recovery criterion is the same as above. Figure \ref{fig:simultation-TC}(c) depicts the empirical success rates (over $100$ independent Monte Carlo trials) as the rank $r$ varies. As can be seen from the plots,  the proposed algorithm is able to achieve exact reconstruction as long as the rank $r$ is sufficiently small compared to $d$. The plausible range of $r$, however, seems  and seems to be larger than our theoretic requirement $r=o(d^{1/6})$. This, once again, suggests the need of future investigation to pin down the best possible dependency on $r$.

Finally, we consider the numerical estimation accuracy of our algorithm. Take $t_0 = 100$,  $d = 100$, $r = 4$, $p = 0.1$, $L = 16$ and $\epsilon_{\mathsf{th}} = 0.4$. Define the signal-to-noise ratio (SNR) to be $\mathrm{SNR} =  \frac{\norm{\bm{T}^\star}_\frob^2/d^3}{\sigma^2} $. We report in Figure \ref{fig:simultation-TC}(d) three types of squared relative errors (namely,  $\frac{\mathsf{dist}^2_{\mathrm{F}}(\widehat{\bm{U}},\bm{U}^{\star})}{\|\bm{U}^{\star}\|^2_{\mathrm{F}}}$, $\frac{\mathsf{dist}^2_{2,\infty}(\widehat{\bm{U}},\bm{U}^{\star})}{\|\bm{U}^{\star}\|^2_{2,\infty}}$ and $\frac{\|\widehat{\bm{T}}-\bm{T}^{\star}\|^2_{\infty}}{\|\bm{T}^{\star}\|^2_{\infty}}$) vs.~SNR. 
Figure \ref{fig:simultation-TC}(d) illustrates that all three types of relative squared errors scale inversely proportional to the SNR (since the slope in the figure is roughly $-1$), which is consistent with our statistical guarantees.



\begin{figure}
\begin{tabular}{cccc}
\includegraphics[width=0.23\linewidth]{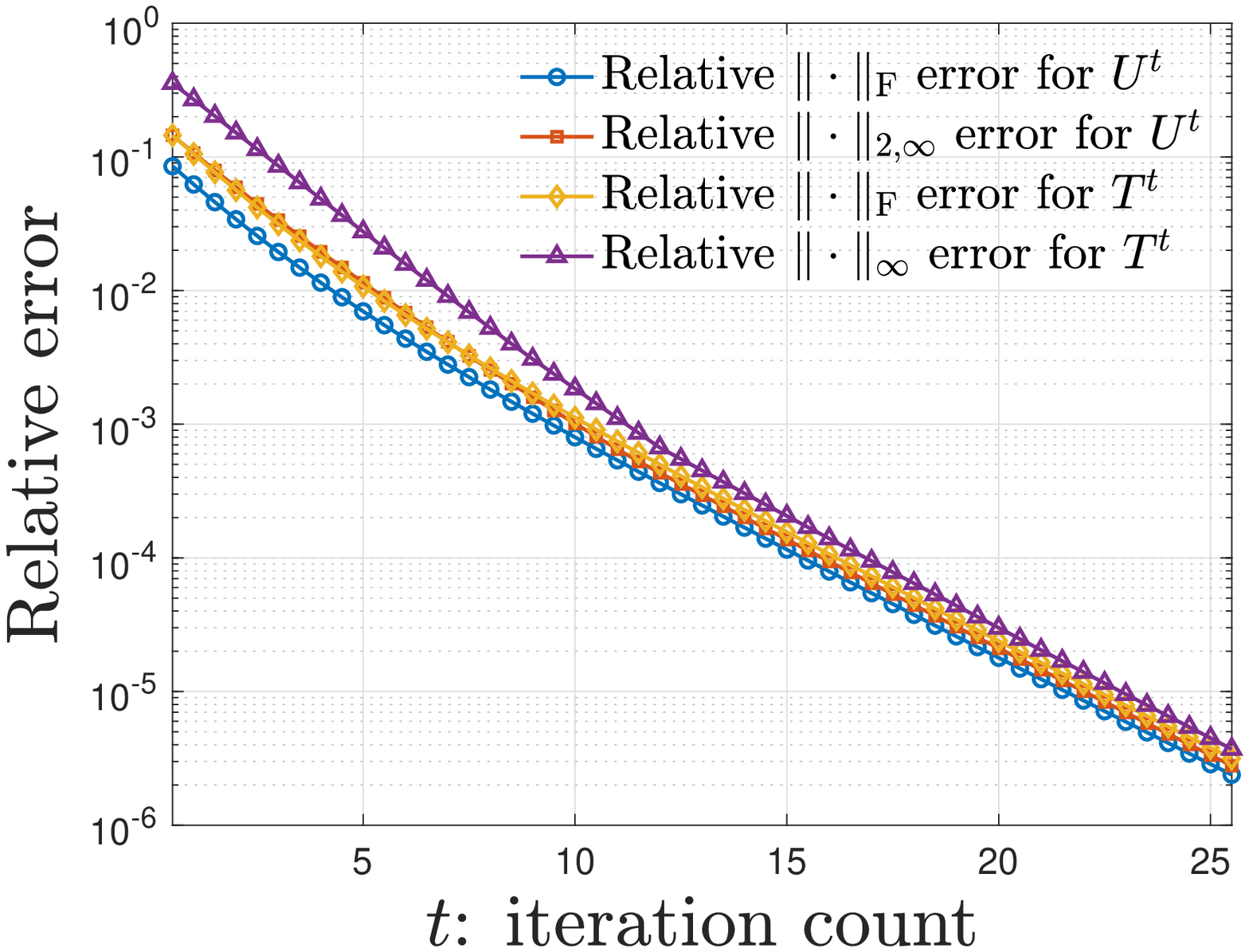} & \includegraphics[width=0.23\linewidth]{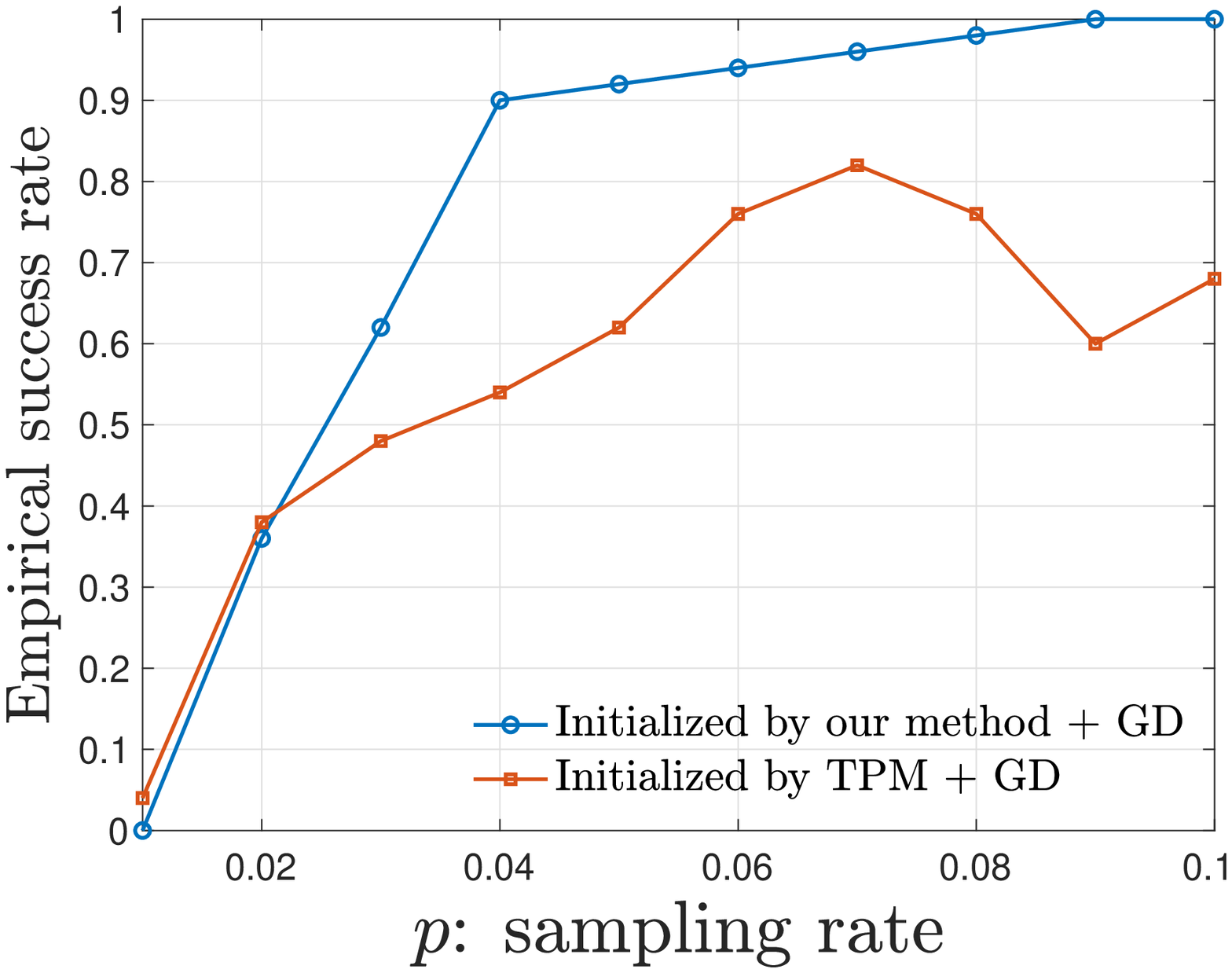} & \includegraphics[width=0.23\linewidth]{} & \includegraphics[width=0.23\linewidth]{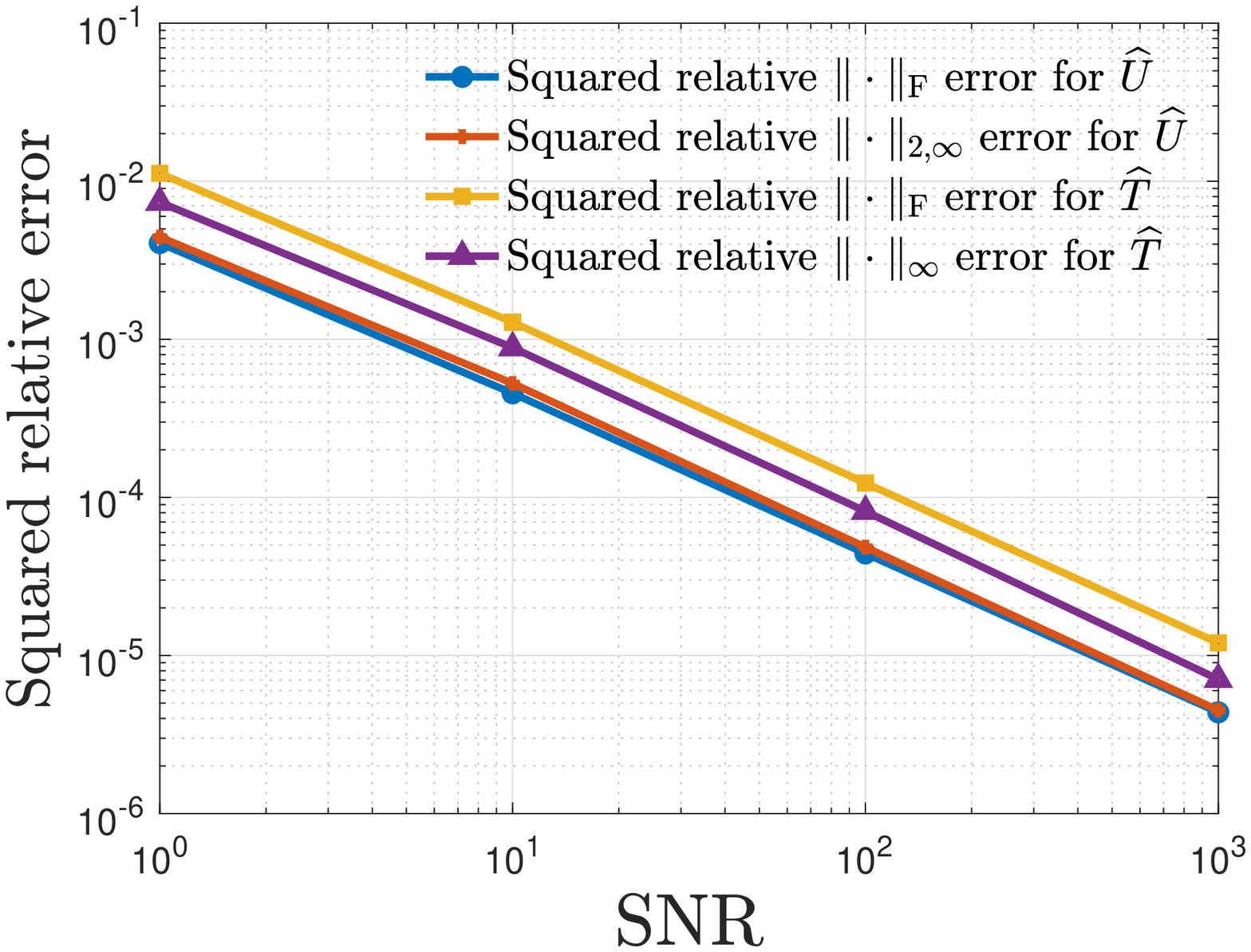} \tabularnewline
	(a) & (b) & (c) & (d) \tabularnewline
\end{tabular}
	\caption{(a) relative errors of the estimates $\bm{U}^{t}$ and $\bm{T}^t$ vs.~iteration count $t$ for noiseless tensor completion, where $d = 100$, $r = 4$, $p = 0.1$; (b) empirical success rate vs.~sampling rate, where $d = 100$, $r = 4$; (c) empirical success rate vs.~rank, where $p = r d^{-3/2} \log^2 d$; (d) squared relative errors vs.~SNR for noisy settings, where $d = 100$, $r = 4$ and $p = 0.1$. Each point in (b), (c) and (d) is averaged over $100$ independent Monte Carlo trials. \label{fig:simultation-TC}
}
\end{figure}


\subsection{Notation}
\label{sec:Notation-1}

Before proceeding, we gather a few notations that will be used throughout this paper. 
First of all, for any matrix $\bm{M} \in \R^{d \times d}$, we let $\norm{\bm{M}}$ and $\norm{\bm{M}}_\frob$ denote the operator norm (or the spectral norm) and the Frobenius norm of $\bm{M}$, respectively, and let $\bm{M}_{i, :}$ and $\bm{M}_{:, i}$ denote the $i$-th row and $i$-th column, respectively. In addition, $\lambda_1(\bm{M}) \geq \lambda_2(\bm{M}) \geq \dots \geq \lambda_d(\bm{M})$ denote the eigenvalues of $\bm{M}$ and $\sigma_1(\bm{M}) \geq \sigma_2(\bm{M}) \geq \dots \geq \sigma_d(\bm{M})$ denote the singular values of $\bm{M}$. 
%
%

For any tensor $\bm{T} \in \mathbb{R}^{d\times d\times d}$, let $\bm{T}_{i, :, :} \in \mathbb{R}^{d \times d}$ denote the mode-$1$ $i$-slice with entries $\( \bm{T}_{i, :, :} \)_{j, k} = T_{i, j, k}$, and $\bm{T}_{:, i, :}$ and $\bm{T}_{:, :, i}$ are defined in a similar way.
For any tensors $\bm{T},\bm{R}\in \mathbb{R}^{d\times d\times d}$, 
the inner product is defined as 
$\langle \bm{T}, \bm{R} \rangle := \sum_{j,k,l} T_{j, k, l} R_{j, k, l}$. 
The Frobenius norm of $\bm{T}$ is defined as
$\norm{\bm{T}}_\frob := \sqrt{ \langle \bm{T}, \bm{T} \rangle }$.  
For any  vectors $\bm{u}, \bm{v} \in \R^d$, we define the vector products of a tensor $\bm{T} \in \R^{d \times d \times d}$ --- denoted by $\bm{T} \times_3 \bm{u} \in \R^{d \times d}$ and $\bm{T} \times_1 \bm{u} \times_2 \bm{v} \in \R^d$ --- such that
\begin{subequations}
\begin{align}
	\big[\bm{T} \times_3 \bm{u} \big]_{ i j } ~&:=~ \sum\nolimits_{1\leq k \leq d} T_{i, j, k} u_k,  \qquad & 1\leq i,j\leq d;\\ 
	\big[\bm{T} \times_1 \bm{u} \times_2 \bm{v} \big]_{ k } ~&:=~ \sum\nolimits_{1\leq i, j \leq d} T_{i, j, k} u_i v_j,  \qquad & 1\leq k \leq d.
\end{align}
\end{subequations}
%
%
The products  $\bm{T} \times_2 \bm{u} \in \R^{d\times d}$, $\bm{T} \times_3 \bm{u} \in \R^{d\times d}$, $\bm{T} \times_1 \bm{u} \times_3 \bm{v} \in \R^d$ and $\bm{T} \times_2 \bm{u} \times_3 \bm{v} \in \R^d$ are  defined in a similar manner.
For any $\bm{U}=[\bm{u}_1,\cdots,\bm{u}_r]\in \mathbb{R}^{d\times r}$ and $\bm{V}=[\bm{v}_1,\cdots,\bm{v}_r]\in \mathbb{R}^{d\times r}$, we further define
\begin{align}
	\label{eq:T-v-u-seq-defn}
	\bm{T} \times_1^\seq \bm{U} \times_2^\seq \bm{V} := [\bm{T} \times_1 \bm{u}_i \times_2 \bm{v}_i ]_{1\leq i\leq r} \in \mathbb{R}^{d\times r}.
\end{align}
In addition, the operator norm of $\bm{T}$ is defined as
\begin{align}
\norm{\bm{T}} := \sup_{\bm{u}, \bm{v}, \bm{w} \,\in \,\sphere^{d-1}} \left\langle \bm{T},  \bm{u} \otimes \bm{v} \otimes \bm{w} \right\rangle,
\end{align}
where $\sphere^{d-1} := \{\bm{u} \in \mathbb{R}^d \mid \|\bm{u}\|_2=1 \}$ indicates the unit sphere in $\R^d$.

Further, $f(n)\lesssim g(n)$ or $f(n) = O(g(n))$ means that $|f(n)/g(n)| \leq C_1$ for some constant $C_1>0$; $f(n)\gtrsim g(n)$  means that $|f(n)/g(n)| \geq C_2$ for some constant $C_2>0$;   $f(n)\asymp g(n)$ means that $C_1 \leq |f(n)/g(n)| \leq C_2$ for some constants $C_1, C_2>0$; $f(n)= o( g(n))$ means that $\lim_{n\rightarrow \infty} f(n)/g(n)=0$.  
In addition, $f(n)\ll g(n)$ means that $f(n)\leq c_1 g(n)$ for some sufficiently small constant $c_1>0$, and  $f(n)\gg g(n)$ means that $f(n)\geq c_2 g(n)$ for some sufficiently large constant $c_2>0$. 

\section{Initialization}
\label{sec:algorithm-main}

This section presents formal details of the proposed two-step initialization, accompanied by some intuition. Recall that the proposed initialization procedure consists of two steps, which we discuss separately.




\subsection{Step 1: subspace estimation via a spectral method} 
\label{sec:subspace-estimate}


The spectral algorithm is often applied in conjunction with simple ``unfolding'' (or ``matricization'') to estimate the {\em subspace} spanned by the $r$ factors $\{\bm{u}_i^\star\}_{1\leq i\leq r}$. This strategy is partly motivated by prior approaches developed for covariance estimation with missing data \cite{lounici2014high,montanari2018spectral,cai2019subspace}.  We provide a brief introduction below. 

Let  
\begin{align}
	\label{eq:defn-A-subspace}
	\bm{A} = \mathsf{unfold}^{1\times 2} \big( \tfrac{1}{p} \bm{T} \big) \in \mathbb{R}^{d\times d^2}, \quad \text{or more concisely} ~~ \bm{A} = \mathsf{unfold} \big( \tfrac{1}{p} \bm{T} \big) \in \mathbb{R}^{d\times d^2}
\end{align}
be the mode-1 matricization of $p^{-1} \bm{T}$  (namely, $\frac{1}{p}T_{i,j,k}=A_{i,(j-1)d+k}$ for any $1\leq i,j,k\leq d$) \cite{kolda2009tensor}. The rationale of this step is that:  under our model, the unfolded matrix $\bm{A}$ obeys
\begin{align}
	\label{definition:Astar}
	\mathbb{E}[\bm{A}] = \mathsf{unfold} \big(\bm{T}^\star \big) = \sum_{i=1}^{r}\bm{u}_{i}^{\star}\left(\bm{u}_{i}^{\star} \otimes \bm{u}_{i}^{\star}\right)^{\top} =: \bm{A}^{\star}, 
\end{align}
whose column space is precisely the span of $\{\bm{u}^\star\}_{1\leq i\leq r}$.  This motivates one to estimate the $r$-dimensional column space of $\mathbb{E}[\bm{A}]$ from 
$\bm{A}$. Towards this, a natural strategy is to look at the principal subspace of $\bm{A} \bm{A}^\top$. However, the diagonal entries of $\bm{A} \bm{A}^\top$  bear too much influence on the principal directions and need to be properly down-weighed. The current paper chooses to work with the principal subspace of the following matrix that zeros out all diagonal components: 
%
\begin{align}
  \bm{B} := \PP_{\mathsf{off}\text{-}\mathsf{diag}} (\bm{A} \bm{A}^\top) ,
 \end{align} 
%
where $\PP_{\mathsf{off}\text{-}\mathsf{diag}}(\bm{Z})$ extracts out the off-diagonal entries of a squared matrix $\bm{Z}$. 
If we let $\bm{U}\in \mathbb{R}^{d\times r}$ be an orthonormal matrix whose columns are the top-$r$ eigenvectors of $\bm{B}$, then $\bm{U}$ serves as our subspace estimate. 
See  Algorithm~\ref{alg:init} for a summary of the procedure. 


\subsection{Step 2: retrieval of low-rank tensor factors from the subspace estimate}

\subsubsection{Procedure}

As it turns out, it is possible to obtain rough (but reasonable) estimates of all individual low-rank tensor factors $\{\bm{u}_i^\star\}_{1\leq i\leq r}$ --- up to global permutation --- given a reliable subspace estimate $\bm{U}$. This is in 
stark contrast to the low-rank matrix recovery case, where there exists some global rotational ambiguity that prevents us from disentangling the $r$ factors of interest. 

We begin by describing how to retrieve {\em one} tensor factor from the subspace estimate --- a procedure summarized in $\Call{Retrieve-one-tensor-factor}$.  
 Let us generate a random vector from the provided subspace $\bm{U}$ (which has orthonormal columns), that is, 
\begin{align}
	\bm{\theta} = \hspace{-2em} \underset{\text{projection of }\bm{g}\text{ onto }\bm{U}}{\underbrace{\bm{U}\bm{U}^{\top} \bm{g}}} \hspace{-2em} , \qquad \bm{g} \sim \mathcal{N}(\bm{0},\bm{I}_d).
\end{align}
The rescaled tensor data $p^{-1} \bm{T}$ is then transformed into a matrix via proper ``projection'' along this random direction  $\bm{\theta}$, namely, 
\begin{align}
	\label{eq:projected-matrix-M1}
	\bm{M} = \tfrac{1}{p} \bm{T} \times_3 \bm{\theta} \in \mathbb{R}^{d\times d}.
\end{align}
Our estimate for a tensor factor is then given by  $\lambda^{1/3} \bm{\nu}$, where 
$\bm{\nu}$ is the leading singular vector of $\bm{M}$ obeying $\langle \bm{T}, \bm{\nu}  ^{\ot 3} \rangle\geq 0$, and $\lambda$ is taken as $\lambda= \big \langle p^{-1} \bm{T}, \bm{\nu}  ^{\otimes 3} \big \rangle$.  Informally,  $\bm{\nu}$ reflects the direction of the component $\bm{u}_i^\star$ that exhibits the largest correlation with the random direction $\bm{\theta}$, and $\lambda$  forms an estimate of the corresponding size $\|\bm{u}^{\star}_i\|_2$. 

A challenge remains, however, as there are oftentimes more than one tensor factors to estimate. To address this issue, we propose to re-run the aforementioned procedure multiple times, so as to ensure that we get to retrieve each tensor factor of interest at least once. We will then apply a careful pruning procedure (i.e.~$\Call{Prune}$) to remove redundancy.

\subsubsection{Intuition}




To develop some intuition about the above procedure, consider the ``heuristic'' case where $\bm{\theta} = \bm{U}^\star ( \bm{U}^{\star\top} \bm{U}^{\star} )^{-1} \bm{U}^{\star\top} \bm{g}$, namely, the idealistic scenario where the subspace estimate $\bm{U}$ is accurate.  Averaging out the randomness in the sampling pattern and the noise, we see that the expected projected matrix \eqref{eq:projected-matrix-M1} takes the following form:
\begin{align*}
	\mathbb{E} \big[ \bm{M} \mid \bm{\theta} \big] = \bm{T}^{\star}  \times_3 \bm{\theta} = 
	\sum_{i=1}^{r} \langle \bm{\theta}, \bm{u}_{i}^{\star} \rangle \bm{u}_{i}^{\star}\bm{u}_{i}^{\star\top} .
\end{align*}  
%
%
%
As a result, in the incoherent case where $\{\bm{u}_j^\star\}$ are nearly orthogonal to each other, 
the leading singular vector of $\mathbb{E} \big[ \bm{M} \mid \bm{\theta} \big]$ --- and hence that of $\bm{M}$ (i.e.~$\bm{w}$) --- is expected to be reasonably close to the factor $\bm{u}_i^\star$ that enjoys the largest projected coefficient. 
In other words, we expect 
\begin{equation}
\bm{\nu} \approx\frac{1}{\left\|\bm{u}_{i}^{\star} \right\|_{2}}\bm{u}_{i}^{\star},
	\qquad \text{where }i=\arg\max_{1\leq j\leq r}\big|\langle\bm{\theta},\bm{u}_{j}^{\star}\rangle\big|. \label{eq:nu-intuition}
\end{equation}
%
In the mean time, armed with (\ref{eq:nu-intuition}) and the incoherence assumption (such that $\bm{u}_i^{\star}$ and $\bm{u}_j^{\star}$ are nearly orthogonal for $i\neq j$), one might have
\begin{align}
	\lambda = \big \langle \bm{T}^\star, \bm{\nu} ^{ \otimes 3} \big \rangle 
	\approx  \frac{1}{\left\|\bm{u}_{i}^{\star} \right\|_{2}^3} \left \langle \bm{T}^\star, \bm{u}_i^{\star \otimes 3} \right \rangle 
	\approx  \frac{1}{\left\|\bm{u}_{i}^{\star} \right\|_{2}^3} \left \langle \bm{u}_i^{\star \otimes 3}, \bm{u}_i^{\star \otimes 3} \right \rangle 
	= \left\|\bm{u}_{i}^{\star} \right\|_{2}^3,
\end{align}
thus explaining our choice of $\lambda$ in the proposed procedure. These arguments hint at the ability of our procedure in retrieving one tensor factor in each round.

The above intuitive argument, however, does not explain why we need to first project a random vector $\bm{g}$ onto the (approximate) column space of $\bm{U}^\star$. While we won't go into detailed calculations here, we remark in passing a crucial high variability issue:  without proper projection, the perturbation incurred by both the missing data and the noise might far exceed the strength of the true signal.  As a result, it is advised to first project the data onto the desired subspace, in the hope of amplifying the signal-to-noise ratio.


\subsection{Other alternatives?}
\label{sec:alternatives}

The careful reader may naturally wonder whether a careful initialization is pivotal in achieving fast convergence.  While a thorough answer to this has yet to be developed, we shall point out some alternatives that seem sub-optimal in both theory and practice. 
To simplify the presentation, the current subsection focuses on the rank-1 noiseless case, where
\begin{align}
	\label{eq:rank-1-noiseless}
	\bm{T}^{\star}=\bm{u}^{\star\,\otimes3}, \qquad \bm{T}= \tfrac{1}{p} \mathcal{P}_{\Omega}( \bm{T}^{\star} ) , \qquad \|\bm{u}^\star \|_2=1. 
\end{align}
%
Since the decision variable is now  a $d$-dimensional vector, we shall employ the conventional notation $\bm{u}^t$ to represent $\bm{U}^t$.

\paragraph{Random initialization.}

We  find it instrumental to begin with the population-level analysis, which corresponds to the scenario with no missing data and noise ($p=1$ and $\sigma = 0$).  A little calculation gives 
\begin{align}
	\label{eq:E-u1}
	\mathbb{E}\left[\bm{u}^{1} \mid \bm{u}^{0} \right] & = \mathbb{E}\left[\bm{u}^{0} - \eta \nabla f(\bm{u}^0) \mid \bm{u}^{0} \right] =\left(1-\eta\|\bm{u}^{0}\|_{2}^{4}\right)\bm{u}^{0}+\eta\langle\bm{u}^{0},\bm{u}^{\star}\rangle^{2}\bm{u}^{\star} .
\end{align}
As an immediate consequence, the expected correlation between the next iterate and the truth obeys
\begin{align*}
	\mathbb{E}\left[\langle\bm{u}^{1},\bm{u}^{\star}\rangle  \mid \bm{u}^{0} \right] & =\left\{ 1-\eta\|\bm{u}^{0}\|_{2}^{4}+\eta\langle\bm{u}^{0},\bm{u}^{\star}\rangle\|\bm{u}^{\star}\|_{2}^2 \right\} \langle\bm{u}^{0},\bm{u}^{\star}\rangle .
\end{align*}
This means that if $\bm{u}^0$ and $\bm{u}^\star$ are positively correlated and 
if the initial guess $\bm{u}^{0}$ is sufficiently small,\footnote{In fact, if a random initialization $\bm{u}^{0}$ is not small, then one can easily show that, with high probability, the $\ell_2$ norm of $\bm{u}^t$ is going to drop geometrically fast at the beginning. } then one has
\begin{align}
	\label{eq:expected-increment}
\mathbb{E}\left[\langle\bm{u}^{1},\bm{u}^{\star}\rangle  \mid \bm{u}^{0} \right] & \approx \left(1 +\eta\langle\bm{u}^{0},\bm{u}^{\star}\rangle\|\bm{u}^{\star}\|_{2}^2\right)\langle\bm{u}^{0},\bm{u}^{\star}\rangle ;
\end{align}
a similar recursion holds for $\bm{u}^t$. 
As a result, the GD iterates are expected to get increasingly more aligned with the truth,  
at least at the population level. Caution needs to be exercised, however, that this population-level analysis alone fails to capture what is happening in the finite-sample case.  
In what follows, we point out  potential issues with  random initialization.

Consider the case where $\bm{u}^0$ is generated as a vector of i.i.d.~Gaussian random variables. Suppose that 
$\bm{u}^0$ and $\bm{u}^\star$ are positively correlated and that $\|\bm{u}^0\|_2$ is sufficiently small. 
It is easily seen that, with high probability, the expected increment is on the order of (cf.~\eqref{eq:expected-increment})
\begin{align}
	\label{eq:mean-increment}
	\mathbb{E}\left[\langle\bm{u}^{1},\bm{u}^{\star}\rangle \mid \bm{u}^0 \right] & -\langle\bm{u}^{0},\bm{u}^{\star}\rangle\approx \eta\langle\bm{u}^{0},\bm{u}^{\star}\rangle^{2}\|\bm{u}^{\star}\|_{2}^2 \lesssim\frac{\eta\,\mathrm{poly}\log (d)}{d} \| \bm{u}^{\star} \|_2^4 \| \bm{u}^{0} \|_2^2 ,
\end{align}
which could be quite small as it depends quadratically on the current correlation $\langle\bm{u}^{0},\bm{u}^{\star}\rangle$. 
 
If we were to hope that the favorable population-level analysis captures more or less the finite-sample  dynamics, we would need to ensure that the variability of the gradient update  is well-controlled. Towards this, let us compute the variance of $\langle\bm{u}^{1},\bm{u}^{\star}\rangle$, assuming that $\frac{\|\bm{u}^0\|_{\infty}}{\|\bm{u}^0\|_2} \asymp \frac{\|\bm{u}^\star\|_{\infty}}{\|\bm{u}^\star\|_{2}} \asymp \frac{\mathrm{poly}\log (d)}{\sqrt{d}}$:  
%
\begin{align*}
	\mathsf{Var}\left(\langle\bm{u}^{1},\bm{u}^{\star}\rangle \mid \bm{u}^0 \right) 
 & \asymp \mathsf{Var}\Bigg( \frac{\eta}{p} \sum_{1\leq j,k,l\leq d}\big(\chi_{jkl}-p\big)\left(u_{j}^{0}u_{k}^{0}u_{l}^{0}-u_{j}^{\star}u_{k}^{\star}u_{l}^{\star}\right)u_{j}^{0}u_{k}^{0}u_{l}^{\star}\Bigg)\\
 & \asymp\frac{\eta^{2}}{p}\sum_{1\leq j,k,l \leq d}\left(u_{j}^{0}u_{k}^{0}u_{l}^{0}+u_{j}^{\star}u_{k}^{\star}u_{l}^{\star}\right)^{2}\left(u_{j}^{0}u_{k}^{0}u_{l}^{\star}\right)^{2}
	\asymp\frac{\eta^{2} \mathrm{poly}\log (d) }{pd^{3}} \| \bm{u}^{\star} \|_2^8 \| \bm{u}^{0} \|_2^4 .
\end{align*}
In other words, the typical size of the variability of $\langle\bm{u}^{1},\bm{u}^{\star}\rangle$
is about the order of $\frac{\eta \mathrm{poly}\log (d)}{\sqrt{pd^{3}}} \| \bm{u}^{\star} \|_2^4 \| \bm{u}^{0} \|_2^2$, which dominates (in fact, is order-of-magnitudes larger than) the mean increment \eqref{eq:mean-increment} unless 
\begin{equation}
	\label{eq:sample-size-random-init}
	p\gtrsim \frac{\mathrm{poly}\log (d)}{d}.
\end{equation}
The sample size corresponding to \eqref{eq:sample-size-random-init} is, however,  considerably larger than the computation limit $p\asymp \frac{\mathrm{poly}\log (d)}{d^{1.5}}$. The presence of a large variance implies highly volatile dynamics of randomly initialized GD, thus casting doubt on its efficiency in the most challenging sample-starved regime.

In summary, the main issue stems from the quadratic dependence of  the expected  increment \eqref{eq:mean-increment} on the correlation $\langle \bm{u}^0, \bm{u}^{\star} \rangle$, which can be exceedingly small if $\bm{u}^0$ is randomly initialized.

%
%

\paragraph{Initialization via the tensor power method (TPM).} Another alternative for initialization is the tensor power method, which has recently gained popularity in the context of learning latent-variable models \cite{anandkumar2014tensor,anandkumar2017analyzing}. Nevertheless, the TPM (with random initialization) suffers from the same high-volatility issue as randomly initialized GD.  The argument for this would be nearly identical to the one presented above, and is hence omitted.  Instead, we invoke a perturbation analysis result in \cite[Theorem 5.1]{anandkumar2014tensor} to illustrate the insufficiency of the TPM.  

Recall that $\frac{1}{p}\bm{T} = \bm{T}^{\star} + \big( \frac{1}{p}\bm{T} - \bm{T}^{\star} \big) $. 
A critical issue is 
that  the perturbation bound in \cite[Theorem 5.1]{anandkumar2014tensor} requires the tensor perturbation to be exceedingly small, namely, 
\begin{equation}
	\big\| \tfrac{1}{p}\bm{T} - \bm{T}^{\star}   \big\| \lesssim 1/d. 
\end{equation}
This, however, cannot possibly hold if the sample size is merely $p\asymp \frac{\mathrm{poly}\log (d)}{d^{1.5}}$  (in which case one only expects a spectral norm bound on the order of $\|  p^{-1}\bm{T} - \bm{T}^{\star}  \| \lesssim \frac{1}{\mathrm{poly}\log (d)} $ shown in Corollary~\ref{cor:bound-PE} even in the absence of noise).  In light of all this, existing stability analysis of the TPM does not imply either sample efficiency or computational efficiency.

\section{Related work}

One of the most natural ideas for solving tensor completion is to first unfold the tensor data into matrices, followed by proper convex relaxation commonly adopted for low-rank matrix completion. 
Given that there are more than one ways to matricize a tensor, several prior work has  explored the design of matrix norms that can exploit the tensor structure more effectively \cite{tomioka2010estimation,gandy2011tensor,liu2013tensor,romera2013new,lu2016tensor,mu2014square}. Such algorithms have  been robustified to enable reliable recovery against sparse outliers as well \cite{goldfarb2014robust}. 
For the most part, however, such unfolding-based convex relaxation necessarily incur loss of structural information, which is particularly severe when handling odd-order tensors. The sample complexity developed for this paradigm is often sub-optimal vis-a-vis the computational limits (namely, minimal sample complexity achievable by polynomial-time algorithms).




Motivated by the above sub-optimality issue,  \cite{yuan2016tensor,yuan2017incoherent} proposed to minimize instead the tensor nuclear norm subject to data constraints, which provably allows for reduced sample complexity. The issue, however, is that computing the tensor nuclear norm itself is already computationally intractable, thus limiting its applicability to even moderate-dimensional problems. 
Similar findings have also been discovered for tensor atomic norm minimization 
\cite{driggs2019tensor}. 
When restricted to polynomial-time algorithms, 
the best statistical guarantees  are often attained via convex relaxation tailored to the  sum-of-squares hierarchy  \cite{barak2016noisy}; the resulting computational cost, however, remains prohibitively high for practical large-scale problems. Another matrix nuclear norm minimization algorithm has been proposed based on promoting certain structures on certain factor matrices \cite{liu2014factor}.  Developing statistical guarantees is, however, not the focal point of this work.

Moving beyond convex relaxation,  a number of prior papers have developed nonconvex algorithms for tensor completion, examples including iterative hard thresholding \cite{rauhut2017low}, alternating minimization \cite{jain2014provable,wang2016tensor,xu2015parallel}, tensor SVD \cite{zhang2017exact}, optimization on manifold \cite{xia2017polynomial,kasai2016low,steinlechner2016riemannian}, proximal average algorithm with nonconvex regularizer \cite{yao2018scalable}, and  block coordinate decent  \cite{ji2016tensor,xu2013block}.  When it comes to the model considered herein, these algorithms either lack optimal statistical guarantees, or come with a computational cost that is significantly higher than a linear-time algorithm. 

 The algorithm and theory that we develop are largely inspired by the recent advances  of nonconvex optimization algorithms for low-rank matrix recovery problems \cite{keshavan2010matrix,Se2010Noisy,candes2015phase,ChenCandes15solving,sun2016guaranteed,yi2016fast,chen2015fast}.  The main theoretical tool --- the leave-one-out analysis --- is a powerful technique that has proved successful in various other statistical problems \cite{el2015impact,chen2017spectral,abbe2017entrywise,ma2017implicit,zhong2017near,chen2018gradient,chen2019noisy,li2019non,chen2019inference,ding2018leave,pananjady2019value}. 
There are several major differences between the analysis of nonconvex tensor completion and that of nonconvex matrix recovery. For instance, our initialization scheme is substantially more complicated than the matrix recovery counterpart, thus requiring much more sophisticated analysis; in addition, the local convergence stage of tensor completion does not suffer from rotational ambiguity (which often appears in nonconvex matrix completion), and hence we only need to handle permutational ambiguity.


%

%

In addition, the current paper focuses on non-adaptive uniform random sampling. If there is freedom in designing the sampling mechanism, then one can often expect improved performance; see \cite{krishnamurthy2013low, zhang2019cross} as examples.  Fundamental criteria that enable perfect low-CP-rank tensor completion have been studied in \cite{ashraphijuo2017fundamental}.


Tensor completion is simply a special example of the tensor recovery literature.  There is a large body of results tackling 
various other tensor recovery and estimation problems, including but not limited to tensor decomposition  \cite{kolda2001orthogonal,kolda2009tensor,anandkumar2014tensor,anandkumar2014guaranteed,tang2015guaranteed,kim2013robust,hopkins2016fast,ge2015escaping,zoubir2018robust,sidiropoulos2017tensor,sun2017provable,ge2017optimization},  tensor SVD and factorization \cite{zhang2018tensor,kilmer2013third,zhang2017exact}, and tensor regression and sketching \cite{rauhut2017low,hao2018sparse,chen2019non,hao2019sparse}.  The algorithmic ideas explored in this paper might have implications for these tensor-related problems as well.



\section{Analysis}
\label{sec:analysis}

In this section, we outline the proof of Theorem~\ref{thm:main}. The proof of Corollary~\ref{corollary:tensor-entries} is deferred to Appendix~\ref{sec:corollary}. The analysis is divided into three parts:
\begin{itemize}
\item In Section~\ref{sec:analysis-GD}, we show that given an initial estimate sufficiently close to the ground truth, vanilla gradient descent converges linearly. These are formalized in Lemmas~\ref{lemma:U_frob_loss} and \ref{lemma:U_2inf_loss}.
\item Sections \ref{sec:analysis-subspace}-\ref{sec:analysis-retrieval} provide statistical guarantees for the two steps of the initialization procedure; see Theorems~\ref{thm:init}.
\item Under the assumptions of Theorem~\ref{thm:main}, one can see that the initialization satisfies the requirement of linear convergence of vanilla gradient descent. Therefore, Theorem~\ref{thm:main} immediately follows from the results in Sections~\ref{sec:analysis-GD}-\ref{sec:analysis-retrieval}.
\end{itemize}

\subsection{Analysis for local convergence of GD}
\label{sec:analysis-GD}


In this section, we demonstrate that: if the initialization is reasonably good, then vanilla gradient descent converges linearly to a solution with the desired statistical accuracy. We postpone the analysis for initialization to Sections \ref{sec:analysis-subspace}-\ref{sec:analysis-retrieval} for convenience of presentation.

\subsubsection{Preliminaries: gradient and Hessian calculation}

First of all, using our notation $\times^\seq$ defined in \eqref{eq:T-v-u-seq-defn}, we can write
\begin{align}
	\grad f \( \bm{U} \) &= \frac{1}{ p} \PP_{\Omega} \Big( \sum\nolimits_{1\leq i \leq r} \bm{u}_i^{\ot 3} - \bm{T}^\star - \bm{E} \Big) \times^\seq_1 \bm{U} \times^\seq_2 \bm{U}.
\end{align}

Next, we find it convenient to define an auxiliary loss function    $f_\clean \( \bm{U} \) : \R^{d \times r} \to \R_+$ that corresponds to the noiseless case:
\begin{align}
	f_\clean \( \bm{U} \) = \frac{1}{6p} \, \Big\| \PP_{\Omega} \Big( \sum\nolimits_{1\leq i \leq r} \bm{u}_i^{\ot 3} - \bm{T}^\star \Big) \Big\|^2_{\frob}.
\end{align}
The gradient of $f_\clean$ w.r.t.~$\bm{u}_s$ ($1\leq s\leq r$) is thus given by
\begin{align}
	\grad_{\bm{u}_s} f_\clean \( \bm{U} \) &= \frac{1}{p} \, \PP_{\Omega} \Big( \sum\nolimits_{1\leq i \leq r} \bm{u}_i^{\ot 3} - \bm{T}^\star \Big) \times_1 \bm{u}_s \times_2 \bm{u}_s, \qquad 1\leq s \leq  r,
\end{align}
and hence one can write
\begin{align}
	\grad f_\clean \( \bm{U} \) &= \frac{1}{p} \, \PP_{\Omega} \Big( \sum\nolimits_{1\leq i \leq r} \bm{u}_i^{\ot 3} - \bm{T}^\star \Big) \times^\seq_1 \bm{U} \times^\seq_2 \bm{U}.
\end{align}
This clearly satisfies
\begin{align}
	\label{eq:gradient-grad-clean}
	\grad f \( \bm{U} \) = \grad f_\clean \( \bm{U} \) - \frac{1}{p} \mathcal{P}_{\Omega} (\bm{E}) \times^\seq_1 \bm{U} \times^\seq_2 \bm{U}.
\end{align}
Moreover, direct algebraic manipulations give that: for any matrix $\bm{V} = \[\bm{v}_1, \dots, \bm{v}_r \] \in \R^{d \times r}$, 
\begin{align}
\mathsf{vec} \( \bm{V} \)^\top \grad^2 f_\clean \(\bm{U} \) \mathsf{vec} \( \bm{V} \) & = \frac{1}{3p} \, \Big\| \PP_{\Omega} \Big( \sum\nolimits_{1\leq s \leq r} \bm{u}_s \ot \bm{u}_s \ot \bm{v}_s + \bm{u}_s \ot \bm{v}_s \otimes \bm{u}_s + \bm{v}_s \ot \bm{u}_s \ot \bm{u}_s \Big) \Big\|_\frob^2 \nonumber \\
& \quad + \frac{2}{p}
\Big \langle \PP_{\Omega} \Big( \sum\nolimits_{s \in [r]} \bm{u}_s^{\ot 3} - \bm{T}^\star \Big), \sum\nolimits_{s \in [r]} \bm{v}_s \otimes \bm{v}_s \otimes \bm{u}_s \Big \rangle, 
\label{hessian_qf}
\end{align}
where $\mathsf{vec} ( \bm{V} ) $ denotes the vectorization of $\bm{V}$.

\subsubsection{Local strong convexity and smoothness}

At the heart of our analysis is a crucial geometric property of the objective function, that is,  the noiseless loss function $f_\clean$ behaves like a locally strongly convex and smooth function. This fact, which is formally stated in the following lemma, is the key enabler of fast local convergence of vanilla GD. 

\begin{lemma}[{\bf Local strong convexity and smoothness}]
\label{lemma:RIC}
Suppose that the sample complexity and the rank satisfy
\begin{align}
\label{eq:RIC-asymp-sym}
p \geq c_0 \max \left\{ \frac{\log^3 d}{d^{3/2}}, \frac{\mu^2 r^2 \log d}{d^2} \right\} \qquad r \leq c_1 \sqrt{\frac{d}{\mu}}
\end{align}
for some sufficiently large (resp.~small) constant $c_0>0$ (resp.~$c_1 > 0$). Then with probability greater than $1- O(d^{-10})$,  
\begin{align}
	\frac{1}{2}\lambda^{\star 4/3}_{\min} \left\| \bm{V} \right\|_\frob^2 \leq
	 \mathsf{vec} \( \bm{V} \)^\top \grad^2 f_\clean \( \bm{U} \) \, \mathsf{vec} \( \bm{V} \)  \leq  
	4 \lambda^{\star 4/3}_{\max} \left\| \bm{V} \right\|_\frob^2
\end{align}
holds simultaneously for all $\bm{V}\in \mathbb{R}^{d\times r}$ and all $\bm{U}\in \mathbb{R}^{d\times r}$ obeying 
\begin{equation}
	\label{eq:property-U-cvx-smooth}
	\| \bm{U} - \bm{U}^\star \|_{\frob} \leq \delta \left\| \bm{U}^\star \right\|_{\frob} \quad \text{and} \quad \left\| \bm{U} - \bm{U}^\star \right\|_{2, \infty} \leq \delta  \left\| \bm{U}^\star \right\|_{2, \infty}.
\end{equation}
Here, $\delta \leq c_2 / ( \mu^{3/2}  r )$ for some sufficiently small constant $c_2 > 0$.  
\end{lemma}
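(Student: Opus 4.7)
\subsection*{Proof plan for Lemma~\ref{lemma:RIC}}

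The plan is to control the Hessian quadratic form in \eqref{hessian_qf} by analyzing its two summands separately, and to reduce both to a population-level calculation at $\bm{U}=\bm{U}^\star$ via a perturbation argument. Write
\[
Q(\bm{U},\bm{V}) \;=\; Q_1(\bm{U},\bm{V}) + Q_2(\bm{U},\bm{V}),
\]
where $Q_1$ is the first (squared Frobenius) term and $Q_2$ is the cross term involving $\sum_s \bm{u}_s^{\ot3}-\bm{T}^\star$. I first bound $Q_1$ from below and above, and then show that $|Q_2|$ is negligible compared to the bounds on $Q_1$ on the neighborhood \eqref{eq:property-U-cvx-smooth}.

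For the lower/upper bound on $Q_1$, introduce the symmetric mode-3 tensor $\bm{S}(\bm{U},\bm{V}):=\sum_s(\bm{u}_s\ot\bm{u}_s\ot\bm{v}_s+\bm{u}_s\ot\bm{v}_s\ot\bm{u}_s+\bm{v}_s\ot\bm{u}_s\ot\bm{u}_s)$. The core step is a \emph{restricted near-isometry property}: uniformly over $\bm{U},\bm{V}$ satisfying $\|\bm{U}\|_{2,\infty},\|\bm{V}\|_{2,\infty}\lesssim\mu^{1/2}\|\bm{U}^\star\|_{2,\infty}$ (which, by the triangle inequality, is implied by \eqref{eq:property-U-cvx-smooth} together with $\delta\lesssim1$),
\begin{equation}
\label{eq:rip-target}
\Bigl|\tfrac{1}{p}\|\PP_\Omega(\bm{S}(\bm{U},\bm{V}))\|_\frob^2 - \|\bm{S}(\bm{U},\bm{V})\|_\frob^2\Bigr| \;\leq\; \tfrac{1}{10}\,\lambda_{\min}^{\star\,4/3}\|\bm{V}\|_\frob^2.
\end{equation}
I would prove \eqref{eq:rip-target} by a Bernstein/matrix concentration argument on the random sampling, combined with the entrywise bound $\|\bm{S}\|_\infty\lesssim (\mu r/d)^{3/2}\lambda_{\max}^{\star\,2/3}\|\bm{V}\|_\frob\|\bm{U}\|_\frob/\sqrt{d}$ (which uses incoherence of $\bm{U}$) and a standard $\eps$-net/decoupling argument over the low-dimensional parameter $\bm{V}$, fixing $\bm{U}$. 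The assumed sample complexity $p\gtrsim(\mu^2r^2\log d)/d^2$ is exactly what is needed for this concentration with high probability.

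Granting \eqref{eq:rip-target}, it suffices to bound $\|\bm{S}(\bm{U},\bm{V})\|_\frob^2$. Expanding,
\[
\|\bm{S}\|_\frob^2 = 3\sum_{s,t}\langle\bm{u}_s,\bm{u}_t\rangle^2\langle\bm{v}_s,\bm{v}_t\rangle + 6\sum_{s,t}\langle\bm{u}_s,\bm{u}_t\rangle\langle\bm{u}_s,\bm{v}_t\rangle\langle\bm{v}_s,\bm{u}_t\rangle.
\]
At $\bm{U}=\bm{U}^\star$, the near-orthogonality from $\mu_2\lesssim1$ shows that the diagonal $s=t$ term dominates, giving
\[
3\lambda_{\min}^{\star\,4/3}\|\bm{V}\|_\frob^2 \;\leq\; \|\bm{S}(\bm{U}^\star,\bm{V})\|_\frob^2 \;\leq\; 3\lambda_{\max}^{\star\,4/3}\|\bm{V}\|_\frob^2+\text{off-diagonal slack},
\]
where the slack is controlled by $r\mu/d\cdot\lambda_{\max}^{\star\,4/3}\|\bm{V}\|_\frob^2$, which is $o(1)$ under $r\leq c_1\sqrt{d/\mu}$. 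A simple perturbation (expanding $\bm{u}_s=\bm{u}_s^\star+(\bm{u}_s-\bm{u}_s^\star)$ and using $\|\bm{U}-\bm{U}^\star\|_\frob\leq\delta\|\bm{U}^\star\|_\frob$ with $\delta\leq c_2/(\mu^{3/2}r)$) transfers these bounds to general $\bm{U}$ in the neighborhood, with an additive error that can be absorbed into constants.

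Finally, for the cross term $Q_2$, I would use
\[
|Q_2| \;\leq\; 2\,\Bigl\|\tfrac{1}{p}\PP_\Omega\bigl(\textstyle\sum_s\bm{u}_s^{\ot3}-\bm{T}^\star\bigr)\Bigr\|\cdot\Bigl\|\sum_s\bm{v}_s\ot\bm{v}_s\ot\bm{u}_s\Bigr\|_{\text{nuclear-type}},
\]
and bound the first factor by the sampling concentration of $\frac{1}{p}\PP_\Omega(\cdot)-\mathrm{id}$ applied to the rank-$r$ tensor $\sum_s\bm{u}_s^{\ot3}-\bm{T}^\star$, using the closeness of $\bm{U}$ to $\bm{U}^\star$ in both $\|\cdot\|_\frob$ and $\|\cdot\|_{2,\infty}$ to obtain $\|\sum_s\bm{u}_s^{\ot3}-\bm{T}^\star\|_\infty\lesssim\delta\lambda_{\max}^\star\mu^{3/2}/d^{3/2}$. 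Combined with $\|\sum_s\bm{v}_s\otimes\bm{v}_s\otimes\bm{u}_s\|\lesssim\|\bm{V}\|_\frob^2\|\bm{U}\|_{2,\infty}\cdot\sqrt{d}$-type bounds, this yields $|Q_2|\lesssim\delta\,\mu^{3/2}r\,\lambda_{\max}^{\star\,4/3}\|\bm{V}\|_\frob^2$, which under $\delta\lesssim1/(\mu^{3/2}r)$ is subsumed by a small constant fraction of the $Q_1$ lower bound. Combining the bounds on $Q_1$ and $Q_2$ delivers the claimed $\frac{1}{2}\lambda_{\min}^{\star\,4/3}$ lower bound and $4\lambda_{\max}^{\star\,4/3}$ upper bound.

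The main obstacle will be the uniform restricted near-isometry \eqref{eq:rip-target}: establishing it for the three-term tensor $\bm{S}(\bm{U},\bm{V})$ (rather than a plain rank-$r$ tensor) with \emph{simultaneous} incoherence control on both $\bm{U}$ and $\bm{V}$ requires a careful covering argument and entrywise concentration, and it is here that the sample-size threshold $p\gtrsim\mu^2r^2\log d/d^2$ enters. Once \eqref{eq:rip-target} is in hand, everything else is essentially algebra plus incoherence-based bookkeeping.
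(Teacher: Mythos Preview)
Your overall architecture (split into the squared term $Q_1$ and the cross term $Q_2$, control $Q_2$ via an $\ell_\infty$ bound on $\sum_s\bm{u}_s^{\ot3}-\bm{T}^\star$ combined with a tensor operator-norm bound on the sampling operator) matches the paper's, and your population-level analysis of $\|\bm{S}(\bm{U}^\star,\bm{V})\|_\frob^2$ is essentially the paper's $\alpha_4$ computation. The gap is in how you handle $Q_1$.

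The lemma must hold for \emph{all} $\bm{V}\in\mathbb{R}^{d\times r}$; condition \eqref{eq:property-U-cvx-smooth} constrains only $\bm{U}$. Your parenthetical claim that the triangle inequality gives $\|\bm{V}\|_{2,\infty}\lesssim\mu^{1/2}\|\bm{U}^\star\|_{2,\infty}$ is simply false --- there is no hypothesis whatsoever on $\bm{V}$. Consequently your entrywise bound on $\|\bm{S}\|_\infty$ fails (take $\bm{V}$ supported on a single entry), and the ``$\eps$-net over the low-dimensional parameter $\bm{V}$'' cannot work as stated: $\bm{V}$ lives in $\mathbb{R}^{d\times r}$ with no incoherence restriction, so a naive net has cardinality $\exp(\Theta(dr))$ and destroys the probability. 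The quadratic-in-$\bm{V}$ structure of $Q_1$ could in principle rescue this (reduce to an operator-norm bound on a random $dr\times dr$ matrix), but you do not invoke it, and even then you would still need uniformity over the $dr$-dimensional set of admissible $\bm{U}$.

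The paper sidesteps both difficulties by inserting $\bm{U}^\star$ \emph{before} the concentration step. Concretely, $Q_1$ is split as
\[
\underbrace{\tfrac{1}{3p}\|\PP_\Omega \bm{S}(\bm{U},\bm{V})\|_\frob^2-\tfrac{1}{3p}\|\PP_\Omega \bm{S}(\bm{U}^\star,\bm{V})\|_\frob^2}_{\alpha_1}
+\underbrace{\tfrac{1}{3p}\|\PP_\Omega \bm{S}(\bm{U}^\star,\bm{V})\|_\frob^2-\tfrac{1}{3}\|\bm{S}(\bm{U}^\star,\bm{V})\|_\frob^2}_{\alpha_2}
+\underbrace{\tfrac{1}{3}\|\bm{S}(\bm{U}^\star,\bm{V})\|_\frob^2}_{\alpha_4}.
\]
For $\alpha_2$ the RIP is applied at the \emph{fixed} incoherent point $\bm{U}^\star$, where \cite[Lemma~5]{yuan2016tensor} gives the bound uniformly over all $\bm{V}$ for free. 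For $\alpha_1$ the paper expands $\bm{S}(\bm{U},\bm{V})-\bm{S}(\bm{U}^\star,\bm{V})$ in $\bm{\Delta}=\bm{U}-\bm{U}^\star$ and bounds each $\PP_\Omega$ term \emph{deterministically} on the high-probability event $\max_i|\{(j,k):(i,j,k)\in\Omega\}|\lesssim d^2p$, using only the $\|\cdot\|_{2,\infty}$ control on $\bm{\Delta}$ from \eqref{eq:property-U-cvx-smooth}; crucially these bounds are of the form $\|\bm{V}\|_\frob^2\times(\text{small})$ with no incoherence needed on $\bm{V}$. This is the decomposition you are missing: do the perturbation in $\bm{U}$ first (via crude but deterministic sample-count bounds), then apply the RIP only at $\bm{U}^\star$.
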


\begin{proof}
See Appendix \ref{pf:lemma_RIC}.
\end{proof}

In order to invoke Lemma \ref{lemma:RIC}, one needs to make sure that the decision matrix $\bm{U}$ of interest (e.g.~$\bm{U}^t$ in the GD sequence) satisfies the condition \eqref{eq:property-U-cvx-smooth}. This, however, is a fairly stringent condition, as it requires $\bm{U}$ to be close to the truth in every single row.

\subsubsection{Leave-one-out gradient descent sequences}
\label{sec:LOO-sequence}

Motivated by the analytical framework developed for low-rank matrix recovery \cite{ma2017implicit,chen2019nonconvex}, we introduce the following leave-one-out sequences, which play a crucial role in guaranteeing that the entire trajectory $\{\bm{U}^t\}_{t \geq 0}$ satisfies the condition \eqref{eq:property-U-cvx-smooth}
 as required in Lemma \ref{lemma:RIC}.


Specifically,  we define for each $1 \leq m \leq d$ the following auxiliary loss function:
\begin{align}
f^{\m} \( \bm{U} \) \triangleq \frac{1}{6p} \, \Big\| \PP_{\Omega_{-m}} \Big( \sum\nolimits_{1\leq s \leq r} \bm{u}_s^{\ot 3} - \bm{T}^\star - \bm{E} \Big) \Big\|_\frob^2 + \frac{1}{6} \, \Big\| \PP_{m} \Big( \sum\nolimits_{1\leq s \leq r} \bm{u}_s^{\ot 3} - \bm{T}^\star \Big) \Big\|_\frob^2,
	\label{eq:defn-fm-loo}
\end{align}
where
\begin{itemize}
	\item $\PP_{\Omega_m}$: the projection  onto the subspace of tensors supported on $\{(i, j, k) \in \Omega \colon i = m \text{ or } j = m \text{ or } k =m \}$;
	\item $\PP_{\Omega_{-m}}$: the projection  onto the subspace of tensors supported on $\{(i, j, k) \in \Omega \colon i \neq m \text{ and } j \neq m \text{ and } k \neq m \}$; 
	\item $\PP_{m}$: the projection  onto the subspace of tensors supported on $\{(i, j, k) \in [d]^3 \colon i = m \text{ or } j = m \text{ or } k =m \}$.
\end{itemize}
In words, this function is obtained by replacing all data at locations $\{(i, j, k) \in [d]^3 \colon i = m \text{ or } j = m \text{ or } k =m \}$ by their expected values, thus removing all randomness associated with this location  subset. 
The gradient of $f^{\m}(\bm{U})$ w.r.t.~$\bm{u}_s$ ($1\leq s\leq r$) can be computed as: 
\begin{align}
\label{grad_f_loo}
\begin{split}
\grad_{\bm{u}_s} f^{\m} (\bm{U}) &= \frac{1}{p} \PP_{\Omega_{-m}} \Big(\sum\nolimits_{1\leq s \leq r}\bm{u}_s^{\ot 3} - \bm{T}^\star - \bm{E} \Big) \times_1 \bm{u}_s \times_2 \bm{u}_s + \PP_{m} \Big(\sum\nolimits_{1\leq s \leq r} \bm{u}_s^{\ot 3} - \bm{T}^\star \Big) \times_1 \bm{u}_s \times_2 \bm{u}_s.
\end{split}
\end{align}

We then denote by $\big\{\bm{U}^{t, (m)} \big\}_{t \geq 0}$ the iterative sequence obtained by running gradient descent w.r.t.~the leave-one-out loss $f^{\m}(\cdot)$; see  Algorithm~\ref{alg:init_loo}. By construction, as long as $\bm{U}^{0, \m}$ is independent of the sampling locations and the noise associated with the locations $\{(i, j, k) \in \Omega \colon i = m \text{ or } j = m \text{ or } k =m \}$ (which holds true as detailed momentarily), then the entire  
 trajectory $\big\{\bm{U}^{t, (m)} \big\}_{t \geq 0}$ becomes statistically independent of such randomness.  This is a crucial property that allows us to decouple the complicated statistical dependency.  

%

\begin{algorithm}[h]
\caption{The $m$-th leave-one-out sequence}
\label{alg:init_loo}
\begin{algorithmic}[1]
  \State Generate an initial estimate $\bm{U}^{0,\m}$ via Algorithm \ref{alg:init_loo_subspace}.
  \For{$t = 0, 1, \dots, t_0-1$}
      \State $\bm{U}^{t+1, \m} = \bm{U}^{t, \m} - \eta_t \grad f^{\m} \big(\bm{U}^{t, \m} \big)$.
     \EndFor
\end{algorithmic}
\end{algorithm}

\subsubsection{Key lemmas}

The proof for local linear convergence of GD is inductive in nature, which proceeds on the basis of the following set of inductive hypotheses. 
As we shall see in Corollary~\ref{cor:init} in Section~\ref{sec:analysis-retrieval}, this set of inductive hypotheses --- modulo some global permutation --- is valid with high probability when $t=0$.  In order to simplify presentation, we remove the consideration of the global permutation factor throughout this section (namely, we assume that the following holds for $\bm{U}^0 \bm{\Pi}^0$ with some permutation matrix $\bm{\Pi}^0\in \mathbb{R}^{r\times r}$ obeying $\bm{\Pi}^0 = \bm{I}$.  Our inductive hypotheses are summarized as follows:  
%

\bigskip
\noindent{\bf Key hypotheses for the gradient update stage: }
\begin{subequations}	
\label{hyp:t_step}
\begin{align}
\label{hyp:U_frob_loss}
 \big\| \bm{U}^{t} - \bm{U}^\star \big\|_\frob &\leq \( C_1 \rho^t \EE_\local + C_2 \frac{\sigma}{\lammin^{\star}} \sqrt{\frac{ d \log d }{p}}  \) \norm{\bm{U}^\star}_{\frob}; \\
\label{hyp:U_2inf_loss}
 \big\| \bm{U}^{t} - \bm{U}^\star \big\|_{2, \infty} &\leq \( C_3 \rho^t \EE_\local + C_4 \frac{\sigma}{\lammin^{\star}} \sqrt{\frac{ d \log d}{p}}   \) \norm{\bm{U}^\star}_{2, \infty}; \\
\label{hyp:U_loo_frob_diff}
 \big\| \bm{U}^{t} - \bm{U}^{t, \m} \big\|_{\frob} &\leq  \( C_5 \rho^t \EE_\local + C_6 \frac{\sigma}{\lammin^{\star}} \sqrt{\frac{ d \log d}{p}}   \) \norm{\bm{U}^\star}_{2, \infty}; \\
\label{hyp:U_loo_row_loss}
 \left\| \big( \bm{U}^{t, \m} - \bm{U}^{\star} \big)_{m, :} \right\|_{2} &\leq  \( C_7 \rho^t \EE_\local + C_8 \frac{\sigma}{\lammin^{\star}} \sqrt{\frac{ d \log d}{p}}   \) \norm{\bm{U}^\star}_{2, \infty};
\end{align}
\end{subequations} 	
for some quantity $\EE_\local>0$ (depending possibly on $\mu$ and $r$) and some constants $C_1,\cdots,C_8>0$.  
These exist a few straightforward consequences of  the hypotheses \eqref{hyp:t_step}, which we record in the following lemma. 
\begin{lemma}
\label{lemma:hyp_aux}
Assume that the hypotheses \eqref{hyp:t_step} hold, then we have
\begin{align}
\label{U_loo_frob_loss}
\big\| \bm{U}^{t, \m} - \bm{U}^\star \big\|_{\frob} & \leq  \( 2 C_1 \rho^t \EE_\local + 2 C_2 \frac{\sigma}{\lammin^{\star}} \sqrt{\frac{ d \log d }{p}}   \) \norm{\bm{U}^\star}_{\frob}, \\
\label{U_loo_2inf_loss}
\big\| \bm{U}^{t, \m} - \bm{U}^\star \big\|_{2, \infty} & \leq \( \( C_3 + C_5 \) \rho^t \EE_\local + \( C_4 + C_6 \) \frac{\sigma}{\lammin^{\star}} \sqrt{\frac{ d \log d}{p}}   \) \norm{\bm{U}^\star}_{2, \infty} .
\end{align}
\end{lemma}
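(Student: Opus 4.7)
The plan is to derive both bounds by straightforward applications of the triangle inequality to the decomposition
\[
\bm{U}^{t, (m)} - \bm{U}^{\star} \;=\; \bigl(\bm{U}^{t, (m)} - \bm{U}^{t}\bigr) \;+\; \bigl(\bm{U}^{t} - \bm{U}^{\star}\bigr),
\]
invoking only the four inductive hypotheses \eqref{hyp:U_frob_loss}--\eqref{hyp:U_loo_row_loss} together with the elementary norm inequality $\|\bm{U}^\star\|_{2,\infty} \le \|\bm{U}^\star\|_{\frob}$.

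For the first bound, I would apply $\|\cdot\|_{\frob}$ to the above decomposition and then use \eqref{hyp:U_frob_loss} on $\|\bm{U}^{t} - \bm{U}^\star\|_{\frob}$ and \eqref{hyp:U_loo_frob_diff} on $\|\bm{U}^{t,(m)} - \bm{U}^t\|_{\frob}$. The bound from \eqref{hyp:U_loo_frob_diff} comes naturally equipped with a factor $\|\bm{U}^\star\|_{2,\infty}$, which I would convert to $\|\bm{U}^\star\|_{\frob}$ via $\|\bm{U}^\star\|_{2,\infty} \le \|\bm{U}^\star\|_{\frob}$. Adding the two bounds and using the convention (which the paper enforces when choosing the inductive constants) that $C_5 \le C_1$ and $C_6 \le C_2$, the sum is controlled by $(2C_1\rho^t\EE_\local + 2C_2\frac{\sigma}{\lammin^\star}\sqrt{d\log d/p})\|\bm{U}^\star\|_{\frob}$, as desired.

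For the second bound, I would analyze row by row. For the specific row $j=m$, hypothesis \eqref{hyp:U_loo_row_loss} directly yields the row-$m$ estimate, and one can absorb $C_7 \le C_3+C_5$, $C_8 \le C_4+C_6$ under the standard constant conventions. For every other row $j\ne m$, I would bound
\[
\bigl\|(\bm{U}^{t,(m)} - \bm{U}^\star)_{j,:}\bigr\|_2 \;\le\; \bigl\|(\bm{U}^{t,(m)} - \bm{U}^t)_{j,:}\bigr\|_2 + \bigl\|(\bm{U}^t - \bm{U}^\star)_{j,:}\bigr\|_2 \;\le\; \|\bm{U}^{t,(m)} - \bm{U}^t\|_{\frob} + \|\bm{U}^t - \bm{U}^\star\|_{2,\infty},
\]
then apply \eqref{hyp:U_loo_frob_diff} to the first summand and \eqref{hyp:U_2inf_loss} to the second; both hypotheses carry the factor $\|\bm{U}^\star\|_{2,\infty}$, and adding produces exactly $((C_3+C_5)\rho^t\EE_\local + (C_4+C_6)\frac{\sigma}{\lammin^\star}\sqrt{d\log d/p})\|\bm{U}^\star\|_{2,\infty}$. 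Taking the maximum of the row-$m$ contribution and the contribution from other rows yields the stated $\ell_{2,\infty}$ bound.

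There is no real obstacle here: the lemma is an immediate bookkeeping consequence of the hypotheses, designed to package together the two natural $\ell_{\frob}$ and $\ell_{2,\infty}$ estimates on the leave-one-out iterates $\bm{U}^{t,(m)}$ (as opposed to the original iterates $\bm{U}^t$) that will subsequently be needed when invoking Lemma~\ref{lemma:RIC} along the leave-one-out trajectory. The only subtlety worth flagging is the conversion between $\|\bm{U}^\star\|_{\frob}$ and $\|\bm{U}^\star\|_{2,\infty}$ scaling factors, which is innocuous because the rightmost norm in \eqref{hyp:U_loo_frob_diff} is the smaller one.
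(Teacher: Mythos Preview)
Your proposal is correct and follows the same triangle-inequality strategy as the paper. For the second bound, the paper is slightly more direct: it simply applies the triangle inequality in the $\|\cdot\|_{2,\infty}$ norm globally, bounding $\|\bm{U}^{t,(m)}-\bm{U}^t\|_{2,\infty}\le\|\bm{U}^{t,(m)}-\bm{U}^t\|_{\frob}$ and then invoking only \eqref{hyp:U_2inf_loss} and \eqref{hyp:U_loo_frob_diff}; your row-by-row split (treating $j=m$ separately via \eqref{hyp:U_loo_row_loss}) works but is unnecessary, and it forces you to assume extra constant relations like $C_7\le C_3+C_5$ that the paper's argument avoids.
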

\begin{proof}
See Appendix~\ref{pf:hyp_aux}.
\end{proof}

Our proof for the hypotheses \eqref{hyp:t_step} is inductive in nature: we would like to show that if the hypotheses in \eqref{hyp:t_step} hold for the $t$-th iteration, then they continue to be valid  for the $(t+1)$-th iteration.  
We shall justify each of the above hypotheses inductively through the following lemmas. 

\begin{lemma}
\label{lemma:U_frob_loss}
Suppose that  
\begin{align*}
p \geq c_0 \frac{\mu^3 r^2 \log^3 d}{d^{3/2}}, \quad \frac{\sigma}{\lambda_{\min}^\star} \leq c_1 \frac{\sqrt{p}}{\mu^{3/2} r \sqrt{d \log d}} , \quad \text{and} \quad r \leq c_2 \sqrt{\frac{d}{\mu}}
\end{align*}
for some sufficiently large constant $c_0 > 0$ and some sufficiently small constant $c_1, c_2 > 0$.
Assume that the hypotheses~\eqref{hyp:t_step} hold for the $t$-th iteration and $\EE_\local \leq c_3 / \big( \mu^{3/2} r \big)$ for some sufficiently small constant $c_3> 0$. Then with probability at least $1-O(d^{-10})$,
\begin{align}
\label{claim:U_frob_loss}
\left\| \bm{U}^{t+1} - \bm{U}^\star \right\|_\frob \leq \( C_1 \rho^{t+1} \EE_\local + C_2 \frac{\sigma}{\lammin^{\star}} \sqrt{\frac{ d \log d}{p}}   \) \norm{\bm{U}^\star}_{\frob} ,
\end{align}
provided that $0 < \eta \leq  \lammin^{\star 4/3 } / \big(32 \lammax^{\star 8 / 3} \big)$, $1 - \big( \lammin^{\star 4/3}  / 5 \big) \eta \leq \rho < 1$, and $C_2$ is sufficiently large.
\end{lemma}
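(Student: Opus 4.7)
The plan is to close the induction through a contraction-plus-noise recursion made possible by the local strong convexity/smoothness of Lemma~\ref{lemma:RIC}. Using the identity \eqref{eq:gradient-grad-clean} together with the fact that $\nabla f_\clean(\bm{U}^\star) = 0$ (since $\sum_i \bm{u}_i^{\star\otimes 3} = \bm{T}^\star$), and invoking the fundamental theorem of calculus on $\nabla f_\clean$ along the segment from $\bm{U}^\star$ to $\bm{U}^t$, the GD update becomes
\begin{align*}
  \vector\bigl(\bm{U}^{t+1} - \bm{U}^\star\bigr)
  = (\bm{I} - \eta\bm{H}_t)\,\vector\bigl(\bm{U}^t - \bm{U}^\star\bigr)
  + \frac{\eta}{p}\,\vector\bigl(\mathcal{P}_\Omega(\bm{E}) \times_1^\seq \bm{U}^t \times_2^\seq \bm{U}^t\bigr),
\end{align*}
where $\bm{H}_t := \int_0^1 \nabla^2 f_\clean\bigl(\bm{U}^\star + \tau(\bm{U}^t - \bm{U}^\star)\bigr)\,d\tau$ is the averaged Hessian of the clean loss along the interpolation path.

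The next step is to verify that Lemma~\ref{lemma:RIC} applies uniformly over this path. The inductive hypotheses \eqref{hyp:U_frob_loss}--\eqref{hyp:U_2inf_loss}, together with the smallness $\EE_\local \leq c_3/(\mu^{3/2}r)$ and the noise assumption $\sigma/\lammin^\star \leq c_1\sqrt{p}/(\mu^{3/2} r\sqrt{d\log d})$, force both $\|\bm{U}^t - \bm{U}^\star\|_F / \|\bm{U}^\star\|_F$ and $\|\bm{U}^t - \bm{U}^\star\|_{2,\infty}/\|\bm{U}^\star\|_{2,\infty}$ to stay below some $\delta \ll 1/(\mu^{3/2}r)$, and the same bound holds for any convex combination of $\bm{U}^\star$ and $\bm{U}^t$. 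Lemma~\ref{lemma:RIC} then sandwiches $\tfrac12\lammin^{\star 4/3}\bm{I} \preceq \bm{H}_t \preceq 4\lammax^{\star 4/3}\bm{I}$. The stepsize constraint $\eta \leq \lammin^{\star 4/3}/(32\lammax^{\star 8/3})$ guarantees $4\eta\lammax^{\star 4/3} \leq 1/2$, so
\begin{align*}
  \|\bm{I} - \eta\bm{H}_t\|_{\op} \leq 1 - \tfrac{1}{2}\eta\lammin^{\star 4/3} \leq \rho.
\end{align*}

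What remains is to bound the noise residual $\bm{N}_t := p^{-1}\mathcal{P}_\Omega(\bm{E}) \times_1^\seq \bm{U}^t \times_2^\seq \bm{U}^t$ in Frobenius norm. Splitting $\bm{U}^t = \bm{U}^\star + (\bm{U}^t - \bm{U}^\star)$ and expanding by bilinearity isolates a leading term plus cross/cubic remainders. For the leading term, the $s$-th column $p^{-1}\mathcal{P}_\Omega(\bm{E})\times_1 \bm{u}_s^\star \times_2 \bm{u}_s^\star$ is a sum of independent sub-Gaussian variables with per-entry variance $p^{-1}\sigma^2\|\bm{u}_s^\star\|_2^4$; a Bernstein bound followed by a union bound over the $rd$ entries yields
\begin{align*}
  \bigl\|p^{-1}\mathcal{P}_\Omega(\bm{E}) \times_1^\seq \bm{U}^\star \times_2^\seq \bm{U}^\star\bigr\|_F
  \;\lesssim\; \sigma\sqrt{\tfrac{d\log d}{p}}\,\lammax^{\star 2/3}\,\|\bm{U}^\star\|_F
\end{align*}
with probability at least $1 - O(d^{-10})$. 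The cross and cubic remainders are dispatched by pulling out an operator-norm concentration of $p^{-1}\mathcal{P}_\Omega(\bm{E})\times_3 \bm{v}$ (matrix Bernstein) and multiplying against the inductive smallness of $\|\bm{U}^t - \bm{U}^\star\|_F$; under the assumed sample and noise regimes they are a small fraction of the leading piece.

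Combining contraction and noise gives the scalar recursion
\begin{align*}
  \|\bm{U}^{t+1} - \bm{U}^\star\|_F
  \leq \rho\,\|\bm{U}^t - \bm{U}^\star\|_F
  + C\,\eta\,\sigma\sqrt{\tfrac{d\log d}{p}}\,\lammax^{\star 2/3}\,\|\bm{U}^\star\|_F.
\end{align*}
Under $\kappa \asymp 1$ the ratio $\lammax^{\star 2/3}/\lammin^{\star 4/3}$ is of order $1/\lammin^\star$, so iterating this recursion with hypothesis \eqref{hyp:U_frob_loss} at step $t$ and choosing $C_2$ large enough absorbs the geometric series and reproduces the claimed bound. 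The main obstacle is the noise step: since $\bm{U}^t$ is built from the same noise realization that drives $\bm{N}_t$, the cross-terms cannot be handled by a naive $\varepsilon$-net over $\bm{U}^t$. The resolution is to route the dependence through a $\bm{U}^t$-independent object -- either a uniform operator-level control of $p^{-1}\mathcal{P}_\Omega(\bm{E})$, or the leave-one-out surrogate $\bm{U}^{t,(m)}$ of Algorithm~\ref{alg:init_loo}, which is statistically independent of the relevant entries of $\bm{E}$ and approximates $\bm{U}^t$ via \eqref{hyp:U_loo_frob_diff}.
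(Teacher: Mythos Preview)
Your contraction step matches the paper exactly: the fundamental theorem of calculus along the segment from $\bm{U}^\star$ to $\bm{U}^t$, followed by Lemma~\ref{lemma:RIC} on the averaged Hessian, giving $\|\bm{I}-\eta\bm{H}_t\|\le 1-\tfrac14\eta\lammin^{\star4/3}\le\rho$ (the paper reaches the factor $1/4$ by squaring first; your direct operator-norm bound with $1/2$ is equally valid).

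Where you diverge is the noise term. You expand $\bm{U}^t$ around $\bm{U}^\star$; the paper instead expands each \emph{entry} $\big(\PP_\Omega(\bm{E})\times_1\bm{u}_s^t\times_2\bm{u}_s^t\big)_m$ around the leave-one-out iterate $\bm{u}_s^{t,(m)}$. The paper's leading piece $\bm{u}_s^{t,(m)\top}(\PP_\Omega(\bm{E}))_{:,:,m}\bm{u}_s^{t,(m)}$ is conditionally independent of the $m$-th slice (Bernstein applies directly), while the cross terms carry only $\|\bm{U}^t-\bm{U}^{t,(m)}\|_{\frob}$, which by \eqref{hyp:U_loo_frob_diff} lives on the $\|\bm{U}^\star\|_{2,\infty}$-scale and is paired with the \emph{slice} operator norm $\|(\PP_\Omega(\bm{E}))_{:,:,m}\|\lesssim\sigma(\sqrt{dp}+\log d)$ --- no extra log factors. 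Your route is legitimate: the $\bm{U}^\star$-leading term is deterministic-coefficient Bernstein, and the linear cross terms are handled by Lemma~\ref{lemma:T_loss_times3_op_norm} with the fixed vector $\bm{u}_s^\star$. But the cubic remainder (both factors random) forces you to the uniform tensor bound $\|p^{-1}\PP_\Omega(\bm{E})\|$ from Corollary~\ref{cor:bound-PE}, which carries an extra $\log^{5/2}d$; this is still absorbed because that term is quadratic in $\|\bm{U}^t-\bm{U}^\star\|_{\frob}\ll\|\bm{U}^\star\|_{\frob}$, but the paper's slice-wise LOO decomposition is tighter and interlocks naturally with the four-hypothesis induction.

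Two minor slips: your leading-term exponent should be $\lammax^{\star1/3}$, not $\lammax^{\star2/3}$ (per-entry variance is $\sigma^2 p^{-1}\|\bm{u}_s^\star\|_2^4$, so after summing $rd$ entries you get $\sigma\sqrt{d\log d/p}\,\lammax^{\star1/3}\|\bm{U}^\star\|_{\frob}$); and the matrix you slice for the cross term is $p^{-1}\PP_\Omega(\bm{E})\times_2\bm{u}_s^\star$ (or $\times_1$), not $\times_3\bm{v}$.
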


\begin{proof}
See Appendix~\ref{pf:U_frob_loss}.
\end{proof}

\begin{lemma}
\label{lemma:U_loo_frob_diff}
Suppose that  
\begin{align*}
p \geq c_0 \frac{\mu^3 r^2 \log^3 d}{d^{3/2}}, \quad \frac{\sigma}{\lambda_{\min}^\star} \leq c_1 \frac{\sqrt{p}}{\mu^{3/2} r \sqrt{d \log d}} , \quad \text{and} \quad r \leq c_2 \sqrt{\frac{d}{\mu}}
\end{align*}
for some sufficiently large constant $c_0 > 0$ and some sufficiently small constant $c_1, c_2 > 0$.
Assume that the hypotheses~\eqref{hyp:t_step} hold for the $t$-th iteration and $\EE_\local \leq c_3 / \big( \mu^{3/2} r \big)$ for some sufficiently small constant $c_3> 0$. Then with probability at least $1-O(d^{-10})$, one has
\begin{align}
\label{claim:U_loo_frob_diff}
\big\| \bm{U}^{t+1, \m} - \bm{U}^{t+1} \big\|_{\frob} \leq  \( C_5 \rho^{t+1} \EE_\local + C_6 \frac{\sigma}{\lammin^{\star}} \sqrt{\frac{ d \log d}{p}}   \) \norm{\bm{U}^\star}_{2, \infty} ,
\end{align}
provided that $0 < \eta \leq  \lammin^{\star 4/3 } / \big(32 \lammax^{\star 8 / 3} \big)$, $1 -  \big( \lambda_{\min}^{\star 4/3} / 5 \big) \eta \leq \rho < 1$ and $C_6$ is sufficiently large.
\end{lemma}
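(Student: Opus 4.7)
\textbf{Proof plan for Lemma~\ref{lemma:U_loo_frob_diff}.} The plan is to run the standard leave-one-out contraction argument: decompose the difference of the one-step updates into a ``noiseless mean-value contraction'' plus several residual terms, apply the local strong convexity/smoothness result (Lemma~\ref{lemma:RIC}) to the former, and control each residual using the independence between $\bm{U}^{t,(m)}$ and the randomness supported on the ``$m$-slice'' $\{(i,j,k)\in\Omega:i=m\text{ or }j=m\text{ or }k=m\}$ together with the noise.

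More concretely, start from
\begin{align*}
\bm{U}^{t+1}-\bm{U}^{t+1,(m)} &= \bm{U}^t-\bm{U}^{t,(m)} - \eta\bigl[\nabla f(\bm{U}^t)-\nabla f^{(m)}(\bm{U}^{t,(m)})\bigr],
\end{align*}
and insert $\nabla f_{\clean}(\bm{U}^t)$ and $\nabla f_{\clean}(\bm{U}^{t,(m)})$ to split this as
\begin{align*}
&\underbrace{\bm{U}^t-\bm{U}^{t,(m)}-\eta\bigl[\nabla f_{\clean}(\bm{U}^t)-\nabla f_{\clean}(\bm{U}^{t,(m)})\bigr]}_{\text{(I)}} \\
&\qquad - \eta\underbrace{\bigl[\nabla f(\bm{U}^t)-\nabla f_{\clean}(\bm{U}^t)\bigr]}_{\text{(II): noise gradient at }\bm{U}^t}
+ \eta\underbrace{\bigl[\nabla f^{(m)}(\bm{U}^{t,(m)})-\nabla f_{\clean}(\bm{U}^{t,(m)})\bigr]}_{\text{(III): leave-one-out residual}}.
\end{align*}
For (I), apply the fundamental theorem of calculus to write the bracket as $\int_0^1 \nabla^2 f_{\clean}(\bm{U}(\tau))\,\mathsf{vec}(\bm{U}^t-\bm{U}^{t,(m)})\,d\tau$ with $\bm{U}(\tau)$ interpolating between $\bm{U}^t$ and $\bm{U}^{t,(m)}$; the hypotheses~\eqref{hyp:t_step} and Lemma~\ref{lemma:hyp_aux} put both endpoints (and hence the whole segment) in the region where Lemma~\ref{lemma:RIC} applies, so $\|\text{(I)}\|_{\mathrm{F}}\leq (1-\eta\lambda_{\min}^{\star 4/3}/5)\|\bm{U}^t-\bm{U}^{t,(m)}\|_{\mathrm{F}}$, which contracts at the desired rate $\rho$.

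The core of the argument is bounding (II) and (III). Using \eqref{eq:gradient-grad-clean}, term (II) equals $-p^{-1}\mathcal{P}_{\Omega}(\bm{E})\times_1^{\seq}\bm{U}^t\times_2^{\seq}\bm{U}^t$, which will be controlled by a Bernstein/matrix-concentration bound on $p^{-1}\mathcal{P}_{\Omega}(\bm{E})\times_1\bm{u}_s^t\times_2\bm{u}_s^t$ together with the inductive $\ell_{2,\infty}$ bound on $\bm{U}^t$; this produces a contribution of order $(\sigma/\lambda_{\min}^\star)\sqrt{d\log d/p}\,\|\bm{U}^\star\|_{2,\infty}$, as desired. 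Term (III) is the genuinely leave-one-out piece: by construction it involves the sampling noise and the Bernoulli mask only on the $m$-slice, and from \eqref{grad_f_loo} one has
\begin{align*}
\text{(III)} &= \Bigl[\bigl(\tfrac1p\mathcal{P}_{\Omega_m}-\mathcal{P}_m\bigr)\bigl(\sum\nolimits_s(\bm{u}_s^{t,(m)})^{\otimes 3}-\bm{T}^\star\bigr) - \tfrac1p\mathcal{P}_{\Omega_{-m}}(\bm{E})\Bigr]\times_1^{\seq}\bm{U}^{t,(m)}\times_2^{\seq}\bm{U}^{t,(m)}.
\end{align*}
Here we crucially exploit that $\bm{U}^{t,(m)}$ is statistically independent of $\mathcal{P}_{\Omega_m}$ and of $\{E_{j,k,l}\}$ on the $m$-slice, so standard Bernstein-type concentration and sub-Gaussian tail bounds (with $\bm{U}^{t,(m)}$ treated as a fixed low-rank factor) yield sharp bounds of the form $\sqrt{\log d/(pd)}\,\|\cdots\|_{2,\infty}\cdot\|\bm{U}^{t,(m)}\|_{\mathrm{op}}^2$ plus a noise contribution; combining with Lemma~\ref{lemma:hyp_aux} to bound $\|\bm{U}^{t,(m)}\|_{\mathrm{op}}\lesssim\lambda_{\max}^{\star 1/3}$ and with the sample complexity assumption converts this into a $\|\bm{U}^\star\|_{2,\infty}$-scaled bound.

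Finally, combining (I)--(III) gives
\begin{align*}
\|\bm{U}^{t+1}-\bm{U}^{t+1,(m)}\|_{\mathrm{F}} &\leq \rho\,\|\bm{U}^t-\bm{U}^{t,(m)}\|_{\mathrm{F}} + \eta\,\Bigl(o(\EE_{\local}\rho^t) + O\bigl(\tfrac{\sigma}{\lambda_{\min}^\star}\sqrt{\tfrac{d\log d}{p}}\bigr)\Bigr)\|\bm{U}^\star\|_{2,\infty},
\end{align*}
and plugging in the inductive hypothesis \eqref{hyp:U_loo_frob_diff} for step $t$ closes the induction with constants $C_5,C_6$ chosen sufficiently large (and $C_6$ absorbing the noise-term constant from (II) and (III)). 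The main obstacle will be term (III): we must carry out the probabilistic bound of the slice-perturbation tensor acting by the double contraction $\times_1^{\seq}\bm{U}^{t,(m)}\times_2^{\seq}\bm{U}^{t,(m)}$ so that the result scales with $\|\bm{U}^\star\|_{2,\infty}$ (rather than $\|\bm{U}^\star\|_{\mathrm{F}}$), which is exactly why the leave-one-out device is introduced and why the bound must use the row-wise incoherence of $\bm{U}^{t,(m)}$.
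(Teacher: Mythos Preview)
Your plan for the contraction term (I) is fine and matches the paper. The gap is in how you handle the noise. You propose to bound (II) $=-p^{-1}\mathcal{P}_\Omega(\bm{E})\times_1^{\seq}\bm{U}^t\times_2^{\seq}\bm{U}^t$ on its own and claim it is of order $(\sigma/\lambda_{\min}^\star)\sqrt{d\log d/p}\,\|\bm{U}^\star\|_{2,\infty}$. This is not true: the full noise gradient has Frobenius norm of order $(\sigma/\lambda_{\min}^\star)\sqrt{d\log d/p}\,\|\bm{U}^\star\|_{\mathrm{F}}$ (this is exactly the bound \eqref{eq:U_frob_loss_alpha2} established in the proof of Lemma~\ref{lemma:U_frob_loss}), which is a factor $\sqrt{d/\mu}$ too large. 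Likewise, the noise piece hidden in your (III) is $-p^{-1}\mathcal{P}_{\Omega_{-m}}(\bm{E})\times_1^{\seq}\bm{U}^{t,(m)}\times_2^{\seq}\bm{U}^{t,(m)}$, which lives on the \emph{complement} of the $m$-slice, so the independence you invoke does not help and this term is also of order $\|\bm{U}^\star\|_{\mathrm{F}}$. Bounding (II) and (III) separately therefore cannot close the induction at the $\|\bm{U}^\star\|_{2,\infty}$ scale.

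What actually works (and what the paper does) is to rearrange the noise contributions so that the two large pieces cancel. Concretely, instead of inserting $\nabla f_{\clean}$ at both $\bm{U}^t$ and $\bm{U}^{t,(m)}$, insert $\nabla f$ at $\bm{U}^{t,(m)}$ first: write $\nabla f(\bm{U}^t)-\nabla f^{(m)}(\bm{U}^{t,(m)}) = [\nabla f(\bm{U}^t)-\nabla f(\bm{U}^{t,(m)})] + [\nabla f(\bm{U}^{t,(m)})-\nabla f^{(m)}(\bm{U}^{t,(m)})]$. The first bracket yields the clean contraction (your (I)) plus the \emph{difference} $p^{-1}\bigl[\mathcal{P}_\Omega(\bm{E})\times_1^{\seq}\bm{U}^t\times_2^{\seq}\bm{U}^t - \mathcal{P}_\Omega(\bm{E})\times_1^{\seq}\bm{U}^{t,(m)}\times_2^{\seq}\bm{U}^{t,(m)}\bigr]$, which is controlled by $\|\mathcal{P}_\Omega(\bm{E})\|\cdot\|\bm{U}^t-\bm{U}^{t,(m)}\|_{\mathrm{F}}$ and hence scales with $\|\bm{U}^\star\|_{2,\infty}$ via the inductive hypothesis~\eqref{hyp:U_loo_frob_diff}. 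The second bracket is supported \emph{only} on the $m$-slice: it splits into the sampling perturbation $(p^{-1}\mathcal{P}_{\Omega_m}-\mathcal{P}_m)(\bm{\Delta}_{\bm T}^{t,(m)})$ and the slice noise $p^{-1}\mathcal{P}_{\Omega_m}(\bm{E})$, both contracted against $\bm{U}^{t,(m)}$; here the independence of $\bm{U}^{t,(m)}$ from $\Omega_m$ and from the $m$-slice noise is genuinely available, and because only one slice is involved these terms do come out at the $\|\bm{U}^\star\|_{2,\infty}$ scale. Once you reorganize the noise this way, the rest of your outline goes through.
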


\begin{proof}
See Appendix~\ref{pf:U_loo_frob_diff}.
\end{proof}

\begin{lemma}
\label{lemma:U_loo_row_loss}
Suppose that  
\begin{align*}
p \geq c_0 \frac{\mu^3 r^2 \log^3 d}{d^{3/2}}, \quad \frac{\sigma}{\lambda_{\min}^\star} \leq c_1 \frac{\sqrt{p}}{\mu^{3/2} r \sqrt{d \log d}} , \quad \text{and} \quad r \leq c_2 \sqrt{\frac{d}{\mu}} 
\end{align*}
for some sufficiently large constant $c_0 > 0$ and some sufficiently small constant $c_1, c_2 > 0$.
Assume that the hypotheses~\eqref{hyp:t_step} hold for the $t$-th iteration and $\EE_\local \leq c_3 / \big( \mu^{3/2} r \big)$ for some sufficiently small constant $c_3> 0$. Then with probability at least $1-O(d^{-10})$, one has
\begin{align}
\label{claim:U_loo_row_loss}
\left\| \big( \bm{U}^{t+1, \m} - \bm{U}^{\star} \big)_{m, :} \right\|_{2} \leq  \( C_7 \rho^{t+1} \EE_\local + C_8 \frac{\sigma}{\lammin^{\star}} \sqrt{\frac{ d \log d}{p}}   \) \norm{\bm{U}^\star}_{2, \infty} ,
\end{align}
provided that $0 < \eta \leq  \lammin^{\star 4/3 } / \big(32 \lammax^{\star 8 / 3} \big)$, $1 - \big( \lambda_{\min}^{\star 4/3} / 5 \big) \eta  \leq \rho < 1$, $C_7$ and $C_8$ are sufficiently large.
\end{lemma}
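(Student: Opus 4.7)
\medskip

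\noindent\textbf{Proof proposal for Lemma~\ref{lemma:U_loo_row_loss}.}
The cornerstone of the argument is the observation that, by the very design of the leave-one-out loss \eqref{eq:defn-fm-loo}, the $m$-th row of $\grad f^{\m}(\bm{U}^{t,\m})$ carries no randomness whatsoever. Indeed, the gradient contribution from the $\PP_{\Omega_{-m}}$-part of \eqref{grad_f_loo} equals $\frac{1}{p}\PP_{\Omega_{-m}}(\cdots)\times_1 \bm{u}_s \times_2 \bm{u}_s$, and its $m$-th entry $\sum_{i,j}[\PP_{\Omega_{-m}}(\cdots)]_{i,j,m}\,u_{s,i}u_{s,j}$ vanishes because every triple of the form $(i,j,m)$ is excluded from the support of $\PP_{\Omega_{-m}}$. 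Consequently, writing $\bm{H}^{t,\m}:=\bm{U}^{t,\m}-\bm{U}^\star$ and letting $f_{\mathsf{pop}}(\bm{U}):=\frac{1}{6}\|\sum_{s} \bm{u}_s^{\ot 3}-\bm{T}^\star\|_\frob^2$ be the full-sample noiseless loss, the row-$m$ update reduces to the deterministic recursion
\begin{align*}
\big(\bm{H}^{t+1,\m}\big)_{m,:} \;=\; \big(\bm{H}^{t,\m}\big)_{m,:} \;-\; \eta\,\big[\grad f_{\mathsf{pop}}(\bm{U}^{t,\m})\big]_{m,:}.
\end{align*}

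Since $\grad f_{\mathsf{pop}}(\bm{U}^\star)=\bm{0}$, I would apply the fundamental theorem of calculus to express $[\grad f_{\mathsf{pop}}(\bm{U}^{t,\m})]_{m,:}$ as the $m$-th row of the integrated Hessian $\int_0^1 \grad^2 f_{\mathsf{pop}}(\bm{U}^\star+\tau\bm{H}^{t,\m})\,\bm{H}^{t,\m}\,d\tau$. To invoke Lemma~\ref{lemma:RIC}, I would first leverage Lemma~\ref{lemma:hyp_aux} (i.e.\ the consequences \eqref{U_loo_frob_loss}--\eqref{U_loo_2inf_loss} of the inductive hypotheses) together with the smallness of $\EE_\local$ and the noise-to-signal ratio to guarantee that every point along the segment $\bm{U}^\star+\tau\bm{H}^{t,\m}$ meets the proximity condition \eqref{eq:property-U-cvx-smooth}, so that the integrated Hessian enjoys uniform local strong convexity and smoothness.

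The next step is to split $\bm{H}^{t,\m}=\bm{e}_m(\bm{H}^{t,\m})_{m,:}+\bm{H}^{t,\m}_{-m}$, where the second summand vanishes on row $m$, and to decompose the Hessian action accordingly into a self-coupling part and a cross-row part. Evaluating the quadratic form \eqref{hessian_qf} on the rank-one-row perturbation $\bm{e}_m(\bm{H}^{t,\m})_{m,:}$ and using the strong-convexity lower bound of Lemma~\ref{lemma:RIC}, a standard gradient-contraction computation will give
\begin{align*}
\Big\|\big(\bm{H}^{t,\m}\big)_{m,:} - \eta\,\big[\grad^2 f_{\mathsf{pop}}(\cdot)\cdot \bm{e}_m(\bm{H}^{t,\m})_{m,:}\big]_{m,:}\Big\|_2 \;\leq\; \Big(1-\tfrac{\lambda_{\min}^{\star 4/3}}{5}\eta\Big)\,\|\big(\bm{H}^{t,\m}\big)_{m,:}\|_2,
\end{align*}
which, under the assumed stepsize constraint, yields the geometric factor $\rho$ on the previous row-$m$ error.

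The heart of the argument—and the main obstacle—is controlling the cross-row contribution $[\grad^2 f_{\mathsf{pop}}(\cdot)\bm{H}^{t,\m}_{-m}]_{m,:}$. Expanding \eqref{hessian_qf} at the entry level, every Hessian entry coupling row $m$ to a different row $i$ carries a factor of either $u^\star_{s,m}$, $h^{t,\m}_{s,m}$, or a tensor entry $T^\star_{\cdot,\cdot,m}$, each of which is at most $O(\|\bm{U}^\star\|_{2,\infty})$ (up to low-order factors of $\lambda_{\max}^{\star 2/3}$), thanks to incoherence together with \eqref{U_loo_2inf_loss}. Combined with $\|\bm{H}^{t,\m}_{-m}\|_\frob\leq \|\bm{H}^{t,\m}\|_\frob$ and the Frobenius hypothesis \eqref{U_loo_frob_loss}, this produces a cross-coupling term of size $O\big(\eta\,\lambda_{\max}^{\star 4/3}\big(\rho^t \EE_\local+\frac{\sigma}{\lambda_{\min}^\star}\sqrt{d\log d/p}\big)\|\bm{U}^\star\|_{2,\infty}\big)$; the quadratic Hessian terms (those supplying an extra factor of $\bm{H}^{t,\m}$) are absorbed into this bound using the smallness of $\EE_\local\leq c_3/(\mu^{3/2}r)$. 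The tricky bookkeeping is ensuring that the trilinear term $2\langle \sum \bm{u}_s^{\ot 3}-\bm{T}^\star,\sum \bm{v}_s\ot\bm{v}_s\ot\bm{u}_s\rangle$ in \eqref{hessian_qf}, which does not vanish at $\bm{U}^\star$ on off-rank-one perturbations, contributes only cross terms of the same order; this requires carefully using the $\ell_{2,\infty}$ closeness of $\bm{U}^{t,\m}$ to $\bm{U}^\star$ so that the factor $\|\sum \bm{u}_s^{\ot 3}-\bm{T}^\star\|$ evaluated on slice $m$ is of the desired order. Adding the contraction from the self-coupling to the cross-coupling and choosing $C_7,C_8$ sufficiently large compared to the absolute constants produced above closes the induction and establishes \eqref{claim:U_loo_row_loss}.
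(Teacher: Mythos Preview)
Your starting observation is correct and is exactly the key point the paper exploits: on row $m$, the $\PP_{\Omega_{-m}}$-part of \eqref{grad_f_loo} contributes nothing, so the update is governed by the noiseless population gradient. From there, however, the paper takes a different route than your Hessian-integration-plus-row-splitting scheme. It introduces the surrogate
\[
\widehat{\bm{U}}^{t+1,\m} := \bm{U}^{t,\m} - \eta\,\bm{\Delta}^{t,\m}_{\bm{T}}\times_1^\seq \bm{U}^\star \times_2^\seq \bm{U}^\star,
\]
i.e.\ the same population tensor residual but contracted against $\bm{U}^\star$ rather than $\bm{U}^{t,\m}$. The analysis then splits into $\alpha_1=\|(\widehat{\bm{U}}^{t+1,\m}-\bm{U}^\star)_{m,:}\|_2$ and $\alpha_2=\|(\bm{U}^{t+1,\m}-\widehat{\bm{U}}^{t+1,\m})_{m,:}\|_2$. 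For $\alpha_1$ there is no Hessian at all: the paper just expands $(\bm{\Delta}^{t,\m}_{\bm{T}}\times_1\bm{u}_s^\star\times_2\bm{u}_s^\star)_m$ term by term, reads off a diagonal factor $\|\bm{u}_s^\star\|_2^4\,(\bm{\Delta}^{t,\m}_s)_m$ that supplies the contraction $(1-\tfrac{2}{3}\lambda_{\min}^{\star 4/3}\eta)$, and bounds the remaining inner-product terms using incoherence; each carries an explicit $\sqrt{\mu/d}$ or $\mu/d$ factor times $\|\bm{U}^{t,\m}-\bm{U}^\star\|_\frob$. For $\alpha_2$ one again expands directly and picks up an extra $\|\bm{U}^{t,\m}-\bm{U}^\star\|_\frob$ factor, making it lower order.

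Your approach should also work, but two points deserve care. First, Lemma~\ref{lemma:RIC} is stated for $f_\clean$ (the subsampled loss), whereas you are applying it to $f_{\mathsf{pop}}$; the population statement is of course easier and is essentially the $\alpha_4$-calculation inside the proof of Lemma~\ref{lemma:RIC}, but you should say so rather than cite the lemma verbatim. Second, your stated cross-coupling size $O\big(\eta\lambda_{\max}^{\star 4/3}(\rho^t\EE_\local+\tfrac{\sigma}{\lambda_{\min}^\star}\sqrt{d\log d/p})\|\bm{U}^\star\|_{2,\infty}\big)$ is a bit coarse: it is of the \emph{same} order as the contraction slack, so closing the induction relies entirely on the fact that the implicit constant multiplies $C_1,C_2$ (from \eqref{U_loo_frob_loss}) rather than $C_7,C_8$. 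That is correct, but the paper's direct expansion makes the mechanism more transparent by exhibiting the extra $\sqrt{\mu/d}$-type smallness in the cross terms, so the induction closes with $o(1)$ slack rather than by constant-tuning alone.
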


\begin{proof}
See Appendix~\ref{pf:U_loo_row_loss}.
\end{proof}

\begin{lemma}
\label{lemma:U_2inf_loss}
Suppose that  
\begin{align*}
p \geq c_0 \frac{\mu^3 r^2 \log^3 d}{d^{3/2}}, \quad \frac{\sigma}{\lambda_{\min}^\star} \leq c_1 \frac{\sqrt{p}}{\mu^{3/2} r \sqrt{d \log d}} , \quad \text{and} \quad r \leq c_2 \sqrt{\frac{d}{\mu}}
\end{align*}
for some sufficiently large constant $c_0 > 0$ and some sufficiently small constant $c_1, c_2 > 0$.
Assume that the hypotheses~\eqref{hyp:t_step} hold for the $t$-th iteration and $\EE_\local \leq c_3 / \big( \mu^{3/2} r \big)$ for some sufficiently small constant $c_3> 0$. Then with probability at least $1-O(d^{-10})$, one has
\begin{align}
\label{claim:U_2inf_loss}
\left\| \bm{U}^{t+1} - \bm{U}^\star \right\|_{2, \infty} \leq \( C_3 \rho^{t+1} \EE_\local + C_4 \frac{\sigma}{\lammin^{\star}} \sqrt{\frac{ d \log d}{p}}   \) \norm{\bm{U}^\star}_{2, \infty} ,
\end{align}
provided that $0 < \eta \leq  \lammin^{\star 4/3 } / \big(32 \lammax^{\star 8 / 3} \big)$, $1 - \big( \lambda_{\min}^{\star 4/3} / 5 \big) \eta  \leq \rho < 1$, $C_3 / \( C_5 + C_7 \)$ and $C_4 / \( C_6 + C_8 \)$ are both sufficiently large.
\end{lemma}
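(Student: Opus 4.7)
The plan is to bound the $\ell_{2,\infty}$ error row by row, using the leave-one-out iterates $\{\bm{U}^{t+1,(m)}\}_{1\leq m\leq d}$ introduced in Section~\ref{sec:LOO-sequence} as surrogates. Concretely, for each row index $m\in\{1,\dots,d\}$ we shall invoke the triangle inequality
\begin{align*}
\big\| \big(\bm{U}^{t+1} - \bm{U}^\star\big)_{m, :} \big\|_2
\;\leq\; \big\| \big(\bm{U}^{t+1} - \bm{U}^{t+1, (m)}\big)_{m, :} \big\|_2
\;+\; \big\| \big(\bm{U}^{t+1, (m)} - \bm{U}^\star\big)_{m, :} \big\|_2,
\end{align*}
and control the two summands by Lemmas~\ref{lemma:U_loo_frob_diff} and~\ref{lemma:U_loo_row_loss}, respectively. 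Taking the maximum over $m$ then converts these row-wise bounds into the desired $\ell_{2,\infty}$ estimate.

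For the first summand, trivially $\big\| (\bm{U}^{t+1} - \bm{U}^{t+1, (m)})_{m, :} \big\|_2 \leq \| \bm{U}^{t+1} - \bm{U}^{t+1, (m)} \|_{\mathrm{F}}$, so Lemma~\ref{lemma:U_loo_frob_diff} yields
\begin{align*}
\big\| \big(\bm{U}^{t+1} - \bm{U}^{t+1, (m)}\big)_{m, :} \big\|_2
\;\leq\; \Big( C_5 \rho^{t+1} \EE_\local + C_6 \tfrac{\sigma}{\lammin^{\star}} \sqrt{\tfrac{d \log d}{p}} \Big) \norm{\bm{U}^\star}_{2, \infty}.
\end{align*}
For the second summand, Lemma~\ref{lemma:U_loo_row_loss} directly gives
\begin{align*}
\big\| \big(\bm{U}^{t+1, (m)} - \bm{U}^\star\big)_{m, :} \big\|_2
\;\leq\; \Big( C_7 \rho^{t+1} \EE_\local + C_8 \tfrac{\sigma}{\lammin^{\star}} \sqrt{\tfrac{d \log d}{p}} \Big) \norm{\bm{U}^\star}_{2, \infty}.
\end{align*}
Summing and taking the maximum over $m$ produces the bound
\begin{align*}
\big\| \bm{U}^{t+1} - \bm{U}^\star \big\|_{2, \infty}
\;\leq\; \Big( (C_5+C_7)\rho^{t+1} \EE_\local + (C_6+C_8)\tfrac{\sigma}{\lammin^{\star}} \sqrt{\tfrac{d \log d}{p}} \Big) \norm{\bm{U}^\star}_{2, \infty}.
\end{align*}
The conclusion \eqref{claim:U_2inf_loss} then follows by choosing $C_3 \geq C_5 + C_7$ and $C_4 \geq C_6 + C_8$, consistent with the hypothesis that $C_3/(C_5+C_7)$ and $C_4/(C_6+C_8)$ are both sufficiently large.

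The only subtle point to check is that the permutation ambiguity does not get in the way: because the leave-one-out sequence $\{\bm{U}^{t,(m)}\}$ is run with the identity permutation fixed (inherited from $\bm{U}^{0,(m)}$ constructed in Algorithm~\ref{alg:init_loo_subspace}), the inductive hypothesis \eqref{hyp:U_loo_frob_diff} compares $\bm{U}^{t+1}$ and $\bm{U}^{t+1,(m)}$ columnwise without any re-ordering, so that taking the $m$-th row of the difference is legitimate. All the real work has already been done in proving Lemmas~\ref{lemma:U_loo_frob_diff} and~\ref{lemma:U_loo_row_loss}; this lemma is essentially a bookkeeping step that converts the Frobenius-norm leave-one-out proximity and the single-row leave-one-out accuracy into a uniform $\ell_{2,\infty}$ guarantee.
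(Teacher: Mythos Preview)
Your proof is correct and follows essentially the same route as the paper: a row-wise triangle inequality splitting $(\bm{U}^{t+1}-\bm{U}^\star)_{m,:}$ through $\bm{U}^{t+1,(m)}$, bounding the two pieces via Lemmas~\ref{lemma:U_loo_frob_diff} and~\ref{lemma:U_loo_row_loss}, and then taking the maximum over $m$. The only cosmetic difference is that you explicitly pass through $\|(\bm{U}^{t+1}-\bm{U}^{t+1,(m)})_{m,:}\|_2 \le \|\bm{U}^{t+1}-\bm{U}^{t+1,(m)}\|_{\mathrm F}$ and keep the exponent $\rho^{t+1}$ (matching the cited lemma statements), whereas the paper's displayed bound writes $\rho^t$ --- your version is in fact the cleaner one.
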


\begin{proof}
See Appendix~\ref{pf:U_2inf_loss}.
\end{proof}

The proofs of the above key lemmas are postponed to Appendix \ref{sec:proof-local-convergence}. 

\subsection{Analysis for initialization:~Part 1~(subspace estimation) }
\label{sec:analysis-subspace}

\subsubsection{Key results}

The aim of this subsection is to demonstrate that the subspace estimate $\bm{U}$ computed by Algorithm~\ref{alg:init} is sufficiently close to the space spanned by the true tensor factors. Given that the columns of $\bm{U}^\star=[\bm{u}_1^{\star},\cdots,\bm{u}_r^{\star}]$ are in general not orthogonal to each other, we shall define $\bm{U}_{\mathsf{orth}}^\star\in\mathbb{R}^{d\times r}$ as follows  (obtained by proper orthonormalization) :
\begin{align}
	\label{defn:U-orth}
	\bm{U}_{\mathsf{orth}}^{\star} :=
	{\bm{U}}^{\star}\big({\bm{U}}^{\star\top}{\bm{U}}^{\star}\big)^{-\frac{1}{2}}. \qquad 
\end{align}
This matrix $\bm{U}_{\mathsf{orth}}^{\star}$ reflects the rank-$r$ principal subspace of $\bm{A}^{\star}\bm{A}^{\star\top}=\sum_{i} \|\bm{u}_{i}^{\star}\|_{2}^{4}\bm{u}_{i}^{\star}\bm{u}_{i}^{\star\top} $, where we recall that $\bm{A}^{\star}\in \mathbb{R}^{d\times d^2}$ is the mode-1 matricization of $\bm{T}^{\star}$. In addition, we  define the rotation matrix
%
\begin{align}
	\label{defn:R-rotation}
	\bm{R}  := \argmin_{\bm{Q} \in \mathcal{O}^{r \times r}} ~\norm{\bm{U} \bm{Q} - \bm{U}^\star_\orth}_{\mathrm{F}},
\end{align}
where $\mathcal{O}^{r \times r}$ stands for the set of $r\times r$ orthonormal matrices. This can be viewed as the global rotation matrix that best aligns the two subspaces represented by $\bm{U}$ and $\bm{U}^{\star}_\orth$ respectively. 

Equipped with the above notation, we can invoke \cite[Corollary~1]{cai2019subspace} to arrive at the following lemma, which  upper bounds  the distance between our subspace estimate $\bm{U}$ and the ground truth $\bm{U}^\star_\orth$. 
\begin{lemma}
\label{lemma:eigsp_dist_op}
There exist some universal constants $c_{0},c_{1},c_{2}>0$
such that if \begin{align*}
	p\geq c_{0} \frac{ \mu^2 r \log^{2}d}{d^{3/2}}, \quad \frac{\sigma}{\lambda_{\min}^\star}\leq c_{1} \frac{\sqrt{p}}{ d^{3/4} \sqrt{\log d}}, \quad \text{and} \quad r\leq c_2 \sqrt{\frac{d}{\mu}},
		\end{align*}
	then with probability  $1- O \( d^{-10} \)$, the subspace estimate $\bm{U}$ computed by Algorithm \ref{alg:init} obeys
\begin{subequations}
\begin{align}
\left\Vert \bm{U}\bm{R}-\bm{U}_{\mathsf{orth}}^\star \right\Vert  & \lesssim \mathcal{E}_{\mathsf{se}},\label{claim:eigsp_dist_op}\\
\left\Vert \bm{U}\bm{R}-\bm{U}_{\mathsf{orth}}^\star \right\Vert _{2,\infty} & \lesssim \mathcal{E}_{\mathsf{se}} \sqrt{\frac{\mu r}{d}} ,\label{claim:thm:eigsp_dist_2inf}
\end{align}
\end{subequations} where $\bm{U}_{\mathsf{orth}}^\star$ and $\bm{R}$ are defined respectively in \eqref{defn:U-orth} and \eqref{defn:R-rotation}, and
\begin{align}
\mathcal{E}_{\mathsf{se}} :=\frac{\mu^2 r\log d}{d^{3/2}p}+\sqrt{\frac{\mu^2 r\log d}{d^{2}p}}+\frac{\sigma^{2}}{\lambda_{\min}^{\star2}}\frac{d^{3/2}\log d}{p}+\frac{\sigma }{\lambda_{\min}^{\star}}\sqrt{\frac{d\log d}{p}}+\frac{\mu r}{d}.
	\label{def:err-sp}
\end{align}
\end{lemma}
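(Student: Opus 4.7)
\medskip
\noindent\textbf{Proof plan.} The plan is to view the subspace estimation step as an instance of principal component analysis (PCA) for a rank-$r$ matrix corrupted by heteroscedastic, structured noise, and then invoke the general subspace perturbation machinery of \cite[Corollary~1]{cai2019subspace}. The starting observation is that, by the construction \eqref{eq:defn-A-subspace}--\eqref{definition:Astar},
\begin{align*}
\E[\bm{A}] = \bm{A}^\star = \sum_{i=1}^r \bm{u}_i^\star (\bm{u}_i^\star \ot \bm{u}_i^\star)^\top,
\qquad \bm{A}^\star \bm{A}^{\star\top} = \sum_{i=1}^r \|\bm{u}_i^\star\|_2^4\,\bm{u}_i^\star \bm{u}_i^{\star\top},
\end{align*}
so the top-$r$ eigenspace of $\bm{A}^\star \bm{A}^{\star\top}$ coincides with $\mathrm{span}(\bm{U}^\star)$, represented orthonormally by $\bm{U}^\star_\orth$ defined in \eqref{defn:U-orth}. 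Consequently, if we denote $\bm{B}^\star := \PP_\offdiag(\bm{A}^\star \bm{A}^{\star\top})$, then $\bm{B}$ from Algorithm~\ref{alg:init} can be written as
\begin{align*}
\bm{B} = \bm{B}^\star + \underbrace{\PP_\offdiag\!\big(\bm{A}\bm{A}^\top - \bm{A}^\star \bm{A}^{\star\top}\big)}_{=:\,\bm{\Delta}},
\end{align*}
and we wish to bound $\sin\Theta$-type distances between the top-$r$ eigenspace of $\bm{B}$ and that of $\bm{B}^\star$.

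The next step is to verify the two structural inputs required by \cite[Corollary~1]{cai2019subspace}: (i) a lower bound on the eigengap $\lambda_r(\bm{B}^\star) - \lambda_{r+1}(\bm{B}^\star)$, and (ii) incoherence of the eigenspace $\bm{U}^\star_\orth$. For (i), the diagonal removal $\PP_\offdiag$ perturbs each eigenvalue of $\bm{A}^\star\bm{A}^{\star\top}$ by at most $\max_{i} (\bm{A}^\star\bm{A}^{\star\top})_{ii} \lesssim \mu r \|\bm{U}^\star\|_{2,\infty}^4 \ll \lambda_{\min}^{\star 4/3}$ under the incoherence and rank assumptions, so $\lambda_r(\bm{B}^\star) \asymp \lambda_{\min}^{\star 4/3}$ and $\lambda_{r+1}(\bm{B}^\star)=0$. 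For (ii), incoherence of $\bm{U}^\star_\orth$ is inherited from the incoherence of $\{\bm{u}_i^\star\}$ and the near-orthogonality condition \eqref{asmp_corr}, which imply $\bm{U}^{\star\top}\bm{U}^\star$ is well-conditioned and hence $\|\bm{U}^\star_\orth\|_{2,\infty} \lesssim \sqrt{\mu r/d}\,\|\bm{U}^\star_\orth\|$.

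The remaining and most substantial step is to produce sharp probabilistic bounds on $\bm{\Delta}$. Decompose $\bm{A} - \bm{A}^\star = p^{-1}\mathsf{unfold}\big(\PP_\Omega(\bm{T}^\star) - p\bm{T}^\star\big) + p^{-1}\mathsf{unfold}(\PP_\Omega(\bm{E}))$, and expand
\begin{align*}
\bm{A}\bm{A}^\top - \bm{A}^\star\bm{A}^{\star\top} = (\bm{A}-\bm{A}^\star)\bm{A}^{\star\top} + \bm{A}^\star(\bm{A}-\bm{A}^\star)^\top + (\bm{A}-\bm{A}^\star)(\bm{A}-\bm{A}^\star)^\top.
\end{align*}
For each of the three pieces, I would bound both the operator norm (for \eqref{claim:eigsp_dist_op}) and the row-wise $\ell_2$ norm (for \eqref{claim:thm:eigsp_dist_2inf}); the diagonal-removal step is crucial here since the diagonal entries of $(\bm{A}-\bm{A}^\star)(\bm{A}-\bm{A}^\star)^\top$ carry a non-trivial bias of order $\mu r/(dp)$ from the second moment of the Bernoulli-sampling noise, which would otherwise dominate the signal. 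The operator-norm control comes from matrix Bernstein inequalities (and their tensor variants) together with incoherence, giving the sampling-noise terms $\mu^2 r \log d/(d^{3/2}p)$ and $\sqrt{\mu^2 r\log d/(d^2 p)}$ in \eqref{def:err-sp}, and the noise contributions involving $\sigma/\lambda_{\min}^\star$. The $\ell_{2,\infty}$ control is obtained either from the leave-one-out arguments already packaged in \cite[Corollary~1]{cai2019subspace} or from row-wise concentration.

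I expect the main obstacle to be the sharp control of the $\ell_{2,\infty}$ perturbation of $\bm{\Delta}$, especially the cross term $(\bm{A}-\bm{A}^\star)\bm{A}^{\star\top}$, where entrywise concentration must be combined with the statistical independence between the sparse sampling pattern and the structured matrix $\bm{A}^\star$. Once these row-wise bounds are in place, the conclusions \eqref{claim:eigsp_dist_op}--\eqref{claim:thm:eigsp_dist_2inf} follow by applying \cite[Corollary~1]{cai2019subspace} with the derived eigengap and perturbation bounds, and identifying the resulting constants with the composite error $\mathcal{E}_{\mathsf{se}}$ in \eqref{def:err-sp}; the last additive term $\mu r/d$ in $\mathcal{E}_{\mathsf{se}}$ captures the unavoidable bias introduced by working with $\PP_\offdiag(\bm{A}^\star\bm{A}^{\star\top})$ rather than $\bm{A}^\star\bm{A}^{\star\top}$ itself.
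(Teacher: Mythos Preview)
Your approach is essentially identical to the paper's: the lemma is obtained by directly invoking \cite[Corollary~1]{cai2019subspace} applied to $\bm{B}=\PP_{\offdiag}(\bm{A}\bm{A}^\top)$, after verifying the eigengap and incoherence inputs via Lemma~\ref{lemma:incoh}. Two small slips to correct in your write-up: the eigenvalues of $\bm{A}^\star\bm{A}^{\star\top}$ are of order $\lambda_i^{\star 2}$ (not $\lambda_{\min}^{\star 4/3}$), and the diagonal entries satisfy $(\bm{A}^\star\bm{A}^{\star\top})_{ii}=\|\bm{A}^\star_{i,:}\|_2^2\lesssim \mu r\lambda_{\max}^{\star 2}/d$ rather than $\mu r\|\bm{U}^\star\|_{2,\infty}^4$; neither affects the strategy.
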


In a nutshell, Lemma~\ref{lemma:eigsp_dist_op} asserts that: under our sample size, noise and rank conditions, Algorithm~\ref{alg:init} produces reliable estimates of the subspace spanned by the low-rank tensor factors $\{\bm{u}_i^{\star}\}_{1\leq i\leq r}$. The theorem quantifies the subspace distance in terms of both the spectral norm and  $\|\cdot\|_{2,\infty}$,  where the latter bound often reflects a considerably stronger sense of proximity compared to the former one.

As it turns out, in order to facilitate analysis for the subsequent stages, we need to introduce certain leave-one-out sequences as well,  which we detail in the next subsection.

\subsubsection{Leave-one-out sequences for subspace estimation}

The key idea of the leave-one-out analysis is to create auxiliary leave-one-out sequences that are (1) independent of a small fraction of the data; (2) sufficiently close to the true estimates.  
%
We introduce the following auxiliary tensor and $d\times d^2$-dimensional matrix for each $1\leq m \leq d$: 
\begin{align}
\label{def:T_loo} \bm{T}^{\m} & := \PP_{\Omega_{-m}} \( \bm{T} \) + p \PP_{m} \( \bm{T}^\star \) \in \R^{d \times d \times d}, \\
	\label{def:A_loo} \bm{A}^{\m} & := \text{mode-1 matricization of } \tfrac{1}{p}\bm{T}^{\m} .
\end{align}
%
%
By construction, $\bm{T}^{\m}$ and $\bm{A}^{\m}$ are independent of $\PP_{\Omega_m}\( \bm{E} \)$, where we recall that 
\begin{align}
	\Omega_{-m} &:= \{(i, j, k) \in \Omega \colon i \neq m \text{ and } j \neq m \text{ and } k \neq m \}, \\
	\Omega_m &:= \{(i, j, k) \in \Omega \colon i = m \text{ or } j = m \text{ or } k =m \}. 
\end{align}
%
%

We are now ready to introduce the auxiliary leave-one-out procedure for subspace estimation.   Similar to the matrix $\bm{B}$ in Algorithm~\ref{alg:init} (whose eigenspace serves as an estimate of the column space of $\bm{U}^\star$), we define an auxiliary matrix $\bm{B}^{\m} \in \mathbb{R}^{d\times d}$ as follows:
%
%
\begin{align}
\label{def:B_loo}
	\bm{B}^{\m} =   \PP_{\offdiag} \big( \bm{A}^{\m} \bm{A}^{\m \top} \big)  , 
\end{align}
where 
$\PP_{\offdiag}(\cdot)$ (as already defined in Section \ref{sec:subspace-estimate}) extracts out  off-diagonal entries from a matrix.  
%
The rationale is simple:  it can be easily verified that 
\begin{align}
	\label{eq:defn-Bstar-EB}
	\E \big[ \bm{B}^{\m} \big] =\bm{B}^\star - \PP_{\diag} \( \bm{B}^\star \), \qquad \bm{B}^\star := \bm{A}^\star \bm{A}^{\star\top},
\end{align}
where  $\PP_{\diag}(\cdot)$ extracts out the diagonal entries of the matrix. 
This gives hope that the eigenspace of $\bm{B}^{\m}$ is also a reliable estimate of the column space of $\bm{U}^\star$, provided that the diagonal entries of $\bm{B}^\star$ are sufficiently small.
Consequently, we shall compute $\bm{U}^{0, \m} \in \mathbb{R}^{d\times r}$ --- a matrix whose columns are the top-$r$ leading eigenvectors of $\bm{B}^{\m}$.  The procedure is summarized in Algorithm~\ref{alg:init_loo_subspace}.

\begin{algorithm}[hbt!]
\caption{The $m$-th leave-one-out sequence for spectral initialization}
\label{alg:init_loo_subspace}
\begin{algorithmic}[1]
  \State Let $\bm{U}^{\m} \bm{\Lambda}^{\m} \bm{U}^{\m \top}$ be the rank-$r$ eigen-decomposition of $\bm{B}^{\m}$ defined in \eqref{def:B_loo}.
  \State Generate the initial estimate $\bm{U}^{0, \m} \in \mathbb{R}^{d\times r}$ from $\bm{U}^{\m} \in \mathbb{R}^{d\times r}$ using Algorithm \ref{alg:localization_loo}. 
\end{algorithmic}
\end{algorithm}

The following lemma plays a crucial role in our analysis, which formalizes the fact that the leave-one-out version $\bm{U}^{\m}$ obtained by Algorithm~\ref{alg:init_loo_subspace} is extremely close to $\bm{U}$.
\begin{lemma}
\label{lemma:U_U_loo_dist_op}
There exist some universal constants $c_{0},c_{1},c_{2}>0$
such that if \begin{align*}
	p\geq c_{0} \frac{ \mu^2 r \log^{2}d}{d^{3/2}}, \quad \frac{\sigma}{\lambda_{\min}^\star}\leq c_{1} \frac{\sqrt{p}}{ d^{3/4} \sqrt{\log d}}, \quad \text{and} \quad r\leq c_2 \sqrt{\frac{d}{\mu}},
		\end{align*}
	then with probability  $1- O \( d^{-10} \)$, the subspace estimate $\bm{U}^{(m)}$ computed by Algorithm \ref{alg:init_loo_subspace} obeys
\begin{align}
	\big\| \bm{U} \bm{U}^{\top} -\bm{U}^{\m}\bm{U}^{\m \top} \big\|_{\mathrm{F}} & \lesssim \mathcal{E}_{\mathsf{loo}} \sqrt{\frac{\mu r}{d}}
\end{align}
simultaneously for all $1\leq m \leq  d$, where
\begin{align}
\mathcal{E}_{\mathsf{loo}} :=\frac{\mu^2 r  \log d}{d^{3/2}p}+\sqrt{\frac{\mu^2 r\log d}{d^{2}p}}+\frac{\sigma^{2}}{\lambda_{\min}^{\star2}}\frac{d^{3/2}\log d}{p}+\frac{\sigma }{\lambda_{\min}^{\star}}\sqrt{\frac{d\log d}{p}}.\label{def:err-loo}
\end{align}
\end{lemma}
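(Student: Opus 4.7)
The plan is to apply a Davis--Kahan-type perturbation inequality for the top-$r$ eigenspaces of $\bm{B}$ and $\bm{B}^{\m}$, and then exploit both the structural locality of the perturbation $\bm{B}-\bm{B}^{\m}$ and the (near) incoherence of the leave-one-out estimate $\bm{U}^{\m}$. Specifically, since $\bm{U}$ (resp.\ $\bm{U}^{\m}$) collects the top-$r$ eigenvectors of $\bm{B}$ (resp.\ $\bm{B}^{\m}$), a standard Davis--Kahan bound yields
\begin{align*}
\bigl\| \bm{U}\bm{U}^{\top} - \bm{U}^{\m}\bm{U}^{\m\top}\bigr\|_{\mathrm{F}}
\;\lesssim\; \frac{\bigl\|(\bm{B}-\bm{B}^{\m})\bm{U}^{\m}\bigr\|_{\mathrm{F}}}{\Delta},
\end{align*}
where $\Delta$ is an effective spectral gap between $\lambda_r(\bm{B})$ and $\lambda_{r+1}(\bm{B}^{\m})$. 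From \eqref{eq:defn-Bstar-EB} one has $\E[\bm{B}]=\E[\bm{B}^{\m}]=\bm{B}^{\star}-\PP_{\diag}(\bm{B}^{\star})$, whose $r$-th eigenvalue is of order $\lambda_{\min}^{\star 2}$ under our incoherence and well-conditionedness hypotheses; combining this with the concentration bounds already developed in the proof of Lemma~\ref{lemma:eigsp_dist_op} (via Weyl's inequality) gives $\Delta\gtrsim \lambda_{\min}^{\star 2}$.

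It then remains to bound $\|(\bm{B}-\bm{B}^{\m})\bm{U}^{\m}\|_{\mathrm{F}}$. Expanding
\begin{align*}
\bm{B}-\bm{B}^{\m} \;=\; \PP_{\offdiag}\!\bigl((\bm{A}-\bm{A}^{\m})\bm{A}^{\top}\bigr) \;+\; \PP_{\offdiag}\!\bigl(\bm{A}^{\m}(\bm{A}-\bm{A}^{\m})^{\top}\bigr),
\end{align*}
I observe that $\bm{A}-\bm{A}^{\m}$ --- the mode-$1$ matricization of $p^{-1}\PP_{\Omega_m}(\bm{T})-\PP_{m}(\bm{T}^{\star})$ --- has support restricted to entries whose original tensor index involves $m$. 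Concretely, its $m$-th row contains the entire row $m$ of $p^{-1}\bm{T}-\bm{T}^{\star}$, while any other row is supported only on those columns corresponding to tensor indices with second or third coordinate equal to $m$. I would decompose the perturbation into a purely row-$m$ part and two column-$m$ parts, then apply matrix Bernstein (together with the sub-Gaussian tail of the noise under Assumption~\ref{asmp:random-noise}) to control the spectral/Frobenius norms of each piece. Multiplying through by $\bm{U}^{\m}$, all row-$m$ contributions to $(\bm{B}-\bm{B}^{\m})\bm{U}^{\m}$ only excite the $m$-th row of $\bm{U}^{\m}$ and therefore pick up the factor $\|\bm{U}^{\m}_{m,:}\|_2$. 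Since $\bm{U}^{\m}$ is by construction independent of $\PP_{\Omega_m}(\bm{E})$ and of the sampling inside $\Omega_m$, the very same subspace-perturbation argument proving Lemma~\ref{lemma:eigsp_dist_op} (applied with $\bm{B}^{\m}$ replacing $\bm{B}$) together with $\|\bm{U}^{\star}_{\orth}\|_{2,\infty}\lesssim \sqrt{\mu r/d}$ shows $\|\bm{U}^{\m}_{m,:}\|_2\lesssim \sqrt{\mu r/d}$, which is exactly the factor appearing in the claim.

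The main obstacle, as is typical for leave-one-out arguments, lies in the cross terms in which $\bm{A}-\bm{A}^{\m}$ meets $\bm{A}$ (not $\bm{A}^{\m}$) --- since $\bm{A}$ retains the randomness associated with index $m$, it is not independent of $\bm{U}^{\m}$. I would resolve this by inserting an intermediate term $\bm{A}=\bm{A}^{\m}+(\bm{A}-\bm{A}^{\m})$, splitting the problematic factor into (i)~a product of an independent random matrix with $\bm{A}^{\m\top}\bm{U}^{\m}$, for which concentration works cleanly conditional on $\bm{U}^{\m}$, and (ii)~a term that is quadratic in the perturbation $\bm{A}-\bm{A}^{\m}$ and is therefore negligible under our sample complexity assumption $p\gtrsim \mu^{2}r\log^{2}d/d^{3/2}$. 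Combining all the pieces, dividing by $\Delta\gtrsim \lambda_{\min}^{\star 2}$, and taking a union bound over $1\leq m\leq d$ (paying a $\log d$ factor) yields the claim simultaneously for all $m$ with probability at least $1-O(d^{-10})$.
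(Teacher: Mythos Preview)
Your overall strategy---Davis--Kahan applied to $\bm{B}$ versus $\bm{B}^{\m}$, exploiting the locality of $\bm{A}-\bm{A}^{\m}$, the independence of $\bm{U}^{\m}$ from the randomness in $\Omega_m$, and the incoherence bound $\|\bm{U}^{\m}_{m,:}\|_2\lesssim\sqrt{\mu r/d}$---is exactly the route the paper takes: it simply invokes the leave-one-out analysis of \cite[Lemma~4]{cai2019subspace}, which follows precisely this template.

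One point in your sketch deserves care. You write that ``all row-$m$ contributions to $(\bm{B}-\bm{B}^{\m})\bm{U}^{\m}$ only excite the $m$-th row of $\bm{U}^{\m}$ and therefore pick up the factor $\|\bm{U}^{\m}_{m,:}\|_2$.'' This is only half right. In the decomposition $\bm{B}-\bm{B}^{\m}=\PP_{\offdiag}(\bm{D}\bm{A}^{\top})+\PP_{\offdiag}(\bm{A}^{\m}\bm{D}^{\top})$ with $\bm{D}=\bm{A}-\bm{A}^{\m}$, the row-$m$ piece $\bm{D}_1=\bm{e}_m\bm{D}_{m,:}$ yields both $\bm{A}^{\m}\bm{D}_1^{\top}\bm{U}^{\m}=(\bm{A}^{\m}\bm{D}_{m,:}^{\top})\bm{U}^{\m}_{m,:}$, which indeed carries the factor $\|\bm{U}^{\m}_{m,:}\|_2$, and $\bm{D}_1\bm{A}^{\top}\bm{U}^{\m}=\bm{e}_m(\bm{D}_{m,:}\bm{A}^{\top}\bm{U}^{\m})$, which does not: it is a single row whose norm is $\|\bm{D}_{m,:}(\bm{A}^{\top}\bm{U}^{\m})\|_2$. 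To extract the $\sqrt{\mu r/d}$ factor here, you need instead the $\ell_{2,\infty}$ delocalization of $\bm{A}^{\m\top}\bm{U}^{\m}$ (inherited from $\|\bm{A}^{\star\top}\|_{2,\infty}\lesssim \mu\sqrt{r}\lambda_{\max}^{\star}/d$), combined with your insertion $\bm{A}=\bm{A}^{\m}+\bm{D}$ and a Bernstein bound conditional on $\bm{U}^{\m}$. Similarly, the ``column-$m$'' pieces of $\bm{D}$ give a bulk contribution to $\bm{B}-\bm{B}^{\m}$ not confined to row/column~$m$; there the $\sqrt{\mu r/d}$ factor again emerges from the rowwise smallness of $\bm{A}^{\m\top}\bm{U}^{\m}$ rather than from $\bm{U}^{\m}_{m,:}$ alone. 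None of this changes the architecture of your argument---only the bookkeeping of which piece supplies the incoherence factor.
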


Lemma~\ref{lemma:U_U_loo_dist_op} follows immediately from the analysis of \cite[Lemma~4]{cai2019subspace}. As a remark, the construction of the leave-one-out sequences herein is slightly different from the one in \cite{cai2019subspace}. 
However, it is straightforward to adapt the proof of \cite{cai2019subspace}  to the case considered herein.  We therefore omit the proof for the sake of brevity.

\subsection{Analysis for initialization:~Part 2~(retrieval of individual tensor factors)}
\label{sec:analysis-retrieval}

\subsubsection{Main results and leave-one-out sequences}

This section justifies that the procedure presented in Algorithm~\ref{alg:localization} allows to disentangle the tensor factors.  
For notational simplicity, we let
\begin{align}
	\overline{\bm{u}}_i^{\star} := \bm{u}_i^{\star} \,/ \left\| \bm{u}_i^{\star} \right\|_2, \qquad \lambda_i^{\star} := \left\| \bm{u}_i^{\star} \right\|_2^3, \qquad 1\leq i\leq d .  \label{eq:defn-ubar-lambda}
\end{align}
Our result is this: 
\begin{theorem}
\label{thm:init}
Fix any arbitrary small constant $\delta > 0$.  Assume that 
\begin{align}
& p\geq c_{0} \frac{ \mu^2  r^4 \log^{4}d}{d^{3/2}}, \quad \frac{\sigma}{\lambda_{\min}^\star}\leq c_{1} \frac{\sqrt{p}}{ r^{3/2} d^{3/4} \log^2 d}, \quad r\leq c_2 \(\frac{d}{\mu^6  \log^6 d}\)^{1/6}, \nonumber \\
& L = c_3 r^{2\kappa^2} \log^{3/2} r, \quad \epsilon_{\mathsf{th}} = c_4 \Bigg\{ \frac{\mu r \log d}{d\sqrt{p}}+\frac{\sigma}{\lambda_{\min}^\star}\sqrt{\frac{rd\log^{2}d}{p}}+\sqrt{\frac{\mu r  \log d}{d}} \Bigg\} \label{eq:init_cond}
\end{align}
for some sufficiently large universal constant $c_0, c_3 >0$ and some sufficiently small universal constants $c_1, c_2, c_4 >0$.
Then with probability exceeding $1- \delta$, there exists a permutation $\pi(\cdot): [d]\mapsto[d]$ such that for all $1 \leq i \leq r$, the tensor factors $\{\bm{w}^{i} \}_{i = 1}^r$ returned by Algorithm~\ref{alg:localization} satisfy
\begin{subequations}
\label{eq:thm:init-all}
\begin{align}
\label{eq:thm:init_2_loss}  
\big\| \bm{w}^{i} - \overline{\bm{u}}^\star_{\pi(i)} \big\|_2 & \lesssim \frac{\mu  r \log d}{d \sqrt{p}} + \frac{\sigma}{\lambda_{\min}^\star} \sqrt{\frac{r d \log^2 d}{p}} + \sqrt{\frac{\mu  r \log d}{d}}; \\
\label{eq:thm:init_inf_loss}  
\big\| \bm{w}^{i} - \overline{\bm{u}}^\star_{\pi(i)}  \big\|_\infty & \lesssim \Bigg\{\frac{\mu^2  r \log^{4}d}{d^{3/2}p}+\frac{\mu r \log^{3}d}{d\sqrt{p}}+\frac{\sigma^{2}}{\lambda_{\min}^{\star2}}\frac{d^{3/2}\log^{4}d}{p} +\frac{\sigma}{\lambda_{\min}^{\star}}\sqrt{\frac{ r d\log^{6}d}{p}}+\sqrt{\frac{\mu r \log^{2}d}{d}}\Bigg\}  \sqrt{\frac{\mu r}{d}}; \\
\label{eq:thm:init_coeff_loss} 
 \big| \lambda_{i} - \lambda_{\pi(i)}^\star \big| & \lesssim \Bigg\{ \frac{\mu r \log d}{d \sqrt{p}} + \frac{\sigma}{\lambda_{\min}^\star} \sqrt{\frac{r d \log^2 d}{p}} + \sqrt{\frac{\mu r \log d}{d}} \Bigg\} \lambda^\star_{\pi(i)}.
\end{align}
\end{subequations}
\end{theorem}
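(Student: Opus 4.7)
The plan is to analyze one invocation of \textsc{Retrieve-one-tensor-factor} in isolation, control the perturbation of the projected matrix $\bm{M}$ around its noise-free population analogue, use Wedin's theorem to transfer a spectral gap into closeness of the leading singular vector to one of the normalized factors $\overline{\bm{u}}_i^\star$, and finally argue that $L=c_3 r^{2\kappa^2}\log^{3/2}r$ independent Gaussian restarts suffice to cover every factor. The pruning step then culls redundant copies.

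\textbf{Step 1 (idealized signal).} Let $\bm{P}_{\bm{U}^\star}$ denote the orthogonal projector onto $\mathrm{span}(\bm{U}^\star)=\mathrm{span}(\bm{U}_\orth^\star)$. Lemma~\ref{lemma:eigsp_dist_op} gives $\|\bm{U}\bm{U}^\top-\bm{P}_{\bm{U}^\star}\|\lesssim\EE_{\mathsf{se}}$ and a $(2,\infty)$ counterpart, so for $\bm{g}\sim\N(0,\bm I_d)$ the projection $\bm{\theta}=\bm{U}\bm{U}^\top\bm{g}$ is close to $\bm{\theta}^\star:=\bm{P}_{\bm{U}^\star}\bm{g}$. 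Writing $\bm{\theta}^\star=\sum_i\alpha_i\overline{\bm{u}}_i^\star$ (in the oblique basis of the factors), I will show that the ``population'' projected matrix
\begin{align*}
	\bm{M}^\star=\bm{T}^\star\times_3\bm{\theta}^\star=\sum_{i=1}^r\langle\bm{\theta}^\star,\bm{u}_i^\star\rangle\bm{u}_i^\star\bm{u}_i^{\star\top}
\end{align*}
is, under the near-orthogonality assumption on $\{\bm{u}_i^\star\}$ (governed by $\mu_2$), approximately a sum of rank-one terms whose leading singular vector is $\overline{\bm{u}}_{i^\star}^\star$, where $i^\star=\arg\max_i|\langle\bm{\theta}^\star,\bm{u}_i^\star\rangle|\cdot\|\bm{u}_i^\star\|_2^2$. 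A Bai--Yin / Gershgorin-type argument bounds the contribution of the cross terms $\langle\bm{u}_i^\star,\bm{u}_j^\star\rangle$ by something like $\sqrt{\mu_2/d}$ times the signal, which is small provided $r$ is small enough.

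\textbf{Step 2 (perturbation of $\bm{M}$).} Decompose
\begin{align*}
	\bm{M}-\bm{M}^\star=\bigl(\tfrac1p\bm{T}\times_3\bm{\theta}-\bm{T}^\star\times_3\bm{\theta}\bigr)+\bm{T}^\star\times_3(\bm{\theta}-\bm{\theta}^\star).
\end{align*}
The first summand is the contraction of the error tensor $p^{-1}\mathcal P_\Omega(\bm{T}^\star+\bm{E})-\bm{T}^\star$ against a fixed direction; I will control its spectral norm by combining Bernstein-type bounds on random tensors (as already used implicitly in Lemma~\ref{lemma:eigsp_dist_op}; cf.\ the paper's Corollary~\ref{cor:bound-PE}) with incoherence to yield an error of order $\{(\mu r\log d)/(d\sqrt p)+(\sigma/\lambda_{\min}^\star)\sqrt{rd\log^2 d/p}\}\lambda_{\min}^{\star\,2/3}$. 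The second summand is bounded by $\|\bm{T}^\star\|\cdot\|\bm{\theta}-\bm{\theta}^\star\|_2\lesssim\EE_{\mathsf{se}}\|\bm{g}\|_2$.

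\textbf{Step 3 (Gaussian gap analysis for a single restart).} Conditional on $\bm{\theta}^\star$, the random vector $(\langle\bm{\theta}^\star,\overline{\bm{u}}_i^\star\rangle)_{i=1}^r$ is approximately Gaussian with covariance close to $\bm I_r$ (by incoherence). Standard Gaussian anti-concentration for the maximum yields: for each fixed index $i_0$, with probability at least $p_{\mathrm{good}}\gtrsim r^{-2\kappa^2}/\mathrm{poly}(\log r)$, the ``winner'' is $i_0$ and the singular gap $\sigma_1(\bm{M}^\star)-\sigma_2(\bm{M}^\star)$ is at least some constant multiple of $\lambda_{\min}^{\star\,4/3}/\sqrt r$. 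Here the $\kappa^{2}$ in the exponent comes from needing the Gaussian to make the weighted quantity $|\langle\bm{\theta}^\star,\bm{u}_i^\star\rangle|\cdot\|\bm{u}_i^\star\|_2^2$ largest at the prescribed $i_0$ even when some other $\|\bm{u}_j^\star\|_2$ is $\kappa$ times larger. Combining with Step~2 and Wedin's theorem,
\begin{align*}
	\|\bm\nu-\overline{\bm u}_{i_0}^\star\|_2\lesssim\frac{\|\bm{M}-\bm{M}^\star\|}{\sigma_1(\bm{M}^\star)-\sigma_2(\bm{M}^\star)}
\end{align*}
is bounded by the RHS of \eqref{eq:thm:init_2_loss}.

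\textbf{Step 4 (coverage and pruning).} Setting $L=c_3 r^{2\kappa^2}\log^{3/2}r$, a union bound guarantees that each factor $\overline{\bm{u}}_i^\star$ is recovered by at least one restart. Among the $L$ outputs, those belonging to the same factor cluster tightly (pairwise inner products $\geq 1-\epsilon_{\mathsf{th}}/2$) while those from different factors are nearly orthogonal (inner product $\lesssim\sqrt{\mu_2/d}+\mathrm{error}\ll 1-\epsilon_{\mathsf{th}}$), so the spec-gap-prioritized \textsc{Prune} selects one representative per factor. Concentration of $\lambda_\tau=\langle p^{-1}\bm{T},\bm{\nu}^{\otimes 3}\rangle$ around $\langle\bm{T}^\star,\bm{\nu}^{\otimes 3}\rangle\approx\lambda_{\pi(i)}^\star$ (using the Step~3 bound on $\|\bm\nu-\overline{\bm u}_{\pi(i)}^\star\|_2$, the incoherence, and a scalar Bernstein bound for the inner product) yields \eqref{eq:thm:init_coeff_loss}.

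\textbf{Step 5 ($\ell_\infty$ bound via leave-one-out).} For the entrywise statement \eqref{eq:thm:init_inf_loss} I would replace $\bm{U}$ by the leave-one-out subspace $\bm{U}^{(m)}$ of Lemma~\ref{lemma:U_U_loo_dist_op} when analyzing the $m$-th coordinate of $\bm\nu$. Since $\bm{U}^{(m)}$ is independent of $\PP_{\Omega_m}(\bm E)$ and of the sampling pattern touching row $m$, the $m$-th entry of $\bm{M}\bm\nu$ becomes a sum of independent random variables conditional on $\bm{U}^{(m)}$, and scalar Bernstein gives the coordinate-wise bound. Propagating through Wedin and combining with $\|\bm{U}\bm{U}^\top-\bm{U}^{(m)}\bm{U}^{(m)\top}\|_{\mathrm F}\lesssim\EE_{\mathsf{loo}}\sqrt{\mu r/d}$ yields the claimed $\sqrt{\mu r/d}$ pre-factor.

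The main obstacles I anticipate are (i) making the Gaussian gap argument in Step~3 quantitatively sharp enough to get the $r^{2\kappa^2}$ exponent in $L$, which requires carefully tracking the oblique geometry induced by the (near-)orthogonality of the $\bm u_i^\star$, and (ii) controlling $\|\bm{M}-\bm{M}^\star\|$ in the tightest form needed for the $\ell_\infty$ entry bound; the leave-one-out trick of Step~5 is what lets us avoid a loose union bound over all coordinates, but it must be threaded through every use of $\bm{U}$ inside \textsc{Retrieve-one-tensor-factor}.
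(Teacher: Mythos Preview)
Your five-step outline matches the paper's proof architecture (Lemmas~\ref{lemma:init_spectra}--\ref{lemma:init_selection}) almost exactly, including the Gaussian-gap argument for restarts, Wedin for the $\ell_2$ bound, leave-one-out for the $\ell_\infty$ bound, and the spec-gap-based pruning. Two quantitative refinements are needed, however, to land on the stated bounds rather than slightly weaker ones.

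First, in Step~3 you apply Wedin with $\|\bm M-\bm M^\star\|$ in the numerator; this yields the larger error $\EE_\op$ of \eqref{def:err-op} (with extra $1/p$ and $\log d$ factors), not the $\EE_\proj$ appearing in \eqref{eq:thm:init_2_loss}. The paper instead uses the one-sided Wedin form with $\|(\bm M^\tau-\bm M^{\star\tau})\overline{\bm u}_{i_0}^\star\|_2$ in the numerator (see \eqref{eq:M_hat_M_u_2_norm} and \eqref{eq:init_2_loss}), exploiting that $(p^{-1}\bm T-\bm T^\star)\times_2\overline{\bm u}_{i_0}^\star$ concentrates better than the full tensor error (Lemma~\ref{lemma:T_loss_times3_op_norm}). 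Second, in Step~2 your bound $\|\bm T^\star\|\cdot\|\bm\theta-\bm\theta^\star\|_2\lesssim\EE_{\mathsf{se}}\|\bm g\|_2$ invokes $\|\bm g\|_2\asymp\sqrt d$, which is far too crude; since $\bm U\bm U^\top-\bm P_{\bm U^\star}$ has rank at most $2r$, one has $\|(\bm U\bm U^\top-\bm P_{\bm U^\star})\bm g\|_2\lesssim\EE_{\mathsf{se}}\sqrt{r\log d}$ with high probability (cf.~\eqref{eq:theta_tau_2_loss}), and this is what the paper uses. Also note the dimensional slips: the singular gap of $\bm M^\star$ is $\asymp\lambda_{\min}^\star$ (not $\lambda_{\min}^{\star4/3}$), and the perturbation is $\asymp\EE_\proj\lambda_{\min}^\star$ (not $\lambda_{\min}^{\star2/3}$).
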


In short, this theorem asserts that the estimates returned by Algorithm~\ref{alg:localization} are --- up to global permutation --- reasonably close to the ground truth under our sample size and noise conditions. In order to establish this theorem and in order to provide initial guesses for the leave-one-out GD sequences,  we need to produce a leave-one-out sequence tailored to this part of the algorithm.  Such auxiliary sequences are generated in a similar spirit as the previous ones, and we  summarize them in Algorithm~\ref{alg:localization_loo}. As usual, the resulting leave-one-out estimates $\big\{ \lambda^{\m}_i, \bm{w}^{i, \m} \big\}_{i = 1}^r$ are statistically independent of $\PP_{\Omega_m} \( \bm{E} \)$.

In what follows, we gather a few key properties of the leave-one-out estimates,  which play a crucial role in the analysis.

\begin{algorithm}[hbt!]
\caption{The $m$-th leave-one-out sequence for retrieving individual tensor components}
\label{alg:localization_loo}
\begin{algorithmic}[1]
\State {\bf Input:} restart number $L$, threshold $\epsilon_{\mathsf{th}}$, subspace estimate $\bm{U} ^{\m} \in \mathbb{R}^{d\times r}$ given by Algorithm~\ref{alg:init_loo_subspace}.
  \For{$\tau = 1, \dots, L$}
\State Recall the Gaussian vector $\bm{g}^\tau \sim \N(0, \bm{I}_d)$ generated in Algorithm~\ref{alg:localization}.
  \State 
	$\big(\bm{\nu}^{\tau, \m}, \lambda_{\tau}^{\m}, \mathsf{spec}\text{-}\mathsf{gap}_{\tau}^{\m} \big) \gets \Call{Retrieve-one-tensor-factor}{\bm{T}^{\m}, p, \bm{U}^{\m}, \bm{g}^{\tau}}$.
    \EndFor
    \State Generate tensor factor estimates \begin{align*}\big\{ \big(\bm{w}^{1, \m}, \lambda_{1}^{\m}), \dots, (\bm{w}^{r, \m},\lambda_{r}^{\m} \big)  \big\} \gets \Call{Prune}{\big\{ \big( \bm{\nu}^{\tau, \m}, \lambda_{\tau}^{\m}, \mathsf{spec}\text{-}\mathsf{gap}_{\tau}^{\m}  \big) \big\}_{\tau = 1}^L, \epsilon_{\mathsf{th}}}.\end{align*}
    \State {\bf Output:} an initial estimate $\bm{U}^{0, \m} = \big[ \big( \lambda_{1}^{\m} \big)^{1/3} \bm{w}^{1,\m}, \dots, \big( \lambda_{r}^{\m} \big)^{1/3} \bm{w}^{r, \m} \big].$
\end{algorithmic}
\end{algorithm}

\begin{theorem}
\label{thm:init-loo}
Fix any arbitrarily small constant $\delta > 0$. Instate the assumptions in Theorem~\ref{thm:init}.
With probability exceeding $1- \delta$, the permutation function stated in Theorem \ref{thm:init} obeys that: for all $1 \leq i \leq r$ and all $1 \leq m \leq d$:
\begin{subequations}
\begin{align}
\label{eq:thm:init_loo_loss}
 \big\| \bm{w}^{i} - \bm{w}^{i, \m} \big\|_2 &
\lesssim  \Bigg\{ \frac{\mu^2 r  \log^{3/2} d}{d^{3/2}p}+\frac{\mu  \sqrt{r} \, \log d}{d \sqrt{p}} +\frac{\sigma^{2}}{\lambda_{\min}^{\star2}}\frac{ d^{3/2}\log^{3/2} d}{p}+\frac{\sigma }{\lambda_{\min}^{\star}}\sqrt{\frac{ d\log^2 d}{p}} \Bigg\} \sqrt{\frac{\mu r}{d}}; \\
\label{eq:thm:init_loo_coef_loss}
 \big| \lambda_i - \lambda_i^{\m} \big| &
\lesssim  \Bigg\{ \frac{\mu^2 r  \log^{3/2} d}{d^{3/2}p}+\frac{\mu  \sqrt{r} \, \log d}{d \sqrt{p}} +\frac{\sigma^{2}}{\lambda_{\min}^{\star2}}\frac{ d^{3/2}\log^{3/2} d}{p}+\frac{\sigma }{\lambda_{\min}^{\star}}\sqrt{\frac{ d\log^2 d}{p}} \Bigg\} \sqrt{\frac{\mu r}{d}} \, \lambda_{\max}^\star; \\
\label{eq:thm:init_loo_entry_loss}
 \big| \big( \bm{w}^{i, \m} - \overline{\bm{u}}^\star_{\pi(i)} \big)_m \big| &
\lesssim \Bigg\{ \frac{\sqrt{\mu r }  \log^{7/2}d}{d^{3/2}p}+\frac{\mu  r \log^{3}d}{d\sqrt{p}} + \frac{\sigma}{\lambda_{\min}^\star}\frac{\log^{4}d}{p}+\frac{\sigma}{\lambda_{\min}^\star}\sqrt{\frac{rd\log^{6}d}{p}}+\sqrt{\frac{\mu  r \log^2 d}{d}} \Bigg\} \sqrt{\frac{\mu r}{d}}.
\end{align}
\end{subequations}
\end{theorem}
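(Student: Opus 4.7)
}

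The plan is to compare, for each restart $\tau \in [L]$, the matrix $\bm{M}^\tau = p^{-1}\bm{T}\times_3 \bm{\theta}^\tau$ with $\bm{\theta}^\tau = \bm{U}\bm{U}^\top \bm{g}^\tau$ to its leave-one-out analogue $\bm{M}^{\tau,(m)} = p^{-1}\bm{T}^{(m)}\times_3 \bm{\theta}^{\tau,(m)}$ with $\bm{\theta}^{\tau,(m)} = \bm{U}^{(m)}\bm{U}^{(m)\top}\bm{g}^\tau$, and then to transfer closeness from the singular vectors/values of the former to the latter via Wedin's theorem. The key structural property we exploit throughout is that $\bm{T}^{(m)}$, $\bm{U}^{(m)}$, and hence every quantity produced by Algorithm~\ref{alg:localization_loo} are statistically independent of both $\PP_{\Omega_m}(\bm{E})$ and the sampling status of entries in the $m$-th slice of $\bm{T}$, since the $m$-slice of $\bm{T}^{(m)}$ is replaced by its deterministic expectation $p\,\PP_m(\bm{T}^\star)$.

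The first step is to bound $\|\bm{M}^\tau - \bm{M}^{\tau,(m)}\|$. Using the decomposition
\begin{align*}
\bm{M}^\tau - \bm{M}^{\tau,(m)} = p^{-1}\bm{T}\times_3 (\bm{\theta}^\tau - \bm{\theta}^{\tau,(m)}) + p^{-1}(\bm{T} - \bm{T}^{(m)})\times_3 \bm{\theta}^{\tau,(m)},
\end{align*}
the first piece is controlled via $\|\bm{\theta}^\tau - \bm{\theta}^{\tau,(m)}\|_2 \leq \|\bm{U}\bm{U}^\top - \bm{U}^{(m)}\bm{U}^{(m)\top}\|_{\mathrm{F}}\|\bm{g}^\tau\|_2$, invoking Lemma~\ref{lemma:U_U_loo_dist_op} and the standard Gaussian norm bound $\|\bm{g}^\tau\|_2 \lesssim \sqrt{d}$ together with a tensor-spectral-norm bound on $p^{-1}\bm{T}$ (computed earlier in the analysis). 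The second piece is supported only on the $m$-slice, so it reduces to controlling a $d\times d$ matrix with one dense row/column; Bernstein-type concentration for the noise and sub-sampling restricted to that slice, combined with $\|\bm{\theta}^{\tau,(m)}\|_2 \lesssim \sqrt{r}$ (since $\bm{U}^{(m)}$ is an $r$-dimensional projection of a standard Gaussian), furnishes the required bound.

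Given Step 1, Wedin's theorem yields $\|\bm{w}^i - \bm{w}^{i,(m)}\|_2 \lesssim \|\bm{M}^\tau - \bm{M}^{\tau,(m)}\|/\mathsf{spec\text{-}gap}$. The spectral gap is guaranteed by the Prune procedure, which only retains restarts $\tau$ whose $\mathsf{spec}\text{-}\mathsf{gap}_\tau$ is sufficiently large; Step 1 transfers this gap to $\bm{M}^{\tau,(m)}$ with high probability. Plugging in $\mathcal{E}_{\mathsf{loo}}$ from Lemma~\ref{lemma:U_U_loo_dist_op} and the one-slice concentration bounds yields \eqref{eq:thm:init_loo_loss}. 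The coefficient bound \eqref{eq:thm:init_loo_coef_loss} then follows from the algebraic decomposition
\begin{align*}
\lambda_i - \lambda_i^{(m)} = \langle p^{-1}\bm{T},\, \bm{w}^{i\otimes 3} - \bm{w}^{i,(m)\otimes 3}\rangle + \langle p^{-1}(\bm{T} - \bm{T}^{(m)}),\, \bm{w}^{i,(m)\otimes 3}\rangle,
\end{align*}
where the first term is bounded via a telescoping identity $\bm{a}^{\otimes 3} - \bm{b}^{\otimes 3} = (\bm{a}-\bm{b})\otimes\bm{a}\otimes\bm{a} + \bm{b}\otimes(\bm{a}-\bm{b})\otimes\bm{a} + \bm{b}\otimes\bm{b}\otimes(\bm{a}-\bm{b})$ and tensor spectral norm estimates, while the second term uses that $\bm{T} - \bm{T}^{(m)}$ lives on the $m$-slice and that $\bm{w}^{i,(m)}$ is independent of its randomness.

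The hardest piece is the entry-wise estimate \eqref{eq:thm:init_loo_entry_loss}. The plan is to start from the singular-vector identity $\bm{M}^{\tau,(m)} \bm{\nu}^{i,(m)} = \lambda_i^{(m)\,1/3}\cdot \text{(scale factor)}\cdot \bm{w}^{i,(m)}$ and extract the $m$-th coordinate, writing $(\bm{w}^{i,(m)})_m - \overline{\bm{u}}^\star_{\pi(i),m}$ as a leading term that matches the truth plus a residual in which $\bm{e}_m^\top$ is paired with quantities independent of the $m$-slice randomness. Crucially, the $m$-th row of $\bm{M}^{\tau,(m)}$ equals $\bm{T}^{\star}_{m,:,:}\,\bm{\theta}^{\tau,(m)}$ (deterministic conditional on $(\bm{g}^\tau, \bm{U}^{(m)})$), so the fluctuations at coordinate $m$ come only through $\bm{w}^{i,(m)}$'s off-$m$ components; this independence structure lets us apply sub-Gaussian/Hoeffding-type concentration to the inner products in the residual, producing the extra $\sqrt{\mu r/d}$ shrinkage (compared to the $\ell_2$ bound \eqref{eq:thm:init_loo_loss}) that appears in \eqref{eq:thm:init_loo_entry_loss}. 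The main obstacle is the careful bookkeeping needed for this entry-wise bound: one must peel off the top-singular-vector component, control several cross terms (analogous to first-order eigenvector perturbation expansions), and invoke the incoherence parameter $\mu$ at each coordinate; this is precisely the step where the leave-one-out construction pays off, since standard perturbation bounds like Davis--Kahan/Wedin would only deliver the weaker $\ell_2$-type rate. Consistency of the permutation $\pi$ across all $m$ is inherited from Theorem~\ref{thm:init} together with Step 2 bounds, which make the Prune orderings match.
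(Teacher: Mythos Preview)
Your decomposition of $\bm{M}^{\tau}-\bm{M}^{\tau,(m)}$ is the right one (it is exactly the paper's splitting through the intermediate matrix $\widehat{\bm{M}}^{\tau,(m)}:=p^{-1}\bm{T}\times_{3}\bm{\theta}^{\tau,(m)}$), but applying Wedin's theorem with the bare spectral norm $\|\bm{M}^{\tau}-\bm{M}^{\tau,(m)}\|$ does \emph{not} deliver \eqref{eq:thm:init_loo_loss}. The slice piece obeys only
\[
\big\|\widehat{\bm{M}}^{\tau,(m)}-\bm{M}^{\tau,(m)}\big\|\ \lesssim\ \frac{\mu r\lambda_{\max}^{\star}\sqrt{\log d}}{d\sqrt{p}}+\sigma\sqrt{\frac{rd\log d}{p}},
\]
and after dividing by the spectral gap this is larger than the target in \eqref{eq:thm:init_loo_loss} by a factor $\asymp\sqrt{d/\mu}$. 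The paper's fix is to use the vector form of Wedin with $\big\|(\widehat{\bm{M}}^{\tau,(m)}-\bm{M}^{\tau,(m)})\overline{\bm{u}}^{\tau,(m)}\big\|_{2}$ in the numerator (Lemma~\ref{lemma:M_hat_M_loo_u_2_norm}): because the difference is supported on the $m$-slice and $\overline{\bm{u}}^{\tau,(m)}$ is independent of that randomness, Bernstein gives this quantity $\lesssim(\cdots)\,\|\overline{\bm{u}}^{\tau,(m)}\|_{\infty}$, and the delocalization $\|\overline{\bm{u}}^{\tau,(m)}\|_{\infty}\lesssim\sqrt{\mu/d}$ supplies exactly the missing $\sqrt{\mu r/d}$. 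The catch is that this delocalization is itself what the entry-wise analysis is trying to prove, so one cannot simply do \eqref{eq:thm:init_loo_loss} first and \eqref{eq:thm:init_loo_entry_loss} afterwards as you outline. The paper instead runs a self-bounding argument (Lemma~\ref{lemma:init_loo_l2_loss} together with Lemma~\ref{lemma:init_loo_entry_loss}): one first shows $\|\overline{\bm{u}}^{\tau}-\overline{\bm{u}}^{\tau,(m)}\|_{2}$ and $\|\overline{\bm{u}}^{\tau}-\overline{\bm{u}}_{1}^{\star}\|_{\infty}$ are both bounded by $(\text{small})\cdot\max\{\sqrt{\mu/d},\|\overline{\bm{u}}^{\tau}\|_{\infty}\}$, and then closes the loop via $\|\overline{\bm{u}}^{\tau}\|_{\infty}\leq\|\overline{\bm{u}}^{\tau}-\overline{\bm{u}}_{1}^{\star}\|_{\infty}+\|\overline{\bm{u}}_{1}^{\star}\|_{\infty}$ to conclude $\|\overline{\bm{u}}^{\tau}\|_{\infty}\lesssim\sqrt{\mu/d}$. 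Your plan misses this circular dependence and the refined Wedin step that makes it work.

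Two smaller points: (i) writing $\|\bm{\theta}^{\tau}-\bm{\theta}^{\tau,(m)}\|_{2}\leq\|\bm{U}\bm{U}^{\top}-\bm{U}^{(m)}\bm{U}^{(m)\top}\|_{\mathrm{F}}\cdot\|\bm{g}^{\tau}\|_{2}$ with $\|\bm{g}^{\tau}\|_{2}\lesssim\sqrt{d}$ loses a further $\sqrt{d/\log d}$; use Gaussian concentration for $\|(\bm{U}\bm{U}^{\top}-\bm{U}^{(m)}\bm{U}^{(m)\top})\bm{g}^{\tau}\|_{2}$ to get $\|\cdot\|_{\mathrm{F}}\sqrt{\log d}$ instead. (ii) For \eqref{eq:thm:init_loo_entry_loss} your singular-vector identity is the right starting point, but the gain comes not from ``Hoeffding on the residual'' --- the residual $\overline{\bm{u}}^{\tau,(m)}-\overline{\bm{u}}_{1}^{\star}$ is not a fresh random object --- but from the incoherence bound $\|\bm{M}^{\star\tau}_{m,:}\|_{2}\lesssim\sqrt{\mu r\log d/d}\,\lambda_{\max}^{\star}$, which when multiplied by the $\ell_{2}$ error $\|\overline{\bm{u}}^{\tau,(m)}-\overline{\bm{u}}_{1}^{\star}\|_{2}$ already yields the entry-wise rate.
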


With Theorems~\ref{thm:init}-\ref{thm:init-loo} in place, we can immediately establish a few desired properties (particularly those specified in Section~\ref{sec:analysis-GD}) of our initial estimate, as asserted in the following corollary.  
\begin{corollary}
\label{cor:init}
Fix any arbitrarily small constant $\delta > 0$. Instate the assumptions in Theorem~\ref{thm:main}. With probability exceeding $1- \delta$, the estimates $\bm{U}^0$ and $\bm{U}^{0, \m}$ returned by Algorithm~\ref{alg:localization} and Algorithm~\ref{alg:localization_loo} respectively satisfy the hypotheses \eqref{hyp:t_step} for $t = 0$.  
\end{corollary}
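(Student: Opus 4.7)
The plan is to verify all four inductive hypotheses in \eqref{hyp:t_step} at $t=0$ by lifting the column-wise bounds established in Theorems~\ref{thm:init} and~\ref{thm:init-loo} into matrix-norm bounds. After relabeling columns by the permutation $\pi$ from Theorem~\ref{thm:init} (assumed to be the identity without loss of generality), write $\bm{u}_i^\star = \lambda_i^{\star 1/3}\overline{\bm{u}}_i^\star$ with $\lambda_i^\star = \|\bm{u}_i^\star\|_2^3$. The universal decomposition I would use is
\[
\lambda_i^{1/3}\bm{w}^i \, - \, \lambda_i^{\star 1/3}\overline{\bm{u}}_i^\star \; = \; \lambda_i^{1/3}\bigl(\bm{w}^i - \overline{\bm{u}}_i^\star\bigr) + \bigl(\lambda_i^{1/3} - \lambda_i^{\star 1/3}\bigr)\overline{\bm{u}}_i^\star,
\]
together with its analogue using $\bigl(\bm{w}^{i,(m)},\lambda_i^{(m)}\bigr)$ in either slot. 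Since Theorem~\ref{thm:init} guarantees $|\lambda_i - \lambda_i^\star|\ll \lambda_i^\star$, a first-order Taylor estimate gives $|\lambda_i^{1/3} - \lambda_i^{\star 1/3}|\lesssim \lambda_i^{\star -2/3}|\lambda_i-\lambda_i^\star|$ and $\lambda_i^{1/3}\asymp \lambda_i^{\star 1/3}$, so each column contribution is cleanly bounded in terms of the vector and scalar errors supplied by Theorem~\ref{thm:init}.

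For hypotheses~\eqref{hyp:U_frob_loss} and~\eqref{hyp:U_2inf_loss}: substitute the $\ell_2$ bound~\eqref{eq:thm:init_2_loss} and the scalar bound~\eqref{eq:thm:init_coeff_loss} into the decomposition above, and sum in quadrature over $i$. This yields $\|\bm{U}^0 - \bm{U}^\star\|_{\mathrm F}\lesssim \varepsilon_0\|\bm{U}^\star\|_{\mathrm F}$, where $\varepsilon_0$ is the bracket in~\eqref{eq:thm:init_2_loss}. For the $\ell_{2,\infty}$ bound I use the entrywise estimate~\eqref{eq:thm:init_inf_loss} together with the incoherence bound $|(\overline{\bm{u}}_i^\star)_m|\lesssim \sqrt{\mu/d}$, and then take the maximum over $m$; the scaling $\|\bm{U}^\star\|_{2,\infty}\asymp \sqrt{\mu r/d}\,\lambda_{\max}^{\star 1/3}$ (from incoherence and $\kappa\asymp 1$) converts this to the desired relative form. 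Choosing $\mathcal{E}_{\mathsf{local}} := c\bigl(\tfrac{\mu r\log d}{d\sqrt{p}} + \sqrt{\tfrac{\mu r\log d}{d}} + \tfrac{\sigma}{\lambda_{\min}^\star}\sqrt{\tfrac{rd\log^2 d}{p}}\bigr)$ for a suitably large constant $c$, both inequalities fit the form of~\eqref{hyp:U_frob_loss}-\eqref{hyp:U_2inf_loss} with absolute constants $C_1,\ldots,C_4$.

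For hypotheses~\eqref{hyp:U_loo_frob_diff} and~\eqref{hyp:U_loo_row_loss}: I would apply the same column-wise decomposition, this time comparing $\bigl(\bm{w}^i,\lambda_i\bigr)$ with $\bigl(\bm{w}^{i,(m)},\lambda_i^{(m)}\bigr)$. The leave-one-out bounds~\eqref{eq:thm:init_loo_loss} and~\eqref{eq:thm:init_loo_coef_loss} each already come with an extra $\sqrt{\mu r/d}$ factor, which is precisely the ratio $\|\bm{U}^\star\|_{2,\infty}/\|\bm{U}^\star\|_{\mathrm F}$ needed to convert the $\|\bm{U}^\star\|_{\mathrm F}$-scaling in Theorem~\ref{thm:init} into the $\|\bm{U}^\star\|_{2,\infty}$-scaling demanded by~\eqref{hyp:U_loo_frob_diff}. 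For~\eqref{hyp:U_loo_row_loss} I use the entrywise bound~\eqref{eq:thm:init_loo_entry_loss} on $\bigl(\bm{w}^{i,(m)} - \overline{\bm{u}}_i^\star\bigr)_m$, combined with the triangle bound $|\lambda_i^{(m)} - \lambda_i^\star|\leq |\lambda_i - \lambda_i^\star| + |\lambda_i - \lambda_i^{(m)}|$ (bounding the two summands by Theorem~\ref{thm:init} and Theorem~\ref{thm:init-loo} respectively), then take the $\ell_2$ norm over $i\in[r]$. Incoherence again supplies the $\sqrt{\mu/d}$ factor for the remaining $(\overline{\bm{u}}_i^\star)_m$ term.

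The main obstacle is purely bookkeeping rather than substantive: all the heavy lifting was already done in Theorems~\ref{thm:init} and~\ref{thm:init-loo}. The one subtle point is that Theorem~\ref{thm:init}'s noise term $\tfrac{\sigma}{\lambda_{\min}^\star}\sqrt{rd\log^2 d/p}$ carries an extra $\sqrt{r\log d}$ factor compared to the clean form $\tfrac{\sigma}{\lambda_{\min}^\star}\sqrt{d\log d/p}$ appearing in the hypothesis, so it cannot be absorbed into $C_2,\ldots,C_8$. I handle this by packaging the excess into the initial budget $\mathcal{E}_{\mathsf{local}}$ as above; under the noise-size assumption $\sigma/\lambda_{\min}^\star \lesssim \sqrt{p}/(\mu r^{3/2}d^{3/4}\log^2 d)$ from Theorem~\ref{thm:main}, this contribution is at most $\tfrac{1}{\mu r d^{1/4}\log d}$, which is not only dominated by the other terms in $\mathcal{E}_{\mathsf{local}}$ but also comfortably below the ceiling $c_3/(\mu^{3/2} r)$ required by Lemma~\ref{lemma:RIC} under the rank constraint $r\lesssim (d/(\mu^6\log^6 d))^{1/6}$. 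With $\mathcal{E}_{\mathsf{local}}$ so chosen, all four hypotheses hold with absolute constants, yielding the corollary and supplying the base case for the induction carried out in Section~\ref{sec:analysis-GD}.
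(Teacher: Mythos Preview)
Your proposal is correct and follows essentially the same approach as the paper: the paper's proof also uses the column-wise decomposition $\lambda_i^{1/3}\bm{w}^i - \bm{u}_i^\star = (\lambda_i^{1/3}-\lambda_i^{\star 1/3})\overline{\bm{u}}_i^\star + \lambda_i^{\star 1/3}(\bm{w}^i-\overline{\bm{u}}_i^\star)$ together with the cube-root Taylor estimate, then aggregates into the four matrix norms using the incoherence bounds $\|\bm{U}^\star\|_{\mathrm F}\gtrsim\sqrt{r}\,\lambda_{\min}^{\star 1/3}$ and $\|\bm{U}^\star\|_{2,\infty}\gtrsim\sqrt{r/d}\,\lambda_{\min}^{\star 1/3}$. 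The only cosmetic difference is that the paper simply packages \emph{all} of the error terms from Theorems~\ref{thm:init}--\ref{thm:init-loo} into a single quantity playing the role of $\mathcal{E}_{\mathsf{local}}$ and verifies that it is $\ll 1/(\mu^{3/2}r)$, rather than separately tracking the noise piece as you do; your handling of the extra $\sqrt{r\log d}$ factor via the noise-size assumption is a valid (and arguably more explicit) version of the same check.
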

\begin{proof}
See Appendix~\ref{pf:cor:init}.
\end{proof}

%

\subsubsection{Analysis}

Before we start with the proof, we first state the main idea. For the sake of clarify, we define
\begin{subequations}
\begin{align}
	\bm{\theta}^{\tau} &:= \bm{U}\bm{U}^{\top} \bm{g}^{\tau},  \\
	\bm{\theta}^{\tau,(m)} &:= \bm{U}^{(m)}\bm{U}^{(m)\top} \bm{g}^{\tau}, \\
	\bm{M}^{\tau} &:= p^{-1} \bm{T} \times_3 \bm{\theta}^{\tau},  \label{eq:projected-matrix-M-tau}\\ 
	\bm{M}^{\tau,(m)} &:= p^{-1} \bm{T}^{(m)} \times_3 \bm{\theta}^{\tau,(m)}. \label{eq:definition-M-tau-m}
\end{align}
\end{subequations}
In addition,  let $\bm{\nu}^{\tau}$ be the top singular vector of $\bm{M}^{\tau}$ obeying $\langle \bm{T}, (\bm{\nu}^{\tau})^{\otimes 3} \rangle \geq 0$, and $\bm{\nu}^{\tau,(m)}$ the top singular vector of $\bm{M}^{\tau,(m)}$ obeying $\langle \bm{T}^{(m)}, (\bm{\nu}^{\tau, (m)})^{\otimes 3} \rangle \geq 0$. Set 
\begin{align}
\label{def:lambda_tau}
\lambda_{\tau} := \langle p^{-1} \bm{T}, ( \bm{\nu}^{\tau}) ^{\otimes 3}  \rangle \qquad \text{and}  \qquad \lambda_{\tau}^{(m)} := \langle p^{-1} \bm{T}^{(m)}, ( \bm{\nu}^{\tau,(m)}) ^{\otimes 3}  \rangle.
\end{align}
 These  are all  computed in the function $\Call{Retrieve-one-tensor-factor}$ in the $\tau$-th round. 
\begin{enumerate}
\item We first show that for each $1\leq i \leq r$, there exists at least one trial $1 \leq \tau \leq L$ such that the $i$-th tensor factor $\overline{\bm{u}}_i^\star$ is the top singular vector of the population version of $\bm{T} \times_3 \bm{\theta}^{\tau}$ (with respect to the missing data and noise). In addition, the spectral gap is large enough to guarantee accurate estimates.
\item Next, we prove that given this spectral gap, the top singular vector $\bm{\nu}^{\tau}$ of $\bm{T} \times_3 \bm{\theta}^{\tau}$ is close to $\overline{\bm{u}}^\star_i$ both in the $\| \cdot \|_2$ and $\| \cdot \|_{\infty}$ norm. This also enables us to accurately estimate the magnitude of $\bm{u}^\star_i$.
\item Finally, we need to show that one can find those reliable estimates among $L$ random restarts. Combining the spectral gap information with the incoherence condition that tensor components are nearly orthogonal to each other, our selection procedure is guaranteed to recover all tensor factors.
\end{enumerate}

Now we proceed to the proof. Without loss of generality, we prove the case for $i = 1$ in the sequel, i.e.~there exists some $\tau \in \[ L \]$ such that $\bm{\nu}^{\tau}$ accurately recovers $\bm{u}^\star_1$. Together with the union bound, this shows that we can find reliable estimates
for all tensor factors. We then conclude the proof by showing that Algorithm~\ref{alg:localization} is able to find all of them without duplicates.

To this end, we find it convenient to introduce an auxiliary vector $\bm{\gamma}^{\star \tau} = \[ \gamma_1^{\star \tau}, \cdots, \gamma_r^{\star \tau} \]^{\top} \in \R^{r}$ and its leave-one-out versions $ \bm{\gamma}^{\star \tau, \m} = \big[ \gamma_1^{\star \tau, \m}, \cdots, \gamma_r^{\star \tau, \m} \big]^{\top}$ ($1\leq m \leq d$) for each $1\leq \tau \leq L$  as follows:
\begin{subequations}
\label{def:init_sigval}
\begin{align}
\label{def:gamma_star} \gamma_{i}^{\star\tau} & := \|\bm{u}_{i}^{\star}\|_{2}^{2}\, \big\langle\bm{u}_{i}^{\star},\bm{\theta}^{\tau}\big\rangle = \lambda_{i}^{\star} \big \langle\overline{\bm{u}}_{i}^{\star},\bm{\theta}^{\tau} \big\rangle , \\
\gamma_{i}^{\star\tau,(m)} & := \|\bm{u}_{i}^{\star}\|_{2}^{2} \, \big\langle\bm{u}_{i}^{\star},\bm{\theta}^{\tau,(m)}\big\rangle = \lambda_{i}^{\star} \big\langle\overline{\bm{u}}_{i}^{\star},\bm{\theta}^{\tau,(m)} \big\rangle ,
\end{align}
\end{subequations}
where $\overline{\bm{u}}_{i}^{\star}$  and $\lambda_{i}^{\star}$ are both defined in \eqref{eq:defn-ubar-lambda}. 
The idea is to let $\bm{\gamma}^{\star \tau}$ 
approximate the singular values of $\bm{T}^\star \times_3 \bm{\theta}^\tau$; this can be seen, for instance, via the following calculation:
\begin{align}
\ensuremath{\bm{T}^{\star}\times_{3}\bm{\theta}^{\tau}}=\sum_{i=1}^{r}\big\langle\bm{u}_{i}^{\star},\bm{\theta}^{\tau}\big\rangle\bm{u}_{i}^{\star}\bm{u}_{i}^{\star\top}=\sum_{i=1}^{r} \underset{=\gamma_{i}^{\star\tau}}{\underbrace{ \lambda_{i}^{\star}\big\langle\overline{\bm{u}}_{i}^{\star},\bm{\theta}^{\tau}\big\rangle }} \, \overline{\bm{u}}_{i}^{\star}\overline{\bm{u}}_{i}^{\star\top}, 	
\end{align} 
where $\{ \overline{\bm{u}}_i^{\star} \}_{i=1}^r$ --- which are assumed to be incoherent (or nearly orthogonal to each other) --- can be approximately viewed as the singular vectors of $\bm{T}^{\star}\times_{3}\bm{\theta}^{\tau}$. 

%
If we want our spectral estimate to be accurate, we would need to be assured that 
the two largest entries of $\bm{\gamma}^{\star \tau}$ (in magnitude) are sufficiently separated.

\begin{lemma} \label{lemma:init_spectra} Instate the assumptions
	of Theorem~\ref{thm:init}. Define $\Delta_1^\tau := \gamma_{1}^{\star\tau}-\max_{1<i\leq r}\big| \gamma_{i}^{\star\tau}\big|$ for each $1 \leq \tau \leq L$ and let $\Delta_1^{\( 1 \)} \geq \Delta_1^{\( 2 \)} \geq \dots \geq \Delta_1^{\( L \)}$ denote the order statistics of $\big\{ \Delta_1^\tau \big\}_{\tau = 1}^L$ (in descending order). Fix any arbitrary small constant $\delta>0$.
With probability greater than $1-\delta / r$, one has
\begin{subequations}
\label{eq:sigma-order-statistics}
\begin{align}
\Delta_1^{\( 1 \)} & \gtrsim\lambda_{\min}^{\star},\label{eq:sigma1-sigma2-gap-LB}\\
\Delta_1^{\( 1 \)} - \Delta_1^{\( 2 \)} & \gtrsim \frac{\lambda_{\min}^\star}{r \sqrt{ \log d}}, \label{eq:gap1-LB}
\end{align}
\end{subequations}
Additionally, for any
fixed vector $\bm{v}\in\mathbb{R}^{r}$, with probability at least
$1-O\left(d^{-10}\right)$, for all $1 \leq \tau \leq L$, one has \begin{subequations} \label{eq:gamma-L2-norm-bound}
\begin{align}
\gamma_{1}^{\star\tau} & \lesssim   \sqrt{\log d} \, \lambda_{\max}^{\star},\label{eq:sigma1-UB} \\
\big\|\bm{\gamma}^{\star\tau }\big\|_2 & \lesssim  \sqrt{r\log d}\,\lambda_{\max}^{\star},\label{eq:gamma-L2-norm-bound-original}\\
\big|\big\langle\bm{v},\bm{\gamma}^{\star\tau }\big\rangle\big| & \lesssim\left\Vert \bm{v}\right\Vert _{2}  \sqrt{\log d}\,\lambda_{\max}^{\star}.\label{eq:gamma-inprod-bound}
\end{align}
\end{subequations}\end{lemma}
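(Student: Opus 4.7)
The plan rests on the observation that each $\bm{\gamma}^{\star\tau}$ is a centered Gaussian vector in $\mathbb{R}^r$. Indeed, writing $\gamma_i^{\star\tau} = \lambda_i^\star \langle \overline{\bm{u}}_i^\star, \bm{U}\bm{U}^\top \bm{g}^\tau\rangle$, each coordinate is a linear form in the Gaussian vector $\bm{g}^\tau$, so the joint distribution is Gaussian with covariance
\begin{align*}
\Sigma_{ij} \;=\; \lambda_i^\star \lambda_j^\star \, \overline{\bm{u}}_i^{\star\top}\bm{U}\bm{U}^\top\overline{\bm{u}}_j^\star.
\end{align*}
My first task is to show $\bm{\Sigma} \approx \mathrm{diag}(\lambda_1^{\star 2}, \dots, \lambda_r^{\star 2})$, which will be done by combining Lemma~\ref{lemma:eigsp_dist_op} (so that $\bm{U}\bm{U}^\top$ is close to the orthogonal projector onto the column space of $\bm{U}^\star$, which contains each $\overline{\bm{u}}_i^\star$) with the $\mu_2$ incoherence condition (so that the $\overline{\bm{u}}_i^\star$'s are nearly orthonormal). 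This will control $\bm{v}^\top \bm{\Sigma}\bm{v}$ from above by $\|\bm{v}\|_2^2\lambda_{\max}^{\star 2}$ and from below (in directions of interest) by $\|\bm{v}\|_2^2 \lambda_{\min}^{\star 2}/2$.

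Given this near-diagonal covariance, the three upper bounds in \eqref{eq:gamma-L2-norm-bound} become routine. A single-coordinate Gaussian tail combined with a union bound over $\tau \in [L]$ yields \eqref{eq:sigma1-UB}; $\chi^2$-concentration applied to $\|\bm{\Sigma}^{1/2}\bm{z}\|_2^2$ with $\mathrm{tr}(\bm{\Sigma}) \lesssim r\lambda_{\max}^{\star 2}$ gives \eqref{eq:gamma-L2-norm-bound-original}; and \eqref{eq:gamma-inprod-bound} follows because $\langle \bm{v}, \bm{\gamma}^{\star\tau}\rangle$ is a scalar Gaussian with variance $\bm{v}^\top \bm{\Sigma}\bm{v} \lesssim \|\bm{v}\|_2^2\lambda_{\max}^{\star 2}$.

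The lower gap bound \eqref{eq:sigma1-sigma2-gap-LB} is where the trial count $L = c_3 r^{2\kappa^2}\log^{3/2}r$ does its work. For each $\tau$ I will construct an explicit event $E_\tau$ of the form $\{Z_1^\tau$ exceeds a threshold on the order of $\kappa\sqrt{\log r}$, while $|Z_j^\tau|$ for $j\geq 2$ remain below a suitably smaller threshold$\}$, where $Z_i^\tau := \gamma_i^{\star\tau}/\lambda_i^\star$. The approximate independence of the $Z_i^\tau$ combined with standard Gaussian tail and anti-concentration estimates gives $\mathbb{P}(E_\tau) \gtrsim r^{-2\kappa^2}/\mathrm{polylog}(r)$; on $E_\tau$ a direct calculation verifies $\Delta_1^\tau \gtrsim \lambda_{\min}^\star$. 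Independence across $\{\bm{g}^\tau\}_{\tau=1}^L$ then amplifies this to $\mathbb{P}\bigl(\bigcup_\tau E_\tau\bigr) \geq 1 - \delta/(2r)$, so that at least one trial achieves the desired gap.

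The separation bound \eqref{eq:gap1-LB} is the most delicate. My plan is to exploit anti-concentration: conditional on $\gamma_2^{\star\tau},\ldots,\gamma_r^{\star\tau}$, the variable $\Delta_1^\tau$ is an affine function of the single Gaussian $\gamma_1^{\star\tau}$ with conditional variance $\gtrsim \lambda_{\min}^{\star 2}$, hence has conditional density bounded by $O(1/\lambda_{\min}^\star)$. Restricting attention to the $O(\mathrm{polylog}(r))$ ``candidate'' trials whose $\Delta_1^\tau$ enters the top quantile, a union bound over pairs of candidates controls the probability that any two $\Delta_1^\tau$ come within $\lambda_{\min}^\star/(r\sqrt{\log d})$ of each other. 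The main obstacle will be quantitative: the exponent $2\kappa^2$ in $L$ must match the Gaussian tail required to achieve $\Delta_1^\tau \gtrsim \lambda_{\min}^\star$ in a single trial, and the near-diagonal approximation of $\bm{\Sigma}$ must be accurate enough --- in the presence of non-trivial $\mu_2$ --- for the idealized independent-Gaussian heuristic to transfer rigorously to the actual $\bm{\gamma}^{\star\tau}$.
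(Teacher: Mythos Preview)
Your overall framing is sound and matches the paper: identify $\bm{\gamma}^{\star\tau}$ as a centered Gaussian with covariance $\Sigma_{ij}=\lambda_i^\star\lambda_j^\star\,\overline{\bm u}_i^{\star\top}\bm U\bm U^\top\overline{\bm u}_j^\star$, show $\bm\Sigma\approx\mathrm{diag}(\lambda_1^{\star 2},\dots,\lambda_r^{\star 2})$ via Lemma~\ref{lemma:eigsp_dist_op} and the $\mu_2$-incoherence, and deduce the three upper bounds in \eqref{eq:gamma-L2-norm-bound} by Gaussian tails. The paper does exactly this. One technical point worth flagging: the paper does not work with ``approximate independence'' directly but instead writes $\bm\Sigma=(1-\kappa r\delta_1)\bm D^{\star 2}+\breve{\bm\Sigma}$ with both summands PSD (verified by diagonal dominance of $\breve{\bm\Sigma}$), so that $\bm\gamma^{\star\tau}\stackrel{d}{=}\widehat{\bm\gamma}^{\star\tau}+\breve{\bm\gamma}^{\star\tau}$ with $\widehat{\bm\gamma}^{\star\tau}$ having \emph{exactly} independent coordinates and $\|\breve{\bm\gamma}^{\star\tau}\|_\infty\ll\lambda_{\min}^\star/(r\sqrt{\log d})$. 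This is the clean device that lets the paper invoke its auxiliary Lemma~\ref{lemma:indep_gaussian_gap} for genuinely i.i.d.\ Gaussians; your plan should incorporate something equivalent rather than leave ``approximate independence'' informal.

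The real gap is in your treatment of \eqref{eq:gap1-LB}. A pairwise union bound over ``candidate'' trials loses a polynomial factor in $L$ (equivalently in $r^{2\kappa^2}$) and will not recover the claimed $\lambda_{\min}^\star/(r\sqrt{\log d})$ separation under the stated choice of $L$. Concretely: even granting $K=O(\mathrm{polylog}\,r)$ candidates, the \emph{conditional} density of $\Delta_1^\tau$ on the candidate region is inflated by $1/q$ with $q\asymp r^{-2\kappa^2}/\mathrm{polylog}\,r$, so your pair bound scales like $K^2\epsilon/(q\lambda_{\min}^\star)\asymp r^{2\kappa^2}\mathrm{polylog}(r)\cdot\epsilon/\lambda_{\min}^\star$; plugging $\epsilon=\lambda_{\min}^\star/(r\sqrt{\log d})$ leaves $r^{2\kappa^2-1}/\sqrt{\log d}$, which is not $\le \delta/r$. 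The paper avoids this loss by a symmetry-and-monotonicity argument (inside Lemma~\ref{lemma:indep_gaussian_gap}): writing $\Delta_{(1)}-\Delta_{(2)}<\epsilon$ as $\{X_{1,1}-Y_{1,1}<\epsilon\mid X_{1,1}\ge Y_{1,1}\}$ with $Y_{1,1}$ independent of $X_{1,1}$, it shows the conditional tail ratio $f_\epsilon(x)=\mathbb P\{x\le Z\le x+\epsilon\}/\mathbb P\{Z\ge x\}$ is monotone in $x$ and bounds it at $x=B\asymp\sqrt{\log L}$, yielding $\mathbb P\{\Delta_{(1)}-\Delta_{(2)}<\epsilon\}\lesssim \epsilon B$. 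This is what produces the sharp $1/(r\sqrt{\log d})$ after setting the failure probability to $\delta/r$; you should replace the pair union bound with this device.
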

\begin{proof} See Appendix~\ref{pf:init_spectra}. \end{proof}
  

Lemma~\ref{lemma:init_spectra} demonstrates that there exists some $\tau \in \[ L \]$ such that $\gamma_{1}^{\star\tau} -\max_{1<i\leq r}\big| \gamma_{i}^{\star\tau}\big| \gtrsim \lambda_{\min}^\star$. This means that $\overline{\bm{u}}_1^\star$ exhibits the largest correlation with the random projection $\theta$, which further implies that $\overline{\bm{u}}_1^\star$ is the largest singular vector of $\bm{T}^{\star}\times_{3}\bm{\theta}^{\tau}$ with a considerable spectral gap (as we will show shortly).
With the desired spectral gap in place, we are ready to look at the eigenvectors\,/\,singular vectors of interest.  To this end, we find it convenient to introduce another auxiliary vector $\overline{\bm{u}}^{\tau}$, defined as 
 the leading singular vector of $\bm{M}^{\tau}$ (cf.~\eqref{eq:projected-matrix-M-tau}) obeying
%
\begin{align}
	\label{eq:u1-bar-condition}
	\big\langle \overline{\bm{u}}^{\tau}, \overline{\bm{u}}^\star_{1} \big \rangle \geq 0.
\end{align}
The careful reader would immediately notice the similarity between $ \overline{\bm{u}}^{\tau}$ and $\bm{\nu}^{\tau}$ except for their global signs; namely,  we determine the global sign of  $ \overline{\bm{u}}^{\tau}$ based on the ground truth information \eqref{eq:u1-bar-condition}, but pick the global sign for $\bm{\nu}^{\tau}$ solely based on the observed data (cf.~Algorithm~\ref{alg:localization}). Fortunately, the vectors $ \overline{\bm{u}}^{\tau}$ and $\bm{\nu}^{\tau}$ provably coincide, namely, %
\begin{equation}
	\overline{\bm{u}}^{\tau} = \bm{\nu}^{\tau},
\end{equation}
as we shall demonstrate momentarily in Lemma \ref{lemma:init_coeff}. In a similar way, we also denote by $\overline{\bm{u}}^{\tau, \m}$ the leading singular vector of $\bm{M}^{\tau, \m}$ defined in \eqref{eq:definition-M-tau-m} such that 
\begin{align}
\label{eq:u1-bar-loo-condition} \big\langle \overline{\bm{u}}^{\tau, \m}, \overline{\bm{u}}^\star_{1} \big \rangle \geq 0.
\end{align}
Lemma \ref{lemma:init_coeff} also shows that $\overline{\bm{u}}^{\tau, \m} = \bm{\nu}^{\tau, \m}$.

%
%
We shall now take a detour to look at $ \overline{\bm{u}}^{\tau}$, which in turn would help us understand $ \bm{\nu}^{\tau}$. We shall first demonstrate that $ \overline{\bm{u}}^{\tau}$ (and hence $ \bm{\nu}^{\tau}$) is sufficiently close to the corresponding true factor in the $\ell_2$ sense.
\begin{lemma} \label{lemma:init_2_loss} Instate the assumptions
of Theorem~\ref{thm:init}. Let $\overline{\bm{u}}^{\tau}$ and
$\overline{\bm{u}}_{1}^{\star}$ be as defined in \eqref{eq:u1-bar-condition}
and \eqref{eq:defn-ubar-lambda}, respectively. Define $\mathcal{A}$
to be the event such that 
$\gamma_{1}^{\star\tau}-\max_{1<i\leq r}\big| \gamma_{i}^{\star\tau}\big| \gtrsim \lambda_{\min}^\star$
and the condition \eqref{eq:gamma-L2-norm-bound} hold. 
Then conditional on this
event $\mathcal{A}$, with probability exceeding $1-O\left(d^{-11}\right)$
one has 
\begin{align}
\big\|\overline{\bm{u}}^{\tau}-\overline{\bm{u}}_{1}^{\star}\big\|_{2} & \lesssim \underbrace{\frac{\mu   r \log d}{d\sqrt{p}}+\frac{\sigma}{\lambda_{\min}^{\star}}\sqrt{\frac{rd\log^{2}d}{p}}+\sqrt{\frac{\mu  r\log d}{d}}}_{=: \, \EE_\proj}. \label{claim:init_2_loss}
\end{align}
\end{lemma}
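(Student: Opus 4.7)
I would establish \eqref{claim:init_2_loss} via a Davis--Kahan/Wedin $\sin\Theta$ argument comparing the observed matrix $\bm{M}^\tau = p^{-1}\bm{T}\times_3 \bm{\theta}^\tau$ with its noise-free, fully-sampled analogue
\begin{align*}
\bm{M}^{\star\tau} \;:=\; \bm{T}^\star \times_3 \bm{\theta}^\tau \;=\; \sum_{i=1}^r \gamma_i^{\star\tau}\,\overline{\bm{u}}_i^\star\overline{\bm{u}}_i^{\star\top}.
\end{align*}
The analysis splits naturally into two sub-tasks: (i) show that $\overline{\bm{u}}_1^\star$ is an approximate leading singular vector of $\bm{M}^{\star\tau}$ accompanied by a usable spectral gap; (ii) control $\|\bm{M}^\tau-\bm{M}^{\star\tau}\|$ in spectral norm.

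For the first sub-task, conditioning on $\mathcal{A}$ provides the gap $\gamma_1^{\star\tau}-\max_{i>1}|\gamma_i^{\star\tau}|\gtrsim \lambda_{\min}^\star$, while the incoherence assumption $\mu_2$ yields $|\langle \overline{\bm{u}}_i^\star,\overline{\bm{u}}_j^\star\rangle|\lesssim\sqrt{\mu/d}$ for $i\neq j$. Combined with $\|\bm{\gamma}^{\star\tau}\|_2\lesssim\sqrt{r\log d}\,\lambda_{\max}^\star$ from Lemma~\ref{lemma:init_spectra}, a direct computation of $\bm{M}^{\star\tau}\overline{\bm{u}}_1^\star$ gives $\bm{M}^{\star\tau}\overline{\bm{u}}_1^\star = \gamma_1^{\star\tau}\overline{\bm{u}}_1^\star + \bm{\xi}$ with $\|\bm{\xi}\|_2\lesssim\lambda_{\max}^\star\sqrt{\mu r\log d/d}$, and an analogous argument (e.g.~via a QR-type factorization of $\overline{\bm{U}}^\star$) shows $\sigma_1(\bm{M}^{\star\tau})-\sigma_2(\bm{M}^{\star\tau})\gtrsim \lambda_{\min}^\star$ provided the rank constraint in \eqref{eq:init_cond} holds.

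For the second sub-task, decompose
\begin{align*}
\bm{M}^\tau-\bm{M}^{\star\tau}\;=\;\bigl(p^{-1}\mathcal{P}_\Omega(\bm{T}^\star)-\bm{T}^\star\bigr)\times_3\bm{\theta}^\tau\;+\;p^{-1}\mathcal{P}_\Omega(\bm{E})\times_3\bm{\theta}^\tau.
\end{align*}
Gaussian concentration of $\bm{\theta}^\tau=\bm{U}\bm{U}^\top\bm{g}^\tau$ gives $\|\bm{\theta}^\tau\|_2\lesssim\sqrt{r}$ with high probability, and combining this with the $\|\cdot\|_{2,\infty}$ bound \eqref{claim:thm:eigsp_dist_2inf} on $\bm{U}$ together with the incoherence of $\bm{U}^\star_{\mathsf{orth}}$ yields $\|\bm{\theta}^\tau\|_\infty\lesssim\sqrt{\mu r\log d/d}$. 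Matrix Bernstein applied to the two summands, taken uniformly over the class of deterministic direction vectors obeying these magnitude and de-localization bounds (to decouple from the dependence of $\bm{\theta}^\tau$ on $\Omega$ and $\bm{E}$), then yields
\begin{align*}
\|\bm{M}^\tau-\bm{M}^{\star\tau}\|\;\lesssim\;\frac{\mu r\log d}{d\sqrt{p}}\,\lambda_{\max}^\star \;+\; \sigma\sqrt{\frac{rd\log^2 d}{p}}.
\end{align*}

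Wedin's inequality then combines the two pieces:
$\|\overline{\bm{u}}^\tau-\overline{\bm{u}}_1^\star\|_2 \lesssim \|\bm{M}^\tau-\bm{M}^{\star\tau}\|/\lambda_{\min}^\star + \sqrt{\mu r\log d/d}$, the additive term capturing the non-orthogonality correction from sub-task (i). Dividing the perturbation bound by $\lambda_{\min}^\star$ reproduces the three summands in \eqref{claim:init_2_loss}, and the sign convention \eqref{eq:u1-bar-condition} fixes the global sign inherent in $\sin\Theta$-style statements. The main obstacle is the second sub-task: obtaining a spectral-norm bound on $(p^{-1}\mathcal{P}_\Omega-\mathrm{id})(\bm{T}^\star+\bm{E})\times_3 \bm{\theta}^\tau$ that is sharp enough to deliver the $\sqrt{\mu r\log d/d}$ rate \emph{and} robust to the statistical dependence of $\bm{\theta}^\tau$ on the sampling pattern and the noise, for which either a careful covering argument over directions inside the estimated subspace, or a leave-one-out decoupling in the spirit of Section~\ref{sec:LOO-sequence}, seems unavoidable.
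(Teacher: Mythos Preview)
Your overall plan (Wedin applied to $\bm{M}^\tau$ versus a clean surrogate, with the spectral gap coming from the event $\mathcal{A}$ and the incoherence of $\{\overline{\bm{u}}_i^\star\}$) is exactly the paper's, and your treatment of sub-task~(i) is essentially correct. The paper uses a slightly different clean matrix: it sets $\bm{M}^{\star\tau}=\gamma_1^{\star\tau}\overline{\bm{u}}_1^\star\overline{\bm{u}}_1^{\star\top}+\sum_{s\neq 1}\gamma_s^{\star\tau}(\bm{I}-\overline{\bm{u}}_1^\star\overline{\bm{u}}_1^{\star\top})\overline{\bm{u}}_s^\star\overline{\bm{u}}_s^{\star\top}(\bm{I}-\overline{\bm{u}}_1^\star\overline{\bm{u}}_1^{\star\top})$ so that $\overline{\bm{u}}_1^\star$ is an \emph{exact} eigenvector, pushing your residual $\bm{\xi}$ into a separate perturbation term $\bm{C}^\tau$. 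This is cosmetic; your ``approximate eigenvector plus additive $\sqrt{\mu r\log d/d}$'' formulation yields the same contribution.

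The substantive difference is in sub-task~(ii), where you overcomplicate things. The covering argument you sketch would not work as stated: the set of vectors with the given $\ell_2$ and $\ell_\infty$ bounds has covering number exponential in $d$, so uniformizing matrix Bernstein over it would cost a $\sqrt{d}$ factor and destroy the rate. The paper sidesteps the dependence issue by two simple tricks. First, for the spectral norm (needed only to certify the gap $\sigma_1(\bm{M}^\tau)-\sigma_2(\bm{M}^\tau)\gtrsim\lambda_{\min}^\star$), it writes $\bm{\theta}^\tau=\bm{\theta}^{\star\tau}+(\bm{\theta}^\tau-\bm{\theta}^{\star\tau})$ with $\bm{\theta}^{\star\tau}:=\bm{U}_{\mathsf{orth}}^\star\bm{U}_{\mathsf{orth}}^{\star\top}\bm{g}^\tau$; the first piece is independent of $(\Omega,\bm{E})$ so Lemma~\ref{lemma:T_loss_times3_op_norm} applies directly, and the second piece has $\ell_2$ norm $\lesssim\EE_{\mathsf{se}}\sqrt{r\log d}\ll 1$ so the crude bound $\|p^{-1}\bm{T}-\bm{T}^\star\|\cdot\|\bm{\theta}^\tau-\bm{\theta}^{\star\tau}\|_2$ suffices. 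Second, and more importantly, the paper does \emph{not} use the spectral norm in the Wedin numerator; it uses the one-sided version with $\big\|(\bm{M}^\tau-\bm{M}^{\star\tau})\overline{\bm{u}}_1^\star\big\|_2$, and controls this via
\[
\big\|\bm{F}^\tau\overline{\bm{u}}_1^\star\big\|_2=\big\|(p^{-1}\bm{T}-\bm{T}^\star)\times_2\overline{\bm{u}}_1^\star\times_3\bm{\theta}^\tau\big\|_2\le\big\|(p^{-1}\bm{T}-\bm{T}^\star)\times_2\overline{\bm{u}}_1^\star\big\|\cdot\|\bm{\theta}^\tau\|_2.
\]
Since $\overline{\bm{u}}_1^\star$ is deterministic, the mode-2 contraction is handled by Lemma~\ref{lemma:T_loss_times3_op_norm} for a fixed vector, and $\|\bm{\theta}^\tau\|_2\lesssim\sqrt{r\log d}$ holds with high probability regardless of dependence. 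This yields exactly the first two terms of $\EE_\proj$ with no covering or leave-one-out needed, and the paper's operator-norm bound on $\bm{F}^\tau$ (which carries extra log factors) is used only in the denominator where it merely has to be $\ll\lambda_{\min}^\star$.
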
\begin{proof} See Appendix~\ref{pf:init_2_loss}. \end{proof}

Thus far, we have focused on the $\ell_2$ estimation errors.  In order to further quantify the $\ell_{\infty}$ estimation errors, we need to resort to the leave-one-out estimates $\overline{\bm{u}}^{\tau, \m}$ ($1\leq m\leq d$).  Specifically, we shall  justify in the following two lemmas that: (1) the $m$-th leave-one-out estimate $\overline{\bm{u}}^{\tau, \m}$ is close to the truth at least in the $m$-th coordinate, and (2) the vector $\overline{\bm{u}}^{\tau}$ is extremely close to each of the leave-one-out estimates $\overline{\bm{u}}^{\tau, \m}$ ($1\leq m\leq d$). These two observations taken collectively translate to the desired entrywise error control of $\overline{\bm{u}}^{\tau}$. Here, we recall that the global sign of $\overline{\bm{u}}^{\tau, \m}$ (cf.~\eqref{eq:u1-bar-loo-condition})
and the global sign of $\overline{\bm{u}}^{\tau}$ (cf.~\eqref{eq:u1-bar-condition}) are defined in a similar fashion, both using the ground truth information.

\begin{lemma}
\label{lemma:init_loo_entry_loss} Instate the assumptions of Theorem~\ref{thm:init}.
Define $\mathcal{A}$ to be the event such that 
$\gamma_{1}^{\star\tau}-\max_{1<i\leq r}\big| \gamma_{i}^{\star\tau}\big| \gtrsim \lambda_{\min}^\star$
and the condition \eqref{eq:gamma-L2-norm-bound} hold. 
Then conditional on this
event $\mathcal{A}$, one has, with probability exceeding $1-O\left(d^{-10}\right)$,
that
\begin{align}
\Big|\big[ \overline{\bm{u}}^{\tau,\left(m\right)}-\overline{\bm{u}}_{1}^{\star}\big]_{m}\Big| & \lesssim \EE_\op \sqrt{\frac{\mu r \log d}{d}}, \label{claim:init_loo_entry_loss}
\end{align}
holds for all $m \in \[ d \]$, where $\EE_\op$ is defined as follows:
\begin{align}
\EE_\op := \frac{\sqrt{\mu r} \,  \log^{3}d}{d^{3/2}p}+\frac{\mu  r \log^{5/2}d}{d\sqrt{p}}+ \frac{\sigma}{\lambda_{\min}^\star}\frac{\log^{7/2}d}{p}+\frac{\sigma}{\lambda_{\min}^\star}\sqrt{\frac{rd\log^{5}d}{p}}+\sqrt{\frac{\mu  r\log d}{d}}. \label{def:err-op}
\end{align} 
\end{lemma}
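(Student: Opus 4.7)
My plan is to exploit two key structural facts: (i) $\bm{M}^{\tau,(m)}$ is a symmetric matrix, so $\overline{\bm{u}}^{\tau,(m)}$ is its top eigenvector and satisfies a clean row-by-row eigenvalue identity; and (ii) by construction of the leave-one-out tensor $\bm{T}^{(m)}$, every triple whose first index is $m$ lies in $\Omega_m$, so $[p^{-1}\bm{T}^{(m)}]_{m,\cdot,\cdot}$ is \emph{exactly} $\bm{T}^\star_{m,\cdot,\cdot}$. Thus the randomness in the $m$-th slice has been completely removed, and the $m$-th coordinate of $\overline{\bm{u}}^{\tau,(m)}$ can be written as a deterministic combination of the ground-truth tensor factors plus small controllable perturbations.

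Concretely, since $\bm{M}^{\tau,(m)}=p^{-1}\bm{T}^{(m)}\times_{3}\bm{\theta}^{\tau,(m)}$ is symmetric, we have the identity
\begin{align*}
\lambda^{\tau,(m)}\,\overline{u}^{\tau,(m)}_m
= \big\langle [\bm{M}^{\tau,(m)}]_{m,:},\,\overline{\bm{u}}^{\tau,(m)}\big\rangle,
\end{align*}
where $\lambda^{\tau,(m)}$ is the leading eigenvalue of $\bm{M}^{\tau,(m)}$. The observation above yields $[\bm{M}^{\tau,(m)}]_{m,:} = [\bm{T}^{\star}\times_{3}\bm{\theta}^{\tau,(m)}]_{m,:}=\sum_{i=1}^{r}\gamma_{i}^{\star\tau,(m)}\,\overline{u}^{\star}_{i,m}\,\overline{\bm{u}}^{\star}_{i}$, so that
\begin{align*}
\lambda^{\tau,(m)}\,\overline{u}^{\tau,(m)}_m
= \gamma_{1}^{\star\tau,(m)}\,\overline{u}^{\star}_{1,m}\,\big\langle \overline{\bm{u}}^{\star}_{1},\overline{\bm{u}}^{\tau,(m)}\big\rangle
 + \sum_{i\neq 1}\gamma_{i}^{\star\tau,(m)}\,\overline{u}^{\star}_{i,m}\,\big\langle \overline{\bm{u}}^{\star}_{i},\overline{\bm{u}}^{\tau,(m)}\big\rangle.
\end{align*}
The game is to show that the first term on the right, after dividing by $\lambda^{\tau,(m)}$, is essentially $\overline{u}^{\star}_{1,m}$, while the other terms contribute only $\EE_{\op}\sqrt{\mu r \log d / d}$.

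Each of the following ingredients is then invoked:
\textbf{(a)} $\lambda^{\tau,(m)}\approx \gamma_{1}^{\star\tau,(m)}$ and $\langle \overline{\bm{u}}^{\star}_{1},\overline{\bm{u}}^{\tau,(m)}\rangle \approx 1$: both follow from the $\ell_2$ proximity $\|\overline{\bm{u}}^{\tau,(m)}-\overline{\bm{u}}^{\star}_{1}\|_2\lesssim \EE_{\proj}$. The latter is the leave-one-out counterpart of Lemma~\ref{lemma:init_2_loss}, obtained either by re-running the same proof with $\bm{T}$ replaced by $\bm{T}^{(m)}$ and $\bm{U}$ by $\bm{U}^{(m)}$ (which only removes noise, thus preserving the bound), or by combining Lemma~\ref{lemma:init_2_loss} with a perturbation argument that uses Lemma~\ref{lemma:U_U_loo_dist_op} to control $\|\bm{\theta}^{\tau,(m)}-\bm{\theta}^{\tau}\|_2$.
\textbf{(b)} For $i\neq 1$, the incoherence $\mu_1$ gives $|\overline{u}^{\star}_{i,m}|\lesssim \sqrt{\mu/d}$; Lemma~\ref{lemma:init_spectra}, in particular the Gaussian-concentration estimate \eqref{eq:sigma1-UB}, yields $|\gamma_{i}^{\star\tau,(m)}|\lesssim \sqrt{\log d}\,\lambda_{\max}^{\star}$; and $|\langle \overline{\bm{u}}^{\star}_{i},\overline{\bm{u}}^{\tau,(m)}\rangle|\lesssim |\langle \overline{\bm{u}}^{\star}_{i},\overline{\bm{u}}^{\star}_{1}\rangle| + \EE_{\proj} \lesssim \sqrt{\mu/d} + \EE_{\proj}$ by the $\mu_2$ near-orthogonality together with the $\ell_2$ proximity. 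Summed over $i$, these three factors combine cleanly with the lower bound $\lambda^{\tau,(m)}\gtrsim \lambda_{\min}^{\star}$ to recover exactly the $\EE_{\op}\sqrt{\mu r\log d / d}$ tail.

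The main obstacle will be bookkeeping: producing a bound whose precise polynomial-log dependence matches the definition of $\EE_{\op}$ (note the extra $\sqrt{\log d}$ and $\log^{3/2}d$ factors relative to $\EE_{\proj}$, which arise from (i) the union bound over $m\in[d]$ and $i\in[r]$, (ii) the $\sqrt{\log d}$ Gaussian bound on $\gamma_{i}^{\star\tau,(m)}$ in step~(b), and (iii) controlling cross-terms between $\overline{\bm{u}}^{\tau,(m)}-\overline{\bm{u}}^{\star}_1$ and $\overline{u}^{\star}_{i,m}$ which requires a second application of the row-wise $\|\cdot\|_{2,\infty}$ bound from Lemma~\ref{lemma:eigsp_dist_op}). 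A secondary, more technical point is to verify that the sign convention forcing $\langle \overline{\bm{u}}^{\tau,(m)},\overline{\bm{u}}^{\star}_{1}\rangle\ge 0$ is compatible with the sign of $\lambda^{\tau,(m)}$ so that the division in the key identity proceeds without ambiguity; this can be justified by showing $\lambda^{\tau,(m)}$ is close to $\gamma_{1}^{\star\tau,(m)}>0$ on the event $\mathcal A$.
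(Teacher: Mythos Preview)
Your approach is structurally the same as the paper's: both use the eigenvector row identity for $[\bm{M}^{\tau,(m)}]_{m,:}$ together with the crucial observation that this row equals $[\bm{T}^{\star}\times_{3}\bm{\theta}^{\tau,(m)}]_{m,:}$ exactly (the paper makes this explicit when computing its $\alpha_{2}$ term). The paper organizes the error as a three-term telescope $\alpha_{1}+\alpha_{2}+\alpha_{3}$ corresponding to the eigenvalue shift, the row shift, and the eigenvector shift, while you expand directly in the basis $\{\overline{\bm{u}}_{i}^{\star}\}$; the ingredients are the same.

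There is, however, a genuine gap in your step~(a). The claim that $\lambda^{\tau,(m)}\approx\gamma_{1}^{\star\tau,(m)}$ ``follows from the $\ell_{2}$ proximity $\|\overline{\bm{u}}^{\tau,(m)}-\overline{\bm{u}}_{1}^{\star}\|_{2}\lesssim\EE_{\proj}$'' is not correct: closeness of eigenvectors does not by itself yield closeness of eigenvalues. What is actually required is Weyl's inequality applied to the matrix perturbation $\|\bm{M}^{\tau,(m)}-\bm{M}^{\star\tau}\|$, which the paper bounds by $\EE_{\op}\,\lambda_{\min}^{\star}$ via Lemma~\ref{lemma:init_pertur_op} and Lemma~\ref{lemma:M_loo_op_loss} (ultimately relying on the tensor spectral norm estimate of Corollary~\ref{cor:bound-PE}). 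This is exactly the paper's term $\alpha_{1}$, and it is \emph{here}---not in any union bound over $m$, nor in the Gaussian bound on $\gamma_{i}^{\star\tau,(m)}$, nor in a second application of $\ell_{2,\infty}$ subspace control---that the extra logarithmic factors in $\EE_{\op}$ originate. As your argument stands, the $i=1$ term leaves the quantity $|\gamma_{1}^{\star\tau,(m)}-\lambda^{\tau,(m)}|\cdot|\overline{u}_{1,m}^{\star}|/\lambda^{\tau,(m)}$ uncontrolled, and your bookkeeping explanation for $\EE_{\op}$ is a misattribution. Once the operator-norm perturbation bound is inserted at this step, your argument completes and is equivalent to the paper's.
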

\begin{proof} See Appendix~\ref{pf:init_loo_entry_loss}. \end{proof}

\begin{lemma} \label{lemma:init_loo_l2_loss}
Instate the assumptions of Theorem~\ref{thm:init}. Define $\mathcal{A}$
to be the event such that 
$\gamma_{1}^{\star\tau}-\max_{1<i\leq r}\big| \gamma_{i}^{\star\tau}\big| \gtrsim \lambda_{\min}^\star$
and the condition \eqref{eq:gamma-L2-norm-bound} hold. 
Then conditional on this
event $\mathcal{A}$, one has, with probability at least $1-O\left(d^{-10}\right)$, for all $m \in \[ d \]$:
\begin{align}
\big\|\overline{\bm{u}}^{\tau}-\overline{\bm{u}}^{\tau, \m}\big\|_{2} & \lesssim \EE_\loo \sqrt{\frac{\mu r \log d}{d}}, \label{claim:init_loo_l2_loss} \\
 \big\|\overline{\bm{u}}^{\tau}-\overline{\bm{u}}_{1}^{\star}\big\|_{\infty} & \lesssim  \( \EE_\op + \EE_\loo \) \sqrt{\frac{\mu r \log d}{d}}, \label{claim:init_inf_loss} \\
\big| \lambda_{\tau} - \lambda_{\tau}^{\m} \big| & \lesssim \EE_\loo \sqrt{\frac{\mu r \log d}{d}} \, \lambda_{\max}^\star, \label{claim:init_loo_coeff_loss}
\end{align}
where $\EE_\op$ and $\EE_\loo$ are defined in (\ref{def:err-op}) and (\ref{def:err-loo}), respectively.
\end{lemma}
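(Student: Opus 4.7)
The plan is to treat the three claims sequentially, with the first claim doing most of the heavy lifting and the latter two following from it together with results already in hand.

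For \eqref{claim:init_loo_l2_loss}, the strategy is a Davis--Kahan / Wedin-style perturbation argument applied to the $d\times d$ matrices $\bm{M}^{\tau}$ and $\bm{M}^{\tau,\m}$. Since the event $\mathcal{A}$ (via Lemma~\ref{lemma:init_spectra}) guarantees a spectral gap of order $\lambda_{\min}^{\star}$ for $\bm{T}^{\star}\times_{3}\bm{\theta}^{\tau}$, and since Lemma~\ref{lemma:init_2_loss} already controls $\|\bm{M}^{\tau}-\bm{T}^{\star}\times_{3}\bm{\theta}^{\tau}\|$, the same order of spectral gap is inherited by $\bm{M}^{\tau}$, which is what Wedin's $\sin\Theta$ theorem requires. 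It then suffices to bound the perturbation
\begin{align*}
\bm{M}^{\tau}-\bm{M}^{\tau,\m}
 \;=\; \tfrac{1}{p}\bigl(\bm{T}-\bm{T}^{\m}\bigr)\times_{3}\bm{\theta}^{\tau}
 \;+\; \tfrac{1}{p}\bm{T}^{\m}\times_{3}\bigl(\bm{\theta}^{\tau}-\bm{\theta}^{\tau,\m}\bigr).
\end{align*}
The first term is supported only on the $m$-th slices, so its operator norm can be controlled by a standard slice-wise Bernstein/matrix-Bernstein bound, yielding the $\frac{\mu^{2}r\log^{3/2}d}{d^{3/2}p}+\frac{\sigma^{2}}{\lambda_{\min}^{\star 2}}\frac{d^{3/2}\log^{3/2}d}{p}$ contributions in $\EE_\loo$. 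The second term is where Lemma~\ref{lemma:U_U_loo_dist_op} enters: we write $\bm{\theta}^{\tau}-\bm{\theta}^{\tau,\m}=(\bm{U}\bm{U}^{\top}-\bm{U}^{\m}\bm{U}^{\m\top})\bm{g}^{\tau}$ and combine $\|\bm{U}\bm{U}^{\top}-\bm{U}^{\m}\bm{U}^{\m\top}\|_{\mathrm{F}}\lesssim \EE_{\mathsf{loo}}\sqrt{\mu r/d}$ with a high-probability bound $\|\bm{g}^{\tau}\|_{2}\lesssim\sqrt{d}$ (true with exponentially small failure probability), then use the already-available spectral-norm bound on $p^{-1}\bm{T}^{\m}$ (inherited from $\|p^{-1}\bm{T}-\bm{T}^{\star}\|$ plus $\|\bm{T}^{\star}\|$). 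Dividing the resulting operator-norm perturbation by the spectral gap $\lambda_{\min}^{\star}$ of $\bm{M}^{\tau}$ delivers \eqref{claim:init_loo_l2_loss}, up to a global-sign check ensured by the common anchoring $\langle\overline{\bm{u}}^{\tau},\overline{\bm{u}}_{1}^{\star}\rangle,\langle\overline{\bm{u}}^{\tau,\m},\overline{\bm{u}}_{1}^{\star}\rangle\geq 0$.

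The $\ell_{\infty}$ bound \eqref{claim:init_inf_loss} then follows by a one-line leave-one-out triangle inequality: for every $m\in[d]$,
\begin{align*}
 \bigl|[\overline{\bm{u}}^{\tau}-\overline{\bm{u}}_{1}^{\star}]_{m}\bigr|
 \;\leq\; \bigl|[\overline{\bm{u}}^{\tau}-\overline{\bm{u}}^{\tau,\m}]_{m}\bigr|+\bigl|[\overline{\bm{u}}^{\tau,\m}-\overline{\bm{u}}_{1}^{\star}]_{m}\bigr|
 \;\leq\; \|\overline{\bm{u}}^{\tau}-\overline{\bm{u}}^{\tau,\m}\|_{2}+\bigl|[\overline{\bm{u}}^{\tau,\m}-\overline{\bm{u}}_{1}^{\star}]_{m}\bigr|,
\end{align*}
plugging in \eqref{claim:init_loo_l2_loss} for the first term and Lemma~\ref{lemma:init_loo_entry_loss} for the second, then taking the maximum over $m$.

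For \eqref{claim:init_loo_coeff_loss}, write
\begin{align*}
 \lambda_{\tau}-\lambda_{\tau}^{\m}
 \;=\; \bigl\langle p^{-1}(\bm{T}-\bm{T}^{\m}),(\bm{\nu}^{\tau})^{\otimes 3}\bigr\rangle
 \;+\; \bigl\langle p^{-1}\bm{T}^{\m},(\bm{\nu}^{\tau})^{\otimes 3}-(\bm{\nu}^{\tau,\m})^{\otimes 3}\bigr\rangle.
\end{align*}
The first inner product is a sum over entries in the $m$-th slices only; concentration (again slice-wise Bernstein), together with the incoherence bound $\|\bm{\nu}^{\tau}\|_{\infty}\lesssim \sqrt{\mu r/d}$ supplied by \eqref{claim:init_inf_loss}, yields a bound of the desired order. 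The second inner product is handled by the standard multilinear expansion
\[(\bm{\nu}^{\tau})^{\otimes 3}-(\bm{\nu}^{\tau,\m})^{\otimes 3}=(\bm{\nu}^{\tau}-\bm{\nu}^{\tau,\m})\otimes(\bm{\nu}^{\tau})^{\otimes 2}+\bm{\nu}^{\tau,\m}\otimes(\bm{\nu}^{\tau}-\bm{\nu}^{\tau,\m})\otimes\bm{\nu}^{\tau}+(\bm{\nu}^{\tau,\m})^{\otimes 2}\otimes(\bm{\nu}^{\tau}-\bm{\nu}^{\tau,\m}),\]
combined with $\|p^{-1}\bm{T}^{\m}\|\lesssim\lambda_{\max}^{\star}$ and the $\ell_{2}$ bound \eqref{claim:init_loo_l2_loss}, giving the $\lambda_{\max}^{\star}$ prefactor.

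The main obstacle is controlling the operator norm $\|(\bm{T}-\bm{T}^{\m})\times_{3}\bm{\theta}^{\tau}\|$ sharply: because this residual tensor is supported on slices intersecting index $m$ and $\bm{\theta}^{\tau}$ has $\ell_{\infty}$ mass at level $\sqrt{\mu r/d}$ (inherited from $\|\bm{U}\|_{2,\infty}$ via Lemma~\ref{lemma:eigsp_dist_op}), one must carefully separate the noise-driven contribution (scaling with $\sigma/\lambda_{\min}^{\star}$) from the missing-data contribution, and exploit the independence between $\bm{T}-\bm{T}^{\m}$ (randomness on slice $m$) and $\bm{\theta}^{\tau,\m}$ to get the correct dimensional factors. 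Matrix Bernstein applied slice-wise, together with the union bound over the $L$ trials and over $m$, provides these bounds at the $\log d$ level stated in $\EE_\loo$.
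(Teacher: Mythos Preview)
Your plan for \eqref{claim:init_inf_loss} and \eqref{claim:init_loo_coeff_loss} is correct and matches the paper. The gap is in \eqref{claim:init_loo_l2_loss}, where your operator-norm Wedin argument is too coarse to produce the factor $\sqrt{\mu r\log d/d}$.

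Concretely, in your decomposition the slice piece $(1/p)(\bm{T}-\bm{T}^{\m})\times_3\bm{\theta}^{\tau}$ has operator norm of order $\frac{\mu r\sqrt{\log d}}{d\sqrt{p}}\lambda_{\max}^{\star}+\sigma\sqrt{\frac{rd\log d}{p}}$ (this is exactly what the paper records as \eqref{eq:M_hat_loo_op_norm}). Dividing by the spectral gap $\lambda_{\min}^{\star}$ gives a bound of order $\mathcal{E}_{\mathsf{proj}}$, which exceeds the target $\mathcal{E}_{\mathsf{loo}}\sqrt{\mu r\log d/d}$ by a factor $\sqrt{d/(\mu\log d)}$. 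So Wedin applied with only the operator norm of $\bm{M}^{\tau}-\bm{M}^{\tau,\m}$ cannot yield the claim. (Separately, your bound $\|\bm{\theta}^{\tau}-\bm{\theta}^{\tau,\m}\|_{2}\leq\|\bm{U}\bm{U}^{\top}-\bm{U}^{\m}\bm{U}^{\m\top}\|_{\mathrm{F}}\,\|\bm{g}^{\tau}\|_{2}$ with $\|\bm{g}^{\tau}\|_{2}\asymp\sqrt{d}$ is also loose by $\sqrt{d/\log d}$; one needs Gaussian concentration $\|(\bm{U}\bm{U}^{\top}-\bm{U}^{\m}\bm{U}^{\m\top})\bm{g}^{\tau}\|_{2}\lesssim\|\bm{U}\bm{U}^{\top}-\bm{U}^{\m}\bm{U}^{\m\top}\|_{\mathrm{F}}\sqrt{\log d}$ instead.)

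The paper's fix is to route through the intermediate matrix $\widehat{\bm{M}}^{\tau,\m}=p^{-1}\bm{T}\times_{3}\bm{\theta}^{\tau,\m}$ and split $\|\overline{\bm{u}}^{\tau}-\overline{\bm{u}}^{\tau,\m}\|_{2}\leq\|\overline{\bm{u}}^{\tau}-\widehat{\bm{u}}^{\tau,\m}\|_{2}+\|\widehat{\bm{u}}^{\tau,\m}-\overline{\bm{u}}^{\tau,\m}\|_{2}$. The first piece is controlled via the operator norm $\|\bm{M}^{\tau}-\widehat{\bm{M}}^{\tau,\m}\|\lesssim\mathcal{E}_{\mathsf{loo}}\sqrt{\mu r\log d/d}\,\lambda_{\max}^{\star}$ (this is where the $\bm{\theta}^{\tau}$ vs.\ $\bm{\theta}^{\tau,\m}$ difference lives, and where Lemma~\ref{lemma:U_U_loo_dist_op} enters). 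For the second piece, the crucial point is that one applies Wedin using not the operator norm of $\widehat{\bm{M}}^{\tau,\m}-\bm{M}^{\tau,\m}$ but rather its action on the leading singular vector, $\|(\widehat{\bm{M}}^{\tau,\m}-\bm{M}^{\tau,\m})\overline{\bm{u}}^{\tau,\m}\|_{2}$; because $\widehat{\bm{M}}^{\tau,\m}-\bm{M}^{\tau,\m}$ is supported on the $m$-th slices and is independent of $\overline{\bm{u}}^{\tau,\m}$, this picks up a factor $\|\overline{\bm{u}}^{\tau,\m}\|_{\infty}$ (Lemma~\ref{lemma:M_hat_M_loo_u_2_norm}). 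One then needs a short bootstrap to show $\|\overline{\bm{u}}^{\tau,\m}\|_{\infty}\lesssim\sqrt{\mu/d}$, which is done by combining the just-derived bounds with Lemma~\ref{lemma:init_loo_entry_loss} and the triangle inequality. Without this vector-form Wedin step and the accompanying $\ell_{\infty}$ bootstrap, the extra $\sqrt{\mu r\log d/d}$ factor in \eqref{claim:init_loo_l2_loss} cannot be recovered.
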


\begin{proof} See Appendix~\ref{pf:init_loo_l2_loss}. \end{proof}

	Next, we turn to the estimation accuracy regarding the size of the tensor factors and show that $\lambda_{\tau}$ (produced in Algorithm~\ref{alg:localization}) is close to the truth as well.    As it turns out, a byproduct of this step reveals that $\bm{\nu}^{\tau}= \overline{\bm{u}}^{\tau}$ and $\bm{\nu}^{\tau, \m}= \overline{\bm{u}}^{\tau, \m}$, where  $\overline{\bm{u}}^{\tau}$ and $\overline{\bm{u}}^{\tau, \m}$ are an auxiliary vectors defined in \eqref{eq:u1-bar-condition} and \eqref{eq:u1-bar-loo-condition}, respectively.

\begin{lemma} \label{lemma:init_coeff} Instate the assumptions
	of Theorem~\ref{thm:init}. Assume that the results in
Lemma~\ref{lemma:init_2_loss}, Lemma~\ref{lemma:init_loo_entry_loss}
and Lemma~\ref{lemma:init_loo_l2_loss} hold. Then with probability
at least $1-O\left(d^{-10}\right)$, one has 
\begin{align}
\big|\lambda_{\tau}-\lambda_{1}^{\star}\big|\lesssim
\EE_\proj \lambda_{1}^{\star}. \label{claim:init_coeff_loss}
\end{align}
In particular, one has
\begin{align}
\bm{\nu}^{\tau}=\overline{\bm{u}}^{\tau} \qquad \text{and} \qquad \bm{\nu}^{\tau, \m}=\overline{\bm{u}}^{\tau, \m}, \quad 1\leq m \leq d.
\end{align}

\end{lemma}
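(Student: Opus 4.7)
My plan is to first derive the estimate \eqref{claim:init_coeff_loss} by directly computing $\langle p^{-1}\bm{T},(\overline{\bm{u}}^{\tau})^{\otimes 3}\rangle$, and then to use the positivity of this quantity to resolve the sign ambiguity between $\bm{\nu}^{\tau}$ and $\overline{\bm{u}}^{\tau}$. The starting observation is that both vectors are leading (left) singular vectors of the same matrix $\bm{M}^{\tau}$. Under event $\mathcal{A}$, Lemmas~\ref{lemma:init_2_loss}--\ref{lemma:init_loo_l2_loss} ensure that the leading singular value of $\bm{M}^{\tau}$ is well-separated from the second, so the singular vector is unique up to a global sign. Consequently $\bm{\nu}^{\tau}=\pm\overline{\bm{u}}^{\tau}$, and it only remains to pin down the sign.

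The first main task is therefore to compute $\langle p^{-1}\bm{T},(\overline{\bm{u}}^{\tau})^{\otimes 3}\rangle$ up to an additive error of $O(\EE_{\proj}\lambda_{1}^{\star})$. I would split
\begin{align*}
\langle p^{-1}\bm{T},(\overline{\bm{u}}^{\tau})^{\otimes 3}\rangle
= \underbrace{\langle\bm{T}^{\star},(\overline{\bm{u}}_{1}^{\star})^{\otimes 3}\rangle}_{\text{(I)}}
+\underbrace{\langle\bm{T}^{\star},(\overline{\bm{u}}^{\tau})^{\otimes 3}-(\overline{\bm{u}}_{1}^{\star})^{\otimes 3}\rangle}_{\text{(II)}}
+\underbrace{\langle p^{-1}\bm{T}-\bm{T}^{\star},(\overline{\bm{u}}^{\tau})^{\otimes 3}\rangle}_{\text{(III)}}.
\end{align*}
For (I), expand in the CP basis:
$\text{(I)}=\sum_{i=1}^{r}\|\bm{u}_{i}^{\star}\|_{2}^{3}\,\langle\overline{\bm{u}}_{i}^{\star},\overline{\bm{u}}_{1}^{\star}\rangle^{3}=\lambda_{1}^{\star}+\sum_{i>1}\lambda_{i}^{\star}\langle\overline{\bm{u}}_{i}^{\star},\overline{\bm{u}}_{1}^{\star}\rangle^{3}$,
and the incoherence bound $|\langle\overline{\bm{u}}_{i}^{\star},\overline{\bm{u}}_{1}^{\star}\rangle|\lesssim\sqrt{\mu/d}$ controls the cross terms by $r(\mu/d)^{3/2}\lambda_{\max}^{\star}$, which is dominated by $\EE_{\proj}\lambda_{1}^{\star}$ since $\kappa\asymp 1$. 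For (II), apply the trilinear identity $\bm{a}^{\otimes 3}-\bm{b}^{\otimes 3}=(\bm{a}-\bm{b})\otimes\bm{a}^{\otimes 2}+\bm{b}\otimes(\bm{a}-\bm{b})\otimes\bm{a}+\bm{b}^{\otimes 2}\otimes(\bm{a}-\bm{b})$, bound each rank-one contribution by $\|\bm{T}^{\star}\|\cdot\|\overline{\bm{u}}^{\tau}-\overline{\bm{u}}_{1}^{\star}\|_{2}$, and invoke $\|\bm{T}^{\star}\|\lesssim\lambda_{\max}^{\star}$ (which follows from incoherence under the rank condition) together with Lemma~\ref{lemma:init_2_loss}'s bound $\|\overline{\bm{u}}^{\tau}-\overline{\bm{u}}_{1}^{\star}\|_{2}\lesssim\EE_{\proj}$.

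The third piece (III) is the delicate one, since $\overline{\bm{u}}^{\tau}$ depends on the random data. I would further decompose
$\text{(III)}=\langle p^{-1}\bm{T}-\bm{T}^{\star},(\overline{\bm{u}}_{1}^{\star})^{\otimes 3}\rangle+\langle p^{-1}\bm{T}-\bm{T}^{\star},(\overline{\bm{u}}^{\tau})^{\otimes 3}-(\overline{\bm{u}}_{1}^{\star})^{\otimes 3}\rangle$. The first summand is a scalar linear form in the independent Bernoulli/sub-Gaussian variables evaluated against the deterministic vector $\overline{\bm{u}}_{1}^{\star}$, so a standard Bernstein argument (mirroring estimates already developed in the analysis of Lemma~\ref{lemma:init_2_loss}) using the incoherence of $\overline{\bm{u}}_{1}^{\star}$ yields a bound of order $\EE_{\proj}\lambda_{1}^{\star}$. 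The second summand is bounded by $\|p^{-1}\bm{T}-\bm{T}^{\star}\|\cdot\EE_{\proj}$ via the tensor spectral-norm perturbation bound (Corollary~\ref{cor:bound-PE}), which in our regime is $\lesssim\lambda_{\max}^{\star}\EE_{\proj}\lesssim\lambda_{1}^{\star}\EE_{\proj}$. Summing (I)--(III) gives $\langle p^{-1}\bm{T},(\overline{\bm{u}}^{\tau})^{\otimes 3}\rangle=\lambda_{1}^{\star}+O(\EE_{\proj}\lambda_{1}^{\star})$, which is the desired estimate for $\lambda_{\tau}$ once we know $\bm{\nu}^{\tau}=\overline{\bm{u}}^{\tau}$.

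To close the loop, since $\EE_{\proj}$ is small under the stated sample complexity, the quantity $\langle\bm{T},(\overline{\bm{u}}^{\tau})^{\otimes 3}\rangle=p\,\langle p^{-1}\bm{T},(\overline{\bm{u}}^{\tau})^{\otimes 3}\rangle$ is strictly positive. By definition $\bm{\nu}^{\tau}$ is the choice of sign that makes $\langle\bm{T},(\bm{\nu}^{\tau})^{\otimes 3}\rangle\geq 0$, forcing $\bm{\nu}^{\tau}=\overline{\bm{u}}^{\tau}$; the identity $\bm{\nu}^{\tau,(m)}=\overline{\bm{u}}^{\tau,(m)}$ follows by an identical argument applied to $\bm{T}^{(m)}$ and $\bm{M}^{\tau,(m)}$, replacing Lemma~\ref{lemma:init_2_loss} by the triangle-inequality consequence of Lemma~\ref{lemma:init_loo_l2_loss} (namely $\|\overline{\bm{u}}^{\tau,(m)}-\overline{\bm{u}}_{1}^{\star}\|_{2}\lesssim\EE_{\proj}$). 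The main obstacle I anticipate is the control of term (III), because $\overline{\bm{u}}^{\tau}$ is data-dependent; the trick is to split off the deterministic direction $\overline{\bm{u}}_{1}^{\star}$ (which admits concentration directly) and handle the residual via the already-established tensor spectral-norm perturbation bound, thereby avoiding any additional leave-one-out decoupling at this stage.
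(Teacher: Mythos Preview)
Your proposal is correct and follows essentially the same approach as the paper: the decomposition into (I), (II), (III) is exactly the paper's $\beta_{3}$, $\beta_{2}$, $\beta_{1}$ (up to relabeling), the bounds are obtained by the same combination of incoherence, Lemma~\ref{lemma:init_2_loss}, and the tensor spectral-norm perturbation estimates, and the sign resolution via positivity of $\langle p^{-1}\bm{T},(\overline{\bm{u}}^{\tau})^{\otimes 3}\rangle$ is identical. The only minor deviation is that for the leave-one-out sign the paper invokes the already-established closeness bound \eqref{claim:init_loo_coeff_loss} from Lemma~\ref{lemma:init_loo_l2_loss} (which controls $\langle p^{-1}\bm{T}^{(m)},(\overline{\bm{u}}^{\tau,(m)})^{\otimes 3}\rangle - \langle p^{-1}\bm{T},(\overline{\bm{u}}^{\tau})^{\otimes 3}\rangle$ directly) rather than repeating the full computation for $\bm{T}^{(m)}$, but your route works equally well.
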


\begin{proof} See Appendix~\ref{subsec:pf:init_coeff}. \end{proof}

Thus far, we have only proved that one can find a reliable estimate for each tensor factor within  $L$ random trials, provided that $L$ is sufficiently large. To finish up, it remains to show that the pruning procedure $\Call{Prune}$ is capable of returning a rough estimate for each tensor factor without duplication. This is accomplished in the following lemma.

\begin{lemma} \label{lemma:init_selection}
Instate the assumptions
	of Theorem~\ref{thm:init}. On the event that the results in Lemma~\ref{lemma:init_2_loss}, Lemma~\ref{lemma:init_loo_entry_loss}, Lemma~\ref{lemma:init_loo_l2_loss} and Lemma~\ref{lemma:init_coeff} hold for all $1 \leq i \leq r$, there exists a permutation $\pi(\cdot): [d]\mapsto[d]$ such that: for each $1 \leq i \leq r$, $ \big( \lambda_i, \bm{w}^i \big)$ and $\big(  \lambda_{\pi(i)}^\star, \overline{\bm{u}}^\star_{\pi(i)} \big)$ satisfy \eqref{eq:thm:init_2_loss}, \eqref{eq:thm:init_inf_loss} and \eqref{eq:thm:init_coeff_loss}; $ \big( \lambda_i, \bm{w}^i \big)$ and $\big( \lambda_i^{\m} , \bm{w}^{i, \m} \big)_{i=1}^r$ obey \eqref{eq:thm:init_loo_loss}, \eqref{eq:thm:init_loo_coef_loss} and \eqref{eq:thm:init_loo_entry_loss} for all $1 \leq m \leq d$, where $\big\{ \lambda_i , \bm{w}^i \big\}_{i=1}^r$ and $\big\{ \lambda_i^{\m} , \bm{w}^{i, \m} \big\}_{i=1}^r$ are outputs of Algorithm~\ref{alg:localization} and Algorithm~\ref{alg:localization_loo}, respectively.
\end{lemma}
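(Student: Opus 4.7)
The plan is to argue that the greedy pruning procedure $\Call{Prune}$ correctly pairs up each of the $r$ bookkeeping slots with a distinct true tensor factor, and then transport the error bounds already established in Lemmas~\ref{lemma:init_2_loss}--\ref{lemma:init_coeff} through this pairing. Throughout, I would work on the event $\mathcal{A}$ under which Lemma~\ref{lemma:init_spectra} holds simultaneously for all indices $i \in [r]$ (union bounding \eqref{eq:sigma-order-statistics} over $i$ and choosing $L = c_3 r^{2\kappa^2} \log^{3/2} r$ sufficiently large so that the failure probability per factor is at most $\delta/r$; this is exactly why the polynomial dependence on $r^{2\kappa^2}$ appears in $L$).

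The first step is a classification of the trials. For each $\tau \in [L]$, let $i(\tau) := \arg\max_j |\gamma_j^{\star\tau}|$; by Lemma~\ref{lemma:init_spectra}, I would show that on $\mathcal{A}$, for every factor $i \in [r]$ there is at least one trial $\tau$ with $i(\tau) = i$ and $\Delta_{i}^{\tau} \gtrsim \lammin^\star$, in which case Lemmas~\ref{lemma:init_2_loss}--\ref{lemma:init_coeff} applied to that trial give $\|\bm{\nu}^\tau - \sigma\,\overline{\bm{u}}^\star_i\|_2 \lesssim \EE_\proj$ for some sign $\sigma \in \{\pm 1\}$ (and the global sign convention ensures $\sigma = +1$, since $\langle \bm{T}, \bm{\nu}^{\tau\otimes 3}\rangle \geq 0$ matches the sign used in \eqref{eq:u1-bar-condition}), $|\lambda_\tau - \lambda_i^\star| \lesssim \EE_\proj\, \lambda_i^\star$, and (via standard Weyl/Wedin-type perturbation to the singular values of $\bm{M}^\tau$) $\mathsf{spec}\text{-}\mathsf{gap}_\tau = \sigma_1(\bm{M}^\tau) - \sigma_2(\bm{M}^\tau) \gtrsim \Delta_i^\tau - O(\EE_\proj \lammax^\star) \gtrsim \lammin^\star$. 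Conversely, any trial whose $\mathsf{spec}\text{-}\mathsf{gap}_\tau$ is at least on the order of $\lammin^\star$ must have a single dominant coordinate in $\bm{\gamma}^{\star\tau}$, and the same cluster of lemmas then identifies $\bm{\nu}^\tau$ with an $\EE_\proj$-accurate estimate of the corresponding $\overline{\bm{u}}^\star_{i(\tau)}$.

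With this dichotomy in place, the second step verifies that the threshold $\epsilon_{\mathsf{th}}$ separates ``same factor'' from ``different factor''. If $\bm{\nu}^{\tau_1}$ and $\bm{\nu}^{\tau_2}$ both approximate $\overline{\bm{u}}^\star_i$ with error $\lesssim \EE_\proj$, then $\langle \bm{\nu}^{\tau_1}, \bm{\nu}^{\tau_2}\rangle \geq 1 - O(\EE_\proj^2) > 1 - \epsilon_{\mathsf{th}}$, while if they approximate distinct factors $\overline{\bm{u}}^\star_i$ and $\overline{\bm{u}}^\star_j$ ($i\neq j$), the incoherence bound on $\mu_2$ gives $|\langle \bm{\nu}^{\tau_1},\bm{\nu}^{\tau_2}\rangle| \leq |\langle \overline{\bm{u}}^\star_i,\overline{\bm{u}}^\star_j\rangle| + O(\EE_\proj) \lesssim \sqrt{\mu/d} + \EE_\proj$, which is comfortably smaller than $1-\epsilon_{\mathsf{th}}$ under the stated choice of $\epsilon_{\mathsf{th}}$ (recall $\epsilon_{\mathsf{th}}$ is of order $\EE_\proj$ up to logarithmic factors, far smaller than $1$). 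Thus the ``duplicate removal'' in $\Call{Prune}$ correctly clusters all trials that estimate the same factor together and keeps trials pointing at different factors apart.

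The third and final step is a simple induction on the $r$ iterations of the pruning loop. At the start of iteration $i$, the surviving set $\Theta$ still contains at least one high-gap representative for each of the $r - i + 1$ factors not yet recovered (by step~2, none of those have been pruned, since they are uncorrelated with the already-chosen $\bm{w}^1,\ldots,\bm{w}^{i-1}$). Picking the largest-$\mathsf{spec}\text{-}\mathsf{gap}_\tau$ element hence selects a representative $\bm{w}^i = \bm{\nu}^\tau$ for one of these remaining factors; define $\pi(i)$ to be that factor index. After the removal step, every trial that was estimating factor $\pi(i)$ is deleted from $\Theta$, but no trial pointing at another factor is removed. Iterating yields the desired permutation $\pi$. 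Plugging into Lemmas~\ref{lemma:init_2_loss}--\ref{lemma:init_coeff} (for the $\ell_2$, $\ell_\infty$, and magnitude bounds) and into Lemma~\ref{lemma:init_loo_l2_loss} and its leave-one-out analogue (for the leave-one-out bounds on $\bm{w}^{i,(m)}$) then delivers \eqref{eq:thm:init_2_loss}--\eqref{eq:thm:init_coeff_loss} and \eqref{eq:thm:init_loo_loss}--\eqref{eq:thm:init_loo_entry_loss} respectively. The main obstacle is calibrating $L$ and $\epsilon_{\mathsf{th}}$ simultaneously: $L$ must be large enough that every factor is hit by a well-separated random projection with probability $1-\delta/r$ (this is what drives the $r^{2\kappa^2}$ factor, since the probability that a Gaussian projection selects factor $i$ as dominant scales like $r^{-\kappa^2}$ in the worst case), and $\epsilon_{\mathsf{th}}$ must thread between $O(\EE_\proj^2)$ and $1 - O(\sqrt{\mu/d} + \EE_\proj)$; verifying this feasibility under the stated parameter regime is what takes most of the technical work.
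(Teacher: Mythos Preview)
Your three-step argument correctly handles the true algorithm: the classification in step~1, the inner-product separation in step~2, and the induction in step~3 together show that $\Call{Prune}$ in Algorithm~\ref{alg:localization} outputs one accurate representative per factor, yielding \eqref{eq:thm:init_2_loss}--\eqref{eq:thm:init_coeff_loss}. This matches the second half of the paper's proof.

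There is, however, a genuine gap in how you obtain the leave-one-out claims \eqref{eq:thm:init_loo_loss}--\eqref{eq:thm:init_loo_entry_loss}. Lemma~\ref{lemma:init_loo_l2_loss} bounds $\|\bm{\nu}^\tau - \bm{\nu}^{\tau,(m)}\|_2$ and $|\lambda_\tau - \lambda_\tau^{(m)}|$ for a \emph{fixed} trial index $\tau$; to transfer these to $\|\bm{w}^i - \bm{w}^{i,(m)}\|_2$ you must know that $\bm{w}^i$ and $\bm{w}^{i,(m)}$ come from the \emph{same} $\tau$, i.e.~that the $\Call{Prune}$ loops in Algorithms~\ref{alg:localization} and~\ref{alg:localization_loo} select identical trial sequences. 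Your induction in step~3 tracks only the true procedure and never argues this synchronization.

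The paper closes this gap using the part of Lemma~\ref{lemma:init_spectra} you cite but never exploit, namely \eqref{eq:gap1-LB}: $\Delta_i^{(1)} - \Delta_i^{(2)} \gtrsim \lambda_{\min}^\star / (r\sqrt{\log d})$. Combined with $\big|\max_i \Delta_i^\tau - \mathsf{spec}\text{-}\mathsf{gap}_\tau\big| \lesssim \|\bm{M}^\tau - \bm{M}^{\star\tau}\| \ll \lambda_{\min}^\star/(r\sqrt{\log d})$ and $\|\bm{M}^\tau - \bm{M}^{\tau,(m)}\| \ll \lambda_{\min}^\star/(r\sqrt{\log d})$ (Lemma~\ref{lemma:M_loo_op_loss}), this forces the trial achieving the largest $\mathsf{spec}\text{-}\mathsf{gap}_\tau$ to coincide with the one achieving the largest $\mathsf{spec}\text{-}\mathsf{gap}_\tau^{(m)}$, at every round of $\Call{Prune}$. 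Without \eqref{eq:gap1-LB}, two different trials could have nearly tied spectral gaps in the true procedure, and the $O(\EE_{\mathsf{loo}})$ perturbation in $\bm{M}^{\tau,(m)}$ could flip their order, breaking the correspondence between $\bm{w}^i$ and $\bm{w}^{i,(m)}$.
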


\begin{proof}
See Appendix~\ref{subsec:pf:init_selection}.
\end{proof}

\section{Discussion}
\label{sec:discussion}

The current paper uncovers the possibility of efficiently and stably completing a low-CP-rank tensor from partial and noisy entries.  Perhaps somewhat unexpectedly, despite the high degree of nonconvexity,  this problem can be solved to  optimal statistical accuracy within nearly linear time, provided that the tensor of interest is well-conditioned, incoherent, and of constant rank.  To the best of our knowledge,  this intriguing message has not been shown in the prior literature.

Moving forward,  one pressing issue is to understand how to improve the algorithmic and theoretical dependency upon the tensor rank $r$ of the proposed method.  Ideally one would desire a fast algorithm whose sample complexity scales as $rd^{1.5}$,  an order that is provably achievable by the sum-of-squares hierarchy.  Additionally, in contrast to the matrix counterpart where the rank is upper bounded by the matrix dimension,  the tensor CP rank is allowed to rise above $d$,  which is commonly referred to as the over-complete case.  Unfortunately, our current initialization scheme (i.e.~the spectral method) fails to work unless $r<d$, and our local analysis for GD falls of accommodating the scenario with $r>d$. It would be of great interest to develop more powerful algorithms --- in addition to more refined analysis --- to tackle such an important  over-complete regime.

Another tantalizing research direction is the exploration of landscape design for tensor completion.  As our heuristic discussions as well as other prior work (e.g.~\cite{richard2014statistical}) suggest,  randomly initialized gradient descent tailored to \eqref{eq:loss-function} seems unlikely to work, unless the sample size is significantly larger than the computational limit.  This might  mean either that there exist spurious local minima in the natural nonconvex least squares formulation \eqref{eq:loss-function}, or that the optimization landscape of \eqref{eq:loss-function} is too flat around some saddle points and hence not amenable to fast computation.   It would be interesting to investigate what families of loss functions allow us to  rule out bad local minima and eliminate the need of careful initialization, which might be  better suited for tensor recovery problems.  

Finally,  in statistical inference and decision making,  one might not be simply satisfied with obtaining a reliable estimate for each missing entry,  but  would also like to report a short confidence interval which is likely to contain the true entry. This boils down to the fundamental task of uncertainty quantification for tensor completion,  which we leave to future investigation.

\section*{Acknowledgements}

Y.~Chen is supported in part by the AFOSR YIP award FA9550-19-1-0030,
by the ONR grant N00014-19-1-2120, by the ARO grants W911NF-20-1-0097 and W911NF-18-1-0303, by the NSF grants CCF-1907661, IIS-1900140 and  DMS-2014279, 
and by the Princeton SEAS innovation award. H.~V.~Poor is supported in part by the NSF grant
DMS-1736417. C.~Cai is supported in part by
Gordon Y.~S.~Wu Fellowships in Engineering. This work was done in part while Y.~Chen
was visiting the Kavli Institute for Theoretical Physics (supported
in part by NSF grant PHY-1748958). We thank Lanqing Yu for many helpful
discussions, and thank Yuling Yan for proofreading the paper.

\appendix

\section{Proofs for local convergence of GD}
\label{sec:proof-local-convergence}

In this section, we establish the key lemmas concerning the convergence properties of GD.  As one can easily see,  treating $\{E_{i,j,k}\}_{1\leq i,j,k\leq d}$ (resp.~$\{\chi_{i,j,k}\}_{1\leq i,j,k\leq d}$) as independent random variables --- which leads to  asymmetric versions of $\bm{E}$ and $\Omega$ --- does not affect the order of our results at all.  In light of this,  we shall adopt such an independent assumption whenever it simplifies our presentation.


\subsection{Proof of Lemma~\ref{lemma:RIC}}
\label{pf:lemma_RIC}

For notational convenience, for any matrix $\bm{M} = \[ \bm{m}_1, \dots, \bm{m}_r \] \in \R^{d \times r}$, let 
\begin{align}
	\widetilde{\bm{M}} := \[ \bm{m}_1 \ot \bm{m}_1, \dots, \bm{m}_r \ot \bm{m}_r \] \in \R^{d^2 \times r},
	\label{defn:widetilde-M}
\end{align}
where for any $\bm{a}, \bm{b}\in \mathbb{R}^d$ we denote $\bm{a}\otimes \bm{b} := {\footnotesize \left[\begin{array}{c}
a_{1}\bm{b}\\
\vdots\\
a_{d}\bm{b}
\end{array}\right]} \in \mathbb{R}^{d^2}$. 

From the Hessian expression \eqref{hessian_qf}, one can decompose 
\begin{align*}
& \vector( \bm{V} )^\top \grad^2 f (\bm{U}) \, \vector ( \bm{V} ) \\
& = \underbrace{\frac{1}{3p} \, \Big\| \PP_{\Omega} \Big( \sum_{s \in [r]} \big( \bm{v}_s \ot \bm{u}_s^{\ot 2} + \bm{u}_s \ot \bm{v}_s \ot \bm{u}_s + \bm{u}_s^{\ot 2} \ot \bm{v}_s \big) \Big) \Big\|_\frob^2 
- \frac{1}{3p} \, \Big\| \PP_{\Omega} \Big( \sum_{s \in [r]}  \big( \bm{v}_s \ot \bm{u}_s^{\star \ot 2} + \bm{u}_s^\star \ot \bm{v}_s \ot \bm{u}_s^\star + \bm{u}_s^{\star \ot 2} \ot \bm{v}_s \big) \Big) \Big\|_\frob^2 }_{=: \, \alpha_1}\\
&\quad + \underbrace{\frac{1}{3p} \, \Big\| \PP_{\Omega} \Big( \sum_{s \in [r]} \big( \bm{v}_s \ot \bm{u}_s^{\star \ot 2} + \bm{u}_s^\star \ot \bm{v}_s \ot \bm{u}_s^\star + \bm{u}_s^{\star \ot 2} \ot \bm{v}_s \big)  \Big) \Big\|_\frob^2 
- \frac{1}{3} \, \Big\| \sum_{s \in [r]} \big( \bm{v}_s \ot \bm{u}_s^{\star \ot 2} + \bm{u}_s^\star \ot \bm{v}_s \ot \bm{u}_s^\star + \bm{u}_s^{\star \ot 2} \ot \bm{v}_s \big) \Big\|_\frob^2  }_{=: \, \alpha_2} \\
& \quad + 2 \underbrace{ \Big \langle \frac{1}{p} \PP_{\Omega} \Big( \sum\nolimits_{s \in [r]} \bm{u}_s^{\ot 3} - \bm{T}^\star \Big), \sum\nolimits_{s \in [r]} \bm{v}_s^{\otimes 2} \otimes \bm{u}_s \Big \rangle }_{=: \, \alpha_3} 
+ \underbrace{ \frac{1}{3} \, \Big\| \sum_{s \in [r]} \big( \bm{v}_s \ot \bm{u}_s^{\star \ot 2} + \bm{u}_s^\star \ot \bm{v}_s \ot \bm{u}_s^\star + \bm{u}_s^{\star \ot 2} \ot \bm{v}_s \big) \Big\|_\frob^2 }_{=: \, \alpha_4}.
\end{align*}
In what follows, we shall bound each of the above terms separately.

\subsubsection{Bounding $\alpha_4$}

With regards to $\alpha_4$, by symmetry we have
\begin{align}
\alpha_4 = \Big\| \sum\nolimits_{s \in [r]} \bm{v}_s \ot \bm{u}_s^{\star \ot 2} \Big\|_\frob^2 + 2  \, \Big \langle \sum\nolimits_{s \in [r]} \bm{v}_s \ot \bm{u}_s^{\star \ot 2} , \sum\nolimits_{s \in [r]} \bm{u}_s^{\star \ot 2} \ot \bm{v}_s \Big \rangle. \label{eq:RIC_alpha4}
\end{align}
In order to control \eqref{eq:RIC_alpha4}, we first see that 
\begin{equation}
\Big\| \sum\nolimits_{s \in [r]} \bm{v}_s \ot \bm{u}_s^{\star \ot 2} \Big\|_\frob = \big\| \bm{V} \widetilde{\bm{U}}^{\star \top} \big\|_\frob,
\end{equation} 
where $\widetilde{\bm{U}}^{\star}$ is as defined in  \eqref{defn:widetilde-M}. Similar to the proof of Lemma~\ref{lemma:incoh}, we can use the fact that $\langle \bm{u}_i^{\star \ot 2}, \bm{u}_j^{\star \ot 2} \rangle = \langle \bm{u}_i^{\star}, \bm{u}_j^{\star} \rangle^2$ and \eqref{asmp_corr} to deduce that
\begin{align}
\label{eq:sigval_tilde_Utrue}
\sigma_{\min} \big(  \widetilde{\bm{U}}^{\star} \big) = \lambda_{\min}^{\star 2 / 3} \(1+o \(1 \) \) \qquad \text{and} \qquad  \mathrm \sigma_{\max} \big( \widetilde{\bm{U}}^{\star} \big) = \lambda_{\max}^{\star 2 / 3} \(1+o\(1\)\), 
\end{align}
provided that $r \ll d / \mu$.
This implies that
\begin{align}
\label{eq:VU-LB-UB}
\frac{19}{20} \lambda_{\min}^{\star 2/3} \norm{\bm{V}}_\frob \leq \sigma_{\min} \big(  \widetilde{\bm{U}}^{\star} \big) \norm{\bm{V}}_\frob \leq \big\| \bm{V} \widetilde{\bm{U}}^{\star \top} \big\|_\frob \leq \sigma_{\max} \big(  \widetilde{\bm{U}}^{\star} \big) \norm{\bm{V}}_\frob \leq \frac{11}{10} \lambda_{\max}^{\star 2 / 3} \norm{\bm{V}}_\frob.
\end{align}
%

(1) Speaking of an upper bound on $\alpha_4$, 
we can invoke the Cauchy-Schwarz inequality followed by \eqref{eq:VU-LB-UB} to reach
\begin{align}
\alpha_4 \leq 3 \, \Big\| \sum\nolimits_{s \in [r]} \bm{v}_s \ot \bm{u}_s^{\star \ot 2} \Big\|_\frob^2 = 3 \, \big\| \bm{V} \widetilde{\bm{U}}^{\star \top} \big\|_\frob^2 \leq \frac{7}{2} \lambda_{\max}^{\star 4 / 3} \norm{\bm{V}}_\frob^2. 
\label{eq:UB-alpha4-RIC-proof}
\end{align}
%

(2) When it comes to lower bounding $\alpha_4$, the main step boils down to controlling the inner product term in \eqref{eq:RIC_alpha4}. 
Applying the Cauchy-Schwartz inequality gives that
\begin{align*}
\Big \langle \sum\nolimits_{s \in [r]} \bm{v}_s \ot \bm{u}_s^{\star \ot 2}, \sum\nolimits_{s \in [r]} \bm{u}_s^{\star \ot 2} \ot \bm{v}_s \Big \rangle 
& = \sum_{s \in [r]} \left\langle \bm{v}_s, \bm{u}_s^\star \right \rangle^2 \norm{\bm{u}_s^\star}_2^2 + \sum_{s_1 \neq s_2} \left\langle \bm{v}_{s_1}, \bm{u}^\star_{s_2} \right \rangle \left\langle \bm{u}^\star_{s_1}, \bm{v}_{s_2} \right \rangle \left\langle \bm{u}^\star_{s_1}, \bm{u}^\star_{s_2} \right \rangle \\
& \geq  -  \max_{s_1 \neq s_2} \left| \left\langle \bm{u}^\star_{s_1}, \bm{u}^\star_{s_2} \right \rangle \right| \sum_{s_1 \neq s_2} \norm{\bm{v}_{s_1}}_2 \norm{\bm{u}^\star_{s_1}}_2 \norm{\bm{v}_{s_2}}_2  \norm{\bm{u}^\star_{s_2}}_2  \\
& \geq  -  \max_{s_1 \neq s_2} \left| \left\langle \bm{u}^\star_{s_1}, \bm{u}^\star_{s_2} \right \rangle \right|  \Big(\sum\nolimits_{s\in[r]}\left\|\bm{v}_{s} \right\|_{2} \left\|\bm{u}^\star_{s} \right\|_{2}\Big)^{2} \\
& \overset{\text{(i)}}{\geq} -  \max_{s_1 \neq s_2} \left| \left\langle \bm{u}^\star_{s_1}, \bm{u}^\star_{s_2} \right \rangle \right| \norm{\bm{U}^\star}_\frob^2 \norm{\bm{V}}_\frob^2 \\
& \geq - r \sqrt{\frac{\mu}{d}} \, \lammax^{\star 4/3} \norm{\bm{V}}_\frob^2 
 \geq - \frac{1}{40} \lammin^{\star 4/3} \norm{\bm{V}}_\frob^2 ,
\end{align*}
where (i) comes from Cauchy-Schwarz, and the last line follows from \eqref{asmp_corr}, \eqref{asmp_coef} as well as the condition that $r \ll \sqrt{d / \mu}$ and $\kappa \asymp 1$.
Therefore, we can lower bound $\alpha_4$ by (with the assistance of \eqref{eq:VU-LB-UB}) 
\begin{align}
\alpha_4 & = \, \Big\| \sum_{s \in [r]} \bm{v}_s \ot \bm{u}_s^{\star \ot 2} \Big\|_\frob^2 + 2 \, \Big \langle \sum_{s \in [r]} \bm{v}_s \ot \bm{u}_s^{\star \ot 2} , \sum_{s \in [r]} \bm{u}_s^{\star \ot 2} \ot \bm{v}_s \Big \rangle \nonumber\\
& \geq \frac{19}{20} \lambda_{\min}^{\star 4/3} \norm{\bm{V}}_\frob^2 - \frac{1}{20} \lammin^{\star 4/3} \norm{\bm{V}}_\frob^2 
~\geq~ \frac{9}{10} \lambda_{\min}^{\star 4/3} \norm{\bm{V}}_\frob^2.
\end{align}


\subsubsection{Bounding $\alpha_1$}

When it comes to $\alpha_1$, we can expand
\begin{align*}
\Big\| & \PP_{\Omega} \Big( \sum\nolimits_{s \in [r]}  \bm{v}_s \ot \bm{u}_s^{\ot 2} \Big) + \PP_{\Omega} \Big( \sum\nolimits_{s \in [r]} \bm{u}_s \ot \bm{v}_s \ot \bm{u}_s \Big) + \PP_{\Omega} \Big( \sum\nolimits_{s \in [r]} \bm{u}_s^{\ot 2} \ot \bm{v}_s  \Big) \Big\|_\frob^2 \\
& = 3 \, \Big\| \PP_{\Omega} \Big( \sum\nolimits_{s \in [r]}  \bm{v}_s \ot \bm{u}_s^{\ot 2} \Big) \Big\|_\frob^2 + 6 \, \Big \langle \PP_\Omega \Big( \sum\nolimits_{s \in [r]} \bm{v}_s \ot \bm{u}_s^{\ot 2} \Big), \PP_\Omega \Big( \sum\nolimits_{s \in [r]} \bm{u}_s^{\ot 2} \ot \bm{v}_s \Big) \Big \rangle;
\end{align*}
 we can decompose $\big\| \PP_{\Omega} \big( \sum\nolimits_{s \in [r]}  \bm{v}_s \ot \bm{u}_s^{\star \ot 2} \big) + \PP_{\Omega} \big( \sum\nolimits_{s \in [r]} \bm{u}_s^\star \ot \bm{v}_s \ot \bm{u}_s^\star \big) + \PP_{\Omega} \big( \sum\nolimits_{s \in [r]} \bm{u}_s^{\star \ot 2} \ot \bm{v}_s  \big) \big\|_\frob^2$ in a similar way.
As a consequence, 
\begin{align*}
\alpha_1 &=  \underbrace{\frac{2}{p} \Bigg( \Big \langle \PP_\Omega \Big( \sum_{s \in [r]} \bm{v}_s \ot \bm{u}_s^{\ot 2} \Big), \PP_\Omega \Big( \sum_{s \in [r]} \bm{u}_s^{\ot 2} \ot \bm{v}_s \Big) \Big \rangle
- \Big \langle \PP_\Omega \Big( \sum_{s \in [r]} \bm{v}_s \ot \bm{u}_s^{\star \ot 2} \Big), \PP_\Omega \Big( \sum_{s \in [r]} \bm{u}_s^{\star \ot 2} \ot \bm{v}_s \Big) \Big \rangle \Bigg)}_{=: \, \beta_1} \\
& \quad + \underbrace{\frac{1}{p} \Bigg( \Big\| \PP_{\Omega} \Big( \sum_{s \in [r]}  \bm{v}_s \ot \bm{u}_s^{\ot 2} \Big) \Big\|_\frob^2 - \Big\| \PP_{\Omega} \Big( \sum_{s \in [r]}  \bm{v}_s \ot \bm{u}_s^{\star \ot 2}\Big) \Big\|_\frob^2 \Bigg)}_{=: \, \beta_2}.
\end{align*}
We will derive an upper bound on $\beta_1$ in the sequel; the same method immediately applies to $\beta_2$.

For notational convenience, let us define 
\begin{align}
\label{def:RIC_delta}
	\bm{\Delta} := \bm{U} - \bm{U}^\star, \quad 
	\bm{\Delta}_s := \bm{u}_s - \bm{u}_s^\star,  \quad
	\widetilde{\bm{\Delta}} := [\bm{\Delta}_1 \ot \bm{\Delta}_1, \cdots, \bm{\Delta}_r \ot \bm{\Delta}_r] \in \R^{d^2 \times r}. 
\end{align} 
Then one can write 
{\small
\begin{align*}
\frac{1}{2}\beta_1 &= \frac{1}{p} \, \Big \langle \PP_\Omega \Big( \sum_{s \in [r]} \bm{v}_s \ot (\bm{\Delta}_s +\bm{u}_s^\star)^{\ot 2} \Big), \PP_\Omega \Big( \sum_{s \in [r]} (\bm{\Delta}_s +\bm{u}_s^\star)^{\ot 2} \ot \bm{v}_s \Big) \Big \rangle
 - \frac{1}{p} \, \Big \langle \PP_\Omega \Big( \sum_{s \in [r]} \bm{v}_s \ot \bm{u}_s^{\star \ot 2} \Big), \PP_\Omega \Big( \sum_{s \in [r]} \bm{u}_s^{\star \ot 2} \ot \bm{v}_s \Big) \Big \rangle \\
& = \frac{1}{p} \, \Big \langle \PP_\Omega \Big( \sum_{s \in [r]} \bm{v}_s \ot \bm{u}_s^{\star \ot 2} \Big), 
\PP_\Omega \Big( \sum_{s \in [r]}  \bm{u}_s^\star  \ot \bm{\Delta}_s \ot \bm{v}_s \Big) 
+ \PP_\Omega \Big( \sum_{s \in [r]}  \bm{\Delta}_s  \ot \bm{u}_s^\star \ot \bm{v}_s\Big) 
+ \PP_\Omega \Big( \sum_{s \in [r]} \bm{\Delta}_s  \ot \bm{\Delta}_s \ot \bm{v}_s \Big) \Big\rangle \\
& \quad + \frac{1}{p} \, \Big \langle  \PP_\Omega \Big( \sum_{s \in [r]} \bm{v}_s \ot \bm{u}_s^{\star} \ot \bm{\Delta}_s \Big) 
+ \PP_\Omega \Big( \sum_{s \in [r]} \bm{v}_s \ot \bm{\Delta}_s \ot \bm{u}_s^{\star} \Big) 
+ \PP_\Omega \Big( \sum_{s \in [r]} \bm{v}_s \ot \bm{\Delta}^{\ot 2}_s \Big), \\
& \qquad
\PP_\Omega \Big( \sum_{s \in [r]} \bm{u}_s^{\star \ot 2} \ot \bm{v}_s \Big) 
+ \PP_\Omega \Big( \sum_{s \in [r]} \bm{u}_s^{\star}  \ot \bm{\Delta}_s \ot \bm{v}_s \Big) 
+ \PP_\Omega \Big( \sum_{s \in [r]} \bm{\Delta}_s  \ot \bm{u}_s^{\star} \ot \bm{v}_s \Big)  
+  \PP_\Omega \Big( \sum_{s \in [r]} \bm{\Delta}_s  \ot \bm{\Delta}_s \ot \bm{v}_s \Big) \Big\rangle.
\end{align*}}

%
Apply the Cauchy-Schwartz inequality to yield that
%
\begin{align}
 \left| \beta_1 \right| & \lesssim \frac{1}{p} \, \Big\| \PP_\Omega \Big( \sum_{s \in [r]} \bm{v}_s \ot \bm{u}_s^{\star \ot 2} \Big) \Big\|_\frob
 \Big( 2\, \Big\| \PP_\Omega \Big( \sum_{s \in [r]}  \bm{u}_s^\star  \ot \bm{\Delta}_s \ot \bm{v}_s \Big) \Big\|_\frob 
 + \Big\|  \PP_\Omega \Big( \sum_{s \in [r]} \bm{\Delta}_s  \ot \bm{\Delta}_s \ot \bm{v}_s \Big)  \Big\|_\frob  \Big) \nonumber\\
& \quad + \frac{1}{p} \Big( \Big\|  \PP_\Omega \Big( \sum_{s \in [r]} \bm{u}_s^{\star \ot 2} \ot \bm{v}_s \Big)  \Big\|_\frob 
+ 2 \,  \Big\| \PP_\Omega \Big( \sum_{s \in [r]}  \bm{u}_s^\star  \ot \bm{\Delta}_s \ot \bm{v}_s \Big) \Big\|_\frob 
+ \Big\|  \PP_\Omega \Big( \sum_{s \in [r]} \bm{\Delta}_s \ot \bm{\Delta}_s \ot \bm{v}_s \Big)  \Big\|_\frob  \Big) \nonumber
\\
& \quad \quad \cdot \Big( 2\, \Big\| \PP_\Omega \Big( \sum_{s \in [r]} \bm{v}_s \ot \bm{u}_s^{\star}\ot \bm{\Delta}_s \Big) \Big\|_\frob 
+ \Big\| \PP_\Omega \Big( \sum_{s \in [r]} \bm{v}_s \ot \bm{\Delta}^{\ot 2}_s \Big) \Big\|_\frob \Big).
\label{eq:RIC_beta}
\end{align}

Before we bound the above quantities, we pause to make the following observations. In view of the assumptions of this lemma that $\delta \ll 1/\sqrt{r} \leq 1$, the following holds for all $i \in \[ r \]$:
\begin{subequations}	
\begin{align}
\label{eq:RIC_u_2_loss} 
 \norm{\bm{\Delta}_i}_2 &\leq  \norm{\bm{U} - \bm{U}^\star}_\frob  \leq \delta \norm{\bm{U}^\star}_\frob \leq \delta \sqrt{r} \, \lammax^{\star 1/3} \ll \lammax^{\star 1/3}, \\
 \label{eq:RIC_u_inf_loss} 
 \norm{\bm{\Delta}_i}_\infty &\leq \norm{\bm{U} - \bm{U}^\star}_{2, \infty} \leq \delta \norm{\bm{U}^\star}_{2, \infty} \leq \delta \sqrt{\frac{\mu r}{d}} \,  \lammax^{\star 1/3} \ll \sqrt{\frac{\mu}{d}} \lammax^{\star 1/3}, \\
 \label{eq:RIC_u_2_norm} 
\norm{\bm{u}_i}_2 &\leq \norm{ \bm{u}_i^\star }_2 +  \norm{\bm{\Delta}_i}_2 \leq 2 \lammax^{\star 1/3}, \\
\label{eq:RIC_u_inf_norm} 
 \norm{\bm{u}_i}_\infty &\leq \norm{ \bm{u}_i^\star }_{\infty} +  \norm{\bm{\Delta}_i}_\infty \leq 2 \sqrt{\frac{\mu }{d}} \,  \lammax^{\star 1/3},
\end{align}
\end{subequations}	
Consequently, we also know that
\begin{align}
 \label{eq:RIC_delta_tilde_2inf_norm}
 \big\| \widetilde{\bm{\Delta}} \big\|_{2,\infty} \leq \max_{1\leq i \leq r}  \norm{\bm{\Delta}_i}_\infty   \norm{\bm{\Delta}}_{2, \infty} \leq \delta^2  \norm{\bm{U}^\star}_{2, \infty}^2.
\end{align}
Now, we proceed to prove the claim. Let us define $\mathcal{S}_i := \{ \vec{j} \in [ d ]^2 \mid \chi_{i j_1 j_2} = 1\}$ for each $i \in [d]$. Applying the Chernoff bound and the union bound yields that: with probability at least $1-O(d^{-10})$ one has
\begin{align}
	\max_{i \in \[d \]} | \mathcal{S}_i | \lesssim d^2 p. \label{eq:Si-max}
\end{align} 
provided $p \gg d^{-2} \log d$. It then follows from the Cauchy-Schwarz inequality that
\begin{align*}
\frac{1}{p} \, \Big\| \PP_\Omega \Big( \sum\nolimits_{s \in [r]} \bm{v}_s \ot \bm{\Delta}_s^{ \ot2} \Big) \Big\|_\frob^2
&= \frac{1}{p} \sum_{i \in [ d ], \vec{j} \in [ d ]^2} \chi_{i j_1 j_2} \big \langle \bm{V}_{i, :},  \widetilde{\bm{\Delta}}_{\vec{j}, :} \big\rangle^2 \\
& \leq \frac{1}{p} \sum_{i \in \[ d\]} \big\| \bm{V}_{i, :}  \big\|_2^2 \sum_{\vec{j} \in \mathcal{S}_i}  \big\| \widetilde{\bm{\Delta}}_{\vec{j}, :}  \big\|_2^2 \\
& \leq \frac{1}{p} \max_{i \in [ d ]}  \left| \mathcal{S}_i \right| \big\| \widetilde{\bm{\Delta}} \big\|_{2, \infty}^2 \norm{\bm{V}}_\frob^2 \\
& \overset{\text{(i)}}{\lesssim}  d^2  \delta^4 \norm{ \bm{U}^\star }_{2, \infty}^4 \norm{\bm{V}}_\frob^2 
 \lesssim \delta^4  \mu^2 r^2 \lambda_{\max}^{\star 4/3} \norm{\bm{V}}_\frob^2, 
\end{align*}
where (i) arises from \eqref{eq:RIC_delta_tilde_2inf_norm} and \eqref{eq:Si-max}.
In a similar manner, we can derive
\begin{align*}
\frac{1}{p} \, \Big\| \PP_\Omega \Big( \sum\nolimits_{s \in [r]}  \bm{u}_s^\star \ot \bm{v}_s \ot \bm{\Delta}_s  \Big) \Big\|_\frob^2 
& \lesssim \frac{1}{p} \max_{1\leq i \leq d} \left| \mathcal{S}_i \right| \max_{1\leq s \leq r} \norm{\bm{u}_s^\star}_\infty^2 \norm{\bm{\Delta}}_{2,\infty}^2 \norm{\bm{V}}_\frob^2 \\
& \lesssim  d^2 \delta^2 \max_{1\leq i \leq r} \norm{\bm{u}_s^\star}_\infty^2 \norm{ \bm{U}^\star }_{2, \infty}^2 \norm{\bm{V}}_\frob^2 \\
& \lesssim \delta^2  \mu^2  r \lambda_{\max}^{\star 4/3} \norm{\bm{V}}_\frob^2 .
 \end{align*}
Regarding $p^{-1} \big\| \PP_\Omega \big( \sum\nolimits_{s \in [r]} \bm{v}_s \ot \bm{u}_s^{\star \ot 2} \big) \big\|_\frob^2$, we apply \cite[Lemma 5]{yuan2016tensor} (with slight modification, which we omit here for brevity) to show that: with probability exceeding $1- O ( d^-{10})$
\begin{align*}
\frac{1}{\sqrt{p}}  \, \Big\| \PP_\Omega \Big( \sum\nolimits_{s \in [r]} \bm{v}_s \ot \bm{u}_s^{\star \ot 2} \Big) \Big\|_\frob \leq \frac{3}{2} \, \Big\|  \sum\nolimits_{s \in [r]} \bm{v}_s \ot \bm{u}_s^{\star \ot 2} \Big\|_\frob
 =  \frac{3}{2}  \big\| \bm{V} \widetilde{\bm{U}}^{\star\top} \big\|_\frob 
\leq \frac{3}{2} \, \big\| \widetilde{\bm{U}}^\star \big\| \norm{\bm{V}}_\frob
 \lesssim \, \lambda_{\max}^{\star 2/3} \norm{\bm{V}}_\frob.
\end{align*}
under the sample size assumptin that $p \gg \mu^2 r^2 d^{-2} \log d$. Here the last inequality makes use of \eqref{eq:sigval_tilde_Utrue}. 
It is self-evident that the above bounds also hold for quantities that appear in \eqref{eq:RIC_beta}. Since $0 < \delta \ll 1/\sqrt{r} < 1$, we obtain
\begin{align*}
|\beta_1| \lesssim \delta \mu \sqrt{r} \lambda_{\max}^{\star 4/3} \norm{\bm{V}}_\frob^2. 
\end{align*}

The same upper bound  holds for any other $\beta_2$. Therefore, as long as  $0 < \delta \ll 1 / ( \mu \sqrt{r} )  $ and $\kappa \asymp 1$, we have
\begin{align}
\label{lemma_RIC_alpha_1}
|\alpha_1| \leq \frac{1}{10} \lambda^{\star 4/3}_{\min}  \norm{\bm{V}}_\frob^2.
\end{align}

\subsubsection{Bounding $\alpha_2$} 

Regarding $\alpha_2$, applying  \cite[Lemma 5]{yuan2016tensor} (with slight modification, which we omit here for brevity) implies that: if $p \gg \mu^2 r^2 d^{-2} \log d$, then with probability exceeding $1- O(d^{-10})$,
\begin{align*}
\Bigg| \frac{1}{2p} & \, \Big\| \PP_{\Omega} \Big( \sum\nolimits_{s \in [r]} ( \bm{v}_s \ot \bm{u}_s^{\star \ot 2} + \bm{u}_s^\star \ot \bm{v}_s \ot \bm{u}_s^\star + \bm{u}_s^{\star \ot 2} \ot \bm{v}_s)  \Big) \Big\|_\frob^2 \\
& - \frac{1}{2} \, \Big\| \sum\nolimits_{s \in [r]} ( \bm{v}_s \ot \bm{u}_s^{\star \ot 2} + \bm{u}_s^\star \ot \bm{v}_s \ot \bm{u}_s^\star + \bm{u}_s^{\star \ot 2} \ot \bm{v}_s ) \Big\|_\frob^2 \Bigg|\\
& \qquad \leq \frac{1}{100} \, \Big\| \sum\nolimits_{s \in [r]} ( \bm{v}_s \ot \bm{u}_s^{\star \ot 2} + \bm{u}_s^\star \ot \bm{v}_s \ot \bm{u}_s^\star + \bm{u}_s^{\star \ot 2} \ot \bm{v}_s ) \Big\|_\frob^2 = \frac{1}{100} \alpha_4  \leq \frac{1}{10} \lambda_{\min}^{\star 4/3} \norm{\bm{V}}_\frob^2.
\end{align*}
Here the last inequality arises from \eqref{eq:UB-alpha4-RIC-proof}.

\subsubsection{Bounding $\alpha_3$}

We now move on to bounding $\alpha_3$. The triangle inequality gives
\begin{align*}
\Big | \Big \langle p^{-1} \PP_{\Omega} \Big( \sum\nolimits_{s \in [r]} \bm{u}_s^{\ot 3} - \bm{T}^\star \Big), \sum\nolimits_{s \in [r]} \bm{v}_s^{\otimes 2} \otimes \bm{u}_s \Big \rangle \Big| 
& \leq \sum_{s \in [r]} \Big| \Big \langle p^{-1} \PP_{\Omega} \Big( \sum\nolimits_{i \in [r]} \bm{u}_i^{\ot 3} - \bm{T}^\star \Big) \times_3 \bm{u}_s, \bm{v}_s \bm{v}_s^\top \Big \rangle  \Big| \\
& \leq \max_{s \in \[ r \]} \Big\| p^{-1} \PP_{\Omega} \Big( \sum\nolimits_{i \in [r]} \bm{u}_i^{\ot 3} - \bm{T}^\star \Big) \times_3 \bm{u}_s \Big\| \sum_{s\in [r]} \norm{ \bm{v}_s }_2^2 \\
& \leq \max_{s \in \[ r \]} \Big\| p^{-1} \PP_{\Omega} \Big( \sum\nolimits_{i \in [r]} \bm{u}_i^{\ot 3} - \bm{T}^\star \Big) \times_3 \bm{u}_s \Big\| \norm{\bm{V}}_\frob^2. 
\end{align*}
Recall the definitions of $\bm{\Delta}$ and $\bm{\Delta}_i$ in \eqref{def:RIC_delta}. Fix an arbitrary $s \in \[ r \]$. From the definition of the operator norm and the triangle inequality, we can derive
\begin{align}
  \Big\| \, p^{-1} \PP_\Omega \Big( \sum\nolimits_{i \in [r]} \bm{u}_i^{\ot 3} - \bm{T}^\star \Big) \times_3 \bm{u}_s \Big\|
  \leq \norm{\bm{u}_s}_2   \Big\| \, p^{-1} \PP_\Omega \Big( \sum\nolimits_{i \in [r]} \bm{u}_i^{\ot 3} - \bm{T}^\star \Big) \Big\|.
\label{eq:UB-operator-norm-1}
\end{align}

 In order to upper bound $\big\| p^{-1} \PP_\Omega \Big( \sum\nolimits_{i \in [r]} \bm{u}_i^{\ot 3} - \bm{T}^\star \Big) \big\| $ as required in \eqref{eq:UB-operator-norm-1}, we invoke the following simple fact, which follows immediately from the definition of the operator norm. Here and throughout, for any tensor $\bm{A}\in \mathbb{R}^{d\times d\times d}$ we denote $$|\bm{A}| := \big[|A_{i, j, k}|\big]_{1\leq i, j, k \leq d} \in \mathbb{R}^{d\times d\times d}.$$
\begin{lemma}
Consider any tensor $\bm{A}, \bm{B} \in \R^{d\times d\times d}$ obeying $|B_{i, j, k}| \geq |A_{i, j, k}|$ for all $1\leq i, j, k \leq d$. One has
\begin{align}
	\norm{\bm{A}} \leq \big\| |\bm{A}| \big\| \leq \big\| |\bm{B}| \big\|.
\end{align}
\end{lemma}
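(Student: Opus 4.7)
The plan is to unfold the variational definition of the tensor operator norm and exploit entrywise monotonicity. The key conceptual step is the following simple observation: whenever a tensor $\bm{C} \in \R^{d \times d \times d}$ has entrywise non-negative entries, its operator norm is attained (or approached) on test vectors with non-negative entries, because replacing any $\bm{u}, \bm{v}, \bm{w} \in \sphere^{d-1}$ by their entrywise absolute values $|\bm{u}|, |\bm{v}|, |\bm{w}|$ leaves them in $\sphere^{d-1}$ while only increasing (or preserving) $\sum_{i,j,k} C_{i,j,k} u_i v_j w_k$.

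For the first inequality $\norm{\bm{A}} \leq \bigl\| |\bm{A}| \bigr\|$, I would pick maximizing test vectors $\bm{u}^\star, \bm{v}^\star, \bm{w}^\star \in \sphere^{d-1}$ for $\norm{\bm{A}}$, and then write
\begin{align*}
\norm{\bm{A}} \;=\; \Bigl| \sum_{i,j,k} A_{i,j,k}\, u^\star_i v^\star_j w^\star_k \Bigr| \;\leq\; \sum_{i,j,k} |A_{i,j,k}|\, |u^\star_i|\, |v^\star_j|\, |w^\star_k| \;=\; \bigl\langle |\bm{A}|,\, |\bm{u}^\star| \otimes |\bm{v}^\star| \otimes |\bm{w}^\star| \bigr\rangle,
\end{align*}
where I tacitly use the fact that negating one of the test vectors in the definition of $\norm{\bm{A}}$ shows the supremum is unchanged if one takes absolute values outside. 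Since $|\bm{u}^\star|, |\bm{v}^\star|, |\bm{w}^\star| \in \sphere^{d-1}$, the right-hand side is upper bounded by $\bigl\| |\bm{A}| \bigr\|$.

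For the second inequality $\bigl\| |\bm{A}| \bigr\| \leq \bigl\| |\bm{B}| \bigr\|$, invoke the observation above applied to $\bm{C} = |\bm{A}|$: we may restrict the supremum defining $\bigl\| |\bm{A}| \bigr\|$ to test vectors $\bm{u}, \bm{v}, \bm{w}$ with non-negative coordinates. For such test vectors, each product $u_i v_j w_k$ is non-negative, so the pointwise domination $|A_{i,j,k}| \leq |B_{i,j,k}|$ yields
\begin{align*}
\bigl\langle |\bm{A}|, \bm{u} \otimes \bm{v} \otimes \bm{w} \bigr\rangle \;=\; \sum_{i,j,k} |A_{i,j,k}|\, u_i v_j w_k \;\leq\; \sum_{i,j,k} |B_{i,j,k}|\, u_i v_j w_k \;=\; \bigl\langle |\bm{B}|, \bm{u} \otimes \bm{v} \otimes \bm{w} \bigr\rangle \;\leq\; \bigl\| |\bm{B}| \bigr\|.
\end{align*}
Taking the supremum over such $\bm{u}, \bm{v}, \bm{w}$ concludes the proof.

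There is no real obstacle; the only subtlety is justifying that the supremum over $\sphere^{d-1}$ test vectors may be restricted to those with non-negative coordinates when the tensor has non-negative entries, which is immediate from entrywise absolute values. This is really just a direct consequence of the variational definition.
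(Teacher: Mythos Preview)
Your proposal is correct and matches the paper's approach: the paper does not spell out a proof but simply states that the lemma ``follows immediately from the definition of the operator norm,'' which is exactly the variational argument you have written out in detail.
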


With this lemma in mind, we are ready to derive that
\begin{align*}
\Big\| p^{-1} \PP_\Omega \Big( \sum\nolimits_{i \in [r]} \bm{u}_i^{\ot 3} - \bm{T}^\star \Big) \Big\| & \leq \Big\| p^{-1} \PP_\Omega \Big( \Big| \sum\nolimits_{i \in [r]} \bm{u}_i^{\ot 3} - \bm{T}^\star \Big| \Big) \Big\| \\
& \leq \Big\| \sum\nolimits_{i \in [r]} \bm{u}_i^{\ot 3} - \bm{T}^\star \Big\|_\infty  \norm{p^{-1} \PP_\Omega \big(  \bm{1}^{\ot 3}  \big)}.
\end{align*}
Here, $\bm{1}$ stands for the all-one vector in $\R^d$.
This suggests that we shall upper bound $\norm{p^{-1} \PP_\Omega \big(  \bm{1}^{\ot 3}  \big)}$ and $\big\| \sum\nolimits_{i \in [r]} \bm{u}_i^{\ot 3} - \bm{T}^\star \big\|_\infty$.

Given that $p \gtrsim d^{-3/2} \log^3 d$, applying Lemma~\ref{lemma:T_op_norm} indicates that with probability at least $1-O(d^{-10})$,
\begin{align}
 \norm{ p^{-1} \PP_\Omega \big(  \bm{1}^{\ot 3}  \big) - \bm{1}^{\ot 3}} \lesssim \frac{\log^3 d}{p} + \sqrt{\frac{d \log^5 d}{p}} \lesssim d^{3/2}.
\end{align}
Moreover, it is straightforward to see that 
$\norm{\bm{1}^{\ot 3}} = \norm{\bm{1}}_2^3 = d^{3/2}$. Therefore, one has
\begin{align}
\norm{p^{-1} \PP_\Omega \big(  \bm{1}^{\ot 3}  \big)} \lesssim d^{3/2}.  \label{eq:T-all-one-op_norm}
\end{align}
Next, we turn to $\left\| \cdot \right\|_\infty$.  We first expand
\begin{align*}
\begin{split}
\sum_{i \in [r]} \bm{u}^{\ot 3}_i - \bm{T}^\star &= \sum_{i \in [r]} \big( (\bm{\Delta}_i + \bm{u}^\star_i)^{\ot 3} -\bm{u}_i^{\star \ot 3} \big) =  \sum_{i \in [r]} \bm{\Delta}_i \ot \bm{u}^{\star\ot 2}_i + \sum_{i \in [r]}  \bm{u}^{\star}_i \ot \bm{\Delta}_i \ot \bm{u}^{\star}_i + \sum_{i \in [r]}  \bm{u}^{\star\ot 2}_i \ot \bm{\Delta}_i \\
& \quad + \sum_{i \in [r]}  \bm{\Delta}_i^{\ot 2} \ot \bm{u}^{\star}_i + \sum_{i \in [r]}  \bm{\Delta}_i \ot \bm{u}^{\star}_i \ot \bm{\Delta}_i + \sum_{i \in [r]}  \bm{u}^\star_i \ot \bm{\Delta}_i^{\ot 2} + \sum_{i \in [r]} \bm{\Delta}_i^{\ot 3}.
\end{split}
\end{align*}
By symmetry, it suffices to control
$\big\| \sum_{i \in [r]}  \bm{u}^{\star\ot 2}_i \ot \bm{\Delta}_i \big\| $,
$\big\| \sum_{i \in [r]} \bm{\Delta}_i^{\ot 2} \ot \bm{u}^\star_i  \big\|$
and
$\big\| \sum_{i \in [r]} \bm{\Delta}_i^{\ot 3} \big\|$.
Let us look at the first term. Towards this, for each $\left(i,j,k\right)\in\left[d\right]^{3}$, we can use the Cauchy-Schwartz inequality to control
\begin{align*}
\Big|\Big(\sum\nolimits _{1\leq s\leq r}\bm{u}_{s}^{\star\otimes2}\otimes\bm{\Delta}_{s}\Big)_{i,j,k}\Big| & =\Big|\sum\nolimits _{1\leq s\leq r}\left(\bm{u}_{s}^{\star}\right)_{i}\left(\bm{u}_{s}^{\star}\right)_{j}\left(\bm{\Delta}_{s}\right)_{k}\Big| \\
&	\leq\Big(\sum\nolimits _{1\leq s\leq r}\big[\left(\bm{u}_{s}^{\star}\right)_{i}\big]^{2}\Big)^{1/2}\Big(\sum\nolimits _{1\leq s\leq r}\big[\left(\bm{u}_{s}^{\star}\right)_{j}\big]^{2}\big[\left(\bm{\Delta}_{s}\right)_{k}\big]^{2}\Big)^{1/2} \\
&	\leq \left\Vert \bm{U}^{\star}\right\Vert _{2,\infty}\left\Vert \bm{\Delta}\right\Vert _{2,\infty} \max_{1\leq s\leq r}\left\Vert \bm{u}_{s}^{\star}\right\Vert _{\infty} \\
& \leq \delta \left\Vert \bm{U}^{\star}\right\Vert _{2,\infty}^{2} \max_{1\leq s\leq r}\left\Vert \bm{u}_{s}^{\star}\right\Vert _{\infty},
\end{align*}
which implies that 
\begin{align}
\Big\| \sum\nolimits _{1\leq s\leq r}\bm{u}_{s}^{\star\otimes2}\otimes\bm{\Delta}_{s} \Big\|_\infty \leq \delta \left\Vert \bm{U}^{\star}\right\Vert _{2,\infty}^{2} \max_{1\leq s\leq r}\left\Vert \bm{u}_{s}^{\star}\right\Vert _{\infty} \lesssim \frac{\delta \mu^{3/2} r \lambda_{\max}^\star }{d^{3/2}}. \label{eq:T-loss-inf-UB-term1}
\end{align}
In a similar manner, we can control the remaining two terms by
\begin{align}
\Big\| \sum\nolimits_{i \in [r]} \bm{\Delta}_i^{\ot 2} \ot \bm{u}^\star_i  \Big\| & \leq \left\Vert \bm{\Delta}\right\Vert _{2,\infty}^2 \max_{1\leq s\leq r}\left\Vert \bm{u}_{s}^{\star}\right\Vert _{\infty}
\leq  \delta^2 \norm{\bm{U}^\star}_{2, \infty}^2 \max_{1\leq s\leq r}\left\Vert \bm{u}_{s}^{\star}\right\Vert _{\infty} \leq \frac{\delta^2 \mu^{3/2} r \lambda_{\max}^\star}{d^{3/2}} ; \label{eq:T-loss-inf-UB-term2} \\
\Big\|  \sum\nolimits_{i \in [r]} \bm{\Delta}_i^{\ot 3} \Big\| & \leq \left\Vert \bm{\Delta} \right\Vert _{2,\infty}^2 \max_{1\leq s\leq r}\left\Vert \bm{\Delta}_{s} \right\Vert _{\infty}
\leq \delta^3 \norm{\bm{U}^\star}_{2, \infty}^3  \leq \frac{\delta^3 \mu^{3/2} r^{3/2} \lambda_{\max}^\star}{d^{3/2}} \label{eq:T-loss-inf-UB-term3} . 
\end{align}
Recall that $0<\delta \ll 1 / r \leq 1$. Putting these together reveals that
\begin{align}
\Big\| \sum\nolimits_{i \in [r]} \bm{u}^{\ot 3}_i - \bm{T}^\star \Big\| \lesssim \frac{\delta \mu^{3/2} r \lambda_{\max}^\star }{d^{3/2}}. \label{eq:gd-T-inf-loss}
\end{align}
This combined with \eqref{eq:T-all-one-op_norm} yields
\begin{align*}
\Big\| \, p^{-1} \PP_\Omega \Big( \sum\nolimits_{i \in [r]} \bm{u}^{\ot 3}_i - \bm{T}^\star \Big) \Big\| \lesssim \delta \mu^{3/2} r \lambda_{\max}^\star, 
\end{align*}
thus indicating that
\begin{align}
 	 \Big\| \, p^{-1}  \PP_\Omega \Big( \sum\nolimits_{i \in [r]} \bm{u}_i^{\ot 3} - \bm{T}^\star \Big) \times_3 \bm{u}_s \Big\|
	 \leq \norm{\bm{u}_s}_2 \Big\| p^{-1} \PP_\Omega \Big( \sum\nolimits_{i \in [r]} \bm{u}^{\ot 3}_i - \bm{T}^\star \Big) \Big\| \lesssim \delta \mu^{3/2} r \lammax^{\star 4/3},
\end{align}
where we use \eqref{eq:RIC_u_2_norm} in the last step.
In view of the condition that $\delta \ll 1 / (\mu^{3/2} r )$ and the assumption $\kappa \asymp 1$, one has with probability greater than $1-O(d^{-10})$,
\begin{align}
\label{lemma_RIC_alpha_3}
|\alpha_3| \leq \frac{1}{10} \lammin^{\star 4/3} \norm{\bm{V}}_\frob^2.
\end{align}
%

\subsubsection{Putting all this together}

Note that the above bounds hold uniformly for all $\bm{V}$. Therefore, 
combining upper bounds for $\alpha_i$ and the union bound, we conclude that with probability exceeding $1-O(d^{-10})$,
\begin{align}
\vector(\bm{V})^\top \grad^2 f(\bm{U}) \vector(\bm{V}) &\geq \alpha_4 - |\alpha_1| - |\alpha_2| - |\alpha_3| \geq \frac{1}{2} \lambda_{\min}^{\star 4/3} \norm{\bm{V}}_\frob^2 \\
  \vector(\bm{V})^\top \grad^2 f(\bm{U}) \vector(\bm{V})  &\leq \alpha_4 + |\alpha_1| + |\alpha_2| + |\alpha_3| \leq 4 \lambda_{\max}^{\star 4/3} \norm{\bm{V}}_\frob^2 \label{eq:smoothness-final}
\end{align}
as claimed. 


\subsection{Proof of Lemma~\ref{lemma:hyp_aux}}
\label{pf:hyp_aux}

From \eqref{hyp:U_frob_loss} and \eqref{hyp:U_loo_frob_diff}, we use the triangle inequality to obtain
\begin{align*}
\big\| \bm{U}^{t, \m} - \bm{U}^\star \big\|_{\frob}
 &  \leq \big\| \bm{U}^{t, \m} - \bm{U}^t \big\|_{\frob} + \big\| \bm{U}^t - \bm{U}^\star \big\|_{\frob} \\
 & \leq \( 2 C_1 \rho^t \EE_\local + 2 C_2 \frac{\sigma}{\lammin^{\star}} \sqrt{\frac{ d \log d}{p}}   \) \norm{\bm{U}^\star}_{\frob}
 \end{align*}
Similarly,  we can combine \eqref{hyp:U_2inf_loss} and \eqref{hyp:U_loo_frob_diff} to obtain that
\begin{align*}
\big\| \bm{U}^{t, \m} - \bm{U}^\star \big\|_{2, \infty} & \leq \big\| \bm{U}^{t, \m} - \bm{U}^t \big\|_{2, \infty} + \left\| \bm{U}^t - \bm{U}^\star \right\|_{\frob}
  \leq \big\| \bm{U}^{t, \m} - \bm{U}^t \big\|_{\frob} + \left\| \bm{U}^t - \bm{U}^\star \right\|_{\frob} \\
& \leq \( C_5 \rho^t \EE_\local + C_6 \frac{\sigma}{\lammin^{\star}} \sqrt{\frac{ d \log d}{p}}   \) \norm{\bm{U}^\star}_{2, \infty}
 + \( C_3 \rho^t \EE_\local + C_4 \frac{\sigma}{\lammin^{\star}} \sqrt{\frac{ d \log d}{p}}   \) \norm{\bm{U}^\star}_{2, \infty} \\
& \leq \( \( C_3 + C_5 \) \rho^t \EE_\local + \( C_4 + C_6 \) \frac{\sigma}{\lammin^{\star}} \sqrt{\frac{ d \log d}{p}}   \) \norm{\bm{U}^\star}_{2, \infty} . \\
\end{align*}


\subsection{Proof of Lemma~\ref{lemma:U_frob_loss}}
\label{pf:U_frob_loss}

In view of the relation \eqref{eq:gradient-grad-clean}, one has 
\begin{align*}
\left\| \bm{U}^{t+1} - \bm{U}^\star \right\|_\frob & = \left\| \bm{U}^{t} - \eta \( \grad f_{\clean} \( \bm{U}^t \) - p^{-1} \PP_\Omega \(\bm{E} \) \times_1^\seq \bm{U}^t \times_2^\seq \bm{U}^t \) - \bm{U}^\star \right\|_\frob \\
& \leq \underbrace{\left\| \bm{U}^{t} - \eta \grad f_{\clean} \( \bm{U}^t \) - \bm{U}^\star \right\|_\frob }_{=: \, \alpha_1}
+ \underbrace{ \eta \left\| p^{-1} \PP_\Omega \(\bm{E} \) \times_1^\seq \bm{U}^t \times_2^\seq \bm{U}^t \right\|_\frob }_{=: \, \alpha_2},
\end{align*}
which motivates us to bound $\alpha_1$ and $\alpha_2$ separately. 

\bigskip

(1) We start with $\alpha_1$, towards which we find it helpful to define 
\begin{align}
\bm{U}^t(\tau) := \tau \, \bm{U}^t + \( 1-\tau \) \bm{U}^\star.
\end{align}
Given that $ \grad f_\clean \( \bm{U}^\star \) = \bm{0}$ (since $\bm{U}^\star$ is  a global optimizer of $f_\clean$),  
we can use the fundamental theorem of calculus to obtain
\begin{align}
\vector \( \bm{U}^{t} - \eta \grad f_{\clean} \( \bm{U}^t \) - \bm{U}^\star \) 
&= \vector \( \bm{U}^{t} - \eta \grad f_\clean  \( \bm{U}^t \) - \( \bm{U}^\star - \eta \grad f_\clean \( \bm{U}^\star \) \) \)  \\
&= \vector \( \bm{U}^t  - \bm{U}^\star \) - \eta \, \vector  \( \grad f_\clean \(\bm{U}^{t} \) - \grad f_\clean \(\bm{U}^\star \) \) \\
	& = \Bigg( \bm{I}_{dr} - \eta \underset{=: \, \bm{\Gamma}}{\underbrace{ \int_0^1 \grad^2 f_\clean \( \bm{U}^t \(\tau \) \) \,\mathrm{d}\tau }} \Bigg) \, \vector \( \bm{U}^{t} - \bm{U}^\star \).
\end{align}
It then follows that
\begin{align}
	& \left\| \bm{U}^{t} - \eta \grad f_{\clean} \( \bm{U}^t \) - \bm{U}^\star \right\|_\frob^2 
	= \vector \left( \bm{U}^{t} - \bm{U}^\star \right)^\top \( \bm{I}_{dr} - \eta \bm{\Gamma} \)^2 \vector \left( \bm{U}^{t} - \bm{U}^\star \right) \nonumber \\
	& \qquad \leq \left\| \bm{U}^{t} - \bm{U}^\star \right\|_\frob^2 - 2 \eta \, \vector \left( \bm{U}^{t} - \bm{U}^\star \right)^\top \bm{\Gamma} \, \vector \left( \bm{U}^{t} - \bm{U}^\star \right) + \eta^2 \norm{\bm{\Gamma}}^2 \left\| \bm{U}^{t} - \bm{U}^\star \right\|_\frob^2. \label{eq:Ut-UB2-clean}
\end{align}
From the hypothesis \eqref{hyp:U_2inf_loss} as well as our conditions that $\frac{\sigma}{\lambda_{\min}^\star} \sqrt{\frac{d \log d}{p}} + \mathcal{E}_{\mathsf{local}} \ll \frac{1}{\mu^{3/2} r}$, we know that $\bm{U}^t(\tau)$ ($0\leq \tau \leq 1$) satisfies the conditions required in Lemma~\ref{lemma:RIC}. Therefore, applying Lemma~\ref{lemma:RIC} gives that 
\begin{align*}
 \vector \left( \bm{U}^{t} - \bm{U}^\star \right)^\top \bm{\Gamma} \, \vector \left( \bm{U}^{t} - \bm{U}^\star \right) &\geq \frac{1}{2} \lammin^{\star 4/3}  \left\| \bm{U}^{t} - \bm{U}^\star \right\|_\frob^2, \\
 \norm{\bm{\Gamma}} &\leq 4 \lammax^{\star 4/3}.
\end{align*}
Substitution into \eqref{eq:Ut-UB2-clean} indicates that: if $0 < \eta \leq  \lammin^{\star 4/3} / \big(32 \lammax^{\star 8/3} \big)$, then
\begin{align*}
\left\| \bm{U}^{t} - \eta \grad f_{\clean} \( \bm{U}^t \) - \bm{U}^\star \right\|_\frob^2 \leq \big(1 -  \lammin^{\star 4/3} \eta + 16  \lammax^{\star 8/3} \eta^2  \big) \left\| \bm{U}^{t} - \bm{U}^\star \right\|_\frob^2 \leq \big(1 - \frac{1}{2} \lammin^{\star 4/3} \eta  \big) \left\| \bm{U}^{t} - \bm{U}^\star \right\|_\frob^2 ,
\end{align*}
which implies that (since $1-a/2 \geq \sqrt{1-a}$ for $0<a<1$)
\begin{align}
\label{eq:U_frob_loss_alpha1}
	\left\| \bm{U}^{t} - \eta \grad f_{\clean} \( \bm{U}^t \) - \bm{U}^\star \right\|_\frob \leq \big(1 - \frac{1}{4} \lammin^{\star 4/3} \eta  \big) \left\| \bm{U}^{t} - \bm{U}^\star \right\|_\frob . 
\end{align}

\bigskip

(2) We now turn to $\alpha_2$. To simplify presentation, we shall assume that $\{E_{i,j,k}\}_{i,j,k\in[d]}$ (resp.~$\{\chi_{i,j,k}\}_{i,j,k\in[d]}$) are independent random variables. 
Fix an arbitrary $s \in [r]$ and $m \in [d]$. The $m$-th entry of $\PP_\Omega \(\bm{E} \) \times_1 \bm{u}^t_s  \times_2 \bm{u}^t_s $ can be expanded as follows:
\begin{align}
\left| \(  \PP_\Omega \(\bm{E} \) \times_1 \bm{u}^t_s  \times_2 \bm{u}^t_s \)_m \right|
& = \left| \bm{u}^{t \top}_s \(  \PP_\Omega \(\bm{E} \) \)_{:, :, m} \bm{u}^t_s  \right|   \nonumber \\
& \leq \underbrace{ \left| \bm{u}^{t, \m \top}_s \(  \PP_\Omega \(\bm{E} \) \)_{:, :, m} \bm{u}^{t, \m}_s  \right| }_{=: \, \beta_1 }
+  \underbrace{\left| \big( \bm{u}^{t}_s - \bm{u}^{t, \m}_s \big)^\top  \(  \PP_\Omega \(\bm{E} \) \)_{:, :, m} \bm{u}^{t}_s  \right|}_{=: \, \beta_2} \nonumber \\
& \quad + \underbrace{ \left| \bm{u}^{t \top}_s \(  \PP_\Omega \(\bm{E} \) \)_{:, :, m} \big( \bm{u}^{t}_s - \bm{u}^{t, \m}_s \big)  \right| }_{=: \, \beta_3}  
 + \underbrace{\left| \big( \bm{u}^{t}_s - \bm{u}^{t, \m}_s \big)^\top \(  \PP_\Omega \(\bm{E} \) \)_{:, :, m} \big( \bm{u}^{t}_s - \bm{u}^{t, \m}_s \big)  \right| }_{=: \, \beta_4}. \label{defn:beta1-4-all}
\end{align}
Before continuing, we make the following observations: from the hypotheses \eqref{hyp:U_frob_loss}, \eqref{hyp:U_2inf_loss}, \eqref{hyp:U_loo_frob_diff},  as well as our assumption that $\frac{\sigma}{\lambda_{\min}^\star} \sqrt{\frac{d \log d}{p}} + \mathcal{E}_{\mathsf{local}} \ll \frac{1}{\mu^{3/2} r }$, we see that the following holds for all $s \in \[ r \]$:
\begin{subequations}
\label{eq:gd-u-UB}
\begin{align}
\label{eq:gd-u-2-loss}
\big\| \bm{u}^{t}_s - \bm{u}^{\star}_s \big\|_2 & \leq \norm{\bm{U}^t - \bm{U}^\star}_\frob \ll  \frac{1}{r}   \norm{ \bm{U}^\star}_\frob \lesssim \frac{1}{\sqrt{r}}\lambda_{\max}^{\star 1/3},   \\
\label{eq:gd-u-inf-loss}
\big\| \bm{u}^{t}_s - \bm{u}^{\star}_s \big\|_\infty & \leq \norm{\bm{U}^t - \bm{U}^\star}_{2,\infty} \ll  \frac{1}{r}   \norm{ \bm{U}^\star}_{2,\infty} \lesssim \sqrt{\frac{\mu}{r d}} \, \lambda_{\max}^{\star 1/3},   \\
\label{eq:gd-u-loo-2-diff}
\big\| \bm{u}^{t, \m}_s - \bm{u}^{t}_s \big\|_2 & \leq \big\| \bm{U}^{t, \m} - \bm{U}^t \big\|_\frob \ll \frac{1}{r}   \norm{ \bm{U}^\star}_{2,\infty} \lesssim \sqrt{\frac{\mu}{r d}} \, \lambda_{\max}^{\star 1/3}, \\
\label{eq:gd-u-loo-entry-loss} 
\Big| \big( \bm{u}^{t, \m}_s - \bm{u}^{\star}_s \big)_m \Big| & \leq \big\| \big( \bm{U}^{t, \m} - \bm{U}^\star \big)_{m, :} \big\|_2  \ll \frac{1}{r} \left\| \bm{U}^\star \right\|_{2, \infty} \lesssim \sqrt{\frac{\mu}{r d}} \, \lambda_{\max}^\star, \\
\label{eq:gd-u-loo-2-loss} 
\big\| \bm{u}^{t, \m}_s - \bm{u}^{\star}_s \big\|_2 & \leq \big\| \bm{U}^{t, \m} - \bm{U}^\star \big\|_\frob \ll  \frac{1}{r}   \norm{ \bm{U}^\star}_\frob \lesssim \frac{1}{\sqrt{r}}\lambda_{\max}^{\star 1/3}, \\
\label{eq:gd-u-2-norm}
 \big\| \bm{u}^{t}_s \big\|_{2} &\leq \big\| \bm{u}^t_s - \bm{u}^\star_s \big\|_{2} + \big\| \bm{u}^\star_s \big\|_{2} \lesssim \lambda_{\max}^{1/3}, \\
 \label{eq:gd-u-inf-norm}
 \big\| \bm{u}^{t}_s \big\|_{\infty} &\leq \big\| \bm{u}^t_s - \bm{u}^\star_s \big\|_{\infty} + \big\| \bm{u}^\star_s \big\|_{\infty} \lesssim \sqrt{\frac{\mu}{d}} \, \lambda_{\max}^{1/3}, \\
\label{eq:gd-u-loo-2-norm}
 \big\| \bm{u}^{t, \m}_s \big\|_{2} &\leq \big\| \bm{u}^{t, \m}_s - \bm{u}^t_s \big\|_{2} + \big\| \bm{u}^t_s \big\|_{2} \lesssim \lambda_{\max}^{\star 1/3}, \\
\label{eq:gd-u-inf-norm}
 \big\| \bm{u}^{t, \m}_s \big\|_{\infty} &\leq \big\| \bm{u}^{t, \m}_s - \bm{u}^t_s \big\|_{2} + \big\| \bm{u}^{t}_s \big\|_{\infty} \lesssim \sqrt{\frac{\mu}{d}} \, \lambda_{\max}^{1/3}, \\
 \label{eq:gd-U-fro-norm}
 \big\| \bm{U}^t \big\|_\frob & \leq  \big\| \bm{U}^t - \bm{U}^\star \big\|_\frob + \big\| \bm{U}^\star \big\|_\frob \lesssim \big\| \bm{U}^\star \big\|_\frob, \\
  \label{eq:gd-U-2inf-norm}
 \big\| \bm{U}^t \big\|_{2,\infty} & \leq  \big\| \bm{U}^t - \bm{U}^\star \big\|_{2, \infty} + \big\| \bm{U}^\star \big\|_{2,\infty} \lesssim \big\| \bm{U}^\star \big\|_{2,\infty}, \\
   \label{eq:gd-U-loo-fro-norm}
 \big\| \bm{U}^{t, \m} \big\|_{\frob} & \leq  \big\| \bm{U}^{t, \m} - \bm{U}^t \big\|_\frob + \big\| \bm{U}^t \big\|_{\frob} \lesssim \big\| \bm{U}^\star \big\|_{\frob}, \\
  \label{eq:gd-U-loo-2inf-norm}
 \big\| \bm{U}^{t, \m} \big\|_{2, \infty} & \leq  \big\| \bm{U}^{t, \m} - \bm{U}^t \big\|_\frob + \big\| \bm{U}^t \big\|_{2,\infty} \lesssim \big\| \bm{U}^\star \big\|_{2,\infty}.
\end{align}
\end{subequations}
%

With these estimates in place, we can upper bound the above four terms in \eqref{defn:beta1-4-all} separately. 
\begin{itemize}
\item
For $\beta_1$, we note that, by construction, $\bm{u}^{t, \m}$ is independent of the $m$-th mode-3 slice of $\PP_\Omega \(\bm{E} \)$. This tells us that
\begin{align*}
\bm{u}^{t, \m \top}_s \(  \PP_\Omega \(\bm{E} \) \)_{:, :, m} \bm{u}^{t, \m}_s
= \sum\nolimits_{i, j \in [d]} \big( \bm{u}^{t, \m}_s \big)_i  \big( \bm{u}^{t, \m}_s \big)_j  E_{i, j, m} \chi_{i, j, m}
\end{align*}
can be viewed as a sum of independent zero-mean random variables (conditional on $\PP_{\Omega_{-m}} \( \bm{E} \)$).
It is straightforward to compute that
\begin{align*}
& \max_{i, j \in [d]} \left\| \big( \bm{u}^{t, \m}_s \big)_i  \big( \bm{u}^{t, \m}_s \big)_j  E_{i, j, m} \chi_{i, j, m} \right\|_{\psi_1} \lesssim \sigma \, \big\| \bm{u}^{t, \m}_s \big\|_{\infty}^2 =: L, \\
& \sum_{i, j \in [d]} \E \[ \big( \bm{u}^{t, \m}_s \big)_i^2  \big( \bm{u}^{t, \m}_s \big)_j^2 E_{i, j, m}^2 \chi_{i, j, m}^2   \] \lesssim \sigma^2 p \, \big\| \bm{u}^{t, \m}_s \big\|_{2}^4 \lesssim \sigma^2 \lammax^{\star 2/3} p \, \big\| \bm{u}^{t, \m}_s \big\|_{2}^2 =: V, 
\end{align*}
where $\|\cdot\|_{\psi_1}$ denotes the sub-exponential norm and we use \eqref{eq:gd-u-UB} in the last inequality.
Applying the matrix Bernstein inequality \cite[Corollary 2.1]{koltchinskii2011oracle} yields that: with probability $1 - O(d^{-20})$,
\begin{align}
\left| \bm{u}^{t, \m \top}_s \(  \PP_\Omega \(\bm{E} \) \)_{:, :, m} \bm{u}^{t, \m}_s \right| & \lesssim L \log^2 d + \sqrt{V \log d} \nonumber \\
&  \lesssim \sigma \log^2 d \, \big\| \bm{u}^{t, \m}_s \big\|_{\infty}^2 + \sigma \lammax^{\star 1/3} \sqrt{ p \log d} \, \big\| \bm{u}^{t, \m}_s \big\|_{2} ,
\end{align}

\item For $\beta_2$, we first invoke \cite[lemma 11]{chen2015fast} to demonstrate that: with probability at least $1- O(d^{-11})$,
\begin{align}
\label{E_Omega_slice_op_norm}
\max_{m \in [d]} \big\| \(  \PP_\Omega \(\bm{E} \) \)_{:, :, m} \big\| \lesssim \sigma  \big( \sqrt{dp} + \log d \big).
\end{align}
Next, it is seen that 
\begin{align*}
\Big|  \big( \bm{u}^{t}_s & - \bm{u}^{t, \m}_s \big)^\top  \(  \PP_\Omega \(\bm{E} \) \)_{:, :, m} \bm{u}^{t}_s  \Big|  \leq \big\| \(  \PP_\Omega \(\bm{E} \) \)_{:, :, m} \big\| \big\| \bm{u}^{t}_s - \bm{u}^{t, \m}_s \big\|_2 \big\| \bm{u}^t_s \big\|_2 \\
& \lesssim \sigma \big( \sqrt{dp} + \log d \big) \big\| \bm{U}^{t} - \bm{U}^{t, \m} \big\|_{\frob}  \big\| \bm{u}^t_s \big\|_{2} \\
& \lesssim \sigma \big( \sqrt{dp} + \log d \big)  \( C_5 \rho^t \EE_\local + C_6 \frac{\sigma}{\lammin^{\star}} \sqrt{\frac{ d \log d}{p}}   \) \norm{\bm{U}^\star}_{2, \infty} \big\| \bm{u}^t_s \big\|_2 \\
& \ll \frac{\lammin^{\star 4/3}}{\sqrt{d}}  \( C_5 \rho^t \EE_\local + C_6 \frac{\sigma}{\lammin^{\star}} \sqrt{\frac{ d \log d}{p}}   \) \big\| \bm{u}^t_s \big\|_2,
\end{align*}
where the last inequality follows from the noise condition. Clearly, the above bound holds for $\beta_3$ as well. 
		
\item Regarding $\beta_4$, it is easily seen that $\beta_4 \ll \beta_2$,  given that $\| \bm{U}^{t} - \bm{U}^{t, \m} \|_{\frob} \ll \| \bm{U}^{\star} \|_{2,\infty} / r \leq \lambda_{\min}^\star $ holds according to \eqref{eq:gd-u-UB} and $\kappa \asymp 1$.

\item Taking together the above bounds and substituting them into \eqref{defn:beta1-4-all}, we obtain
\begin{align*}
 \left| \(  \PP_\Omega \(\bm{E} \) \times_1 \bm{u}^t_s  \times_2 \bm{u}^t_s \)_m \right| 
& \lesssim \sigma \log^2 d \, \big\| \bm{u}^{t, \m}_s \big\|_{\infty}^2 + \sigma \lammax^{\star 1/3} \sqrt{ p \log d} \, \big\| \bm{u}^{t, \m}_s \big\|_{2} \\
& \quad + o \( 1 \)  \frac{\lammin^{\star 4/3}}{\sqrt{d}}  \( C_5 \rho^t \EE_\local + C_6 \frac{\sigma}{\lammin^{\star}} \sqrt{\frac{ d \log d}{p}}   \) \big\| \bm{u}^t_s \big\|_2.
\end{align*}
Recognizing that this holds for any $m \in [d]$ and $s \in [r]$, one can sum over $m$ and $s$ to deduce that
\begin{align}
 \left\| \PP_\Omega \(\bm{E} \) \times_1^\seq \bm{U}^t  \times_2^\seq \bm{U}^t \right\|_\frob 
 & \overset{(\mathrm{i})}{\lesssim}  \frac{\sigma  \lambda_{\max}^{\star 2/3} \mu \sqrt{r} \log^2 d }{\sqrt{d}} + \sigma \lammax^{\star 1/3}  \sqrt{ d p \log d} \, \big\| \bm{U}^{t, \m}\big\|_{\frob} \nonumber \\
& \quad + o\( 1\) \lammin^{\star 4/3} \( C_5 \rho^t \EE_\local + C_6 \frac{\sigma}{\lammin^{\star}} \sqrt{\frac{ d \log d}{p}}   \) \norm{\bm{U}^t}_{\frob} \nonumber \\
& \leq C \sigma \lammin^{\star 1/3}  \sqrt{ d p \log d} \, \left\| \bm{U}^\star \right\|_{\frob} \nonumber \\
& \quad+ o\( 1\)  \lammin^{\star 4/3} \( C_5 \rho^t \EE_\local + C_6 \frac{\sigma}{\lammin^{\star}} \sqrt{\frac{ d \log d}{p}}   \) \norm{\bm{U}^\star}_{\frob}, \label{eq:U_frob_loss_alpha2}
\end{align}
for some absolute constant $C > 0$, where (i) is true due to \eqref{eq:gd-u-UB}; the last inequality follows from the fact that $\norm{\bm{U}^\star}_\frob \geq \lambda_{\min}^{\star 1/3} \sqrt{r}$ as well as the assumptions that $p \gg \mu d^{-2} \log^3 d$ and $\kappa \asymp 1$.
%

\end{itemize}

\bigskip

(3) Combining \eqref{eq:U_frob_loss_alpha1} and \eqref{eq:U_frob_loss_alpha2} yields that: with probability at least $1-O(d^{-10})$,
\begin{align*}
 \left\| \bm{U}^{t+1} - \bm{U}^\star \right\|_\frob & \leq \big(1 - \frac{1}{4}\lammin^{\star 4/3} \eta  \big) \left\| \bm{U}^{t} - \bm{U}^\star \right\|_\frob + C \eta \sigma \lammin^{\star 1/3} \sqrt{\frac{ d \log d}{p}} \left\| \bm{U}^\star \right\|_{\frob} \\
 & \qquad + o\( 1\)  \lammin^{\star 4/3} \eta \( C_5 \rho^t \EE_\local + C_6 \frac{\sigma}{\lammin^{\star}} \sqrt{\frac{ d \log d}{p}}   \) \norm{\bm{U}^\star}_{\frob} \\
& \leq \big(1 - \frac{1}{4}\lammin^{\star 4/3} \eta  \big) \( C_1 \rho^t \EE_\local + C_2 \frac{\sigma}{\lammin^{\star}} \sqrt{\frac{ d \log d}{p}}  \) \norm{\bm{U}^\star}_{\frob} + C \eta \sigma \lammin^{\star 1/3} \sqrt{\frac{ d \log d}{p}} \left\| \bm{U}^\star \right\|_{\frob} \\
& \qquad + o\( 1\)   \lammin^{\star 4/3} \eta \( C_5 \rho^t \EE_\local + C_6 \frac{\sigma}{\lammin^{\star}} \sqrt{\frac{ d \log d}{p}}   \) \norm{\bm{U}^\star}_{\frob} \\
& \leq \big(1 - \frac{1}{5}\lammin^{\star 4/3} \eta  \big) C_1 \rho^t \EE_\local \norm{\bm{U}^\star}_{\frob}
 + \( \big(1 - \frac{1}{5}\lammin^{\star 4/3} \eta  \big) C_2 + C \eta \lammin^{\star 3/4}  \)  \frac{\sigma}{\lammin^{\star}}  \sqrt{\frac{ d \log d}{p}}   \norm{\bm{U}^\star}_{\frob}  \\
& \leq C_1 \rho^{t+1} \EE_\local \norm{\bm{U}^\star}_{\frob} + C_2 \frac{\sigma}{\lammin^{\star}} \sqrt{\frac{ d \log d}{p}}   \norm{\bm{U}^\star}_{\frob} ,
\end{align*}
 with the proviso that $0 < \eta \leq  \lammin^{\star 4/3} / \big(32 \lammax^{\star 8/3} \big)$, $1 - \big( \lammin^{\star 4/3}  / 5 \big) \eta \leq \rho < 1$ and that $C_2$ is sufficiently large.



\subsection{Proof of Lemma~\ref{lemma:U_loo_frob_diff}}
\label{pf:U_loo_frob_diff}

Fix an arbitrary $m \in [r]$. From the definition of $f^{\m}$ in \eqref{eq:defn-fm-loo} and \eqref{grad_f_loo}, we can show that
\begin{align}
& \bm{U}^{t+1}-\bm{U}^{t+1, \m}  = \bm{U}^{t} - \eta \grad f \big( \bm{U}^t \big) - \Big(\bm{U}^{t, \m} - \eta \grad f^{\m}\big( \bm{U}^{t, \m} \big) \Big) \nonumber\\
& \quad = \bm{U}^{t} - \bm{U}^{t, \m} - \eta \(\grad f \big( \bm{U}^t \big) - \grad f \big(\bm{U}^{t, \m} \big) \) - \eta \( \grad f \big(\bm{U}^{t, \m} \big) - \grad f^{\m} \big( \bm{U}^{t, \m} \big) \) \nonumber\\
& \quad = \underbrace{\bm{U}^{t} - \bm{U}^{t, \m} - \eta \(\grad f_\clean \big( \bm{U}^t \big) - \grad f_\clean \big(\bm{U}^{t, \m} \big) \) }_{=: \, \alpha_1} \nonumber\\
	& \quad \quad - \underbrace{ \eta \, \Big( \( p^{-1} \PP_{\Omega_m} - \PP_{m} \) \Big( \sum\nolimits_{s \in [r]} \big( \bm{u}^{t, \m }_s \big)^{\ot 3} - \bm{T}^\star \Big)  \Big) \times_1^\seq \bm{U}^{t, \m} \times_2^\seq \bm{U}^{t, \m}   }_{=: \, \alpha_2} \nonumber\\
	& \qquad  + \underbrace{ \eta \, p^{-1} \PP_{\Omega_m} \( \bm{E} \) \times_1^\seq \bm{U}^{t, \m}  \times_2^\seq \bm{U}^{t, \m}  }_{=: \, \alpha_3} + \underbrace{ \eta  p^{-1}  \, \big\{ \PP_\Omega \( \bm{E} \)  \times_1^\seq  \bm{U}^t  \times_2^\seq  \bm{U}^t -     \PP_\Omega \( \bm{E} \)   \times_1^\seq  \bm{U}^{t, \m} \times_2^\seq   \bm{U}^{t, \m} \big\} }_{=: \, \alpha_4} .
	\label{eq:U-loo-alpha-1234}
\end{align}
%
%
Before proceeding to bounding these terms, we pause to define
%
%
\begin{align}
\label{def:delta_T_loo} & \bm{\Delta}^{t, \m}_{\bm{T}} := \sum_{s \in [r]} \big( \bm{u}^{t, \m }_s \big)^{\ot 3} - \bm{T}^\star.
\end{align}
From \eqref{eq:gd-u-UB} and hypothesis~\eqref{hyp:U_loo_row_loss}, one can applying a similar argument as in \eqref{eq:gd-T-inf-loss} to find that
\begin{align}
\label{delta_T_loo_inf_loss}
\big\| \bm{\Delta}^{t, \m}_{\bm{T}}  \big\|_\infty \lesssim  \max_{s \in [r]} \big\|\bm{u}^{\star}_s \big\|_\infty \big\| \bm{U}^\star \big\|_{2, \infty} \big\| \bm{U}^{t, \m} -\bm{U}^\star \big\|_{2, \infty} 
\lesssim \frac{\mu \sqrt{r} \, \lammax^{\star 2 / 3}}{d} \big\| \bm{U}^{t, \m} -\bm{U}^\star \big\|_{2, \infty}. 
\end{align}
%
Now we begin to bound the terms in \eqref{eq:U-loo-alpha-1234} separately. 

\bigskip

(1) We start with $\alpha_1$. For any $0\leq \tau \leq 1$, define 
\begin{align*}
\bm{U}^{t, \m}(\tau) := \tau \, \bm{U}^t + \( 1-\tau \) \bm{U}^{t, \m}.
\end{align*}
The fundamental theorem of calculus yields
\begin{align*}
	\bm{U}^{t} - \bm{U}^{t, \m} - \eta \(\grad f_\clean \big( \bm{U}^t \big) - \grad f_\clean \big(\bm{U}^{t, \m} \big) \) 
	= \underbrace{\Big(\bm{I}_{dr} - \eta \int_0^1 \grad^2 f_\clean \big( \bm{U}^{t, \m} \( \tau \) \big) \, \mathrm{d}\tau \Big)}_{=: \bm{\Gamma}}  \big(\bm{U}^{t} - \bm{U}^{t, \m} \big).
\end{align*}
By Lemma~\ref{lemma:hyp_aux} and our assumptions on the noise, we know that $\bm{U}^{t, \m}(\tau)$ ($ 0 \leq \tau \leq 1$) satisfies the conditions in Lemma~\ref{lemma:RIC} for any $\tau \in [0, 1]$. 
Applying the same argument as the one used to bound $\left\| \bm{U}^{t} - \eta \grad f_{\clean} \( \bm{U}^t \) - \bm{U}^\star \right\|_\frob $ in Lemma~\ref{lemma:U_frob_loss}, we show that
\begin{align}
\label{U_loo_frob_diff_term1}
\Big\| \bm{U}^{t} - \bm{U}^{t, \m} - \eta \Big(\grad f \big( \bm{U}^t \big) - \grad f \big( \bm{U}^{t, \m} \big) \Big) \Big\|_2 &\leq \big(1 - \frac{1}{4} \lammin^{\star 4/3} \eta  \big) \big\| \bm{U}^{t} - \bm{U}^{t, \m} \big\|_\frob ,
\end{align}
provided that $0 < \eta \leq  \lammin^{\star 4/3} / \big(32 \lammax^{\star 8/3} \big)$.

In what follows, we shall assume that $\{E_{i,j,k}\}_{i,j,k\in[d]}$ (resp.~$\{\chi_{i,j,k}\}_{i,j,k\in[d]}$) are independent random variables in order to simplify presentation. 

\bigskip
(2) The next step is to bound $\alpha_2$. 
%
For notational simplicity, define
\begin{align*}
	\bm{V}^{t, \m} &:= \Big( \( p^{-1} \PP_{\Omega_m} - \PP_{m} \) \big( \bm{\Delta}^{t, \m}_{\bm{T}} \big) \Big) \times_1^\seq \bm{U}^{t, \m} \times_2^\seq \bm{U}^{t, \m} ; \\
	\bm{v}^{t, \m}_s &:= \Big( \( p^{-1} \PP_{\Omega_m}  - \PP_m \) \big( \bm{\Delta}^{t, \m}_{\bm{T}} \big) \Big) \times_1 \bm{u}^{t, \m}_s \times_2 \bm{u}^{t, \m}_s , \qquad s \in [r].
\end{align*}

In order to control the Frobenius norm of $\bm{V}^{t, \m}$, we shall start by  considering the $m$-th row of $\bm{V}^{t, \m}$. In view of the definitions of $\PP_{\Omega_m}$ and $\PP_m$ (cf.~Appendix \ref{sec:LOO-sequence}), we can expand
\begin{align*}
\bm{V}^{t, \m}_{m, :} &= \sum_{i, j \in [d]} \sum_{s \in [r]} \( p^{-1} \chi_{i, j, m}-1 \) \big( \bm{\Delta}^{t, \m}_{\bm{T}} \big)_{i, j, m} \big( \bm{u}^{t, \m}_s \big)_i \big( \bm{u}^{t, \m}_s \big)_j   \bm{e}_s^\top.
\end{align*}
We recognize that $\big\{ \bm{u}_s^{t, \m} \big\}_{s=1}^r$ is independent of $\Omega_m$, making it convenient for us to upper bound $\big\| \bm{V}^{t, \m}_{m, :} \big\|_2$. Specifically, for any $i, j \in [d]$, from \eqref{eq:gd-u-UB} and \eqref{delta_T_loo_inf_loss} we have
\begin{align*}
\Big\| \sum_{s \in [r]} \big( \bm{\Delta}^{t, \m}_{\bm{T}} \big)_{i, j, m} \big( \bm{u}^{t, \m}_s \big)_i \big( \bm{u}^{t, \m}_s \big)_j \( p^{-1} \chi_{i, j, m}-1 \)  \bm{e}_s^\top \Big\|_2 
&\leq \frac{1}{p} \, \big\| \bm{\Delta}^{t, \m}_{\bm{T}}  \big\|_\infty \max_{s \in [r]} \big\| \bm{u}^{t, \m}_s \big\|_\infty \big\| \bm{U}^{t, \m} \big\|_{2, \infty} \\
& \lesssim \frac{\mu^2 r \lammax^{\star 4/3}}{d^2 p}   \big\| \bm{U}^{t, \m} - \bm{U}^\star \big\|_{2, \infty} =: L_1.
\end{align*}
%
In addition, it is easy to verify that $\E \big[ \bm{V}^{t, \m}_{m, :} \big] = \bm{0}$ and 
\begin{align*}
\sum_{i, j \in [d]} \E & \Bigg[ \Big\| \sum\nolimits_{s \in [r]}  \big( \bm{\Delta}^{t, \m}_{\bm{T}} \big)_{i, j, m} \big( \bm{u}^{t, \m}_s \big)_i \big( \bm{u}^{t, \m}_s \big)_j ( p^{-1} \chi_{i, j, m}-1 ) \, \bm{e}_s^\top \Big\|_2^2 \Bigg] \\
&=   \sum_{i, j \in [d]} \sum_{s \in [r]}\big( \bm{\Delta}^{t, \m}_{\bm{T}} \big)^2_{i, j, m} \big( \bm{u}^{t, \m}_s \big)_i^2 \big( \bm{u}^{t, \m}_s \big)_j^2 \, \E \big[ ( p^{-1}{\chi}_{i, j, m}-1 )^2 \big] \\
& \leq \frac{1}{p} \, \big\| \bm{\Delta}^{t, \m}_{\bm{T}} \big\|_\infty^2 \max_{s \in [r]} \big\| \bm{u}^{t, \m}_s \big\|_2^2 \big\| \bm{U}^{t, \m} \big\|_\frob^2 \lesssim \frac{\mu^2 r^2 \lammax^{\star 8/3}}{d^2 p} \big\| \bm{U}^{t, \m} - \bm{U}^\star \big\|_{2, \infty}^2 =: V_1,
\end{align*}
where the last inequality holds due to \eqref{eq:gd-u-UB} and \eqref{delta_T_loo_inf_loss}. We then apply the matrix Bernstein inequality to yield that: with probability exceeding $1-  O(d^{-20})$,
\begin{align}
\big\| \bm{V}^{t, \m}_{m, :} \big\|_2 & \lesssim L_1 \log d + \sqrt{V_1 \log d} \lesssim \Bigg\{ \frac{\mu^2 r \lammax^{\star 4/3} \log d}{d^2 p} + \frac{\mu r \lammax^{\star 4/3} \sqrt{ \log d}}{d \sqrt{p}} \Bigg\} \big\| \bm{U}^{t, \m} - \bm{U}^\star \big\|_\infty \nonumber \\
& \asymp \frac{\mu r \lammax^{\star 4/3} \sqrt{ \log d}}{d \sqrt{p}} \big\| \bm{U}^{t, \m} - \bm{U}^\star \big\|_\infty , \label{V_loo_row_2_norm}
\end{align}
where the last step holds as long as $p \gg \mu^2 d^{-2} \log d$.

Next, we turn to the $k$-th row of $\bm{V}^{t, \m}$ for any $k \neq m$. For each $s \in [r]$, we have
\begin{align*}
\big( \bm{v}^{t, \m}_s \big)_k &= \big( \bm{u}^{t, \m}_s \big)_m \sum_{j \in [d]} \big( \bm{\Delta}^{t, \m}_{\bm{T}} \big)_{m, j, k}   \big( \bm{u}^{t, \m}_s \big)_j ( p^{-1} \chi_{m, j, k}-1 ) \\
& \quad + \big( \bm{u}^{t, \m}_s \big)_m \sum_{i: i \neq m} \big( \bm{\Delta}^{t, \m}_{\bm{T}} \big)_{i, m, k} \big( \bm{u}^{t, \m}_s \big)_i (p^{-1}\chi_{i, m, k}-1 ).
\end{align*}
Similar to the proof of Lemma~\ref{lemma:sum_jk_square_cont_ineq}, we can show that with probability at least $1-  O(d^{-20})$, 
%
\begin{align*}
 \sum_{s \in [r]} \sum_{k: k \neq m } & \Big( \big( \bm{u}^{t, \m}_s \big)_m \sum\nolimits_{j \in [d]} \big( \bm{\Delta}^{t, \m}_{\bm{T}} \big)_{m, j, k}  \big( \bm{u}^{t, \m}_s \big)_j (p^{-1} \chi_{m, j, k}-1 ) \Big)^2 \\
& \lesssim  \sum_{s \in [r]} \sum_{k: k \neq m } \E \Big[ \Big( \big( \bm{u}^{t, \m}_s \big)_m \sum\nolimits_{j \in [d]} \big( \bm{\Delta}^{t, \m}_{\bm{T}} \big)_{m, j, k}  \big( \bm{u}^{t, \m}_s \big)_j (p^{-1} \chi_{m, j, k}-1 ) \Big)^2 \Big] \\
& \lesssim \frac{1}{p} \sum_{s \in [r]} \big( \bm{u}^{t, \m}_s \big)_m^2 \sum_{j, k \in [d]} \big( \bm{\Delta}^{t, \m}_{\bm{T}} \big)_{m, j, k}^2 \big( \bm{u}^{t, \m}_s \big)_j^2  \\
& \leq \frac{d}{p} \max_{s \in [r]} \big\| \bm{u}^{t, \m}_s \big\|_\infty^2 \big\| \bm{\Delta}^{t, \m}_{\bm{T}} \big\|_\infty^2 \big\| \bm{U}^{t, \m} \big\|_\frob^2
 \lesssim \frac{\mu^3 r^2 \lammax^{\star 8/3}}{d^2 p} \big\| \bm{U}^{t, \m} - \bm{U}^\star \big\|_{2, \infty}^2 , 
\end{align*}
where the last inequality follows from \eqref{eq:gd-u-UB}, and \eqref{delta_T_loo_inf_loss}. It is easily seen that the bound also holds for the summation over $i \neq m$. We can then use Cauchy-Schwartz  to arrive at
\begin{align}
\label{V_loo_k_row_2_norm_appendix}
\sum_{k: k \neq m} \big\| \bm{V}^{t, \m}_{k, :} \big\|_2^2 = \sum_{s \in [r]} \sum_{k: k \neq m} \big( \bm{v}^{t, \m}_s \big)_k^2 \lesssim \frac{\mu^3 r^2 \lammax^{\star 8/3}}{d^2 p} \big\| \bm{U}^{t, \m} - \bm{U}^\star \big\|_{2, \infty}^2. 
\end{align}

Combining \eqref{V_loo_k_row_2_norm_appendix} with \eqref{V_loo_row_2_norm} and invoking the union bound, we conclude  that: with probability exceeding $1- O(d^{-20})$,
\begin{align}
\begin{split}
\label{U_loo_frob_diff_term2}
\Big\| \Big( \big( p^{-1} \PP_{\Omega_m}  - \PP_m \big) \big( \bm{\Delta}^{t, \m}_{\bm{T}} \big) \Big) \times_1^\seq \bm{U}^{t, \m} \times_2^\seq \bm{U}^{t, \m} \Big\|_\frob =  \big\| \bm{V}^{t, \m} \big\|_\frob \leq C  \frac{\mu^{3/2} r \lammin^{\star 4/3} \sqrt{\log d} }{d \sqrt{p}} \, \big\| \bm{U}^{t, \m} - \bm{U}^\star \big\|_{2, \infty}.
\end{split}
\end{align}
for some absolute constant $C > 0$, where we use the assumption that $\kappa \asymp 1$.

\bigskip

(3) For $\alpha_3$, following a similar argument for $\alpha_2$, we define
\begin{align*}
	 \bm{W}^{t, \m} &:= \PP_{\Omega_m} \( \bm{E} \) \times_1^\seq \bm{U}^{t, \m}_s \times_2^\seq \bm{U}^{t, \m}_s \\
	 \bm{w}^{t, \m}_s &:= \PP_{\Omega_m} \( \bm{E} \) \times_1 \bm{u}^{t, \m}_s \times_2 \bm{u}^{t, \m}_s
\end{align*}
for each $s \in [r]$. The $m$-th row of $ \bm{W}^{t, \m}$ is a sum of independent zero-mean random vectors:
\begin{align*}
\bm{W}^{t, \m}_{m, :} &= \sum_{i, j \in [d]} \sum_{s \in [r]}  \big( \bm{u}^{t, \m}_s \big)_i \big( \bm{u}^{t, \m}_s \big)_j E_{i, j, m} \chi_{i, j, m} \bm{e}_s^\top.
\end{align*}
With \eqref{eq:gd-u-UB} in place, it is easy to verify that
\begin{align*}
\max_{i, j \in [d]} \Big\| \sum\nolimits_{s \in [r]} \big( \bm{u}^{t, \m}_s \big)_i \big( \bm{u}^{t, \m}_s \big)_j E_{i, j, m} \chi_{i, j, m} \bm{e}_s^\top \Big\|_{\psi_1} & \leq \sigma \max_{s \in [r]} \big\| \bm{u}^{t, \m}_s \big\|_{\infty}  \big\| \bm{U}^{t, \m} \big\|_{2, \infty}  \\
& \lesssim \sigma \sqrt{\frac{\mu }{d}} \, \lammax^{\star 1/3} \norm{\bm{U}^\star}_{2, \infty} =: L_2
\end{align*}
and
\begin{align*}
\sum_{i, j \in [d]} \E \Bigg[ \Big\| \sum\nolimits_{s \in [r]}  \big( \bm{u}^{t, \m}_s \big)_i \big( \bm{u}^{t, \m}_s \big)_j E_{i, j, m} \chi_{i, j, m} \bm{e}_s^\top  \Big\|_2^2   \Bigg]
& = \sum_{i, j \in [d]} \sum_{s \in [r]} \big( \bm{u}^{t, \m}_s \big)_i^2 \big( \bm{u}^{t, \m}_s \big)_j^2 \E \big[ \( E_{i, j, m} \chi_{i, j, m} \)^2 \big] \\
& \leq \sigma^2 p \max_{s \in [r]} \big\| \bm{u}^{t, \m}_s \big\|_2^2 \big\| \bm{U}^{t, \m} \big\|_\frob^2 \\
& \lesssim \sigma^2 d p \lammax^{\star 2/3}  \norm{\bm{U}^\star}_{2, \infty}^2 =: V_2,
\end{align*}
where we have used \eqref{eq:gd-u-UB} and the fact that $\norm{\bm{U}^\star}_{\frob} \leq \sqrt{d} \norm{\bm{U}^\star}_{2, \infty}$.
Apply the matrix Bernstein inequality to reveal that: with probability at least $1- O(d^{-20})$,
\begin{align}
\left\| \bm{W}^{t, \m}_{m, :} \right\|_2 &= \Big\| \sum_{i, j \in [d]} \sum_{s \in [r]}  \big( \bm{u}^{t, \m}_s \big)_i \big( \bm{u}^{t, \m}_s \big)_j E_{i, j, m} \chi_{i, j, m} \bm{e}_s^\top \Big\|_2 \nonumber \\
& \lesssim L_2 \log^2 d + \sqrt{V_2 \log d} \asymp \sigma \lammax^{\star 1/3} \sqrt{ d p \log d}  \norm{\bm{U}^\star}_{2, \infty} , \label{eq:U_loo_2_loss_W}
\end{align}
where the last inequality holds as long as $p \gg \mu d^{-2} \log^3 d$. 

As for the other rows, we have
\begin{align*}
\big( \bm{w}^{t, \m}_s \big)_k &= \big( \bm{u}^{t, \m}_s \big)_m \sum_{j \in [d]}   \big( \bm{u}^{t, \m}_s \big)_j E_{m, j, k} \chi_{m, j, k}
 + \big( \bm{u}^{t, \m}_s \big)_m \sum_{i: i \neq m}   \big( \bm{u}^{t, \m}_s \big)_i E_{i, m, k} \chi_{i, m, k}
\end{align*}
for each $s \in [r]$. Arguing similarly as in the proof of Lemma~\ref{lemma:sum_jk_square_cont_ineq_subg},  
we have with probability at least $1- O(d^{-20})$,
\begin{align*}
\sum_{s \in [r]} \sum_{k: k \neq m} & \Big( \big( \bm{u}^{t, \m}_s \big)_m \sum\nolimits_{j \in [d]}   \big( \bm{u}^{t, \m}_s \big)_j E_{m, j, k} \chi_{m, j, k} \Big)^2 \\
& \lesssim  \sum_{s \in [r]} \sum_{k: k \neq m } \E \Big[ \Big( \big( \bm{u}^{t, \m}_s \big)_m \sum_{j \in [d]}   \big( \bm{u}^{t, \m}_s \big)_j \( E_{m, j, k} \chi_{m, j, k} \) \Big)^2 \Big] \\
& \leq \sigma^2 d p \sum_{s \in [r]} \big( \bm{u}^{t, \m}_s \big)_m^2 \sum_{j \in [d]}  \big( \bm{u}^{t, \m}_s \big)_j^2 \\
& \leq \sigma^2 d p \max_{s \in [r]} \big\| \bm{u}^{t, \m}_s \big\|_2^2 \big\| \bm{U}^{t, \m} \big\|_{2, \infty}^2
 \lesssim \sigma^2 \lammax^{\star 2/3} d p  \norm{\bm{U}^\star}_{2, \infty}^2 ,
\end{align*}
where the last inequality follows from \eqref{eq:gd-u-UB}. Additionally, the summation over $\{i: i \neq m\}$ can be controlled using the same argument. Therefore, we use Cauchy-Schwarz to find that
\begin{align}
\sum_{k: k \neq m} \left\| \bm{W}^{t, \m}_{k, :} \right\|_2^2 \lesssim \sigma^2 \lammax^{\star 2/3} d p  \norm{\bm{U}^\star}_{2, \infty}^2.
\end{align}
Combined with \eqref{eq:U_loo_2_loss_W} and the assumption that $\kappa \asymp 1$, we obtain that
\begin{align}
\label{U_loo_frob_diff_term3}
 \left\| p^{-1} \PP_{\Omega_m} \( \bm{E} \) \times_1^\seq \bm{U}^{t, \m} \times_2^\seq \bm{U}^{t, \m} \right\|_\frob = \big\| \bm{W}^{t, \m} \big\|_\frob \leq \widetilde{C} \sigma \lammin^{\star 1/3} \sqrt{ d p \log d} \norm{\bm{U}^\star}_{2, \infty},
\end{align}
%
for some absolute constant $\widetilde{C} > 0$.

\bigskip

(4) Regarding $\alpha_4$, we use the triangle inequality to show that for each $s \in [r]$,
\begin{align*}
 \big\|  \PP_\Omega \(\bm{E} \) \times_1 \bm{u}^t_s \times_2 \bm{u}^t_s & - \PP_\Omega \(\bm{E} \) \times_1 \bm{u}^{t, \m}_s \times_2 \bm{u}^{t, \m}_s \big\|_2  \\
&  \leq \big\|  \PP_\Omega \(\bm{E} \) \times_1 \big( \bm{u}^t_s -  \bm{u}^{t, \m}_s \big) \times_2 \bm{u}^{t, \m}_s \big\|_2  
 + \big\|  \PP_\Omega \(\bm{E} \) \times_1  \bm{u}^{t, \m}_s \times_2 \big( \bm{u}^t_s -  \bm{u}^{t, \m}_s \big) \big\|_2   \\
 & \quad + \big\|  \PP_\Omega \(\bm{E} \) \times_1 \big( \bm{u}^t_s - \bm{u}^{t, \m}_s \big) \times_2 \big( \bm{u}^t_s - \bm{u}^{t, \m}_s \big) \big\|_2  \\
 & \leq 2 \norm{ \PP_\Omega \(\bm{E} \) } \big\| \bm{u}^t_s - \bm{u}^{t, \m}_s \big\|_2 \big\| \bm{u}^{t, \m}_s \big\|_2 + \norm{ \PP_\Omega \(\bm{E} \) } \big\| \bm{u}^t_s - \bm{u}^{t, \m}_s \big\|_2^2 \\
 & \lesssim \norm{ \PP_\Omega \(\bm{E} \) } \big\| \bm{u}^t_s - \bm{u}^{t, \m}_s \big\|_2 \big\| \bm{u}^{t, \m}_s \big\|_2 ,
\end{align*}
where the last line follows from \eqref{eq:gd-u-UB}. From Corollary~\ref{cor:bound-PE}, we can further upper bound
\begin{align*}
\big\|  \PP_\Omega \(\bm{E} \) \times_1 \bm{u}^t_s \times_2 \bm{u}^t_s - \PP_\Omega \(\bm{E} \) \times_1 \bm{u}^{t, \m}_s \times_2 \bm{u}^{t, \m}_s \big\|_2
& \leq \sigma \big( \sqrt{dp} + \log d \big) \log^{5/2} d \, \big\| \bm{u}^t_s - \bm{u}^{t, \m}_s \big\|_2  \big\| \bm{u}^{t, \m}_s \big\|_2 .
\end{align*}
As a result, we sum over $s \in [r]$ and use the Cauchy-Schwartz inequality to derive
\begin{align}
\frac{1}{p} \big\| \PP_\Omega \(\bm{E} \) \times_1^\seq \bm{U}^t  \times_2^\seq \bm{U}^t & -  \PP_\Omega \(\bm{E} \) \times_1^\seq \bm{U}^{t, \m} \times_2^\seq  \bm{U}^{t, \m} \big\|_\frob \nonumber \\
&  \lesssim \frac{\sigma}{p} \big( \sqrt{dp} + \log d \big) \log^{5/2} d \,  \big\| \bm{U}^t - \bm{U}^{t, \m} \big\|_\frob \big\| \bm{U}^{t, \m} \big\|_\frob \nonumber \\
&  \lesssim \frac{\sigma}{p}  \big( \sqrt{dp} + \log d \big) \log^{5/2} d \( C_5 \rho^t \EE_\local + C_6 \frac{\sigma}{\lammin^{\star}} \sqrt{\frac{ d \log d}{p}}   \)  \norm{\bm{U}^\star}_{2, \infty}^2 \nonumber \\
& \ll  \lammin^{\star 4/3} \( C_5 \rho^t \EE_\local + C_6 \frac{\sigma}{\lammin^{\star}} \sqrt{\frac{ d \log d}{p}}  \) \norm{\bm{U}^\star}_{2, \infty}, \label{U_loo_frob_diff_term4}
\end{align}
where the last step arises from conditions that $\frac{\sigma}{\lambda_{\min}^\star} \sqrt{\frac{d \log d}{p}} \ll \frac{1}{\mu^{3/2} r}$ and $\kappa \asymp 1$.

\bigskip
(5) Taking \eqref{U_loo_frob_diff_term1}, \eqref{U_loo_frob_diff_term2}, \eqref{U_loo_frob_diff_term3} and \eqref{U_loo_frob_diff_term4} together, we can invoke the sample size assumption that $p \gg \mu^3 r^2 d^{-2} \log^3 d$ and the union bound to show that: with probability greater than $1-O(d^{-10})$ one has 
\begin{align*}
 \big \| \bm{U}^{t+1}  &-\bm{U}^{t+1, \m} \big\|_\frob \\
& \leq \big(1 - \frac{1}{4}  \lammin^{\star 4/3} \eta  \big) \big\| \bm{U}^{t} - \bm{U}^{t, \m} \big\|_\frob + C \frac{\mu^{3/2} r  \lammin^{\star 4/3} \sqrt{ \log d} }{d \sqrt{p}} \, \big\| \bm{U}^{t, \m} - \bm{U}^\star \big\|_{2, \infty} \\
& \quad + \widetilde{C} \eta \sigma \lammin^{\star 1/3}\sqrt{\frac{\mu r d \log d}{p}} \norm{\bm{U}^\star}_{2, \infty} + o \( 1 \)  \lammin^{\star 4/3} \eta \( C_5 \rho^t \EE_\local + C_6 \frac{\sigma}{\lammin^{\star}} \sqrt{\frac{ d \log d}{p}}  \) \norm{\bm{U}^\star}_{2, \infty} \\
& \leq \big(1 - \frac{1}{4}  \lammin^{\star 4/3} \eta  \big)   \( C_5 \rho^t \EE_\local + C_6 \frac{\sigma}{\lammin^{\star}} \sqrt{\frac{ d \log d}{p}}   \) \norm{\bm{U}^\star}_{2, \infty} + \widetilde{C} \eta \sigma \lammin^{\star 1/3}\sqrt{\frac{ d \log d}{p}} \norm{\bm{U}^\star}_{2, \infty} \\
& \quad + C \eta \lammax^{\star 4/3}  \frac{\mu^{3/2} r \sqrt{ \log d} }{d \sqrt{p}} \( 2 C_3 \rho^t  \EE_\local + 2 C_4 \frac{\sigma}{\lammin^{\star}} \sqrt{\frac{ d \log d}{p}}   \) \norm{\bm{U}^\star}_{2, \infty} \\
& \quad + o \( 1 \)  \lammin^{\star 4/3} \eta \( C_5 \rho^t \EE_\local + C_6 \frac{\sigma}{\lammin^{\star}} \sqrt{\frac{ \mu r d \log d}{p}}  \) \norm{\bm{U}^\star}_{2, \infty} \\
& \leq \big(1 - \frac{1}{5}  \lammin^{\star 4/3} \eta  \big) C_5 \rho^t \EE_\local \norm{\bm{U}^\star}_{2, \infty}
 + \( \big( 1- \frac{1}{5} \lammin^{\star 4/3} \eta \big) C_6 + \widetilde{C} \lammin^{\star 4/3} \eta \) \frac{\sigma}{\lammin^{\star}} \sqrt{\frac{ d \log d}{p}} \norm{\bm{U}^\star}_{2, \infty} \\
& \leq \big(1 - \frac{1}{5}  \lammin^{\star 4/3} \eta  \big)  C_5 \rho^t \EE_\local  \norm{\bm{U}^\star}_{2, \infty} + C_6 \frac{\sigma}{\lammin^{\star}} \sqrt{\frac{ d \log d}{p}} \norm{\bm{U}^\star}_{2, \infty} \\
& \leq  \( C_5 \rho^{t+1} \EE_\local + C_6 \frac{\sigma}{\lammin^{\star}} \sqrt{\frac{ d \log d}{p}}   \) \norm{\bm{U}^\star}_{2, \infty} ,
\end{align*}
provided $0 < \eta \leq  \lammin^{\star 4/3} / \big(32 \lammax^{\star 8/3} \big)$, $1 -  \big( \lambda_{\min}^{\star 4/3} / 5 \big) \eta \leq \rho < 1$ and $C_6$ is sufficiently large.



\subsection{Proof of Lemma~\ref{lemma:U_loo_row_loss}}
\label{pf:U_loo_row_loss}

Fix an arbitrary $m \in [d]$. Recall our notation of $\bm{\Delta}^{t, \m}_{\bm{T}}$ in \eqref{def:delta_T_loo}.
%
%
To simplify presentation, we further define 
\begin{align}
\label{u_t+1_loo_tilde}
	\widehat{\bm{U}}^{t+1, \m} &:= \bm{U}^{t, \m} - \eta \( p^{-1} \PP_{\Omega_{-m}} \big( \bm{\Delta}^{t, \m}_{\bm{T}} - \bm{E} \big) \times_1^\seq \bm{U}^\star \times_2^\seq \bm{U}^\star + \PP_{m} \big( \bm{\Delta}^{t, \m}_{\bm{T}} \big) \times_1^\seq \bm{U}^\star \times_2^\seq \bm{U}^\star \),  \\
	\bm{\Delta}^{t, \m}_s &:= \bm{u}^{t, \m}_s - \bm{u}^\star_s,
\end{align}
for each $s \in [r]$. 

Apply the triangle inequality to yield
\begin{align*}
\big\| \big(\bm{U}^{t+1, \m} - \bm{U}^\star \big)_{m, :} \big\|_2 &\leq  \underbrace{\big\| \big(\widehat{\bm{U}}^{t+1, \m} -\bm{U}^\star \big)_{m, :} \big\|_2}_{=: \, \alpha_1}
+ \underbrace{ \big\|\big(\bm{U}^{t+1, \m} - \widehat{\bm{U}}^{t+1, \m} \big)_{m, :} \big\|_2}_{=: \, \alpha_2}, 
\end{align*}
leaving us with two terms to deal with. As it turns out, we will show that $\alpha_1$ is the dominant term and $\alpha_2$ is negligible. To simplify presentation, we shall assume that $\{E_{i,j,k}\}_{i,j,k\in[d]}$ (resp.~$\{\chi_{i,j,k}\}_{i,j,k\in[d]}$) are independent random variables.

\begin{itemize}
\item
Regarding $\alpha_1$, the definition of $\PP_{\Omega_{-m}}$ allows us to derive 
\begin{align*}
	\big( \widehat{\bm{U}}^{t+1, \m} - \bm{U}^\star \big)_{m, :}  &= \big( \bm{U}^{t, \m} - \bm{U}^\star -\eta \, \bm{\Delta}^{t, \m}_{\bm{T}} \times_1^\seq \bm{U}^\star \times_2^\seq \bm{U}^\star \big)_{m, :}  \\
	&= \big( \bm{U}^{t, \m} - \bm{U}^\star \big)_{m, :} - \eta \, \big( \bm{\Delta}^{t, \m}_{\bm{T}} \times_1^\seq \bm{U}^\star \times_2^\seq \bm{U}^\star \big)_{m, :} .
\end{align*}
We can express $\bm{\Delta}^{t, \m}_{\bm{T}} = \sum_{s \in [r]} \big( \bm{\Delta}_s^{t, \m} + \bm{u}_s^\star \big)^{\ot 3} - \bm{u}_s^{\star \ot 3}$ and compute that
\begin{align}
&  \big( \bm{\Delta}^{t, \m}_{\bm{T}}  \times_1 \bm{u}^{\star}_s \times_2 \bm{u}^{\star}_s \big)_m \nonumber\\ 
& \qquad =  
 \Big( \big\| \bm{u}^{\star}_s \big\|_2^2 + \big\langle \bm{\Delta}^{t, (m)}_s, \bm{u}^\star_s \big\rangle \Big)^2 \big( \bm{\Delta}_s^{t, \m} \big)_m
 + \Big( 2\, \big\| \bm{u}^{\star}_s \big\|_2^2 \big\langle \bm{\Delta}^{t, (m)}_s, \bm{u}^\star_s \big\rangle  + \big\langle \bm{\Delta}^{t, (m)}_s, \bm{u}^{\star}_s \big\rangle^2 \Big) \big( \bm{u}^{\star}_s \big)_m \nonumber\\
& \qquad\qquad + \sum_{i:i \neq s} \Big( \big\langle \bm{u}^\star_i, \bm{u}^\star_s \big\rangle + \big\langle \bm{\Delta}^{t, (m)}_i, \bm{u}^\star_s \big\rangle \Big)^2 \big( \bm{\Delta}_i^{t, \m} \big)_m \nonumber\\
& \qquad\qquad + \sum_{i:i \neq s}  \Big( 2 \, \big\langle \bm{u}^{\star}_i, \bm{u}^{\star}_s \big\rangle \big\langle \bm{\Delta}^{t, (m)}_i, \bm{u}^\star_s \big\rangle  + \big\langle \bm{\Delta}^{t, (m)}_i, \bm{u}^{\star}_s \big\rangle^2 \Big) \big( \bm{u}^{\star}_i \big)_m 
	\label{Delta_T_tsym_utrue}
\end{align}
for each $s \in [r]$. This further indicates that
\begin{align*}
\left\| \big( \widehat{\bm{U}}^{t+1, \m} - \bm{U}^\star \big)_{m, :} \right\|_2 
& \leq \underbrace{\Big\| \sum\nolimits_{s \in [r]} \Big( 1 - \eta  \( \big\| \bm{u}^{\star}_s \big\|_2^2 + \big\langle \bm{\Delta}^{t, (m)}_s, \bm{u}^\star_s \big\rangle \)^2  \Big)  \big( \bm{\Delta}_s^{t, \m} \big)_m \bm{e}_s^\top \Big\|_2 }_{=: \, \beta_1} \\
&  \quad + \underbrace{ \eta \, \Big\| \sum\nolimits_{s \in [r]}  \Big( 2 \, \big\| \bm{u}^{\star}_s \big\|_2^2 \big\langle \bm{\Delta}^{t, (m)}_s, \bm{u}^\star_s \big\rangle  + \big\langle \bm{\Delta}^{t, (m)}_s, \bm{u}^{\star}_s \big\rangle^2 \Big) \big( \bm{u}^{\star}_s \big)_m \bm{e}_s^\top \Big\|_2 }_{=: \, \beta_2} \\
& \quad + \underbrace{ \eta \, \Big\| \sum\nolimits_{s \in [r]}  \sum\nolimits_{i:i \neq s} \Big( \big\langle \bm{u}^\star_i, \bm{u}^\star_s \big\rangle + \big\langle \bm{\Delta}^{t, (m)}_i, \bm{u}^\star_s \big\rangle \Big)^2 \big( \bm{\Delta}_i^{t, \m} \big)_m \bm{e}_s^\top \Big\|_2 }_{=: \, \beta_3} \\
& \quad + \underbrace{ \eta \, \Big\| \sum\nolimits_{s \in [r]} \sum\nolimits_{i:i \neq s}  \Big( 2 \, \big\langle \bm{u}^{\star}_i, \bm{u}^{\star}_s \big\rangle \big\langle \bm{\Delta}^{t, (m)}_i, \bm{u}^\star_s \big\rangle  + \big\langle \bm{\Delta}^{t, (m)}_i, \bm{u}^{\star}_s \big\rangle^2 \Big) \left( \bm{u}^{\star}_i \right)_m \bm{e}_s^\top \Big\|_2 }_{=: \, \beta_4}. 
\end{align*}
In what follows, we will control the four terms separately.
\begin{itemize}
\item For $\beta_1$, by \eqref{eq:gd-u-UB}, we use Cauchy-Schwarz to show that
\begin{align*}
 \Big( \big\| \bm{u}^\star_s \big\|_2^2 + \big\langle \bm{\Delta}^{t, (m)}_s, \bm{u}^\star_s \big\rangle \Big)^2
 \geq \( \big\| \bm{u}^\star_s \big\|_2^2 - \big\| \bm{\Delta}^{t, (m)}_s \big\|_2 \big\| \bm{u}^{\star}_s \big\|_2 \)^2
 \geq \frac{2}{3} \big\| \bm{u}^\star_s \big\|_2^4 \geq \frac{2}{3} \lammin^{\star 4/3}
\end{align*}
for each $s \in [r]$. It follows that
\begin{align}
\beta_1 \leq \big( 1- \frac{2}{3}  \lammin^{\star 4/3} \eta \big) \left\| \big( \bm{U}^{t, \m} - \bm{U}^\star \big)_{m, :} \right\|_2.
\end{align}

\item Regarding $\beta_2$, by \eqref{eq:gd-u-UB}, we apply the Cauchy-Schwarz inequality again to get that: for each $s \in [r]$,
\begin{align*}
 \Big| \Big( 2 \, \big\| \bm{u}^{\star}_s \big\|_2^2 \big\langle \bm{\Delta}^{t, (m)}_s, \bm{u}^\star_s \big\rangle  + \big\langle \bm{\Delta}^{t, (m)}_s, \bm{u}^{\star}_s \big\rangle^2 \Big) \big( \bm{u}^{\star}_s \big)_m \Big|
 \leq 3 \, \big\| \bm{u}^{\star}_s \big\|_2^3 \big\| \bm{u}^{\star}_s \big\|_\infty \big\| \bm{\Delta}^{t, (m)}_s \big\|_2
 \leq 3 \sqrt{\frac{\mu}{d}} \, \lammax^{\star 4/3} \big\| \bm{\Delta}^{t, (m)}_s \big\|_2,
\end{align*}
Together with the assumption that $\kappa \asymp 1$, this implies that
\begin{align}
\beta_2 \leq 3 \eta \lammin^{\star 4/3} \sqrt{\frac{\mu}{d}} \,  \big\| \bm{U}^{t, \m} - \bm{U}^\star \big\|_\frob.
\end{align}

\item With regards to $\beta_3$, we show that for each $s \in [r]$:
\begin{align*}
\Big| & \sum\nolimits_{i:i \neq s}  \Big(  \big\langle \bm{u}^\star_i, \bm{u}^\star_s \big\rangle + \big\langle \bm{\Delta}^{t, (m)}_i, \bm{u}^\star_s \big\rangle \Big)^2 \big( \bm{\Delta}_i^{t, \m} \big)_m \Big| \\
& \;\; \lesssim \max_{i:i \neq s} \big| \big\langle \bm{u}^\star_i, \bm{u}^\star_s \big\rangle \big|^2  \sum_{i:i \neq s} \left| \big( \bm{\Delta}_i^{t, \m} \big)_m \right| 
+ \big\| \bm{u}^\star_s \big\|_2^2 \sum_{i:i \neq s} \big\| \bm{\Delta}^{t, (m)}_i \big\|_2^2   \left| \big( \bm{\Delta}_i^{t, \m} \big)_m \right| \\
& \;\; \leq  \lammax^{\star 4/3} \frac{\mu \sqrt{r} }{d} \big\| \big( \bm{U}^{t, \m} - \bm{U}^\star \big)_{m, :} \big\|_2  
+ \lammax^{\star 2 / 3} \max_{i:i \neq s} \big\| \bm{\Delta}^{t, (m)}_i \big\|_2 \big\| \bm{U}^{t, \m} - \bm{U}^\star \big\|_\frob \big\| \big( \bm{U}^{t, \m} - \bm{U}^\star \big)_{m, :} \big\|_2   \\
& \;\; \leq \lammax^{\star 4/3} \Big(  \frac{\mu \sqrt{r} }{d}  + o\( 1/r \) \Big) \big\| \big( \bm{U}^{t, \m} - \bm{U}^\star \big)_{m, :} \big\|_2 \ll \frac{\lammax^{\star 4/3}}{\sqrt{r}} \big\| \big( \bm{U}^{t, \m} - \bm{U}^\star \big)_{m, :} \big\|_2, 
\end{align*}
where the last line follows from \eqref{eq:gd-u-UB} that $\max_{i:i \neq s} \big\| \bm{\Delta}^{t, (m)}_i \big\|_2 \leq \big\| \bm{U}^{t, \m} - \bm{U}^\star \big\|_\frob \ll \lambda_{\max}^{\star 1/3} / \sqrt{r}$ and the low rank condition $r \ll \sqrt{d / \mu}$.
Summing over $s \in [r]$, we get
\begin{align}
\beta_3 \ll \eta  \lammin^{\star 4/3} \left\| \big( \bm{U}^{t, \m} - \bm{U}^\star \big)_{m, :} \right\|_2
\end{align}
under the condition $\kappa \asymp 1$.

\item Turning attention to $\beta_4$, we observe that for each $s \in [r]$,
\begin{align*}
& \Big| \sum\nolimits_{i:i \neq s}  \Big(  2 \, \big\langle \bm{u}^{\star}_i, \bm{u}^{\star}_s \big\rangle  \big\langle \bm{\Delta}^{t, (m)}_i, \bm{u}^\star_s \big\rangle  + \big\langle \bm{\Delta}^{t, (m)}_i, \bm{u}^{\star}_s \big\rangle^2 \Big) \left( \bm{u}^{\star}_i \right)_m \Big| \\
& \quad \lesssim \max_{i:i \neq s} \big| \big\langle \bm{u}^\star_i, \bm{u}^\star_s \big\rangle \big| \big\| \bm{u}^\star_s \big\|_2 \sum\nolimits_{i:i \neq s} \big\| \bm{\Delta}^{t, (m)}_i \big\|_2   \left| \big( \bm{u}_i^\star \big)_m \right|
+  \big\| \bm{u}^\star_s \big\|_2^2 \sum\nolimits_{i:i \neq s} \big\| \bm{\Delta}^{t, (m)}_i \big\|_2^2   \left| \big( \bm{u}_i^\star \big)_m \right| \\
& \quad\leq \lammax^{\star}  \sqrt{\frac{\mu }{d}} \,  \big\| \bm{U}^{\star} \big\|_{2, \infty}   \big\| \bm{U}^{t, \m} - \bm{U}^\star \big\|_{\frob}   
+ \lammax^{\star 2/3} \max_{i:i \neq s} \big\| \bm{\Delta}^{t, (m)}_i \big\|_2 \,  \big\| \bm{U}^\star \big\|_{2, \infty} \big\| \bm{U}^{t, \m} - \bm{U}^\star \big\|_\frob \\
& \quad\lesssim \lammax^{\star}\sqrt{\frac{\mu r}{d}} \, \Big( \sqrt{\frac{\mu}{d}} \, \lammax^{\star 1/3}  + o ( 1 / \sqrt{r}  )  \Big) \big\| \bm{U}^{t, \m} - \bm{U}^\star \big\|_{\frob} \ll \lammax^{\star 4/3} \sqrt{\frac{\mu}{d}}\,  \big\| \bm{U}^{t, \m} - \bm{U}^\star \big\|_\frob,
\end{align*}  
where the last line follows from \eqref{eq:gd-u-UB} and the rank assumption $r \ll \sqrt{d / \mu}$. As a consequence, 
\begin{align}
\beta_4 \ll \eta \lammax^{\star 4/3} \sqrt{\frac{\mu r}{d}} \big\| \bm{U}^{t, \m} - \bm{U}^\star \big\|_\frob .
\end{align}

\item Therefore, we have
\begin{align}
\label{eq:u_loo_utrue_row_term2}
\left\| \big( \widehat{\bm{U}}^{t+1, \m} - \bm{U}^\star \big)_{m, :} \right\|_2 
&\leq \big( 1-  \frac{1}{3}  \lammin^{\star 4/3} \eta \big) \left\| \big( \bm{U}^{t, \m} - \bm{U}^\star \big)_{m, :} \right\|_2
+ 4 \eta \lammax^{\star 4/3} \sqrt{\frac{\mu r}{d}}  \, \big\| \bm{U}^{t, \m} - \bm{U}^\star \big\|_\frob.
\end{align}

\end{itemize}

\item
With regards to $\alpha_2$, it follows from the definition \eqref{grad_f_loo} and \eqref{u_t+1_loo_tilde} that 
\begin{align}
& \bm{U}^{t+1, \m} - \widehat{\bm{U}}^{t+1, \m} \nonumber \\
& = - \eta \(  \( p^{-1} \PP_{\Omega_{-m}} +\PP_{m} \) \big( \bm{\Delta}^{t, \m}_{\bm{T}} \big) \times_1^\seq \bm{U}^{t, \m} \times_2^\seq \bm{U}^{t, \m} - \( p^{-1} \PP_{\Omega_{-m}} +\PP_{m} \) \big( \bm{\Delta}^{t, \m}_{\bm{T}} \big) \times_1^\seq \bm{U}^\star \times_2^\seq  \bm{U}^\star \) \nonumber \\
& \quad + \frac{\eta}{p} \(  \PP_{\Omega_{-m}} \( \bm{E} \) \times_1^\seq \bm{U}^{t, \m} \times_2^\seq \bm{U}^{t, \m} - \PP_{\Omega_{-m}} \( \bm{E} \) \times_1^\seq \bm{U}^\star \times_2^\seq \bm{U}^\star \) .
\end{align}
Recall the definition of $\PP_{\Omega_{-m}}$ and $\PP_{m}$. For the $m$-th row, we have
\begin{align*}
\big( \bm{U}^{t+1, \m} - \widehat{\bm{U}}^{t+1, \m} \big)_{m, :} = -\eta  \big( \bm{\Delta}^{t, \m}_{\bm{T}} \times_1^\seq \bm{U}^{t, \m} \times_2^\seq \bm{U}^{t, \m} - \bm{\Delta}^{t, \m}_{\bm{T}} \times_1^\seq \bm{U}^\star \times_2^\seq \bm{U}^\star \big)_{m, :}
\end{align*}
From the triangle inequality, we can further decompose
\begin{align*}
& \Big\|\big( \bm{\Delta}^{t, \m}_{\bm{T}} \times_1^\seq \bm{U}^{t, \m} \times_2^\seq \bm{U}^{t, \m} - \bm{\Delta}^{t, \m}_{\bm{T}} \times_1^\seq \bm{U}^\star \times_2^\seq \bm{U}^\star \big)_{m, :} \Big\|_2 \\
&\quad \leq \underbrace{\norm{ \big( \bm{\Delta}^{t, \m}_{\bm{T}} \times_1^\seq \big( \bm{U}^{t, \m} - \bm{U}^\star \big) \times_2^\seq \bm{U}^\star \big)_{m, :} }_2}_{=: \gamma_1}
 +  \underbrace{\norm{ \big( \bm{\Delta}^{t, \m}_{\bm{T}}  \times_1^\seq \bm{U}^\star \times_2^\seq \big( \bm{U}^{t, \m} - \bm{U}^\star \big) \big)_{m, :} }_2}_{=: \gamma_2} \\
& \quad \quad + \underbrace{ \norm{ \big( \bm{\Delta}^{t, \m}_{\bm{T}}  \times_1^\seq \big( \bm{U}^{t, \m} - \bm{U}^\star \big) \times_2^\seq \big( \bm{U}^{t, \m} - \bm{U}^\star \big) \big)_{m, :} }_2 }_{=: \gamma_3}.
\end{align*}

Let us consider $\gamma_1$ first. It is straightforward to calculate that
\begin{align*}
& \big( \bm{\Delta}^{t, \m}_{\bm{T}} \times_1  \bm{\Delta}^{t, \m}_s \times_2 \bm{u}^\star_s \big)_m = \sum_{i \in [r]} \big\langle \bm{u}_i^{t, \m}, \bm{\Delta}_s^{t, \m} \big \rangle \big \langle \bm{u}_i^{t, \m}, \bm{u}_s^\star \big \rangle  \big( \bm{\Delta}_i^{t, \m} \big)_m \\
& \quad + \sum_{i \in [r]} \Big( \big \langle \bm{\Delta}_i^{t, \m}, \bm{\Delta}_s^{t, \m} \big \rangle \big \langle \bm{u}_i^{t, \m}, \bm{u}_s^{t, \m} \big \rangle + \big \langle \bm{u}_i^\star, \bm{\Delta}_s^{t, \m} \big \rangle \big \langle \bm{\Delta}_i^{t, \m}, \bm{u}_s^\star \big \rangle + \big \langle \bm{\Delta}_i^{t, \m}, \bm{\Delta}_s^{t, \m} \big \rangle \big \langle \bm{\Delta}_i^{t, \m}, \bm{u}_s^{t, \m} \big \rangle  \Big) \big( \bm{u}_i^\star \big)_m.
\end{align*}
for each $s \in [r]$.
From \eqref{eq:gd-u-UB}, we use the triangle inequality and the Cauchy-Schwarz inequality to upper bound
\begin{align*}
 \Big| \big( & \bm{\Delta}^{t, \m}_{\bm{T}}  \times_1 \bm{\Delta}^{t, \m}_s \times_2 \bm{u}^\star_s \big)_m \Big| \lesssim \big\| \bm{\Delta}_s^{t, \m}  \big\|_2  \norm{\bm{u}_s^\star}_2 \sum_{i \in [r]} \big\| \bm{u}_i^{t, \m} \big\|_2^2   \left| \big( \bm{\Delta}_i^{t, \m} \big)_m \right| \\
& \quad + \big\| \bm{\Delta}_s^{t, \m} \big\|_2 \big( \norm{\bm{u}_s^\star}_2 + \big\| \bm{u}_s^{t, \m} \big\|_2 \big) \sum_{i \in [r]}  \big\| \bm{\Delta}_i^{t, \m} \big\|_2 \big( \big\| \bm{u}_i^{t, \m} \big\|_2 + \norm{\bm{u}_i^\star}_2 + \big\| \bm{\Delta}_i^{t, \m} \big\|_2 \big)  \left| \left( \bm{u}^{\star}_i \right)_m \right| \\
& \lesssim \lammax^{\star 2/3}  \big\| \bm{\Delta}_s^{t, \m}  \big\|_2 \sum_{i \in [r]} \big\| \bm{u}_i^{t, \m} \big\|_2   \left| \big( \bm{\Delta}_i^{t, \m} \big)_m \right| + \lammax^{\star 2/3} \sqrt{\frac{\mu}{d}} \, \big\| \bm{\Delta}_s^{t, \m}  \big\|_2 \sum_{i \in [r]}  \big\| \bm{\Delta}_i^{t, \m} \big\|_2  \big\| \bm{u}_i^{\star} \big\|_2 \\
& \lesssim  \lammax^{\star 2/3} \big\| \bm{\Delta}_s^{t, \m}  \big\|_2 \big\| \bm{U}^{t, \m} \big\|_\frob  \big\| \big( \bm{U}^{t, \m} - \bm{U}^\star \big)_{m, :} \big\|_2 +  \lammax^{\star 2/3} \sqrt{\frac{\mu}{d}} \, \big\| \bm{\Delta}_s^{t, \m} \big\|_2 \big\| \bm{U}^{\star} \big\|_\frob  \big\| \bm{U}^{t, \m} - \bm{U}^\star \big\|_\frob \\
& \lesssim \lammax^{\star} \sqrt{r} \, \big\| \bm{\Delta}_s^{t, \m}  \big\|_2  \big\| \big( \bm{U}^{t, \m} - \bm{U}^\star \big)_{m, :} \big\|_2 +  \lammax^{\star} \sqrt{\frac{\mu r}{d}} \, \big\| \bm{\Delta}_s^{t, \m} \big\|_2  \big\| \bm{U}^{t, \m} - \bm{U}^\star \big\|_\frob
\end{align*}
We then sum over $s \in [r]$ to find that
\begin{align}
\norm{ \( \bm{\Delta}^{t, \m}_{\bm{T}}  \times_1^\seq \big( \bm{U}^{t, \m} - \bm{U}^\star \big) \times_2^\seq \bm{U}^\star \)_{m, :} }_2 & \lesssim \lammax^{\star} \sqrt{r} \, \big\| \bm{U}^{t, \m} - \bm{U}^\star \big\|_\frob \big\| \big( \bm{U}^{t, \m} - \bm{U}^\star \big)_{m, :} \big\|_2 \nonumber  \\
& \quad + \lammax^{\star} \sqrt{\frac{\mu r}{d}} \big\| \bm{U}^{t, \m} - \bm{U}^\star \big\|_\frob^2, \label{eq:Delta_T_Delta_U_U}
\end{align}
%
Moreover, it is easy to see that the upper bound also holds for $\gamma_2$. As for $\gamma_3$, we can express
\begin{align}
 \big( \bm{\Delta}^{t, \m}_{\bm{T}} \times_1 \bm{\Delta}^{t, \m}_s \times_2 \bm{\Delta}^{t, \m}_s \big)_m 
&= \sum_{i \in [r]}  \Big( 2 \, \big\langle \bm{u}^{\star}_i, \bm{\Delta}^{t, (m)}_s \big\rangle \big\langle \bm{\Delta}^{t, (m)}_i, \bm{\Delta}^{t, (m)}_s \big\rangle  + \big\langle \bm{\Delta}^{t, (m)}_i, \bm{\Delta}^{t, (m)}_s \big\rangle^2 \Big) \left( \bm{u}^{\star}_i \right)_m  \nonumber \\
& \quad + \sum_{i \in [r]} \big\langle \bm{u}^{t, (m)}_i, \bm{\Delta}^{t, (m)}_s \big\rangle^2 \big( \bm{\Delta}_i^{t, \m} \big)_m. \label{Delta_T_tsym_delta}
\end{align}
Similarly, we combine \eqref{eq:gd-u-UB} wtih the triangle inequality and the Cauchy-Schwarz inequality to bound
\begin{align*}
 & \Big| \Big(  \bm{\Delta}^{t, \m}_{\bm{T}} \times_1 \bm{\Delta}^{t, \m}_s \times_2 \bm{\Delta}^{t, \m}_s \Big)_m \Big|  \\
 & \quad\lesssim \big\| \bm{\Delta}^{t, (m)}_s \big\|_2^2 \sum_{i \in [r]} \big\| \bm{\Delta}^{t, (m)}_i \big\|_2 \left\| \bm{u}^{\star}_i \right\|_2  \left| \left( \bm{u}^{\star}_i \right)_m \right|
 + \big\| \bm{\Delta}^{t, (m)}_s \big\|_2^2 \sum_{i \in [r]}  \big\| \bm{u}^{t, (m)}_i \big\|_2^2 \left| \big( \bm{\Delta}_i^{t, \m} \big)_m \right| \\
&\quad \lesssim \lammax^{\star1/3} \big\| \bm{\Delta}^{t, (m)}_s \big\|_2^2   \sum_{i \in [r]}  \big\| \bm{\Delta}^{t, (m)}_i \big\|_2  \left| \left( \bm{u}^{\star}_i \right)_m \right|
 + \lammax^{\star1/3} \big\| \bm{\Delta}^{t, (m)}_s \big\|_2^2  \sum_{i \in [r]} \big\| \bm{u}^{t, (m)}_i \big\|_2 \left| \big( \bm{\Delta}_i^{t, \m} \big)_m \right| \\
&\quad \leq \lammax^{\star1/3} \big\| \bm{\Delta}^{t, (m)}_s \big\|_2^2  \left\| \bm{U}^\star \right\|_{2, \infty} \big\| \bm{U}^{t, \m} - \bm{U}^\star \big\|_\frob  
+ \lammax^{\star1/3} \big\| \bm{\Delta}^{t, (m)}_s \big\|_2^2 \big\| \bm{U}^{t, \m} \big\|_\frob \big\| \big( \bm{U}^{t, \m} - \bm{U}^\star \big)_{m, :} \big\|_2   \\
&\quad \leq \lammax^{\star1/3}  \left\| \bm{U}^\star \right\|_{2, \infty} \big\| \bm{U}^{t, \m} - \bm{U}^\star \big\|_\frob^2  \big\| \bm{\Delta}^{t, (m)}_s \big\|_2  \\
&\quad \quad + \lammax^{\star1/3}  \left\| \bm{U}^\star \right\|_\frob  \big\| \big( \bm{U}^{t, \m} - \bm{U}^\star \big)_{m, :} \big\|_2   \big\| \bm{U}^{t, \m} - \bm{U}^\star \big\|_\frob \big\| \bm{\Delta}^{t, (m)}_s \big\|_2  .
\end{align*}
Sum over $s \in [r]$ to obtain 
\begin{align*}
& \norm{ \( \bm{\Delta}^{t, \m}_{\bm{T}}  \times_1^\seq \big( \bm{U}^{t, \m} - \bm{U}^\star \big) \times_2^\seq \big( \bm{U}^{t, \m} - \bm{U}^\star \big) \)_{m, :} }_2 \\
&  \qquad \leq   \lammax^{\star 1/3}  \sqrt{r} \, \big\| \bm{U}^{t, \m} - \bm{U}^\star \big\|_\frob^2  \big\| \big( \bm{U}^{t, \m} - \bm{U}^\star \big)_{m, :} \big\|_2 + \lammax^{\star 1/3} \sqrt{\frac{\mu r}{d}} \, \big\| \bm{U}^{t, \m} - \bm{U}^\star \big\|_\frob^3.
\end{align*}
Since $\big\| \bm{U}^{t, \m} - \bm{U}^\star \big\|_\frob \ll \lambda_{\max}^{\star 1/3} / \sqrt{r} \leq 1$ by \eqref{eq:gd-u-UB}, combined with \eqref{eq:Delta_T_Delta_U_U}, we find that
\begin{align}
& \norm{\( \bm{\Delta}^{t, \m}_{\bm{T}} \times_1^\seq \bm{U}^{t, \m} \times_2^\seq \bm{U}^{t, \m} - \bm{\Delta}^{t, \m}_{\bm{T}} \times_1^\seq \bm{U}^\star \times_2^\seq \bm{U}^\star \)_{m, :}}_2  \nonumber \\
&\qquad \lesssim \lammax^{\star 1/3} \sqrt{r} \, \big\| \bm{U}^{t, \m} - \bm{U}^\star \big\|_\frob \big\| \big( \bm{U}^{t, \m} - \bm{U}^\star \big)_{m, :} \big\|_2 + \lammax^{\star 1/3} \sqrt{\frac{\mu r}{d}} \big\| \bm{U}^{t, \m} - \bm{U}^\star \big\|_\frob^2 \nonumber \\
& \qquad \ll  \lammin^{\star 4/3} \big\| \big( \bm{U}^{t, \m} - \bm{U}^\star \big)_{m, :} \big\|_2 + \lammin^{\star 4/3} \sqrt{\frac{\mu r}{d}} \, \big\| \bm{U}^{t, \m} - \bm{U}^\star \big\|_\frob. \label{eq:u_loo_utrue_row_term1}
\end{align}
where the last inequality follows from the assumption that $\kappa \asymp 1$.

\item
Putting \eqref{eq:u_loo_utrue_row_term2} and \eqref{eq:u_loo_utrue_row_term1} together, we reach the conclusion from \eqref{U_loo_frob_loss} and the condition $r \ll \sqrt{d / \mu}$ that,
\begin{align*}
 \Big\| \big(\bm{U}^{t+1, \m}  - \bm{U}^\star \big)_{m, :} \Big\|_2 & \leq  \big( 1-  \frac{1}{4} \lammin^{\star 4/3} \eta \big) \, \big\| \big(\bm{U}^{t+1}-\bm{U}^\star \big)_{m, :} \big\|_{2}
+ 5 \eta \lammin^{\star 4/3} \sqrt{\frac{\mu r}{d}} \, \big\| \bm{U}^{t, \m} - \bm{U}^\star \big\|_\frob \\
& \leq \big( 1- \frac{1}{4} \lammin^{\star 4/3} \eta \big) \( C_7 \rho^t \EE_\local + C_8 \frac{\sigma}{\lammin^{\star}} \sqrt{\frac{ d \log d}{p}}   \) \norm{\bm{U}^\star}_{2, \infty} \\
&\quad + 5 \eta \lammin^{\star 4/3} r \sqrt{\frac{\mu}{d}}  \( 2 C_1 \rho^t \EE_\local + 2 C_2 \frac{\sigma}{\lammin^{\star}} \sqrt{\frac{ d \log d}{p}}   \) \norm{\bm{U}^\star}_{2,\infty}  \\
& \leq \( \big( 1- \frac{1}{4} \lammin^{\star 4/3} \eta \big) C_7 + o \( 1 \)  C_1  \lammin^{\star 4/3} \eta  \) \rho^t \EE_\local  \norm{\bm{U}^\star}_{2, \infty} \\
& \quad + \( \big( 1- \lammin^{\star 4/3} \eta \big) C_8 + o \( 1 \)  C_2  \lammin^{\star 4/3} \eta \) \frac{\sigma}{\lammin^{\star}} \sqrt{\frac{ d \log d}{p}}  \norm{\bm{U}^\star}_{2, \infty} \\
& \leq \( C_7 \rho^{t+1} \EE_\local + C_8 \frac{\sigma}{\lammin^{\star}} \sqrt{\frac{ d \log d}{p}} \) \norm{\bm{U}^\star}_{2, \infty} ,
\end{align*}
provided that $0 < \eta \leq  \lammin^{\star 4/3} / \big(32 \lammax^{\star 8/3} \big)$, $1 - \lambda_{\min}^{\star 4/3} \eta /5  \leq \rho < 1$, $C_7, C_8$ are sufficiently large.
\end{itemize}

\noindent Recognizing that the above bound holds for any $1\leq m \leq d$,  we conclude the proof.


\subsection{Proof of Lemma~\ref{lemma:U_2inf_loss}}
\label{pf:U_2inf_loss}

It is easy to see that
\begin{align}
\begin{split}
\big\| \big(\bm{U}^{t+1}-\bm{U}^\star \big)_{m, :} \big\|_2 
& \leq \big\| \bm{U}^{t+1}-\bm{U}^{t+1, \m} \big\|_\frob + \big\| \big(\bm{U}^{t+1, \m}-\bm{U}^\star \big)_{m, :} \big\|_2.
\end{split}
\end{align}
Combining Lemma~\ref{lemma:U_loo_frob_diff} and Lemma~\ref{lemma:U_loo_row_loss}, we conclude that with probability at least $1-O(d^{-10})$,
\begin{align*}
\big\| \big(\bm{U}^{t+1}-\bm{U}^\star \big)_{m, :} \big\|_2
& \leq  \( C_5 \rho^t \EE_\local + C_6 \frac{\sigma}{\lammin^{\star}} \sqrt{\frac{ d \log d}{p}}   \) \norm{\bm{U}^\star}_{2, \infty} \\
& \quad\quad +  \( C_7 \rho^t \EE_\local + C_8 \frac{\sigma}{\lammin^{\star}} \sqrt{\frac{ d \log d}{p}}   \) \norm{\bm{U}^\star}_{2, \infty} \\
& \leq \( C_3 \rho^t \EE_\local + C_4 \frac{\sigma}{\lammin^{\star}} \sqrt{\frac{ d \log d}{p}}   \) \norm{\bm{U}^\star}_{2, \infty} ,
\end{align*}
with the proviso that $C_3 / \( C_5 + C_7 \)$ and $C_4 / \( C_6 + C_8 \)$ are both sufficiently large.

\section{Proofs for retrieving tensor components}

\subsection{Proof of Lemma~\ref{lemma:init_spectra}}

\label{pf:init_spectra}

We shall often operate upon the event where the claims in Lemma~\ref{lemma:eigsp_dist_op}
hold, which happens with very high probability (i.e.~at least $1-O\left(d^{-10}\right)$).
Recall the definition of $\bm{\gamma}^{\star\tau}$ in (\ref{def:init_sigval}).
Since $\bm{\theta}^{\tau}=\bm{U}\bm{U}^{\top}\bm{g}^{\tau}$, this
allows us to write that: for each $1\leq i\leq r$,
\begin{align*}
\gamma_{i}^{\star\tau} & =\lambda_{i}^{\star}\left\langle \bm{U}\bm{U}^{\top}\overline{\bm{u}}_{i}^{\star},\bm{g}^{\tau}\right\rangle ,
\end{align*}
where we recall that $\lambda_{i}^{\star}=\|\bm{u}_{i}^{\star}\|_{2}^3$
and $\overline{\bm{u}}_{i}^{\star}=\bm{u}_{i}^{\star}/\|\bm{u}_{i}^{\star}\|_{2}$.
Given that $\bm{g}^{\tau}$ is a Gaussian vector independent of $\bm{U}$,
we observe that $\bm{\gamma}^{\star\tau}$ is zero-mean Gaussian conditional
on $\Omega$ and $\bm{E}$. In order to understand the order statistics
associated with this vector, we first look at its covariance matrix.

Denote by $\bm{\Sigma}^{\tau}$ the covariance matrix of $\bm{\gamma}^{\star\tau}$
(conditional on $\bm{U}$). Then we have \begin{subequations} \label{eq:defn-Sigma-ii-ij}
\begin{align}
\Sigma_{i,i}^{\tau} & =\lambda_{i}^{\star2}\left\Vert \bm{U}\bm{U}^{\top}\overline{\bm{u}}_{i}^{\star}\right\Vert _{2}^{2}=\lambda_{i}^{\star2}\left\Vert \mathcal{P}_{\bm{U}}\big(\overline{\bm{u}}_{i}^{\star}\big)\right\Vert _{2}^{2},\label{eq:defn-Sigma-ii-ij-1}\\
\Sigma_{i,j}^{\tau} & =\lambda_{i}^{\star}\lambda_{j}^{\star}\left\langle \bm{U}\bm{U}^{\top}\overline{\bm{u}}_{i}^{\star},\bm{U}\bm{U}^{\top}\overline{\bm{u}}_{j}^{\star}\right\rangle =\lambda_{i}^{\star}\lambda_{j}^{\star}\left\langle \overline{\bm{u}}_{i}^{\star},\mathcal{P}_{\bm{U}}\big(\overline{\bm{u}}_{j}^{\star}\big)\right\rangle \nonumber \\
 & =\lambda_{i}^{\star}\lambda_{j}^{\star}\Big(\big\langle \overline{\bm{u}}_{i}^{\star},\overline{\bm{u}}_{j}^{\star} \big\rangle - \big\langle \overline{\bm{u}}_{i}^{\star},\mathcal{P}_{\bm{U}^{\perp}}\big(\overline{\bm{u}}_{j}^{\star}\big) \big\rangle \Big),\label{eq:defn-Sigma-ii-ij-2}
\end{align}
\end{subequations} where we denote by $\mathcal{P}_{\bm{U}}(\bm{z})=\bm{U}\bm{U}^{\top}\bm{z}$
and $\mathcal{P}_{\bm{U}^{\perp}}(\bm{z})=\left(\bm{I}-\bm{U}\bm{U}^{\top}\right)\bm{z}$.
In addition, since the unit vector $\overline{\bm{u}}_{i}^{\star}$
lies in the span of the columns of $\bm{U}_{\mathsf{orth}}^{\star}$
(cf.~(\ref{defn:U-orth})), it follows from Lemma~\ref{lemma:subspace_basis_dist}
that \begin{subequations} 
\begin{align}
\left\Vert \mathcal{P}_{\bm{U}}\big(\overline{\bm{u}}_{i}^{\star}\big)\right\Vert _{2} & =\big\|\bm{U}\bm{U}^{\top}\overline{\bm{u}}_{i}^{\star}\big\|_{2}=\big\|\bm{U}\bm{R}\left(\bm{U}\bm{R}\right)^{\top}\overline{\bm{u}}_{i}^{\star}\big\|_{2}\geq\sqrt{1-\left\Vert \bm{U}\bm{R}-\bm{U}_{\mathsf{orth}}^{\star}\right\Vert ^{2}},\label{ui_proj_2norm}\\
\left\Vert \mathcal{P}_{\bm{U}^{\perp}}\big(\overline{\bm{u}}_{i}^{\star}\big)\right\Vert _{2} & =\big\|\big(\bm{I}-\bm{U}\bm{U}^{\top}\big)\overline{\bm{u}}_{i}^{\star}\big\|_{2}=\big\|\big(\bm{I}-\bm{U}\bm{R}\left(\bm{U}\bm{R}\right)^{\top}\big)\overline{\bm{u}}_{i}^{\star}\big\|_{2}\leq\left\Vert \bm{U}\bm{R}-\bm{U}_{\mathsf{orth}}^{\star}\right\Vert ,\label{ui_proj_perpnorm}
\end{align}
\end{subequations} where $\bm{R}$ is a rotation matrix defined in
\eqref{defn:R-rotation}. This together with \eqref{eq:defn-Sigma-ii-ij-1}
gives 
\begin{align}
\lambda_{i}^{\star2}\big(1-\left\Vert \bm{U}\bm{R}-\bm{U}_{\mathsf{orth}}^{\star}\right\Vert ^{2}\big)\leq\Sigma_{i,i}^{\tau}\leq\lambda_{i}^{\star2},\label{eq:cov_ii}
\end{align}
where we have also used the fact that $\left\Vert \mathcal{P}_{\bm{U}}\big(\overline{\bm{u}}_{i}^{\star}\big)\right\Vert _{2}\leq\left\Vert \overline{\bm{u}}_{i}^{\star}\right\Vert _{2}=1$.
Moreover, taking together \eqref{eq:defn-Sigma-ii-ij-1}, \eqref{ui_proj_perpnorm}
and the incoherence condition, we see that for any $1 \leq i\neq j \leq r$, 
\begin{align}
\left|\Sigma_{i,j}^{\tau}\right| & \leq\lambda_{i}^{\star}\lambda_{j}^{\star} \left\{ \left|\left\langle \overline{\bm{u}}_{i}^{\star},\overline{\bm{u}}_{j}^{\star}\right\rangle \right|+\left\Vert \overline{\bm{u}}_{i}^{\star}\right\Vert _{2}\left\Vert \mathcal{P}_{\bm{U}^{\perp}}\big(\overline{\bm{u}}_{j}^{\star}\big)\right\Vert _{2} \right\}
\leq\lambda_{i}^{\star}\lambda_{j}^{\star}\underbrace{\big(\sqrt{\mu / d}+\left\Vert \bm{U}\bm{R}-\bm{U}_{\mathsf{orth}}^{\star}\right\Vert \big)}_{=:\delta_{1}},\label{cov_ij_ub}
\end{align}
which is expected to be small if $\delta_{1}$ is small.

From our assumptions on the sample size, the rank and the condition
number, we can invoke Lemma~\ref{lemma:eigsp_dist_op} to see that
$\kappa r\delta_{1}\ll1$, where $\delta_{1}$ is defined in \eqref{cov_ij_ub}
and $\kappa=\lambda_{\max}^{\star}/\lambda_{\min}^{\star}$. Thus,
we can decompose $\bm{\Sigma}^{\tau}$ into two components as follows
\begin{align*}
\bm{\Sigma}^{\tau}=\underbrace{\left(1-\kappa r\delta_{1}\right)\bm{D}^{\star2}}_{=:\,\widehat{\bm{\Sigma}}^{\tau}}+\underbrace{\bm{\Sigma}^{\tau}-\left(1-\kappa r\delta_{1}\right)\bm{D}^{\star2}}_{=:\,\breve{\bm{\Sigma}}^{\tau}},
\end{align*}
where 
\begin{align*}
\bm{D}^{\star} := \mathsf{diag}\left( \lambda_1^\star,\cdots, \lambda_r^\star \right)\in\mathbb{R}^{r\times r}.
\end{align*}
As it turns out, both $\widehat{\bm{\Sigma}}^{\tau}$ and $\breve{\bm{\Sigma}}^{\tau}$
are positive definite. Indeed, we first learn from \eqref{eq:cov_ii}
and \eqref{cov_ij_ub} that: the $i$-th digaonal entry of $\breve{\bm{\Sigma}}^{\tau}$
obeys
\begin{align*}
\breve{\Sigma}_{i,i}^{\tau} & \geq\lambda_{i}^{\star2}\big(1-\left\Vert \bm{U}\bm{R}-\bm{U}_{\mathsf{orth}}^{\star}\right\Vert ^{2}\big)-\left(1-\kappa r\delta_{1}\right)\lambda_{i}^{\star2}\\
 & =\lambda_{i}^{\star2}\big(\kappa r\delta_{1}-\left\Vert \bm{U}\bm{R}-\bm{U}_{\mathsf{orth}}^{\star}\right\Vert ^{2}\big)\geq\lambda_{i}^{\star2}\left(\kappa r\delta_{1}-\delta_{1}^{2}\right)\\
 & \overset{(\text{i})}{>}\kappa\lambda_{i}^{\star2}\left(r-1\right)\delta_{1}\overset{\text{(ii)}}{\geq}\lambda_{i}^{\star}\lambda_{\max}^{\star}\left(r-1\right)\delta_{1}\\
 & \geq\sum_{j:j\neq i}\big|\breve{\Sigma}_{i,j}^{\tau}\big|,
\end{align*}
where (i) holds since $\delta_{1}<1$ under our assumptions, (ii)
follows since $\kappa\lambda_{i}^{\star}\geq\kappa\lambda_{\min}^{\star}=\lambda_{\max}^{\star}$,
and the last line makes use of (\ref{cov_ij_ub}). This implies that
$\breve{\bm{\Sigma}}^{\tau}$ is diagonally dominant, and hence $\breve{\bm{\Sigma}}^{\tau}\succeq\bm{0}$.
In conclusion, both $\widehat{\bm{\Sigma}}^{\tau}$ and $\breve{\bm{\Sigma}}^{\tau}$
are positive definite.

Let $\widehat{\bm{\gamma}}^{\star\tau}$ and $\breve{\bm{\gamma}}^{\star\tau}$
be independent zero-mean Gaussian random vectors with covariance matrices
$\widehat{\bm{\Sigma}}^{\tau}$ and $\breve{\bm{\Sigma}}^{\tau}$,
respectively. Clearly, the distribution of $\bm{\gamma}^{\star\tau}$
is identical to that of $\widehat{\bm{\gamma}}^{\star\tau}+\breve{\bm{\gamma}}^{\star\tau}$.
Consequently, it allows us to look at the distributions of these two
random vectors separately.
\begin{itemize}
\item In view of (\ref{eq:cov_ii}) and the fact $\kappa r\delta_{1}<1$, one
has 
\begin{align}
\breve{\Sigma}_{i,i}^{\tau}\leq\lambda_{i}^{\star2}-\left(1-\kappa r\delta_{1}\right)\lambda_{i}^{\star2}=\kappa r\delta_{1}\lambda_{i}^{\star2}.
\end{align}
Thus, with probability at least
$1-O\left( d^{-10} \right)$, we have 
\begin{align}
\left\Vert \breve{\bm{\gamma}}^{\star\tau}\right\Vert _{\infty} & \lesssim\mathbb{E}\Big[\max_{1\leq i\leq r}\left|\breve{\gamma}_{i}^{\star\tau}\right|\Big]+\max_{1\leq i\leq r}\sqrt{\mathsf{Var}\big(\breve{\gamma}_{i}^{\star\tau}\big) \log d} \lesssim\lambda_{\max}^{\star}\sqrt{\kappa r\delta_{1} \log d}  \ll\lambda_{\min}^{\star} /  \big( r \sqrt{\log d} \big) ,\label{eq:breve-gamma-inf-bound-UB}
\end{align}
where the last step arises from $\kappa^3 r^3 \delta_{1}\log^{2}d\ll1$ under our sample size, noise and rank conditions. By the condition that $L\asymp r^{2\kappa^2} \log^{3/2} r$, $r \ll d$ and $\kappa \asymp 1$, we take a union bound over $\tau \in \[ L\]$ to find that each entry of $\breve{\bm{\gamma}}^{\star\tau}$ is fairly small for all $\tau \in \[ L \]$. Another immediate consequence
of (\ref{eq:breve-gamma-inf-bound-UB}) is that: for any fixed vector $\bm{v}\in\mathbb{R}^{r}$, with probability
at least $1-O(d^{-10})$, for all $\tau \in \[ L \]$,
\begin{subequations}
\begin{align}
\left\Vert \breve{\bm{\gamma}}^{\star\tau}\right\Vert _{2} & \leq\sqrt{r}\left\Vert \breve{\bm{\gamma}}^{\star\tau}\right\Vert _{\infty}\ll \lambda_{\min}^{\star},\label{eq:l2-breve-gamma-1}\\
\left|\left\langle \bm{v},\breve{\bm{\gamma}}^{\star\tau}\right\rangle \right| & \leq\left\Vert \bm{v}\right\Vert _{2}\left\Vert \breve{\bm{\gamma}}^{\star\tau}\right\Vert _{2}\ll\left\Vert \bm{v}\right\Vert _{2} \lambda_{\min}^{\star}.\label{eq:w_inprod-breve-gamma-1}
\end{align}
\end{subequations}
\item We then turn attention to $\widehat{\bm{\gamma}}^{\star\tau}$, which
is composed of independent Gaussian random variables. Let us define $\widehat{\Delta}^\tau := \widehat{\gamma}_{1}^{\star\tau}-\max_{1<i\leq r}\big|\widehat{\gamma}_{i}^{\star\tau}\big|$ for each $1 \leq \tau \leq L$ and let $\widehat{\Delta}_1^{\( 1 \)} \geq \widehat{\Delta}_1^{\( 2 \)} \geq \dots \geq \widehat{\Delta}_1^{\( L \)}$ denote the order statistics of $\big\{ \widehat{\Delta}_1^\tau \big\}_{\tau = 1}^L$ in descending order. Fix any small
constant $\delta>0$. Invoke Lemma~\ref{lemma:indep_gaussian_gap}
to demonstrate that: with probability greater than $1-\delta/r$,
\begin{subequations}
\label{eq:3claim-hat-gamma-1} 
\begin{align}
\widehat{\Delta}_1^{\( 1 \)} & \gtrsim\lambda_{\min}^{\star}, \\
\widehat{\Delta}_1^{\( 1 \)} - \widehat{\Delta}_1^{\( 2 \)} & \gtrsim \frac{\lambda_{\min}^\star}{r \sqrt{ \log d}},
\end{align}
\end{subequations}
where we use the conditions that $L\asymp r^{2\kappa^2} \log^{3/2} r$, $r \ll d$ and $\kappa \asymp 1$. 
In addition, let $\widehat{\bm{\gamma}}_{\smallsetminus 1}^{\star\tau}:=\big[\widehat{\gamma}_{2}^{\star\tau},\cdots,\widehat{\gamma}_{r}^{\star\tau}\big]^\top \in\mathbb{R}^{r-1}$.
We know from standard Gaussian concentration inequalities and union bounds that for any fixed vector $\bm{v}\in\mathbb{R}^{r}$, with
probability $1-O(d^{-20})$, for all $\tau \in \[L \]$, 
\begin{subequations}
\begin{align}
\widehat{\gamma}_{1}^{\star\tau} & \lesssim \big( \sqrt{\log L} + \sqrt{\log d} \big) \,\lambda_{\max}^{\star}\asymp \sqrt{\log d}\,\lambda_{\max}^{\star}, \label{eq:gamma-hat-UB} \\
\big\|\widehat{\bm{\gamma}}^{\star\tau}\big\|_{2} & \leq\widehat{\gamma}_{1}^{\star\tau}+\big\|\widehat{\bm{\gamma}}_{\smallsetminus 1}^{\star\tau}\big\|_{2}\lesssim\big(  \sqrt{\log d}+\sqrt{r\log d}\big)\,\lambda_{\max}^{\star}\lesssim\sqrt{r\log d}\,\lambda_{\max}^{\star},\label{theta_2_norm-11}\\
\big|\big\langle\bm{v},\widehat{\bm{\gamma}}^{\star\tau}\big\rangle\big| & \leq\widehat{\gamma}_{1}^{\star\tau}\left\Vert \bm{v}\right\Vert _{2}+\big|\big\langle\bm{v}_{\smallsetminus 1},\widehat{\bm{\gamma}}_{\smallsetminus 1}^{\star\tau}\big\rangle\big|\lesssim\left\Vert \bm{v}\right\Vert _{2}\sqrt{\log d} \, \lambda_{\max}^{\star},\label{eq:w_inprod-hat-gamma-1}
\end{align}
\end{subequations}
where $\bm{v}_{\smallsetminus 1} := \left[v_{2},\cdots,\cdots,v_{r}\right]\in\mathbb{R}^{r-1}$.
\item Putting (\ref{eq:breve-gamma-inf-bound-UB}) and (\ref{eq:3claim-hat-gamma-1})
together and invoking the triangle inequality immediately establish (\ref{eq:sigma1-sigma2-gap-LB}) and \eqref{eq:gap1-LB}.
On the other hand, combining (\ref{eq:breve-gamma-inf-bound-UB}) with (\ref{eq:gamma-hat-UB}) proves \eqref{eq:sigma1-UB}; (\ref{eq:l2-breve-gamma-1}) and (\ref{theta_2_norm-11})
taken collectively establish (\ref{eq:gamma-L2-norm-bound-original}),
whereas (\ref{eq:w_inprod-breve-gamma-1}) and (\ref{eq:w_inprod-hat-gamma-1})
prove (\ref{eq:gamma-inprod-bound}).
\end{itemize}

\subsection{Proof of Lemma~\ref{lemma:init_2_loss}}

\label{pf:init_2_loss}

Recall that the vector of interest $\overline{\bm{u}}^{\tau}$ is the
leading singular vector of $\bm{M}^{\tau}$ (as constructed in \eqref{eq:projected-matrix-M-tau}),
where $\bm{M}^{\tau}$ satisfies
\begin{align}
\bm{M}^{\tau} & =p^{-1}\bm{T}\times_{3}\bm{\theta}^{\tau}=\bm{T}^{\star}\times_{3}\bm{\theta}^{\tau}+\left(p^{-1}\bm{T}-\bm{T}^{\star}\right)\times_{3}\bm{\theta}^{\tau}\nonumber \\
 & =\gamma_{1}^{\star\tau}\overline{\bm{u}}_{1}^{\star}\overline{\bm{u}}_{1}^{\star\top}+\sum_{s:s\neq 1}\gamma_{s}^{\star\tau}\overline{\bm{u}}_{s}^{\star}\overline{\bm{u}}_{s}^{\star\top}+\left(p^{-1}\bm{T}-\bm{T}^{\star}\right)\times_{3}\bm{\theta}^{\tau}\nonumber \\
 & =\underbrace{\gamma_{1}^{\star\tau}\overline{\bm{u}}_{1}^{\star}\overline{\bm{u}}_{1}^{\star\top}+\sum_{s:s\neq 1}\gamma_{s}^{\star\tau}\left(\bm{I}-\overline{\bm{u}}_{1}^{\star}\overline{\bm{u}}_{1}^{\star\top}\right)\overline{\bm{u}}_{s}^{\star}\overline{\bm{u}}_{s}^{\star\top}\left(\bm{I}-\overline{\bm{u}}_{1}^{\star}\overline{\bm{u}}_{1}^{\star\top}\right)}_{=:\,\bm{M}^{\star\tau}}\nonumber \\
 & \qquad+\underbrace{\sum_{s:s\neq 1}\gamma_{s}^{\star\tau}\left\langle \overline{\bm{u}}_{s}^{\star},\overline{\bm{u}}_{1}^{\star}\right\rangle \left(\overline{\bm{u}}_{1}^{\star}\overline{\bm{u}}_{s}^{\star\top}+\overline{\bm{u}}_{s}^{\star}\overline{\bm{u}}_{1}^{\star\top}\right)-\sum_{s:s\neq 1}\gamma_{s}^{\star\tau}\left\langle \overline{\bm{u}}_{s}^{\star},\overline{\bm{u}}_{1}^{\star}\right\rangle ^{2}\overline{\bm{u}}_{s}^{\star}\overline{\bm{u}}_{s}^{\star\top}}_{=:\,\bm{C}^{\tau}} +\underbrace{\left(p^{-1}\bm{T}-\bm{T}^{\star}\right)\times_{3}\bm{\theta}^{\tau}}_{=:\,\bm{F}^{\tau}},\label{eq:M}
\end{align}
and $ \gamma_{i}^{\star\tau}~(1\leq i \leq r)$ is defined in (\ref{def:init_sigval}).

In what follows, we shall view $\bm{C}^{\tau}$ and $\bm{F}^{\tau}$ as perturbation terms
superimposed on $\bm{M}^{\star\tau}$. Lemma~\ref{lemma:init_pertur_op} below proves that their operator
norms are all small under our sample size, noise and rank conditions, which enables to apply
Wedin's theorem to justify the $\ell_{2}$ proximity between $\overline{\bm{u}}^{\tau}$
and $\overline{\bm{u}}_{1}^{\star}$. 

\begin{lemma}\label{lemma:init_pertur_op} 
Instate the assumptions of Lemma~\ref{lemma:init_2_loss}. With probability at least $1-O\left(d^{-10}\right)$, one has
\begin{align}
\big\Vert \bm{F}^{\tau}\big\Vert  & \lesssim\frac{\sqrt{\mu r}\,\lambda_{\max}^{\star}\log^{3}d}{d^{3/2}p}+\frac{\mu r\lambda_{\max}^{\star}\log^{5/2}d}{d\sqrt{p}}+\frac{\sigma\log^{7/2}d}{p}+\sigma\sqrt{\frac{rd\log^{5}d}{p}}, \label{eq:F_op_norm_UB}\\
\big\Vert \bm{C}^{\tau}\big\Vert  & \lesssim\sqrt{\frac{\mu  r\log d}{d}}\,\lambda_{\max}^{\star}, \label{eq:C_op_norm_UB}\\
 \big\Vert \bm{F}^{\tau}\overline{\bm{u}}_{1}^{\star}\big\Vert _{2}  & \lesssim\frac{\mu r\lambda_{\max}^{\star}\log d}{d\sqrt{p}}+\sigma\sqrt{\frac{rd\log^{2}d}{p}}, \label{eq:F_u_2_norm_UB}\\
 \big\Vert \bm{C}^{\tau}\overline{\bm{u}}_{1}^{\star}\big\Vert _{2}  & \lesssim\sqrt{\frac{\mu  r\log d}{d}}\,\lambda_{\max}^{\star}. \label{eq:C_u_2_norm_UB}
\end{align}
\end{lemma}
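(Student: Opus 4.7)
The plan is to prove the four bounds separately, with the structural decomposition $p^{-1}\bm{T} - \bm{T}^\star = p^{-1}\mathcal{P}_\Omega(\bm{E}) + p^{-1}(\mathcal{P}_\Omega(\bm{T}^\star) - p\bm{T}^\star)$ driving the analysis of $\bm{F}^\tau$, and the incoherence plus the bounds on $\bm{\gamma}^{\star\tau}$ from Lemma~\ref{lemma:init_spectra} driving the analysis of $\bm{C}^\tau$. Throughout, I would work on the high-probability event on which Lemma~\ref{lemma:eigsp_dist_op} gives us good control on $\|\bm{U}\bm{R}-\bm{U}^\star_{\mathsf{orth}}\|$ and $\|\bm{U}\bm{R}-\bm{U}^\star_{\mathsf{orth}}\|_{2,\infty}$, from which the incoherence-type bound $\|\bm{U}\|_{2,\infty}\lesssim\sqrt{\mu r/d}$ follows.

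For $\bm{C}^\tau$, the calculation is essentially deterministic given the bounds in Lemma~\ref{lemma:init_spectra}. I would observe that $\bm{C}^\tau\overline{\bm{u}}_1^\star = \bm{w} + (\bm{w}^\top\overline{\bm{u}}_1^\star)\overline{\bm{u}}_1^\star - \sum_{s\neq 1}\gamma_s^{\star\tau}\langle\overline{\bm{u}}_s^\star,\overline{\bm{u}}_1^\star\rangle^3 \overline{\bm{u}}_s^\star$, where $\bm{w} := \sum_{s\neq 1}\gamma_s^{\star\tau}\langle\overline{\bm{u}}_s^\star,\overline{\bm{u}}_1^\star\rangle\overline{\bm{u}}_s^\star$. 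Writing $\bm{w} = \overline{\bm{U}}^\star_{-1}\mathsf{diag}(\gamma_s^{\star\tau})\bm{v}$ for the appropriate vector $\bm{v}$ with $\|\bm{v}\|_2\leq\sqrt{\mu r/d}$ (thanks to \eqref{asmp_corr}), and using $\|\overline{\bm{U}}^\star\|\lesssim 1$ by incoherence together with \eqref{eq:gamma-inprod-bound} applied to a suitably normalized vector, yields $\|\bm{w}\|_2\lesssim\sqrt{\mu r\log d/d}\,\lambda_{\max}^\star$. The operator-norm bound \eqref{eq:C_op_norm_UB} follows from writing $\bm{C}^\tau = \overline{\bm{u}}_1^\star\bm{w}^\top + \bm{w}\overline{\bm{u}}_1^{\star\top} - \sum_{s\neq 1}\gamma_s^{\star\tau}\langle\overline{\bm{u}}_s^\star,\overline{\bm{u}}_1^\star\rangle^2 \overline{\bm{u}}_s^\star\overline{\bm{u}}_s^{\star\top}$ and bounding each summand with the triangle inequality, using that the last term has operator norm $\lesssim\max_s|\langle\overline{\bm{u}}_s^\star,\overline{\bm{u}}_1^\star\rangle|^2\cdot\|\bm{\gamma}^{\star\tau}\|_2\lesssim(\mu/d)\sqrt{r\log d}\,\lambda_{\max}^\star$, which is dominated by $\sqrt{\mu r\log d/d}\,\lambda_{\max}^\star$.

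For $\bm{F}^\tau$, the decomposition above gives $\bm{F}^\tau = \bm{F}^\tau_{\mathsf{miss}} + \bm{F}^\tau_{\mathsf{noise}}$. For each piece, I would first establish high-probability control on $\bm{\theta}^\tau$: since $\bm{g}^\tau\sim\mathcal{N}(\bm{0},\bm{I})$ is independent of all data, conditional on $\bm{U}$ the vector $\bm{\theta}^\tau = \bm{U}\bm{U}^\top\bm{g}^\tau$ is Gaussian with bounded covariance, so standard concentration gives $\|\bm{\theta}^\tau\|_2\lesssim\sqrt{r\log d}$ and, using $\|\bm{U}\|_{2,\infty}\lesssim\sqrt{\mu r/d}$, $\|\bm{\theta}^\tau\|_\infty\lesssim\sqrt{\mu r\log^2 d/d}$. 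For $\bm{F}^\tau_{\mathsf{noise}}$ (a $d\times d$ matrix that is a sum of independent mean-zero matrices entrywise), I would apply matrix Bernstein after conditioning on the above deterministic bounds on $\bm{\theta}^\tau$; the per-entry subexponential norm is $\lesssim\sigma\|\bm{\theta}^\tau\|_\infty/p$ and the matrix variance is $\lesssim\sigma^2\|\bm{\theta}^\tau\|_2^2 d/p$, producing the $\sigma\sqrt{rd\log^{5}d/p}+\sigma\log^{7/2}d/p$ contribution of \eqref{eq:F_op_norm_UB}. For $\bm{F}^\tau_{\mathsf{miss}}$, I would similarly apply matrix Bernstein, substituting $|T^\star_{i,j,k}|\leq\|\bm{T}^\star\|_\infty\lesssim\sqrt{\mu^3r^3}\lambda_{\max}^\star/d^{3/2}$ (coming from $\|\bm{T}^\star\|_\infty$ and the $\mu_0$ bound), which yields the two $\mu$-dependent terms. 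For \eqref{eq:F_u_2_norm_UB}, the vector $\bm{F}^\tau\overline{\bm{u}}_1^\star$ is a sum of independent mean-zero random vectors, and Bernstein now benefits from the additional factor $\|\overline{\bm{u}}_1^\star\|_\infty\lesssim\sqrt{\mu/d}$, giving the tighter estimates without the $d$-factor losses.

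The main obstacle is the dependence of $\bm{\theta}^\tau$ on $\Omega$ and $\bm{E}$ via $\bm{U}$, which prevents a naive application of matrix Bernstein since the summands involving $\chi_{i,j,k}$ and $E_{i,j,k}$ are not independent of $\bm{\theta}^\tau$. I would resolve this either by working with the leave-one-out surrogate $\bm{\theta}^{\tau,(m)}$ and controlling the perturbation $\bm{\theta}^\tau-\bm{\theta}^{\tau,(m)}$ via Lemma~\ref{lemma:U_U_loo_dist_op} (and then taking a union bound over a net on the range of $\bm{U}$), or by noting that all we really need about $\bm{\theta}^\tau$ are the deterministic bounds on $\|\bm{\theta}^\tau\|_2$ and $\|\bm{\theta}^\tau\|_\infty$ and running matrix Bernstein uniformly over the set of vectors in the (low-dimensional) span of $\bm{U}$ satisfying these bounds; this uniformity costs only logarithmic factors absorbed in the stated bounds. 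Everything else reduces to bookkeeping of terms, comparing each to the dominant term under the assumed sample-complexity, noise, and rank regimes.
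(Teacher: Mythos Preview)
Your treatment of $\bm{C}^{\tau}$ is essentially the same as the paper's and is fine.

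For $\bm{F}^{\tau}$, you correctly isolate the real difficulty --- $\bm{\theta}^{\tau}=\bm{U}\bm{U}^{\top}\bm{g}^{\tau}$ depends on $\Omega$ and $\bm{E}$ through $\bm{U}$ --- but neither of your two proposed fixes works as written. Your option~1 (replace $\bm{\theta}^{\tau}$ by the leave-one-out $\bm{\theta}^{\tau,(m)}$) only buys independence from the $m$-th slice; it does not give you a single vector that is simultaneously independent of all the randomness in $p^{-1}\bm{T}-\bm{T}^{\star}$, which is what a matrix Bernstein on the full $d\times d$ matrix requires. Your option~2 (matrix Bernstein ``uniformly over the low-dimensional span of $\bm{U}$'') is circular: the span of $\bm{U}$ is itself random, so you cannot fix a net on it and then apply Bernstein with that net. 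If instead you take a net on the set $\{\bm{\theta}:\|\bm{\theta}\|_{2}\leq A,\ \|\bm{\theta}\|_{\infty}\leq B\}$, that set is $d$-dimensional and the union bound over the net would cost a factor exponential in $d$, destroying the bound.

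The paper resolves the dependence by a single, clean decoupling: define $\bm{\theta}^{\star\tau}:=\bm{U}^{\star}_{\mathsf{orth}}\bm{U}^{\star\top}_{\mathsf{orth}}\bm{g}^{\tau}$, which is \emph{independent} of $\Omega,\bm{E}$, and split
\[
\bm{F}^{\tau}=\underbrace{(p^{-1}\bm{T}-\bm{T}^{\star})\times_{3}\bm{\theta}^{\star\tau}}_{=:\bm{X}}\;+\;\underbrace{(p^{-1}\bm{T}-\bm{T}^{\star})\times_{3}(\bm{\theta}^{\tau}-\bm{\theta}^{\star\tau})}_{=:\bm{Y}}.
\]
For $\bm{X}$ one applies Lemma~\ref{lemma:T_loss_times3_op_norm} directly (since $\bm{\theta}^{\star\tau}$ is a fixed vector with $\|\bm{\theta}^{\star\tau}\|_{\infty}\lesssim\sqrt{\mu r\log d/d}$ and $\|\bm{\theta}^{\star\tau}\|_{2}\lesssim\sqrt{r\log d}$). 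For $\bm{Y}$ one uses the crude tensor bound $\|\bm{Y}\|\leq\|p^{-1}\bm{T}-\bm{T}^{\star}\|\cdot\|\bm{\theta}^{\tau}-\bm{\theta}^{\star\tau}\|_{2}$, where the first factor is controlled by Corollary~\ref{cor:bound-PE} and the second factor is $\lesssim\|\bm{U}\bm{U}^{\top}-\bm{U}^{\star}_{\mathsf{orth}}\bm{U}^{\star\top}_{\mathsf{orth}}\|_{\mathrm{F}}\sqrt{\log d}\ll1$ by Lemma~\ref{lemma:eigsp_dist_op}. This is morally what your option~2 would become if you replaced ``span of $\bm{U}$'' by ``span of $\bm{U}^{\star}_{\mathsf{orth}}$'', but no net is needed at all --- one deterministic vector suffices.

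For $\|\bm{F}^{\tau}\overline{\bm{u}}_{1}^{\star}\|_{2}$ the paper uses a different and even simpler trick: contract first along the \emph{deterministic} direction $\overline{\bm{u}}_{1}^{\star}$ to get $\|\bm{F}^{\tau}\overline{\bm{u}}_{1}^{\star}\|_{2}\leq\|(p^{-1}\bm{T}-\bm{T}^{\star})\times_{2}\overline{\bm{u}}_{1}^{\star}\|\cdot\|\bm{\theta}^{\tau}\|_{2}$, then apply Lemma~\ref{lemma:T_loss_times3_op_norm} with $\bm{w}=\overline{\bm{u}}_{1}^{\star}$. Here the dependence of $\bm{\theta}^{\tau}$ on the data is harmless because it enters only through its $\ell_{2}$ norm.
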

\begin{proof}See Appendix~\ref{subsec:pf:init_pertur_op_UB}.\end{proof}
As a consequence, recalling the definition of $\EE_\op$ and $\EE_\proj$ in (\ref{def:err-op}) and (\ref{claim:init_2_loss}) respectively, one has
\begin{align}
\big\Vert \bm{M}^{\tau}-\bm{M}^{\star\tau}\big\Vert & \leq \big\Vert \bm{F}^{\tau}\big\Vert + \big\Vert \bm{C}^{\tau}\big\Vert \nonumber \\
& \lesssim \underbrace{\frac{\sqrt{\mu r}\,\lambda_{\max}^{\star}\log^{3}d}{d^{3/2}p}+\frac{\mu r\lambda_{\max}^{\star}\log^{5/2}d}{d\sqrt{p}}+\frac{\sigma\log^{7/2}d}{p}+\sigma\sqrt{\frac{rd\log^{5}d}{p}}+\sqrt{\frac{\mu  r\log d}{d}}\,\lambda_{\max}^{\star} }_{= \, \EE_\op \cdot \lambda_{\min}^\star},\label{eq:M_hat_M_op_loss}
\end{align}
and
\begin{align}
 \big\Vert \big(\bm{M}^{\tau}-\bm{M}^{\star\tau}\big)\overline{\bm{u}}_{1}^{\star}\big\Vert _{2}
 \leq  \big\Vert \bm{F}^{\tau}\overline{\bm{u}}_{1}^{\star}\big\Vert _{2} +  \big\Vert \bm{C}^{\tau}\overline{\bm{u}}_{1}^{\star}\big\Vert _{2}  & \lesssim \underbrace{\frac{\mu r\lambda_{\max}^{\star}\log d}{d\sqrt{p}}+\sigma\sqrt{\frac{rd\log^{2}d}{p}}+\sqrt{\frac{\mu r  \log d}{d}}\,\lambda_{\max}^{\star} }_{= \, \EE_\proj \cdot \lambda_{\min}^\star} .
\label{eq:M_hat_M_u_2_norm}
\end{align}
It then follows from Weyl's inequality that 
\begin{align}
\max_{i\in\left[d\right]} & \left|\sigma_{i}\big(\bm{M}^{\tau}\big)-\sigma_{i}\big(\bm{M}^{\star\tau}\big)\right| \leq\big\|\bm{M}^{\tau}-\bm{M}^{\star\tau}\big\| \lesssim \EE_\op \cdot \lambda_{\min}^\star \ll \lambda_{\min}^\star \label{eq:sigval_M_hat}
\end{align}
where $\sigma_{i}(\bm{Z})$ denotes the $i$-th largest singular value
of a matrix $\bm{Z}$ and we use the condition that $\EE_\op \ll 1$. All in all, these arguments justify that the
spectrum of $\bm{M}^{\tau}$ is fairly close to that of $\bm{M}^{\star\tau}$.

Next, we look at the gap between the two leading singular
values of $\bm{M}^{\star\tau}$. To begin with, it is self-evident from
the definition of $\bm{M}^{\star\tau}$ that: $\overline{\bm{u}}_{1}^{\star}$
is the singular vector of $\bm{M}^{\star\tau}$. In fact, we claim
one further result, that is, $\overline{\bm{u}}_{1}^{\star}$ is indeed
the leading singular vector of $\bm{M}^{\star\tau}$ whose singular value
is given by $\sigma_{1}\big(\bm{M}^{\star\tau}\big)=\gamma_{1}^{\star\tau}$.
Towards this end, let us define 
\begin{align*}
\bm{U}_{\smallsetminus 1}^{\tau}  := \left(\bm{I}-\overline{\bm{u}}_{1}^{\star}\overline{\bm{u}}_{1}^{\star\top}\right)\overline{\bm{U}}^{\star} \in\mathbb{R}^{d\times\left(r-1\right)} \qquad \text{and} \qquad \bm{\gamma}_{\smallsetminus 1}^{\star\tau}  := \big[\gamma_{2}^{\star\tau}, \cdots,\gamma_{r}^{\star\tau}\big]^\top \in\mathbb{R}^{r-1},
\end{align*}
allowing us to write 
\begin{align*}
\sum_{s:s\neq 1}\gamma_{s}^{\star\tau}\left(\bm{I}-\overline{\bm{u}}_{1}^{\star}\overline{\bm{u}}_{1}^{\star\top}\right)\overline{\bm{u}}_{s}^{\star}\overline{\bm{u}}_{s}^{\star\top}\left(\bm{I}-\overline{\bm{u}}_{1}^{\star}\overline{\bm{u}}_{1}^{\star\top}\right)=\bm{U}_{\smallsetminus 1}^{\tau}\mathsf{diag}\big(\bm{\gamma}_{\smallsetminus 1}^{\star\tau}\big)\bm{U}_{\smallsetminus 1}^{\tau\top} =: \,\bm{M}_{\smallsetminus 1}^{\star\tau}.
\end{align*}
We note that from Lemma~\ref{lemma:incoh}, one has
\begin{align*}
\big\|\bm{U}_{\smallsetminus 1}^{\tau}\big\|=\big\|\left(\bm{I}-\overline{\bm{u}}_{1}^{\star}\overline{\bm{u}}_{1}^{\star\top}\right)\overline{\bm{U}}^{\star}\big\|\leq\big\|\overline{\bm{U}}^{\star}\big\|\leq\sqrt{1+r\sqrt{\mu/d}}.
\end{align*}
Let $| \gamma^{\star \tau} |_{(1)} \geq \dots \geq | \gamma^{\star \tau} |_{(r)} $ denote the absolute values of $\{ \gamma_i^{\star \tau} \}_{i=1}^r$ in descending order. This together with Lemma~\ref{lemma:init_spectra} implies that 
\begin{align*}
\big\Vert \bm{M}_{\smallsetminus 1}^{\star\tau}\big\Vert  & \leq\big|\gamma^{\star\tau}\big|_{(2)}\big\|\bm{U}_{\smallsetminus 1}^{\tau}\big\|^{2}\leq \big|\gamma^{\star\tau}\big|_{(2)} \big(1+r\sqrt{\mu/d}\big)\\
 & \leq \gamma_{1}^{\star\tau}-\big(\gamma_{1}^{\star\tau}-\big|\gamma^{\star\tau}\big|_{(2)}\big)+r\sqrt{\mu/d} \, \gamma_{1}^{\star\tau} < \gamma_{1}^{\star\tau},
\end{align*}
as long as $\kappa r \sqrt{(\mu \log d)/d}\ll 1$. Given that $\overline{\bm{u}}_{1}^{\star}$ is the 
singular vector of $\bm{M}^{\star\tau}$ with singular value $\gamma_{1}^{\star\tau}$,
we can conclude that $\sigma_{1}\big(\bm{M}^{\star\tau}\big)=\gamma_{1}^{\star\tau}$.
This also allows us to lower bound the gap between the two largest
singular values $\bm{M}^{\star\tau}$ as follows 
\begin{align}
\sigma_{1}\big(\bm{M}^{\star\tau}\big)-\sigma_{2}\big(\bm{M}^{\star\tau}\big) & \geq \gamma_{1}^{\star\tau}-\big\|\bm{M}_{\smallsetminus 1}^{\star\tau}\big\|\geq\gamma_{1}^{\star\tau}-\big|\gamma^{\star\tau}\big|_{(2)}\big(1+r\sqrt{\mu/d}\big)\nonumber \\
 & \gtrsim \gamma_{1}^{\star\tau} -\big|\gamma^{\star\tau}\big|_{(2)}\gtrsim\lambda_{\min}^{\star},\label{eq:sigval_M_star_gap}
\end{align}
provided that $\kappa r \sqrt{(\mu \log d)/d}\ll 1$. We also know from (\ref{eq:sigval_M_hat}) and (\ref{eq:sigval_M_star_gap}) that
\begin{align*}
\sigma_{1}\big(\bm{M}^{\tau}\big)-\sigma_{2}\big(\bm{M}^{\tau}\big) \geq \sigma_{1}\big(\bm{M}^{\star\tau}\big)-\sigma_{2}\big(\bm{M}^{\star\tau}\big) - 2 \, \big\Vert \bm{M}^{\tau}-\bm{M}^{\star\tau}\big\Vert \gtrsim \lambda_{\min}^{\star}.
\end{align*}
Combined with \eqref{eq:M_hat_M_u_2_norm} and Wedin's theorem, we conclude that
\begin{align}
 \big\Vert \overline{\bm{u}}^{\tau}-\overline{\bm{u}}_{1}^{\star}\big\Vert _{2}  & \leq\frac{ \big\Vert \big(\bm{M}^{\tau}-\bm{M}^{\star\tau}\big)\overline{\bm{u}}_{1}^{\star}\big\Vert _{2} }{\sigma_{1}\big(\bm{M}^{\tau}\big)-\sigma_{2}\big(\bm{M}^{\tau}\big)-\big\Vert \bm{M}^{\tau}-\bm{M}^{\star\tau}\big\Vert } \lesssim \frac{\mu  r\log d}{d\sqrt{p}}+\frac{\sigma}{\lambda_{\min}^{\star}}\sqrt{\frac{ rd\log^{2}d}{p}}+\sqrt{\frac{\mu  r\log d}{d}}. \label{eq:init_2_loss}
\end{align}
Here, we have made use of the fact that $\overline{\bm{u}}^{\tau}$
is the leading singular vector of $\bm{M}^{\tau}$ 
obeying $\big\langle \overline{\bm{u}}^{\tau},\overline{\bm{u}}_{1}^{\star}\big\rangle \geq0$.


\subsection{Proof of Lemma~\ref{lemma:init_pertur_op}}

\label{subsec:pf:init_pertur_op_UB}

\subsubsection{\texorpdfstring{Controlling $\bm{F}^{\tau}$}{the truth}}
\begin{itemize}
\item We first consider the spectral norm of $\bm{F}^{\tau}$. Recall the definition
that $\bm{\theta}^{\tau}=\bm{U}\bm{U}^{\top}\bm{g}^{\tau}$. Let us define
$$\bm{\theta}^{\star\tau}:=\bm{U}_{\mathsf{orth}}^{\star}\bm{U}_{\mathsf{orth}}^{\star\top}\,\bm{g}^{\tau}$$
and decompose 
\begin{align*}
\bm{F}^{\tau} =\left(p^{-1}\bm{T}-\bm{T}^{\star}\right)\times_{3}\bm{\theta}^{\tau} =\underbrace{\left(p^{-1}\bm{T}-\bm{T}^{\star}\right)\times_{3}\bm{\theta}^{\star\tau}}_{=:\,\bm{X}}+\underbrace{\left(p^{-1}\bm{T}-\bm{T}^{\star}\right)\times_{3}\big(\bm{\theta}^{\tau}-\bm{\theta}^{\star\tau}\big)}_{=:\,\bm{Y}}.
\end{align*}
In the sequel, we shall control these two terms separately. 
\begin{itemize}
\item
To bound
$\left\Vert \bm{X}\right\Vert $, observe that $\bm{\theta}^{\star\tau}$
is independent of $p^{-1}\bm{T}-\bm{T}^{\star}$. By Lemma~\ref{lemma:T_loss_times3_op_norm},
one has
\begin{equation}
\big\Vert \left(p^{-1}\bm{T}-\bm{T}^{\star}\right)\times_{3}\bm{\theta}^{\star\tau}\big\Vert \lesssim\big\Vert \bm{\theta}^{\star\tau}\big\Vert _{\infty}\sqrt{\frac{\mu r\log d}{dp}}\,\lambda_{\max}^{\star} + \big\Vert \bm{\theta}^{\star\tau}\big\Vert _{\infty} \frac{\sigma \log^{5/2} d}{p}+\big\Vert \bm{\theta}^{\star\tau}\big\Vert _{2}\sigma\sqrt{\frac{d\log d}{p}}.\label{eq:X_op_UB}
\end{equation}
This suggests that we need to control the $\ell_{\infty}$
and $\ell_{2}$ norms of $\bm{\theta}^{\star\tau}$.
Using standard results on Gaussian random vectors and Lemma~\ref{lemma:incoh}, we know that with
probability at least $1-O\left(d^{-20}\right)$,
\begin{align}
\big\Vert \bm{\theta}^{\star\tau}\big\Vert _{\infty} & =\big\Vert \bm{U}_{\mathsf{orth}}^{\star}\bm{U}_{\mathsf{orth}}^{\star\top}\,\bm{g}^{\tau}\big\Vert _{\infty}\lesssim\left\Vert \bm{U}_{\mathsf{orth}}^{\star}\right\Vert _{2,\infty}\sqrt{\log d}\leq\sqrt{\frac{\mu r\log d}{d}},\label{eq:theta_star_inf_UB}\\
\big\Vert \bm{\theta}^{\star\tau}\big\Vert _{2} & =\big\Vert \bm{U}_{\mathsf{orth}}^{\star}\bm{U}_{\mathsf{orth}}^{\star\top}\,\bm{g}^{\tau}\big\Vert _{2}\lesssim\left\Vert \bm{U}_{\mathsf{orth}}^{\star}\right\Vert _{\mathrm{F}}\sqrt{\log d}=\sqrt{r\log d}.\label{eq:theta_star_2_UB}
\end{align}
Combining (\ref{eq:X_op_UB}) with (\ref{eq:theta_star_inf_UB}) and
(\ref{eq:theta_star_2_UB}) reveals that with
probability exceeding $1-O\left(d^{-20}\right)$,
\begin{align}
\big\Vert \left(p^{-1}\bm{T}-\bm{T}^{\star}\right)\times_{3}\bm{\theta}^{\star\tau}\big\Vert & \lesssim\frac{\mu r\lambda_{\max}^{\star}\log d}{d\sqrt{p}}+ \frac{\sigma}{p} \sqrt{\frac{\mu r \log^6 d}{d}}+ \sigma\sqrt{\frac{rd\log^{2}d}{p}} \nonumber \\
& \asymp \frac{\mu r\lambda_{\max}^{\star}\log d}{d\sqrt{p}}+ \sigma\sqrt{\frac{rd\log^{2}d}{p}}, \label{eq:T_theta_star_op_UB}
\end{align}
where the last inequality holds as long as $p \gtrsim \mu d^{-2} \log^4 d$.

\item Turning to $\bm{Y}$, we can simply upper bound
\begin{align*}
\big\Vert \left(p^{-1}\bm{T}-\bm{T}^{\star}\right)\times_{3}\big(\bm{\theta}^{\tau}-\bm{\theta}^{\star\tau}\big)\big\Vert  & \leq\left\Vert p^{-1}\bm{T}-\bm{T}^{\star}\right\Vert \big\Vert \bm{\theta}^{\tau}-\bm{\theta}^{\star\tau}\big\Vert _{2}.
\end{align*}
Since $\mathsf{rank}\left(\bm{U}\bm{U}^{\top}-\bm{U}_{\mathsf{orth}}^{\star}\bm{U}_{\mathsf{orth}}^{\star\top}\right)\leq2r$,
Lemma~\ref{lemma:eigsp_dist_op} and the standard result of Gaussian random vectors yields
that: with probability at least $1-O\left(d^{-12}\right)$,
\begin{align}
\big\Vert \bm{\theta}^{\tau}-\bm{\theta}^{\star\tau}\big\Vert _{2} & =\left\Vert \left(\bm{U}\bm{U}^{\top}-\bm{U}_{\mathsf{orth}}^{\star}\bm{U}_{\mathsf{orth}}^{\star\top}\right)\bm{g}^{\tau}\right\Vert _{2}\lesssim\left\Vert \bm{U}\bm{U}^{\top}-\bm{U}_{\mathsf{orth}}^{\star}\bm{U}_{\mathsf{orth}}^{\star\top}\right\Vert _{\mathrm{F}}\sqrt{\log d}\nonumber \\
 & \leq\left\Vert \bm{U}\bm{U}^{\top}-\bm{U}_{\mathsf{orth}}^{\star}\bm{U}_{\mathsf{orth}}^{\star\top}\right\Vert \sqrt{2r\log d} \lesssim \EE_{\mathsf{se}} \sqrt{r \log d} \ll1,\label{eq:theta_tau_2_loss}
\end{align}
where we recall the definition of $\EE_{\mathsf{se}}$ in (\ref{def:err-sp}) and that $\EE_{\mathsf{se}} \ll 1/\sqrt{r \log d}$ by our conditions. Moreover, by Lemma~\ref{lemma:T_op_norm}, we know
that with probability exceeding $1-O \(d^{-10} \)$,
\begin{align}
\left\Vert p^{-1}\bm{T}-\bm{T}^{\star}\right\Vert  & \leq\left\Vert p^{-1}\mathcal{P}_{\Omega}\left(\bm{T}^{\star}\right)-\bm{T}^{\star}\right\Vert +\left\Vert p^{-1}\mathcal{P}_{\Omega}\left(\bm{E}\right)\right\Vert \nonumber \\
 & \lesssim\frac{\sqrt{\mu r}\,\lambda_{\max}^{\star}\log^{3}d}{d^{3/2}p}+\frac{\mu \sqrt{r}\,\lambda_{\max}^{\star}\log^{5/2}d}{d\sqrt{p}}+\frac{\sigma\log^{7/2}d}{p}+\sigma\sqrt{\frac{d\log^{5}d}{p}}.\label{eq:T_op_loss}
\end{align}
Combining (\ref{eq:theta_tau_2_loss}) and (\ref{eq:T_op_loss}), we find that
\begin{align}
\big\Vert \left(p^{-1}\bm{T}-\bm{T}^{\star}\right)\times_{3}\big(\bm{\theta}^{\tau}-\bm{\theta}^{\star\tau}\big)\big\Vert  & \lesssim\frac{\sqrt{\mu r}\,\lambda_{\max}^{\star}\log^{3}d}{d^{3/2}p}+\frac{\mu \sqrt{r}\,\lambda_{\max}^{\star}\log^{5/2}d}{d\sqrt{p}}+\frac{\sigma\log^{7/2}d}{p}+\sigma\sqrt{\frac{d\log^{5}d}{p}}.\label{eq:T_theta_theta_star_op_UB}
\end{align}
Putting (\ref{eq:T_theta_star_op_UB}) and (\ref{eq:T_theta_theta_star_op_UB})
together shows that
\begin{align*}
\big\Vert \bm{F}^{\tau}\big\Vert \lesssim\frac{\sqrt{\mu r}\,\lambda_{\max}^{\star}\log^{3}d}{d^{3/2}p}+\frac{\mu r\lambda_{\max}^{\star}\log^{5/2}d}{d\sqrt{p}}+\frac{\sigma\log^{7/2}d}{p}+\sigma\sqrt{\frac{rd\log^{5}d}{p}}.
\end{align*}
\end{itemize}

\item Next, we turn to $\big\Vert \bm{F}^{\tau}\overline{\bm{u}}_{1}^{\star}\big\Vert _{2}$.
By the definition of the operator norm, we know that
\begin{align*}
\big\Vert \bm{F}^{\tau}\overline{\bm{u}}_{1}^{\star}\big\Vert _{2} & =\left\Vert \left(p^{-1}\bm{T}-\bm{T}^{\star}\right)\times_{2}\overline{\bm{u}}_{1}^{\star}\times_{3}\bm{\theta}^{\tau}\right\Vert _{2} \leq\left\Vert \left(p^{-1}\bm{T}-\bm{T}^{\star}\right)\times_{2}\overline{\bm{u}}_{1}^{\star}\right\Vert \big\Vert \bm{\theta}^{\tau}\big\Vert _{2}.
\end{align*}
Applying Lemma~\ref{lemma:T_loss_times3_op_norm} again reveals that
with probability at least $1-O\left(d^{-12}\right)$,
\begin{align}
\left\Vert \left(p^{-1}\bm{T}-\bm{T}^{\star}\right)\times_{2}\overline{\bm{u}}_{1}^{\star}\right\Vert  & \lesssim\left\Vert \overline{\bm{u}}_{1}^{\star}\right\Vert _{\infty}\sqrt{\frac{\mu r\log d}{dp}}\,\lambda_{\max}^{\star}+ \left\Vert \overline{\bm{u}}_{1}^{\star}\right\Vert _{\infty} \frac{\sigma \log^{5/2} d}{p} + \left\Vert \overline{\bm{u}}_{1}^{\star}\right\Vert _{2}\sigma\sqrt{\frac{d\log d}{p}}\nonumber \\
& \lesssim \frac{\mu\sqrt{r}\,\lambda_{\max}^{\star}\sqrt{\log d}}{d\sqrt{p}} + \frac{\sigma \sqrt{\mu \log^{5} d}}{\sqrt{d} \, p}+\sigma\sqrt{\frac{d\log d}{p}}  \nonumber \\
 & \asymp \frac{\mu\sqrt{r}\,\lambda_{\max}^{\star}\sqrt{\log d}}{d\sqrt{p}}+\sigma\sqrt{\frac{d\log d}{p}}, \label{eq:T_loss_utrue_op_norm}
\end{align}
where the last step arises from the condition that $p \gtrsim \mu d^{-2} \log^4 d$.
In addition, realizing that $\bm{U}$ consists of eigenvectors, standard Gaussian random vectors results
give that with probability at least $1-O\left(d^{-12}\right)$,
\begin{equation}
\left\Vert \bm{\theta}^\tau \right\Vert _{2}=\big\Vert \bm{U}\bm{U}^{\top}\bm{g}^{\tau}\big\Vert _{2}\lesssim\left\Vert \bm{U}\right\Vert _{\mathrm{F}}\sqrt{\log d}=\sqrt{r\log d}.\label{eq:theta_2_norm}
\end{equation}
Combining (\ref{eq:T_loss_utrue_op_norm}) and (\ref{eq:theta_2_norm}) shows that with probability exceeding $1-O \(d^{-10} \)$,
\begin{align*}
\big\Vert \bm{F}^{\tau}\overline{\bm{u}}_{1}^{\star}\big\Vert _{2}\lesssim\frac{\mu r\lambda_{\max}^{\star}\log d}{d\sqrt{p}}+\sigma\sqrt{\frac{rd\log^{2}d}{p}}.
\end{align*}
\end{itemize}

\subsubsection{\texorpdfstring{Controlling $\bm{C}^{\tau}$}{the truth}}
Recall the definition of $\bm{C}^{\tau}$ in (\ref{eq:M}). We first consider
the spectral norm of $\bm{C}^{\tau}$. It is straightforward to compute
that
\begin{align}
\big\Vert \bm{C}^{\tau}\big\Vert  & \leq2\,\Big\|\sum\nolimits_{s:s\neq 1}\gamma_{s}^{\star\tau}\left\langle \overline{\bm{u}}_{s}^{\star},\overline{\bm{u}}_{1}^{\star}\right\rangle \overline{\bm{u}}_{s}^{\star}\Big\|_{2}+\Big|\sum\nolimits_{s:s\neq 1}\gamma_{s}^{\star\tau}\left\langle \overline{\bm{u}}_{s}^{\star},\overline{\bm{u}}_{1}^{\star}\right\rangle ^{2}\Big|\nonumber \\
	& \lesssim\max_{s:s\neq 1}\left|\left\langle \overline{\bm{u}}_{s}^{\star},\overline{\bm{u}}_{1}^{\star}\right\rangle \right|\big\|\overline{\bm{U}}^{\star}\big\|\big\Vert \bm{\gamma}^{\star\tau}\big\Vert _{2}+\max_{s:s\neq 1}\left\langle \overline{\bm{u}}_{s}^{\star},\overline{\bm{u}}_{1}^{\star}\right\rangle ^{2} \sqrt{r} \big\Vert \bm{\gamma}^{\star\tau}\big\Vert _{2}\nonumber \\
	& \lesssim\Big( \sqrt{\frac{\mu}{d}}+\frac{\mu \sqrt{r}}{d}\Big)  \sqrt{r\log d}\,\lambda_{\max}^{\star}\asymp\sqrt{\frac{\mu  r\log d}{d}}\,\lambda_{\max}^{\star}\label{C_norm_ub}
\end{align}
if $\mu r / d \lesssim 1$, where we recall that $\overline{\bm{U}}^{\star}=\left[\overline{\bm{u}}_{1}^{\star},\cdots,\overline{\bm{u}}_{r}^{\star}\right]$.
Here, the last line holds owing to (\ref{asmp_corr}),
Lemma~\ref{lemma:incoh} (which justifies that $\big\|\overline{\bm{U}}^{\star}\big\|\lesssim1$
if $r\sqrt{\mu/d}\leq1$) and Lemma~\ref{lemma:init_spectra}. 

The claim~(\ref{eq:C_u_2_norm_UB}) arises from the definition of the spectral norm that $\big\Vert \bm{C}^{\tau}\overline{\bm{u}}_{1}^{\star}\big\Vert _{2} \leq \big\Vert \bm{C}^{\tau}\big\Vert$.


\subsection{Proof of Lemma~\ref{lemma:init_loo_entry_loss}}
\label{pf:init_loo_entry_loss}

Let us fix an arbitrary $m \in \[d\]$. We remind the readers of several definitions:
(1) $\gamma_{1}^{\star\tau}$: see (\ref{def:init_sigval}); 
(2) $\bm{M}^{\star\tau}$: see (\ref{eq:M}); and (3) $\bm{M}^{\tau,\left(m\right)}$: see \eqref{eq:definition-M-tau-m}.

Before continuing, we state two immediate facts. First, it has already
been observed in Appendix~\ref{pf:init_2_loss} that $\overline{\bm{u}}_{1}^{\star}$
is a singular vector of $\bm{M}^{\star\tau}$ with singular value $\gamma_{1}^{\star\tau}$,
and hence 
\begin{align}
\big(\overline{\bm{u}}_{1}^{\star}\big)_{m} & =\big(\gamma_{1}^{\star\tau}\big)^{-1}\bm{M}_{m,:}^{\star\tau}\overline{\bm{u}}_{1}^{\star}.\label{eq:u1-star-m}
\end{align}
Here and throughout, $\bm{A}_{m,:}$ denotes the $m$-th row of a
matrix $\bm{A}$. Second, $\overline{\bm{u}}^{\tau, \left(m\right)}$
is the top  singular vector of $\bm{M}^{\tau, \left(m\right)}$
such that $\langle\overline{\bm{u}}^{\tau, \left(m\right)},\overline{\bm{u}}_{1}^{\star}\rangle\geq0$,
and we denote by $\gamma_{\tau}^{(m)}$ the associated singular value.
Recall our definition of  $\bm{\nu}^{\tau, \m}$ in Algorithm~\ref{alg:localization_loo}. Similar to the case of $\bm{\nu}^\tau$, we will show shortly in Lemma~\ref{lemma:init_coeff} that the global signs of $\bm{\nu}^{\tau, \m}$ and  $\overline{\bm{u}}^{\tau, \m}$ coincide, and hence
\begin{align}
	\bm{\nu}^{\tau, \m} = \overline{\bm{u}}^{\tau, \m}. 
\end{align}

As a result, the proof of this lemma boils down to showing that  $\overline{\bm{u}}^{\tau, \left(m\right)}$ (and hence $\bm{\nu}^{\tau, \m}$)
 is sufficiently close to
$\overline{\bm{u}}_{1}^{\star}$ in the $m$-th entry. 
Towards this end, observe that
\begin{align}
\big(\overline{\bm{u}}^{\tau, \left(m\right)}\big)_{m} & =\big(\gamma_{\tau}^{(m)}\big)^{-1}\bm{M}_{m,:}^{\tau, \left(m\right)}\overline{\bm{u}}^{\tau, \left(m\right)}.\label{eq:u1-star-m-m}
\end{align}
%
The above two facts (\ref{eq:u1-star-m}) and (\ref{eq:u1-star-m-m})
together with the triangle inequality lead to
\begin{align}
	\Big|\big(\overline{\bm{u}}^{\tau, \left(m\right)}-\overline{\bm{u}}_{1}^{\star}\big)_{m}\Big| 
	& \leq\left|\Big\{\big(\gamma_{\tau}^{(m)}\big)^{-1}-\big(\gamma_{1}^{\star\tau}\big)^{-1}\Big\}\bm{M}_{m,:}^{\tau, \left(m\right)}\overline{\bm{u}}^{\tau, \left(m\right)}\right| \nonumber \\
	& \quad +\big(\gamma_{1}^{\star\tau}\big)^{-1}\left|\big(\bm{M}^{\tau, \left(m\right)}-\bm{M}^{\star\tau}\big)_{m,:}\overline{\bm{u}}^{\tau, \left(m\right)}\right|\nonumber \\
 & \quad+\big(\gamma_{1}^{\star\tau}\big)^{-1}\left|\bm{M}_{m,:}^{\star\tau}\big(\overline{\bm{u}}^{\tau, \left(m\right)}-\overline{\bm{u}}_{1}^{\star}\big)\right|\nonumber \\
 & \leq\underbrace{\left|\big(\gamma_{\tau}^{(m)}\big)^{-1}-\big(\gamma_{1}^{\star\tau}\big)^{-1}\right|\big\|\bm{M}_{m,:}^{\tau, \left(m\right)}\big\|_{2}\big\|\overline{\bm{u}}^{\tau, \left(m\right)}\big\|_{2}}_{=: \, \alpha_{1}} \nonumber \\
 & \quad+\underbrace{\big(\gamma_{1}^{\star\tau}\big)^{-1}\big\|\big(\bm{M}^{\tau, \left(m\right)}-\bm{M}^{\star\tau}\big)_{m,:}\big\|_{2}\big\|\overline{\bm{u}}^{\tau, \left(m\right)}\big\|_{2}}_{=: \, \alpha_{2}}\nonumber \\
 & \quad+\underbrace{\big(\gamma_{1}^{\star\tau}\big)^{-1}\big\Vert \bm{M}_{m,:}^{\star\tau}\big\Vert _{2}\big\|\overline{\bm{u}}^{\tau, \left(m\right)}-\overline{\bm{u}}_{1}^{\star}\big\|_{2}}_{=: \, \alpha_{3}}.\label{eq:defn-alpha123-u}
\end{align}
Therefore, it suffices to upper bound the above three quantities separately.

\subsubsection{Controlling $\alpha_{3}$}


The first step to bound $\alpha_{3}$ (cf.~(\ref{eq:defn-alpha123-u}))
is to control $\big\|\bm{M}_{m,:}^{\star\tau}\big\|_{2}$. Towards this
end, we first observe from the incoherence conditions that
\begin{align}
\max_{s:s\neq 1}\big|\left(\overline{\bm{u}}_{s}^{\star}-\left\langle \overline{\bm{u}}_{s}^{\star},\overline{\bm{u}}_{1}^{\star}\right\rangle \overline{\bm{u}}_{1}^{\star}\right)_{m}\big| & \leq\max_{s:s\neq 1}\left\Vert \overline{\bm{u}}_{s}^{\star}\right\Vert _{\infty}+\max_{s:s\neq 1}\big|\left\langle \overline{\bm{u}}_{s}^{\star},\overline{\bm{u}}_{1}^{\star}\right\rangle \big|\left\Vert \overline{\bm{u}}_{1}^{\star}\right\Vert _{\infty} \nonumber \\
&  \leq \max_{s:s\neq 1}\left\Vert \overline{\bm{u}}_{s}^{\star}\right\Vert _{\infty}+\max_{s:s\neq 1}\left\| \overline{\bm{u}}_{s}^{\star} \right\|_2 \left\| \overline{\bm{u}}_{1}^{\star} \right\|_2 \left\Vert \overline{\bm{u}}_{1}^{\star}\right\Vert _{\infty} \lesssim\sqrt{\frac{\mu}{d}}. \label{eq:UB101}
\end{align}
 When combined with the definition
(\ref{eq:M}), this gives
\begin{align}
\big\|\bm{M}_{m,:}^{\star\tau}\big\|_{2} & =\Big\|\gamma_{1}^{\star\tau}\left(\overline{\bm{u}}_{1}^{\star}\right)_{m}\overline{\bm{u}}_{1}^{\star\top}+\sum\nolimits_{s:s\neq 1}\gamma_{s}^{\star\tau}\left(\overline{\bm{u}}_{s}^{\star}-\left\langle \overline{\bm{u}}_{s}^{\star},\overline{\bm{u}}_{1}^{\star}\right\rangle \overline{\bm{u}}_{1}^{\star}\right)_{m}\overline{\bm{u}}_{s}^{\star\top}\left(\bm{I}-\overline{\bm{u}}_{1}^{\star}\overline{\bm{u}}_{1}^{\star\top}\right)\Big\|_{2}\nonumber \\
 & \leq \gamma_{1}^{\star\tau} \left\Vert \overline{\bm{u}}_{1}^{\star}\right\Vert _{\infty}\left\Vert \overline{\bm{u}}_{1}^{\star}\right\Vert _{2}+\Big\|\sum\nolimits_{s:s\neq 1}\gamma_{s}^{\star\tau}\left(\overline{\bm{u}}_{s}^{\star}-\left\langle \overline{\bm{u}}_{s}^{\star},\overline{\bm{u}}_{1}^{\star}\right\rangle \overline{\bm{u}}_{1}^{\star}\right)_{m}\overline{\bm{u}}_{s}^{\star\top}\Big\|_{2}\nonumber \\
 & \leq \gamma_{1}^{\star\tau} \left\Vert \overline{\bm{u}}_{1}^{\star}\right\Vert _{\infty}+\max_{s:s\neq 1}\big|\left(\overline{\bm{u}}_{s}^{\star}-\left\langle \overline{\bm{u}}_{s}^{\star},\overline{\bm{u}}_{1}^{\star}\right\rangle \overline{\bm{u}}_{1}^{\star}\right)_{m}\big|\big\Vert \bm{\gamma}^{\star\tau}\big\Vert _{2}\big\|\overline{\bm{U}}^{\star}\big\|\nonumber \\
 & \overset{(\text{i})}{\lesssim} \gamma_{1}^{\star\tau} \sqrt{\frac{\mu}{d}}+\big\Vert \bm{\gamma}^{\star\tau}\big\Vert _{2}\sqrt{\frac{\mu}{d}}\asymp\big\Vert \bm{\gamma}^{\star\tau}\big\Vert _{2}\sqrt{\frac{\mu}{d}}
 \lesssim\sqrt{\frac{\mu  r\log d}{d}}\,\lambda_{\max}^{\star},\label{eq:M_row_2_norm}
\end{align}
where (i) arises from (\ref{eq:UB101}) and $\big\|\overline{\bm{U}}^{\star}\big\|\lesssim1$
if $r\sqrt{\mu/d}\ll1$, and the last step comes from Lemma~\ref{lemma:init_spectra}.

The second step is to upper bound $\big\|\overline{\bm{u}}^{\tau, \left(m\right)}-\overline{\bm{u}}_{1}^{\star}\big\|_{2}$.
Towards this, we resort to Wedin's theorem as follows
\begin{align}
 \big\|\overline{\bm{u}}^{\tau, \left(m\right)} -\overline{\bm{u}}_{1}^{\star}\big\|_{2}  \leq\frac{ \big\|\big(\bm{M}^{\tau, \left(m\right)}-\bm{M}^{\star\tau}\big)\overline{\bm{u}}_{1}^{\star}\big\|_{2} }{\sigma_{1}\big(\bm{M}^{\star\tau}\big)-\sigma_{2}\big(\bm{M}^{\star\tau}\big)-\big\|\bm{M}^{\tau, \left(m\right)}-\bm{M}^{\star\tau}\big\|},\label{eq:v-1-u-1-Wedin}
\end{align}
where we rely on the fact that $\big\langle\overline{\bm{u}}^{\tau, \left(m\right)},\overline{\bm{u}}_{1}^{\star}\big\rangle\geq0$. To complete this bound, we need
to control $\bm{M}^{\tau, \left(m\right)}-\bm{M}^{\star\tau}$. Before we
move on, we find it helpful to introduce
\begin{align}
 & \widehat{\bm{M}}^{\tau, \left(m\right)}:= p^{-1}\bm{T}\times_{3}\bm{\theta}^{\tau, \left(m\right)}.\label{eq:projected-matrix-M-tau_g}
\end{align}
Let $\widehat{\bm{u}}^{\tau, \left(m\right)}$ denote the top left singular
vector of $\widehat{\bm{M}}^{\tau, \left(m\right)}$ such that
\begin{align}
\big\langle\widehat{\bm{u}}^{\tau, \left(m\right)},\overline{\bm{u}}_{1}^{\star}\big\rangle\geq0.
\end{align}
Since we have already bounded $\big\Vert \bm{M}^{\tau}-\bm{M}^{\star\tau}\big\Vert $
in Lemma~\ref{lemma:init_pertur_op}, we can decompose
\begin{align*}
\bm{M}^{\tau, \left(m\right)}-\bm{M}^{\star\tau} =\bm{M}^{\tau, \left(m\right)}-\bm{M}^{\tau}  +\bm{M}^{\tau}-\bm{M}^{\star\tau} =\bm{M}^{\tau, \left(m\right)}-\widehat{\bm{M}}^{\tau, \left(m\right)}+\widehat{\bm{M}}^{\tau, \left(m\right)}-\bm{M}^{\tau} + \bm{M}^{\tau}-\bm{M}^{\star\tau}.
\end{align*}
With these definitions in place, Lemma~\ref{lemma:M_loo_op_loss}
below provides the desired bounds.
\begin{lemma}\label{lemma:M_loo_op_loss}Instate the assumptions
of Lemma~\ref{lemma:init_loo_entry_loss}. With probability at least $1-O\left(d^{-10}\right)$, the following
holds simultaneously for all $1\leq m\leq d$:
\begin{align}
\big\|\widehat{\bm{M}}^{\tau, \left(m\right)}-\bm{M}^{\tau, \left(m\right)}\big\| & \lesssim\frac{\mu r\lambda_{\max}^{\star}\sqrt{\log d}}{d\sqrt{p}}+\sigma\sqrt{\frac{rd\log d}{p}} \label{eq:M_hat_loo_op_norm},\\
\big\|\widehat{\bm{M}}^{\tau, \left(m\right)}-\bm{M}^{\tau}\big\| & \lesssim \EE_\loo \sqrt{\frac{\mu r \log d}{d}} \lambda_{\max}^\star \label{eq:M_hat_M_hat_loo_op_norm}, 
\end{align}
where $\EE_\loo$ is defined in \eqref{def:err-loo}.
As a result, one has
\begin{align}
 \big\|\big(\bm{M}^{\tau, \left(m\right)} -\bm{M}^{\star\tau}\big)\overline{\bm{u}}_{1}^{\star}\big\|_{2} \leq \big\|\bm{M}^{\tau}-\bm{M}^{\tau, \left(m\right)}\big\|  \lesssim  \frac{\mu r\lambda_{\max}^{\star}\sqrt{\log d}}{d\sqrt{p}}+\sigma\sqrt{\frac{rd\log d}{p}}.
\end{align}
\end{lemma}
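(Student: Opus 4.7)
The plan is to handle the two summands in the decomposition
\begin{align*}
\bm{M}^{\tau, (m)} - \bm{M}^\tau = \underbrace{\bm{M}^{\tau, (m)} - \widehat{\bm{M}}^{\tau, (m)}}_{\text{different tensor, same projection}} + \underbrace{\widehat{\bm{M}}^{\tau, (m)} - \bm{M}^\tau}_{\text{same tensor, different projection}}
\end{align*}
separately, since each piece isolates exactly one source of extra randomness and can therefore be treated by one clean concentration argument.

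For \eqref{eq:M_hat_loo_op_norm}, I would write
$\widehat{\bm{M}}^{\tau, (m)} - \bm{M}^{\tau, (m)} = p^{-1}\big(\bm{T} - \bm{T}^{(m)}\big) \times_3 \bm{\theta}^{\tau, (m)}$
and use $\bm{T} - \bm{T}^{(m)} = \PP_{\Omega_m}(\bm{T}^\star) - p \PP_m(\bm{T}^\star) + \PP_{\Omega_m}(\bm{E})$. The key structural point is that by construction $\bm{\theta}^{\tau,(m)} = \bm{U}^{(m)} \bm{U}^{(m)\top} \bm{g}^\tau$ is independent of $\{\chi_{i,j,k}, E_{i,j,k}\}$ whenever at least one of the indices equals $m$; hence, conditioning on $\bm{\theta}^{\tau,(m)}$, the matrix of interest becomes a sum of independent zero-mean matrices supported on rows/columns containing $m$. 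I will apply the matrix Bernstein inequality (in the style of \cite[Lemma 11]{chen2015fast} and the tools already used for \eqref{E_Omega_slice_op_norm}), then plug in $\|\bm{\theta}^{\tau,(m)}\|_\infty \lesssim \sqrt{\mu r \log d / d}$ and $\|\bm{\theta}^{\tau,(m)}\|_2 \lesssim \sqrt{r\log d}$ (which mirror \eqref{eq:theta_star_inf_UB}--\eqref{eq:theta_star_2_UB}, since $\bm{U}^{(m)}$ inherits the required incoherence from Lemma~\ref{lemma:U_U_loo_dist_op} via the triangle inequality together with Lemma~\ref{lemma:eigsp_dist_op}). The resulting bound should match the structure of \eqref{eq:T_theta_star_op_UB}, yielding the target rate $\frac{\mu r \lambda_{\max}^\star \sqrt{\log d}}{d\sqrt{p}} + \sigma \sqrt{\frac{rd \log d}{p}}$.

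For \eqref{eq:M_hat_M_hat_loo_op_norm}, I would write
$\widehat{\bm{M}}^{\tau, (m)} - \bm{M}^\tau = p^{-1} \bm{T} \times_3 \big(\bm{\theta}^{\tau, (m)} - \bm{\theta}^\tau\big)$
and bound the spectral norm by $\|p^{-1}\bm{T}\| \cdot \|\bm{\theta}^{\tau,(m)} - \bm{\theta}^\tau\|_2$. The tensor operator norm is controlled by $\|p^{-1}\bm{T}\| \leq \|\bm{T}^\star\| + \|p^{-1}\bm{T} - \bm{T}^\star\| \lesssim \lambda_{\max}^\star$ by Lemma~\ref{lemma:T_op_norm} and the sample-size assumption. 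For the projection difference, note $\bm{\theta}^{\tau,(m)} - \bm{\theta}^\tau = \big(\bm{U}^{(m)} \bm{U}^{(m)\top} - \bm{U}\bm{U}^\top\big) \bm{g}^\tau$, where the matrix in parentheses has rank at most $2r$ and Frobenius norm $\lesssim \EE_\loo \sqrt{\mu r/d}$ by Lemma~\ref{lemma:U_U_loo_dist_op}. Standard Gaussian concentration then yields $\|\bm{\theta}^{\tau,(m)} - \bm{\theta}^\tau\|_2 \lesssim \EE_\loo \sqrt{\mu r \log d / d}$ uniformly in $m$ with high probability, giving the claimed bound after a union bound over $m \in [d]$.

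Finally, for the consequence, I would combine the triangle inequality with the fact that $\bm{M}^{\star\tau} \overline{\bm{u}}_1^\star = \gamma_1^{\star\tau} \overline{\bm{u}}_1^\star$ (so $\overline{\bm{u}}_1^\star$ is an exact eigenvector of $\bm{M}^{\star\tau}$, as was observed en route to \eqref{eq:sigval_M_star_gap}). Concretely, I would decompose
\begin{align*}
\big(\bm{M}^{\tau, (m)} - \bm{M}^{\star\tau}\big) \overline{\bm{u}}_1^\star = \big(\bm{M}^{\tau, (m)} - \bm{M}^\tau\big) \overline{\bm{u}}_1^\star + \big(\bm{M}^\tau - \bm{M}^{\star\tau}\big) \overline{\bm{u}}_1^\star,
\end{align*}
bound the first summand by $\|\bm{M}^\tau - \bm{M}^{\tau,(m)}\|$ (using the just-established bounds on the two pieces above), and control the second via \eqref{eq:M_hat_M_u_2_norm} derived in Lemma~\ref{lemma:init_pertur_op}. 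Since the bound from \eqref{eq:M_hat_loo_op_norm} dominates the leave-one-out contribution from \eqref{eq:M_hat_M_hat_loo_op_norm} under our sample and noise conditions, the combined estimate collapses to the stated rate. The main technical obstacle will be the Bernstein bookkeeping in Step~1: one must carefully separate the slice at coordinate $m$ (which is rank-one and handled by a vector Bernstein bound) from the residual off-slice terms, and then sum the row/column contributions while making sure the $\sigma \sqrt{rd \log d / p}$ noise term does not pick up extra incoherence factors.
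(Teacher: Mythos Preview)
Your plan is correct and follows the same decomposition and concentration strategy as the paper. One small point of departure: for \eqref{eq:M_hat_M_hat_loo_op_norm} the paper further splits $p^{-1}\bm{T}\times_3(\bm{\theta}^{\tau,(m)}-\bm{\theta}^\tau)$ into a clean part $\bm{T}^\star\times_3(\cdot)$ (bounded via the explicit rank-$r$ expansion $\sum_s \lambda_s^\star\langle\overline{\bm{u}}_s^\star,\cdot\rangle\overline{\bm{u}}_s^\star\overline{\bm{u}}_s^{\star\top}$) and a perturbation part $(p^{-1}\bm{T}-\bm{T}^\star)\times_3(\cdot)$, whereas you bound $\|p^{-1}\bm{T}\|$ in one shot. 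Your shortcut is valid and slightly cleaner; the paper's split additionally yields the intermediate estimate $|\langle\overline{\bm{u}}_s^\star,\bm{\theta}^\tau-\bm{\theta}^{\tau,(m)}\rangle|\lesssim\|\bm{U}\bm{U}^\top-\bm{U}^{(m)}\bm{U}^{(m)\top}\|\sqrt{\log d}$, which is reused elsewhere. Also, in your Step~1 description, be aware that the ``interior'' submatrix (entries $(i,j)$ with $i,j\neq m$) is $\theta_m^{\tau,(m)}$ times a full $d\times d$ random matrix, not a rank-one object---it is the $m$-th row and column that form the low-rank ``cross'' handled by $\ell_2$ bounds, while the interior requires the matrix Bernstein step; this matches what the paper does and your final sentence indicates you intend exactly that separation.
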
\begin{proof}See Appendix~\ref{subsec:pf:M_loo_op_loss}.\end{proof}

We then can further combine (\ref{eq:M_hat_M_op_loss}) and (\ref{eq:M_hat_M_u_2_norm}) to deduce that 
\begin{align}
\begin{split}\big\|\bm{M}^{\tau, \left(m\right)}-\bm{M}^{\star\tau}\big\| & \leq\big\|\bm{M}^{\tau}-\bm{M}^{\tau, \left(m\right)}\big\|+\big\|\bm{M}^{\tau}-\bm{M}^{\star\tau}\big\|\\
 & \lesssim \underbrace{\frac{\sqrt{\mu r}\,\lambda_{\max}^{\star}\log^{3}d}{d^{3/2}p}+\frac{\mu r\lambda_{\max}^{\star}\log^{5/2}d}{d\sqrt{p}}+\frac{\sigma\log^{7/2}d}{p}+\sigma\sqrt{\frac{rd\log^{5}d}{p}}+\sqrt{\frac{\mu  r\log d}{d}}\,\lambda_{\max}^{\star}}_{= \, \EE_\op \cdot \lambda_{\min}^\star},
\end{split}
\label{M_loo_op_loss}
\end{align}
and
\begin{align}
 \big\|\big(\bm{M}^{\tau, \left(m\right)} -\bm{M}^{\star\tau}\big)\overline{\bm{u}}_{1}^{\star}\big\|_{2} \lesssim \underbrace{\frac{\mu r \lambda_{\max}^\star \log d}{d\sqrt{p}}+\sigma \sqrt{\frac{rd\log^{2}d}{p}}+\sqrt{\frac{\mu  r\log d}{d}} \, \lambda_{\max}^\star}_{= \, \EE_\proj \cdot \lambda_{\min}^\star}.
\end{align}
In particular,
we have $\big\|\bm{M}^{\tau, \left(m\right)}-\bm{M}^{\star\tau}\big\|\ll\lambda_{\min}^{\star}$ under our conditions,
and it follows from \eqref{eq:sigval_M_star_gap} that
\begin{align*}
\sigma_{1}\big(\bm{M}^{\star\tau}\big)-\sigma_{2}\big(\bm{M}^{\star\tau}\big)-\big\|\bm{M}^{\tau, \left(m\right)}-\bm{M}^{\star\tau}\big\|\gtrsim\lambda_{\min}^{\star}.
\end{align*}
Invoke the bound (\ref{eq:v-1-u-1-Wedin}) to obtain 
\begin{align}
 \big\|\overline{\bm{u}}^{\tau, \left(m\right)} -\overline{\bm{u}}_{1}^{\star}\big\|_{2}  \lesssim\frac{1}{\lambda_{\min}^{\star}} \big\|\big(\bm{M}^{\tau, \left(m\right)}-\bm{M}^{\star\tau}\big)\overline{\bm{u}}_{1}^{\star}\big\|_{2} \lesssim \underbrace{\frac{\mu  r\log d}{d\sqrt{p}}+\frac{\sigma}{\lambda_{\min}^{\star}}\sqrt{\frac{rd\log^{2}d}{p}}+\sqrt{\frac{\mu  r\log d}{d}}}_{= \, \EE_\proj}.\label{u_tilde_loo_2_loss}
\end{align}

To finish up, combine this with (\ref{eq:M_row_2_norm}) and the spectral
condition to arrive at 
\begin{align}
\alpha_{3}\lesssim  \EE_\proj  \sqrt{\frac{\mu r  \log d}{d}} \lesssim \EE_\op \sqrt{\frac{\mu r \log d}{d}},
\end{align}
which results from the fact that $\EE_\proj \leq \EE_\op$ (cf.~(\ref{def:err-op})).

\subsubsection{Controlling $\alpha_{2}$}

We then turn to $\alpha_{2}$ (cf.~(\ref{eq:defn-alpha123-u})).
Recall the definition of $\bm{M}^{\tau, \left(m\right)}$ in (\ref{eq:projected-matrix-M-tau}).
It is straightforward to verify that 
\begin{align*}
\big(\bm{M}^{\tau, \left(m\right)}-\bm{M}^{\star\tau}\big)_{m,:} & =\big(p^{-1}\bm{T}^{\tau, \left(m\right)}\times_{3}\bm{\theta}^{\tau, \left(m\right)}-\bm{T}^{\star}\times_{3}\bm{\theta}^{\tau}\big)_{m,:}+\big(\bm{T}^{\star}\times_{3}\bm{\theta}^{\tau}-\bm{M}^{\star\tau}\big)_{m,:}\\
 & =\big(p^{-1}\bm{T}^{\tau, \left(m\right)}\times_{3}\bm{\theta}^{\tau, \left(m\right)}-\bm{T}^{\star}\times_{3}\bm{\theta}^{\tau}\big)_{m,:}+\bm{C}_{m,:}^{\tau}\\
 & =\bm{T}_{m,:,:}^{\star}\times_{3}\big(\bm{\theta}^{\tau, \left(m\right)}-\bm{\theta}^{\tau}\big)+\bm{C}_{m,:}^{\tau},
\end{align*}
where $\bm{C}^{\tau}$ is defined in (\ref{eq:M}).

From the incoherence conditions, we can upper bound 
\begin{align}
\big\Vert \bm{C}_{m,:}^{\tau}\big\Vert _{2} & =\Big\|\sum\nolimits_{s:s\neq 1}\gamma_{s}^{\star\tau}\left\langle \overline{\bm{u}}_{s}^{\star},\overline{\bm{u}}_{1}^{\star}\right\rangle \left(\left(\overline{\bm{u}}_{1}^{\star}\right)_{m}\overline{\bm{u}}_{s}^{\star\top}+\left(\overline{\bm{u}}_{s}^{\star}\right)_{m}\overline{\bm{u}}_{1}^{\star\top}\right)-\sum\nolimits_{s:s\neq 1}\gamma_{s}^{\star\tau}\left\langle \overline{\bm{u}}_{s}^{\star},\overline{\bm{u}}_{1}^{\star}\right\rangle ^{2}\left(\overline{\bm{u}}_{1}^{\star}\right)_{m}\overline{\bm{u}}_{1}^{\star\top}\Big\|_{2}\nonumber \\
 & \leq\Big\|\sum\nolimits_{s:s\neq 1}\gamma_{s}^{\star\tau}\left\langle \overline{\bm{u}}_{s}^{\star},\overline{\bm{u}}_{1}^{\star}\right\rangle \left(\overline{\bm{u}}_{1}^{\star}\right)_{m}\overline{\bm{u}}_{s}^{\star\top}\Big\|_{2}+\Big|\sum\nolimits_{s:s\neq 1}\gamma_{s}^{\star\tau}\left\langle \overline{\bm{u}}_{s}^{\star},\overline{\bm{u}}_{1}^{\star}\right\rangle \left(\overline{\bm{u}}_{s}^{\star}\right)_{m}\Big| +\Big|\sum\nolimits_{s:s\neq 1}\gamma_{s}^{\star\tau}\left\langle \overline{\bm{u}}_{s}^{\star},\overline{\bm{u}}_{1}^{\star}\right\rangle ^{2}\left(\overline{\bm{u}}_{1}^{\star}\right)_{m}\Big|\nonumber \\
 & \lesssim\big\Vert \bm{\gamma}^{\star\tau}\big\Vert _{2} \Big( \max_{s:s\neq 1}\left|\langle\overline{\bm{u}}_{s}^{\star},\overline{\bm{u}}_{1}^{\star}\rangle\right|\left\Vert \overline{\bm{u}}_{1}^{\star}\right\Vert _{\infty}\big\|\overline{\bm{U}}^{\star}\big\|
 +\max_{s:s\neq 1}\left|\langle\overline{\bm{u}}_{s}^{\star},\overline{\bm{u}}_{1}^{\star}\rangle\right|\big\Vert \overline{\bm{U}}^{\star}\big\Vert _{2, \infty}
 +\max_{s:s\neq 1}\left|\langle\overline{\bm{u}}_{s}^{\star},\overline{\bm{u}}_{1}^{\star}\rangle\right|^{2}\left\Vert \overline{\bm{u}}_{1}^{\star}\right\Vert _{\infty} \sqrt{r} \Big) \nonumber \\
 & \lesssim\big\Vert \bm{\gamma}^{\star\tau}\big\Vert _{2} \( \sqrt{\frac{\mu }{d}} \sqrt{\frac{\mu}{d}} + \sqrt{\frac{\mu }{d}} \sqrt{\frac{\mu r}{d}} + \sqrt{\frac{\mu}{d}} \sqrt{\frac{\mu r}{d}} \) \lesssim \sqrt{\frac{\mu r}{d}} \sqrt{\frac{\mu r \log d}{d}}  \, \lambda_{\max}^\star \label{eq:C_2inf_norm_UB},
\end{align}
where we use the fact that $\big\Vert \bm{\gamma}^{\star\tau}\big\Vert _{2} \lesssim \sqrt{r \log d} \, \lambda_{\max}^\star$ from Lemma~\ref{lemma:init_spectra} and $\big\Vert \overline{\bm{U}}^{\star}\big\Vert \lesssim 1$ from Lemma~\ref{lemma:incoh}.

In addition, we can express
\begin{align*}
\bm{T}_{m,:,:}^{\star}\times_{3}\big(\bm{\theta}^{\tau, \left(m\right)}-\bm{\theta}^{\tau}\big)=\sum_{s\in\left[r\right]}\lambda_{s}^{\star}\big(\overline{\bm{u}}_{s}^{\star}\big)_{m}\big\langle\overline{\bm{u}}_{s}^{\star},\bm{\theta}^{\tau, \left(m\right)}-\bm{\theta}^{\tau}\big\rangle\overline{\bm{u}}_{s}^{\star}.
\end{align*}
By construction, we know that
\begin{align*}
\bm{\theta}^{\tau}-\bm{\theta}^{\tau, \left(m\right)} & =\big(\bm{U}\bm{U}^{\top}-\bm{U}^{(m)}\bm{U}^{\left(m\right)\top}\big)\,\bm{g}^{\tau}
\end{align*}
is a zero-mean Gaussian random vector conditional on $\mathcal{P}_{\Omega}\left(\bm{E}\right)$.
Using standard results on Gaussian random vectors, one has: with
probability at least $1-O\left(d^{-11}\right)$, for each $s\in\left[r\right]$ and $m \in \[ d \]$,
\begin{align}
\left|\big\langle\overline{\bm{u}}_{s}^{\star},\bm{\theta}^{\tau}-\bm{\theta}^{\tau, \left(m\right)}\big\rangle\right| & =\left|\big\langle\bm{g}^{\tau},\big(\bm{U}\bm{U}^{\top}-\bm{U}^{(m)}\bm{U}^{\left(m\right)\top}\big) \, \overline{\bm{u}}_{s}^{\star}\big\rangle\right| \lesssim\big\|\big(\bm{U}\bm{U}^{\top}-\bm{U}^{(m)}\bm{U}^{\left(m\right)\top}\big) \, \overline{\bm{u}}_{s}^{\star}\big\|_{2}\sqrt{\log d} \nonumber\\
& \leq\big\|\bm{U}\bm{U}^{\top}-\bm{U}^{(m)}\bm{U}^{\left(m\right)\top}\big\|\sqrt{\log d} \label{eq:utrue_theta_diff}
\end{align}
and
\begin{equation}
\big\|\bm{\theta}^{\tau}-\bm{\theta}^{\tau, \left(m\right)}\big\|_{2}\lesssim\big\|\bm{U}\bm{U}^{\top}-\bm{U}^{(m)}\bm{U}^{\left(m\right)\top}\big\|_{\mathrm{F}}\sqrt{\log d}. \label{eq:g_loo_l2_loss}
\end{equation}
Therefore, we have
\begin{align*}
\big\|\bm{T}_{m,:,:}^{\star}\times_{3}\big(\bm{\theta}^{\tau, \left(m\right)}-\bm{\theta}^{\tau}\big)\big\|_{2} & \leq\max_{s\in\left[r\right]}\big|\lambda_{s}^{\star}\big\langle\overline{\bm{u}}_{s}^{\star},\bm{\theta}^{\tau, \left(m\right)}-\bm{\theta}^{\tau}\big\rangle\big| \big\|\overline{\bm{U}}^{\star}\big\|_{2, \infty} \big\|\overline{\bm{U}}^{\star}\big\|\\
 & \lesssim \lambda_{\max}^{\star} \big\|\overline{\bm{U}}^{\star}\big\|_{2, \infty}\big\|\bm{U}\bm{U}^{\top}-\bm{U}^{(m)}\bm{U}^{\left(m\right)\top}\big\|\sqrt{\log d}\\
 & \lesssim \EE_\loo \sqrt{\frac{\mu r}{d}} \sqrt{\frac{\mu r \log d}{d}}  \, \lambda_{\max}^\star \ll \sqrt{\frac{\mu r}{d}} \sqrt{\frac{\mu r \log d}{d}}  \, \lambda_{\max}^\star
\end{align*}
where we have used Lemma~\ref{lemma:U_U_loo_dist_op} as well as the conditions that $\EE_\loo \ll 1$, $\big\|\overline{\bm{U}}^{\star}\big\| \lesssim 1$ and $\big\|\overline{\bm{U}}^{\star}\big\|_{2,\infty} \lesssim \sqrt{\mu r / d}$.

Putting the above bounds together, we arrive at 
\begin{align}
\left\Vert \big(\bm{M}^{\tau, \left(m\right)}-\bm{M}^{\star\tau}\big)_{m,:}\right\Vert _{2} & \leq\big\|\bm{T}_{m,:,:}^{\star}\times_{3}\big(\bm{\theta}^{\tau, \left(m\right)}-\bm{\theta}^{\tau}\big)\big\|_{2}+\big\Vert \bm{C}_{m,:}^{\tau}\big\Vert _{2} \lesssim \sqrt{\frac{\mu r}{d}} \sqrt{\frac{\mu r \log d}{d}}  \, \lambda_{\max}^\star.\label{eq:M_loo_M_true_row_2_loss}
\end{align}
We therefore conclude that
\begin{align*}
\alpha_{2}\lesssim \sqrt{\frac{\mu r}{d}} \sqrt{\frac{\mu r \log d}{d}}  \, \lambda_{\max}^\star \lesssim \EE_\op \sqrt{\frac{\mu r \log d}{d}}  \, \lambda_{\max}^\star
\end{align*}
where we remind the reader of the definition of $\EE_\op$ in (\ref{def:err-op}).

\subsubsection{Controlling $\alpha_{1}$}

The remaining quantity to control is $\alpha_{1}$ (see (\ref{eq:defn-alpha123-u})).
Invoke Weyl's inequality to show that 
\begin{align*}
\big|\gamma_{1}^{\star\tau}-\gamma_{\tau}^{(m)}\big| & \leq\big\|\bm{M}^{\tau, \left(m\right)}-\bm{M}^{\star\tau}\big\| \lesssim \EE_\op \lambda_{\min}^\star \ll\gamma_{1}^{\star\tau},
\end{align*}
where the last inequality arises from (\ref{M_loo_op_loss}) and Lemma~\ref{lemma:init_spectra}.
Under our sample size, rank and noise conditions, we have 
\begin{align*}
\frac{1}{2}\gamma_{1}^{\star\tau}\leq\gamma_{1}^{\star\tau}-\big|\gamma_{1}^{\star\tau}-\gamma_{\tau}^{(m)}\big|\leq\gamma_{\tau}^{(m)}\leq\big|\gamma_{1}^{\star\tau}-\gamma_{\tau}^{(m)}\big|+\gamma_{1}^{\star\tau}\leq2\gamma_{1}^{\star\tau}.
\end{align*}
This indicates that 
\begin{align*}
\frac{\big|\gamma_{1}^{\star\tau}-\gamma_{\tau}^{(m)}\big|}{\gamma_{1}^{\star\tau}\gamma_{\tau}^{(m)}}\lesssim\frac{1}{\big(\gamma_{1}^{\star\tau}\big)^{2}} & \big\|\bm{M}^{\tau, \left(m\right)}-\bm{M}^{\star\tau}\big\| \lesssim\frac{1}{\lambda_{\min}^{\star}} \EE_\op.
\end{align*}
Moreover, we learn from (\ref{eq:M_loo_M_true_row_2_loss}) and (\ref{eq:M_row_2_norm}) that 
\begin{align*}
\big\|\bm{M}_{m,:}^{\tau, \left(m\right)}\big\|_{2} & \leq\big\|\big(\bm{M}^{\tau, \left(m\right)}-\bm{M}^{\star\tau}\big)_{m,:}\big\|_{2}+\big\|\bm{M}_{m,:}^{\star\tau}\big\|_{2} \\
& \lesssim \frac{\mu r\sqrt{ \log d}}{d} \, \lambda_{\max}^\star +  \sqrt{\frac{\mu r\log d}{d}}\,\lambda_{\max}^{\star} \asymp \sqrt{\frac{\mu r\log d}{d}}\,\lambda_{\min}^{\star},
\end{align*}
where the last step follows from the fact that $\mu r \leq d$ and $\kappa \asymp 1$. Hence,
we reach the conclusion that 
\begin{align*}
\alpha_{1} & \lesssim\frac{\big|\gamma_{1}^{\star\tau}-\gamma_{\tau}^{(m)}\big|}{\gamma_{1}^{\star\tau}\gamma_{\tau}^{(m)}}\big\|\bm{M}_{m,:}^{\tau, \left(m\right)}\big\|_{2} \lesssim \EE_\op \sqrt{\frac{\mu r \log d}{d}} .
\end{align*}

\subsubsection{Combining $\alpha_1$, $\alpha_2$ and $\alpha_3$}

Putting together all of the preceding bounds on $\alpha_{1}$, $\alpha_{2}$
and $\alpha_{3}$ immediately establishes the lemma.

\subsection{Proof of Lemma~\ref{lemma:M_loo_op_loss}}

\label{subsec:pf:M_loo_op_loss}

First of all, if the claims (\ref{eq:M_hat_loo_op_norm}) and (\ref{eq:M_hat_M_hat_loo_op_norm}) can be established, then putting them together yields
\begin{align}
\big\|\bm{M}^{\tau}-\bm{M}^{\tau, \left(m\right)}\big\| & \leq\big\|\bm{M}^{\tau}-\widehat{\bm{M}}^{\tau, \left(m\right)}\big\|+\big\|\widehat{\bm{M}}^{\tau, \left(m\right)}-\bm{M}^{\tau, \left(m\right)}\big\|\lesssim\frac{\mu r\lambda_{\max}^{\star}\sqrt{\log d}}{d\sqrt{p}}+\sigma\sqrt{\frac{ rd\log d}{p}},\label{M_hat_loo_M_hat_op_loss}
\end{align}
where we recall the definition of $\EE_\loo$ in \eqref{def:err-loo} and use the sample size, noise and rank conditions. The rest of the proof is thus dedicated to establishing (\ref{eq:M_hat_loo_op_norm}) and (\ref{eq:M_hat_M_hat_loo_op_norm}). In what follows, we shall assume $\{E_{i,j,k}\}_{i,j,k\in[d]}$ (resp.~$\{\chi_{i,j,k}\}_{i,j,k\in[d]}$) are independent random variables to simplify presentation.

\subsubsection{\texorpdfstring{Proximity of $\bm{M}^{\tau, \left(m\right)}$ and $\widehat{\bm{M}}^{\tau, \left(m\right)}$}{the truth}}

Recall the definition of $\bm{M}^{\tau, \left(m\right)}=p^{-1}\bm{T}^{(m)}\times_{3}\bm{\theta}^{\tau, \left(m\right)}$
in (\ref{eq:projected-matrix-M-tau}). Comparing this with the definition of $\widehat{\bm{M}}^{\tau, \left(m\right)}$
in (\ref{eq:projected-matrix-M-tau_g}), we see that \begin{subequations} 
\begin{align}
\big(\widehat{\bm{M}}^{\tau, \left(m\right)}-\bm{M}^{\tau, \left(m\right)}\big)_{i,j} & =\theta_{m}^{\tau, \left(m\right)} \big(T_{i,j,m}^{\star}(p^{-1}\chi_{i,j,m}-1 )+p^{-1}E_{i,j,m}\chi_{i,j,m}\big), \; i\neq m,j\neq m, \label{eq:M1m-ij-formula}\\
\big(\widehat{\bm{M}}^{\tau, \left(m\right)}-\bm{M}^{\tau, \left(m\right)}\big)_{i,m} & = \theta_{k}^{\tau, \left(m\right)} \sum\nolimits_{k\in[d]} \big( T_{i,m,k}^{\star}(p^{-1}\chi_{i,m,k}-1 )+p^{-1}E_{i,m,k}\chi_{i,m,k} \big) , \; i\neq m, \label{eq:M1m-im-formula}\\
\big(\widehat{\bm{M}}^{\tau, \left(m\right)}-\bm{M}^{\tau, \left(m\right)}\big)_{m,j} & = \theta_{k}^{\tau, \left(m\right)} \sum\nolimits_{k\in[d]} \big( T_{m,j,k}^{\star}(p^{-1}\chi_{m,j,k}-1 )+p^{-1} E_{m,j,k}\chi_{m,j,k} \big) .\label{eq:M1m-mj-formula}
\end{align}
\end{subequations}

Note that $\bm{\theta}^{\tau, \m}\sim\mathcal{N}\left(\bm{0},\bm{U}^{\m}\bm{U}^{\m \top}\right)$
conditional on $\mathcal{P}_{\Omega}\left(\bm{E}\right)$. Standard Gaussian concentration inequalities reveal that with probability exceeding $1-O(d^{-10})$,
\begin{align}
\big\Vert \bm{\theta}^{\tau, \m}\big\Vert _{2}\lesssim\sqrt{r\log d}.\label{eq:theta_loo_2_norm}
\end{align}
From Lemmas~\ref{lemma:eigsp_dist_op}-\ref{lemma:U_U_loo_dist_op} and the fact that $\max\{ \EE_{\mathsf{se}}, \EE_\loo \} \ll 1$,
we have
\begin{align*}
\max_{i\in[d]}\mathsf{Var}\big(\theta_{i}^{\tau}\big) & =\left\Vert \bm{U}\right\Vert _{2,\infty}^{2}\lesssim \frac{\mu r}{d} + \left\Vert \bm{U}_{\mathsf{orth}}^{\star}\right\Vert _{2,\infty}^{2}\asymp\frac{\mu r}{d},\\
\max_{i\in[d]}\mathsf{Var}\big(\theta_{i}^{\tau, \left(m\right)}\big) & =\big\|\bm{U}^{(m)}\big\|_{2,\infty}^{2}\lesssim \left\Vert \bm{U}\right\Vert _{2,\infty}^{2}\lesssim\frac{\mu r}{d}, \quad 1 \leq m \leq d.
\end{align*}
As a consequence, standard concentration results assert that with
probability $1-O(d^{-10})$,
\begin{align}
\big\Vert \bm{\theta}^{\tau}\big\Vert _{\infty} & \leq\sqrt{\max_{i\in[d]} \mathsf{Var}\big(\theta_{i}^{\tau}\big)  \log d}\lesssim\sqrt{\frac{\mu r\log d}{d}};\label{eq:theta_inf_norm}\\
\big\|\bm{\theta}^{\tau, \left(m\right)}\big\|_{\infty} & \leq\sqrt{\max_{i\in[d]} \mathsf{Var}\big(\theta_{i}^{\tau, \m}\big)  \log d} \lesssim\sqrt{\frac{\mu r\log d}{d}}, \quad 1 \leq m \leq d.\label{eq:theta_loo_inf_norm}
\end{align}

\begin{itemize}
\item Regarding the $m$-th row of $\widehat{\bm{M}}^{\tau, \left(m\right)}-\bm{M}^{\tau, \left(m\right)}$,
apply Lemma~\ref{lemma:sum_jk_square_cont_ineq} to show that with
probability $1-O\left(d^{-11}\right)$,
\begin{align*}
\sum\nolimits_{j\in[d]}\Big(\sum\nolimits_{k\in[d]}T_{m,j,k}^{\star}\theta_{k}^{\tau, \left(m\right)}(p^{-1}\chi_{m,j,k}-1)\Big)^{2}\lesssim\frac{\mu r\lambda_{\max}^{\star2}}{dp}\big\|\bm{\theta}^{\tau, \left(m\right)}\big\|_{\infty}^{2}\lesssim\frac{\mu^{2}r^{2}\lambda_{\max}^{\star2}\log d}{d^{2}p},
\end{align*}
where the last inequality comes from (\ref{eq:theta_loo_inf_norm}). In addition,
Lemma~\ref{lemma:sum_jk_square_cont_ineq_subg} indicates that with
probability exceeding $1-O\left(d^{-11}\right)$,
\begin{align*}
\sum\nolimits_{j\in[d]}\Big(\sum\nolimits_{k\in[d]}\theta_{k}^{\tau, \left(m\right)}E_{m,j,k}\chi_{m,j,k}\Big)^{2} & \lesssim\sigma^{2}dp\,\big\|\bm{\theta}^{\tau, \left(m\right)}\big\|_{2}^{2}+\sigma^{2}\big\|\bm{\theta}^{\tau, \left(m\right)}\big\|_{\infty}^{2}\log^{5}d\\
 & \lesssim\sigma^{2}rdp\log d+\frac{\sigma^{2}\mu r\log^{6}d}{d} \asymp\sigma^{2}rdp\log d,
\end{align*}
where the second line comes from (\ref{eq:theta_loo_inf_norm}) and \eqref{eq:theta_loo_2_norm},
and the last inequality holds as long as $p\gg\mu d^{-2}\log^{5}d$.
These together with (\ref{eq:M1m-mj-formula}) allow us to obtain
\begin{align}
\left\Vert \big(\widehat{\bm{M}}^{\tau, \left(m\right)}-\bm{M}^{\tau, \left(m\right)}\big)_{m,:}\right\Vert _{2}^{2}\lesssim\frac{\mu^{2}r^{2}\lambda_{\max}^{\star2}\log d}{d^{2}p}+\frac{\sigma^{2}rd\log d}{p}.\label{M_loo_row_l2_loss}
\end{align}
Clearly, this bound is also valid for $\sum_{i:i\neq m}\big\{\big(\widehat{\bm{M}}^{\tau, \left(m\right)}-\bm{M}^{\tau, \left(m\right)}\big)_{i,m}\big\}^{2}$,
namely,
\begin{align}
\sum_{i:i\neq m}\left\{ \big(\widehat{\bm{M}}^{\tau, \left(m\right)}-\bm{M}^{\tau, \left(m\right)}\big)_{i,m}\right\} ^{2}\lesssim\frac{\mu^{2}r^{2}\lambda_{\max}^{\star2}\log d}{d^{2}p}+\frac{\sigma^{2}rd\log d}{p}.\label{M_loo_col_l2_loss}
\end{align}
\item When it comes to the remaining entries of $\widehat{\bm{M}}^{\tau, \left(m\right)}-\bm{M}^{\tau, \left(m\right)}$, by the fact that the spectral norm of a submatrix is always less than or equal to that of the whole matrix,
applying the matrix Bernstein inequality gives that with probability
$1-O\left(d^{-11}\right)$,
\begin{align*}
\Big\| \Big[ \big(\widehat{\bm{M}}^{\tau, \left(m\right)}-\bm{M}^{\tau, \left(m\right)}\big)_{i,j}  \Big]_{i, j\neq m}  \Big\| & \lesssim  \big\|\bm{\theta}^{\tau, \left(m\right)}\big\|_{\infty} \( \frac{\log d}{p} \norm{\bm{A}^\star}_\infty + \sqrt{\frac{\log d}{p}} \norm{\bm{A}^{\star \top}}_{2,\infty} + \frac{\sigma \log^2 d}{p} + \sigma \sqrt{\frac{d \log d}{p}}  \) \\
& \lesssim \sqrt{\frac{\mu r \log d}{d}} \( \frac{\sqrt{\mu r} \, \lambda_{\max}^\star \log d}{d^{3/2} p } + \frac{\mu \sqrt{r} \, \lambda_{\max}^\star \sqrt{\log d}}{d \sqrt{p}} + \frac{\sigma \log^2 d}{p} + \sigma \sqrt{\frac{d \log d}{p}} \)  \\
& \lesssim \frac{\mu r\lambda_{\max}^{\star}\sqrt{\log d}}{d\sqrt{p}}+\sigma\sqrt{\frac{rd\log d}{p}},
\end{align*}
as long as our sample size and rank condition holds.

\item Putting the preceding bounds together yields
\begin{equation}
\big\|\widehat{\bm{M}}^{\tau, \left(m\right)}-\bm{M}^{\tau, \left(m\right)}\big\| \lesssim\frac{\mu r\lambda_{\max}^{\star}\sqrt{\log d}}{d\sqrt{p}}+\sigma\sqrt{\frac{rd\log d}{p}}.
\end{equation}
\end{itemize}

\subsubsection{\texorpdfstring{Proximity of $\bm{M}^{\tau}$ and $\widehat{\bm{M}}^{\tau, \left(m\right)}$}{the truth}}

Recall the definitions of $\bm{M}^{\tau}$ and $\widehat{\bm{M}}^{\tau, \left(m\right)}$
in (\ref{eq:defn-Mtau}) and (\ref{eq:projected-matrix-M-tau_g}), respectively. From
the definition of the operator norm and the triangle inequality, we
have
\begin{align}
\big\|\bm{M}^{\tau}-\widehat{\bm{M}}^{\tau, \left(m\right)}\big\| & \leq \underbrace{\big\|\bm{T}^{\star}\times_{3}\big(\bm{\theta}^{\tau}-\bm{\theta}^{\tau, \left(m\right)}\big)\big\|}_{=: \, \alpha_1}
+ \underbrace{\big\|\left(p^{-1}\bm{T}-\bm{T}^{\star}\right)\times_{3}\big(\bm{\theta}^{\tau}-\bm{\theta}^{\tau, \left(m\right)}\big)\big\|}_{=: \, \alpha_2}.\label{eq:M-theta-relation}
\end{align}
\begin{itemize}
\item To control $\alpha_1$, we can express
\begin{align}
\bm{T}^{\star}\times_{3}\big(\bm{\theta}^{\tau}-\bm{\theta}^{\tau, \left(m\right)}\big) 
& =\sum_{s\in\left[r\right]}\lambda_{s}^{\star}\big\langle\overline{\bm{u}}_{s}^{\star},\bm{\theta}^{\tau}-\bm{\theta}^{\tau, \left(m\right)}\big\rangle\,\overline{\bm{u}}_{s}^{\star}\overline{\bm{u}}_{s}^{\star\top}.
\end{align}

As shown in (\ref{eq:utrue_theta_diff}),
with probability at least $1-O\left(d^{-12}\right)$,
\begin{align*}
\big|\big\langle\overline{\bm{u}}_{s}^{\star},\bm{\theta}^{\tau, \left(m\right)}-\bm{\theta}^{\tau}\big\rangle\big|\lesssim\big\|\bm{U}\bm{U}^{\top}-\bm{U}^{(m)}\bm{U}^{\left(m\right)\top}\big\|\sqrt{\log d}.
\end{align*}

Consequently, we know from Lemma~\ref{lemma:U_U_loo_dist_op} that with probability at least $1-O\( d^{-10} \)$,
\begin{align}
\big\|\bm{T}^{\star}\times_{3}\big(\bm{\theta}^{\tau}-\bm{\theta}^{\tau, \left(m\right)}\big)\big\| & \leq\max_{s\in[r]}\big|\lambda_{s}^{\star}\big\langle\overline{\bm{u}}_{s}^{\star},\bm{\theta}^{\tau}-\bm{\theta}^{\tau, \left(m\right)}\big\rangle\big|\big\|\overline{\bm{U}}^{\star}\big\|^{2} \lesssim\big\|\bm{U}\bm{U}^{\top}-\bm{U}^{(m)}\bm{U}^{\left(m\right)\top}\big\|\sqrt{\log d}\,\lambda_{\max}^{\star} \nonumber \\
&  \lesssim \EE_\loo \sqrt{\frac{\mu r \log d}{d}}\,\lambda_{\max}^{\star} , \label{eq:Ttrue-theta-loo-diff-2norm}
\end{align}
where we use the fact that $\big\|\overline{\bm{U}}^{\star}\big\|\lesssim1$
if $r\sqrt{\mu/d}\ll1$. 
\item When it comes to $\alpha_2$, combining (\ref{eq:T_op_loss})
and (\ref{eq:g_loo_l2_loss}) with our sample size, noise and rank conditions, one has 
\begin{align}
\big\|\left(p^{-1}\bm{T}-\bm{T}^{\star}\right)\times_{3}\big(\bm{\theta}^{\tau}-\bm{\theta}^{\tau, \left(m\right)}\big)\big\| & \leq\left\Vert p^{-1}\bm{T}-\bm{T}^{\star}\right\Vert \big\|\bm{\theta}^{\tau}-\bm{\theta}^{\tau, \left(m\right)}\big\|_{2}\ll\lambda_{\max}^{\star}\big\|\bm{\theta}^{\tau}-\bm{\theta}^{\tau, \left(m\right)}\big\|_{2} \nonumber \\
 & \lesssim\big\|\bm{U}\bm{U}^{\top}-\bm{U}^{(m)}\bm{U}^{\left(m\right)\top}\big\|_{\mathrm{F}}\sqrt{\log d}\,\lambda_{\max}^{\star} \nonumber \\
 & \lesssim \EE_\loo \sqrt{\frac{\mu r \log d}{d}} \,\lambda_{\max}^{\star}. \label{eq:T-Ttrue-theta-loo-diff-2norm}
\end{align}
\item Combining (\ref{eq:M-theta-relation}), (\ref{eq:Ttrue-theta-loo-diff-2norm}) and (\ref{eq:T-Ttrue-theta-loo-diff-2norm}), we conclude that
\begin{align}
\big\|\bm{M}^{\tau}-\widehat{\bm{M}}^{\tau, \left(m\right)}\big\| & \lesssim \EE_\loo \sqrt{\frac{\mu r \log d}{d}}\,\lambda_{\max}^{\star} . \nonumber
\end{align}
\end{itemize}

\subsection{Proof of Lemma~\ref{lemma:init_loo_l2_loss}}

\label{pf:init_loo_l2_loss}

\begin{itemize}

\item

We start with the first claim regarding $\|\overline{\bm{u}}^{\tau}-\bm{\nu}^{\tau, \left(m\right)}\|_{2}$,
or equivalently, $\|\overline{\bm{u}}^{\tau}-\overline{\bm{u}}^{\tau, \left(m\right)}\|_{2}$ (as argued in the proof of Lemma~\ref{lemma:init_loo_entry_loss}).
By the triangle inequality, we can upper bound the following two terms
separately: 
\begin{align}
\big\|\overline{\bm{u}}^{\tau}-\overline{\bm{u}}^{\tau, \left(m\right)}\big\|_{2}\leq\underbrace{\big\|\overline{\bm{u}}^{\tau}-\widehat{\bm{u}}^{\tau, \left(m\right)}\big\|_{2}}_{=: \, \beta_{1}}+\underbrace{\big\|\widehat{\bm{u}}^{\tau, \left(m\right)}-\overline{\bm{u}}^{\tau, \left(m\right)}\big\|_{2}}_{=: \, \beta_{2}}.\label{eq:defn-beta1-beta2-300}
\end{align}
Here, we remind the reader that $\widehat{\bm{u}}^{\tau, \left(m\right)}$
is the top left singular vector of $\widehat{\bm{M}}^{\tau, \left(m\right)}$
(see \eqref{eq:projected-matrix-M-tau_g}) obeying $\big\langle\widehat{\bm{u}}^{\tau, \left(m\right)}, \overline{\bm{u}}_{1}^{\star}\big\rangle\geq0$.

\begin{itemize}

\item The first term $\beta_{1}$ shall be bounded via Wedin's theorem.
From \eqref{eq:sigval_M_hat} and \eqref{eq:sigval_M_star_gap}, we have 
\begin{align}
\sigma_{1}\big(\bm{M}^{\tau}\big)-\sigma_{2}(\bm{M}^{\tau}) & \geq\sigma_{1}(\bm{M}^{\star\tau})-\sigma_{2}(\bm{M}^{\star\tau})-2 \, \big\Vert \bm{M}^{\tau}-\bm{M}^{\star\tau}\big\Vert \gtrsim \lambda_{\min}^{\star}.
\label{eq:sigval_M_hat_gap}
\end{align}
Combined with Lemma~\ref{lemma:M_loo_op_loss}, one has
\begin{align*}
\sigma_{1}\big(\bm{M}^{\tau}\big)-\sigma_{2}(\bm{M}^{\tau})-\big\|\bm{M}^{\tau}-\widehat{\bm{M}}^{\tau, \left(m\right)}\big\| \gtrsim \lambda_{\min}^\star.
\end{align*}
Note that we have already shown in the proof of Lemma~\ref{pf:init_2_loss}
and Lemma~\ref{pf:init_loo_entry_loss} that $\big\|\overline{\bm{u}}^{\tau}-\overline{\bm{u}}_{1}^{\star}\big\|_{2}=o(1)$
and $\big\|\widehat{\bm{u}}^{\tau, \left(m\right)}-\overline{\bm{u}}_{1}^{\star}\big\|_{2}=o(1)$,
which implies that $\bm{u}^{\tau}$ and $\widehat{\bm{u}}^{\tau, \left(m\right)}$
are positively correlated. 
Thus, one can invoke Wedin's theorem and use
the bound \eqref{eq:M_hat_M_hat_loo_op_norm} to reach
\begin{align}
 \big\|\overline{\bm{u}}^{\tau}-\widehat{\bm{u}}^{\tau, \left(m\right)}\big\|_{2} & \leq\frac{\big\|\bm{M}^{\tau}-\widehat{\bm{M}}^{\tau, \left(m\right)}\big\|}{\sigma_{1}\big(\bm{M}^{\tau}\big)-\sigma_{2}(\bm{M}^{\tau})-\big\|\bm{M}^{\tau}-\widehat{\bm{M}}^{\tau, \left(m\right)}\big\|} \lesssim \frac{1}{\lambda_{\min}^\star} \big\|\bm{M}^{\tau}-\widehat{\bm{M}}^{\tau, \left(m\right)}\big\| \nonumber \\
& \lesssim \EE_\loo \sqrt{\frac{\mu r \log d}{d}} \leq  \EE_\loo \sqrt{r \log d} \, \max \big\{ \sqrt{\mu / d}, \, \big\Vert \overline{\bm{u}}^{\tau}\big\Vert _{\infty} \big\} .\label{u_u_hat_loo_2_norm}
\end{align}
In addition to this bound on $\beta_{1}$, we also make note of the
following simple bound
\begin{align}
 \big\|\widehat{\bm{u}}^{\tau, \left(m\right)}\big\|_{\infty} & \leq\big\|\overline{\bm{u}}^{\tau}-\widehat{\bm{u}}^{\tau, \left(m\right)}\big\|_{\infty}+\big\Vert \overline{\bm{u}}^{\tau}\big\Vert _{\infty}\leq\big\|\overline{\bm{u}}^{\tau}-\widehat{\bm{u}}^{\tau, \left(m\right)}\big\|_{2}+\big\Vert \overline{\bm{u}}^{\tau}\big\Vert _{\infty} \lesssim \max \big\{ \sqrt{\mu / d}, \, \big\Vert \overline{\bm{u}}^{\tau}\big\Vert _{\infty} \big\},\label{u_hat_loo_inf_norm}
\end{align}
where the last inequality follows from our sample size, noise and rank condition
that $\EE_\loo \sqrt{r \log d} \ll 1$.

\item The second term $\beta_{2}$ is also controlled via Wedin's theorem:
\begin{align}
 \big\|\widehat{\bm{u}}^{\tau, \left(m\right)}-\overline{\bm{u}}^{\tau, \left(m\right)}\big\|_{2}
 \leq\frac{  \big\|\big(\widehat{\bm{M}}^{\tau, \left(m\right)}-\bm{M}^{\tau, \left(m\right)}\big)\overline{\bm{u}}^{\tau, \left(m\right)}\big\|_{2} }{\sigma_{1}\big(\bm{M}^{\tau, \left(m\right)}\big)-\sigma_{2}\big(\bm{M}^{\tau, \left(m\right)}\big)-\big\|\widehat{\bm{M}}^{\tau, \left(m\right)}-\bm{M}^{\tau, \left(m\right)}\big\|}.\label{eq:Wedin-200}
\end{align}
The denominator term is easy to handle. With \eqref{M_hat_loo_M_hat_op_loss} and \eqref{eq:sigval_M_hat_gap}
in mind, we can apply Weyl's inequality to obtain
\begin{align}
\sigma_{1}\big(\bm{M}^{\tau, \left(m\right)}\big)-\sigma_{2}\big(\bm{M}^{\tau, \left(m\right)}\big) & \geq\sigma_{1}\big(\bm{M}^{\tau}\big)-\sigma_{2}(\bm{M}^{\tau})-2\,\big\|\bm{M}^{\tau}-\bm{M}^{\tau, \left(m\right)}\big\|\gtrsim\lambda_{\min}^{\star}.
\label{eq:sigval_M_hat_loo_gap}
\end{align}
From Lemma~\ref{lemma:M_loo_op_loss}, one has $\big\|\bm{M}^{\tau}-\bm{M}^{\tau, \left(m\right)}\big\|\ll\lambda_{\min}^{\star}$.
Therefore, we know that
\begin{align*}
\sigma_{1}\big(\bm{M}^{\tau, \left(m\right)}\big)-\sigma_{2}\big(\bm{M}^{\tau, \left(m\right)}\big)-\big\|\widehat{\bm{M}}^{\tau, \left(m\right)}-\bm{M}^{\tau, \left(m\right)}\big\|\gtrsim\lambda_{\min}^{\star}.
\end{align*}
In addition, Lemma~\ref{lemma:M_hat_M_loo_u_2_norm} below develops an upper bound on the numerator term:
\begin{lemma}
\label{lemma:M_hat_M_loo_u_2_norm}
Instate the assumptions of Lemma~\ref{lemma:init_loo_l2_loss}. With probability at least $1-O\( d^{-10} \)$, one has
\begin{align}
\big\|\big(\widehat{\bm{M}}^{\tau, \left(m\right)}  -\bm{M}^{\tau, \left(m\right)}\big)\overline{\bm{u}}^{\tau, \left(m\right)}\big\|_{2} \lesssim \Bigg\{\frac{\mu r\lambda_{\max}^{\star}\log d}{d\sqrt{p}}+\sigma\sqrt{\frac{rd\log^{2}d}{p}}\Bigg\} \big\|\overline{\bm{u}}^{\tau, \left(m\right)}\big\|_{\infty}.
\end{align}
\end{lemma}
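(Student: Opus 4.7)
\medskip
\noindent\textbf{Proof proposal.} The starting point is the structural observation that
\begin{align*}
\widehat{\bm{M}}^{\tau, (m)} - \bm{M}^{\tau, (m)} \;=\; p^{-1}\bm{X}\times_{3}\bm{\theta}^{\tau, (m)},\qquad \bm{X}\;:=\;\mathcal{P}_{\Omega_m}(\bm{T}^\star+\bm{E}) \;-\; p\,\mathcal{P}_m(\bm{T}^\star),
\end{align*}
where $\bm{X}$ is supported on tensor locations $(i,j,k)$ having at least one coordinate equal to $m$. By construction, $\bm{\theta}^{\tau,(m)}$ and $\bm{v} := \overline{\bm{u}}^{\tau,(m)}$ depend only on $\bm{T}^{(m)}$ and $\bm{U}^{(m)}$, hence are statistically independent of the randomness entering $\bm{X}$ (namely $\mathcal{P}_{\Omega_m}(\bm{E})$ and the sampling indicators $\{\chi_{i,j,k}\}_{(i,j,k)\in\Omega_m}$). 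Throughout I will write $\bm{D}:=\widehat{\bm{M}}^{\tau,(m)}-\bm{M}^{\tau,(m)}$ and abbreviate $\bm{\theta}:=\bm{\theta}^{\tau,(m)}$.

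Splitting the support of $\bm{X}$ by which coordinate equals $m$, I would decompose, for $i\neq m$,
\begin{align*}
[\bm{D}\bm{v}]_{i} \;=\; v_m\cdot D_{i,m} \;+\; \theta_m\cdot p^{-1}\!\!\sum_{j\neq m} X_{i,j,m}\,v_j,
\end{align*}
while for $i=m$ one has $[\bm{D}\bm{v}]_m=p^{-1}\sum_{j,k}X_{m,j,k}v_j\theta_k$. The key point is that for every $i\neq m$, each summand carries either an explicit $v_m$ factor (bounded by $\|\bm{v}\|_\infty$) or an explicit $\theta_m$ factor; and from $\bm{\theta}=\bm{U}^{(m)}\bm{U}^{(m)\top}\bm{g}^\tau$ together with $\|\bm{U}^{(m)}\|_{2,\infty}\lesssim\sqrt{\mu r/d}$ (cf.\ the derivation leading to \eqref{eq:theta_loo_inf_norm}), one has $|\theta_m|\lesssim\sqrt{\mu r\log d/d}$ w.h.p. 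Since $\|\bm{v}\|_\infty\geq 1/\sqrt{d}$ whenever $\|\bm{v}\|_2=1$, this smallness of $\theta_m$ furnishes the $\|\bm{v}\|_\infty$ factor missing from $\|\bm{v}\|_2=1$.

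The plan is then to control $\|\bm{D}\bm{v}\|_2^2$ via three pieces. (i)~The $v_m$-part of $\sum_{i\neq m}[\bm{D}\bm{v}]_i^2$ is at most $v_m^2\sum_{i\neq m}D_{i,m}^2\leq\|\bm{v}\|_\infty^2\,\|\bm{D}_{:,m}\|_2^2$, and the already-established column bound \eqref{M_loo_col_l2_loss} immediately matches the target RHS. (ii)~The $\theta_m$-part of $\sum_{i\neq m}[\bm{D}\bm{v}]_i^2$ is $\theta_m^{2}\,\|\bm{X}_{:,:,m}\bm{v}\|_2^2$, where $\bm{X}_{:,:,m}$ has independent zero-mean entries conditional on $\bm{v}$. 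I would apply matrix Bernstein to the sum $\sum_j v_j\bm{X}_{:,j,m}$ in precisely the same way Lemma~\ref{lemma:T_loss_times3_op_norm} was used to control $\|(p^{-1}\bm{T}-\bm{T}^\star)\times_2\overline{\bm{u}}_1^\star\|$ in \eqref{eq:T_loss_utrue_op_norm}, yielding $\|\bm{X}_{:,:,m}\bm{v}\|_2\lesssim \mu\sqrt{r}\lambda_{\max}^\star\sqrt{\log d}/(d\sqrt{p})+\sigma\sqrt{d\log d/p}$; multiplying by $|\theta_m|\lesssim\sqrt{\mu r\log d/d}$ and using $\|\bm{v}\|_\infty\gtrsim 1/\sqrt d$ again fits the target. (iii)~The row $i=m$: since $[\bm{D}\bm{v}]_m$ is a sum of independent mean-zero scalars $p^{-1}[\chi_{m,j,k}(T^\star+E)_{m,j,k}-pT^\star_{m,j,k}]v_j\theta_k$, a scalar Bernstein inequality with variance $\lesssim p^{-1}(\|T^\star\|_\infty^2+\sigma^2)\|\bm{v}\|_2^2\|\bm{\theta}\|_2^2$ produces a bound that is already dominated by the target (recall $\|\bm{\theta}\|_2\lesssim\sqrt{r\log d}$ from \eqref{eq:theta_loo_2_norm}); absorbing $\|\bm{v}\|_2=1\leq\sqrt d\,\|\bm{v}\|_\infty$ closes this piece.

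\medskip
\noindent\textbf{Main obstacle.} The bookkeeping itself is not hard; the only delicate step is (ii), where a naive $\|\bm{X}_{:,:,m}\bm{v}\|_2\leq\|\bm{X}_{:,:,m}\|\cdot\|\bm{v}\|_2$ would cost a spurious $\sqrt{\mu r/d}$ factor. One has to exploit the independence of $\bm{v}$ from $\bm{X}_{:,:,m}$ and apply matrix Bernstein to the vector-valued sum $\sum_j v_j\bm{X}_{:,j,m}$, mirroring the slice-level argument in Lemma~\ref{lemma:T_loss_times3_op_norm}. Once (ii) is executed cleanly, the three pieces combine and the factor $\|\bm{v}\|_\infty$ emerges from the combination of $v_m$, $\theta_m\cdot\|\bm{v}\|_2\sqrt{1/d}$, and the row-$m$ bound.
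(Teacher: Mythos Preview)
Your proposal is correct and follows essentially the same route as the paper: the identical three-way split into the $m$-th row (scalar Bernstein), the $v_m$-column contribution for $i\neq m$ (the paper re-derives it via Lemmas~\ref{lemma:sum_jk_square_cont_ineq}--\ref{lemma:sum_jk_square_cont_ineq_subg} with $\bm{w}=\bm{\theta}^{\tau,(m)}$, whereas you shortcut to the already-established bound \eqref{M_loo_col_l2_loss}), and the $\theta_m$-slice contribution for $i\neq m$ (the paper uses those same auxiliary lemmas with $\bm{w}=\overline{\bm{u}}^{\tau,(m)}$, which is precisely the slice-level concentration you describe). The only cosmetic difference is that the paper extracts the $\|\overline{\bm{u}}^{\tau,(m)}\|_\infty$ factor for the row-$m$ signal part more directly via $\|\bm{A}^{\star\top}\|_{2,\infty}$, while you use the cruder $\|\bm{v}\|_2\leq\sqrt d\,\|\bm{v}\|_\infty$; both land on the same bound.
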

\begin{proof}
See Appendix~\ref{subsection:pf:M_hat_M_loo_u_2_norm}.
\end{proof}

\item Substitution of the above bounds into \eqref{eq:Wedin-200} yields
\begin{align*}
 \big\|\widehat{\bm{u}}^{\tau, \left(m\right)}-\overline{\bm{u}}^{\tau, \left(m\right)}\big\|_{2} \lesssim\Bigg\{\frac{\mu r \log d}{d\sqrt{p}}+\frac{\sigma}{\lambda_{\min}^\star}\sqrt{\frac{ rd\log^{2}d}{p}}\Bigg\} \big\|\overline{\bm{u}}^{\tau, \left(m\right)}\big\|_{\infty} \ll \big\|\overline{\bm{u}}^{\tau, \left(m\right)}\big\|_{\infty}, 
\end{align*}
where the last step holds as long as $p\gg\mu^2 r^{2}d^{- 2}\log^{2}d$
and $\sigma/\lambda_{\min}^{\star}\ll\sqrt{p /( rd \log^{2}d)}$.
In addition, from (\ref{u_hat_loo_inf_norm}), we observe that
\begin{align*}
\begin{split}\big\|\overline{\bm{u}}^{\tau, \left(m\right)}\big\|_{\infty} & \leq\big\|\widehat{\bm{u}}^{\tau, \left(m\right)}-\overline{\bm{u}}^{\tau, \left(m\right)}\big\|_{\infty}+\big\|\widehat{\bm{u}}^{\tau, \left(m\right)}\big\|_{\infty}\leq\big\|\widehat{\bm{u}}^{\tau, \left(m\right)}-\overline{\bm{u}}^{\tau, \left(m\right)}\big\|_{2}+\big\|\widehat{\bm{u}}^{\tau, \left(m\right)}\big\|_{\infty}\\
 & \lesssim o \(1\) \big\|\overline{\bm{u}}^{\tau, \left(m\right)}\big\|_{\infty} + \big\|\widehat{\bm{u}}^{\tau, \left(m\right)}\big\|_{\infty}
   \leq o\(1\) \big\|\overline{\bm{u}}^{\tau, \left(m\right)}\big\|_{\infty} + \max \big\{ \sqrt{\mu / d}, \, \big\Vert \overline{\bm{u}}^{\tau}\big\Vert _{\infty} \big\},
\end{split}
\end{align*}
from which we can deduce that
\begin{align*}
\big\|\overline{\bm{u}}^{\tau, \left(m\right)}\big\|_{\infty} \lesssim \max \big\{ \sqrt{\mu / d}, \, \big\Vert \overline{\bm{u}}^{\tau}\big\Vert _{\infty} \big\}.
\end{align*}
 
As a consequence, one immediately obtains
\begin{align}
 \big\|\widehat{\bm{u}}^{\tau, \left(m\right)}-\overline{\bm{u}}^{\tau, \left(m\right)}\big\|_{2}\lesssim\Bigg\{\frac{\mu r \log d}{d\sqrt{p}}+\frac{\sigma}{\lambda_{\min}^\star }\sqrt{\frac{ rd\log^{2}d}{p}}\Bigg\}\max \big\{ \sqrt{\mu / d}, \, \big\Vert \overline{\bm{u}}^{\tau}\big\Vert _{\infty} \big\} .\label{u_hat_loo_u_tilde_loo_2_norm}
\end{align}

\item Combining \eqref{eq:defn-beta1-beta2-300}, \eqref{u_u_hat_loo_2_norm}
and \eqref{u_hat_loo_u_tilde_loo_2_norm} and the definition of $\EE_\loo$, we arrive at
\begin{align}
\big\|\overline{\bm{u}}^{\tau} & -\overline{\bm{u}}^{\tau, \left(m\right)}\big\|_{2} \lesssim \EE_\loo \sqrt{r \log d} \, \max \big\{ \sqrt{\mu / d}, \, \big\Vert \overline{\bm{u}}^{\tau}\big\Vert _{\infty} \big\}.
\label{u_loo_l2_loss}
\end{align}
Comparing this bound with the first claim of the lemma, we see that
the claim can be established as long as we can show that 
\begin{align}
  \big\Vert \overline{\bm{u}}^{\tau}\big\Vert _{\infty}\lesssim \sqrt{\mu / d}.\label{init_inf_norm_ub}
\end{align}
To justify this bound \eqref{init_inf_norm_ub}, we make use of Lemma~\ref{lemma:init_loo_entry_loss}
to derive that 
\begin{align*}
\left|\big(\overline{\bm{u}}^{\tau}-\overline{\bm{u}}_{1}^{\star}\big)_{m}\right| & \leq\left|\big(\overline{\bm{u}}^{\tau}-\overline{\bm{u}}^{\tau, \left(m\right)}\big)_{m}\right|+\left|\big(\overline{\bm{u}}^{\tau, \left(m\right)}-\overline{\bm{u}}_{1}^{\star}\big)_{m}\right| \\ 
& \leq\big\|\overline{\bm{u}}^{\tau}-\overline{\bm{u}}^{\tau, \left(m\right)}\big\|_{2}+\left|\big(\overline{\bm{u}}^{\tau, \left(m\right)}-\overline{\bm{u}}_{1}^{\star}\big)_{m}\right|\\
& \lesssim (\EE_\loo + \EE_\op ) \sqrt{r \log d} \, \max \big\{ \sqrt{\mu / d}, \, \big\Vert \overline{\bm{u}}^{\tau}\big\Vert _{\infty} \big\}
\end{align*}
for each $m\in[d]$. Maximizing over $m \in [d]$ gives that 
\begin{align}
\big\|\overline{\bm{u}}^{\tau}-\overline{\bm{u}}_{1}^{\star}\big\|_{\infty} & \lesssim \( \EE_\loo + \EE_\op \) \sqrt{r \log d} \, \max \big\{ \sqrt{\mu / d}, \, \big\Vert \overline{\bm{u}}^{\tau}\big\Vert _{\infty} \big\}  \label{init_inf_loss_temp}\\
& \ll  \max \big\{ \sqrt{\mu / d}, \, \big\Vert \overline{\bm{u}}^{\tau}\big\Vert _{\infty} \big\},
\end{align}
where we use the condition that $\( \EE_\loo + \EE_\op \) \sqrt{r \log d} \ll 1$. Apply the triangle inequality to yield
\begin{align*}
\big\|\overline{\bm{u}}^{\tau}\big\|_{\infty} & \leq\big\|\overline{\bm{u}}^{\tau}-\overline{\bm{u}}_{1}^{\star}\big\|_{\infty}+\big\|\overline{\bm{u}}_{1}^{\star}\big\|_{\infty}\leq o \(1\)  \big\Vert \overline{\bm{u}}^{\tau}\big\Vert _{\infty} +\sqrt{\mu / d}.
\end{align*}
These allow us to establish the claim \eqref{init_inf_norm_ub}, which
in turn finishes the proof for the first claim of this lemma.

\end{itemize}

\item
The second claim \eqref{claim:init_inf_loss} of this lemma follows immediately from \eqref{init_inf_norm_ub}
and \eqref{init_inf_loss_temp}.

\item 
It remains to prove the last claim \eqref{claim:init_loo_coeff_loss}. Recall the definition of $\lambda_\tau$ and $\lambda_\tau^{\m}$ in \eqref{def:lambda_tau}. We can decompose
\begin{align}
 \big\langle p^{-1}\bm{T}^{\m},(\overline{\bm{u}}^{\tau, \m})^{\otimes3}\big\rangle - \big\langle p^{-1}\bm{T},(\overline{\bm{u}}^{\tau})^{\otimes3}\big\rangle & =\underbrace{\big\langle p^{-1} \bm{T}^{\m} - p^{-1} \bm{T}, (\overline{\bm{u}}^{\tau, \m})^{\otimes3} \big\rangle}_{=:\,\beta_{1}} \nonumber \\
 & \quad+\underbrace{\big\langle p^{-1} \bm{T},(\overline{\bm{u}}^{\tau, \m})^{\otimes3} - (\overline{\bm{u}}^{\tau})^{\otimes3}\big\rangle}_{=:\,\beta_{2}} \label{eq:coeff-loo-decomp}
\end{align}
In what follows, we will control $\beta_1$ and $\beta_2$ seperately.

 For $\beta_1$, we note that all non-zero entries of $ \bm{T}^{\m} - \bm{T}$ are located in the $m$th slices, and are independent of $\overline{\bm{u}}^{\tau, \m}$. This type of quantities have appeared many times and we omit the detailed proof for conciseness. By the Bernstein inequality, one can show that with probability at least $1-O\( d^{-10} \)$,
\begin{align}
\Big| \big\langle p^{-1} \bm{T}^{\m} - p^{-1} \bm{T}, (\overline{\bm{u}}^{\tau, \m})^{\otimes3} \big\rangle \Big| \lesssim \sqrt{\frac{\mu r \log d}{d^2 p}} \sqrt{\frac{\mu}{d}} \, \lambda_{\max}^\star. \label{eq:coeff-loo-decomp-beta1}
\end{align}

Next, we turn to $\beta_2$. From our sample size and noise condition, Lemma~\ref{lemma:incoh} and Corollary~\ref{cor:bound-PE} demonstrates that with probability at least $1-O \( d^{-10} \)$,
\begin{align*}
\left\| p^{-1} \bm{T}\right\| \leq \left\| p^{-1} \bm{T} - \bm{T}^\star\right\| + \left\| \bm{T}^\star\right\| \leq \left\| p^{-1} \bm{T} - \bm{T}^\star\right\| + \left\| \bm{A}^\star\right\| \lesssim \lambda_{\max}^\star.
\end{align*}
where we use the fact that the tensor spectral norm is always less
than or equal to that of its matricization. By the definition of the operator norm, one has
\begin{align}
\left| \beta_2 \right| & \leq 3 \, \big|  \big\langle p^{-1} \bm{T}, (\overline{\bm{u}}^{\tau, \m})^{\otimes2} \otimes \big(\overline{\bm{u}}^{\tau, \m} - \overline{\bm{u}}^{\tau} \big) \big\rangle \big| + 3 \, \big| \big\langle p^{-1} \bm{T}, (\overline{\bm{u}}^{\tau, \m}) \otimes \big(\overline{\bm{u}}^{\tau, \m} - \overline{\bm{u}}^{\tau} \big)^{\otimes 2} \big\rangle \big| \nonumber \\
& \quad + \big| \big\langle p^{-1} \bm{T}, \big(\overline{\bm{u}}^{\tau, \m} - \overline{\bm{u}}^{\tau} \big)^{\otimes 3} \big\rangle \big|  \nonumber \\
& \lesssim \left\| p^{-1} \bm{T}\right\| \Big( \big\|\overline{\bm{u}}^{\tau, \m} - \overline{\bm{u}}^{\tau}\big\|_2 + \big\|\overline{\bm{u}}^{\tau, \m} - \overline{\bm{u}}^{\tau}\big\|_2^2 + \big\|\overline{\bm{u}}^{\tau, \m} - \overline{\bm{u}}^{\tau}\big\|_2^3 \Big) \nonumber \\
& \lesssim \EE_\loo \sqrt{\frac{\mu r \log d}{d}} \, \lambda_{\max}^\star \label{eq:coeff-loo-decomp-beta2}
\end{align}
where we use \eqref{u_loo_l2_loss} and \eqref{init_inf_norm_ub} in the last step and the fact that $\EE_\loo \sqrt{\mu r \log d / d} \ll 1$.

Combining \eqref{eq:coeff-loo-decomp-beta1} and \eqref{eq:coeff-loo-decomp-beta2} immediately establishes \eqref{claim:init_loo_coeff_loss}.

\end{itemize}

\subsection{Proof of Lemma~\ref{lemma:M_hat_M_loo_u_2_norm}}
\label{subsection:pf:M_hat_M_loo_u_2_norm}

Recalling the definitions of $\widehat{\bm{M}}^{\tau, \left(m\right)}$ and $\bm{M}^{\tau, \left(m\right)}$ in \eqref{eq:projected-matrix-M-tau_g} and \eqref{eq:projected-matrix-M-tau}, respectively, we observe that $\widehat{\bm{M}}^{\tau, \left(m\right)}-\bm{M}^{\tau, \left(m\right)}$
is independent of $\overline{\bm{u}}^{\tau, \left(m\right)}$ conditional
on $\mathcal{P}_{\Omega_{-m}}\left(\bm{E}\right)$ and $\bm{g}$. 
\begin{itemize}
\item The $m$-th entry of $\big(\widehat{\bm{M}}^{\tau, \left(m\right)}-\bm{M}^{\tau, \left(m\right)}\big)\overline{\bm{u}}^{\tau, \left(m\right)}$
can be written as
\begin{align}
 & \big(\widehat{\bm{M}}^{\tau, \left(m\right)}-\bm{M}^{\tau, \left(m\right)}\big)_{m,:}\overline{\bm{u}}^{\tau, \left(m\right)}=\big(p^{-1}\big(\bm{T}-\bm{T}^{(m)}\big)\times_{3}\bm{\theta}^{\tau,(m)}\big)_{m,:}\overline{\bm{u}}^{\tau, \left(m\right)}\nonumber \\
 & \qquad=\underbrace{\sum_{j,k\in[d]}T_{m,j,k}^{\star}\overline{u}_{j}^{\tau, \left(m\right)}\theta_{k}^{\tau, \left(m\right)}(p^{-1}\chi_{m,j,k}-1)}_{=: \alpha_1}+\underbrace{\sum_{j,k\in[d]} \overline{u}_{j}^{\tau, \left(m\right)}\theta_{k}^{\tau, \left(m\right)} p^{-1} E_{m,j,k}\chi_{m,j,k}}_{=: \alpha_2}.\label{eq:M1m-split-100}
\end{align}

\begin{itemize}
\item For the first term $\alpha_1$, it is easily seen from (\ref{eq:theta_loo_inf_norm}) and incoherence conditions
that
\begin{align*}
L_{1}:=\max_{j,k\in[d]}\left|T_{m,j,k}^{\star}\overline{u}_{j}^{\tau, \left(m\right)}\theta_{k}^{\tau, \left(m\right)}(p^{-1}\chi_{m,j,k}-1)\right| & \leq p^{-1}\left\Vert \bm{T}^{\star}\right\Vert _{\infty}\big\|\overline{\bm{u}}^{\tau, \left(m\right)}\big\|_{\infty}\big\|\bm{\theta}^{\tau, \left(m\right)}\big\|_{\infty}\\
 & \lesssim\frac{\mu r\lambda_{\max}^{\star}\sqrt{\log d}}{d^{2}p}\big\|\overline{\bm{u}}^{\tau, \left(m\right)}\big\|_{\infty};
\end{align*}
and
\begin{align*}
V_{1}:=\sum_{j,k\in[d]}T_{m,j,k}^{\star2}\big(\overline{u}_{j}^{\tau, \left(m\right)}\big)^{2}\big(\theta_{k}^{\tau, \left(m\right)}\big)^{2}\mathbb{E}\big[(p^{-1}\chi_{m,j,k}-1)^{2}\big] & \leq p^{-1}\left\Vert \bm{A}^{\star}\right\Vert _{2,\infty}^{2}\big\|\overline{\bm{u}}^{\tau, \left(m\right)}\big\|_{\infty}^{2}\big\|\bm{\theta}^{\tau, \left(m\right)}\big\|_{\infty}^{2}\\
 & \lesssim\frac{\mu^{2}r^{2}\lambda_{\max}^{\star2}\log d}{d^{2}p}\big\|\overline{\bm{u}}^{\tau, \left(m\right)}\big\|_{\infty}^{2}.
\end{align*}
Apply the Bernstein inequality to yield that with probability at least
$1-O\left(d^{-11}\right)$,
\begin{align*}
\sum_{j,k\in[d]}T_{m,j,k}^{\star}\overline{u}_{j}^{\tau, \left(m\right)}\theta_{k}^{\tau, \left(m\right)}(p^{-1}\chi_{m,j,k}-1)\lesssim\sqrt{V_{1}\log d}+L_{1}\log d\lesssim\frac{\mu r\lambda_{\max}^{\star}\log d}{d\sqrt{p}}\big\|\overline{\bm{u}}^{\tau, \left(m\right)}\big\|_{\infty},
\end{align*}
where the last inequality holds as long as
$p\gg d^{-2}\log^{2}d$. 
\item Regarding $\alpha_2$ (cf.~\eqref{eq:M1m-split-100}),
it is straightforward to compute that
\begin{align*}
L_{2} & :=\big\Vert p^{-1}\overline{u}_{j}^{\tau, \left(m\right)}\theta_{k}^{\tau, \left(m\right)}E_{m,j,k}\chi_{m,j,k}\big\Vert _{\psi_{1}}\leq\frac{\sigma}{p}\big\|\overline{\bm{u}}^{\tau, \left(m\right)}\big\|_{\infty}\big\|\bm{\theta}^{\tau, \left(m\right)}\big\|_{\infty} \\
& \, \lesssim\frac{\sigma}{p}\sqrt{\frac{\mu r\log d}{d}} \, \big\|\overline{\bm{u}}^{\tau, \left(m\right)}\big\|_{\infty}, 
\end{align*}
with $\|\cdot\|_{\psi_{1}}$ denoting the sub-exponential norm, and
\begin{align*}
V_{2} & :=\mathbb{E}\Big[\big( \sum\nolimits_{j,k\in[d]}\overline{u}_{j}^{\tau, \left(m\right)}\theta_{k}^{\tau, \left(m\right)} p^{-1}E_{m,j,k}\chi_{m,j,k}\big)^{2}\Big] \\
& \, =\sum_{j,k\in[d]}\big(\overline{u}_{j}^{\tau, \left(m\right)}\big)^{2}\big(\theta_{k}^{\tau, \left(m\right)}\big)^{2}\mathbb{E}\big[p^{-2}E_{m,j,k}^{2}\chi_{m,j,k}\big] \\
 & \, \leq\frac{\sigma^{2}}{p}\,\big\|\overline{\bm{u}}^{\tau, \left(m\right)}\big\|_{2}^{2}\big\|\bm{\theta}^{\tau, \left(m\right)}\big\|_{2}^{2}\lesssim\frac{\sigma^{2}rd\log d}{p} \, \big\|\overline{\bm{u}}^{\tau, \left(m\right)}\big\|_{\infty}^{2}.
\end{align*}
Then the Bernstein inequality reveals that with probability at least
$1-O\left(d^{-11}\right)$,
\begin{align*}
\Big|p^{-1}\sum_{j,k\in[d]}\overline{u}_{j}^{\tau, \left(m\right)}\theta_{k}^{\tau, \left(m\right)}E_{m,j,k}\chi_{m,j,k}\Big|\lesssim L_{2}\log^{2}d+\sqrt{V_{2}\log d}\leq\sigma\sqrt{\frac{rd\log^{2}d}{p}}\big\|\overline{\bm{u}}^{\tau, \left(m\right)}\big\|_{\infty},
\end{align*}
where the last inequality follows from our sample size condition.
\item Substituting these into \eqref{eq:M1m-split-100}, we arrive at
\begin{align}
\Big|\big(\widehat{\bm{M}}^{\tau, \left(m\right)}-\bm{M}^{\tau, \left(m\right)}\big)_{m,:}\overline{\bm{u}}^{\tau, \left(m\right)}\Big|\lesssim\Bigg\{\frac{\mu r\lambda_{\max}^{\star}\log d}{d\sqrt{p}}+\sigma\sqrt{\frac{rd\log^{2}d}{p}}\Bigg\}\big\|\overline{\bm{u}}^{\tau, \left(m\right)}\big\|_{\infty}.\label{M_loo_tilde_v_term1}
\end{align}
\end{itemize}
\item For the remaining entries of $\big(\widehat{\bm{M}}^{\tau, \left(m\right)}-\bm{M}^{\tau, \left(m\right)}\big)\overline{\bm{u}}^{\tau, \left(m\right)}$,
we have
\begin{align*}
 & \big(\widehat{\bm{M}}^{\tau, \left(m\right)}-\bm{M}^{\tau, \left(m\right)}\big)_{i,:}\overline{\bm{u}}^{\tau, \left(m\right)}\\
 & \qquad=\theta_{m}^{\tau, \left(m\right)} \Big( \sum\nolimits_{j:j\neq m} \overline{u}_{j}^{\tau, \left(m\right)} \big( T_{i,j,m}^{\star}(p^{-1}\chi_{i,j,m}-1)+  p^{-1}E_{i,j,m}\chi_{i,j,m} \big) \Big) \\
 & \qquad\qquad+\overline{v}_{m}^{\tau, \left(m\right)} \Big( \sum\nolimits_{k\in[d]}\theta_{k}^{\tau, \left(m\right)} \big( T_{i,m,k}^{\star}(p^{-1}\chi_{i,m,k}-1)+p^{-1}E_{i,m,k}\chi_{i,m,k} \big) \Big)
\end{align*}
for any $i\neq m$. From Lemma~\ref{lemma:sum_jk_square_cont_ineq}
and \eqref{eq:theta_loo_inf_norm}, we have, with probability at least $1-O\left(d^{-11}\right)$,
that
\begin{align*}
\big(\theta_{m}^{\tau, \left(m\right)}\big)^{2} & \sum_{i:i\neq m}\Big(\sum_{j:j\neq m}T_{i,j,m}^{\star}\overline{u}_{j}^{\tau, \left(m\right)}(p^{-1}\chi_{i,j,m}-1)\Big)^{2}+\big(\overline{v}_{m}^{\tau, \left(m\right)}\big)^{2}\sum_{i:i\neq m}\Big(\sum_{k\in[d]}T_{i,m,k}^{\star}\theta_{k}^{\tau, \left(m\right)}(p^{-1}\chi_{i,m,k}-1)\Big)^{2}\\
 & \lesssim\frac{\mu r\lambda_{\max}^{\star2}}{dp}\big\|\bm{\theta}^{\tau, \left(m\right)}\big\|_{\infty}^{2}\big\|\overline{\bm{u}}^{\tau, \left(m\right)}\big\|_{\infty}^{2}\\
 & \lesssim\frac{\mu^{2}r^{2}\lambda_{\max}^{\star2}\log d}{d^{2}p}\big\|\overline{\bm{u}}^{\tau, \left(m\right)}\big\|_{\infty}^{2}.
\end{align*}
Combined with \eqref{eq:theta_loo_2_norm} and \eqref{eq:theta_loo_inf_norm}, Lemma~\ref{lemma:sum_jk_square_cont_ineq_subg} reveals that with
probability at least $1-O\left(d^{-11}\right)$,
\begin{align*}
\big(\theta_{m}^{\tau, \left(m\right)}\big)^{2} & \sum_{i:i\neq m}\Big(p^{-1}\sum_{j:j\neq m}\overline{u}_{j}^{\tau, \left(m\right)}E_{i,j,m}\chi_{i,j,m}\Big)^{2}+\big(\overline{v}_{m}^{\tau, \left(m\right)}\big)^{2}\sum_{i:i\neq m}\Big(p^{-1}\sum_{k\in[d]}\theta_{k}^{\tau, \left(m\right)}E_{i,m,k}\chi_{i,m,k}\Big)^{2}\\
 & \lesssim\frac{\sigma^{2} d}{p}\big\|\overline{\bm{u}}^{\tau, \left(m\right)}\big\|_{2}^{2}\big\|\bm{\theta}^{\tau, \left(m\right)}\big\|_{\infty}^{2}
 +\frac{\sigma^{2} d}{p}\big\|\bm{\theta}^{\tau, \left(m\right)}\big\|_{2}^{2}\big\|\overline{\bm{u}}^{\tau, \left(m\right)}\big\|_{\infty}^{2}
 +\frac{\sigma^{2} \log^5 d}{p^2} \big\|\overline{\bm{u}}^{\tau, \left(m\right)}\big\|_{\infty}^{2}\big\|\bm{\theta}^{\tau, \left(m\right)}\big\|_{\infty}^{2}\\
 & \lesssim\frac{\sigma^{2} rd\log d}{p}\big\|\overline{\bm{u}}^{\tau, \left(m\right)}\big\|_{\infty}^{2},
\end{align*}
which implies that
\begin{align}
\sum_{i:i\neq m}\Big(\big(\widehat{\bm{M}}^{\tau, \left(m\right)}-\bm{M}^{\tau, \left(m\right)}\big)_{i,:}\overline{\bm{u}}^{\tau, \left(m\right)}\Big)^{2} & \lesssim\left\{ \frac{\mu^{2}r^{2}\lambda_{\max}^{\star2}\log d}{d^{2}p}+\frac{\sigma^{2} rd\log d}{p}\right\} \big\|\overline{\bm{u}}^{\tau, \left(m\right)}\big\|_{\infty}^{2}.\label{M_loo_tilde_v_term2}
\end{align}
\item Therefore, combine \eqref{M_loo_tilde_v_term1} and \eqref{M_loo_tilde_v_term2}
to obtain that
\begin{align*}
\big\|\big(\widehat{\bm{M}}^{\tau, \left(m\right)}-\bm{M}^{\tau, \left(m\right)}\big)\overline{\bm{u}}^{\tau, \left(m\right)}\big\|_{2}\lesssim\Bigg\{\frac{\mu r\lambda_{\max}^{\star}\log d}{d\sqrt{p}}+\sigma\sqrt{\frac{ rd\log^{2}d}{p}}\Bigg\}\big\|\overline{\bm{u}}^{\tau, \left(m\right)}\big\|_{\infty}.
\end{align*}
\end{itemize}

\subsection{Proof of Lemma~\ref{lemma:init_coeff}}

\label{subsec:pf:init_coeff}

By definition, the only possible difference between $\overline{\bm{u}}^{\tau}$ and
$\bm{\nu}^{\tau}$ lies in how their global signs are chosen. To show that $\overline{\bm{u}}^{\tau}=\bm{\nu}^{\tau}$, 
we first claim for the moment that
\begin{align}
\label{claim:coeff}
\left| \big\langle p^{-1}\bm{T},(\overline{\bm{u}}^{\tau})^{\otimes3}\big\rangle-\lambda_{1}^{\star} \right| \lesssim \EE_\proj  \cdot \lambda_{1}^{\star},
\end{align}
where $\EE_\proj$ is defined in (\ref{claim:init_2_loss}). Given that $\EE_\proj \ll 1$ under our sample size, noise and rank condition, this immediately implies that $\big\langle p^{-1}\bm{T},(\overline{\bm{u}}^{\tau})^{\otimes3} \big\rangle > 0$. Consequently, by construction, the global signs of $\overline{\bm{u}}^{\tau}$ and
$\bm{\nu}^{\tau}$ coincide. Moreover, from \eqref{claim:init_loo_coeff_loss} and the condition that $\EE_\loo \sqrt{\mu r \log d / d} \ll 1$, one also knows that $\big\langle p^{-1}\bm{T}^{\m},(\overline{\bm{u}}^{\tau, \m})^{\otimes3} \big\rangle > 0$ and hence the global signs of $\overline{\bm{u}}^{\tau, \m}$ and
$\bm{\nu}^{\tau, \m}$ also coincide. 

In addition, recall that $\lambda_{\tau}=\big\langle p^{-1}\bm{T},({\bm{\nu}}^{\tau})^{\otimes3}\big\rangle$. One thus has
\begin{align}
	\lambda_{\tau}=\big\langle p^{-1}\bm{T},({\bm{\nu}}^{\tau})^{\otimes3}\big\rangle = \big\langle p^{-1}\bm{T},(\overline{\bm{u}}^{\tau})^{\otimes3}\big\rangle,
\end{align}
which taken collectively with \eqref{claim:coeff} justifies \eqref{claim:init_coeff_loss}.

The rest of the proof then comes down to establishing the claim (\ref{claim:coeff}). Towards this, we first decompose 
\begin{align}
\big\langle p^{-1}\bm{T},(\overline{\bm{u}}^{\tau})^{\otimes3}\big\rangle-\lambda_{1}^{\star} & =\underbrace{\big\langle p^{-1}\bm{T},(\overline{\bm{u}}^{\tau})^{\otimes3}\big\rangle-\big\langle\bm{T}^{\star},(\overline{\bm{u}}^{\tau})^{\otimes3}\big\rangle}_{=:\,\beta_{1}} \nonumber \\
 & \quad+\underbrace{\big\langle\bm{T}^{\star},(\overline{\bm{u}}^{\tau})^{\otimes3}\big\rangle-\big\langle\bm{T}^{\star},(\overline{\bm{u}}_{1}^{\star})^{\otimes3}\big\rangle}_{=:\,\beta_{2}}+\underbrace{\big\langle\bm{T}^{\star},(\overline{\bm{u}}_{1}^{\star})^{\otimes3}\big\rangle-\lambda_{1}^{\star}}_{=:\,\beta_{3}}. \label{eq:coeff-decomp}
\end{align}

In what follows, we shall upper bound these three terms separately.

\subsubsection{Controlling $\beta_{1}$}

Let us start with $\beta_{1}$, For simplicity of notation, let us
define $\bm{\Delta}_{1}:=\overline{\bm{u}}^{\tau}-\overline{\bm{u}}_{1}^{\star}$. By construction, $\bm{T}$ and $\bm{T}^\star$ are symmetric.
We then can expand
\begin{align}
  \big\langle p^{-1}\bm{T}-\bm{T}^{\star},(\overline{\bm{u}}^{\tau})^{\otimes3}\big\rangle & =\big\langle p^{-1}\bm{T}-\bm{T}^{\star},(\overline{\bm{u}}_{1}^{\star}+\bm{\Delta}_{1})^{\otimes3}\big\rangle \nonumber \\
 &= \big\langle p^{-1}\bm{T}-\bm{T}^{\star},(\overline{\bm{u}}_{1}^{\star})^{\otimes3}\big\rangle
 + 3 \, \big\langle p^{-1}\bm{T}-\bm{T}^{\star},\bm{\Delta}_{1}\otimes(\overline{\bm{u}}_{1}^{\star})^{\otimes2}\big\rangle
  + 3\, \big\langle p^{-1}\bm{T}-\bm{T}^{\star},\bm{\Delta}_{1}^{\otimes2}\otimes\overline{\bm{u}}_{1}^{\star}\big\rangle \nonumber \\
 & \quad +\big\langle p^{-1}\bm{T}-\bm{T}^{\star},\bm{\Delta}_{1}^{\otimes3}\big\rangle. \label{eq:T-Ttrue-u_inprod}
\end{align}

 We first look at the first term of (\ref{eq:T-Ttrue-u_inprod}) which only consists of $\overline{\bm{u}}_{1}^{\star}$. 
As shown in (\ref{eq:T_loss_utrue_op_norm}), with probability at least $1-O\left(d^{-11}\right)$, one has  
\begin{align*}
\left\Vert \left(p^{-1}\bm{T}-\bm{T}^{\star}\right)\times_{3}\overline{\bm{u}}_{1}^{\star}\right\Vert \lesssim\frac{\mu\sqrt{r}\,\lambda_{\max}^{\star}\sqrt{\log d}}{d\sqrt{p}}+\sigma\sqrt{\frac{d\log d}{p}} \leq \EE_\proj \lambda_{\min}^\star.
\end{align*}
It follows that
\begin{align*}
\left|\big\langle p^{-1}\bm{T}-\bm{T}^{\star},(\overline{\bm{u}}_{1}^{\star})^{\otimes3}\big\rangle\right|\leq\left\Vert \left(p^{-1}\bm{T}-\bm{T}^{\star}\right)\times_{3}\overline{\bm{u}}_{1}^{\star}\right\Vert \left\Vert \overline{\bm{u}}_{1}^{\star}\right\Vert _{2}^{2}\lesssim \EE_\proj \lambda_{\min}^\star,
\end{align*}
 where we recall the definition of $\EE_\proj$ in \eqref{claim:init_2_loss}.
 As for the term linear in $\bm{\Delta}_1$, by Lemma~\ref{lemma:init_2_loss}, we know that $\left\Vert \bm{\Delta}_{1}\right\Vert _{2} \lesssim \EE_\proj$. As a result, one
has
\begin{align*}
\left|\big\langle p^{-1}\bm{T}-\bm{T}^{\star},\bm{\Delta}_{1}\otimes(\overline{\bm{u}}_{1}^{\star})^{\otimes2}\big\rangle\right| & \leq\left\Vert \left(p^{-1}\bm{T}-\bm{T}^{\star}\right)\times_{3}\overline{\bm{u}}_{1}^{\star}\right\Vert \left\Vert \overline{\bm{u}}_{1}^{\star}\right\Vert _{2}\left\Vert \bm{\Delta}_{1}\right\Vert _{2} \lesssim \EE_\proj^2 \lambda_{\min}^\star.
\end{align*}
 
 We then turn to the quadratic terms in $\bm{\Delta}_1$. Similar to the above arguments, one can deduce that
\begin{align*}
\left|\big\langle p^{-1}\bm{T}-\bm{T}^{\star},\bm{\Delta}_{1}^{\otimes2}\otimes\overline{\bm{u}}_{1}^{\star}\big\rangle\right|\leq\left\Vert \left(p^{-1}\bm{T}-\bm{T}^{\star}\right)\times_{3}\overline{\bm{u}}_{1}^{\star}\right\Vert \left\Vert \bm{\Delta}_{1}\right\Vert _{2}^{2} \lesssim \EE_\proj^3 \lambda_{\min}^\star.
\end{align*}
 
Finally, we can simply upper bound the last term in (\ref{eq:T_loss_utrue_op_norm}) by
\begin{align*}
\left|\big\langle p^{-1}\bm{T}-\bm{T}^{\star},\bm{\Delta}_{1}^{\otimes3}\big\rangle\right|\leq\left\Vert p^{-1}\bm{T}-\bm{T}^{\star}\right\Vert \left\Vert \bm{\Delta}_{1}\right\Vert _{2}^{3} \lesssim \EE_\op \EE_\proj^3 \lambda_{\min}^\star \ll \EE_\proj^3 \lambda_{\min}^\star,
\end{align*}
where the last step is due to the fact that $\EE_\op \ll 1$. By our sample size, noise and rank conditions, one has $\EE_\proj \ll 1$. Putting these bounds together reveals that
\begin{equation}
\left|\big\langle p^{-1}\bm{T}-\bm{T}^{\star},(\overline{\bm{u}}^{\tau})^{\otimes3}\big\rangle\right|\lesssim \EE_\proj \lambda_{\min}^\star. \label{eq:u_coef_alpha1}
\end{equation}

\subsubsection{Controlling $\beta_{2}$}

Recall the definition of $\beta_2$ in (\ref{eq:coeff-decomp}) and $\bm{\Delta}_1 = \overline{\bm{u}}^{\tau} - \overline{\bm{u}}_{1}^{\star}$. We can further decompose
\begin{align*}
\big\langle\bm{T}^{\star},(\overline{\bm{u}}^{\tau})^{\otimes3}-(\overline{\bm{u}}_{1}^{\star})^{\otimes3}\big\rangle & = 3\, \big\langle\bm{T}^{\star},(\overline{\bm{u}}_{1}^{\star})^{\otimes2}\otimes\bm{\Delta}_{1}\big\rangle
+ 3\, \big\langle\bm{T}^{\star},\bm{\Delta}_{1}^{\otimes2}\otimes\overline{\bm{u}}_{1}^{\star}\big\rangle +\big\langle\bm{T}^{\star},\bm{\Delta}_{1}^{\otimes3}\big\rangle.
\end{align*}

We first consider the first term which is linear in $\bm{\Delta}_{1}$. Since $\bm{T}^{\star}$ is a symmetric tensor, we have
\begin{align*}
\bm{T}^{\star}\times_{1}\overline{\bm{u}}_{1}^{\star}=\bm{T}^{\star}\times_{2}\overline{\bm{u}}_{1}^{\star}=\bm{T}^{\star}\times_{3}\overline{\bm{u}}_{1}^{\star}=\sum\nolimits_{s\in\left[r\right]}\lambda_{1}^{\star}\left\langle\overline{\bm{u}}_{s}^{\star},\overline{\bm{u}}_{1}^{\star}\right\rangle\overline{\bm{u}}_{s}^{\star}\overline{\bm{u}}_{s}^{\star\top}.
\end{align*}
By Lemma~\ref{lemma:incoh}, one has
\begin{align*}
\left\Vert \bm{T}^{\star}\times_{1}\overline{\bm{u}}_{1}^{\star}\right\Vert \leq\max_{1\leq s\leq r}\left|\lambda_{1}^{\star}\big\langle\overline{\bm{u}}_{s}^{\star},\overline{\bm{u}}_{1}^{\star}\big\rangle\right|\big\|\overline{\bm{U}}^{\star}\big\|^{2} \leq (\lambda_{1}^\star + \lambda_{\max}^\star \sqrt{\mu / d})  \lesssim \lambda_{1}^{\star}, 
\end{align*}
which arises from $\max_{s\neq i}\left|\big\langle\overline{\bm{u}}_{s}^{\star},\overline{\bm{u}}_{1}^{\star}\big\rangle\right|\leq \sqrt{\mu / d} $
and $\big\|\overline{\bm{U}}^{\star}\big\|\lesssim1$ as long as $r\sqrt{\mu/d}\ll1$.
As a result, one has
\begin{align*}
\left|\big\langle\bm{T}^{\star},(\overline{\bm{u}}_{1}^{\star})^{\otimes2}\otimes\bm{\Delta}_{1}\big\rangle\right|\leq\left\Vert \bm{T}^{\star}\times_{1}\overline{\bm{u}}_{1}^{\star}\right\Vert \left\Vert \overline{\bm{u}}_{1}^{\star}\right\Vert _{2}\left\Vert \bm{\Delta}_{1}\right\Vert _{2}\lesssim\lambda_{1}^{\star}\left\Vert \bm{\Delta}_{1}\right\Vert _{2} \lesssim \EE_\proj \lambda_{1}^{\star}.
\end{align*}

In a similar manner, we also know that
\begin{align*}
\left|\big\langle\bm{T}^{\star},\bm{\Delta}_{1}^{\otimes2}\otimes\overline{\bm{u}}_{1}^{\star}\big\rangle\right|\leq\left\Vert \bm{T}^{\star}\times_{1}\overline{\bm{u}}_{1}^{\star}\right\Vert \left\Vert \bm{\Delta}_{1}\right\Vert _{2}^{2}\lesssim \EE_\proj^2 \lambda_{1}^{\star}. 
\end{align*}

Finally, using the fact that the tensor spectral norm is always less
than or equal to that of its matricization, we find that
\begin{align*}
\left|\big\langle\bm{T}^{\star},\bm{\Delta}_{1}^{\otimes3}\big\rangle\right|\leq\left\Vert \bm{T}^{\star}\right\Vert \left\Vert \bm{\Delta}_{1}\right\Vert _{2}^{3}\leq\left\Vert \bm{A}^{\star}\right\Vert \left\Vert \bm{\Delta}_{1}\right\Vert _{2}^{3} \lesssim \EE_\proj^3 \lambda_{1}^\star.
\end{align*}

Combining this with the fact that $\EE_\proj \ll 1$, we conclude that
\begin{equation}
\left|\big\langle\bm{T}^{\star},(\overline{\bm{u}}^{\tau})^{\otimes3}-(\overline{\bm{u}}_{1}^{\star})^{\otimes3}\big\rangle\right|\lesssim \EE_\proj \lambda_{1}^\star. \label{eq:u_coef_alpha2}
\end{equation}

\subsubsection{Controlling $\beta_{3}$}

It remains to control $\beta_3$. Straightforward calculation reveals that
\begin{align*}
\big\langle\bm{T}^{\star},(\overline{\bm{u}}_{1}^{\star})^{\otimes3}\big\rangle & =\sum_{s\in\left[r\right]}\lambda_{s}^{\star}\big\langle(\overline{\bm{u}}_{s}^{\star})^{\otimes3},(\overline{\bm{u}}_{1}^{\star})^{\otimes3}\big\rangle =\lambda_{1}^{\star}\left\Vert \overline{\bm{u}}_{1}^{\star}\right\Vert _{2}^{6}
 +\sum_{s:s\neq 1}\lambda_{s}^{\star}\big\langle\overline{\bm{u}}_{s}^{\star},\overline{\bm{u}}_{1}^{\star}\big\rangle^{6}.
\end{align*}
By the incoherence conditions, we can upper bound
\begin{align}
\left|\big\langle\bm{T}^{\star},(\overline{\bm{u}}_{1}^{\star})^{\otimes3}\big\rangle-\lambda_{1}^{\star}\right| & =\sum_{s:s\neq 1}\lambda_{1}^{\star}\big\langle\overline{\bm{u}}_{s}^{\star},\overline{\bm{u}}_{1}^{\star}\big\rangle^{6}\leq r \, \max_{s:s\neq 1}\left|\big\langle\overline{\bm{u}}_{s}^{\star},\overline{\bm{u}}_{1}^{\star}\big\rangle\right|^{6} \lambda_{\max}^{\star} \leq\frac{\mu^{3}r\lambda_{\max}^{\star}}{d^{3}}.\label{eq:u_coef_alpha3}
\end{align}

\subsubsection{Combining $\beta_1$, $\beta_2$ and $\beta_3$}

Putting (\ref{eq:u_coef_alpha1}), (\ref{eq:u_coef_alpha2}) and (\ref{eq:u_coef_alpha3})
together, we find that
\begin{align*}
\left|\big\langle p^{-1}\bm{T},(\overline{\bm{u}}^{\tau})^{\otimes3}\big\rangle-\lambda_{1}^{\star}\right| & \lesssim \EE_\proj \lambda_{1}^\star +\frac{\mu r\lambda_{\max}^{\star}}{d^{3}} \leq \( \EE_\proj + \frac{\mu^3 r }{d^3}  \) \lambda_{1}^\star \asymp \EE_\proj \lambda_{1}^\star ,
\end{align*}
where the last step follows from the condition  $\mu \leq d$, $r\ll\sqrt{d/\mu}$ and the definition $\EE_\proj \geq  \sqrt{\mu r / d}$ (cf.~(\ref{claim:init_2_loss})).

\subsection{Proof of Lemma~\ref{lemma:init_selection}}
\label{subsec:pf:init_selection}

We first show that for each $i \in \[ r \]$, $\big( \bm{w}_i, \lambda_{i} \big)$ and $\big( \bm{w}_i^{\m}, \lambda_{i}^{\m} \big)$ (returned by $\Call{Prune}$ in Algorithm~\ref{alg:localization} and Algorithm~\ref{alg:localization_loo} respectively) satisfy \eqref{claim:init_loo_entry_loss}, \eqref{claim:init_loo_l2_loss} and  \eqref{claim:init_loo_coeff_loss}; in other words, we want to show that they correspond to the same index $\tau \in \[ L \]$ (and are hence produced using  the same Gaussian random vector $\bm{g}^\tau$).
The proof idea is this: given that the proposed algorithms select the pair with the largest spectral gap in each round of $\Call{Prune}$, it suffices to ensure that 
there is sufficient separation between the largest and the second largest spectral gaps (so that both algorithms can identify the same $\tau$).

By Lemma~\ref{lemma:init_spectra} and union bounds, we know that with probability at least $1-\delta$, for each $i \in \[ r \]$, 
\begin{align}
	\Delta^{\( 1 \)}_i - \Delta^{\( 2 \)}_i \gtrsim \lambda_{\min}^\star / \big( r \sqrt{\log d} \big),
\end{align}
where we recall that $\Delta_i^\tau := \gamma_{i}^{\star\tau}-\max_{j:j\neq i}\big| \gamma_{j}^{\star\tau}\big|$, and $\Delta_i^{\( 1 \)} \geq \Delta_i^{\( 2 \)} \geq \dots \geq \Delta_i^{\( L \)}$ denote the order statistics of $\big\{ \Delta_i^\tau \big\}_{\tau = 1}^L$ in descending order. As shown in the proof of Lemma~\ref{lemma:init_2_loss}, the spectral gap of $\bm{M}^{\tau}$ is well approximated by $\max_i \Delta_i^\tau$, namely, 
\begin{align*}
\Big| \max_{1\leq i \leq r} \Delta_i^\tau - \big( \sigma_1 \( \bm{M}^\tau \) - \sigma_2 \( \bm{M}^{\tau} \) \big)  \Big| \lesssim \big\| \bm{M}^\tau - \bm{M}^{\tau \star} \big\| \ll \frac{\lambda_{\min}^\star}{r \sqrt{\log d}}
\end{align*}
under our sample size, noise and rank conditions \eqref{eq:init_cond}. Moreover, from Lemma~\ref{lemma:M_loo_op_loss}, we see that $\bm{M}^{\tau}$ and $\bm{M}^{\tau, \m}$ are extremely close in terms of the spectral norm, i.e.
\begin{align*}
 \big\| \bm{M}^\tau - \bm{M}^{\tau, \m } \big\| \ll \frac{\lambda_{\min}^\star}{r \sqrt{\log d}}.
\end{align*}
This implies that the perturbation incurred by the leave-out-one procedure is relatively small compared to the difference between the largest and the second largest spectral gaps of $\bm{M}^\tau$. Consequently, the leave-one-out estimates $\big\{ \big( \bm{w}_i^{\m}, \lambda_{i}^{\m} \big) \big\}_{i=1}^r$ returned by Algorithm~\ref{alg:localization_loo} and the true estimates $\big\{ \big( \bm{w}_i, \lambda_{i} \big) \big\}_{i=1}^r$ should correspond to the same trials and should be generated by the same set of Gaussian random vectors. As a result, they obey \eqref{eq:thm:init_loo_loss}, \eqref{eq:thm:init_loo_coef_loss} and \eqref{eq:thm:init_loo_entry_loss} for all $1 \leq m \leq d$.

From the discussion above, we also know that as long as $\sigma_1 \( \bm{M}^\tau \) - \sigma_2 \( \bm{M}^{\tau} \) \gtrsim \lambda_{\min}^\star$, one has $\left\| \bm{\nu}^{\tau} - \overline{\bm{u}}_i^\star \right\|_2 \lesssim \EE_\proj$ for some $i \in \[ r \]$. This is an immediate consequence of Lemma~\ref{lemma:init_2_loss} and the fact that the spectral gap of $\bm{M}^{\tau}$ and $\max_i \Delta_i^\tau$ are extremely close.

It remains to show that our pruning procedure can return estimates of tensor factors without duplicates.
Suppose that there exist $1\leq \tau_1 \neq \tau_2 \leq L$ such that $\left\| \bm{\nu}^{\tau_1} - \overline{\bm{u}}_i^\star \right\|_2 \lesssim \EE_\proj$ and $\left\| \bm{\nu}^{\tau_2} - \overline{\bm{u}}_i^\star \right\|_2 \lesssim \EE_\proj$ for some $i \in \[ r\]$. By the triangle inequality, one has
\begin{align*}
\left| \left\langle \bm{\nu}^{\tau_1}, \bm{\nu}^{\tau_2} \right\rangle \right| &= \left| \norm{ \overline{\bm{u}}_i^\star }_2^2 +  \left\langle \bm{\nu}^{\tau_1} - \overline{\bm{u}}_i^\star , \overline{\bm{u}}_i^\star \right\rangle + \left\langle \overline{\bm{u}}_i^\star, \bm{\nu}^{\tau_2} - \overline{\bm{u}}_i^\star \right\rangle + \left\langle \bm{\nu}^{\tau_1} - \overline{\bm{u}}_i^\star, \bm{\nu}^{\tau_2} - \overline{\bm{u}}_i^\star \right\rangle \right|  \\
& \geq 1- \norm{ \bm{\nu}^{\tau_1} - \overline{\bm{u}}_i^\star}_2 - \norm{ \bm{\nu}^{\tau_2} - \overline{\bm{u}}_i^\star}_2 - \norm{ \bm{\nu}^{\tau_1} - \overline{\bm{u}}_i^\star}_2  \norm{ \bm{\nu}^{\tau_2} - \overline{\bm{u}}_i^\star}_2 \\
& \geq 1- 2 \EE_\proj - \EE_\proj^2 \geq 1- 3 \EE_\proj,
\end{align*}
provided that $\EE_\proj \ll 1$. In addition, for any $j \neq i, j \in \[ r \]$, we know that exists some $1 \leq \tau_3 \leq L$ such that
\begin{align*}
\left\| \bm{\nu}^{\tau_3} - \overline{\bm{u}}_j^\star \right\|_2 \lesssim \EE_\proj.
\end{align*}
Recall our incoherence condition in (\ref{asmp_corr}). It is easy to see that
\begin{align*}
\left| \left\langle \bm{\nu}^{\tau_1}, \bm{\nu}^{\tau_3} \right\rangle \right| &= \left| \left\langle \overline{\bm{u}}_i^\star , \overline{\bm{u}}_j^\star \right\rangle +  \left\langle \bm{\nu}^{\tau_1} - \overline{\bm{u}}_i^\star , \overline{\bm{u}}_i^\star \right\rangle + \left\langle \overline{\bm{u}}_i^\star, \bm{\nu}^{\tau_3} - \overline{\bm{u}}_i^\star \right\rangle + \left\langle \bm{\nu}^{\tau_1} - \overline{\bm{u}}_i^\star, \bm{\nu}^{\tau_3} - \overline{\bm{u}}_j^\star \right\rangle \right|  \\
& \leq  \left| \left\langle \overline{\bm{u}}_i^\star , \overline{\bm{u}}_j^\star \right\rangle \right| 
+ \norm{ \bm{\nu}^{\tau_1} - \overline{\bm{u}}_i^\star}_2 
+ \norm{ \bm{\nu}^{\tau_3} - \overline{\bm{u}}_j^\star}_2 
+ \norm{ \bm{\nu}^{\tau_1} - \overline{\bm{u}}_i^\star}_2  \norm{ \bm{\nu}^{\tau_3} - \overline{\bm{u}}_j^\star}_2 \\
& \leq \sqrt{\mu / d} + 2 \EE_\proj + \EE_\proj^2 \leq \sqrt{\mu / d} + 3 \EE_\proj \ll 1 - 3\EE_\proj, 
\end{align*}
with the proviso that $\mu \ll d$ and $\EE_\proj \ll 1$. 

The above argument reveals a clear separation between $\left| \left\langle \bm{\nu}^{\tau_1}, \bm{\nu}^{\tau_3} \right\rangle \right|$ and $\left| \left\langle \bm{\nu}^{\tau_1}, \bm{\nu}^{\tau_2} \right\rangle \right|$. As an immediate consequence, the proposed pruning procedure successfully removes all duplication while securing an estimate for each tensor factor. 

\subsection{Proof of Corollary~\ref{cor:init}}
\label{pf:cor:init}

Fix any arbitrary small constant $\delta > 0$. From Theorems~\ref{thm:init}-\ref{thm:init-loo} and the assumptions of Theorem~\ref{thm:main}, one knows that with probability exceeding $1-\delta$, there exists a permutation $\pi(\cdot): [d]\mapsto[d]$ such that for all $1 \leq i \leq r$,
\begin{align*}
& \big\| \bm{w}^i - \overline{\bm{u}}^\star_{\pi(i)} \big\|_2  \leq \delta, \qquad \big\| \bm{w}^i - \overline{\bm{u}}^\star_{\pi(i)} \big\|_\infty  \leq \delta \sqrt{\frac{1}{d}}, \qquad \big|  \lambda_{i} - \lambda_{\pi(i)}^\star \big|  \leq \delta \lambda_{\max}^\star, \\
& \big\| \bm{w}^i - \bm{w}^{i, \m} \big\|_2  \leq \delta \sqrt{\frac{1}{d}}, \qquad \big| \lambda_{i}  - \lambda_{i}^{\m} \big| \leq \delta \sqrt{\frac{1}{d}} \, \lambda_{\max}^\star, \qquad \big| \big( \bm{w}^i - \overline{\bm{u}}^\star_{\pi(i)} \big)_m \big\|_2  \leq \delta \sqrt{\frac{1}{d}}
\end{align*}
for some $0 < \delta \ll 1/ (\mu^{3/2} r) < 1$. To prove the corollary, we shall just combine the above results. 

Without loss of generality, assume that $\pi ( i ) = i$ for each $i \in \[ r \]$. Given that $\delta \ll 1$ and $\kappa \asymp 1$, by the triangle inequality, one has $\lambda_i \asymp \lambda_i^\star$ for all $i \in \[ r\]$, which further implies that
\begin{align*}
\big| \lambda^{1/3}_i -  \lambda_{i}^{\star 1/3} \big| 
	\lesssim \frac{\big| \lambda_i - \lambda_{i}^\star \big|}{\lambda_{i}^{\star 2/3}} \lesssim \delta \lambda_{\max}^{\star 1/3}.
\end{align*}
Consequently, we can apply the triangle inequality to demonstrate that: for each $1 \leq i \leq r$,
\begin{align*}
\big\| \lambda^{1/3}_i\bm{w}^i - \bm{u}_i^\star \big\|_2 \leq \big| \lambda^{1/3}_i -  \lambda_{i}^{\star 1/3} \big| \big\| \overline{\bm{u}}_i^\star \big\|_2 + \lambda^{\star 1/3}_i \big\| \bm{w}^i - \overline{\bm{u}}^\star_{i} \big\|_2  \lesssim \delta \lambda_{\max}^{\star 1/3}.
\end{align*}
Arguing similarly, we also see that 
\begin{align*}
\big\| \lambda^{1/3}_i\bm{w}^i - \bm{u}_i^\star \big\|_\infty & \lesssim \delta \sqrt{\frac{1}{d}} \, \lambda_{\max}^{\star 1/3}, \\
\big\| \lambda^{1/3}_i\bm{w}^i - \big( \lambda_{i}^{(m)} \big)^{1/3} \bm{w}^{i, (m)} \big\|_2 & \lesssim \delta \sqrt{\frac{1}{d}} \, \lambda_{\max}^{\star 1/3},  \\
\Big| \Big( \big( \lambda_{i}^{(m)} \big)^{1/3} \bm{w}^{i, (m)} -  \bm{u}_i^\star \Big)_m \Big| & \lesssim \delta \sqrt{\frac{1}{d}} \, \lambda_{\max}^{\star 1/3}
\end{align*}
hold for all $i \in \[ r \]$ and $m \in \[ d \]$.
Recall that $\bm{U}^0 = \big[ \lambda_i^{1/3} \bm{w}^i \big]_{1 \leq i \leq r}$. One can deduce that
\begin{align*}
\big\| \bm{U}^0 - \bm{U}^\star \big\|_\frob & \lesssim \delta \sqrt{r} \, \lambda_{\max}^{\star 1/3} \lesssim \delta \left\| \bm{U}^\star \right\|_\frob, \\
\big\| \bm{U}^0 - \bm{U}^\star \big\|_{2,\infty} & \lesssim \delta \sqrt{\frac{r}{d}} \, \lambda_{\max}^{\star 1/3} \lesssim \delta \left\| \bm{U}^\star \right\|_{2,\infty}, \\
\big\| \bm{U}^0 - \bm{U}^{0, (m)} \big\|_{2,\infty} & \lesssim \delta \sqrt{\frac{r}{d}} \, \lambda_{\max}^{\star 1/3} \lesssim \delta \left\| \bm{U}^\star \right\|_{2,\infty}, \\
\big\| \big( \bm{U}^{0, (m)} - \bm{U}^\star \big)_{m,:} \big\|_{2} & \lesssim \delta \sqrt{\frac{r}{d}} \, \lambda_{\max}^{\star 1/3} \lesssim \delta \left\| \bm{U}^\star \right\|_{2,\infty},
\end{align*}
where we have used the condition that $\kappa \asymp 1$ and the fact that $\norm{\bm{U}^\star}_{\frob} \geq \sqrt{r} \, \lambda_{\min}^{\star 1/3}$ and $\norm{\bm{U}^\star}_{2,\infty} \geq \norm{\bm{U}^\star}_{\frob} / \sqrt{d} $.


\section{Proof of Corollary~\ref{corollary:tensor-entries}}
\label{sec:corollary}

This section establishes Corollary~\ref{corollary:tensor-entries}. First of all, it is easy to see that: given the estimation accuracy established in Theorem \ref{thm:main}, the permutation matrices that best match $\bm{U}^t$ to  $\bm{U}^{\star}$ remain unchanged as $t$ increases. Therefore, we assume without loss of generality that $\bm{I}_{r}=\arg\min_{\bm{\Pi}\in\mathsf{perm}_{r}}\left\Vert \bm{U}^t\bm{\Pi}-\bm{U}^{\star}\right\Vert $ for all $t\geq 0$.

Suppose that $r \sqrt{\mu / d} \ll 1$. We claim for the moment that: if a matrix $\bm{U}=[\bm{u}_1,\cdots,\bm{u}_r]\in \mathbb{R}^{d\times r}$ satisfies
\begin{align*}
\left\| \bm{U} - \bm{U}^\star \right\|_{\frob} \leq \delta \left\| \bm{U}^\star \right\|_{\frob} \qquad \text{and} \qquad \left\| \bm{U} - \bm{U}^\star \right\|_{2,\infty} \leq \delta \left\| \bm{U}^\star \right\|_{2,\infty}
\end{align*}
for any $0 \leq \delta \ll 1 / (\mu^{3/2} r) \leq 1$, then one has
\begin{align}
\left\| \bm{T} - \bm{T}^\star \right\|_\frob \lesssim \delta  \left\| \bm{T}^\star \right\|_\frob \qquad \text{and} \qquad \left\| \bm{T} - \bm{T}^\star \right\|_\infty \lesssim \delta \sqrt{\mu^3 r} \,  \left\| \bm{T}^\star \right\|_\infty, \label{eq:U-loss-T-loss}
\end{align}
where $\bm{T}:= \sum_{i=1}^r \bm{u}_i \otimes \bm{u}_i \otimes \bm{u}_i$. As already shown in the analysis of Theorem~\ref{thm:main}, one has
\begin{align*}
	\left\| \bm{U}^t - \bm{U}^\star \right\|_{\frob} &\lesssim  \( C_1 \frac{\rho^{t+1}}{\mu^{3/2} r}  + C_2 \frac{\sigma}{\lammin^{\star}} \sqrt{\frac{ d \log d}{p}}   \) \norm{\bm{U}^\star}_{\frob}, \\
	\left\| \bm{U}^t - \bm{U}^\star \right\|_{2, \infty} &\lesssim \( C_3 \rho^{t+1} \frac{\rho^{t+1}}{\mu^{3/2} r} + C_4 \frac{\sigma}{\lammin^{\star}} \sqrt{\frac{ d \log d}{p}}   \) \norm{\bm{U}^\star}_{2, \infty},
\end{align*}
from which Corollary~\ref{corollary:tensor-entries} follows immediately.

It remains to prove the claim \eqref{eq:U-loss-T-loss}. For notational convenience, let us define $\bm{\Delta}:=\bm{U}-\bm{U}^{\star}$ and $\bm{\Delta}_{s}:=\bm{u}_{s}-\bm{u}_{s}^{\star}$
for each $1\leq s\leq r$. Then we can expand
\begin{align}
\bm{T}-\bm{T}^{\star} & =\sum_{1\leq s\leq r}\bm{u}_{s}^{\otimes3}-\bm{u}_{s}^{\star\otimes3}
 = \sum_{1\leq s\leq r}\bm{u}_{s}^{\star\otimes2}\otimes\bm{\Delta}_{s}+\sum_{1\leq s\leq r}\bm{u}_{s}^{\star}\otimes\bm{\Delta}_{s}\otimes\bm{u}_{s}^{\star}+\sum_{1\leq s\leq r}\bm{\Delta}_{s}\otimes\bm{u}_{s}^{\star\otimes2}\nonumber \\
 & \qquad+\sum_{1\leq s\leq r}\bm{u}_{s}^{\star}\otimes\bm{\Delta}_{s}^{\otimes2}+\sum_{1\leq s\leq r}\bm{\Delta}_{s}\otimes\bm{u}_{s}^{\star}\otimes\bm{\Delta}_{s}+\sum_{1\leq s\leq r}\bm{\Delta}_{s}^{\otimes2}\otimes\bm{u}_{s}^{\star}+\sum_{1\leq s\leq r}\bm{\Delta}_{s}^{\otimes3}.\label{eq:T-Ttrue-decomp}
\end{align}

\paragraph{(1) Euclidean loss.}
We first look at the loss measured by $\|\cdot\|_{\mathrm{F}}$. In view of the symmetric structure of tensors, it suffices to control $\sum_{1\leq s\leq r}\bm{u}_{s}^{\star\otimes2}\otimes\bm{\Delta}_{s}$, $\sum_{1\leq s\leq r}\bm{u}_{s}^{\star}\otimes\bm{\Delta}_{s}^{\otimes2}$ and $\sum_{1\leq s\leq r}\bm{\Delta}_{s}^{\otimes3}$. 

Let us define $\bm{W}_1 := \[ \bm{u}_s^\star \otimes \bm{\Delta}_s  \]_{1\leq s \leq r} \in \R^{d^2 \times r}$ and $\bm{W}_2 := \[  \bm{\Delta}_s^{\otimes 2}  \]_{1\leq s \leq r} \in \R^{d^2 \times r}$. Recalling the fact that $\left\Vert \bm{U}^{\star}\right\Vert \leq \big\| \overline{\bm{U}}^\star \big\| \lambda_{\max}^{\star1/3} \lesssim \lambda_{\max}^{\star1/3}$
(established in Lemma~\ref{lemma:incoh}), we have
\begin{align*}
\Big\|\sum\nolimits _{1\leq s\leq r}\bm{u}_{s}^{\star\otimes2}\otimes\bm{\Delta}_{s}\Big\|_{\mathrm{F}} & =\left\Vert \bm{U}^{\star}\bm{W}_{1}^{\top}\right\Vert _{\mathrm{F}}\leq\left\Vert \bm{U}^{\star}\right\Vert \left\Vert \bm{W}_{1}\right\Vert _{\mathrm{F}} \lesssim \lambda_{\max}^{\star1/3} \left\Vert \bm{W}_{1}\right\Vert _{\mathrm{F}}, \\
\Big\|\sum\nolimits _{1\leq s\leq r}\bm{u}_{s}^{\star}\otimes\bm{\Delta}_{s}^{\otimes2}\Big\|_{\mathrm{F}} & =\left\Vert \bm{U}^{\star}\bm{W}_{2}^{\top}\right\Vert _{\mathrm{F}} \leq \left\Vert \bm{U}^{\star}\right\Vert \left\Vert \bm{W}_{2}\right\Vert _{\mathrm{F}} \lesssim \lambda_{\max}^{\star1/3} \left\Vert \bm{W}_{2}\right\Vert _{\mathrm{F}}, \\
\Big\|\sum\nolimits _{1\leq s\leq r}\bm{\Delta}_{s}^{\otimes3}\Big\|_{\mathrm{F}} & = \left\Vert \bm{\Delta}\bm{W}_{2}^{\top}\right\Vert _{\mathrm{F}} \leq \left\Vert \bm{\Delta} \right\Vert \left\Vert \bm{W}_{2}\right\Vert _{\mathrm{F}} \leq \left\Vert \bm{\Delta} \right\Vert_\frob \left\Vert \bm{W}_{2}\right\Vert _{\mathrm{F}}.
\end{align*}

It then suffices to control $\left\Vert \bm{W}_{1}\right\Vert _{\mathrm{F}}$ and $\left\Vert \bm{W}_{2}\right\Vert _{\mathrm{F}}$. If $\norm{\bm{\Delta}}_{\frob} \leq \delta \norm{\bm{U}^\star}_{\frob} \leq \delta \sqrt{r} \, \lambda_{\max}^{\star 1/3}$, then it is easy to bound
\begin{align*}
\left\Vert \bm{W}_{1}\right\Vert _{\mathrm{F}}^{2} & =\sum_{1\leq s\leq r}\left\Vert \bm{u}_{s}^{\star}\otimes\bm{\Delta}_{s}\right\Vert _{2}^{2} = \sum_{1\leq s\leq r}\left\Vert \bm{u}_{s}^{\star}\right\Vert _{2}^{2}\left\Vert \bm{\Delta}_{s}\right\Vert _{2}^{2}\leq\max_{1\leq s\leq r}\left\Vert \bm{u}_{s}^{\star}\right\Vert _{2}^{2}\left\Vert \bm{\Delta}\right\Vert _{\mathrm{F}}^{2}\leq \delta^2 r \lambda_{\max}^{\star4/3}, \\
\left\Vert \bm{W}_{2}\right\Vert _{\mathrm{F}}^{2} & =\sum_{1\leq s\leq r}\left\Vert \bm{\Delta}_{s}^{\otimes 2}\right\Vert _{2}^{2} = \sum_{1\leq s\leq r}\left\Vert \bm{\Delta}_{s}\right\Vert _{2}^{4}\leq\max_{1\leq s\leq r}\left\Vert \bm{\Delta}_{s} \right\Vert _{2}^{2}\left\Vert \bm{\Delta}\right\Vert _{\mathrm{F}}^{2} \leq \left\Vert \bm{\Delta}\right\Vert _{\mathrm{F}}^{4}\leq \delta^4 r^2 \lambda_{\max}^{\star4/3}.
\end{align*}
Therefore, one has
\begin{align}
\Big\|\sum\nolimits _{1\leq s\leq r}\bm{u}_{s}^{\star\otimes2}\otimes\bm{\Delta}_{s}\Big\|_{\mathrm{F}} & \lesssim \delta \sqrt{r} \lambda_{\max}^\star,  \label{eq:T-fro-loss-first-term}\\
\Big\|\sum\nolimits _{1\leq s\leq r}\bm{u}_{s}^{\star}\otimes\bm{\Delta}_{s}^{\otimes2}\Big\|_{\mathrm{F}} & \lesssim \delta^2 r \lambda_{\max}^\star, \label{eq:T-fro-loss-second-term} \\
\Big\|\sum\nolimits _{1\leq s\leq r}\bm{\Delta}_{s}^{\otimes3}\Big\|_{\mathrm{F}} & \lesssim \delta^3 r^{3/2}\lambda_{\max}^\star. \label{eq:T-fro-loss-third-term}
\end{align} 
Since $0 \leq \delta \ll  r^{-1} \leq 1$, combining \eqref{eq:T-fro-loss-first-term}, \eqref{eq:T-fro-loss-second-term} and \eqref{eq:T-fro-loss-third-term} with the fact that $\norm{\bm{T^\star}}_\frob \geq \sqrt{r} \lambda_{\min}^\star / 2$ (established in Lemma~\ref{lemma:incoh}), we conclude that
\begin{align}
\norm{\bm{T} - \bm{T}^\star} \lesssim \delta \sqrt{r} \lambda_{\max}^\star \lesssim \delta \norm{\bm{T^\star}}_\frob.
\end{align}

\paragraph{(2) $\ell_{\infty}$ loss.} Next, we turn to the $\left\Vert \cdot\right\Vert _{\infty}$ loss. Again, it suffices to focus on $\sum_{1\leq s\leq r}\bm{u}_{s}^{\star\otimes2}\otimes\bm{\Delta}_{s}$, $\sum_{1\leq s\leq r}\bm{u}_{s}^{\star}\otimes\bm{\Delta}_{s}^{\otimes2}$ and $\sum_{1\leq s\leq r}\bm{\Delta}_{s}^{\otimes3}$.
From \eqref{eq:T-loss-inf-UB-term1}, \eqref{eq:T-loss-inf-UB-term2} and \eqref{eq:T-loss-inf-UB-term3} shown in the proof of Lemma~\ref{lemma:RIC}, one has
\begin{align}
\Big\|\sum\nolimits _{1\leq s\leq r}\left(\bm{u}_{s}^{\star}\right)^{\otimes2}\otimes\bm{\Delta}_{s}\Big\|_{\infty}& \leq \delta \max_{1\leq s\leq r}\left\Vert \bm{u}_{s}^{\star}\right\Vert _{\infty} \left\Vert \bm{U}^{\star}\right\Vert _{2,\infty}^{2} \leq  \delta \frac{\mu^{3/2}r\lambda_{\max}^{\star}}{d^{3/2}},\label{eq:T-inf-loss-first-term} \\
\Big\|\sum\nolimits _{1\leq s\leq r}\bm{u}_{s}^{\star}\otimes\bm{\Delta}_{s}^{\otimes2}\Big\|_{\infty}
 & \leq \delta^2 \max_{1\leq s\leq r}\left\Vert \bm{u}_{s}^{\star}\right\Vert _{\infty}\left\Vert \bm{U}^{\star}\right\Vert _{2,\infty}^{2}
 \leq   \frac{\delta^2 \mu^{3/2}r\lambda_{\max}^{\star}}{d^{3/2}}, \label{eq:T-inf-loss-second-term} \\
\Big\|\sum\nolimits _{1\leq s\leq r}\bm{\Delta}_{s}^{\otimes3}\Big\|_{\infty} & \leq\delta^{3}\left\Vert \bm{U}^{\star}\right\Vert _{2,\infty}^{3}\leq  \frac{\delta^3 \mu^{3/2}r^{3/2}\lambda_{\max}^{\star}}{d^{3/2}}.\label{eq:T-inf-loss-third-term}
\end{align}
 Putting (\ref{eq:T-inf-loss-first-term}), (\ref{eq:T-inf-loss-second-term})
and (\ref{eq:T-inf-loss-third-term}) together with the condition that $0 < \delta \ll r^{-1} \leq 1$, we arrive at
\begin{align*}
\left\Vert \bm{T}-\bm{T}^{\star}\right\Vert _{\infty}\lesssim\frac{\delta \mu^{3/2}r\lambda_{\max}^{\star}}{d^{3/2}}.
\end{align*}
In addition, from the lower bound on $\left\Vert \bm{T}^{\star}\right\Vert _{\mathrm{F}}$, one has
\begin{align*}
\left\Vert \bm{T}^{\star}\right\Vert _{\infty}\geq\frac{1}{d^{3/2}}\left\Vert \bm{T}^{\star}\right\Vert _{\mathrm{F}}\gtrsim\sqrt{\frac{r}{d^{3}}}\,\lambda_{\min}^{\star},
\end{align*}
which allows us to conclude that
\begin{align*}
\left\Vert \bm{T}-\bm{T}^{\star}\right\Vert _{\infty}\lesssim\sqrt{\mu^{3}r}\,\delta\left\Vert \bm{T}^{\star}\right\Vert _{\infty}.
\end{align*}

\section{Auxiliary lemmas}

This section gathers several auxiliary lemmas that prove useful when establishing our main results. 

\subsection{Statements of auxiliary lemmas} 

We begin by stating all auxiliary lemmas formally, with the proofs postponed to subsequent subsections.  We shall define
\begin{align}
	\label{eq:defn-Ubar-matrix-1}
	\overline{\bm{U}}^{\star} := [\overline{\bm{u}}_{1}^{\star},\cdots,\overline{\bm{u}}_{r}^{\star}],\qquad\text{with} \qquad \overline{\bm{u}}_{i}^{\star} := \bm{u}_{i}^{\star}/ \left\|\bm{u}_{i}^{\star} \right\|_{2}.
\end{align}	

\begin{lemma}
\label{lemma:incoh}
	Suppose that Assumption \ref{asmp_U} holds, and assume that $r  \sqrt{ \mu / d } \leq c_1$ for some sufficiently small universal constant $c_2>0$. Then for  $d$ sufficiently large, the matrices $\bm{A}^{\star}$, $\bm{B}^{\star}$ and $\bm{U}_{\mathsf{orth}}^{\star}$ (defined respectively in \eqref{definition:Astar}, \eqref{eq:defn-Bstar-EB} and \eqref{defn:U-orth}) obey
\begin{align*}
& \frac{1}{2} \lambda_{\min}^\star \leq \norm{\bm{A}^{\star}}_\frob \leq 2 \lambda_{\max}^\star, \quad \norm{\bm{A}^{\star}}_\infty  \leq  \frac{\sqrt{2\mu r} \lambda_{\max}^{\star }}{d^{3/2}}, \quad \norm{\bm{A}^{\star}}_{2, \infty} \leq  \sqrt{\frac{2\mu r}{d}} \, \lambda_{\max}^{\star}, \quad \norm{\bm{A}^{\star \top}}_{2, \infty} \leq  \frac{ \mu \sqrt{2r} \, \lambda_{\max}^{\star }}{d}, \\
& \norm{ \bm{A}^\star } =  \lambda_{\max}^{\star} \( 1 + O\left( r \sqrt{ \tfrac{\mu}{d} } \right) \), \qquad
 \lambda_i ( \bm{B}^{\star} ) = \lambda_{(i)}^{\star 2} \( 1 + O\left( r \sqrt{ \tfrac{\mu}{d} } \right) \), ~~ i \in \[r\], \\
& \norm{\bm{B}^{\star}}_{2, \infty}  \leq 2 \sqrt{\frac{\mu r}{d}} \, \lambda_{\max}^{\star 2}, \qquad \left\| \bm{U}^\star_\orth \right\|_{2, \infty} \leq \sqrt{\frac{2\mu r}{d}}, \qquad \| \overline{\bm{U}}^{\star \top}  \overline{\bm{U}}^{\star} - \bm{I} \| \leq r\sqrt{\frac{\mu}{d}}. 
\end{align*}
Here, $\|\bm{A}\|_{2,\infty} := \max_{i} \|\bm{A}_{i,:}\|_2$,  $\lambda_{(i)}^{\star}$ stands for the $i$-th largest value in $\{\lambda_i^{\star}\}_{1\leq i\leq r}$ (or equivalently $\{\|\bm{u}_i^{\star}\|_2^3 \}_{1\leq i\leq r}$), and $\lambda_i(\bm{B}^{\star})$ represents the $i$-th largest eigenvalue of $\bm{B}^{\star}$.  
\end{lemma}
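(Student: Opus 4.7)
All the estimates in the lemma stem from two structural facts encoded by the incoherence assumptions: the approximate orthonormality of the rescaled factors $\{\overline{\bm{u}}_i^\star\}$ captured by $\mu_2$, and the entrywise delocalization $\|\bm{u}_i^\star\|_\infty\leq\sqrt{\mu/d}\,\|\bm{u}_i^\star\|_2$ captured by $\mu_1$. I would first prove the master estimate $\|\overline{\bm{U}}^{\star\top}\overline{\bm{U}}^\star-\bm{I}\|\leq r\sqrt{\mu/d}$, since most other bounds descend from it. The Gram matrix $\overline{\bm{U}}^{\star\top}\overline{\bm{U}}^\star$ has unit diagonal and off-diagonal entries $\langle\overline{\bm{u}}_i^\star,\overline{\bm{u}}_j^\star\rangle$ bounded by $\sqrt{\mu/d}$ via \eqref{asmp_corr}, so Gershgorin (or simply $\|\cdot\|\leq\|\cdot\|_\frob$) delivers the bound. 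As a corollary, all singular values of $\overline{\bm{U}}^\star$ lie in $[\sqrt{1-r\sqrt{\mu/d}},\sqrt{1+r\sqrt{\mu/d}}]$, a fact I will reuse throughout.

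For the Frobenius and operator norms of $\bm{A}^\star$ and the eigenvalues of $\bm{B}^\star$, I would use the factorization $\bm{A}^\star=\bm{U}^\star\widetilde{\bm{U}}^{\star\top}$ with $\widetilde{\bm{U}}^\star:=[\bm{u}_s^\star\otimes\bm{u}_s^\star]_{s=1}^r\in\mathbb{R}^{d^2\times r}$. The Gram matrix $\widetilde{\bm{U}}^{\star\top}\widetilde{\bm{U}}^\star$ has $(s,t)$-entry $\langle\bm{u}_s^\star,\bm{u}_t^\star\rangle^2$, and is therefore within $O(r(\mu/d)\lambda_{\max}^{\star 4/3})$ of $\diag(\lambda_s^{\star 4/3})$ in operator norm; the same style of bound holds for $\bm{U}^{\star\top}\bm{U}^\star$ relative to $\diag(\lambda_s^{\star 2/3})$. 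For the Frobenius-norm bound, $\|\bm{A}^\star\|_\frob^2=\|\bm{T}^\star\|_\frob^2=\sum_{s,t}\langle\bm{u}_s^\star,\bm{u}_t^\star\rangle^3$, and the diagonal contribution $\sum_s\lambda_s^{\star 2}$ dominates the off-diagonal remainder (of order $r^2(\mu/d)^{3/2}\lambda_{\max}^{\star 2}$) under $r\sqrt{\mu/d}=o(1)$, yielding the claimed two-sided bound. For the operator-norm and eigenvalue statements, Weyl's inequality applied to $\bm{B}^\star=\bm{A}^\star\bm{A}^{\star\top}=\bm{U}^\star(\widetilde{\bm{U}}^{\star\top}\widetilde{\bm{U}}^\star)\bm{U}^{\star\top}$ via the two Gram-matrix perturbations produces both $\|\bm{A}^\star\|=\lambda_{\max}^\star(1+O(r\sqrt{\mu/d}))$ and $\lambda_i(\bm{B}^\star)=\lambda_{(i)}^{\star 2}(1+O(r\sqrt{\mu/d}))$.

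The entrywise and row-wise bounds on $\bm{A}^\star$ are direct computations. The bound $\|\bm{A}^\star\|_\infty=\|\bm{T}^\star\|_\infty\leq\sqrt{2\mu r/d^3}\,\lambda_{\max}^\star$ follows immediately from definition \eqref{asmp_max} combined with the Frobenius-norm bound just established. For $\|\bm{A}^\star\|_{2,\infty}$ I would expand $\|\bm{A}^\star_{i,:}\|_2^2=\sum_{s,t}(u_s^\star)_i(u_t^\star)_i\langle\bm{u}_s^\star,\bm{u}_t^\star\rangle^2$ and bound the $s=t$ terms via $\sum_s(u_s^\star)_i^2\leq r(\mu/d)\lambda_{\max}^{\star 2/3}$ together with $\|\bm{u}_s^\star\|_2^4\leq\lambda_{\max}^{\star 4/3}$, showing they are $O(r\mu\lambda_{\max}^{\star 2}/d)$; the off-diagonal terms bring an extra factor of $r\mu/d$ and are negligible. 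The asymmetric variant $\|\bm{A}^{\star\top}\|_{2,\infty}$ is handled identically on columns, picking up one further factor of $\sqrt{\mu/d}$, and the submultiplicative bound $\|\bm{B}^\star\|_{2,\infty}\leq\|\bm{A}^\star\|_{2,\infty}\|\bm{A}^\star\|$ combines two earlier estimates.

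Finally, for $\|\bm{U}^\star_{\mathsf{orth}}\|_{2,\infty}$, I would exploit the fact that both $\bm{U}^\star_{\mathsf{orth}}$ and $\widetilde{\bm{V}}:=\overline{\bm{U}}^\star(\overline{\bm{U}}^{\star\top}\overline{\bm{U}}^\star)^{-1/2}$ are orthonormal bases of the same column space and are hence equal up to an $r\times r$ orthogonal rotation. Therefore $\|\bm{U}^\star_{\mathsf{orth}}\|_{2,\infty}=\|\widetilde{\bm{V}}\|_{2,\infty}\leq\|\overline{\bm{U}}^\star\|_{2,\infty}\,\big\|(\overline{\bm{U}}^{\star\top}\overline{\bm{U}}^\star)^{-1/2}\big\|\leq(1+o(1))\sqrt{r\mu/d}$ via the master estimate together with the row-wise bound $\|\overline{\bm{U}}^\star\|_{2,\infty}^2\leq r\max_s\|\overline{\bm{u}}_s^\star\|_\infty^2\leq r\mu/d$. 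The main obstacle throughout is purely bookkeeping: verifying that the cross-term perturbations are absorbed by the stated numerical constants ($2$, $1/2$, and so on) under the hypothesis $r\sqrt{\mu/d}\leq c_1$ for sufficiently small $c_1$, with no individual step posing conceptual difficulty.
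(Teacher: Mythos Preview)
Your proposal is correct and follows essentially the same approach as the paper. The paper proves the Frobenius, entrywise, and row/column-wise bounds on $\bm{A}^\star$ and the $\|\bm{B}^\star\|_{2,\infty}$ bound by the same direct expansions you outline, and it defers the spectral estimates (the Gram-matrix bound $\|\overline{\bm{U}}^{\star\top}\overline{\bm{U}}^\star-\bm{I}\|$, $\|\bm{A}^\star\|$, $\lambda_i(\bm{B}^\star)$, and $\|\bm{U}^\star_\orth\|_{2,\infty}$) to the companion paper \cite{cai2019subspace}; your Gershgorin/Weyl arguments and the orthonormal-basis rotation trick for $\bm{U}^\star_\orth$ are exactly the standard way to fill in those deferred details.
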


\begin{proof}
See Appendix~\ref{pf:incoh}.
\end{proof}

\begin{lemma}
\label{lemma:T_op_norm}
Let $\bm{R} \in \mathbb{R}^{d \times d \times d}$ be a random order-3 tensor  with independent entries $\left\{ R_{i, j, k} \right\}_{i,j,k\in \[ d\]}$ obeying
\begin{align*}
\mathbb{E} \[ R_{i,j,k} \] = 0, \quad \max_{i, j, k \in \[ d \]} \left| R_{i, j, k} \right| \leq B.
\end{align*}
Define
\begin{align}
\label{def:sigma_mode}
\sigma_\mode^2 :=  \max_{j,k\in[d]} \sum\nolimits_{i\in[d]} \E [ R^2_{i,j,k} ] +  \max_{i,k\in[d]} \sum\nolimits_{j\in[d]} \E [ R^2_{i,j,k} ] + \max_{i,j\in[d]} \sum\nolimits_{k\in[d]}\E [ R^2_{i,j,k} ]  .
\end{align}
Then with probability exceeding $1- O \( d^{-10} \)$, one has
\begin{align}
\left\|  \bm{R} \right\| \lesssim B \log^3 d + \sigma_\mode \log^{5/2} d.
\end{align}
\end{lemma}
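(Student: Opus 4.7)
The plan is to reduce the tensor spectral norm to a random matrix spectral norm via the variational identity
\[
\|\bm{R}\| \;=\; \sup_{\bm{w}\in\sphere^{d-1}} \big\|\bm{R}\times_3\bm{w}\big\|_{\op},
\]
and then control the resulting random matrix $\bm{M}(\bm{w}):=\bm{R}\times_3\bm{w}=\sum_{i,j,k} R_{ijk}w_k\,\bm{e}_i\bm{e}_j^\top$ uniformly over $\bm{w}\in\sphere^{d-1}$. For a fixed $\bm{w}$, this is a sum of independent zero-mean matrices whose summands are bounded in operator norm by $B|w_k|\leq B$, and a short computation using the definition of $\sigma_\mode^2$ yields the variance bound
\[
\max\Big\{\big\|\E[\bm{M}(\bm{w})\bm{M}(\bm{w})^\top]\big\|,\;\big\|\E[\bm{M}(\bm{w})^\top\bm{M}(\bm{w})]\big\|\Big\}
\;\leq\; \|\bm{w}\|_\infty^2\,\sigma_\mode^2 \;\leq\; \sigma_\mode^2,
\]
after which matrix Bernstein gives, for each fixed $\bm{w}$, a tail bound of the form $\Pr(\|\bm{M}(\bm{w})\|_\op\geq t)\leq 2d\exp(-ct^2/(\sigma_\mode^2 + Bt))$.

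To upgrade this pointwise estimate into a uniform bound over the sphere without paying a spurious $\sqrt{d}$ factor from a naive $\varepsilon$-net, I will combine an $\varepsilon$-net argument with a truncation/splitting of the test vector $\bm{w}$. Namely, decompose $\bm{w}=\bm{w}_\heavy+\bm{w}_\light$, where $\bm{w}_\heavy$ consists of the coordinates with magnitude exceeding a threshold $\sim 1/\polylog(d)$ and $\bm{w}_\light$ is the remainder. The heavy part has at most $\polylog(d)$ nonzero coordinates and is supported on $O(\log d)$ coordinates in total, so the contribution $\|\bm{R}\times_3\bm{w}_\heavy\|_\op$ can be controlled by a direct union bound over a much smaller combinatorial set of candidate supports, together with Bernstein on each slice. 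For the light part, the refined variance bound $V(\bm{w}_\light)\leq \|\bm{w}_\light\|_\infty^2\,\sigma_\mode^2\lesssim \sigma_\mode^2/\polylog(d)$ is small enough that a standard $(3/\varepsilon)^d$-net union bound on the sphere only inflates the tail threshold by $\polylog(d)$ rather than $\sqrt{d}$. Finally, a Lipschitz approximation argument, using $\big|\|\bm{R}\times_3\bm{w}_1\|_\op-\|\bm{R}\times_3\bm{w}_2\|_\op\big|\leq \|\bm{R}\|\,\|\bm{w}_1-\bm{w}_2\|_2$ together with a self-bounding step at $\varepsilon=1/2$, translates the net bound back to the sphere.

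The main obstacle I expect is the interplay between the net granularity and the variance truncation level: the $\log^3 d$ factor in front of $B$ and the $\log^{5/2} d$ factor on $\sigma_\mode$ must be balanced simultaneously against the $\exp(O(d))$ net size and the $\polylog(d)$-many heavy coordinates. Getting the exponents right will require iterating the truncation across all three modes (not just mode 3), since the asymmetric bound above over $\bm{w}$ alone only controls one factor of $\|\cdot\|_\infty$; an analogous decomposition for the remaining two modes inside the matrix Bernstein step is expected to introduce the extra logarithmic factors. Alternatively, if the combinatorics become unwieldy, I would fall back on a decoupling argument reducing the trilinear form to a bilinear concentration problem, for which sharper Hanson–Wright-type tools are available.
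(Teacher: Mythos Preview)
Your variance claim for $\bm{M}(\bm{w})=\bm{R}\times_3\bm{w}$ is incorrect, and this is what breaks the light-part argument. Writing $\bm{M}(\bm{w})=\sum_k w_k \bm{R}_{:,:,k}$ and using independence of entries gives
\[
\big\|\E[\bm{M}(\bm{w})\bm{M}(\bm{w})^\top]\big\|
=\max_{i}\sum_{k} w_k^{2}\sum_{j}\E[R_{ijk}^{2}].
\]
Bounding the inner fiber sum by $\sigma_\mode^{2}$ yields $\|\bm{w}\|_{2}^{2}\,\sigma_\mode^{2}=\sigma_\mode^{2}$. To pull out a factor of $\|\bm{w}\|_\infty^{2}$ you would instead need $\max_i\sum_{j,k}\E[R_{ijk}^{2}]\le\sigma_\mode^{2}$, but $\sigma_\mode^{2}$ only controls one-dimensional fiber sums; the two-dimensional slice sum can be as large as $d\,\sigma_\mode^{2}$. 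Hence the matrix-Bernstein variance for the light vector $\bm{w}_\light$ is still of order $\sigma_\mode^{2}$, not $\sigma_\mode^{2}/\polylog(d)$, and the union bound over an $\exp(\Theta(d))$-net forces $t\gtrsim\sigma_\mode\sqrt{d}$, exactly the loss you meant to avoid. Going all the way to the scalar trilinear form lets you extract one factor of $\|\cdot\|_\infty^{2}$ in the variance, but you then have to net over three spheres; the same arithmetic shows that no threshold $\eta$ simultaneously keeps the heavy support small ($\le 1/\eta^{2}$ coordinates) and makes $\eta\sigma_\mode\sqrt{d}\lesssim\sigma_\mode\,\polylog(d)$.

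The paper bypasses covering altogether. It first bounds the expectation via a black-box tensor-norm inequality (Corollary~4 of Nguyen--Drineas--Tran), which gives
\[
\E\big[\|\bm{R}\|^{2}\big]\;\lesssim\;\Big(\E\Big[\max_{j,k}\sum_i R_{ijk}^{2}+\max_{i,k}\sum_j R_{ijk}^{2}+\max_{i,j}\sum_k R_{ijk}^{2}\Big]\Big)\log^{5}d;
\]
the nontrivial chaining is hidden inside that reference. The expected maximum on the right is then controlled by scalar Bernstein on each fiber plus tail integration, yielding $\lesssim\sigma_\mode^{2}+B^{2}\log d$, whence $\E\|\bm{R}\|\lesssim B\log^{3}d+\sigma_\mode\log^{5/2}d$. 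Finally, since $\|\bm{R}\|$ is a convex $1$-Lipschitz function (in Frobenius norm) of independent bounded entries, Talagrand's inequality gives concentration of order $B\sqrt{\log d}$ around its mean. If you prefer not to cite the expectation bound, you would need a genuine multi-scale chaining argument rather than a single-scale net; your decoupling/Hanson--Wright fallback does not obviously avoid the same $\sqrt{d}$ obstacle.
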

\begin{proof}
See Appendix~\ref{pf:T_op_norm}.
\end{proof}

An immediate consequence of this lemma is the following: 
\begin{corollary}
\label{cor:bound-PE}
With probability at least $1- O \( d^{-10} \)$, one has
\begin{align}
\big\| p^{-1} \PP_\Omega ( \bm{T}^\star ) - \bm{T}^\star \big\| & \lesssim \frac{\sqrt{\mu r} \, \lambda_{\max}^\star \log^3 d}{d^{3/2} p } + \frac{ \mu \sqrt{ r} \, \lambda_{\max}^\star \log^{5/2} d}{d \sqrt{p}} ;  \label{claim:T_missing_op_norm}\\
	\big\| \PP_\Omega ( \bm{E} ) \big\| & \lesssim \sigma \big( \log^{7/2}d + \sqrt{dp} \, \log^{5/2} d \big) \label{claim:T_noise_op}. 
\end{align}
\end{corollary}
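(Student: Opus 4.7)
Both inequalities will be derived by reducing to Lemma~\ref{lemma:T_op_norm} applied to suitably chosen zero-mean random tensors. The plan is to carefully identify, for each case, (i) a uniform entrywise bound $B$, and (ii) the mode-wise variance proxy $\sigma_\mode^2$ defined in \eqref{def:sigma_mode}, and then plug these into the conclusion of Lemma~\ref{lemma:T_op_norm}. The incoherence bounds collected in Lemma~\ref{lemma:incoh} (in particular the control of $\|\bm{T}^\star\|_\infty$, $\|\bm{A}^\star\|_{2,\infty}$ and $\|\bm{A}^{\star\top}\|_{2,\infty}$) will be the key inputs.

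For \eqref{claim:T_missing_op_norm}, the plan is to take
\[
R_{i,j,k} \;:=\; \bigl(p^{-1}\chi_{i,j,k}-1\bigr)\,T^\star_{i,j,k},
\]
which is manifestly zero-mean and has independent entries. The uniform bound follows from Lemma~\ref{lemma:incoh} via $|R_{i,j,k}|\le p^{-1}\|\bm{T}^\star\|_\infty\lesssim p^{-1}\sqrt{\mu r}\,\lambda_{\max}^\star/d^{3/2}=:B$. For the variance proxy, since $\mathbb{E}[R_{i,j,k}^2]\le p^{-1}(T^\star_{i,j,k})^2$, a typical slice sum obeys
\[
\max_{j,k}\sum_{i}\mathbb{E}[R_{i,j,k}^2]\;\le\;\tfrac{1}{p}\,\|\bm{A}^{\star\top}\|_{2,\infty}^2\;\lesssim\;\tfrac{\mu^2 r\,\lambda_{\max}^{\star 2}}{d^2 p},
\]
and by the symmetry of $\bm{T}^\star$ the analogous bound controls the other two mode sums. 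Hence $\sigma_\mode\lesssim \mu\sqrt{r}\,\lambda_{\max}^\star/(d\sqrt{p})$, and Lemma~\ref{lemma:T_op_norm} yields exactly the announced bound.

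For \eqref{claim:T_noise_op}, set $R_{i,j,k}:=\chi_{i,j,k}E_{i,j,k}$, again centered with independent entries. The one wrinkle is that Lemma~\ref{lemma:T_op_norm} requires a deterministic uniform bound, whereas the $E_{i,j,k}$ are only sub-Gaussian. I will first use the sub-Gaussian tail bound together with a union bound over the $O(d^3)$ coordinates to conclude that, with probability $1-O(d^{-10})$, the event $\mathcal{A}=\{\max_{i,j,k}|E_{i,j,k}|\lesssim \sigma\sqrt{\log d}\}$ holds; on $\mathcal{A}$ one may take $B\lesssim\sigma\sqrt{\log d}$. The mode variances are immediate: $\mathbb{E}[R_{i,j,k}^2]\le p\sigma^2$, so $\sigma_\mode^2\lesssim dp\sigma^2$. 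Applying Lemma~\ref{lemma:T_op_norm} conditionally on $\mathcal{A}$ and taking a union bound with its failure probability gives
\[
\|\PP_\Omega(\bm{E})\|\;\lesssim\;\sigma\sqrt{\log d}\cdot\log^3 d+\sigma\sqrt{dp}\cdot\log^{5/2} d,
\]
which is the claimed inequality.

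The step most likely to be a nuisance is the truncation used in the noise bound: one must argue cleanly that passing to the event $\mathcal{A}$ preserves the independence and zero-mean structure required by Lemma~\ref{lemma:T_op_norm} (or, equivalently, work with a truncated tensor $\tilde R_{i,j,k}:=\chi_{i,j,k}E_{i,j,k}\mathbb{1}_{|E_{i,j,k}|\le C\sigma\sqrt{\log d}}-\mathbb{E}[\,\cdot\,]$ and then show that $\tilde R=R$ on $\mathcal{A}$ and that the re-centering correction is negligible). Apart from this routine bookkeeping, both estimates fall out of Lemma~\ref{lemma:T_op_norm} and the incoherence bounds already assembled in Lemma~\ref{lemma:incoh}.
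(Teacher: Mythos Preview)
Your proposal is correct and follows essentially the same approach as the paper: apply Lemma~\ref{lemma:T_op_norm} to $R_{i,j,k}=(p^{-1}\chi_{i,j,k}-1)T^\star_{i,j,k}$ for the first bound and to a truncated version of $\chi_{i,j,k}E_{i,j,k}$ for the second, using the incoherence estimates of Lemma~\ref{lemma:incoh} to control $B$ and $\sigma_\mode$. The only minor difference is that the paper dispatches the truncation step by invoking symmetry of the noise distribution about~$0$ (so the truncated entries remain exactly zero-mean without re-centering), whereas you propose the slightly more general re-centering route; either works.
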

\begin{proof}
See Appendix~\ref{pf:T_op_norm}. 
\end{proof}

%

\begin{lemma}
\label{lemma:T_loss_times3_op_norm}
Suppose that $p\gtrsim d^{-2}\log^{3}d$ and that $\mu \log^2 d \lesssim d$.
Then for any fixed vector $\bm{w}\in\mathbb{R}^{d}$, with probability
$1-O\left(d^{-10}\right)$, one has
\begin{align*}
\left\Vert \left(p^{-1}\bm{T}-\bm{T}^{\star}\right)\times_{3}\bm{w}\right\Vert \lesssim\left\Vert \bm{w}\right\Vert _{\infty}\sqrt{\frac{\mu r\log d}{dp}}\,\lambda_{\max}^{\star} + \left\Vert \bm{w}\right\Vert _{\infty} \frac{\sigma \log^{5/2} d}{ p} +\left\Vert \bm{w}\right\Vert _{2}\sigma\sqrt{\frac{d\log d}{p}},
\end{align*}
where $\times_{3}$ is defined in Section \ref{sec:Notation-1}. The results also holds if we replace $\times_{3}$ with $\times_{1}$ or $\times_{2}$.
\end{lemma}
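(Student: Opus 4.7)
The plan is to decompose the target quantity into a missing-data contribution and a noise contribution, then apply matrix Bernstein to each after exploiting the incoherence bounds in Lemma~\ref{lemma:incoh}. Write
\begin{align*}
\left(p^{-1}\bm{T}-\bm{T}^{\star}\right)\times_{3}\bm{w}
&= \underbrace{\left(p^{-1}\mathcal{P}_\Omega(\bm{T}^{\star}) - \bm{T}^{\star}\right)\times_{3}\bm{w}}_{=:\,\bm{M}_1}
+ \underbrace{p^{-1}\mathcal{P}_\Omega(\bm{E})\times_{3}\bm{w}}_{=:\,\bm{M}_2},
\end{align*}
and bound $\|\bm{M}_1\|$ and $\|\bm{M}_2\|$ separately, following the convention used elsewhere in the paper of treating $\{\chi_{ijk}\}_{1\leq i,j,k\leq d}$ (and $\{E_{ijk}\}$) as independent, which does not alter the order.

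For $\bm{M}_1$, I would view it as $\sum_{i,j,k}(p^{-1}\chi_{ijk}-1)\,w_k T^\star_{ijk}\,\bm{e}_i\bm{e}_j^{\top}$, a sum of independent zero-mean rank-one matrices. The boundedness parameter is $L \lesssim p^{-1}\|\bm{w}\|_\infty\|\bm{T}^{\star}\|_\infty \lesssim p^{-1}\|\bm{w}\|_\infty\sqrt{\mu r}\lambda_{\max}^{\star}/d^{3/2}$, and the key step is to control the matrix variance using \emph{row-incoherence} of the mode-1 unfolding. Specifically,
\begin{align*}
\bigl\|\mathbb{E}[\bm{M}_1\bm{M}_1^\top]\bigr\|
\leq p^{-1}\|\bm{w}\|_\infty^2 \max_i \|\bm{T}^\star_{i,:,:}\|_\frob^2
= p^{-1}\|\bm{w}\|_\infty^2 \|\bm{A}^{\star}\|_{2,\infty}^2
\lesssim \frac{\mu r}{dp}\,\|\bm{w}\|_\infty^2\lambda_{\max}^{\star 2},
\end{align*}
by Lemma~\ref{lemma:incoh}, and the same bound holds for $\mathbb{E}[\bm{M}_1^\top \bm{M}_1]$. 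Matrix Bernstein then yields $\|\bm{M}_1\|\lesssim \sqrt{V\log d} + L\log d \lesssim \|\bm{w}\|_\infty\lambda_{\max}^{\star}\sqrt{\mu r\log d/(dp)}$, where the $L\log d$ term is dominated by the variance term under $p\gtrsim d^{-2}\log^3 d$ and $\mu\log^2 d \lesssim d$.

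For $\bm{M}_2 = \sum_{i,j,k} p^{-1}w_k E_{ijk}\chi_{ijk}\,\bm{e}_i\bm{e}_j^\top$, the sub-Gaussianity of $\{E_{ijk}\}$ ensures $\max_{i,j,k}|E_{ijk}|\lesssim\sigma\sqrt{\log d}$ with probability $1-O(d^{-11})$; conditional on this event, matrix Bernstein applies with $L' \lesssim p^{-1}\sigma\sqrt{\log d}\,\|\bm{w}\|_\infty$ and matrix variance
\begin{align*}
V' \leq p^{-1}\sigma^2 \max_i \sum\nolimits_{j,k} w_k^2 = p^{-1}\sigma^2 d \|\bm{w}\|_2^2.
\end{align*}
This delivers the bound $\|\bm{M}_2\|\lesssim \sigma\|\bm{w}\|_2\sqrt{d\log d/p} + \sigma\|\bm{w}\|_\infty\log^{5/2}d/p$, after accounting for the truncation tail with a mild logarithmic overhead. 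The $\times_1$ and $\times_2$ cases follow identically because $\bm{T}^\star$ and $\bm{E}$ are symmetric, so relabeling the slice index only permutes the role of the row/column indices in the matrix Bernstein argument.

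The main technical obstacle is pinning down the correct variance bound for $\bm{M}_1$: a naive estimate such as $p^{-1}\|\bm{w}\|_\infty^2\|\bm{T}^\star\|_\frob^2$ or $p^{-1}\|\bm{w}\|_2^2\|\bm{T}^\star\|_\infty^2$ loses a factor of $\sqrt{r}$ or $\sqrt{d}$, and it is essential to recognize that the mode-1 slice $\bm{T}^\star_{i,:,:}$ has Frobenius norm equal to $\|\bm{A}^\star_{i,:}\|_2$, so that the $\ell_{2,\infty}$ incoherence bound from Lemma~\ref{lemma:incoh} kicks in. A secondary but routine complication is keeping the logarithmic exponent in the noise term honest under sub-Gaussian truncation, which is why the statement tolerates a $\log^{5/2} d$ rather than a tighter power.
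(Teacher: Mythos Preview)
Your proposal is correct, and the variance computation you flag as ``the main technical obstacle'' is exactly right: identifying $\|\mathbb{E}[\bm{M}_1\bm{M}_1^\top]\|\leq p^{-1}\|\bm{w}\|_\infty^2\|\bm{A}^\star\|_{2,\infty}^2$ is the crux, and you pin it down cleanly. However, your route differs from the paper's. The paper does \emph{not} split into $\bm{M}_1+\bm{M}_2$ nor express the matrix as a triple sum of $d^3$ rank-one terms. Instead it keeps $\bm{X}:=(p^{-1}\bm{T}-\bm{T}^\star)\times_3\bm{w}$ whole and views it as a $d\times d$ matrix with \emph{independent entries} $X_{i,j}=\sum_k w_k\{T^\star_{ijk}(p^{-1}\chi_{ijk}-1)+p^{-1}E_{ijk}\chi_{ijk}\}$. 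It then runs a two-layer argument: first a \emph{scalar} Bernstein on each entry $X_{i,j}$ to obtain a truncation level $\beta\asymp L\log d+S\sqrt{\log d}$ (this is where $\|\bm{A}^{\star\top}\|_{2,\infty}$ enters), and then a \emph{truncated} matrix Bernstein at level $\beta$, yielding $\|\bm{X}\|\lesssim\beta\log d+\sqrt{V\log d}$. The variance proxy $V$ coincides with yours, so the $\sqrt{V\log d}$ term matches; the extra $\log d$ in $\beta\log d$ is what produces the $\sigma\log^{5/2}d/p$ factor in the statement. Your single-layer matrix Bernstein on the $d^3$-term sum is more direct and in fact yields $\sigma\log^{3/2}d/p$ for the noise piece (the ``mild logarithmic overhead'' you cite is unnecessary), so your argument is slightly sharper there; the paper's route, on the other hand, avoids the $\bm{M}_1/\bm{M}_2$ split and handles signal and noise uniformly.
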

\begin{proof} See Appendix \ref{sec:proof-lemma-T_loss_times3_op_norm}. \end{proof}


\begin{lemma}
\label{lemma:indep_gaussian_gap}
Let $\{ X_{i, j} \}_{1\leq i \leq r, 1\leq j \leq L}$ be a sequence of i.i.d.~standard Gaussian random variables, where $r  \geq 2$ and $L \geq 1$. Consider some quantities $\kappa \geq 1, \Delta > 0, 0<\delta <1/2$. There exists some universal constant $C>0$ such that if 
\begin{align*}
	L \geq C r^{2\kappa^2} \big( \kappa \sqrt{\log r} + \Delta \big) \exp (\Delta^2 ) \log\frac{1}{\delta},
\end{align*}
then with probability at least $1-\delta$, there exists some $1\leq j_0 \leq L$ such that 
\begin{align*}
X_{1, j_0} > \kappa\max_{i : 1 < i \leq r} \left| X_{i, j_0} \right| + \Delta.
\end{align*}
In addition, define $\Delta_j := X_{1, j} - \kappa\max_{i : 1 < i \leq r} \left| X_{i, j} \right|$ for each $1 \leq j \leq L$. Then with probability at least $1- 2\delta$,
\begin{align*}
\Delta_{(1)} - \Delta_{(2)} \gtrsim \frac{\delta}{\sqrt{\log L} + \sqrt{\log \( 1 / \delta \)}}.
\end{align*}
where $\Delta_{(1)} \geq \Delta_{(2)} \geq \dots \geq \Delta_{(L)}$ denote the order statistics of $\left\{ \Delta_j\right\}_{j=1}^L$ in descending order.
\end{lemma}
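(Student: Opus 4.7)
The proof splits naturally into the two claims, which I would handle separately: the first via a single-trial success-probability bound combined with independence across trials, and the second via an order-statistic anti-concentration argument after first localizing $\Delta_{(1)}$.

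For the first claim, let $M_j := \max_{i>1} |X_{i,j}|$. A Gaussian tail bound combined with the union bound yields $\Pr(M_j \leq \sqrt{2\log r}) \geq 1/2$ once $r$ exceeds an absolute constant. Conditioning on this event and using the independence of $X_{1,j}$ from $\{X_{i,j}\}_{i>1}$, one has $p := \Pr(X_{1,j} > \kappa M_j + \Delta) \geq \tfrac{1}{2} \bar\Phi(\kappa\sqrt{2\log r} + \Delta)$. Invoking Mills's inequality $\bar\Phi(t) \geq (2\pi)^{-1/2}(1+t)^{-1} e^{-t^2/2}$ together with the elementary estimate $(\kappa\sqrt{2\log r} + \Delta)^2 \leq 4\kappa^2 \log r + 2\Delta^2$ then gives
\[
p \;\gtrsim\; \frac{1}{(\kappa\sqrt{\log r} + \Delta)\, r^{2\kappa^2}\, e^{\Delta^2}}.
\]
Since the $L$ trials are mutually independent, the probability that none succeeds is $(1-p)^L \leq e^{-Lp}$; the hypothesis on $L$ forces $Lp \geq \log(1/\delta)$, so the claim holds with probability at least $1-\delta$.

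For the second claim, I would proceed in two steps. Step (i): since $\Delta_j \leq X_{1,j}$, each $\Delta_j$ is $1$-sub-Gaussian, so a union bound yields $\Delta_{(1)} \leq T := \sqrt{2\log(L/\delta)}$ with probability at least $1-\delta$. Step (ii): on this event I would control the gap via the joint density of the top two order statistics, $p_{(\Delta_{(1)}, \Delta_{(2)})}(u,v) = L(L-1) f_\Delta(u) f_\Delta(v) F_\Delta(v)^{L-2}$ for $u > v$, which gives
\[
\Pr\bigl(\Delta_{(1)} - \Delta_{(2)} \leq h,\; \Delta_{(1)} \leq T\bigr) \;\leq\; L(L-1)\, h \int_{-\infty}^{T} f_\Delta(v)^2 \, F_\Delta(v)^{L-2}\, dv.
\]
Integration by parts rewrites the integral as $(L-1)^{-1}\, \E[f_\Delta(W)]$, where $W$ denotes the maximum of $L-1$ independent copies of $\Delta$. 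Using $f_\Delta \leq \phi$ (a consequence of $\Delta_j + \kappa M_j = X_{1,j}$ with $\kappa M_j \geq 0$), together with the observation that $W$ concentrates near an upper quantile $v_\ast \asymp \sqrt{\log L}$ at which $\phi(v_\ast) \lesssim \sqrt{\log L}/L$, one arrives at a bound of order $\sqrt{\log(L/\delta)}/L^2$ for the integral. Choosing $h$ proportional to $\delta/(\sqrt{\log L} + \sqrt{\log(1/\delta)})$ then makes the right-hand side at most $\delta$, and combining the two steps by a union bound delivers the claim with probability $\geq 1-2\delta$.

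The main obstacle is the integral estimate in Step (ii). A crude bound $\|f_\Delta\|_\infty \leq (2\pi)^{-1/2}$ only produces the weaker lower bound $\delta/L$ on the gap; extracting the sharper $1/\sqrt{\log L}$ scaling requires exploiting the Gumbel-type concentration of $f_\Delta F_\Delta^{L-2}$ at the typical upper quantile $v_\ast$. Verifying this for the specific law of $\Delta_j = X_{1,j} - \kappa M_j$, rather than for a pure Gaussian, will rest on the two-sided sandwich $\tfrac{1}{2}\bar\Phi(v + \kappa\sqrt{2\log r}) \leq \bar F_\Delta(v) \leq \bar\Phi(v)$ combined with the hypothesis $L \gtrsim r^{2\kappa^2}$, which together place $v_\ast$ in the window where these Gaussian-like tail estimates are quantitatively effective.
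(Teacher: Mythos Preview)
Your argument for the first claim is correct and matches the paper's: lower-bound the single-trial success probability by conditioning on $M_j\leq\sqrt{2\log r}$, apply the Mills-ratio bound at the threshold $\kappa\sqrt{2\log r}+\Delta$, use $(\kappa\sqrt{2\log r}+\Delta)^2\leq 4\kappa^2\log r+2\Delta^2$, and amplify via independence over the $L$ trials. (One minor point: $\Pr(M_j\leq\sqrt{2\log r})\geq 1/2$ only holds for $r\geq 4$; the paper uses the smaller constant $1-1/\sqrt{\pi\log 2}$, which covers all $r\geq 2$.)

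For the second claim your route differs from the paper's, and the step you flag as the main obstacle does not close as you propose. The upper quantile $v_\ast$ of $\Delta_j$ solves $\bar F_\Delta(v_\ast)=1/L$; from your two-sided sandwich together with $L\gtrsim r^{2\kappa^2}$ one only obtains $v_\ast\gtrsim(\sqrt{2}-1)\sqrt{\log L}$, whence $\phi(v_\ast)$ can be as large as $L^{-(3-2\sqrt 2)/2}\approx L^{-0.086}$ --- nowhere near $\sqrt{\log L}/L$. The inequality $f_\Delta\leq\phi$ discards precisely the factor $\exp(-\kappa m_0 v_\ast)$ (with $m_0\asymp\sqrt{\log r}$) that makes $f_\Delta(v_\ast)$ small. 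Rescuing the approach would require estimating $f_\Delta$ directly via $f_\Delta(v)=\E_M[\phi(v+\kappa M)]$ and separately handling both tails of $W$; this is feasible but substantially more delicate than your sketch.

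The paper sidesteps the density analysis entirely. By exchangeability,
\[
\Pr\bigl(\Delta_{(1)}-\Delta_{(2)}<\epsilon\bigr)
=\Pr\bigl(X_{1,1}-Y<\epsilon\,\big|\,X_{1,1}\geq Y\bigr),
\qquad Y:=\kappa M_1+\max_{k\neq 1}\Delta_k,
\]
and the crucial observation is that $Y$ is \emph{independent} of $X_{1,1}$. Conditioning on $Y=x$ reduces to the Gaussian hazard-type ratio $g_\epsilon(x)=\Pr(x\leq Z<x+\epsilon)/\Pr(Z\geq x)$, which a short calculus argument shows is increasing in $x$; hence $g_\epsilon(x)\leq g_\epsilon(B)\leq 5\epsilon B$ whenever $x\leq B$. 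A sub-Gaussian tail bound gives $Y\leq B\asymp\sqrt{\log L}+\sqrt{\log(1/\delta)}$ with probability $1-\delta$ (the contribution $\kappa M_1\lesssim\kappa\sqrt{\log r}$ is absorbed into $\sqrt{\log L}$ via the assumption on $L$), so choosing $\epsilon\asymp\delta/B$ yields the claim with no need to locate $v_\ast$ or control $f_\Delta$.
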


\begin{proof}
	See Appendix~\ref{pf:indep_gaussian_gap}. 
\end{proof}

\begin{lemma}
\label{lemma:subspace_basis_dist}
	Let $\bm{U}$ (resp.~$\bm{V}$) be a $d\times r$ matrix with orthonormal columns.  Suppose that $\| \bm{U}\bm{U}^{\top} -  \bm{V}\bm{V}^{\top} \| \leq \delta$. Then for any unit vector $\bm{u}_0\in \mathbb{R}^d$ lying in $\mathsf{span}(\bm{U})$, we have
\begin{align}
	\big\| \PP_{\bm{V}} (\bm{u}_0 ) \big\|_2 \geq \sqrt{1-\delta^2} \qquad \text{and} \qquad \big\| \PP_{\bm{V}^\perp} (\bm{u}_0 ) \big\|_2 \leq \delta,
\end{align}
where we denote by $\PP_{\bm{V}} (\bm{u}_0 ) := \bm{V}\bm{V}^\top \bm{u}_0$ and $\PP_{\bm{V}^\perp} (\bm{u}_0 ) = \big( \bm{I}_d - \bm{V}\bm{V}^\top \big) \bm{u}_0$.
\end{lemma}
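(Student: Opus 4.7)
The plan is to exploit the identity $\bm{U}\bm{U}^\top \bm{u}_0 = \bm{u}_0$, which holds because $\bm{u}_0$ lies in the column span of $\bm{U}$ and $\bm{U}$ has orthonormal columns. Once this is in hand, the perpendicular projection onto $\bm{V}$ can be rewritten as
\begin{align*}
\mathcal{P}_{\bm{V}^\perp}(\bm{u}_0) = (\bm{I} - \bm{V}\bm{V}^\top)\bm{u}_0 = \bm{U}\bm{U}^\top \bm{u}_0 - \bm{V}\bm{V}^\top \bm{u}_0 = \big(\bm{U}\bm{U}^\top - \bm{V}\bm{V}^\top\big)\bm{u}_0,
\end{align*}
at which point taking norms and invoking the submultiplicativity of the operator norm immediately yields $\|\mathcal{P}_{\bm{V}^\perp}(\bm{u}_0)\|_2 \leq \|\bm{U}\bm{U}^\top - \bm{V}\bm{V}^\top\|\,\|\bm{u}_0\|_2 \leq \delta$.

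The first bound then follows from the orthogonal decomposition $\bm{u}_0 = \mathcal{P}_{\bm{V}}(\bm{u}_0) + \mathcal{P}_{\bm{V}^\perp}(\bm{u}_0)$, which, combined with the Pythagorean identity and $\|\bm{u}_0\|_2 = 1$, gives $\|\mathcal{P}_{\bm{V}}(\bm{u}_0)\|_2^2 = 1 - \|\mathcal{P}_{\bm{V}^\perp}(\bm{u}_0)\|_2^2 \geq 1 - \delta^2$, so $\|\mathcal{P}_{\bm{V}}(\bm{u}_0)\|_2 \geq \sqrt{1-\delta^2}$. There is no real obstacle here; the lemma is essentially a one-line consequence of rewriting the residual using the projector identity on $\mathsf{span}(\bm{U})$, together with the given spectral-norm bound on the difference of the two projectors.
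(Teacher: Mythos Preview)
Your proposal is correct and essentially identical to the paper's own proof: both use $\bm{U}\bm{U}^\top \bm{u}_0 = \bm{u}_0$ to rewrite $\mathcal{P}_{\bm{V}^\perp}(\bm{u}_0) = (\bm{U}\bm{U}^\top - \bm{V}\bm{V}^\top)\bm{u}_0$, bound its norm by $\delta$ via the operator-norm hypothesis, and then apply the Pythagorean theorem to obtain the lower bound on $\|\mathcal{P}_{\bm{V}}(\bm{u}_0)\|_2$.
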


\begin{proof} 
See Appendix \ref{pf:subspace_basis_dist}. 
\end{proof}

Additionally, we record several facts concerning the set of Bernoulli random variables $\{\chi_{i, j, k} \}_{1\leq i,j,k\leq d}$. We recall that 
\begin{align}
	\chi_{i, j, k} := \ind \{(i,j,k)\in \Omega\},
\end{align}
which is a Bernoulli random variable with mean $p$.  

%

\begin{lemma}
\label{lemma:sum_ij_square_cont_ineq}
Suppose that $p \gtrsim d^{-2}\log d $. With probability exceeding $1- O \( d^{-10} \)$, one has
\begin{align}
\sum_{i, j \in [d]} T^{\star 2}_{i,j,k} ( p^{-1} \chi_{i, j, k} - 1 )^2  
 \lesssim \frac{\mu r \lambda_{\max}^{\star 2} }{d p} , \qquad 1\leq k\leq d. 
\end{align}
\end{lemma}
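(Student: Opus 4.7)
The plan is to treat the stated sum as a concentration problem: for each fixed $k$, the random variables $Y_{i,j} := T^{\star 2}_{i,j,k}(p^{-1}\chi_{i,j,k}-1)^2$ are (by the independence assumption adopted throughout the proofs) independent across $(i,j)$, nonnegative, and bounded, so Bernstein's inequality should deliver the bound once the mean, variance, and max-magnitude are controlled at the right scale. The target bound $\mu r\lambda_{\max}^{\star 2}/(dp)$ matches exactly the rescaled Frobenius energy of the $k$-th mode-3 slice, so morally we are just showing that the sampling pattern does not distort this Frobenius energy by more than a constant.

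First I will collect two deterministic ingredients from the incoherence. From $\mu_0 = d^3 \|\bm{T}^\star\|_\infty^2/\|\bm{T}^\star\|_\frob^2$ and the direct estimate $\|\bm{T}^\star\|_\frob^2 = \sum_{i,j}\langle \bm{u}^\star_i,\bm{u}^\star_j\rangle^3 \lesssim r\lambda_{\max}^{\star 2}$ (the diagonal terms contribute $\sum_i\lambda_i^{\star 2}\leq r\lambda_{\max}^{\star 2}$, and the cross terms contribute at most $r^2(\mu/d)^{3/2}\lambda_{\max}^{\star 2}$ by $\mu_2$, which is dominated under the rank assumption $r\sqrt{\mu/d}\lesssim 1$ of Lemma~\ref{lemma:incoh}), we obtain the key bound $\|\bm{T}^\star\|_\infty^2 \lesssim \mu r\lambda_{\max}^{\star 2}/d^3$. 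Consequently $\sum_{i,j}T^{\star 2}_{i,j,k}\leq d^2\|\bm{T}^\star\|_\infty^2 \lesssim \mu r\lambda_{\max}^{\star 2}/d$, which is exactly the Frobenius-energy bound alluded to above.

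Now I set up Bernstein. Using $\mathbb{E}[(p^{-1}\chi-1)^2]=(1-p)/p \leq 1/p$, the expectation of the sum is $\mathbb{E}[S_k] \leq p^{-1}\sum_{i,j}T^{\star 2}_{i,j,k}\lesssim \mu r \lambda_{\max}^{\star 2}/(dp)$. Using $(p^{-1}\chi-1)^2\leq p^{-2}$, every summand satisfies $|Y_{i,j}|\leq \|\bm{T}^\star\|_\infty^2/p^2 \lesssim \mu r\lambda_{\max}^{\star 2}/(d^3p^2) =: L$. Using $\mathbb{E}[(p^{-1}\chi-1)^4]\lesssim 1/p^3$, the variance sum is bounded by $\sum_{i,j}T^{\star 4}_{i,j,k}/p^3 \leq \|\bm{T}^\star\|_\infty^2\cdot\sum_{i,j}T^{\star 2}_{i,j,k}/p^3 \lesssim \mu^2 r^2\lambda_{\max}^{\star 4}/(d^4 p^3)=:V$. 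Bernstein then gives, with probability at least $1-O(d^{-11})$, $|S_k-\mathbb{E}[S_k]|\lesssim \sqrt{V\log d}+L\log d$; both fluctuation terms reduce to $\mu r\lambda_{\max}^{\star 2}/(dp)$ precisely under the sample-size assumption $p\gtrsim \log d/d^2$. A union bound over $k\in[d]$ (absorbing one power of $d$ into $d^{-10}$) concludes.

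The only delicate point I expect is making sure the two incoherence ingredients ($\mu_0$ to control $\|\bm{T}^\star\|_\infty$, plus $\mu_2$ and the rank condition to control $\|\bm{T}^\star\|_\frob^2$) combine to give $\|\bm{T}^\star\|_\infty^2\lesssim \mu r\lambda_{\max}^{\star 2}/d^3$ rather than the weaker $\mu^3 r^2\lambda_{\max}^{\star 2}/d^3$ one would get from $\mu_1$ alone; using the weaker bound would inflate both $\mathbb{E}[S_k]$ and $L$ by unacceptable factors. Beyond this, the calculation is routine: Bernstein handles everything once the moments are correctly tied to the sample-size floor $p\gtrsim\log d/d^2$.
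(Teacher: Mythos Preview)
Your proposal is correct and follows essentially the same Bernstein argument as the paper, with the same mean, variance, and $L_\infty$ estimates and the same reduction under $p\gtrsim d^{-2}\log d$. The only cosmetic difference is that the paper bounds $\sum_{i,j}T^{\star 2}_{i,j,k}$ directly by $\|\bm{A}^\star\|_{2,\infty}^2$ (via symmetry of $\bm{T}^\star$) rather than by $d^2\|\bm{T}^\star\|_\infty^2$; both routes land on $\mu r\lambda_{\max}^{\star 2}/d$.
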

\begin{proof}
See Appendix~\ref{pf:sum_ij_square_cont_ineq}.
\end{proof}

%

\begin{lemma}
\label{lemma:sum_ij_square_cont_ineq_subg}
Suppose that $p\gtrsim d^{-2}\log^2 d$.  With probability exceeding $1- O \( d^{-10} \)$, one has
\begin{align}
\sum_{i, j \in [d]}  \( p^{-1} E_{i, j, k} \chi_{i, j, k}  \)^2  
	\lesssim  \sigma^{2}d^{2} / p,
\qquad 1\leq k\leq d. 
\end{align}
\end{lemma}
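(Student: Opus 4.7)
The quantity in question is, for each fixed $k$, a sum of i.i.d.\ (or nearly i.i.d., by the independence simplification invoked elsewhere in the paper) nonnegative random variables $Y_{i,j} := p^{-2} E_{i,j,k}^{2}\chi_{i,j,k}$. My plan is to show that $\sum_{i,j}Y_{i,j}$ concentrates around its expectation via a sub-exponential Bernstein inequality, and then take a union bound over $k\in[d]$.

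First I would compute the expectation: since $\chi_{i,j,k}\sim\mathrm{Bern}(p)$ is independent of $E_{i,j,k}$ and $\mathsf{Var}(E_{i,j,k})\leq\sigma^{2}$, we have $\E[Y_{i,j}]\leq\sigma^{2}/p$, so $\E\sum_{i,j}Y_{i,j}\leq d^{2}\sigma^{2}/p$. Next, since $E_{i,j,k}$ is $\sigma$-sub-Gaussian, $E_{i,j,k}^{2}$ is sub-exponential with $\|E_{i,j,k}^{2}\|_{\psi_{1}}\lesssim\sigma^{2}$. Combined with the Bernoulli weight and the $p^{-2}$ rescaling, this yields
\begin{align*}
\|Y_{i,j}\|_{\psi_{1}}\lesssim\frac{\sigma^{2}}{p^{2}}=:L,\qquad\mathsf{Var}(Y_{i,j})\lesssim\frac{\E[E_{i,j,k}^{4}]}{p^{3}}\lesssim\frac{\sigma^{4}}{p^{3}},
\end{align*}
so that the variance proxy is $V:=\sum_{i,j}\mathsf{Var}(Y_{i,j})\lesssim d^{2}\sigma^{4}/p^{3}$.

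Then I would invoke the Bernstein inequality for sums of independent sub-exponential random variables: with probability at least $1-2e^{-u}$,
\begin{align*}
\Big|\sum\nolimits_{i,j}Y_{i,j}-\E\sum\nolimits_{i,j}Y_{i,j}\Big|\lesssim\sqrt{Vu}+Lu.
\end{align*}
Choosing $u\asymp\log d$ to prepare for the subsequent union bound over $k\in[d]$, the deviation bound becomes $d\sigma^{2}\sqrt{\log d}/p^{3/2}+\sigma^{2}\log d/p^{2}$. Both of these are dominated by $d^{2}\sigma^{2}/p$ precisely under the sample-size condition $p\gtrsim\log^{2}d/d^{2}$ (the second term is the binding one, requiring $\log d/p\lesssim d^{2}$, which together with the square-root factor necessitates the squared-log). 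Consequently, for each fixed $k$, we obtain $\sum_{i,j}Y_{i,j}\lesssim d^{2}\sigma^{2}/p$ with probability $1-O(d^{-11})$; taking a union bound over $1\leq k\leq d$ yields the claim with probability $1-O(d^{-10})$.

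The main obstacle—if any—is handling the heavy tail of $E_{i,j,k}^{2}$, which is merely sub-exponential rather than bounded (in contrast to the bounded-entry argument used for Lemma~\ref{lemma:sum_ij_square_cont_ineq}). This forces the use of the two-regime Bernstein bound, and the sub-exponential tail parameter $L=\sigma^{2}/p^{2}$ is what generates the $\log^{2}d$ rather than $\log d$ requirement on $p$. All other steps are routine moment computations.
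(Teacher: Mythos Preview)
Your proposal is correct and matches the paper's proof essentially line for line: both apply a sub-exponential Bernstein inequality to the sum $\sum_{i,j} E_{i,j,k}^2 \chi_{i,j,k}$ (the paper works with the unscaled sum and divides by $p^2$ at the end, whereas you carry the $p^{-2}$ factor throughout), compute the mean $\asymp \sigma^2 d^2 p$, variance proxy $\asymp \sigma^4 d^2 p$, and sub-exponential parameter $\asymp \sigma^2$, and then take a union bound over $k$. The only cosmetic difference is that the paper writes the sub-exponential tail contribution as $L\log^2 d$ rather than your $L\log d$, which is why the stated condition reads $p\gtrsim d^{-2}\log^2 d$; your own deviation terms actually only require $p\gtrsim d^{-2}\log d$, so your aside about the squared log being ``necessitated'' is slightly off, but this is harmless since the lemma's hypothesis is the stronger one anyway.
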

\begin{proof}
See Appendix~\ref{pf:sum_ij_square_cont_ineq_subg}. 
\end{proof}

\begin{lemma}
\label{lemma:sum_jk_square_cont_ineq}
Suppose $p \gtrsim d^{-2} \log d$ and $\mu \log^2 d \lesssim d$. Consider any fixed vector $\bm{w}\in \R^d$ and any $1\leq i\leq d$. With probability exceeding $1- O ( d^{-10} )$, one has
\begin{align}
\sum_{j \in [d]} \Big(\sum_{k \in [d]} T^\star_{i,j,k} w_{k} ( p^{-1} \chi_{i, j, k} - 1) \Big)^2 
 \lesssim \frac{\mu r \lammax^{\star 2}}{dp} \norm{\bm{w}}_\infty^2.
\end{align}
\end{lemma}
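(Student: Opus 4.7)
Write $X_{j}:=\sum_{k\in[d]}T^{\star}_{i,j,k}w_{k}(p^{-1}\chi_{i,j,k}-1)$ and $Z:=\sum_{j}X_{j}^{2}$. Since $\{\chi_{i,j,k}\}_{j,k}$ are mutually independent for fixed $i$, the $X_{j}$ are independent centered random variables, each a sum of $d$ independent bounded terms. The plan is to establish a sharp bound on $\mathbb{E}[Z]$ and then prove concentration via a two-stage Bernstein argument. For the mean, $\mathbb{E}[X_{j}^{2}]=\tfrac{1-p}{p}\sum_{k}T^{\star 2}_{i,j,k}w_{k}^{2}\leq p^{-1}\norm{\bm{w}}_{\infty}^{2}\sum_{k}T^{\star 2}_{i,j,k}$, so summing over $j$ yields $\mathbb{E}[Z]\leq p^{-1}\norm{\bm{w}}_{\infty}^{2}\norm{\bm{T}^{\star}_{i,:,:}}_{\mathrm{F}}^{2}$; invoking Lemma~\ref{lemma:incoh} (the $i$-th row of $\bm{A}^{\star}$ has squared $\ell_{2}$-norm equal to $\norm{\bm{T}^{\star}_{i,:,:}}_{\mathrm{F}}^{2}\leq 2\mu r\lambda_{\max}^{\star 2}/d$) gives $\mathbb{E}[Z]\leq 2\mu r\lambda_{\max}^{\star 2}\norm{\bm{w}}_{\infty}^{2}/(dp)$, the target order.

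To control the deviation $Z-\mathbb{E}[Z]$, I first derive a uniform upper bound on $\max_{j}|X_{j}|$. For each fixed $j$, $X_{j}$ is a sum of $d$ independent centered summands each bounded in magnitude by $p^{-1}\norm{\bm{T}^{\star}}_{\infty}\norm{\bm{w}}_{\infty}$ with variance proxy $p^{-1}\sum_{k}T^{\star 2}_{i,j,k}w_{k}^{2}$. Bernstein's inequality together with a union bound over $j\in[d]$ then yields $\max_{j}|X_{j}|\leq M$ with probability at least $1-O(d^{-20})$, where --- using the slice-norm bound $\max_{j}\sum_{k}T^{\star 2}_{i,j,k}\leq d\norm{\bm{T}^{\star}}_{\infty}^{2}\leq 2\mu r\lambda_{\max}^{\star 2}/d^{2}$ and the entrywise bound $\norm{\bm{T}^{\star}}_{\infty}\leq\sqrt{2\mu r}\,\lambda_{\max}^{\star}/d^{3/2}$ from Lemma~\ref{lemma:incoh} --- $M^{2}\lesssim \mathbb{E}[Z]\log d/d + \mathbb{E}[Z]\log^{2}d/(d^{2}p)$. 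Under the hypothesized sample-size lower bound on $p$, both summands are dominated by $\mathbb{E}[Z]/\log d$.

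Conditional on this event, the independent mean-zero variables $X_{j}^{2}-\mathbb{E}[X_{j}^{2}]$ are bounded by $2M^{2}$, with aggregate variance $\sum_{j}\mathsf{Var}(X_{j}^{2})\leq M^{2}\sum_{j}\mathbb{E}[X_{j}^{2}]=M^{2}\mathbb{E}[Z]$. A second application of Bernstein's inequality then gives $|Z-\mathbb{E}[Z]|\lesssim M\sqrt{\mathbb{E}[Z]\log d}+M^{2}\log d\lesssim \mathbb{E}[Z]$, completing the argument. The principal obstacle is keeping $M^{2}\log d$ strictly subdominant to $\mathbb{E}[Z]$: this relies crucially on the $\ell_{\infty}$-incoherence of $\bm{T}^{\star}$ (not just the row-norm bound on $\bm{A}^{\star}$), since without a spread-out entrywise hypothesis the largest entry of $\bm{T}^{\star}$ could be comparable to $\lambda_{\max}^{\star}$, inflating $M^{2}$ to the order of $\mathbb{E}[Z]$ and invalidating the second-moment concentration step.
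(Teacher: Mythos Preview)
Your two-stage Bernstein argument---first bounding each $X_j$ individually, then applying Bernstein to $\sum_j X_j^2$ after conditioning on $\max_j|X_j|\leq M$---is exactly the paper's approach (the paper phrases the conditioning step as an explicit truncation $Y_j:=X_j\ind\{|X_j|\leq L_j\}$, but this is equivalent). One cosmetic fix: your displayed bound $M^2\lesssim \mathbb{E}[Z]\log d/d+\mathbb{E}[Z]\log^2 d/(d^2 p)$ should be written with the target quantity $R:=\mu r\lambda_{\max}^{\star 2}\|\bm{w}\|_\infty^2/(dp)$ in place of $\mathbb{E}[Z]$, since you have only established $\mathbb{E}[Z]\leq R$, not a lower bound; with that replacement the final step $Z\leq \mathbb{E}[Z]+|Z-\mathbb{E}[Z]|\lesssim R$ goes through exactly as you describe.
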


\begin{proof}
See Appendix~\ref{pf:sum_jk_square_cont_ineq}.
\end{proof}

\begin{lemma}
\label{lemma:sum_jk_square_cont_ineq_subg}
Consider any fixed vector $\bm{w} \in \R^d$.  With probability  $1- O ( d^{-10} )$ one has
\begin{align}
 \sum_{j \in [d]} \Big(\sum_{k \in [d]} w_{k}  E_{i, j, k} \chi_{i, j, k} \Big)^2 \lesssim \sigma^2 d p \norm{\bm{w}}_2^2 + \sigma^{2}\left\|\bm{w} \right\|_{\infty}^{2}\log^5 d, \qquad 1\leq i \leq d. 
\end{align}
\end{lemma}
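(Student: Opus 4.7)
Let $X_j := \sum_{k \in [d]} w_k E_{i,j,k}\chi_{i,j,k}$, so that the goal is to establish
\begin{align*}
\sum_{j \in [d]} X_j^2 \lesssim \sigma^2 dp \norm{\bm{w}}_2^2 + \sigma^2 \norm{\bm{w}}_\infty^2 \log^5 d
\end{align*}
with probability at least $1-O(d^{-10})$ for each fixed $i \in [d]$. The plan is to first condition on the sampling pattern $\{\chi_{i,j,k}\}$, exploit the sub-Gaussian structure of each $X_j \mid \chi$, and then re-incorporate the randomness in $\chi$ via an independent Bernstein-style argument. This two-step decoupling is precisely what allows the $\sigma^2 dp\norm{\bm{w}}_2^2$ term to appear cleanly, i.e.\ without an extra $\log d$ factor.

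Conditional on $\chi$, the summands defining $X_j$ are independent centered sub-Gaussian random variables, so $X_j \mid \chi$ is itself sub-Gaussian with variance proxy $\sigma^2 s_j$, where $s_j := \sum_{k} w_k^2 \chi_{i,j,k}$. It follows that $X_j^2 - \E[X_j^2 \mid \chi]$ is a centered sub-exponential random variable with $\psi_1$-norm $\lesssim \sigma^2 s_j$, and the $X_j$'s are conditionally independent across $j$. Applying Bernstein's inequality for sums of independent sub-exponentials (conditional on $\chi$),
\begin{align*}
\Big| \sum_{j \in [d]} \big(X_j^2 - \E[X_j^2 \mid \chi]\big) \Big| \lesssim \sigma^2 \sqrt{\sum_{j \in [d]} s_j^2 \cdot \log d} + \sigma^2 \max_{j \in [d]} s_j \cdot \log^2 d
\end{align*}
with probability at least $1-O(d^{-20})$.

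To unwind the conditioning, I would control the $\chi$-dependent quantities via elementary Bernstein/union-bound arguments: (i) the conditional expectation $\sum_j \E[X_j^2 \mid \chi] = \sigma^2 \sum_{j,k} w_k^2 \chi_{i,j,k}$ is a weighted sum of independent Bernoullis with mean $dp\sigma^2\norm{\bm{w}}_2^2$, so Bernstein gives $\sum_j \E[X_j^2\mid\chi] \lesssim \sigma^2 dp \norm{\bm{w}}_2^2 + \sigma^2 \norm{\bm{w}}_\infty^2 \log d$ after an AM-GM step on the cross term; (ii) a union bound over $j \in [d]$ yields $\max_j s_j \lesssim p\norm{\bm{w}}_2^2 + \norm{\bm{w}}_\infty^2 \log d$; and (iii) the crude inequality $\sum_j s_j^2 \le (\max_j s_j)(\sum_j s_j)$, combined with (i) and (ii), yields $\sqrt{\sum_j s_j^2 \log d} \lesssim dp\norm{\bm{w}}_2^2 + \norm{\bm{w}}_\infty^2 \log^2 d$, again via AM-GM. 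Assembling these three estimates with the conditional Bernstein bound produces the claimed inequality, with ample room in the log factors relative to the advertised $\log^5 d$.

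The main technical obstacle is guaranteeing that the variance-type term $\sigma^2 dp\norm{\bm{w}}_2^2$ shows up \emph{without} an extra logarithmic factor: a naive route that applies Bernstein to each $X_j$ separately, squares the resulting high-probability bound, and then sums over $j$ would multiply this leading term by $\log d$, producing a bound of the wrong order. The conditioning trick above circumvents this precisely because the conditional second moment $\E[X_j^2 \mid \chi]$ is known \emph{exactly} (it equals $\sigma^2 s_j$ up to the sub-Gaussian constant), so only the concentration of the Bernoullis $\chi_{i,j,k}$ contributes to fluctuations of the dominant term---and that concentration is extremely sharp.
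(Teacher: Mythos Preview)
Your proposal is correct, and it takes a genuinely different route from the paper. The paper does \emph{not} condition on the sampling pattern $\chi$; instead it treats each summand $w_k E_{i,j,k}\chi_{i,j,k}$ as sub-exponential in the joint randomness, applies Bernstein to bound $|Z_j|\lesssim \sigma\|\bm w\|_2\sqrt{p\log d}+\sigma\|\bm w\|_\infty\log^2 d=:L_j$, introduces truncated variables $Y_j:=Z_j\ind\{|Z_j|\lesssim L_j\}$, and then applies a second Bernstein inequality to $\sum_j Y_j^2$ using explicit fourth-moment bounds $\sum_j\E[Z_j^4]\lesssim\sigma^4 pd\|\bm w\|_2^2\|\bm w\|_\infty^2+\sigma^4 p^2 d\|\bm w\|_2^4$. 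The leading term $\sigma^2 dp\|\bm w\|_2^2$ emerges log-free there because it is the \emph{mean} $\sum_j\E[Y_j^2]$, not a deviation. Your conditioning-on-$\chi$ decomposition achieves the same effect by a different mechanism: the conditional mean $\sigma^2\sum_j s_j$ is a weighted Bernoulli sum whose fluctuations are lower order, so again no log factor attaches to it. Your route avoids the truncation step entirely and cleanly separates the two sources of randomness, at the cost of a bit more bookkeeping on the $s_j$ quantities; the paper's route is more direct but needs the truncation to make the second Bernstein application legitimate. Both arguments land comfortably within the stated $\log^5 d$ budget.
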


\begin{proof}
See Appendix~\ref{pf:sum_jk_square_cont_ineq_subg}.
\end{proof}

\subsection{Proof of Lemma~\ref{lemma:incoh}}
\label{pf:incoh}

\begin{enumerate}
	\item To begin with, the incoherence condition \eqref{asmp_incoh} gives
%
%
\begin{align*}
\|\bm{T}^{\star}\|_{\mathrm{F}}^2 &= \Big\langle \sum\nolimits_{i\in[r]} \bm{u}_{i}^{\star\otimes3}, \sum\nolimits_{i\in[r]} \bm{u}_{i}^{\star\otimes3} \Big\rangle \\
& = \sum_{1 \leq i \leq r} \norm{\bm{u}_{i}^{\star \ot 3}}_{\mathrm{F}}^2 + \sum_{1 \leq i \neq j \leq r} \left\langle \bm{u}_{i}^{\star\otimes3}, \bm{u}_{j}^{\star\otimes3} \right\rangle \\
& \overset{(\mathrm{i})}{=} \sum_{1 \leq i \leq r} \norm{\bm{u}_{i}^{\star}}_2^6 + \sum_{1 \leq i \neq j \leq r}\left\langle \bm{u}_{i}^{\star}, \bm{u}_{j}^{\star} \right\rangle^3 \\
& \leq r \max_{1 \leq i \leq r} \norm{\bm{u}_{i}^{\star}}_2^6 + r^2 \max_{1 \leq i \neq j \leq r} \big|  \left\langle \bm{u}_{i}^{\star}, \bm{u}_{j}^{\star} \right\rangle^3 \big| \\
& \leq r  \lambda_{\max}^{\star 2} + r^2  \( \frac{\mu}{d} \)^{3/2}  \lambda_{\max}^{\star 2} \\
&\overset{(\mathrm{ii})}{\leq}  2r  \lambda_{\max}^{\star 2},
\end{align*}
where we use the fact that $\inprod{\bm{u}^{\ot 3} }{\bm{v}^{\ot 3}} = \inprod{\bm{u}}{\bm{v}}^3$ in (i), and (ii) arises due to the condition that $3r \leq \sqrt{d / \mu} \leq (d / \mu)^{3/2}$. Using a similar argument, we also know that $\|\bm{T}^{\star}\|_{\mathrm{F}}^2 \geq r  \lambda_{\min}^{\star 2} / 2$. This combined with the incoherence condition in (\ref{asmp_max}) yields
\begin{align*}
\norm{\bm{A}^\star}_\infty = \norm{\bm{T}^\star}_\infty \leq \sqrt{\frac{\mu}{d^3}} \norm{\bm{T}^\star}_{\mathrm{F}} \leq \frac{\sqrt{2 \mu r}}{d^{3/2}} \lambda_{\max}^{\star}.
\end{align*}

\item For any $1\leq i \leq d$, the $\ell_2$ norm of the $i$-th row of $\bm{A}^{\star}$ can be bounded by
\begin{align}
\big\| \bm{A}^{\star}_{i, :} \big\|_2^2 &= \Big\| \sum\nolimits_{1 \leq s \leq r} \big( \bm{u}_s^\star \big)_i \big( \bm{u}_s^\star \ot \bm{u}_s^\star \big)^\top \Big\|_2^2 \nonumber\\
& = \sum_{1 \leq s \leq r} \big( \bm{u}_s^\star \big)_i^2 \norm{\bm{u}_s^\star \ot \bm{u}_s^\star}_2^2 
+ \sum_{1 \leq s_1 \neq s_2 \leq r} \big( \bm{u}_{s_1}^\star \big)_i \big( \bm{u}_{s_2}^\star \big)_i \inprod{\bm{u}_{s_1}^\star \ot \bm{u}_{s_1}^\star }{ \bm{u}_{s_2}^\star \ot \bm{u}_{s_2}^\star } \nonumber\\
& \leq r  \,  \max_{s \in [r]}\| \bm{u}^\star_s \|_\infty^2  \max_{s \in [r]}\| \bm{u}^\star_s \|_2^4 + r^2  \,  \max_{1 \leq s \leq r}\| \bm{u}^\star_s \|_\infty^2  \max_{s_1 \neq s_2} \inprod{\bm{u}_{s_1}^\star}{\bm{u}_{s_2}^\star}^2 \nonumber\\
& \leq \lambda_{\max}^{\star 2} \(\frac{\mu r}{d} + \frac{\mu^2 r^2}{d^2} \)  \leq \frac{2 \mu r \lambda_{\max}^{\star 2} }{d},  \label{A_row_norm}
\end{align}
where the first inequality follows from the fact that $\inprod{\bm{u} \ot \bm{u}}{\bm{v} \ot \bm{v}} = \inprod{\bm{u}}{\bm{v}}^2$, the second inequality holds true due to \eqref{asmp_incoh} and  \eqref{asmp_corr}, and the last inequality holds as long as $r \leq d / \mu$. This immediately yields the advertised bound on 
$\big\| \bm{A}^{\star} \big\|_{2,\infty}$.

\item For any $\vec{j} \in \[d\]^2$ (which corresponds to $(j_1,j_2)$), the $\ell_2$ norm of the $\vec{j}$-th column of $\bm{A}^{\star}$ can be upper bounded similarly by
\begin{align}
\Big\| \bm{A}^{\star}_{:, \vec{j}} \Big\|_2^2 &= \Big\| \sum\nolimits_{1 \leq s \leq r} \big( \bm{u}_s^\star \ot \bm{u}_s^\star \big)_{\vec{j}} \bm{u}_s^\star \Big\|_2^2 \nonumber \\
& = \sum_{1 \leq s \leq r} \big( \bm{u}_s^\star \big)_{j_1}^2 \big( \bm{u}_s^\star \big)_{j_2}^2 \| \bm{u}^\star_s \|_2^2
+ \sum_{s_1 \neq s_2} \big( \bm{u}_{s_1}^\star \big)_{j_1} \big( \bm{u}_{s_1}^\star \big)_{j_2} \big( \bm{u}_{s_2}^\star \big)_{j_1} \big( \bm{u}_{s_2}^\star \big)_{j_2} \inprod{\bm{u}_{s_1}^\star }{ \bm{u}_{s_2}^\star } \nonumber \\
& \leq  r \, \max_{1 \leq s \leq r}\| \bm{u}^\star_s \|_\infty^4 \max_{s \in [r]}\| \bm{u}^\star_s \|_2^2 + r^2 \, \max_{1 \leq s \leq r}\| \bm{u}^\star_s \|_\infty^4 \max_{1 \leq s_1 \neq s_2 \leq r} \big| \inprod{\bm{u}_{s_1}^\star}{\bm{u}_{s_2}^\star} \big|  \nonumber \\
& \leq \lambda_{\max}^{\star 2} \( \frac{\mu^2 r }{d^2} + \frac{\mu^{5/2} r^2 }{d^{5/2}} \) 
 \leq  \frac{2 \mu^2 r \lambda_{\max}^{\star 2} }{d^2},
\label{A_col_norm}
\end{align}
 where the second inequality is valid due to \eqref{asmp_incoh} and \eqref{asmp_corr}, and the last inequality holds as long as $r \leq \sqrt{d / \mu}$. This yields the claimed bound regarding $\| \bm{A}^{\star \top} \|_{2, \infty}$. 
 
 \item Regarding the spectrum of $\bm{A}^\star$, $\bm{B}^\star$, $\bm{U}_\orth^\star$ and $\overline{\bm{U}}^\star$, we refer the reader to the proof of \cite[Corollary~1]{cai2019subspace}.

\item
We now move on to $\norm{\bm{B}^\star}_{2, \infty}$. For any $i \in \[d\]$, it is seen that
\begin{align*}
\norm{\bm{B}^\star_{i, :}}_2^2 
&= \sum_{j \in [d]} \( \bm{A}_{i, :}^{\star} \bm{A}_{j, :}^{\star \top} \)^2 
	= \bm{A}_{i, :}^{\star} \Big( \sum_{j \in [d]}  \bm{A}_{j, :}^{*\top} \bm{A}_{j, :}^{\star} \Big)  \bm{A}_{i, :}^{*\top}  
	 = \bm{A}_{i, :}^{\star} \( \bm{A}^{\star \top}  \bm{A}^{\star} \) \bm{A}_{i, :}^{*\top}  \\
	& \leq \left\| \bm{A}^{\star \top}  \bm{A}^{\star} \right\| \norm{\bm{A}_{i, :}^{\star}}_2^2 
 \leq 2 \lambda_{\max}^{\star 2} \norm{\bm{A}_{i, :}^{\star}}_2^2.
\end{align*}
Here, the last line makes use of the bound $\|\bm{A}^{\star} \|\leq \lambda_{\max}^{\star} (1+O(r\sqrt{\mu/d}))  \leq 2\lambda_{\max}^{\star}$, which holds if  $ r\sqrt{\mu/d} \leq c_1$ for some sufficiently small constant $c_1>0$. 
It then follows from \eqref{A_row_norm} that
\begin{align}
\norm{\bm{B}^\star}_{2, \infty} \leq \sqrt{2} \, \lambda_{\max}^{\star} \norm{\bm{A}^{\star}}_{2, \infty} \leq 2 \lambda_{\max}^{\star 2} \sqrt{\frac{\mu r}{d}}.
\end{align}



\end{enumerate}


\subsection{Proof of Lemma~\ref{lemma:T_op_norm} and Corollary \ref{cor:bound-PE}}
\label{pf:T_op_norm}

\subsubsection{Proof of Lemma~\ref{lemma:T_op_norm}}

We start by making the following simple observation: the tensor spectral norm is a $1$-Lipschitz function (w.r.t.~the Frobenius norm) of the entries of the tensor. This follows since $\big| \|\bm{T}\| - \| \bm{R} \| \big| \leq \| \bm{T} - \bm{R} \| \leq \| \bm{T} - \bm{R} \|_\frob$ holds for any tensor $\bm{T}, \bm{R} \in \R^{d \times d \times d}$. This allows us to invoke standard concentration results regarding functions of independent random variables.  

We shall first develop an upper bound on the mean $\E \big[ \big\| \PP_\Omega ( \bm{R} ) \big\| \big]$. 
In view of \cite[Corollary 4]{nguyen2015tensor} and Jensen's inequality, one has
\begin{align}
	& \E \big[ \left\| \bm{R} \right\| \big]  \leq \sqrt{\E \big[ \left\| \bm{R} \right\|^2 \big] }  \nonumber\\
	& \quad \lesssim \Bigg( \E \Bigg[ \max_{j, k \in [d]} \sum_{i \in [d]} R_{i, j, k}^2 + \max_{i,j \in [d]} \sum_{k \in [d]} R_{i, j, k}^2 + \max_{i, k \in [d]} \sum_{j \in [d]} R_{i, j, k}^2 \Bigg] \Bigg)^{1/2} \log^{5/2} d.
\label{eq:bound-nguyen2015tensor}
\end{align}
We then need to bound the quantity presented in \eqref{eq:bound-nguyen2015tensor}.

For some $\beta > 0$ to be specified later, one can upper bound
\begin{align}
\E \Big[ \max\nolimits_{j, k \in [d]} \sum\nolimits_{i \in [d]} R_{i, j, k}^2  \Big] & = \int_0^\infty \P \Big\{ \max_{j, k \in [d]} \sum\nolimits_{i \in [d]} R_{i, j, k}^2  > t \Big\} \, \mathrm{d} t  \nonumber\\
& \leq \beta + \int_\beta^\infty \P \Big\{ \max_{j, k \in [d]} \sum\nolimits_{i \in [d]} R_{i, j, k}^2  > t \Big\} \, \mathrm{d} t \nonumber \\
& \leq \beta + d^2 \int_{\beta}^\infty \P \Big\{ \sum\nolimits_{i \in [d]} R_{i, j, k}^2  > t \Big\} \, \mathrm{d} t . 
\label{eq:Emax-entries-UB}
\end{align}
We shall resort to the Bernstein inequality to bound $\P \big\{ \sum\nolimits_{i \in [d]} R_{i, j, k}^2 > t \big\}$. It is straightforward to compute that
\begin{align*}
M &:= \sum_{i\in \[ d \]} \E \[ R_{i, j, k}^2 \] \leq \sigma_\mode^2, \\
L &:=  \max_{i \in \[ d \]} \left| R_{i, j, k}^2\right| \leq B^2,  \\
S^2 &:= \sum_{i\in \[ d \]} \E \[ R_{i, j, k}^4 \] \leq B^2 \sigma_\mode^2.
\end{align*}
The Bernstein inequality then tells us that 
\begin{align}
	\label{eq:Berstein-100}
\mathbb{P}\left\{ \sum\nolimits_{i \in [d]} R_{i, j, k}^2 - M >t \right\} \leq \exp\left(-\frac{3}{8}\min\left\{ \frac{t^{2}}{S^{2}},\frac{t}{L}\right\} \right),\quad t>0. 
\end{align}
In particular, this implies that with probability exceeding $1-O\left(d^{-20}\right)$, 
\begin{align*}
\sum\nolimits_{i \in [d]} R_{i, j, k}^2 &\lesssim M + L \log d + S \sqrt{\log d} \lesssim \sigma_\mode^2 + B^2 \log d + \sqrt{ B^2 \sigma_\mode^2 \log d} \\
& \asymp \sigma_\mode^2 + B^2 \log d,
\end{align*}
where we have used the AM-GM inequality in the last step.
Therefore, by taking
\begin{align*}
	\beta:=C \( \sigma_\mode^2 + B^2 \log d \)^{1/2}
\end{align*}
for some sufficiently large constant $C>0$, we arrive at
\begin{align}
	\beta\geq\frac{C}{3}\left(M + L\log d+S\sqrt{\log d}\right) \gg M + L\log d+S\sqrt{\log d}. \label{eq:beta-LB1}
\end{align}
Given that $\beta \gg M$,  for any $t \geq \beta$ one has the following relations about several events
\begin{align}
\left\{ \sum\nolimits_{i \in [d]} R_{i, j, k}^2  > t  \right\} = \left\{ \sum\nolimits_{i \in [d]} R_{i, j, k}^2   - M > t - M  \right\} \subset \left\{ \sum\nolimits_{i \in [d]} R_{i, j, k}^2  - M > t - \beta / 2 \right\}
	\label{eq:relation-3events}
\end{align}
In addition, it is easily seen that 
\begin{align}
	\label{eq:t-S-LB}
	\min\left\{ t^{2}/S^{2},t/L\right\} \geq \frac{t}{ \max\left\{ S/\sqrt{\log d},L\right\}}  
\end{align}
for any $t\geq\beta$ (with $\beta$ obeying \eqref{eq:beta-LB1}). As a result, one can bound
 \begin{align*}
 \int_{\beta}^\infty \P \Big\{ \sum\nolimits_{i \in [d]} R_{i, j, k}^2  > t \Big\} \, \mathrm{d} t 
  & \overset{\mathrm{(i)}}{\leq}  \int_{\beta}^\infty \P \Big\{ \sum\nolimits_{i \in [d]} R_{i, j, k}^2 -M > t - \beta/2 \Big\} \, \mathrm{d} t \\
  & = \int_{\beta/2}^\infty \P \Big\{ \sum\nolimits_{i \in [d]} R_{i, j, k}^2 -M > t \Big\} \, \mathrm{d} t \\
  &  \overset{\mathrm{(ii)}}{\leq}  \int_{\beta / 2}^\infty  \exp\left(-\frac{3}{8}\min\left\{ \frac{t^{2}}{S^{2}},\frac{t}{L}\right\} \right)  \mathrm{d} t \\
  & \overset{\mathrm{(iii)}}{\leq}  \int_{\beta / 2}^{\infty}\exp\left(-\frac{3}{8}\frac{t}{\max\left\{ S/\sqrt{\log d},L\right\} }\right)\mathrm{d}t\\
  & \lesssim   \max\left\{ S/\sqrt{\log d},L\right\} \exp\left(-\frac{3}{16}\frac{\beta}{\max\left\{ S/\sqrt{\log d},L\right\} }\right)\\
  & \overset{\mathrm{(iv)}}{\lesssim}   \beta \exp\left(-\frac{1}{16} C \log d\right) \ll \beta / d^2, 
 \end{align*}
 where (i) follows from \eqref{eq:relation-3events}, (ii) comes from \eqref{eq:Berstein-100}, (iii) is a consequence of \eqref{eq:t-S-LB}, and (iv) holds true when $C>0$ is sufficiently large. 
 Consequently, 
 \begin{align*}
 \E \Big[ \max\nolimits_{j, k \in [d]} \sum\nolimits_{i \in [d]} R_{i, j, k}^2 \Big] \lesssim \beta \lesssim B \sqrt{\log d} + \sigma_\mode.
 \end{align*}
Clearly, the same bound holds for $\E \Big[ \max\nolimits_{i, k \in [d]} \sum\nolimits_{j \in [d]} R_{i, j, k}^2 \Big]$ and $\E \Big[ \max\nolimits_{i,j \in [d]} \sum\nolimits_{k \in [d]} R_{i, j, k}^2 \Big]$.

Substitution into \eqref{eq:bound-nguyen2015tensor} yields 
\begin{align}
\E \big[ \left\| \bm{R} \right\| \big] \lesssim B \log^3 d + \sigma_\mode \log^{5/2} d.
\end{align}
Recognizing that the magnitudes of all entries of $ \bm{R} $ are bounded by $B$, we can invoke Talagrand's concentration inequality \cite[Theorem 5.2.16]{vershynin2018high} for convex Lipschitz functions of independent bounded random variables to show that  with probability $1-O(d^{-10})$,
\begin{align*}
\left| \left\| \bm{R} \right\|- \mathbb{E}\big[\left\| \bm{R} \right\|\big] \right| \lesssim B \sqrt{\log d}
\end{align*}
and, therefore, 
\begin{align}
 \label{eq:T_op_norm_temp}
  \left\| \bm{R} \right\|  \lesssim B \log^3 d + \sigma_\mode \log^{5/2} d.
\end{align}

\subsubsection{Proof of Corollary~\ref{cor:bound-PE}}

Now we apply Lemma~\ref{lemma:T_op_norm} to our concrete setting. We first look at $p^{-1} \PP_{\Omega} \( \bm{T}^\star \) - \bm{T}^\star$ and treat it as $\bm{R}$ in Lemma~\ref{lemma:T_op_norm}. With the help of Lemma~\ref{lemma:incoh}, it is straightforward to compute that
\begin{align*}
\max_{i, j, k \in \[ d \]} \left| T_{i, j, k}^{\star} \( p^{-1} \chi_{i,j,k} - 1 \) \right| & \lesssim \frac{1}{p} \left\|\bm{A}^\star \right\|_\infty \lesssim \frac{\sqrt{\mu r} \, \lambda_{\max}^\star}{d^{3/2} p},
\end{align*}
and
\begin{align*}
 \max_{i, j \in \[ d \]}\sum_{k \in \[ d \]} \E \big[ T_{i, j, k}^{\star 2} ( p^{-1} \chi_{i,j,k} - 1 )^2 \big] &\lesssim \frac{1}{p}\norm{\bm{A}^{\star \top}}_{2,\infty}^2 \lesssim \frac{\mu^2 r \lambda_{\max}^{\star 2}}{d^2 p}.
\end{align*}
Clearly, $\max\nolimits_{i, k \in [ d ]}\sum\nolimits_{j \in [ d ]} \E \big[ T_{i, j, k}^{\star 2} ( p^{-1} \chi_{i,j,k} - 1 )^2 \big]$ and $\max\nolimits_{j, k \in \[ d \]}\sum\nolimits_{i \in \[ d \]} \E \big[T_{i, j, k}^{\star 2} ( p^{-1} \chi_{i,j,k} - 1 )^2 \big]$ can be controlled in the same way.
Substitution into (\ref{eq:T_op_norm_temp}) proves the claim \eqref{claim:T_missing_op_norm}.

We then turn to  $\PP_\Omega \( \bm{E} \)$. Recognizing that the entries of $\bm{E}$ might be unbounded, we invoke the following truncation trick to cope with this unboundedness issue.  Specifically,  define $\widetilde{\bm{E}} = [\widetilde{E}_{i,j,k}]_{1\leq i,j,k\leq d}$ where
\begin{align}
	\widetilde{E}_{i,j,k} := {E}_{i,j,k} \ind \big\{ \left|{E}_{i,j,k} \right| \leq c_1\sigma  \sqrt{\log d}  \big\}, \qquad 1\leq i,j,k \leq d
\end{align}
for some some sufficiently large constant $c_1>0$. Moreover, $\widetilde{E}_{i,j,k}$ is zero-mean because we assume that the distribution of $E_{i,j,k}$ is symmetric about 0. Standard concentration inequalities reveal that: with probability exceeding $1-O \(d^{-10}\)$, one has $\bm{E} = \widetilde{\bm{E}}$. Hence, it suffices to bound $\big\| \PP_\Omega ( \widetilde{\bm{E}} ) \big\|$. Towards this end, simple calculation reveals that
\begin{align*}
	B &= \max_{i, j, k \in \[ d \]} \big| \widetilde{E}_{i,j,k} \chi_{i,j,k} \big|  \lesssim  \big\|  \widetilde{\bm{E}} \big\|_\infty \lesssim \sigma \sqrt{\log d} , \\
	\sigma_\mode^2 & \leq \max_{i, j \in \[ d \]}\sum_{k \in \[ d \]} \E \big[ {E}^2_{i,j,k} \chi_{i,j,k} \big] + \max_{i, k \in \[ d \]}\sum_{j \in \[ d \]} \E \big[ {E}^2_{i,j,k} \chi_{i,j,k} \big] + \max_{ j,k \in \[ d \]}\sum_{i \in \[ d \]} \E \big[ {E}^2_{i,j,k} \chi_{i,j,k} \big]  \lesssim p \sigma^2 d .
 \end{align*}
 %
This together with \eqref{eq:T_op_norm_temp} as well as the high-probability event $\bm{E}=\widetilde{\bm{E}}$  completes the proof. 


\subsection{Proof of Lemma~\ref{lemma:T_loss_times3_op_norm}} \label{sec:proof-lemma-T_loss_times3_op_norm}

For notational simplicity, let us denote
\begin{align*}
\bm{X} & :=\left(p^{-1}\bm{T}-\bm{T}^{\star}\right)\times_{3}\bm{w}.
\end{align*}
Observe that $\bm{X}$ is a zero-mean random matrix in $\mathbb{R}^{d\times d}$
with independent entries
\begin{align*}
X_{i,j}=\sum_{k\in\left[d\right]}w_{k}\left\{ T_{i,j,k}^{\star}\left(p^{-1}\chi_{i,j,k}-1\right)+p^{-1}E_{i,j,k}\chi_{i,j,k}\right\} ,\quad\left(i,j\right)\in\left[d\right]^{2} .
\end{align*}
We shall apply the truncated matrix Bernstein inequality to control
the spectral norm of $\bm{X}$.
\begin{itemize}
\item First, it is straightforward to bound
\begin{align*}
V & :=\max\Big\{\max_{i\in\left[d\right]}\sum_{j\in\left[d\right]}\mathbb{E}\left[X_{i,j}^{2}\right],\max_{j\in\left[d\right]}\sum_{i\in\left[d\right]}\mathbb{E}\left[X_{i,j}^{2}\right]\Big\}\\
 & \,=\max\Big\{\max_{i\in\left[d\right]}\sum_{j,k\in\left[d\right]}p^{-1}w_{k}^{2}\left(T_{i,j,k}^{\star2}+\mathbb{E}\left[E_{i,j,k}^{2}\right]\right),\max_{j\in\left[d\right]}\sum_{i,k\in\left[d\right]}p^{-1}w_{k}^{2}\left(T_{i,j,k}^{\star2}+\mathbb{E}\left[E_{i,j,k}^{2}\right]\right)\Big\}\\
 & \,\leq\frac{1}{p}\left(\left\Vert \bm{w}\right\Vert _{\infty}^{2}\left\Vert \bm{A}^{\star}\right\Vert _{2,\infty}^{2}+\left\Vert \bm{w}\right\Vert _{2}^{2}\sigma^{2}d\right).
\end{align*}
\item Second, using the same truncation argument as in the proof of Lemma~\ref{lemma:T_op_norm} in Appendix~\ref{pf:T_op_norm}, we can assume $| E_{i,j,k} | \lesssim \sigma \sqrt{\log d}$ for all $1 \leq i,j,k \leq d$ (which holds with very high probability). The Bernstein
inequality reveals that
\begin{align*}
\mathbb{P}\left\{ \left|X_{i,j}\right|>t\right\} \leq2\exp\left(-\frac{3}{8}\min\left\{ \frac{t^{2}}{S^{2}},\frac{t}{L}\right\} \right),\quad t>0
\end{align*}
for each $\left(i,j\right)\in\left[d\right]^{2}$, where
\begin{align*}
	L & :=\max_{k\in\left[d\right]} \left\{ \left|w_{k}\right|\left|T_{i,j,k}^{\star}\left(p^{-1}\chi_{i,j,k}-1\right)+p^{-1}E_{i,j,k}\chi_{i,j,k}\right| \right\}
	\lesssim\frac{1}{p}\left\Vert \bm{w}\right\Vert _{\infty}\left(\left\Vert \bm{A}^{\star}\right\Vert _{\infty}+\sigma\sqrt{\log d}\right);\\
S^{2} & :=\mathbb{E}\left[X_{i,j}^{2}\right] \asymp \sum_{k\in\left[d\right]}p^{-1}w_{k}^{2}\left(T_{i,j,k}^{\star2}+\mathbb{E}\left[E_{i,j,k}^{2}\right]\right)\leq\frac{1}{p}\left(\left\Vert \bm{w}\right\Vert _{\infty}^{2}\left\Vert \bm{A}^{\star\top}\right\Vert _{2,\infty}^{2}+\left\Vert \bm{w}\right\Vert _{2}^{2}\sigma^{2}\right).
\end{align*}
This implies that with probability exceeding $1-O\left(d^{-20}\right)$,
\begin{align*}
	\max_{i,j\in\left[d\right]}\left|X_{i,j}\right|& \lesssim L\log d+S\sqrt{\log d} \\
	& \lesssim\frac{\left\Vert \bm{w}\right\Vert _{\infty}\log d}{p}\left(\left\Vert \bm{A}^{\star}\right\Vert _{\infty}+\sigma\sqrt{\log d}\right)+\sqrt{\frac{\log d}{p}}\left(\left\Vert \bm{w}\right\Vert _{\infty}\left\Vert \bm{A}^{\star\top}\right\Vert _{2,\infty}+\left\Vert \bm{w}\right\Vert _{2}\sigma\right).
\end{align*}
Therefore, if we choose
\begin{align*}
\beta:=C\left\{ \frac{\left\Vert \bm{w}\right\Vert _{\infty}\log d}{p}\left(\left\Vert \bm{A}^{\star}\right\Vert _{\infty}+\sigma\sqrt{\log d}\right)+\sqrt{\frac{\log d}{p}}\left(\left\Vert \bm{w}\right\Vert _{\infty}\left\Vert \bm{A}^{\star\top}\right\Vert _{2,\infty}+\left\Vert \bm{w}\right\Vert _{2}\sigma\right)\right\} 
\end{align*}
for some sufficiently large constant $C>0$, then one has
\begin{align*}
\beta\geq\frac{C}{2}\left(L\log d+S\sqrt{\log d}\right).
\end{align*}
\item Third, it is easy to bound
\begin{align*}
\mathbb{E}\big[\left|X_{i,j}\right|\mathds{1}\left\{ \left|X_{i,j}\right|\geq\beta\right\} \big] & \leq\beta\cdot\mathbb{P}\left\{ \left|X_{i,j}\right|\geq\beta\right\} +\int_{\beta}^{\infty}\mathbb{P}\left\{ \left|X_{i,j}\right|\geq t\right\} \mathrm{d}t\\
 & \leq\beta\cdot O\left(d^{-20}\right)+\int_{\beta}^{\infty}\mathbb{P}\left\{ \left|X_{i,j}\right|\geq t\right\} \mathrm{d}t.
\end{align*}
In view of our choice of $\beta$, we know that $\min\left\{ t^{2}/S^{2},t/L\right\} \geq t/\max\left\{ S/\sqrt{\log d},L\right\} $
for any $t\geq\beta$. As a result, for  $d$ sufficiently large, we
have
\begin{align*}
\int_{\beta}^{\infty}\mathbb{P}\left\{ \left|X_{i,j}\right|\geq t\right\} \mathrm{d}t & \leq2\int_{\beta}^{\infty}\exp\left(-\frac{3}{8}\min\left\{ \frac{t^{2}}{S^{2}},\frac{t}{L}\right\} \right)\mathrm{d}t\\
 & \leq2\int_{\beta}^{\infty}\exp\left(-\frac{3}{8}\frac{t}{\max\left\{ S/\sqrt{\log d},L\right\} }\right)\mathrm{d}t\\
 & \lesssim\max\left\{ S/\sqrt{\log d},L\right\} \exp\left(-\frac{3}{8}\frac{\beta}{\max\left\{ S/\sqrt{\log d},L\right\} }\right)\\
 & \lesssim\max\left\{ S/\sqrt{\log d},L\right\} \exp\left(-\frac{3}{8}C\log d\right)
	\ll\frac{\beta}{d^{2}}.
\end{align*}
Consequently, we have established that 
\begin{align*}
	q:=  \sum_{i,j} \mathbb{E}\big[\left|X_{i,j}\right|\mathds{1}\left\{ \left|X_{i,j}\right|\geq\beta\right\} \big]\ll \beta .
\end{align*}
\end{itemize}

Invoke the  matrix Bernstein inequality to demonstrate that
with probability  $1-O\left(d^{-10}\right)$,
\begin{align*}
\left\Vert \bm{X}\right\Vert  & \lesssim q+\beta\log d+\sqrt{V\log d}\asymp\beta\log d+\sqrt{V\log d}\\
 & \lesssim\frac{\left\Vert \bm{w}\right\Vert _{\infty}\log^{2}d}{p}\left(\left\Vert \bm{A}^{\star}\right\Vert _{\infty}+\sigma\sqrt{\log d}\right)+\frac{\log^{3/2}d}{\sqrt{p}}\left(\left\Vert \bm{w}\right\Vert _{\infty}\left\Vert \bm{A}^{\star\top}\right\Vert _{2,\infty}+\sigma\left\Vert \bm{w}\right\Vert _{2}\right)\\
 & \quad+\sqrt{\frac{\log d}{p}}\left(\left\Vert \bm{w}\right\Vert _{\infty}\left\Vert \bm{A}^{\star}\right\Vert _{2,\infty}+\left\Vert \bm{w}\right\Vert _{2}\sigma\sqrt{d}\right)\\
	& \asymp\left\Vert \bm{w}\right\Vert _{\infty} \left\{ \frac{\left\Vert \bm{A}^{\star}\right\Vert _{\infty}\log^{2}d}{p}+\frac{\left\Vert \bm{A}^{\star\top}\right\Vert _{2,\infty}\log^{3/2}d}{\sqrt{p}}+\frac{\left\Vert \bm{A}^{\star}\right\Vert _{2,\infty}\sqrt{\log d}}{\sqrt{p}}  + \frac{\sigma \log^{5/2} d}{ p} \right\} +\left\Vert \bm{w}\right\Vert _{2}\sigma\sqrt{\frac{d\log d}{p}}\\
	& \overset{\left(\mathrm{i}\right)}{\lesssim} \left\Vert \bm{w}\right\Vert _{\infty}\left\{ \frac{\sqrt{\mu r}\log^{2}d}{d^{3/2}p} + \sqrt{\frac{\mu^2 r \log^3 d}{d^2 p}}  +\sqrt{\frac{\mu r\log d}{dp}}  \right\} \lambda_{\max}^{\star} +   \left\Vert \bm{w}\right\Vert _{\infty} \frac{\sigma \log^{5/2} d}{ p}
	+ \left\Vert \bm{w}\right\Vert _{2}\sigma\sqrt{\frac{d\log d}{p}}\\
 & \overset{\left(\mathrm{ii}\right)}{\lesssim}\left\Vert \bm{w}\right\Vert _{\infty}\sqrt{\frac{\mu r\log d}{dp}}\,\lambda_{\max}^{\star}+ \left\Vert \bm{w}\right\Vert _{\infty} \frac{\sigma \log^{5/2} d}{ p} + \left\Vert \bm{w}\right\Vert _{2}\sigma\sqrt{\frac{d\log d}{p}},
\end{align*}
where (i) is due to Lemma~\ref{lemma:incoh}, and (ii) follows as long as $p\gtrsim d^{-2}\log^{3}d$ and $\mu \log^2 d \lesssim d$.


\subsection{Proof of Lemma~\ref{lemma:indep_gaussian_gap}}
\label{pf:indep_gaussian_gap}
Recall that for a standard Gaussian random variable $Z \sim \N \( 0, 1\)$, one has
\begin{align}
	\frac{1}{5t\sqrt{2\pi}} \exp\( - t^2 / 2 \) ~\leq~   \( \frac{1}{t} - \frac{1}{t^3} \) \frac{1}{\sqrt{2\pi}} \exp\( - t^2 / 2 \) ~\leq~  \P \left\{ Z \geq t  \right\} ~\leq~ \frac{1}{t}\frac{1}{\sqrt{2\pi}} \exp\( - t^2 /2 \) \label{eq:gaussian_tail}
\end{align}
for all $t > \sqrt{5/4}$.  
Observing that $\kappa\sqrt{2\log r} + \Delta \geq \sqrt{5/4}$ since $\kappa \geq 1$ and $r \geq 2$, we can invoke the above tail bound to deduce that
\begin{align}
\mathbb{P} \left\{ X_{1, j} \geq \kappa\sqrt{2\log r} + \Delta \right\} & > \frac{1}{5 \sqrt{2\pi} \( \kappa\sqrt{2\log r} + \Delta\)}\exp \(-\big(\kappa\sqrt{2\log r} + \Delta \big)^2/2 \) \nonumber \\
& \geq \frac{1}{5 \sqrt{2\pi}  \( \kappa\sqrt{2\log r} + \Delta\)  r^{2\kappa^2} \exp(\Delta^2)},
\label{eq:normal_LB}
\end{align}
where we use the elementary inequality $\big(\kappa\sqrt{2\log r} + \Delta )^2 \leq 4\kappa^2 \log r + 2\Delta^2$. In addition,  it follows from the union bound that
\begin{align}
	\mathbb{P} \left\{ \max_{1< i \leq r} \left| X_{i, j}  \right| < \sqrt{2\log r} \right\} & \geq 1 - r \, \mathbb{P} \left\{ \left| X_{i,j}  \right| > \sqrt{2\log r} \right\} \geq 1 - r \left\{ \frac{1}{2\sqrt{\pi \log r}} \exp(-\log r) \right\}  \nonumber\\
	& \geq 
	1- \frac{1}{\sqrt{\pi \log r}} \geq 1- \frac{1}{\sqrt{\pi \log 2}}. \label{eq:normal_UB}
\end{align}
To prove the claim, it is sufficient to choose $L$ such that
\begin{align*}
\mathbb{P}\Big\{\forall j:X_{1,j}<\kappa\sqrt{2\log r}+\Delta\text{ or }\max\nolimits _{1<i\leq r}\big|X_{i,j}\big|\geq\sqrt{2\log r}\Big\}\leq\delta, 
\end{align*}
or equivalently,
\begin{align} \label{eq:X1j-Xij-bound-L}
\Big(1 - \mathbb{P} \big\{ X_{1, j} \geq \kappa\sqrt{2\log r} + \Delta \big\} \mathbb{P} \big\{ \max\nolimits_{1<i \leq r} \left| X_{i, j} \right| < \sqrt{2\log r} \big\} \Big)^L \leq \delta.
\end{align}
Note that $\log (1-x) \leq - 1/(2x)$ for $0 < x < 1/4$. In view of (\ref{eq:normal_LB}) and (\ref{eq:normal_UB}), one can verify that the above inequality \eqref{eq:X1j-Xij-bound-L}  as long as
\begin{align}
L \geq C \big(\kappa\sqrt{\log r} + \Delta\big)r^{2\kappa^2}\exp(\Delta^2) \,\log \frac{1}{\delta}, \label{eq:L_LB}
\end{align}
where $C>0$ is some universal constant.

To prove the second claim, recall the definitions that
\begin{align*}
\Delta_{j} := X_{1, j} - \max_{1 < i \leq r} \kappa X_{i, j}, \qquad 1 \leq j \leq L.
\end{align*}
and that $\Delta_{(1)} \geq \Delta_{(2)} \geq \dots, \geq \Delta_{(L)}$ denote $\left\{ \Delta_j \right\}_{j = 1}^L$ in descending order. For any $\epsilon > 0$, one has
\begin{align*}
\P \left\{ \Delta_{(1)} - \Delta_{(2)} < \epsilon \right\} & = \sum_{1\leq j \leq L} \P \left\{ \Delta_j - \max\nolimits_{k : k \neq j} \Delta_k < \epsilon \,\big|\, \Delta_j = \Delta_{(1)}\right\} \P \left\{ \Delta_j = \Delta_{(1)}\right\} \\
& = \P \left\{ \Delta_1 - \max\nolimits_{k : k \neq 1} \Delta_k > \epsilon \,\big|\, \Delta_1 = \Delta_{(1)} \right\},
\end{align*}
where the last line holds because the distribution of $\Delta_j - \max\nolimits_{k : k \neq j} \Delta_k$ conditional on $\Delta_j  = \max\nolimits_{1\leq k \leq r} \Delta_k$ is identical for all $1 \leq j \leq L$. In addition, it is straightforward to see that
\begin{align*}
	\Delta_1 = \Delta_{(1)} \quad & \Longleftrightarrow \quad \Delta_1 \geq \max_{k : k \neq 1} \Delta_k \\
	&\Longleftrightarrow  \quad X_{1, 1} \geq  \max_{1 < i \leq r} \kappa X_{i, 1} + \max_{k : k \neq 1} \Delta_k =: \, Y_{1, 1}.
\end{align*}
Hence, we have
\begin{align*}
\P \left\{ \Delta_{(1)} - \Delta_{(2)} < \epsilon \right\} =  \P \left\{ \Delta_1 - \max\nolimits_{k : k \neq 1} \Delta_k < \epsilon \,\big|\, \Delta_1 = \Delta_{(1)} \right\} 
  =  \P \left\{ X_{1, 1} - Y_{1, 1} < \epsilon \,\big|\, X_{1, 1} \geq Y_{1, 1} \right\}.
\end{align*}

Next, observe that $X_{1, 1}$ is independent of $Y_{1, 1}$, and hence we have
\begin{align*}
\P \left\{ X_{1, 1} - Y_{1, 1} < \epsilon \,\big|\, X_{1, 1} \geq Y_{1, 1}, Y_{1, 1} = x \right\} & = \P \left\{ X_{1, 1} < x + \epsilon \,\big|\, X_{1, 1} \geq y, Y_{1, 1} = x \right\} \\
& = \P \left\{ X_{1, 1} < x + \epsilon \,\big|\, X_{1, 1} \geq x \right\}
 = \frac{\P \left\{ x \leq X_{1, 1} < x + \epsilon \right\}}{\P \left\{ X_{1, 1} \geq x \right\}}
\end{align*}
for any $x \geq 0$. In order to study this function, we define 
$f_\epsilon(x) := \frac{ \P \left\{ x \leq Z \leq x+\epsilon \right\} }{\P \left\{ Z \geq x \right\}}$ with $Z\sim \mathcal{N}(0,1)$. 
Taking the derivative of $f_{\epsilon}(\cdot)$ w.r.t.~$x$ gives: for any $\epsilon >0$, 
\begin{align*}
	f'_\epsilon(x) &= \frac{\big\{ \exp (- (x+\epsilon )^2/2 ) - \exp (- x^2/2 ) \big\} \P \left\{ Z \geq x \right\} + \exp \(- x^2/2 \) \P \left\{ x \leq Z \leq x+\epsilon \right\}}{\sqrt{2\pi }  \( \P \left\{ Z \geq x \right\} \)^2} \\
& = \frac{\exp \(-(x+\epsilon )^2/2 \)\P \left\{ Z \geq x \right\} - \exp(-\frac{x^2}{2}) \P \left\{ Z \geq x+\epsilon \right\} }{\sqrt{2\pi } \( \P \left\{ N \geq x \right\} \)^2} \\
& = \frac{\exp \( x^2/2 \)\P \left\{ Z \geq x \right\} - \exp \( (x+\epsilon )^2 / 2 \) \P \left\{ Z \geq x+\epsilon \right\} }{\sqrt{2\pi }\exp\( (x+\epsilon )^2 / 2 \) \exp \( x^2/2 \) \( \P \left\{ Z \geq x \right\} \)^2} \\
& = \frac{\int_0^{\infty} \( \exp \(-\frac{t^2 + 2tx}{2} \) - \exp \(-\frac{t^2 + 2t (x+\epsilon )}{2} \) \) \, \mathrm{d} t}{2\pi \exp \( (x+\epsilon )^2 / 2 \) \exp \(x^2 / 2 \) \( \P \left\{ Z \geq x \right\} \)^2} > 0 .
\end{align*}
In other words, $f_{\epsilon}(x)$ is monotonically increasing in $x$ for any given $\eps > 0$. Therefore, for any $0 \leq x < B$ for some sufficiently large $B > \sqrt{5 / 4}$ (to be specified later), the above bounds taken together give
\begin{align*}
\P \left\{ X_{1, 1} - Y_{1, 1} < \epsilon \,\big|\, X_{1, 1} \geq Y_{1, 1}, Y_{1, 1} = x \right\} 
\overset{(\mathrm{i})}{\leq} \frac{\P \left\{ B \leq X_{1, 1} < B + \epsilon \right\}}{\P \left\{ X_{1, 1} \geq B \right\}} 
\overset{(\mathrm{ii})}{\leq} \frac{ \epsilon \exp\left(- B^2 / 2\right)  }{\frac{1}{5B}\exp\left(-B^2 / 2\right)} = 5 \epsilon B,
\end{align*}
where (i) arises from the monotonicity of $f_{\epsilon}(\cdot)$, and (ii) relies on \eqref{eq:gaussian_tail}. By taking $\epsilon =  \delta / ( 5B)$, we obtain 
\begin{align*}
\P \left\{ X_{1, 1} - Y_{1, 1} < \epsilon \,\big|\, X_{1, 1} \geq Y_{1, 1}, Y_{1, 1} = x \right\}  \leq \delta
\end{align*}
for any $0 \leq x \leq B$. Recall that $Y_{1, 1} = \max_{1 < i \leq r} \kappa X_{i, 1} + \max_{k : k \neq 1} \Delta_k$. By standard Gaussian concentration inequalities, with probability at least $1- \delta$ one has
\begin{align*}
 Y_{1, 1} \lesssim \kappa \sqrt{\log r} + \sqrt{\log L} + \sqrt{\log (1/\delta) } \asymp \sqrt{\log L  } + \sqrt{\log (1/\delta) }.
\end{align*}
where the last step arises from the lower bound on $L$ in \eqref{eq:L_LB}. If we choose $B = C \big( \sqrt{\log L  } + \sqrt{\log (1/\delta) } \big)$ for some sufficiently large universal constant $C > 0$, then this immediately implies that 
\begin{align*}
\P \left\{ \Delta_{(1)} - \Delta_{(2)} < \epsilon \right\} & = \P \left\{ X_{1, 1} - Y_{1, 1} < \epsilon \,\big|\, X_{1, 1} \geq Y_{1, 1} \right\}  \\
	& \leq\mathbb{P}\big\{ Y_{1, 1} > B \big\}+\P \left\{ X_{1, 1} - Y_{1, 1} < \epsilon \,\big|\, X_{1, 1} \geq Y_{1, 1},  Y_{1, 1} = B \right\}  \leq 2\delta.
\end{align*}
%

We have therefore concluded the proof.

\subsection{Proof of Lemma \ref{lemma:subspace_basis_dist}}
\label{pf:subspace_basis_dist}

To begin with, it is self-evident that
\begin{align*}
\mathcal{P}_{\bm{V}}(\bm{u}_{0}) & =\bm{V}\bm{V}^{\top}\bm{u}_{0}=\bm{U}\bm{U}^{\top}\bm{u}_{0}+\left(\bm{V}\bm{V}^{\top}-\bm{U}\bm{U}^{\top}\right)\bm{u}_{0}=\bm{u}_{0}+\left(\bm{V}\bm{V}^{\top}-\bm{U}\bm{U}^{\top}\right)\bm{u}_{0},
\end{align*}
where the last identity follows since $\bm{u}_0$ is assumed to lie within $\mathsf{span}(\bm{U})$. 
As a result,
\begin{align*}
\mathcal{P}_{\bm{V}^{\perp}}(\bm{u}_{0}) & =\bm{u}_{0}-\mathcal{P}_{\bm{V}}(\bm{u}_{0})= - \left(\bm{V}\bm{V}^{\top}-\bm{U}\bm{U}^{\top}\right)\bm{u}_{0}
\end{align*}
\begin{align*}
\Longrightarrow\qquad\big\|\mathcal{P}_{\bm{V}^{\perp}}(\bm{u}_{0})\big\|_{2} \leq \big\|\bm{V}\bm{V}^{\top}-\bm{U}\bm{U}^{\top}\big\|\cdot\|\bm{u}_{0}\|_{2}\leq\delta.
\end{align*}
The Pythagorean theorem then gives $\big\|\mathcal{P}_{\bm{V}}(\bm{u}_{0})\big\|_{2}=\sqrt{ \| \bm{u}_0 \|_2 - \big\|\mathcal{P}_{\bm{V}^{\perp}}(\bm{u}_{0})\big\|_{2}^{2}}\geq\sqrt{1-\delta^{2}}$.


\subsection{Proof of Lemma~\ref{lemma:sum_ij_square_cont_ineq}}
\label{pf:sum_ij_square_cont_ineq}

By virtue of Lemma \ref{lemma:incoh}, we can compute
\begin{align*}
\sum_{i,j\in[d]}\mathbb{E}\left[T_{i,j,k}^{\star2}(p^{-1}\chi_{i,j,k}-1)^{2}\right] & \leq\frac{1}{p}\sum_{i,j\in[d]}T_{i,j,k}^{\star2}\leq\frac{1}{p}\|\bm{A}^{\star}\|_{2,\infty}^{2}\lesssim\frac{\mu r\lambda_{\max}^{\star2}}{dp}=:M;\\
\Big|T_{i,j,k}^{\star2}(p^{-1}\chi_{i,j,k}-1)^{2}\Big| & \leq\frac{1}{p^{2}}\|\bm{T}^{\star}\|_{\infty}^{2} = \frac{1}{p^{2}}\|\bm{A}^{\star}\|_{\infty}^{2} \lesssim\frac{\mu r\lambda_{\max}^{\star2}}{d^{3}p^{2}}=:L;\\
\sum_{i,j\in[d]}\mathsf{Var}\Big(T_{i,j,k}^{\star2}(p^{-1}\chi_{i,j,k}-1)^{2}\Big) & \lesssim\frac{1}{p^{3}}\sum_{i,j\in[d]}T_{i,j,k}^{\star4}\leq\frac{1}{p^{3}}\|\bm{A}^{\star}\|_{\infty}^{2}\|\bm{A}^{\star}\|_{2,\infty}^{2}\lesssim\frac{\mu^2 r^{2}\lambda_{\max}^{\star 4}}{d^{4}p^{3}}=:V.
\end{align*}
Invoke the Bernstein inequality to show that: with probability exceeding
$1-O(d^{-20})$,
\begin{align*}
\sum_{i,j\in[d]}T_{i,j,k}^{\star 2}(p^{-1}\chi_{i,j,k}-1)^{2} & \lesssim M+L\log d +\sqrt{V\log d}\\
& \lesssim  \frac{\mu r \lambda_{\max}^{\star 2}}{dp} +\frac{\mu r\lambda_{\max}^{\star2}\log d}{d^{3}p^{2}} + \sqrt{\frac{\mu^2 r^{2}\lambda_{\max}^{\star4} \log d}{d^{4}p^{3}}} \\
 & \asymp\frac{\mu r\lambda_{\max}^{\star2}}{dp},
\end{align*}
where the last line holds with the proviso that $p\gtrsim d^{-2}\log d$.


\subsection{Proof of Lemma~\ref{lemma:sum_ij_square_cont_ineq_subg}}
\label{pf:sum_ij_square_cont_ineq_subg}

Since the $E_{i,j,k}$'s are independent sub-Gaussian random variables with variance at most $\sigma^2$, one has
\begin{align*}
\sum_{i,j\in[d]}\mathbb{E}\left[(E_{i,j,k}\chi_{i,j,k})^{2}\right] & \lesssim \sigma^{2}d^{2}p=:M;\\
\big\|(E_{i,j,k}\chi_{i,j,k})^{2}\big\|_{\psi_{1}} & \lesssim\sigma^{2}=:L;\\
\sum_{i,j\in[d]}\mathsf{Var}\left[(E_{i,j,k}\chi_{i,j,k})^{2}\right] & \lesssim\sigma^{4} d^2 p=:V.
\end{align*}
Here, $\|\cdot \|_{\psi_1}$ denotes the sub-exponential norm \cite{vershynin2010introduction}. 
Taken together with the Bernstein inequality, these yield that with
probability exceeding $1-O(d^{-20})$,
\begin{align*}
\sum_{i,j\in[d]}(E_{i,j,k}\chi_{i,j,k})^{2} & \lesssim M+\sqrt{V\log d}+L\log^2 d \lesssim\sigma^{2}d^{2}p+\sqrt{\sigma^{4} d^2 p\log d}+\sigma^{2}\log^2 d
	\asymp\sigma^{2}d^{2}p,
\end{align*}
provided that $p\gtrsim d^{-2}\log^2 d$.


\subsection{Proof of Lemma~\ref{lemma:sum_jk_square_cont_ineq}}
\label{pf:sum_jk_square_cont_ineq}

Fix an arbitrary $1\leq i \leq d$. We first define a sequence of independent zero-mean random variables $\{ X_{j}\}_{j \in [d]}$ as follows
\begin{align*}
X_{j} := \sum\nolimits_{k \in [d]} T^\star_{i, j, k} w_{k} ( p^{-1} \chi_{i, j, k} - 1 ).
\end{align*}
%
One can easily show that
\begin{align*}
& \max_{k \in [d]} \Big| T^\star_{i, j, k} w_k ( p^{-1} \chi_{i, j, k} - 1 ) \Big| \leq \frac{1}{p} \norm{\bm{A}^\star}_\infty \norm{\bm{w}}_\infty =: L, \\
&\E [ X_{j}^2 ] = \sum_{k \in [d]} T_{i, j, k}^{\star 2} w_k^2 \E \big[ ( p^{-1} \chi_{i, j, k} - 1 )^2 \big] 
	\leq \frac{1}{p} \norm{\bm{w}}_\infty^2 \sum_{k \in [d]} T_{i, j, k}^{\star 2} = \frac{1}{p} \norm{\bm{w}}_\infty^2 \|\bm{A}^{\star\top}\|_{2,\infty}^2 =: V.
\end{align*}
The Bernstein inequality indicates that: with probability at least $1- O(d^{-20} )$,
\begin{align}
\begin{split}
\label{X_ij_abs_ub}
	| X_{j} | \lesssim L \log d + \sqrt{V \log d} 
	\lesssim  \frac{1}{p} \norm{\bm{A}^\star}_\infty \norm{\bm{w}}_\infty \log d + \sqrt{\frac{\log d}{p}}  \norm{\bm{A}^{\star \top}}_{2,\infty} \norm{\bm{w}}_\infty  := L_{j} .
\end{split}
\end{align}
Moreover, we can also bound the variance of $X_j$ as follows
\begin{align*}
	\var ( X_{j}^2 ) \leq	\E [ X_{j}^4 ]  &\lesssim \frac{1}{p^3}\sum_{k \in [d]} T_{i, j, k}^{\star 4} w_k^4
+  \frac{1}{p^2} \sum_{ k_1 \neq k_2 } T_{i, j, k_1}^{\star 2} T_{i, j, k_2}^{\star 2} w_{k_1}^2 w_{k_2}^2  \\
& \lesssim \frac{1}{p^3}\norm{\bm{w}}_\infty^4  \norm{\bm{A}^\star}_\infty^2\sum_{k \in [d]} T_{i, j, k}^{\star 2} + \frac{1}{p^2} \norm{\bm{w}}_\infty^4 \norm{\bm{A}^{\star\top}}_{2,\infty}^2  \sum_{k \in [d]} T_{i, j, k}^{\star 2}.
\end{align*}

Given that $X_{j}$ might be overly large in some rare case, we introduce a sequence $\{Y_j\}$, where 
we denote by $ Y_{j} $ the truncated version of $X_{j}$ as follows
\begin{align*}
	Y_{j} := X_{j} \ind \{ |X_{j}| \lesssim L_j \}.
\end{align*}
We have learn from \eqref{X_ij_abs_ub} and the union bound that with probability at least $1- O(d^{-15})$, one has $Y_{j} = X_{j}$ for all $j \in [d]$.

Using the above bounds on the $X_j$'s, one observes that $\{ Y_{j} \}_{j \in [d]}$ is a sequence of independent random variables satisfying
\begin{align*}
	\sum_{j \in [d]} \E [ Y_{j}^2 ] & \leq \sum_{j \in [d]} \E [ X_{j}^2 ] 
\leq \frac{1}{p} \norm{\bm{A}^\star}_{2,\infty}^2  \norm{\bm{w}}_\infty^2 ; \\
	\max_{j \in [d]} Y_{j}^2 & \leq  \max_{j \in [d]} L_{j}^2  
	\lesssim  \frac{1}{p^2} \norm{\bm{A}^\star}_\infty^2 \norm{\bm{w}}_\infty^2 \log d + \frac{\log d}{p}  \norm{\bm{A}^{\star \top}}_{2,\infty}^2 \norm{\bm{w}}_\infty^2;\\
	\sum_{j \in [d]} \var ( Y_{j}^2 ) &\leq \sum_{j \in [d]} \E [ X_{j}^4 ] 
\lesssim  \frac{1}{p^3}  \norm{\bm{A}^\star}_\infty^2 \norm{\bm{A}^\star}_{2, \infty}^2 \norm{\bm{w}}_\infty^4 + \frac{1}{p^2}  \norm{\bm{A}^{\star\top}}_{2,\infty}^2  \norm{\bm{A}^{\star}}_{2,\infty}^2 \norm{\bm{w}}_\infty^4.
\end{align*}
We can apply the Bernstein inequality  to conclude that: with probability greater than $1- O( d^{-15}  )$,
\begin{align*}
\sum_{j \in [d]} Y_{j}^2 & \lesssim \sum_{j \in [d]} \E [ Y_{j}^2 ] + \max_{j \in [d]} Y_{j}^2 \log d + \sqrt{\sum\nolimits_{j \in [d]} \var ( Y_{j}^2 ) \log d } \\
& \lesssim \frac{1}{p} \norm{\bm{A}^\star}_{2,\infty}^2  \norm{\bm{w}}_\infty^2 + \frac{\log^2 d}{p^2} \norm{\bm{A}^\star}_\infty^2 \norm{\bm{w}}_\infty^2 + \frac{\log^2 d}{p}  \norm{\bm{A}^{\star \top}}_{2,\infty}^2 \norm{\bm{w}}_\infty^2 \\
& \quad + \sqrt{\frac{\log d}{p^3}  \norm{\bm{A}^\star}_\infty^2 \norm{\bm{A}^\star}_{2, \infty}^2 \norm{\bm{w}}_\infty^4 + \frac{\log d}{p^2} \norm{\bm{w}}_\infty^4 \norm{\bm{A}^{\star\top}}_{2,\infty}^2  \norm{\bm{A}^{\star}}_{2,\infty}^2} \\
& \overset{(\mathrm{i})}{\asymp} \frac{1}{p} \norm{\bm{A}^\star}_{2,\infty}^2  \norm{\bm{w}}_\infty^2 + \frac{\log^2 d}{p^2} \norm{\bm{A}^\star}_\infty^2 \norm{\bm{w}}_\infty^2 + \frac{\log^2 d}{p}  \norm{\bm{A}^{\star \top}}_{2,\infty}^2 \norm{\bm{w}}_\infty^2 \\
& \overset{(\mathrm{ii})}{\lesssim} \(  \frac{\mu r \lambda_{\max}^{\star 2}}{dp} + \frac{\mu r \lambda_{\max}^{\star 2} \log^2 d}{d^3 p^2} + \frac{\mu^2 r \lambda_{\max}^{\star 2} \log^2 d}{d^2 p} \) \norm{\bm{w}}_\infty^2 \\
& \asymp \frac{\mu r \lammax^{\star 2}}{dp} \norm{\bm{w}}_\infty^2, 
\end{align*}
where (i) is due to the AM-GM inequality, 
(ii) makes use of Lemma \ref{lemma:incoh}, 
and the last line follows under the conditions  $p \gtrsim d^{-2} \log^2 d $ and $\mu \log^2 d \lesssim d$.
This together with the high-probability fact $Y_j = X_j$ ($\forall j \in [d]$) concludes the proof.


\subsection{Proof of Lemma~\ref{lemma:sum_jk_square_cont_ineq_subg}}
\label{pf:sum_jk_square_cont_ineq_subg}


Fix any $1\leq i\leq d$. To begin with, define
\begin{equation*}
Z_{j} := \sum\nolimits_{k\in[d]}w_{k}E_{i,j,k}\chi_{i,j,k},
\end{equation*}
which is a zero-mean random variable. In order to bound $Z_{j}$,
one observes that
\begin{align*}
\|w_{k}E_{i,j,k}\chi_{i,j,k}\|_{\psi_{1}} & \lesssim \sigma \left\|\bm{w} \right\|_{\infty}=:L;\\
\sum\nolimits_{k\in[d]}\mathsf{Var}\big(w_{k}E_{i,j,k}\chi_{i,j,k}\big) & \leq \sigma^{2}p \sum\nolimits_{k\in[d]}w_{k}^{2}
	=\sigma^{2}p\left\|\bm{w} \right\|_{2}^{2}=:V, 
\end{align*}
where $\|\cdot\|_{\psi_1}$ denotes the sub-exponential norm. 
Apply the Bernstein inequality for the sum of sub-exponential random
variables to obtain
\begin{equation}
|Z_{j}|\lesssim\sqrt{V\log d}+L\log^2 d \lesssim\sigma\left\|\bm{w} \right\|_{2}\sqrt{p\log d}+\sigma\left\|\bm{w} \right\|_{\infty} \log^2 d =: L_{j}
\end{equation}
with probability exceeding $1-O(d^{-20})$. Further, given that $Z_{j}$
is not necessarily bounded, we introduce a sequence of truncated random
variables as follows
\begin{equation}
Y_{j} := Z_{j}\ind\{|Z_{j}|\lesssim L_{j}\}.
\end{equation}
According to the above bound, one has $Y_{j}=Z_{j}$ ($\forall j$)
with probability at least $1-O(d^{-19})$.

We then turn attention to bounding $\sum_{j\in[d]}Y_{j}^{2}$. To
this end, observe that
\begin{align*}
\sum_{j\in[d]}\mathbb{E}[Y_{j}^{2}] & \leq\sum_{j\in[d]}\mathbb{E}\Big[\Big(\sum\nolimits_{k\in[d]}w_{k}E_{i,j,k}\chi_{i,j,k}\Big)^{2}\Big]\lesssim\sigma^{2}p\sum_{j\in[d]}\sum_{k\in[d]}w_{k}^{2}\\
 & =\sigma^{2}pd\left\|\bm{w} \right\|_{2}^{2}=:M_{0}.
\end{align*}
Additionally, 
\begin{align*}
\sum_{j\in[d]}\mathbb{E}\left[Y_{j}^{4}\right] & \leq\sum_{j\in[d]}\mathbb{E}\Big[\Big(\sum\nolimits_{k\in[d]}w_{k}E_{i,j,k}\chi_{i,j,k}\Big)^{4}\Big]\\
 & \leq\sum_{j\in[d]}\sum_{k\in[d]}\mathbb{E}\Big[w_{k}^{4}E_{i,j,k}^{4}\chi_{i,j,k}^{4}\Big]+\sum_{j\in[d]}\sum_{k_{1}\neq k_{2}}\mathbb{E}\Big[w_{k_{1}}^{2}w_{k_{2}}^{2}E_{i,j,k_{1}}^{2}E_{i,j,k_{2}}^{2}\chi_{i,j,k_{1}}^{2}\chi_{i,j,k_{2}}^{2}\Big]\\
 & \lesssim\sigma^{4}p\sum_{j\in[d]}\sum_{k\in[d]}w_{k}^{4}+\sigma^{4}p^{2}\sum_{j\in[d]}\sum_{1 \leq k_{1}\neq k_{2} \leq d}w_{k_{1}}^{2}w_{k_{2}}^{2}\\
 & \lesssim\sigma^{4}pd\left\|\bm{w} \right\|_{2}^{2}\left\|\bm{w} \right\|_{\infty}^{2}+\sigma^{4}p^{2}d\left\|\bm{w} \right\|_{2}^{4}=:V_{0}.
\end{align*}
Invokde the Bernstein inequality to arrive at: with probability at
least $1-O(d^{-20})$, 
\begin{align*}
\sum_{j\in[d]}Y_{j}^{2} & \lesssim M_{0}+\sqrt{V_{0}\log d}+\max_{j\in[d]}L_{j}^{2}\log d\\
 & \lesssim\sigma^{2}pd\left\|\bm{w} \right\|_{2}^{2}+\sqrt{\sigma^{4}pd\left\|\bm{w} \right\|_{2}^{2}\left(p\left\|\bm{w} \right\|_{2}^{2} + \left\|\bm{w} \right\|_{\infty}^{2}\right)\log d}+\left(\sigma^2 p \left\|\bm{w} \right\|_{2}^2 \log^2 d +\sigma^2 \left\|\bm{w} \right\|_{\infty}^2 \log^5 d\right)\\
 & \asymp\sigma^{2}pd\left\|\bm{w} \right\|_{2}^{2}+\sigma^{2}\left\|\bm{w} \right\|_{\infty}^{2}\log^5 d.
\end{align*}
This together with the high-probability fact $Y_{j}=Z_{j}$ ($\forall j$)
completes the proof.

\section{Extension to asymmetric tensors}
\label{sec:asymmetry}

Thus far, we have focused on the case where the tensor of interest is symmetric. 
In this section, we discuss how to generalize our algorithm
and analysis to accommodate asymmetric tensors. 

\subsection{Problem settings}

Suppose that the unknown tensor $\bm{T}^{\star}\in\mathbb{R}^{d_{1}\times d_{2}\times d_{3}}$
is a rank-$r$ tensor with CP decomposition
\begin{equation}
\bm{T}^{\star}=\sum_{i=1}^{r}\bm{u}_{i}^{\star}\otimes\bm{v}_{i}^{\star}\otimes\bm{w}_{i}^{\star},\label{eq:asymmetric-tensor-truth}
\end{equation}
where $\bm{u}_{i}^{\star}\in\mathbb{R}^{d_{1}}$, $\bm{v}_{i}^{\star}\in\mathbb{R}^{d_{2}}$,
$\bm{w}_{i}^{\star}\in\mathbb{R}^{d_{3}}$ represent the tensor factors
of interest. Apparently, there is an unavoidable global scaling ambiguity
issue (for instance, multiplying $\bm{u}_{i}^{\star}$ by a constant $c$ and multiplying 
$\bm{v}_{i}^{\star}$
by $1/c$ accordingly result in the same tensor). Without
loss of generality, we shall assume throughout that 
\begin{equation}
\|\bm{u}_{i}^{\star}\|_{2}=\|\bm{v}_{i}^{\star}\|_{2}=\|\bm{w}_{i}^{\star}\|_{2},\qquad1\leq i\leq r.\label{eq:norm-constraint-tensor-factors}
\end{equation}
In addition, we assume that each entry $(j,k,l)$ is included in the sampling set $\Omega$ independently with probability $p$, and that each observed entry $T_{j,k,l}^\star$ is corrupted by an independent zero-mean sub-Guassian noise $E_{j,k,l}$ (cf.~Assumption~\ref{asmp:random-noise}). Our goal is to (1) estimate $\{\bm{u}_{i}^{\star},\bm{v}_{i}^{\star},\bm{w}_{i}^{\star}\}_{i=1}^{r}$
faithfully, modulo global permutation and global signs, and (2) estimate $\bm{T}^{\star}$ in a reliable manner. 

\subsection{Algorithms}


We now move on to present an extension of our nonconvex algorithm to handle the noisy scenario.

First of all, setting
\begin{align*}
\bm{U}:=[\bm{u}_{1},\cdots,\bm{u}_{r}]\in\mathbb{R}^{d_{1}\times r}, \quad \bm{V}=[\bm{v}_{1},\cdots,\bm{v}_{r}]\in\mathbb{R}^{d_{2}\times r} \quad \text{and} \quad \bm{W}=[\bm{w}_{1},\cdots,\bm{w}_{r}]\in\mathbb{R}^{d_{3}\times r}, 
\end{align*}
we can define the following regularized squared loss function 
\begin{align}
	g(\bm{U},\bm{V},\bm{W}) & :=\frac{1}{6p}\,\Big\|\mathcal{P}_{\Omega}\Big(\sum_{i=1}^{r}\bm{u}_{i}\otimes\bm{v}_{i}\otimes\bm{w}_{i}-\bm{T}\Big)\Big\|_{\mathrm{F}}^{2}+\mathsf{reg}\big(\bm{U},\bm{V},\bm{W}\big),
	\label{eq:loss-function-asym}
\end{align}
where the regularization term $\mathsf{reg}\big(\bm{U},\bm{V},\bm{W}\big)$
is given by
\begin{equation}
\mathsf{reg}\big(\bm{U},\bm{V},\bm{W}\big):=\frac{1}{24}\sum_{i=1}^{r}\alpha_{i}\left\{ \big(\|\bm{u}_{i}\|_{2}^{2}-\|\bm{v}_{i}\|_{2}^{2}\big)^{2}+\big(\|\bm{u}_{i}\|_{2}^{2}-\|\bm{w}_{i}\|_{2}^{2}\big)^{2}+\big(\|\bm{v}_{i}\|_{2}^{2}-\|\bm{w}_{i}\|_{2}^{2}\big)^{2}\right\} 
	\label{eq:reg-function-asym}
\end{equation}
for some positive regularization parameters $\{\alpha_{i}\}_{i=1}^{r}$ to be specified
momentarily. In contrast to the symmetric case, the addition term $\mathsf{reg}\big(\bm{U},\bm{V},\bm{W}\big)$
is included to help ensure that the sizes of $\bm{U},\bm{V}$ and $\bm{W}$
stay close --- an algorithmic trick that has proved useful  in other problems like nonconvex
rectangular matrix recovery \cite{tu2016low,zheng2016convergence,chen2019nonconvex}. 

We are now ready to present our nonconvex algorithm
that accommodates the case with asymmetric tensors. As before, the
proposed algorithm is initialized by a spectral method, followed by
gradient descent designed to minimize the regularized loss function
(\ref{eq:loss-function-asym}). The precise procedure is described
in Algorithm \ref{alg:gd-asym} (which invokes Algorithms \ref{alg:init-asym}-\ref{alg:localization-asym}).

Before proceeding, we find it helpful to record closed-form expressions for the gradients, which are a crucial part when implementing the nonconvex gradient descent algorithm. 
Specifically, the gradients of $g(\bm{U},\bm{V},\bm{W})$
can be computed as follows
\begin{subequations}\label{eq:grad-asym}
\begin{align}
\nabla_{\bm{u}_{i}}g\big(\bm{U},\bm{V},\bm{W}\big) & =\frac{1}{3p}\,\mathcal{P}_{\Omega}\Big(\sum_{i=1}^{r}\bm{u}_{i}\otimes\bm{v}_{i}\otimes\bm{w}_{i}-\bm{T}\Big)\times_{2}\bm{v}_{i}\times_{3}\bm{w}_{i}+\frac{1}{6}\alpha_{i}\big(2\,\|\bm{u}_{i}\|_{2}^{2}-\|\bm{v}_{i}\|_{2}^{2}-\|\bm{w}_{i}\|_{2}^{2}\big)\,\bm{u}_{i},\label{eq:grad-u-asym}\\
\nabla_{\bm{v}_{i}}g\big(\bm{U},\bm{V},\bm{W}\big) & =\frac{1}{3p}\,\mathcal{P}_{\Omega}\Big(\sum_{i=1}^{r}\bm{u}_{i}\otimes\bm{v}_{i}\otimes\bm{w}_{i}-\bm{T}\Big)\times_{1}\bm{u}_{i}\times_{3}\bm{w}_{i}+\frac{1}{6}\alpha_{i}\big(2\,\|\bm{v}_{i}\|_{2}^{2}-\|\bm{u}_{i}\|_{2}^{2}-\|\bm{w}_{i}\|_{2}^{2}\big)\,\bm{v}_{i}\label{eq:grad-v-asym},\\
\nabla_{\bm{w}_{i}}g\big(\bm{U},\bm{V},\bm{W}\big) & =\frac{1}{3p}\,\mathcal{P}_{\Omega}\Big(\sum_{i=1}^{r}\bm{u}_{i}\otimes\bm{v}_{i}\otimes\bm{w}_{i}-\bm{T}\Big)\times_{1}\bm{u}_{i}\times_{2}\bm{v}_{i}+\frac{1}{6}\alpha_{i}\big(2\,\|\bm{w}_{i}\|_{2}^{2}-\|\bm{u}_{i}\|_{2}^{2}-\|\bm{v}_{i}\|_{2}^{2}\big)\,\bm{w}_{i}\label{eq:grad-w-asym}
\end{align}
\end{subequations}
for each $1\leq i\leq r$, where $\times_{1},\times_{2}$
and $\times_{3}$ have been defined in Section~\ref{sec:Notation-1}.

\noindent 
\begin{algorithm}[t]
\caption{Gradient descent for nonconvex tensor completion (asymmetric case)}
\label{alg:gd-asym} \begin{algorithmic}[1] \State Generate initial
estimates $\bm{U}^{0}\in\mathbb{R}^{d_{1}\times r}$, $\bm{V}^{0}\in\mathbb{R}^{d_{2}\times r}$,
$\bm{W}^{0}\in\mathbb{R}^{d_{3}\times r}$ via Algorithm \ref{alg:init-asym}. 

\For{$t=0,1,\dots,t_{0}-1$}
\begin{align*}
\bm{U}^{t+1} & =\bm{U}^{t}-\eta_{t}\nabla_{\bm{U}}g\big(\bm{U}^{t},\bm{V}^{t},\bm{W}^{t}\big),\\
\bm{V}^{t+1} & =\bm{V}^{t}-\eta_{t}\nabla_{\bm{V}}g\big(\bm{U}^{t},\bm{V}^{t},\bm{W}^{t}\big),\\
\bm{W}^{t+1} & =\bm{W}^{t}-\eta_{t}\nabla_{\bm{W}}g\big(\bm{U}^{t},\bm{V}^{t},\bm{W}^{t}\big),
\end{align*}
where the gradients are given in (\ref{eq:grad-asym}). \EndFor \end{algorithmic}
\end{algorithm}

\begin{algorithm}[t]
\caption{Spectral initialization for nonconvex tensor completion (asymmetric
case)}
\label{alg:init-asym} \begin{algorithmic}[1] \State Let $\bm{U}\bm{\Lambda}\bm{U}^{\top}$
be the rank-$r$ eigen-decomposition of $\mathcal{P}_{\mathsf{off}\text{-}\mathsf{diag}}(\bm{A}\bm{A}^{\top})$
where $\bm{A}=\mathsf{unfold}(\bm{T})$ is the mode-1 matricization
of $\bm{T}$, and $\mathcal{P}_{\mathsf{off}\text{-}\mathsf{diag}}(\bm{Z})$
extracts out the off-diagonal entries of $\bm{Z}$. 

\State \textbf{Output:} initial estimates $\bm{U}^{0}\in\mathbb{R}^{d_{1}\times r}$,
$\bm{V}^{0}\in\mathbb{R}^{d_{2}\times r}$, $\bm{W}^{0}\in\mathbb{R}^{d_{3}\times r}$
on the basis of $\bm{U}\in\mathbb{R}^{d_{1}\times r}$ using Algorithm~\ref{alg:localization-asym}.
\end{algorithmic}
\end{algorithm}

\begin{algorithm}[t]
\caption{Retrieval of low-rank tensor factors from a given subspace estimate
(asymmetric case)}
\label{alg:localization-asym} \begin{algorithmic}[1] 

\State \textbf{Input:} number of restarts $L$, pruning threshold
$\epsilon_{\mathsf{th}}$, subspace estimate $\bm{U}\in\mathbb{R}^{d_{1}\times r}$
given by Algorithm~\ref{alg:init-asym}. \For{$\tau=1,\dots,L$} 

\State Generate an independent Gaussian vector $\bm{g}^{\tau}\sim\mathcal{N}(0,\bm{I}_{d_{1}})$.
\State 
\begin{align*}
\big(\bm{\nu}^{(1),\tau},\bm{\nu}^{(2),\tau},\bm{\nu}^{(3),\tau},\lambda_{\tau},\mathsf{gap}_{\tau}\big)\gets\Call{Retrieve-one-tensor-factor-asym}{\bm{T},p,\bm{U},\bm{g}^{\tau}}.
\end{align*}

\EndFor 

\State Generate tensor factor estimates 
\begin{align*}
\big\{(\bm{u}^{1},\bm{v}^{1},\bm{w}^{1},\lambda_{1}),\dots,(\bm{u}^{r},\bm{v}^{r},\bm{w}^{r},\lambda_{r})\big\}\gets\Call{Prune-asym}{\big\{\big(\bm{\nu}^{(1),\tau},\bm{\nu}^{(2),\tau},\bm{\nu}^{(3),\tau},\lambda_{\tau},\mathsf{gap}_{\tau}\big)\big\}_{\tau=1}^{L},\epsilon_{\mathsf{th}}}.
\end{align*}

\State \textbf{Output:} initial estimate 
\begin{align*}
\bm{U}^{0}=\big[\lambda_{1}^{1/3}\bm{u}^{1},\dots,\lambda_{r}^{1/3}\bm{u}^{r}\big],\qquad\bm{V}^{0}=\big[\lambda_{1}^{1/3}\bm{v}^{1},\dots,\lambda_{r}^{1/3}\bm{v}^{r}\big],\qquad\text{and}\qquad\bm{W}^{0}=\big[\lambda_{1}^{1/3}\bm{w}^{1},\dots,\lambda_{r}^{1/3}\bm{w}^{r}\big].
\end{align*}

\end{algorithmic}
\end{algorithm}

\begin{algorithm}[t]
\label{alg:one-factor-asym} \begin{algorithmic}[1]\Function{Retrieve-one-tensor-factor-asym}{$\bm{T},p,\bm{U},\bm{g}$} 

\State Compute \begin{subequations} 
\begin{align}
\bm{\theta} & =\bm{U}\bm{U}^{\top}\bm{g}=:\mathcal{P}_{\bm{U}}(\bm{g}),\\
\bm{M} & =p^{-1}\bm{T}\times_{1}\bm{\theta},\label{eq:defn-Mtau}
\end{align}
\end{subequations} where $\times_{1}$ is defined in Section~\ref{sec:Notation-1}. 

\State Let $\bm{\nu}^{(2)}$ (resp.~$\bm{\nu}^{(3)}$) be the leading
left (resp.~right) singular vector of $\bm{M}$. Let $\bm{\nu}^{(1)}$
= $p^{-1}\bm{T}\times_{2}\bm{\nu}^{(2)}\times_{3}\bm{\nu}^{(3)}$
and set $\lambda=\|\bm{\nu}^{(1)}\|_{2}^{1/3}$. 

\State \Return $\big(\bm{\nu}^{(1)},\bm{\nu}^{(2)},\bm{\nu}^{(3)},\lambda,\sigma_{1}(\bm{M})-\sigma_{2}(\bm{M})\big)$. 

\EndFunction \end{algorithmic}
\end{algorithm}

\begin{algorithm}[t]
\label{alg:prune-asym} \begin{algorithmic}[1]\Function{Prune-asym}{$\big\{\big(\bm{\nu}^{(1),\tau},\bm{\nu}^{(2),\tau},\bm{\nu}^{(3),\tau},\lambda_{\tau},\mathsf{gap}_{\tau}\big)\big\}_{\tau=1}^{L},\epsilon_{\mathsf{th}}$} 

\State Set $\Theta=\big\{\big(\bm{\nu}^{(1),\tau},\bm{\nu}^{(2),\tau},\bm{\nu}^{(3),\tau},\lambda_{\tau},\mathsf{gap}_{\tau}\big)\big\}_{\tau=1}^{L}.$ 

\For{$i=1,\dots,r$} 

\State Choose $(\bm{\nu}^{(1),\tau},\bm{\nu}^{(2),\tau},\bm{\nu}^{(3),\tau},\lambda_{\tau},\mathsf{gap}_{\tau})$
from $\Theta$ with the largest $\mathsf{gap}_{\tau}$; set $\bm{u}^{i}=\bm{\nu}^{(1),\tau},\bm{v}^{i}=\bm{\nu}^{(2),\tau},\bm{w}^{i}=\bm{\nu}^{(3),\tau}$
and $\lambda_{i}=\lambda_{\tau}$. 

\State Update $\Theta\gets\Theta\setminus\left\{ \big(\bm{\nu}^{(1),\tau},\bm{\nu}^{(2),\tau},\bm{\nu}^{(3),\tau},\lambda_{\tau},\mathsf{gap}_{\tau}\big)\in\Theta:|\langle\bm{\nu}^{(2),\tau},\bm{v}^{i}\rangle|>1-\epsilon_{\mathsf{th}}\right\} $. 

\EndFor 

\State \Return $\big\{(\bm{u}^{1},\bm{v}^{1},\bm{w}^{1},\lambda_{1}),\dots,(\bm{u}^{r},\bm{v}^{r},\bm{w}^{r},\lambda_{r})\big\}.$ 

\EndFunction \end{algorithmic}
\end{algorithm}

\subsection{Numerical experiments}

In order to validate the effectiveness of the proposed algorithm,
we conduct a series of numerical experiments. 

To begin with, let us generate the
true tensor $\bm{T}^{\star}=\sum_{1\leq i\leq r}\bm{u}_{i}^{\star}\otimes\bm{v}_{i}^{\star}\otimes\bm{w}_{i}^{\star}$
via the following procedure: (1) generate $\widehat{\bm{u}}_{i}^{\star}\overset{\mathsf{i.i.d.}}{\sim}\mathcal{N}(\bm{0},\bm{I}_{d_{1}})$,
$\widehat{\bm{v}}_{i}^{\star}\overset{\mathsf{i.i.d.}}{\sim}\mathcal{N}(\bm{0},\bm{I}_{d_{2}})$
and $\widehat{\bm{w}}_{i}^{\star}\overset{\mathsf{i.i.d.}}{\sim}\mathcal{N}(\bm{0},\bm{I}_{d_{3}})$,
and (2) set $\lambda_{i}^{\star}:=\|\widehat{\bm{u}}_{i}^{\star}\|_{2}\|\widehat{\bm{v}}_{i}^{\star}\|_{2}\|\widehat{\bm{w}}_{i}^{\star}\|_{2}$,
$\bm{u}_{i}^{\star}=\lambda_{i}^{\star1/3}\widehat{\bm{u}}_{i}^{\star}$,
$\bm{v}_{i}^{\star}=\lambda_{i}^{\star1/3}\widehat{\bm{v}}_{i}^{\star}$
and $\bm{w}_{i}^{\star}=\lambda_{i}^{\star1/3}\widehat{\bm{w}}_{i}^{\star}$.
Akin to the symmetric case, we choose the algorithmic parameters to
be $L=r^{2}$, $\epsilon_{\mathsf{th}}=0.4$ and $t_{0}=100$. The
noise components are generated as i.i.d.~Gaussians, namely, $E_{i,j,k}\overset{\mathsf{i.i.d.}}{\sim}\mathcal{N}(0,\sigma^{2})$. The signal-to-noise ratio (SNR) is defined as $\text{SNR}=\|\bm{T}^{\star}\|_{\mathrm{F}}^{2}/(\sigma^{2}d_{1}d_{2}d_{3})$,
and $\text{SNR}=\infty$ stands for the noiseless case (i.e.~the case with $\sigma=0$).
The stepsize is set to be $\eta_{t}\equiv\eta=1/\big(2\max_{i}\|\bm{u}_{i}^{0}\|_{2}^{4/3}\big)$,
where $\bm{U}^{0}=[\bm{u}_{1}^{0},\cdots,\bm{u}_{r}^{0}]$ is the
initial estimate generated by Algorithm \ref{alg:init-asym}. 
Figure~\ref{fig:convergence-asym} illustrates the numerical convergence
rates of the proposed algorithm, where we set $d_{1}=100$, $d_{2}=150$, $d_{3}=200$,
$r=4$ and $p=0.05$. Since one can only recover the tensor factors
up to global signs and global permutation, the relative estimation
errors of $\bm{U}^{t}$ 
are defined to be
\begin{align*}
 & \min_{\bm{\Pi}\in\mathsf{perm}_{r},\,\bm{S}=\mathsf{diag}(s_{i})\in\mathbb{R}^{r\times r},s_{i}=\pm1}\left\Vert \bm{U}^{t}\bm{\Pi}\bm{S}-\bm{U}^{\star}\right\Vert _{\mathrm{F}}/\left\Vert \bm{U}^{\star}\right\Vert _{\mathrm{F}},\\
 & \min_{\bm{\Pi}\in\mathsf{perm}_{r},\,\bm{S}=\mathsf{diag}(s_{i})\in\mathbb{R}^{r\times r},s_{i}=\pm1}\left\Vert \bm{U}^{t}\bm{\Pi}\bm{S}-\bm{U}^{\star}\right\Vert _{\mathrm{2,\infty}}/\left\Vert \bm{U}^{\star}\right\Vert _{\mathrm{2,\infty}},
\end{align*}
where $\mathsf{perm}_{r}$ stands for the set of $r\times r$ permutation
matrices. The error metrics for $\bm{V}^{t}$ and $\bm{W}^{t}$ can be defined analogously.  The relative Euclidean and $\ell_{2,\infty}$ estimation
errors of $\bm{T}$ are defined as $\|\bm{T}^{t}-\bm{T}^{\star}\|_{\mathrm{F}}/\|\bm{T}^{\star}\|_{\mathrm{F}}$
and $\|\bm{T}^{t}-\bm{T}^{\star}\|_{2,\infty}/\|\bm{T}^{\star}\|_{2,\infty}$,
respectively, where $\bm{T}^{t}=\sum_{1\leq i\leq r}\bm{u}_{i}^{t}\otimes\bm{v}_{i}^{t}\otimes\bm{w}_{i}^{t}$.
As can be seen from the plots, the estimation errors decay geometrically
fast in the noiseless case. In the noisy case, the numerical estimation
errors of the algorithm also converge geometrically fast until an
error floor is hit. 

\begin{figure}[t]
\label{label:convergence-asym}\centering

\begin{tabular}{cc}
\includegraphics[width=0.45\textwidth]{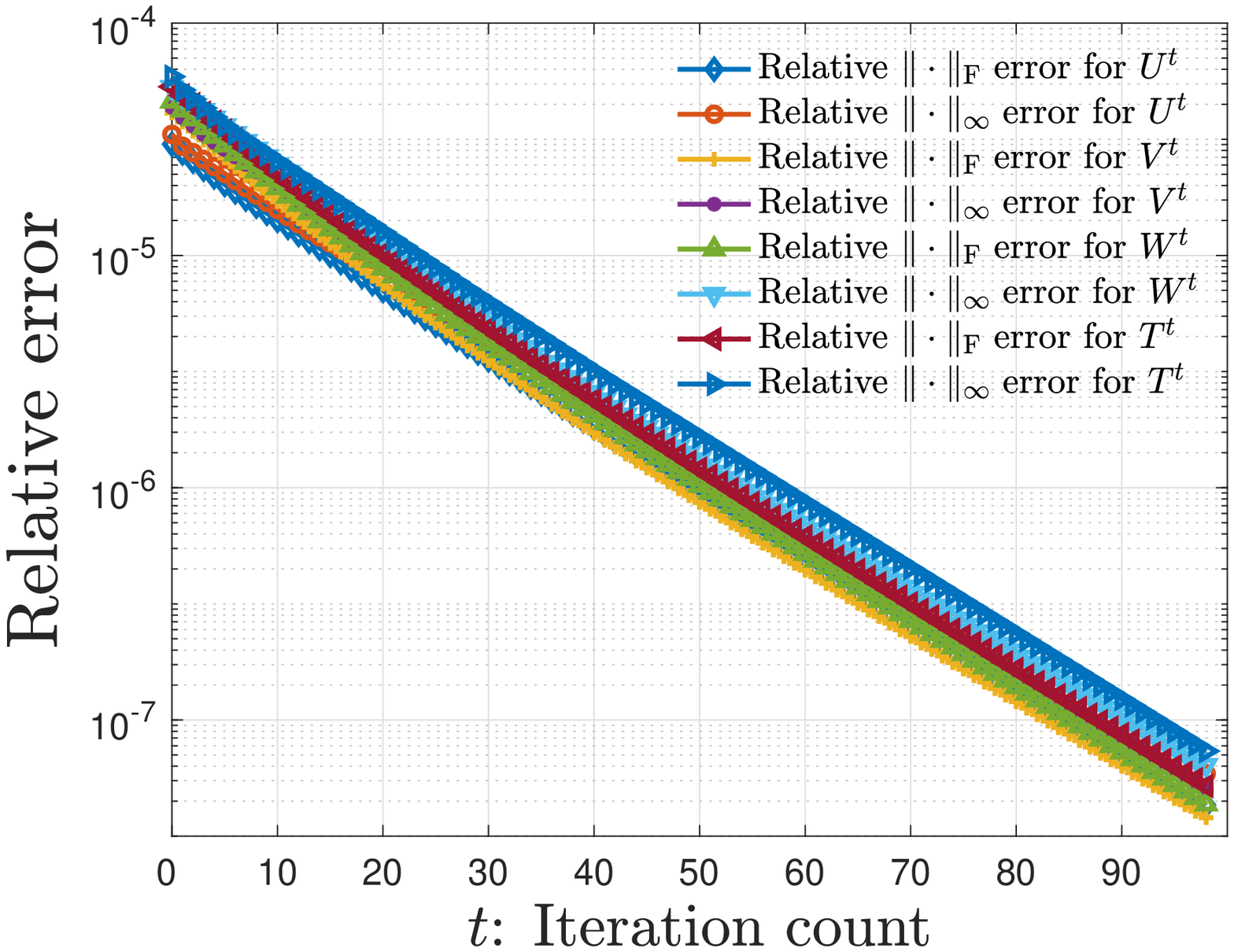} & \includegraphics[width=0.45\textwidth]{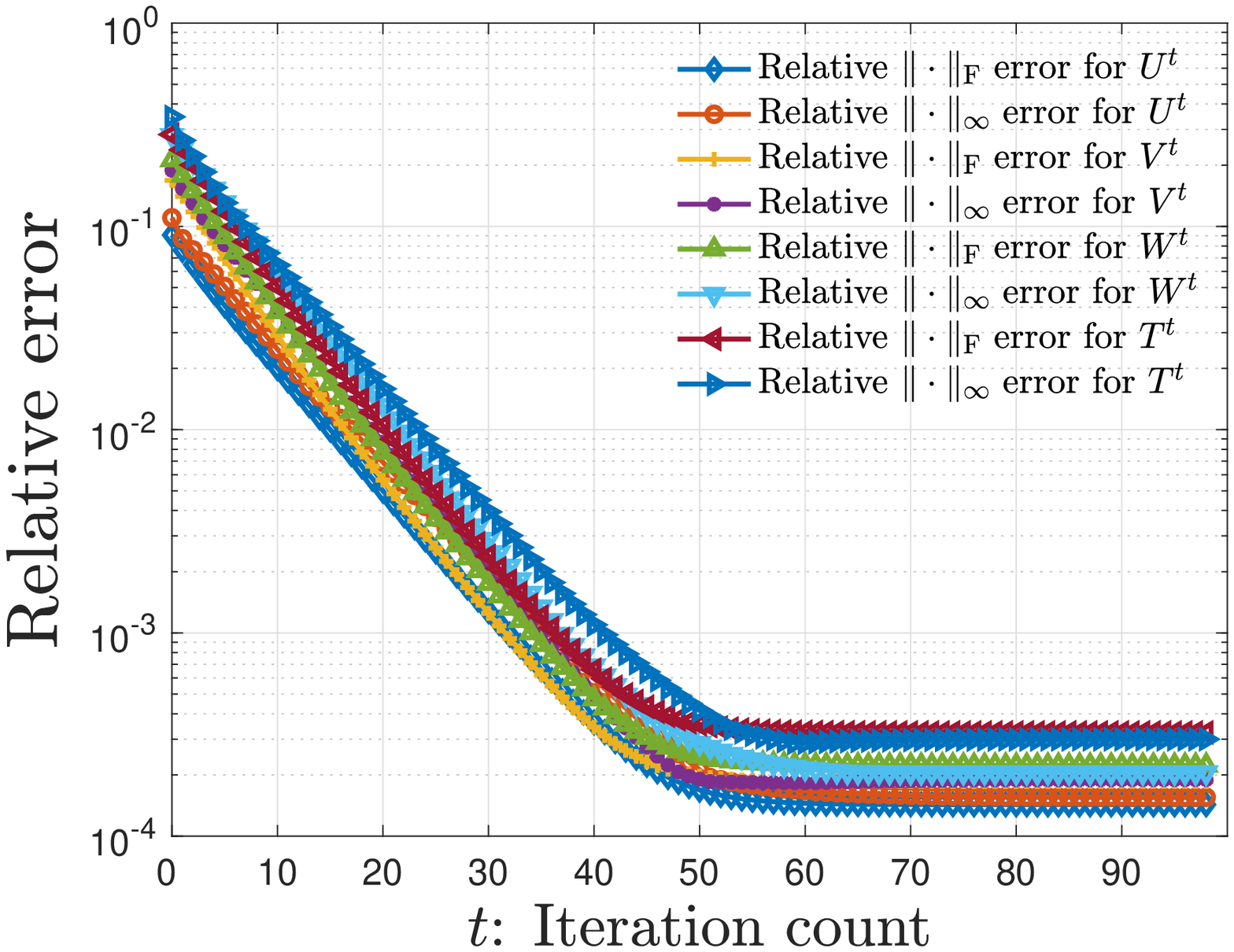}\tabularnewline
(a) & (b)\tabularnewline
\end{tabular}

\caption{(a) relative errors of the estimates $\bm{U}^{t},\bm{V}^{t},\bm{W}^{t}$
and $\bm{T}^{t}$ vs. iteration count $t$ for noiseless tensor completion,
where $d_{1}=100$, $d_{2}=150$, $d_{3}=200$, $r=4$, $p=0.05$;
(b) relative errors of the estimates $\bm{U}^{t},\bm{V}^{t},\bm{W}^{t}$
and $\bm{T}^{t}$ vs. iteration count $t$ for noisy tensor completion,
where $d_{1}=100$, $d_{2}=150$, $d_{3}=200$, $r=4$, $p=0.05$,
$\text{SNR}=10$.\label{fig:convergence-asym}}
\end{figure}

Moving beyond the above synthetic data, we apply our methods to an simulated MRI brain
image dataset \cite{cocosco1997brainweb}, which is available online at the McGill University McConnell Brain Imaging
Centre and has also been studied in prior work \cite{xia2017statistically}. The database consists of pre-computed simulated brain data, and a set of parameters can be set to generate the data accordingly. In this series of experiments, we choose the parameters to be T1 modality, $1$mm
slice thickness, $1\%$ noise and $20\%$ RF. The resulting
data is a three-order tensor in $\mathbb{R}^{181\times217\times181}$,
where each slice in any mode corresponds to a brain image. We use the tensor decomposition algorithm in \cite{anandkumar2015learning} to decompose the original data, and keep the top-$36$ components whose energy accounts for $85\%$ of the original tensor. Hence, the resulting low-rank tensor $\bm{T}^\star$ preserves the major information of the original tensor, and is used as the ground truth in the simulation. We sample
each entry of $\bm{T}^{\star}$ independently with probability $p$,
and then inject i.i.d.~Gaussian noise $\mathcal{N}(0,\sigma^{2})$ to each
observed entry (so as to simulate the noisy scenario). Based on the incomplete set of noisy samples, we apply the proposed algorithm to reconstruct $\bm{T}^\star$; the algorithmic parameters are chosen as above, and we denote by $\bm{T}$ the resulting tensor estimate. Table~\ref{table:real-data}
 reports the relative errors $\|\bm{T}-\bm{T}^{\star}\|_{\mathrm{F}}/\|\bm{T}^{\star}\|_{\mathrm{F}}$
and $\|\bm{T}-\bm{T}^{\star}\|_{\infty}/\|\bm{T}^{\star}\|_{\infty}$. As can be seen in the table, the performance of our algorithm is quite
favorable for low-rank tensor reconstruction. In particular, given that these real data tensor examples might not satisfy the assumptions imposed in our main theorems, these numerical experiments hint at the applicability of our algorithm to a broader set of problems. 

\begin{table}[t]
\centering \caption{Relative $\ell_{2}$ and $\ell_{2,\infty}$ errors for varying $p$
and SNR  in the MRI data experiments.\label{table:real-data}}
\begin{tabular}{c|c|c}
\hline 
$(p,\text{SNR})$ & $\|\bm{T}-\bm{T}^{\star}\|_{\mathrm{F}}/\|\bm{T}^{\star}\|_{\mathrm{F}}$ & $\|\bm{T}-\bm{T}^{\star}\|_{\infty}/\|\bm{T}^{\star}\|_{\infty}$\tabularnewline
\hline 
$(0.05,10)$ & $0.2231$ & $0.8774$\tabularnewline
\hline 
$(0.05,\infty)$ & $0.2149$ & $0.7856$\tabularnewline
\hline 
$(0.1,10)$ & $0.1472$ & $0.5343$\tabularnewline
\hline 
$(0.1,\infty)$ & $0.1398$ & $0.3991$\tabularnewline
\hline 
$(0.2,10)$ & $0.0841$ & $0.1417$\tabularnewline
\hline 
$(0.2,\infty)$ & $0.0686$ & $0.1193$\tabularnewline
\hline 
\end{tabular}
\end{table}

\subsection{Analysis ideas}

Before describing the proof ideas, we define the following incoherence
parameters and condition number that, similar  to the symmetric case,  play a crucial role in our theoretical
development.

\begin{subequations}

\begin{definition}\label{definition:incoherence-asymmetric}Define
the incoherence parameters and the condition number of $\bm{T}^{\star}$ as follows 
\begin{align}
\mu_{0} & :=\frac{d_{1}d_{2}d_{3}\|\bm{T}^{\star}\|_{\infty}^{2}}{\|\bm{T}^{\star}\|_{\mathrm{F}}^{2}},\label{def:incoh-T-inf-asym}\\
\mu_{1} & :=\max\left\{ \frac{ d_{1} \|\bm{u}_{i}^{\star}\|_{\infty}^{2}}{\|\bm{u}_{i}^{\star}\|_{2}^{2}},\,\frac{d_2 \|\bm{v}_{i}^{\star}\|_{\infty}^{2}}{\|\bm{v}_{i}^{\star}\|_{2}^{2}},\,\frac{d_3 \|\bm{w}_{i}^{\star}\|_{\infty}^{2}}{\|\bm{w}_{i}^{\star}\|_{2}^{2}}\right\} ,\label{def:incoh-u-inf-asym}\\
\mu_{2} & :=\max\left\{ \frac{d_1 \langle\bm{u}_{i}^{\star},\bm{u}_{j}^{\star}\rangle^{2}}{\|\bm{u}_{i}^{\star}\|_{2}^{2}\|\bm{u}_{j}^{\star}\|_{2}^{2}},\,\frac{d_2 \langle\bm{v}_{i}^{\star},\bm{v}_{j}^{\star}\rangle^{2}}{\|\bm{v}_{i}^{\star}\|_{2}^{2}\|\bm{v}_{j}^{\star}\|_{2}^{2}},\,\frac{d_{3} \langle\bm{w}_{i}^{\star},\bm{w}_{j}^{\star}\rangle^{2}}{\|\bm{w}_{i}^{\star}\|_{2}^{2}\|\bm{w}_{j}^{\star}\|_{2}^{2}}\right\} ,\label{def:incoh-orth-asym}\\
\kappa & :=\frac{\max_{i}\left\{ \|\bm{u}_{i}^{\star}\|_{2}\|\bm{v}_{i}^{\star}\|_{2}\|\bm{w}_{i}^{\star}\|_{2}\right\} }{\min_{i}\left\{ \|\bm{u}_{i}^{\star}\|_{2}\|\bm{v}_{i}^{\star}\|_{2}\|\bm{w}_{i}^{\star}\|_{2}\right\} }.\label{def:kappa-asym}
\end{align}
\end{definition}
\noindent For notational convenience, we shall set
\begin{equation}
\mu:=\max\left\{ \mu_{0},\mu_{1},\mu_{2}\right\} ,\quad d_{\mathrm{min}}:=\min\{d_{1},\,d_{2},\,d_{3}\}\quad\text{and}\quad d_{\mathrm{max}}:=\max\{d_{1},\,d_{2},\,d_{3}\}.\label{eq:defn-mu-dmin-dmax-asymmetry}
\end{equation}
\end{subequations}


As has been made clear in the symmetric case, at the heart of our analysis  lie two crucial components: (1) the
geometric property of a noiseless version of the loss function,
and (2) reasonably well initial estimates for tensor factors (in the entrywise sense). 
Rather than providing a complete analysis for the asymmetric case (which would be very long), we shall only point out the important steps needed to extend these two parts for the asymmetric case.
%

\paragraph{1. Local optimization landscape.}
Similar to the symmetric counterpart, the key step of the local convergence analysis lies in establishing the favorable geometric property (i.e.~local strong convexity and smoothness) of the following noiseless regularized loss function
%
\begin{equation}
g_{\mathsf{clean}}\big(\bm{U},\bm{V},\bm{W}\big)=\frac{1}{6p}\,\Big\|\mathcal{P}_{\Omega}\Big(\sum_{i=1}^{r}\bm{u}_{i}\otimes\bm{v}_{i}\otimes\bm{w}_{i}-\bm{T}^{\star}\Big)\Big\|_{\mathrm{F}}^{2}+\mathsf{reg}\big(\bm{U},\bm{V},\bm{W}\big),
	\label{eq:loss-function-clean-asym}
\end{equation}
where the regularization term $\mathsf{reg}\big(\bm{U},\bm{V},\bm{W}\big)$
is defined in (\ref{eq:reg-function-asym}).
In words, this is a simplified version of the original loss function \eqref{eq:loss-function-asym} by dropping the influence of the noise.  
Lemma~\ref{lemma:RIC-asym}
below demonstrates that $g_{\mathsf{clean}}$ is locally strongly
convex and smooth in the neighborhood of the ground truth. \begin{lemma}[{\bf
Local strong convexity and smoothness}] \label{lemma:RIC-asym}
Suppose that
\begin{align}
\label{eq:RIC-asym-asmp}
p\geq c_{0}\max\left\{ \frac{\log^{3}d_{\max}}{\sqrt{d_{1}d_{2}d_{3}}},\frac{\mu^{2}r^{2}d_{\max}\log^{5}d_{\max}}{d_{1}d_{2}d_{3}}\right\} ,\qquad r\leq c_{1}\sqrt{\frac{d_{\min}}{\mu}}
\end{align}
and that the regularization parameter obeys $\big|\alpha_{i}-\lambda_{i}^{\star2/3}\big|\leq c_{2}\lambda_{\min}^{\star2/3}$
for some sufficiently large (resp.~small) constant $c_{0}>0$ (resp.~$c_{1},c_{2}>0$).
Then with probability greater than $1-O\big(d_{\min}^{-10}\big)$,
\begin{align}
\mathsf{vec}\left(\begin{bmatrix}\bm{X}\\
\bm{Y}\\
\bm{Z}
\end{bmatrix}\right)^{\top}\nabla^{2}g_{\mathsf{clean}}(\bm{U},\bm{V},\bm{W})\,\mathsf{vec}\left(\begin{bmatrix}\bm{X}\\
\bm{Y}\\
\bm{Z}
\end{bmatrix}\right) & \leq4\lambda_{\max}^{\star4/3}\left(\left\Vert \bm{X}\right\Vert _{\mathrm{F}}^{2}+\left\Vert \bm{Y}\right\Vert _{\mathrm{F}}^{2}+\left\Vert \bm{Z}\right\Vert _{\mathrm{F}}^{2}\right),\\
\mathsf{vec}\left(\begin{bmatrix}\bm{X}\\
\bm{Y}\\
\bm{Z}
\end{bmatrix}\right)^{\top}\nabla^{2}g_{\mathsf{clean}}(\bm{U},\bm{V},\bm{W})\,\mathsf{vec}\left(\begin{bmatrix}\bm{X}\\
\bm{Y}\\
\bm{Z}
\end{bmatrix}\right) & \geq\frac{1}{2}\lambda_{\min}^{\star4/3}\left(\left\Vert \bm{X}\right\Vert _{\mathrm{F}}^{2}+\left\Vert \bm{Y}\right\Vert _{\mathrm{F}}^{2}+\left\Vert \bm{Z}\right\Vert _{\mathrm{F}}^{2}\right)
\end{align}
holds simultaneously for all $\bm{X}\in\mathbb{R}^{d_{1}\times r},\bm{Y}\in\mathbb{R}^{d_{2}\times r},\bm{Z}\in\mathbb{R}^{d_{3}\times r}$
and all $\bm{U}\in\mathbb{R}^{d_{1}\times r},\bm{V}\in\mathbb{R}^{d_{2}\times r},\bm{W}\in\mathbb{R}^{d_{3}\times r}$
obeying
\begin{subequations}\label{eq:RIC-asym-asymp}
\begin{align}
\|\bm{U}-\bm{U}^{\star}\bm{S}^{(1)}\|_{\mathrm{F}} & \leq\delta\left\Vert \bm{U}^{\star}\right\Vert _{\mathrm{F}}\quad\text{and}\quad\left\Vert \bm{U}-\bm{U}^{\star}\bm{S}^{(1)}\right\Vert _{2,\infty}\leq\delta\left\Vert \bm{U}^{\star}\right\Vert _{2,\infty};\label{eq:RIC-asym-asymp-U}\\
\|\bm{V}-\bm{V}^{\star}\bm{S}^{(2)}\|_{\mathrm{F}} & \leq\delta\left\Vert \bm{V}^{\star}\right\Vert _{\mathrm{F}}\quad\text{and}\quad\left\Vert \bm{V}-\bm{V}^{\star}\bm{S}^{(2)}\right\Vert _{2,\infty}\leq\delta\left\Vert \bm{V}^{\star}\right\Vert _{2,\infty};\label{eq:RIC-asym-asymp-V}\\
\|\bm{W}-\bm{W}^{\star}\bm{S}^{(3)}\|_{\mathrm{F}} & \leq\delta\left\Vert \bm{V}^{\star}\right\Vert _{\mathrm{F}}\quad\text{and}\quad\left\Vert \bm{W}-\bm{W}^{\star}\bm{S}^{(3)}\right\Vert _{2,\infty}\leq\delta\left\Vert \bm{W}^{\star}\right\Vert _{2,\infty}.\label{eq:RIC-asym-asymp-W}
\end{align}
\end{subequations}
Here, $\delta\leq c_{3}/(\mu^{3/2}r)$ for some
sufficiently small constant $c_{3}>0$, and $\bm{S}^{(1)}$, $\bm{S}^{(2)}$
and $\bm{S}^{(3)}$ are some diagonal matrices in $\mathbb{R}^{r\times r}$
such that for each $1\leq i\leq r$, two of $S_{i,i}^{(1)}$, $S_{i,i}^{(2)}$
and $S_{i,i}^{(3)}$ equal to $-1$ with the remaining one equal to $1$. \end{lemma}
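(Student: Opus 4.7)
The strategy mirrors the proof of Lemma~\ref{lemma:RIC} but must now accommodate three separate factor matrices as well as the extra balancing regularizer $\mathsf{reg}$. The first reduction is a sign normalization: since $S_{i,i}^{(1)}S_{i,i}^{(2)}S_{i,i}^{(3)}=+1$ for every $i$, the ``signed truth'' $(\bm{U}^{\star}\bm{S}^{(1)},\bm{V}^{\star}\bm{S}^{(2)},\bm{W}^{\star}\bm{S}^{(3)})$ produces the same tensor $\bm{T}^{\star}$ and carries the same incoherence/condition-number parameters from Definition~\ref{definition:incoherence-asymmetric}, so after relabeling we may assume $\bm{S}^{(1)}=\bm{S}^{(2)}=\bm{S}^{(3)}=\bm{I}$ throughout. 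Differentiating the gradient formulas \eqref{eq:grad-asym} yields the explicit quadratic form
\begin{align*}
Q &:= \mathsf{vec}([\bm{X};\bm{Y};\bm{Z}])^{\top}\nabla^{2}g_{\mathsf{clean}}\,\mathsf{vec}([\bm{X};\bm{Y};\bm{Z}]) \\
&= \tfrac{1}{3p}\Big\|\mathcal{P}_{\Omega}\Big(\sum_{i}(\bm{x}_{i}\otimes\bm{v}_{i}\otimes\bm{w}_{i}+\bm{u}_{i}\otimes\bm{y}_{i}\otimes\bm{w}_{i}+\bm{u}_{i}\otimes\bm{v}_{i}\otimes\bm{z}_{i})\Big)\Big\|_{\mathrm{F}}^{2} \\
&\quad +\tfrac{2}{p}\Big\langle\mathcal{P}_{\Omega}\Big(\sum_{i}\bm{u}_{i}\otimes\bm{v}_{i}\otimes\bm{w}_{i}-\bm{T}^{\star}\Big),\,\sum_{i}(\bm{x}_{i}\otimes\bm{y}_{i}\otimes\bm{w}_{i}+\bm{x}_{i}\otimes\bm{v}_{i}\otimes\bm{z}_{i}+\bm{u}_{i}\otimes\bm{y}_{i}\otimes\bm{z}_{i})\Big\rangle \\
&\quad +Q_{\mathsf{reg}}(\bm{X},\bm{Y},\bm{Z}).
\end{align*}
I would decompose the first summand as $\alpha_1+\alpha_2+\alpha_4$ in the spirit of Appendix~\ref{pf:lemma_RIC}, where $\alpha_4:=\tfrac13\|\sum_i(\bm{x}_i\otimes\bm{v}_i^\star\otimes\bm{w}_i^\star+\bm{u}_i^\star\otimes\bm{y}_i\otimes\bm{w}_i^\star+\bm{u}_i^\star\otimes\bm{v}_i^\star\otimes\bm{z}_i)\|_{\mathrm{F}}^{2}$ is the population-at-truth piece, $\alpha_2$ is the sampling deviation, and $\alpha_1$ is the plug-in deviation; the cross term contributes $\alpha_3$.

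Expanding $\alpha_4$ and using $\|\bm{u}_i^\star\|_2=\|\bm{v}_i^\star\|_2=\|\bm{w}_i^\star\|_2=\lambda_i^{\star1/3}$ from \eqref{eq:norm-constraint-tensor-factors}, the diagonal part equals $\sum_i\lambda_i^{\star4/3}(\|\bm{x}_i\|_2^2+\|\bm{y}_i\|_2^2+\|\bm{z}_i\|_2^2)$ plus an ``intra-$i$'' cross piece $\tfrac23\lambda_i^{\star2/3}(\langle\bm{x}_i,\bm{u}_i^\star\rangle+\langle\bm{y}_i,\bm{v}_i^\star\rangle+\langle\bm{z}_i,\bm{w}_i^\star\rangle)^2-\tfrac23\lambda_i^{\star4/3}(a_i^2+b_i^2+\gamma_i^2)$ (where $a_i,b_i,\gamma_i$ are the scaled projections) and cross terms for $i\neq j$ of order $|\langle\bm{u}_i^\star,\bm{u}_j^\star\rangle\langle\bm{v}_i^\star,\bm{v}_j^\star\rangle\langle\bm{w}_i^\star,\bm{w}_j^\star\rangle|\lesssim(\mu/d_{\min})^{3/2}\lambda_{\max}^{\star2}$, which are absorbed under $r\ll\sqrt{d_{\min}/\mu}$. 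The three perturbation terms are controlled by direct transcription of the symmetric proof: $\alpha_2$ by a rectangular-tensor restricted-isometry bound (a routine extension of \cite[Lemma~5]{yuan2016tensor}); $\alpha_1$ by writing $\bm{u}_i=\bm{u}_i^\star+\bm{\Delta}_i^{\bm{u}}$ etc., invoking the row-wise hypotheses \eqref{eq:RIC-asym-asymp} and the Chernoff bound $\max_{i}|\{(j,k):\chi_{ijk}=1\}|\lesssim d_{\max}^2p$; and $\alpha_3$ via the entrywise estimate $\|\sum_i\bm{u}_i\otimes\bm{v}_i\otimes\bm{w}_i-\bm{T}^\star\|_\infty\lesssim\delta\mu^{3/2}r\lambda_{\max}^\star/\sqrt{d_1d_2d_3}$ combined with $\|p^{-1}\mathcal{P}_\Omega(\bm{1}^{\otimes3})\|\lesssim\sqrt{d_1d_2d_3}$ from Lemma~\ref{lemma:T_op_norm}. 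Under \eqref{eq:RIC-asym-asmp}, each contributes at most $o(\lambda_{\min}^{\star4/3})(\|\bm{X}\|_{\mathrm{F}}^{2}+\|\bm{Y}\|_{\mathrm{F}}^{2}+\|\bm{Z}\|_{\mathrm{F}}^{2})$.

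The main new obstacle is handling $Q_{\mathsf{reg}}$, which I expect to be the delicate part. Absent the regularizer, $\alpha_4$ degenerates along the two-dimensional (per $i$) family of rescaling directions $(\bm{x}_i,\bm{y}_i,\bm{z}_i)=(c_1\bm{u}_i^\star,c_2\bm{v}_i^\star,c_3\bm{w}_i^\star)$ with $c_1+c_2+c_3=0$, because $\bm{x}_i\otimes\bm{v}_i^\star\otimes\bm{w}_i^\star+\bm{u}_i^\star\otimes\bm{y}_i\otimes\bm{w}_i^\star+\bm{u}_i^\star\otimes\bm{v}_i^\star\otimes\bm{z}_i=(c_1+c_2+c_3)\bm{u}_i^\star\otimes\bm{v}_i^\star\otimes\bm{w}_i^\star=\bm{0}$, so strong convexity must come from $Q_{\mathsf{reg}}$ along these directions. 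A direct second-derivative computation gives
\begin{align*}
Q_{\mathsf{reg}}(\bm{X},\bm{Y},\bm{Z}) &= \tfrac{1}{3}\sum_{i}\alpha_{i}\big[(a_{i}-b_{i})^{2}+(a_{i}-\gamma_{i})^{2}+(b_{i}-\gamma_{i})^{2}\big] \\
&\quad +\tfrac{1}{6}\sum_{i}\alpha_{i}\big[(\|\bm{u}_i\|_2^{2}-\|\bm{v}_i\|_2^{2})(\|\bm{x}_i\|_2^{2}-\|\bm{y}_i\|_2^{2})+\text{two analogous terms}\big],
\end{align*}
where $a_i:=\langle\bm{u}_i,\bm{x}_i\rangle$, $b_i:=\langle\bm{v}_i,\bm{y}_i\rangle$, $\gamma_i:=\langle\bm{w}_i,\bm{z}_i\rangle$; the second (``imbalance-error'') line is controlled by $|\|\bm{u}_i\|_2^2-\|\bm{v}_i\|_2^2|\lesssim\delta\sqrt{r}\lambda_{\max}^{\star2/3}$ under \eqref{eq:RIC-asym-asymp} and absorbs into the other nuisance terms because $\delta\ll1/(\mu^{3/2}r)$. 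The $3\times 3$ matrix with diagonal $1$ and off-diagonal $-1/2$ governing the principal quadratic form has kernel exactly $\mathrm{span}(1,1,1)$ with eigenvalue $3/2$ on the perpendicular subspace; evaluating along the rescaling direction above with $\alpha_i\asymp\lambda_i^{\star2/3}$ then yields
\begin{align*}
Q_{\mathsf{reg}}\ \gtrsim\ \alpha_i\lambda_i^{\star4/3}(c_1^2+c_2^2+c_3^2)\ \gtrsim\ \lambda_{\min}^{\star4/3}(\|\bm{x}_i\|_2^{2}+\|\bm{y}_i\|_2^{2}+\|\bm{z}_i\|_2^{2}),
\end{align*}
supplying exactly the curvature that $\alpha_4$ is missing. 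The final step is an orthogonal decomposition: for each $i$, split $(\bm{x}_i,\bm{y}_i,\bm{z}_i)$ into its component inside $\mathrm{span}(\bm{u}_i^\star)\times\mathrm{span}(\bm{v}_i^\star)\times\mathrm{span}(\bm{w}_i^\star)$ and its transverse complement; further split the parallel component according to the sign of $c_1+c_2+c_3$. Apply the $\alpha_4$ lower bound on the transverse piece (where $\alpha_4$ already dominates) and the $Q_{\mathsf{reg}}$ bound above on the degenerate two-dimensional slice, showing the two estimates combine to $Q\ge\tfrac12\lambda_{\min}^{\star4/3}(\|\bm{X}\|_{\mathrm{F}}^{2}+\|\bm{Y}\|_{\mathrm{F}}^{2}+\|\bm{Z}\|_{\mathrm{F}}^{2})$. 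The upper bound $4\lambda_{\max}^{\star4/3}(\cdots)$ follows from routine Cauchy--Schwarz; the subtle bookkeeping—and what I expect to be the hardest part—is verifying that the cross-$i$ correlations between parallel and transverse pieces do not destroy the $\Omega(\lambda_{\min}^{\star4/3})$ lower bound uniformly in $(\bm{X},\bm{Y},\bm{Z})$, which requires tracking the $O(\sqrt{\mu/d_{\min}})$ near-orthogonality slop through the decomposition.
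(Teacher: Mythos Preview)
Your outline is essentially correct and matches the paper's structure for the perturbation pieces $\alpha_1,\alpha_2,\alpha_3$ and the regularizer deviation. The one place where you take a genuinely different route is the lower bound on the ``main'' term $\alpha_4+Q_{\mathsf{reg}}$. You keep $\alpha_4$ and $Q_{\mathsf{reg}}$ separate, detect the degeneracy of $\alpha_4$ along the rescaling directions, and then patch it via a parallel/transverse decomposition---which you correctly flag as the delicate step. The paper instead folds the \emph{idealized} regularizer (with $\alpha_i$ replaced by $\lambda_i^{\star2/3}$ and $\bm{u}_i,\bm{v}_i,\bm{w}_i$ replaced by the signed truth) directly into $\alpha_4$ as a single term $\beta_1$, and then simply expands: the intra-$i$ cross terms of the Frobenius piece cancel \emph{exactly} against the intra-$i$ cross terms of the idealized regularizer, leaving $\beta_1=\gamma_1+\gamma_2+(\text{small cross-}i\text{ terms})$ with $\gamma_1,\gamma_2\ge 0$ manifestly. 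This algebraic cancellation is what eliminates the entire orthogonal-decomposition bookkeeping that you anticipate as the hardest part; once you add the two quadratic forms before bounding, the rescaling degeneracy never appears.

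Two small corrections: the coefficient of the second-order residual inner product is $\tfrac{2}{3p}$, not $\tfrac{2}{p}$ (in the asymmetric case the second derivative of $\bm{u}_i\otimes\bm{v}_i\otimes\bm{w}_i$ picks up a factor $2$, not $6$); and the paper makes the WLOG choice $\bm{S}^{(1)}=\bm{I}$, $\bm{S}^{(2)}=\bm{S}^{(3)}=-\bm{I}$ rather than absorbing signs into the truth, though your reduction is equally valid since the product of the signs is $+1$.
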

\begin{proof}See Appendix~\ref{subsec:Proof-of-Lemma:RIC-asym}.
\end{proof}
In a nutshell, Lemma~\ref{lemma:RIC-asym} confirms local strong convexity and smoothness of  the noiseless regularized loss  $g_{\mathsf{clean}}(\bm{U},\bm{V},\bm{W})$,
provided that \eqref{eq:RIC-asym-asmp} holds and that the matrices $\bm{U}$,
$\bm{V}$ and $\bm{W}$ are sufficiently close to the ground truth in every single
row. This is similar to the property of the symmetric counterpart $f_{\mathsf{clean}} (\bm{U})$ (cf.~Lemma~\ref{lemma:RIC} in Appendix~\ref{sec:analysis-GD}), except that we need to deal with asymmetric tensor factors as well as additional regularization terms. In particular, when $d_1 \asymp d_2 \asymp d_3 \asymp d$ and $\mu, r \asymp 1$, Condition \eqref{eq:RIC-asym-asmp} reduces to $p \gg d^{-3/2} \log^3 d$ and $r \ll \sqrt{d}$, which resembles  \eqref{eq:RIC-asymp-sym} in Lemma~\ref{lemma:RIC} derived for the symmetric case.

Having established the preceding local geometric properties of $g_{\mathsf{clean}}$, one can then argue similarly as in Lemmas~\ref{lemma:U_frob_loss}
and \ref{lemma:U_2inf_loss} in Appendix~\ref{sec:analysis-GD} to
prove that gradient descent converges linearly, as long as it is provided with an initial estimate satisfying the condition~\eqref{eq:RIC-asym-asymp}. Here, we emphasize that the regularization term \eqref{eq:reg-function-asym}, which essentially balances the sizes of the three tensor factors, is crucial for the local strong convexity and smoothness of $g_{\mathsf{clean}}$ to hold. 

\paragraph{2. Guaranteeing a reasonably good initialization.} Another crucial ingredient lies in guaranteeing an initial estimate with sufficiently good accuracy.  Recall that our initialization scheme consists of two stages: (1) subspace estimation, and (2) retrieval of individual tensor factors. 
\begin{itemize}
\item 
	The subspace estimation part remains largely unchanged:  we shall unfold the observed tensor along the $1$-st mode to estimate the subspace spanned by tensor factors $\{ \bm{u}_i^\star\}_{i=1}^r$,  and the $\ell_{2,\infty}$ subspace estimation accuracy can be established by invoking the main theorems of our companion paper \cite{cai2019subspace}.

\item Regarding the retrieval of individual tensor factors, the key observation is that: the random vector $\bm{g}^{\tau}$ we generate satisfies 
\begin{align*}
\bm{T}^\star \times_{1}\mathcal{P}_{\bm{U}^\star} (\bm{g}^{\tau})  
	= \sum_{i=1}^r \lambda_i^\star \langle \overline{\bm{u}}_{i}^{\star}, \mathcal{P}_{\bm{U}^\star} (\bm{g}^\tau) \rangle  \overline{\bm{v}}_{i}^{\star} \overline{\bm{w}}_{i}^{\star\top}
	= \sum_{i=1}^r \lambda_i^\star \langle \overline{\bm{u}}_{i}^{\star}, \bm{g}^\tau \rangle  \overline{\bm{v}}_{i}^{\star} \overline{\bm{w}}_{i}^{\star\top},
\end{align*}
where $\mathcal{P}_{\bm{U}^\star}$ is the projection onto the subspace spanned by $\{ \bm{u}_i^\star\}_{i=1}^r$, and
\begin{align}
	\overline{\bm{u}}_{i}^{\star}= \frac{1}{\|\bm{u}_{i}^{\star}\|_{2}} \bm{u}_{i}^{\star}, \qquad
	\overline{\bm{v}}_{i}^{\star}=  \frac{1}{\|\bm{v}_{i}^{\star}\|_{2}} \bm{v}_{i}^{\star}, \qquad
	and \qquad
	\overline{\bm{w}}_{i}^{\star}=  \frac{1}{\|\bm{w}_{i}^{\star}\|_{2}} \bm{w}_{i}^{\star}.
\end{align}
Given a sufficiently accurate subspace estimate $\bm{U}$ for $\bm{U}^\star$ obtained in the subspace estimation stage,
for each $1\leq i\leq r$, there exists at least a point $1\leq\tau\leq L$
such that the spectral gap of the population version of $\bm{T}\times_{1} \bm{\theta}^\tau = \bm{T}\times_{1}\mathcal{P}_{\bm{U}} (\bm{g}^{\tau})$
(with respect to the sampling and noise) is large enough and that the perturbation is sufficiently small. As a result, the top left (resp.~right) singular vector $\bm{\nu}^{(2)}$ (resp.~$\bm{\nu}^{(3)}$) of $\bm{T}\times_{1} \bm{\theta}^\tau$
 is close to the (normalized) tensor factor
$\overline{\bm{v}}_{i}^{\star}$ (resp.~$\overline{\bm{w}}_{i}^{\star}$)
		both in the $\ell_{2}$ and $\ell_{\infty}$ norm  (similar to Lemmas~\ref{lemma:init_2_loss}-\ref{lemma:init_loo_entry_loss} in Section~\ref{sec:analysis-retrieval} for the symmetric case). In turn, this further allows
us to reliably estimate $\overline{\bm{u}}_{i}^{\star}$ and the magnitude
$\lambda_{i}^{\star}$,  in a way similar to what we have done in Lemma~\ref{lemma:init_coeff} in Section~\ref{sec:analysis-retrieval}. By repeating the procedures with random restarts and invoking a pruning procedure similar to the symmetric case, we can hope to recover all tensor factors with high probability. 
\end{itemize}
Following the above strategies, one could adapt the proofs of Theorems~\ref{thm:init}-\ref{thm:init-loo}
in Section~\ref{sec:analysis-retrieval} to show that: our initial
estimates $\{\bm{u}_{i},\bm{v}_{i},\bm{w}_{i}\}_{i=1}^{r}$ are all exceedingly
close to the ground truth in the entrywise sense (up to global permutation
and global signs). This in turn confirms that the algorithm will enter a locally strongly convex and smooth region as characterized in  Lemma~\ref{lemma:RIC-asym}, thus leading to our performance guarantees for the entire algorithm.  Once again, while the analysis ideas for the asymmetric case bear much resemblance to the symmetric counterpart,  a complete proof has to be fairly long due to more clumsy notation compared to the symmetric case;  for the sake of brevity, we do not provide the full proof here.

\subsection{Proof of Lemma \ref{lemma:RIC-asym}}

\label{subsec:Proof-of-Lemma:RIC-asym}

Fix arbitrary matrices $\bm{X}=[\bm{x}_{1},\cdots,\bm{x}_{r}]\in\mathbb{R}^{d_{1}\times r}$,
$\bm{Y}=[\bm{y}_{1},\cdots,\bm{y}_{r}]\in\mathbb{R}^{d_{2}\times r}$
and $\bm{Z}=[\bm{z}_{1},\cdots,\bm{z}_{r}]\in\mathbb{R}^{d_{3}\times r}$.
Direct computation reveals that
\begin{align*}
 & \mathsf{vec}\left(\begin{bmatrix}\bm{X}\\
\bm{Y}\\
\bm{Z}
\end{bmatrix}\right)^{\top}\nabla^{2}g_{\mathsf{clean}}(\bm{U},\bm{V},\bm{W})\,\mathsf{vec}\left(\begin{bmatrix}\bm{X}\\
\bm{Y}\\
\bm{Z}
\end{bmatrix}\right)\\
 & \qquad=\frac{1}{3p}\,\Big\|\mathcal{P}_{\Omega}\Big(\sum_{i=1}^{r}\bm{x}_{i}\otimes\bm{v}_{i}\otimes\bm{w}_{i}+\bm{u}_{i}\otimes\bm{y}_{i}\otimes\bm{w}_{i}+\bm{u}_{i}\otimes\bm{v}_{i}\otimes\bm{z}_{i}\Big)\Big\|_{\mathrm{F}}^{2}\\
 & \qquad\quad+\frac{2}{3p}\,\Big\langle\sum_{i=1}^{r}\bm{x}_{i}\otimes\bm{y}_{i}\otimes\bm{w}_{i}+\bm{x}_{i}\otimes\bm{v}_{i}\otimes\bm{z}_{i}+\bm{u}_{i}\otimes\bm{y}_{i}\otimes\bm{z}_{i},\mathcal{P}_{\Omega}\Big(\sum_{i=1}^{r}\bm{u}_{i}\otimes\bm{v}_{i}\otimes\bm{w}_{i}-\bm{T}^{\star}\Big)\Big\rangle\\
 & \qquad\quad+\frac{1}{3}\sum_{i=1}^{r}\alpha_{i}\big(\langle\bm{x}_{i},\bm{u}_{i}\rangle-\langle\bm{y}_{i},\bm{v}_{i}\rangle\big)^{2}+\frac{1}{6}\sum_{i=1}^{r}\alpha_{i}\big(\|\bm{u}_{i}\|_{2}^{2}-\|\bm{v}_{i}\|_{2}^{2}\big)\big(\|\bm{x}_{i}\|_{2}^{2}-\|\bm{y}_{i}\|_{2}^{2}\big)\\
 & \qquad\quad+\frac{1}{3}\sum_{i=1}^{r}\alpha_{i}\big(\langle\bm{x}_{i},\bm{u}_{i}\rangle-\langle\bm{z}_{i},\bm{w}_{i}\rangle\big)^{2}+\frac{1}{6}\sum_{i=1}^{r}\alpha_{i}\big(\|\bm{u}_{i}\|_{2}^{2}-\|\bm{w}_{i}\|_{2}^{2}\big)\big(\|\bm{x}_{i}\|_{2}^{2}-\|\bm{z}_{i}\|_{2}^{2}\big)\\
 & \qquad\quad+\frac{1}{3}\sum_{i=1}^{r}\alpha_{i}\big(\langle\bm{y}_{i},\bm{v}_{i}\rangle-\langle\bm{z}_{i},\bm{w}_{i}\rangle\big)^{2}+\frac{1}{6}\sum_{i=1}^{r}\alpha_{i}\big(\|\bm{v}_{i}\|_{2}^{2}-\|\bm{w}_{i}\|_{2}^{2}\big)\big(\|\bm{y}_{i}\|_{2}^{2}-\|\bm{z}_{i}\|_{2}^{2}\big).
\end{align*}
For notational convenience, we shall define
\begin{align*}
\bm{\Delta}^{\bm{U}}=\big[\bm{\Delta}_{1}^{\bm{U}},\cdots,\bm{\Delta}_{r}^{\bm{U}}\big]:=\bm{U}\bm{S}^{(1)}-\bm{U}^{\star}\in\mathbb{R}^{d_{1}\times r},
\end{align*}
and define $\bm{\Delta}^{\bm{V}}\in\mathbb{R}^{d_{2}\times r}$ and
$\bm{\Delta}^{\bm{W}}\in\mathbb{R}^{d_{3}\times r}$ in an analogous
manner. Without loss of generality, it is assumed that $\bm{S}^{(1)}=\bm{I}_{r}$
and $\bm{S}^{(2)}=\bm{S}^{(3)}=-\bm{I}_{r}$.

With the above notation in place, one can decompose
\begin{align*}
\mathsf{vec}\left(\begin{bmatrix}\bm{X}\\
\bm{Y}\\
\bm{Z}
\end{bmatrix}\right)^{\top}\nabla^{2}g_{\mathsf{clean}}(\bm{U},\bm{V},\bm{W})\,\mathsf{vec}\left(\begin{bmatrix}\bm{X}\\
\bm{Y}\\
\bm{Z}
\end{bmatrix}\right) & =\sum_{i=1}^{4}\beta_{i}
\end{align*}
where $\beta_{i},1\leq i\leq4$ are given respectively by \begin{subequations}
\begin{align}
\beta_{1} & :=\frac{1}{3}\,\Big\|\sum_{i=1}^{r}\bm{x}_{i}\otimes(-\bm{v}_{i}^{\star})\otimes(-\bm{w}_{i}^{\star})+\bm{u}_{i}^{\star}\otimes\bm{y}_{i}\otimes(-\bm{w}_{i}^{\star})+\bm{u}_{i}^{\star}\otimes(-\bm{v}_{i}^{\star})\otimes\bm{z}_{i}\Big\|_{\mathrm{F}}^{2}\nonumber \\
 & \,\quad+\frac{1}{3}\sum_{i=1}^{r}\lambda_{i}^{\star2/3}\Big[\big(\langle\bm{x}_{i},\bm{u}_{i}^{\star}\rangle-\langle\bm{y}_{i},-\bm{v}_{i}^{\star}\rangle\big)^{2}+\big(\langle\bm{x}_{i},\bm{u}_{i}^{\star}\rangle-\langle\bm{z}_{i},-\bm{w}_{i}^{\star}\rangle\big)^{2}+\big(\langle\bm{y}_{i},-\bm{v}_{i}^{\star}\rangle-\langle\bm{z}_{i},-\bm{w}_{i}^{\star}\rangle\big)^{2}\Big],\label{eq:Hessian-asym-beta1}\\
\beta_{2} & :=\frac{1}{3p}\,\Big\|\mathcal{P}_{\Omega}\Big(\sum_{i=1}^{r}\bm{x}_{i}\otimes\bm{v}_{i}\otimes\bm{w}_{i}+\bm{u}_{i}\otimes\bm{y}_{i}\otimes\bm{w}_{i}+\bm{u}_{i}\otimes\bm{v}_{i}\otimes\bm{z}_{i}\Big)\Big\|_{\mathrm{F}}^{2}\nonumber \\
 & \,\qquad-\frac{1}{3}\,\Big\|\sum_{i=1}^{r}\bm{x}_{i}\otimes(-\bm{v}_{i}^{\star})\otimes(-\bm{w}_{i}^{\star})+\bm{u}_{i}^{\star}\otimes\bm{y}_{i}\otimes(-\bm{w}_{i}^{\star})+\bm{u}_{i}^{\star}\otimes(-\bm{v}_{i}^{\star})\otimes\bm{z}_{i}\Big\|_{\mathrm{F}}^{2},\label{eq:Hessian-asym-beta2}\\
\beta_{3} & :=\frac{2}{3p}\,\Big\langle\sum_{i=1}^{r}\bm{x}_{i}\otimes\bm{y}_{i}\otimes\bm{w}_{i}+\bm{x}_{i}\otimes\bm{v}_{i}\otimes\bm{z}_{i}+\bm{u}_{i}\otimes\bm{y}_{i}\otimes\bm{z}_{i},\mathcal{P}_{\Omega}\Big(\sum_{i=1}^{r}\bm{u}_{i}\otimes\bm{v}_{i}\otimes\bm{w}_{i}-\bm{T}^{\star}\Big)\Big\rangle,\label{eq:Hessian-asym-beta3}\\
\beta_{4} & :=\frac{1}{3}\sum_{i=1}^{r}\big(\alpha_{i}-\lambda_{i}^{\star2/3}\big)\Big[\big(\langle\bm{x}_{i},\bm{u}_{i}^{\star}\rangle-\langle\bm{y}_{i},-\bm{v}_{i}^{\star}\rangle\big)^{2}+\big(\langle\bm{x}_{i},\bm{u}_{i}^{\star}\rangle-\langle\bm{z}_{i},-\bm{w}_{i}^{\star}\rangle\big)^{2}+\big(\langle\bm{y}_{i},-\bm{v}_{i}^{\star}\rangle-\langle\bm{z}_{i},-\bm{w}_{i}^{\star}\rangle\big)^{2}\Big]\nonumber \\
 & \,\quad+\frac{1}{3}\sum_{i=1}^{r}\alpha_{i}\Big[\big(\langle\bm{x}_{i},\bm{\Delta}_{i}^{\bm{U}}\rangle-\langle\bm{y}_{i},\bm{\Delta}_{i}^{\bm{V}}\rangle\big)^{2}+\big(\langle\bm{x}_{i},\bm{\Delta}_{i}^{\bm{U}}\rangle-\langle\bm{z}_{i},\bm{\Delta}_{i}^{\bm{W}}\rangle\big)^{2}+\big(\langle\bm{y}_{i},\bm{\Delta}_{i}^{\bm{V}}\rangle-\langle\bm{z}_{i},\bm{\Delta}_{i}^{\bm{W}}\rangle\big)^{2}\Big]\nonumber \\
 & \,\quad+\frac{2}{3}\sum_{i=1}^{r}\alpha_{i}\Big[\big(\langle\bm{x}_{i},\bm{u}_{i}^{\star}\rangle-\langle\bm{y}_{i},-\bm{v}_{i}^{\star}\rangle\big)\big(\langle\bm{x}_{i},\bm{\Delta}_{i}^{\bm{U}}\rangle-\langle\bm{y}_{i},\bm{\Delta}_{i}^{\bm{V}}\rangle\big)+\big(\langle\bm{x}_{i},\bm{u}_{i}^{\star}\rangle-\langle\bm{z}_{i},-\bm{w}_{i}^{\star}\rangle\big)\big(\langle\bm{x}_{i},\bm{\Delta}_{i}^{\bm{U}}\rangle-\langle\bm{z}_{i},\bm{\Delta}_{i}^{\bm{W}}\rangle\big)\nonumber \\
 & \,\quad\qquad+\big(\langle\bm{y}_{i},-\bm{v}_{i}^{\star}\rangle-\langle\bm{z}_{i},-\bm{w}_{i}^{\star}\rangle\big)\big(\langle\bm{y}_{i},\bm{\Delta}_{i}^{\bm{V}}\rangle-\langle\bm{z}_{i},\bm{\Delta}_{i}^{\bm{W}}\rangle\big)\Big]\nonumber \\
 & \,\quad+\frac{1}{6}\sum_{i=1}^{r}\alpha_{i}\Big[\big(\|\bm{u}_{i}\|_{2}^{2}-\|\bm{v}_{i}\|_{2}^{2}\big)\big(\|\bm{x}_{i}\|_{2}^{2}-\|\bm{y}_{i}\|_{2}^{2}\big)+\big(\|\bm{u}_{i}\|_{2}^{2}-\|\bm{w}_{i}\|_{2}^{2}\big)\big(\|\bm{x}_{i}\|_{2}^{2}-\|\bm{z}_{i}\|_{2}^{2}\big)\nonumber \\
 & \,\quad\qquad+\big(\|\bm{v}_{i}\|_{2}^{2}-\|\bm{w}_{i}\|_{2}^{2}\big)\big(\|\bm{y}_{i}\|_{2}^{2}-\|\bm{z}_{i}\|_{2}^{2}\big)\Big].\label{eq:Hessian-asym-beta4}
\end{align}
\end{subequations}

In what follows, we shall demonstrate that $\beta_{1}$ is the dominant
term, with the remaining terms being negligible compared to $\beta_{1}$.
Here, we note that the proof idea is almost identical to that of the
symmetric case (cf.~Lemma~\ref{lemma:RIC} in Appendix~\ref{sec:analysis-GD}). 
For the
sake of conciseness, we will focus only on the part where the symmetric
and the asymmetric cases differ, and omit the proof details when their
analyses are similar.

\paragraph{Bounding $\beta_{1}$}

Let us first expand
\begin{align*}
\beta_{1} & =\frac{1}{3}\,\Big\|\sum_{i=1}^{r}\bm{x}_{i}\otimes\bm{v}_{i}^{\star}\otimes\bm{w}_{i}^{\star}-\bm{u}_{i}^{\star}\otimes\bm{y}_{i}\otimes\bm{w}_{i}^{\star}-\bm{u}_{i}^{\star}\otimes\bm{v}_{i}^{\star}\otimes\bm{z}_{i}\Big\|_{\mathrm{F}}^{2}\\
 & \quad+\frac{1}{3}\sum_{i=1}^{r}\lambda_{i}^{\star2/3}\Big[\big(\langle\bm{x}_{i},\bm{u}_{i}^{\star}\rangle+\langle\bm{y}_{i},\bm{v}_{i}^{\star}\rangle\big)^{2}+\big(\langle\bm{x}_{i},\bm{u}_{i}^{\star}\rangle+\langle\bm{z}_{i},\bm{w}_{i}^{\star}\rangle\big)^{2}+\big(\langle\bm{y}_{i},\bm{v}_{i}^{\star}\rangle-\langle\bm{z}_{i},\bm{w}_{i}^{\star}\rangle\big)^{2}\Big]\\
 & =\underbrace{\frac{1}{3}\,\Big\|\sum_{i=1}^{r}\bm{x}_{i}\otimes\bm{v}_{i}^{\star}\otimes\bm{w}_{i}^{\star}\Big\|_{\mathrm{F}}^{2}+\frac{1}{3}\,\Big\|\sum_{i=1}^{r}\bm{u}_{i}^{\star}\otimes\bm{y}_{i}\otimes\bm{w}_{i}^{\star}\Big\|_{\mathrm{F}}^{2}+\frac{1}{3}\,\Big\|\sum_{i=1}^{r}\bm{u}_{i}^{\star}\otimes\bm{v}_{i}^{\star}\otimes\bm{z}_{i}\Big\|_{\mathrm{F}}^{2}}_{=:\,\gamma_{1}}\\
 & \quad+\underbrace{\frac{2}{3}\sum_{i=1}^{r}\lambda_{i}^{\star2/3}\langle\bm{x}_{i},\bm{u}_{i}^{\star}\rangle^{2}+\frac{2}{3}\sum_{i=1}^{r}\lambda_{i}^{\star2/3}\langle\bm{y}_{i},\bm{v}_{i}^{\star}\rangle^{2}+\frac{2}{3}\sum_{i=1}^{r}\lambda_{i}^{\star2/3}\langle\bm{z}_{i},\bm{w}_{i}^{\star}\rangle^{2}}_{=:\,\gamma_{2}}\\
 & \quad+\underbrace{-\frac{2}{3}\,\Big\langle\sum_{i=1}^{r}\bm{x}_{i}\otimes\bm{v}_{i}^{\star}\otimes\bm{w}_{i}^{\star},\sum_{i=1}^{r}\bm{u}_{i}^{\star}\otimes\bm{y}_{i}\otimes\bm{w}_{i}^{\star}\Big\rangle+\frac{2}{3}\sum_{i=1}^{r}\lambda_{i}^{\star2/3}\langle\bm{x}_{i},\bm{u}_{i}^{\star}\rangle\langle\bm{y}_{i},\bm{v}_{i}^{\star}\rangle}_{=:\,\gamma_{3}}\\
 & \quad+\underbrace{-\frac{2}{3}\,\Big\langle\sum_{i=1}^{r}\bm{x}_{i}\otimes\bm{v}_{i}^{\star}\otimes\bm{w}_{i}^{\star},\sum_{i=1}^{r}\bm{u}_{i}^{\star}\otimes\bm{v}_{i}^{\star}\otimes\bm{z}_{i}\Big\rangle+\frac{2}{3}\sum_{i=1}^{r}\lambda_{i}^{\star2/3}\langle\bm{x}_{i},\bm{u}_{i}^{\star}\rangle\langle\bm{z}_{i},\bm{w}_{i}^{\star}\rangle}_{=:\,\gamma_{4}}\\
 & \quad+\underbrace{\frac{2}{3}\,\Big\langle\sum_{i=1}^{r}\bm{u}_{i}^{\star}\otimes\bm{y}_{i}\otimes\bm{w}_{i}^{\star},\sum_{i=1}^{r}\bm{u}_{i}^{\star}\otimes\bm{v}_{i}^{\star}\otimes\bm{z}_{i}\Big\rangle-\frac{2}{3}\sum_{i=1}^{r}\lambda_{i}^{\star2/3}\langle\bm{y}_{i},\bm{v}_{i}^{\star}\rangle\langle\bm{z}_{i},\bm{w}_{i}^{\star}\rangle}_{=:\,\gamma_{5}},
\end{align*}
leaving us with five terms to control. 
\begin{enumerate}
\item Let us begin with $\gamma_{1}$. Observe that we can express
\begin{align*}
\Big\|\sum_{i=1}^{r}\bm{x}_{i}\otimes\bm{v}_{i}^{\star}\otimes\bm{w}_{i}^{\star}\Big\|_{\mathrm{F}}=\big\|\bm{X}\bm{H}^{\top}\big\|_{\mathrm{F}},\qquad\text{with}\quad\bm{H}=[\bm{v}_{1}^{\star}\otimes\bm{w}_{1}^{\star},\cdots,\bm{v}_{r}^{\star}\otimes\bm{w}_{r}^{\star}]\in\mathbb{R}^{d^{2}\times r}
\end{align*}
Arguing similarly as in the proof of Lemma~\ref{lemma:incoh}, 
one can derive
\begin{align*}
\big|\sigma_{\min}(\bm{H})-\lambda_{\min}^{\star2/3}\big|\ll\lambda_{\min}^{\star2/3}\qquad\text{and}\qquad\big|\sigma_{\max}(\bm{H})-\lambda_{\max}^{\star2/3}\big|\ll\lambda_{\min}^{\star2/3},
\end{align*}
provided that $r\ll d_{\min}/\mu$. This leads to the following inequalities
\begin{align*}
\frac{19}{20}\lambda_{\min}^{\star2/3}\left\Vert \bm{X}\right\Vert _{\mathrm{F}}\leq\sigma_{\min}(\bm{H})\left\Vert \bm{X}\right\Vert _{\mathrm{F}}\leq\Big\|\sum_{i=1}^{r}\bm{x}_{i}\otimes\bm{v}_{i}^{\star}\otimes\bm{w}_{i}^{\star}\Big\|_{\mathrm{F}}\leq\sigma_{\max}(\bm{H})\left\Vert \bm{X}\right\Vert _{\mathrm{F}}\leq\frac{11}{10}\lambda_{\max}^{\star2/3}\left\Vert \bm{X}\right\Vert _{\mathrm{F}}.
\end{align*}
Clearly, the same argument reveals that
\begin{align*}
 & \frac{19}{20}\lambda_{\min}^{\star2/3}\left\Vert \bm{Y}\right\Vert _{\mathrm{F}}\leq\Big\|\sum_{i=1}^{r}\bm{u}_{i}^{\star}\otimes\bm{y}_{i}\otimes\bm{w}_{i}^{\star}\Big\|_{\mathrm{F}}\leq\frac{11}{10}\lambda_{\max}^{\star2/3}\left\Vert \bm{Y}\right\Vert _{\mathrm{F}},\\
 & \frac{19}{20}\lambda_{\min}^{\star2/3}\left\Vert \bm{Z}\right\Vert _{\mathrm{F}}\leq\Big\|\sum_{i=1}^{r}\bm{u}_{i}^{\star}\otimes\bm{v}_{i}^{\star}\otimes\bm{z}_{i}\Big\|_{\mathrm{F}}\leq\frac{11}{10}\lambda_{\max}^{\star2/3}\left\Vert \bm{Z}\right\Vert _{\mathrm{F}}.
\end{align*}
Combining these bounds, we reach
\begin{align*}
\gamma_{1}\leq\frac{3}{2}\lambda_{\max}^{\star4/3}\left(\left\Vert \bm{X}\right\Vert _{\mathrm{F}}^{2}+\left\Vert \bm{Y}\right\Vert _{\mathrm{F}}^{2}+\left\Vert \bm{Z}\right\Vert _{\mathrm{F}}^{2}\right).
\end{align*}
\item We now move on to $\gamma_{2}$. Recall our assumption that $\|\bm{u}_{i}^{\star}\|_{2}=\|\bm{v}_{i}^{\star}\|_{2}=\|\bm{w}_{i}^{\star}\|_{2}=\lambda_{i}^{\star1/3}$
for all $1\leq i\leq r$. It follows from the Cauchy-Schwartz inequality
that
\begin{align*}
0\leq\gamma_{2} & \leq\frac{2}{3}\sum_{i=1}^{r}\lambda_{i}^{\star2/3}\|\bm{x}_{i}\|^{2}\|\bm{u}_{i}^{\star}\|_{2}^{2}+\frac{2}{3}\sum_{i=1}^{r}\lambda_{i}^{\star2/3}\|\bm{y}_{i}\|^{2}\|\bm{v}_{i}^{\star}\|_{2}^{2}+\frac{2}{3}\sum_{i=1}^{r}\lambda_{i}^{\star2/3}\|\bm{z}_{i}\|_{2}^{2}\|\bm{w}_{i}^{\star}\|_{2}^{2}\\
 & \leq\frac{2}{3}\lambda_{\max}^{\star4/3}\left(\left\Vert \bm{X}\right\Vert _{\mathrm{F}}^{2}+\left\Vert \bm{Y}\right\Vert _{\mathrm{F}}^{2}+\left\Vert \bm{Z}\right\Vert _{\mathrm{F}}^{2}\right).
\end{align*}
\item Turning to $\gamma_{3}$, one can straightforwardly bound
\begin{align*}
|\gamma_{3}| & \overset{(\mathrm{i})}{=}\frac{2}{3}\Big|\sum_{i_{1}\neq i_{2}}\langle\bm{x}_{i_{1}},\bm{u}_{i_{2}}^{\star}\rangle\langle\bm{v}_{i_{1}}^{\star},\bm{y}_{i_{2}}\rangle\langle\bm{w}_{i_{1}}^{\star},\bm{w}_{i_{2}}^{\star}\rangle\Big|\\
 & \overset{(\mathrm{ii})}{\leq}\max_{i_{1}\neq i_{2}}\big|\langle\bm{w}_{i_{1}}^{\star},\bm{w}_{i_{2}}^{\star}\rangle\big|\Big(\sum_{i=1}^{r}\|\bm{x}_{i}\|_{2}\|\bm{v}_{i}^{\star}\|_{2}\Big)\Big(\sum_{i=1}^{r}\|\bm{u}_{i}^{\star}\|_{2}\|\bm{y}_{i}\|_{2}\Big)\\
 & \overset{(\mathrm{iii})}{\leq}\max_{i_{1}\neq i_{2}}\big|\langle\bm{w}_{i_{1}}^{\star},\bm{w}_{i_{2}}^{\star}\rangle\big|\|\bm{U}^{\star}\|_{\mathrm{F}}\|\bm{V}^{\star}\|_{\mathrm{F}}\|\bm{X}\|_{\mathrm{F}}\|\bm{Y}\|_{\mathrm{F}}\\
 & \overset{(\mathrm{iv})}{\leq}r\sqrt{\frac{\mu}{d}}\,\lambda_{\max}^{\star4/3}\|\bm{X}\|_{\mathrm{F}}\|\bm{Y}\|_{\mathrm{F}}\\
 & \overset{(\mathrm{v})}{\ll}\lambda_{\max}^{\star4/3}\|\bm{X}\|_{\mathrm{F}}\|\bm{Y}\|_{\mathrm{F}}.
\end{align*}
Here, we have used the fact that $\|\bm{w}_{i}^{\star}\|_{2}=\lambda_{i}^{\star1/3}$
in (i); the inequalities (ii) and (iii) arise from the Cauchy-Schwartz
inequality; (iv) follows from (\ref{def:incoh-orth-asym}); (v) holds
as long as $r\ll\sqrt{d_{\min}/\mu}$. In a similar manner, one can
easily verify that
\begin{align*}
|\gamma_{4}|\ll\lambda_{\max}^{\star4/3}\|\bm{X}\|_{\mathrm{F}}\|\bm{Z}\|_{\mathrm{F}}\qquad\text{and}\qquad|\gamma_{5}|\ll\lambda_{\max}^{\star4/3}\|\bm{Y}\|_{\mathrm{F}}\|\bm{Z}\|_{\mathrm{F}}.
\end{align*}
It then follows from the AM-GM inequality that
\begin{align*}
|\gamma_{3}|+|\gamma_{4}|+|\gamma_{5}|\ll\lambda_{\max}^{\star4/3}\left(\left\Vert \bm{X}\right\Vert _{\mathrm{F}}^{2}+\left\Vert \bm{Y}\right\Vert _{\mathrm{F}}^{2}+\left\Vert \bm{Z}\right\Vert _{\mathrm{F}}^{2}\right).
\end{align*}
\item Putting the above bounds together allows us to bound $\beta_{1}$
as follows
\begin{align*}
\beta_{1} & \leq\gamma_{1}+\gamma_{2}+|\gamma_{3}|+|\gamma_{4}|+|\gamma_{5}|\leq\frac{7}{2}\lambda_{\max}^{\star4/3}\left(\left\Vert \bm{X}\right\Vert _{\mathrm{F}}^{2}+\left\Vert \bm{Y}\right\Vert _{\mathrm{F}}^{2}+\left\Vert \bm{Z}\right\Vert _{\mathrm{F}}^{2}\right);\\
\beta_{1} & \geq\gamma_{1}-|\gamma_{3}|-|\gamma_{4}|-|\gamma_{5}|\geq\frac{9}{10}\lambda_{\min}^{\star4/3}\left(\left\Vert \bm{X}\right\Vert _{\mathrm{F}}^{2}+\left\Vert \bm{Y}\right\Vert _{\mathrm{F}}^{2}+\left\Vert \bm{Z}\right\Vert _{\mathrm{F}}^{2}\right).
\end{align*}
\end{enumerate}

\paragraph{Bounding $\beta_{2}$}

To control $\beta_{2}$, we note that $\beta_{2}$ involves two quantities:
(1) the deviation of $\frac{1}{p}\,\Big\|\mathcal{P}_{\Omega}\Big(\sum_{i=1}^{r}\bm{x}_{i}\otimes\bm{v}_{i}\otimes\bm{w}_{i}+\bm{u}_{i}\otimes\bm{y}_{i}\otimes\bm{w}_{i}+\bm{u}_{i}\otimes\bm{v}_{i}\otimes\bm{z}_{i}\Big)\Big\|_{\mathrm{F}}^{2}$
from its expectation with respect to $\Omega$; (2) the distance between
$\sum_{i=1}^{r}\bm{x}_{i}\otimes\bm{v}_{i}\otimes\bm{w}_{i}+\bm{u}_{i}\otimes\bm{y}_{i}\otimes\bm{w}_{i}+\bm{u}_{i}\otimes\bm{v}_{i}\otimes\bm{z}_{i}$
and $\sum_{i=1}^{r}\bm{x}_{i}\otimes\bm{v}_{i}^{\star}\otimes\bm{w}_{i}-\bm{u}_{i}^{\star}\otimes\bm{y}_{i}\otimes\bm{w}_{i}^{\star}-\bm{u}_{i}^{\star}\otimes\bm{v}_{i}^{\star}\otimes\bm{z}_{i}$.
The first term can be shown to be exceedingly small under our sample
size condition with the help of \cite[Lemma 5]{yuan2016tensor},  
while the second term is also guaranteed to be sufficiently small
by the assumptions of the error of $\bm{U},\bm{V}$ and $\bm{W}$
in (\ref{eq:RIC-asym-asymp}).
Therefore, one can apply an analogous argument for $\alpha_{1}$ and
$\alpha_{2}$ in the proof of Lemma~\ref{lemma:RIC} in Appendix~\ref{sec:proof-local-convergence}
to derive: with probability at least $1-O\big(d_{\min}^{-10}\big)$
one has
\begin{align*}
\big|\beta_{2}\big|\ll\lambda_{\min}^{\star4/3}\left(\left\Vert \bm{X}\right\Vert _{\mathrm{F}}^{2}+\left\Vert \bm{Y}\right\Vert _{\mathrm{F}}^{2}+\left\Vert \bm{Z}\right\Vert _{\mathrm{F}}^{2}\right),
\end{align*}
provided the sampling rate exceeds $p\gg\frac{\mu^{2}r^{2}d_{\max}\log^{2} d_{\max}}{d_{1}d_{2}d_{3}}$.

\paragraph{Bounding $\beta_{3}$}

By the definition of the operator norm, one
can bound
\begin{align*}
|\beta_{3}| & \lesssim\Big\| p^{-1}\mathcal{P}_{\Omega}\Big(\sum_{i=1}^{r}\bm{u}_{i}\otimes\bm{v}_{i}\otimes\bm{w}_{i}-\bm{T}^{\star}\Big)\Big\|\cdot\sum_{i=1}^{r}\Big(\left\Vert \bm{w}_{i}\right\Vert _{2}\|\bm{x}_{i}\|_{2}\|\bm{y}_{i}\|_{2}+\left\Vert \bm{v}_{i}\right\Vert _{2}\|\bm{x}_{i}\|_{2}\|\bm{z}_{i}\|_{2}+\left\Vert \bm{u}_{i}\right\Vert _{2}\|\bm{y}_{i}\|_{2}\|\bm{z}_{i}\|_{2}\Big)\\
 & \lesssim\lambda_{\max}^{2/3}\,\Big\|\sum_{i=1}^{r}\bm{u}_{i}\otimes\bm{v}_{i}\otimes\bm{w}_{i}-\bm{T}^{\star}\Big\|\Big\| p^{-1}\mathcal{P}_{\Omega}\big(\bm{1}_{d_{1}}\otimes\bm{1}_{d_{2}}\otimes\bm{1}_{d_{3}}\big)\Big\|\left(\left\Vert \bm{X}\right\Vert _{\mathrm{F}}^{2}+\left\Vert \bm{Y}\right\Vert _{\mathrm{F}}^{2}+\left\Vert \bm{Z}\right\Vert _{\mathrm{F}}^{2}\right).
\end{align*}
One can easily adapt the proof of Lemma~\ref{lemma:T_op_norm}
to show that, with probability at with probability at least $1-O\big(d_{\min}^{-10}\big),$
\begin{align*}
\big\|\big(p^{-1}\mathcal{P}_{\Omega}-\mathcal{I}\big)\big(\bm{1}_{d_{1}}\otimes\bm{1}_{d_{2}}\otimes\bm{1}_{d_{3}}\big)\big\|\lesssim\frac{\log^{3}d_{\max}}{p}+\sqrt{\frac{d_{\max}\log^{5}d_{\max}}{p}}\lesssim\sqrt{d_{1}d_{2}d_{3}}=\big\|\bm{1}_{d_{1}}\otimes\bm{1}_{d_{2}}\otimes\bm{1}_{d_{3}}\big\|
\end{align*}
holds as long as $p\gg\max\Big\{\frac{\log^{3}d_{\max}}{\sqrt{d_{1}d_{2}d_{3}}},\,\frac{d_{\max}\log^{5}d_{\max}}{d_{1}d_{2}d_{3}}\Big\}$.
We can then adapt the proof for bounding $\alpha_{3}$
in the proof of Lemma~\ref{lemma:RIC} in Appendix~\ref{sec:proof-local-convergence} to derive that with probability at least $1-O\big(d_{\min}^{-10}\big),$
\begin{align*}
\big|\beta_{3}\big|\ll\lambda_{\min}^{\star4/3}\left(\left\Vert \bm{X}\right\Vert _{\mathrm{F}}^{2}+\left\Vert \bm{Y}\right\Vert _{\mathrm{F}}^{2}+\left\Vert \bm{Z}\right\Vert _{\mathrm{F}}^{2}\right).
\end{align*}

\paragraph{Bounding $\beta_{4}$}

It remains to control $\beta_{4}$. By symmetry, it suffices to consider
the following terms:
\begin{align*}
\gamma_{1} & :=\sum_{i=1}^{r}\big(\alpha_{i}-\lambda_{i}^{\star2/3}\big)\big(\langle\bm{x}_{i},\bm{u}_{i}^{\star}\rangle+\langle\bm{y}_{i},\bm{v}_{i}^{\star}\rangle\big)^{2},\\
\gamma_{2} & :=\sum_{i=1}^{r}\alpha_{i}\big(\langle\bm{x}_{i},\bm{\Delta}_{i}^{\bm{U}}\rangle-\langle\bm{y}_{i},\bm{\Delta}_{i}^{\bm{V}}\rangle\big)^{2},\\
\gamma_{3} & :=\sum_{i=1}^{r}\alpha_{i}\big(\langle\bm{x}_{i},\bm{u}_{i}^{\star}\rangle+\langle\bm{y}_{i},\bm{v}_{i}^{\star}\rangle\big)\big(\langle\bm{x}_{i},\bm{\Delta}_{i}^{\bm{U}}\rangle-\langle\bm{y}_{i},\bm{\Delta}_{i}^{\bm{V}}\rangle\big),\\
\gamma_{4} & :=\sum_{i=1}^{r}\alpha_{i}\big(\|\bm{u}_{i}\|_{2}^{2}-\|\bm{v}_{i}\|_{2}^{2}\big)\big(\|\bm{x}_{i}\|_{2}^{2}-\|\bm{y}_{i}\|_{2}^{2}\big).
\end{align*}
In the following, we shall bound these four terms separately. By the
assumptions that $\bm{S}^{(1)}=\bm{I}_{r},\bm{S}^{(2)}=\bm{S}^{(3)}=-\bm{I}_{r}$,
$\delta\ll1$ and $\kappa\asymp1$, one has \begin{subequations}
\begin{align}
\|\bm{\Delta}_{i}^{\bm{U}}\|_{2} & \leq\|\bm{U}-\bm{U}^{\star}\|_{\mathrm{F}}\leq\delta\,\|\bm{U}^{\star}\|_{\mathrm{F}}\leq\delta\sqrt{r}\lambda_{\max}^{\star1/3}\ll\lambda_{\min}^{\star1/3}\leq\|\bm{u}_{i}^{\star}\|_{2};\label{eq:Hessian-asym-u-l2-loss}\\
\|\bm{\Delta}_{i}^{\bm{V}}\|_{2} & \leq\left\Vert -\bm{V}-\bm{V}^{\star}\right\Vert _{\mathrm{F}}\leq\delta\,\|\bm{V}^{\star}\|_{\mathrm{F}}\leq\delta\sqrt{r}\lambda_{\max}^{\star1/3}\ll\lambda_{\min}^{\star1/3}\leq\|\bm{v}_{i}^{\star}\|_{2};\label{eq:Hessian-asym-v-l2-loss}\\
\|\bm{\Delta}_{i}^{\bm{W}}\|_{2} & \leq\left\Vert -\bm{W}-\bm{W}^{\star}\right\Vert _{\mathrm{F}}\leq\delta\,\|\bm{W}^{\star}\|_{\mathrm{F}}\leq\delta\sqrt{r}\lambda_{\max}^{\star1/3}\ll\lambda_{\min}^{\star1/3}\leq\|\bm{w}_{i}^{\star}\|_{2}.\label{eq:Hessian-asym-w-l2-loss}
\end{align}
\end{subequations}In addition, for each $1\leq i\leq r$ , one has
\begin{equation}
\frac{9}{10}\lambda_{\min}^{\star2/3}\leq\lambda_{i}-\big|\alpha_{i}-\lambda_{i}^{\star2/3}\big|\leq\alpha_{i}\leq\lambda_{i}+\big|\alpha_{i}-\lambda_{i}^{\star2/3}\big|\leq\frac{11}{10}\lambda_{\max}^{\star2/3}\label{eq:Hessian-asym-alpha-bound}
\end{equation}
by virtue of the condition that $\big|\alpha_{i}-\lambda_{i}^{\star2/3}\big|\ll\lambda_{\min}^{\star2/3}$.
\begin{enumerate}
\item Let us start with $\gamma_{1}$. By the condition that $\max_{i}\big|\alpha_{i}-\lambda_{i}^{\star2/3}\big|\ll\lambda_{\min}^{\star2/3}$,
it is easily seen that
\begin{align*}
|\gamma_{1}| & \lesssim\sum_{i=1}^{r}\big|\alpha_{i}-\lambda_{i}^{\star2/3}\big|\big(\langle\bm{x}_{i},\bm{u}_{i}^{\star}\rangle^{2}+\langle\bm{y}_{i},\bm{v}_{i}^{\star}\rangle^{2}\big)\leq\max_{i}\big|\alpha_{i}-\lambda_{i}^{\star2/3}\big|\sum_{i=1}^{r}\big(\|\bm{x}_{i}\|_{2}^{2}\|\bm{u}_{i}^{\star}\|_{2}^{2}+\|\bm{y}_{i}\|_{2}^{2}\|\bm{v}_{i}^{\star}\|_{2}^{2}\big)\\
 & \leq\max_{i}\big|\alpha_{i}-\lambda_{i}^{\star2/3}\big|\left(\|\bm{u}_{i}^{\star}\|_{2}^{2}+\|\bm{v}_{i}^{\star}\|_{2}^{2}\right)\big(\|\bm{X}\|_{\mathrm{F}}^{2}+\|\bm{Y}\|_{\mathrm{F}}^{2}\big)\ll\lambda_{\min}^{\star4/3}\big(\|\bm{X}\|_{\mathrm{F}}^{2}+\|\bm{Y}\|_{\mathrm{F}}^{2}\big),
\end{align*}
with the proviso that $\kappa\asymp1$.
\item As for $\gamma_{2}$, one can bound
\begin{align*}
|\gamma_{2}| & \lesssim\sum_{i=1}^{r}\alpha_{i}\big(\langle\bm{x}_{i},\bm{\Delta}_{i}^{\bm{U}}\rangle^{2}-\langle\bm{y}_{i},\bm{\Delta}_{i}^{\bm{V}}\rangle^{2}\big)\leq\max_{i}\alpha_{i}\sum_{i=1}^{r}\big(\|\bm{x}_{i}\|_{2}^{2}\|\bm{\Delta}_{i}^{\bm{U}}\|_{2}^{2}+\|\bm{y}_{i}\|_{2}^{2}\|\bm{\Delta}_{i}^{\bm{V}}\|_{2}^{2}\big)\\
 & \leq\max_{i}\alpha_{i}\big(\|\bm{\Delta}_{i}^{\bm{U}}\|_{2}^{2}+\|\bm{\Delta}_{i}^{\bm{V}}\|_{2}^{2}\big)\big(\|\bm{X}\|_{\mathrm{F}}^{2}+\|\bm{Y}\|_{\mathrm{F}}^{2}\big)\ll\lambda_{\min}^{\star4/3}\big(\|\bm{X}\|_{\mathrm{F}}^{2}+\|\bm{Y}\|_{\mathrm{F}}^{2}\big).
\end{align*}
Here, we use (\ref{eq:Hessian-asym-u-l2-loss}), (\ref{eq:Hessian-asym-v-l2-loss}),
(\ref{eq:Hessian-asym-alpha-bound}) and $\kappa\asymp1$ in the last
step.
\item Turning to $\gamma_{3}$, one can develop a similar bound as follows
\begin{align*}
|\gamma_{3}| & \leq\sqrt{\sum_{i=1}^{r}\alpha_{i}\big(\langle\bm{x}_{i},\bm{u}_{i}^{\star}\rangle+\langle\bm{y}_{i},\bm{v}_{i}^{\star}\rangle\big)^{2}}\cdot\sqrt{\sum_{i=1}^{r}\alpha_{i}\big(\langle\bm{x}_{i},\bm{\Delta}_{i}^{\bm{U}}\rangle-\langle\bm{y}_{i},\bm{\Delta}_{i}^{\bm{V}}\rangle\big)^{2}}\\
 & \lesssim\max_{i}\alpha_{i}\sqrt{\left(\|\bm{u}_{i}^{\star}\|_{2}^{2}+\|\bm{v}_{i}^{\star}\|_{2}^{2}\right)\big(\|\bm{X}\|_{\mathrm{F}}^{2}+\|\bm{Y}\|_{\mathrm{F}}^{2}\big)}\cdot\sqrt{\big(\|\bm{\Delta}_{i}^{\bm{U}}\|_{2}^{2}+\|\bm{\Delta}_{i}^{\bm{V}}\|_{2}^{2}\big)\big(\|\bm{X}\|_{\mathrm{F}}^{2}+\|\bm{Y}\|_{\mathrm{F}}^{2}\big)}\\
 & \ll\lambda_{\min}^{\star4/3}\big(\|\bm{X}\|_{\mathrm{F}}^{2}+\|\bm{Y}\|_{\mathrm{F}}^{2}\big).
\end{align*}
\item With regards to $\gamma_{4}$, we can expand
\begin{align*}
|\gamma_{4}| & =\sum_{i=1}^{r}\alpha_{i}\big(\|\bm{u}_{i}^{\star}+\bm{\Delta}_{i}^{\bm{U}}\|_{2}^{2}-\left\Vert -\bm{v}_{i}^{\star}+\bm{\Delta}_{i}^{\bm{V}}\right\Vert _{2}^{2}\big)\big(\|\bm{x}_{i}\|_{2}^{2}-\|\bm{y}_{i}\|_{2}^{2}\big)\\
 & =\sum_{i=1}^{r}\alpha_{i}\big(\|\bm{u}_{i}^{\star}\|_{2}^{2}+2\langle\bm{u}_{i}^{\star},\bm{\Delta}_{i}^{\bm{U}}\rangle+\|\bm{\Delta}_{i}^{\bm{U}}\|_{2}^{2}-\|\bm{v}_{i}^{\star}\|_{2}^{2}+2\langle\bm{v}_{i}^{\star},\bm{\Delta}_{i}^{\bm{V}}\rangle-\|\bm{\Delta}_{i}^{\bm{V}}\|_{2}^{2}\big)\big(\|\bm{x}_{i}\|_{2}^{2}-\|\bm{y}_{i}\|_{2}^{2}\big)\\
 & =\sum_{i=1}^{r}\alpha_{i}\big(2\langle\bm{u}_{i}^{\star},\bm{\Delta}_{i}^{\bm{U}}\rangle+\|\bm{\Delta}_{i}^{\bm{U}}\|_{2}^{2}+2\langle\bm{v}_{i}^{\star},\bm{\Delta}_{i}^{\bm{V}}\rangle-\|\bm{\Delta}_{i}^{\bm{V}}\|_{2}^{2}\big)\big(\|\bm{x}_{i}\|_{2}^{2}-\|\bm{y}_{i}\|_{2}^{2}\big),
\end{align*}
where the last step follows from the assumption $\|\bm{u}_{i}^{\star}\|_{2}=\|\bm{v}_{i}^{\star}\|_{2}=\|\bm{w}_{i}^{\star}\|_{2}$.
It follows that
\begin{align*}
\Big|\sum_{i=1}^{r}\alpha_{i}\big(\|\bm{u}_{i}\|_{2}^{2}-\|\bm{v}_{i}\|_{2}^{2}\big)\big(\|\bm{x}_{i}\|_{2}^{2}-\|\bm{y}_{i}\|_{2}^{2}\big)\Big| & \lesssim\sum_{i=1}^{r}\alpha_{i}\big|\langle\bm{u}_{i}^{\star},\bm{\Delta}_{i}^{\bm{U}}\rangle+\|\bm{\Delta}_{i}^{\bm{U}}\|_{2}^{2}+\langle\bm{v}_{i}^{\star},\bm{\Delta}_{i}^{\bm{V}}\rangle+\|\bm{\Delta}_{i}^{\bm{V}}\|_{2}^{2}\big|\left(\|\bm{x}_{i}\|_{2}^{2}+\|\bm{y}_{i}\|_{2}^{2}\right)\\
 & \leq\max_{i}\alpha_{i}\big(\|\bm{u}_{i}^{\star}\|_{2}\|\bm{\Delta}_{i}^{\bm{U}}\|_{2}+\|\bm{v}_{i}^{\star}\|_{2}\|\bm{\Delta}_{i}^{\bm{V}}\|_{2}\big)\sum_{i=1}^{r}\left(\|\bm{x}_{i}\|_{2}^{2}+\|\bm{y}_{i}\|_{2}^{2}\right)\\
 & \ll\lambda_{\min}^{\star4/3}\left(\left\Vert \bm{X}\right\Vert _{\mathrm{F}}^{2}+\left\Vert \bm{Y}\right\Vert _{\mathrm{F}}^{2}\right),
\end{align*}
where we have used the conditions that $\max_{i}\alpha_{i}\lesssim\lambda_{\max}^{\star2/3}$
and $\kappa\asymp1$. 
\item It is not hard to check that similar bounds also hold for the remaining
terms in $\beta_{4}$. As a result, we arrive at
\begin{align*}
\big|\beta_{4}\big|\ll\lambda_{\min}^{\star4/3}\left(\left\Vert \bm{X}\right\Vert _{\mathrm{F}}^{2}+\left\Vert \bm{Y}\right\Vert _{\mathrm{F}}^{2}+\left\Vert \bm{Z}\right\Vert _{\mathrm{F}}^{2}\right).
\end{align*}
\end{enumerate}

\paragraph{Combining the previous bounds on $\beta_{1},\beta_{2},\beta_{3}$
and $\beta_{4}$}

Putting the above estimates together, we conclude that with probability
at least $1-O(d_{\min}^{-10})$, one has
\begin{align*}
\mathsf{vec}\left(\begin{bmatrix}\bm{X}\\
\bm{Y}\\
\bm{Z}
\end{bmatrix}\right)^{\top}\nabla^{2}g_{\mathsf{clean}}(\bm{U},\bm{V},\bm{W})\,\mathsf{vec}\left(\begin{bmatrix}\bm{X}\\
\bm{Y}\\
\bm{Z}
\end{bmatrix}\right) & \geq\beta_{1}-|\beta_{2}|-|\beta_{3}|-|\beta_{4}|\geq\frac{1}{2}\lambda_{\min}^{\star4/3}\left(\left\Vert \bm{X}\right\Vert _{\mathrm{F}}^{2}+\left\Vert \bm{Y}\right\Vert _{\mathrm{F}}^{2}+\left\Vert \bm{Z}\right\Vert _{\mathrm{F}}^{2}\right)\\
\mathsf{vec}\left(\begin{bmatrix}\bm{X}\\
\bm{Y}\\
\bm{Z}
\end{bmatrix}\right)^{\top}\nabla^{2}g_{\mathsf{clean}}(\bm{U},\bm{V},\bm{W})\,\mathsf{vec}\left(\begin{bmatrix}\bm{X}\\
\bm{Y}\\
\bm{Z}
\end{bmatrix}\right) & \leq\beta_{1}+|\beta_{2}|+|\beta_{3}|+|\beta_{4}|\leq4\lambda_{\max}^{\star4/3}\left(\left\Vert \bm{X}\right\Vert _{\mathrm{F}}^{2}+\left\Vert \bm{Y}\right\Vert _{\mathrm{F}}^{2}+\left\Vert \bm{Z}\right\Vert _{\mathrm{F}}^{2}\right)
\end{align*}
as claimed.

\bibliographystyle{alphaabbr}
\bibliography{bibfileNonconvex, TC}

\end{document}